\pgfplotsset{compat=1.17}
\definecolor{org}{HTML}{F8A145}
\definecolor{lightblue}{RGB}{200, 231, 250}
\definecolor{cvprblue}{rgb}{0.21,0.49,0.74}
\definecolor{matplotlibblue}{HTML}{1f77b4}
\definecolor{matplotliborange}{HTML}{ff7f0e}
\definecolor{matplotlibgreen}{HTML}{2ca02c}
\definecolor{matplotlibred}{HTML}{d62728}
\definecolor{matplotlibpurple}{HTML}{9467bd}
\definecolor{matplotlibbrown}{HTML}{8c564b}
\newtheorem{theorem}{Theorem}
\newtheorem{proposition}[theorem]{Proposition}
\newtheorem{lemma}[theorem]{Lemma}
\newtheorem{corollary}[theorem]{Corollary}
\theoremstyle{definition}
\newtheorem{remark}[theorem]{Remark}
\newcommand{\smaller}{\fontsize{8.5pt}{10pt}\selectfont}
\def \ft {f_{\bm{\theta}}}
\def \z {\bm{z}}
\def \c {\bm{c}}
\def \x {\bm{x}}
\def \m {\bm{\mu}}
\def \moco {MoCo v3 \cite{mocov3}}
\def \densecl {DenseCL \cite{densecl}}
\def \mec {MEC \cite{Mec}}
\def \simsiam {SimSiam \cite{simsiam}}
\def \swav {SwAV \cite{swav}}
\def \dino {DINO \cite{dino}}
\def \esvit {EsViT \cite{esvit}}
\def \ibot {iBOT \cite{ibot}}
\def \mae {MAE \cite{mae}}
\def \ijepa {I-JEPA \cite{ijepa}}
\def \vicreg {VICReg \cite{vicreg}}
\def \vicregl {VICRegL \cite{bardes2022vicregl}}
\def \barlowtwins {Barlow Twins \cite{barlowtwins}}
\def \mugs {Mugs \cite{mugs}}
\def \byol {BYOL \cite{byol}}
\def \resa {ReSA \cite{resa}}
\title{Exploring Structural Degradation in Dense Representations for Self-supervised Learning}
\author{\parbox{13cm}
  {\centering
    {\large \quad\quad Siran Dai$^{1,2}$ \quad\quad Qianqian Xu$^{3,4}$ \thanks{Corresponding authors.} \quad\quad Peisong Wen$^{5}$ \\ \quad\quad Yang Liu$^{5}$  \quad\quad Qingming Huang$^{5,3*}$ }\\
    {\normalsize \normalfont
      $^1$ Institute of Information Engineering, Chinese Academy of Sciences \\
      $^2$ School of Cyber Security, University of Chinese Academy of Sciences \\
      $^3$ State Key Laboratory of AI Safety, Institute of Computing Technology, CAS \\
      $^4$ Peng Cheng Laboratory \\
      $^5$ School of Computer Science and Tech., University of Chinese Academy of Sciences \\
    }
    {\tt\small daisiran@iie.ac.cn ~~ xuqianqian@ict.ac.cn \\ \{wenpeisong, qmhuang\}@ucas.ac.cn ~~ liuyang232@mails.ucas.ac.cn} 
  }
}
\begin{document}

\maketitle

\begin{abstract}
In this work, we observe a counterintuitive phenomenon in self-supervised learning (SSL): longer training may impair the performance of dense prediction tasks (e.g., semantic segmentation). We refer to this phenomenon as Self-supervised Dense Degradation (SDD) and demonstrate its consistent presence across sixteen state-of-the-art SSL methods with various losses, architectures, and datasets. When the model performs suboptimally on dense tasks at the end of training, measuring the performance during training becomes essential. However, evaluating dense performance effectively without annotations remains an open challenge.
To tackle this issue, we introduce a Dense representation Structure Estimator (DSE), composed of a class-relevance measure and an effective dimensionality measure. The proposed DSE is both theoretically grounded and empirically validated to be closely correlated with the downstream performance. Based on this metric, we introduce a straightforward yet effective model selection strategy and a DSE-based regularization method. Experiments on sixteen SSL methods across four benchmarks confirm that model selection improves mIoU by $3.0\%$ on average with negligible computational cost. Additionally, DSE regularization consistently mitigates the effects of dense degradation. Code is available at \url{https://github.com/EldercatSAM/SSL-Degradation}.

\end{abstract}

\section{Introduction}
Self-Supervised Learning (SSL) has greatly benefited from advancements in training algorithms, larger datasets and models, and extended training periods, leading to significant success in image-level representation learning \cite{dino, mocov3, ijepa, mae}. However, dense (patch or pixel-level) representation learning remains challenging with only slight improvements \cite{densecl,bardes2022vicregl}.

While training models for extended periods to extract high-quality representations has been a common practice in SSL, we identify and study a counterintuitive phenomenon, named Self-supervised Dense Degradation (SDD), which helps explain the challenges in self-supervised dense representation learning. As illustrated in Fig. \ref{fig:motivation}, although the training loss converges and classification performance steadily improves, the dense performance declines at the later stages of training. Consequently, the final checkpoint exhibits a significant performance gap compared to the best performance observed during training.

\begin{wrapfigure}[16]{r}{0.5\textwidth} 
    \centering
    \begin{tikzpicture}
        \begin{axis}[
            scale only axis,
            legend style={
                at={(0.5,1.05)}, 
                anchor=south,
                legend columns=3, 
                /tikz/every even column/.append style={column sep=0.15cm},
                font=\smaller, 
                draw=lightgray, 
                fill=white, 
                /pgf/number format/1000 sep={} 
            },
            legend cell align={left},
            xlabel={}, ylabel={}, 
            xmin=0, xmax=1, ymin=0, ymax=1, 
            axis lines=none, 
        ]
            \addlegendimage{color=matplotlibgreen, mark=none, line width=1pt}
            \addlegendentry{Loss}
            \addlegendimage{color=matplotliborange, mark=none, line width=1pt}
            \addlegendentry{Cls Accuracy}
            \addlegendimage{color=matplotlibblue, mark=none, line width=1pt}
            \addlegendentry{Seg mIoU}
            
        \end{axis}
    \end{tikzpicture}

    \begin{subfigure}{0.24\textwidth}
        \centering
        \includegraphics[width=\textwidth]{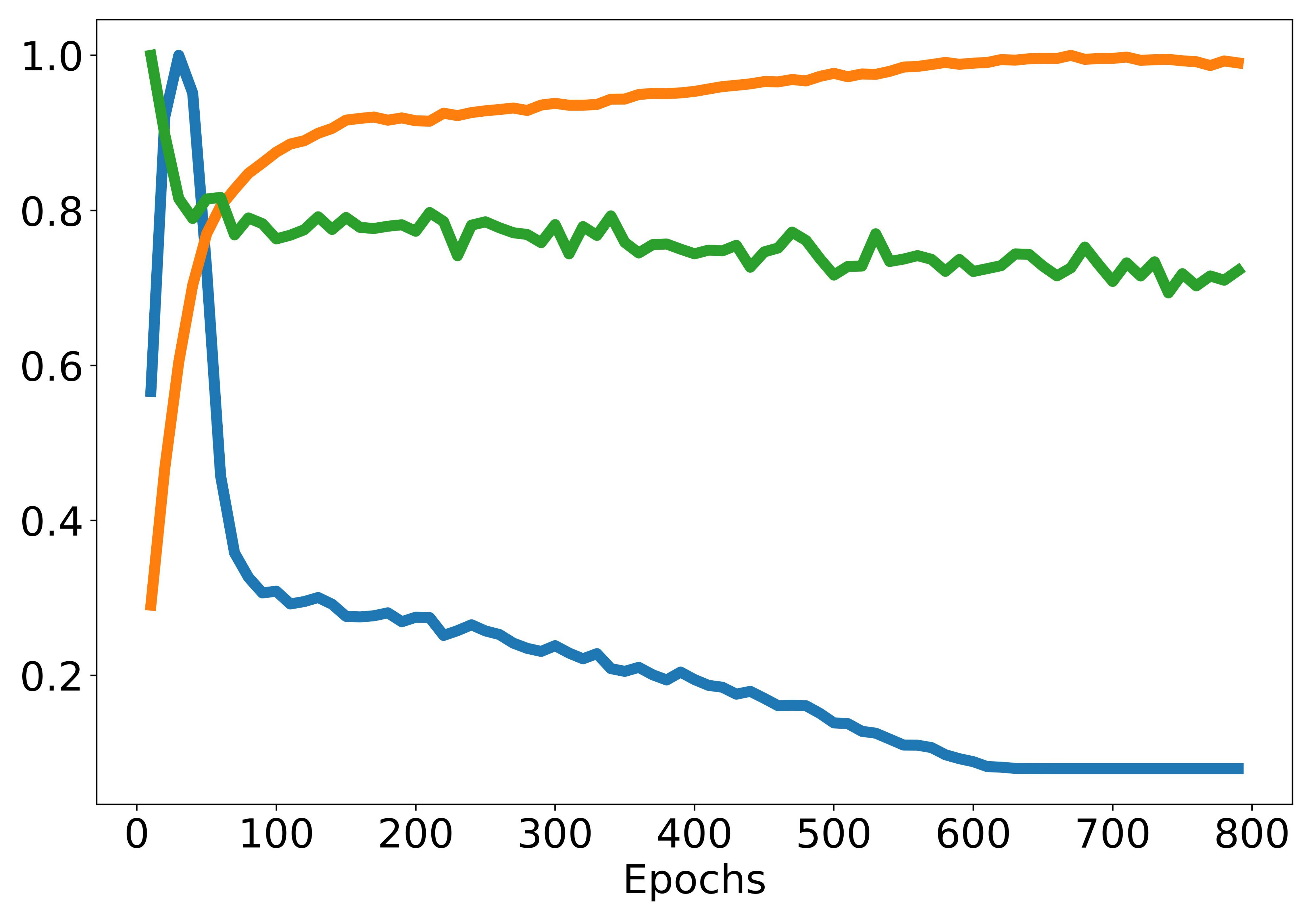}
        \caption{\moco}
    \end{subfigure}
    \hfill
    \begin{subfigure}{0.24\textwidth}
        \centering
        \includegraphics[width=\textwidth]{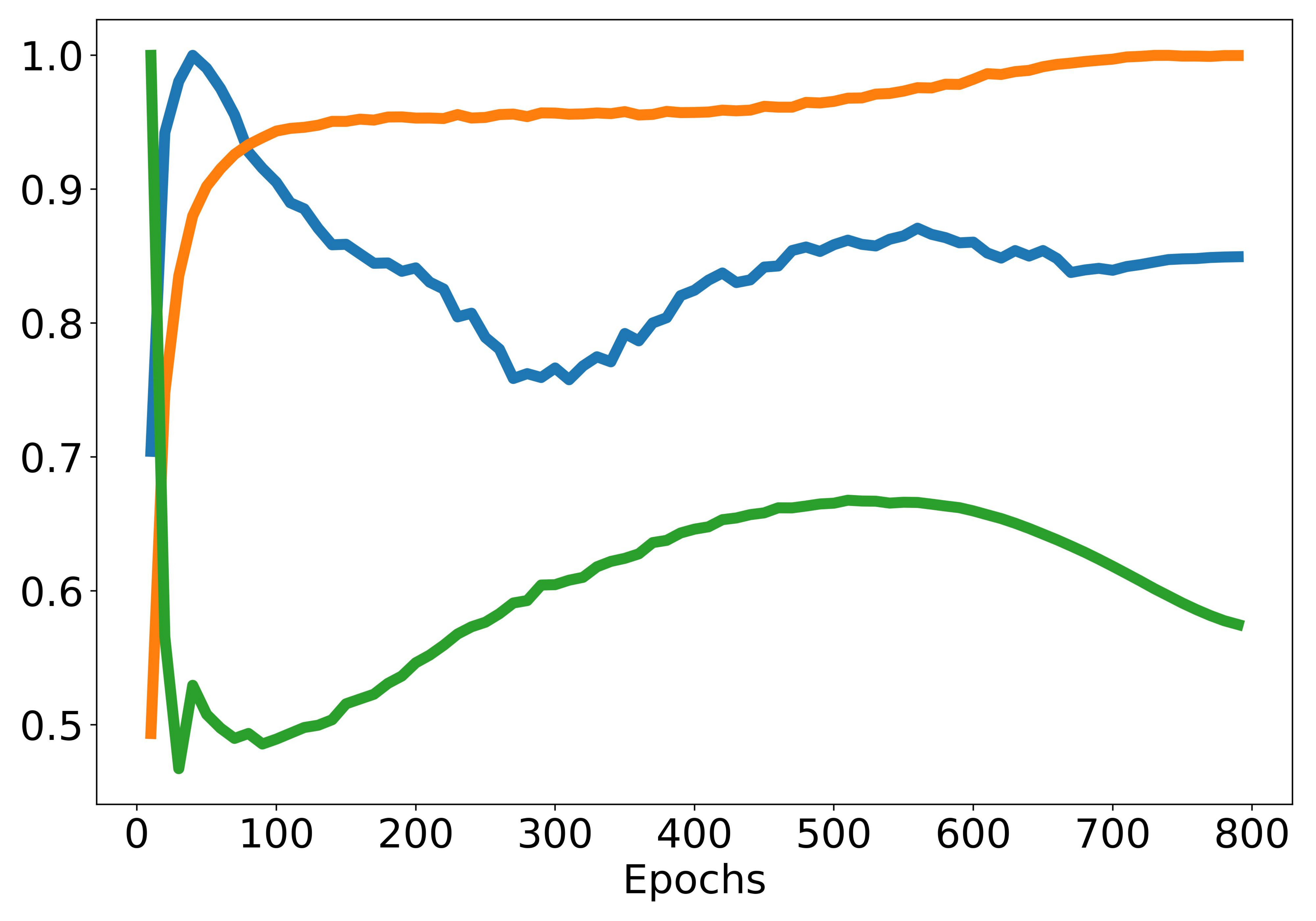}
        \caption{\dino}
    \end{subfigure}
    
    
    \caption{The Self-supervised Dense Degradation (SDD) phenomenon. While the {\color{matplotlibgreen}training loss} converges and {\color{matplotliborange}classification performance} gradually increases, the {\color{matplotlibblue}segmentation performance} declines and the final checkpoint is not optimal. The metrics are normalized to 1 for a better illustration.}
    \label{fig:motivation}
\end{wrapfigure}

Extensive experiments on sixteen state-of-the-art methods across four benchmarks confirm that the SDD phenomenon consistently appears across diverse training approaches and evaluation protocols. More importantly, it persists even when training and evaluation are conducted on the same dataset. This demonstrates that SDD highlights the performance inconsistency between different tasks rather than overfitting to the data distribution, introducing a new challenge to the SSL community.

Because of SDD, the common practice of training until convergence results in suboptimal dense performance. Finding a metric to predict downstream dense performance thus becomes essential for both understanding the cause of SDD and reducing its negative impact. Typically, models are evaluated on a labeled validation set; however, the cost of evaluating even a single checkpoint can exceed that of one full epoch of pretraining, making this impractical. Moreover, SSL typically lacks access to downstream data or labels. Although previous studies have tried estimating SSL downstream performance in an unsupervised manner \cite{Agrawal2022ReQ, Garrido2023Rankme, Memorization1, Memorization2, kalibhat2024measuring,thilak2023lidar}, these metrics mainly focus on image-level tasks and are negatively correlated with dense-level performance in our experiments.

To guide the development of a better metric, we first provide a theoretical analysis based on error rate decomposition. We show that the downstream error rate is small if \textbf{1)} intra-class representation radius is smaller than inter-class distance, and \textbf{2)} effective dimensionality of representations is large.

Based on this analysis, we propose a Dense Representation Structure Estimator (DSE), which consists of measures for class-separability and effective dimensionality, directly corresponding to our theoretical findings. Using the DSE metric, we propose two strategies to counteract SDD. In the off-the-shelf setting, we suggest a simple yet effective method of selecting the checkpoint with the highest local DSE. In the online training setting, we integrate the DSE metric as a regularizer.

Empirical evaluations demonstrate that the proposed DSE accurately predicts downstream performance, significantly outperforming existing metrics. By visualizing components of the DSE, we explain that the cause of the SDD phenomenon is either a loss of class separability or dimensional collapse. For example, in Fig. \ref{fig:motivation}, MoCo v3's performance drop is due to dense dimensional collapse, while DINO's degradation arises from decreased class separability. Moreover, our proposed checkpoint selection approach improves the mIoU by 3.0\% on average, and incorporating DSE into training further improves both DSE scores and downstream dense performance, effectively eliminating SDD's negative effects.

We summarize our contributions as follows:
\begin{itemize}
\item We identify the SDD phenomenon in SSL, revealing an inconsistency between image-level and dense-level performance, which is prevalent in state-of-the-art methods and negatively impacts dense tasks.
\item We introduce the Dense Representation Structure Estimator (DSE), which accurately and efficiently predicts downstream dense performance without relying on downstream data.
\item Using the DSE metric, we propose model selection and regularization strategies that effectively reduce the negative impact of the SDD phenomenon.
\item Extensive experiments on sixteen leading SSL methods across four benchmarks demonstrate the precision of DSE and the effectiveness of the proposed approaches in addressing the SDD phenomenon.
\end{itemize}

\section{Related Work}
\subsection{Self-supervised Learning}
\textbf{Contrastive Learning.}
Contrastive self-supervised methods have shown significant progress in recent years \cite{simclr, wu2018unsupervised, moco, cmc,tian2020makes}. These methods typically construct positive examples (different augmented views of an image) and negative examples (samples from other images), then use contrastive loss \cite{infonce} to train models to differentiate between them \cite{simclrv2, mocov2, barlowtwins, vicreg, deepInfomax, haochen2021provable, henaff2020data}.

\textbf{Non-contrastive Learning.}
Recent advances in self-supervised learning avoid explicit negative examples. These methods often employ siamese networks and aim to achieve cross-view consistency by aligning representations between teacher and student networks \cite{byol, dino, ibot, dinov2, simsiam, swav}.

\textbf{Masked Image Modeling.} 
Masked image modeling can be framed as a generative task. Models are trained to reconstruct original images from masked inputs \cite{mae, simmim, ijepa, cae}. Recent works \cite{ibot, dinov2} achieved great success by combining latent-space reconstruction with self-distillation.

\textbf{Dense Representation Learning.}
Another line of work focuses on learning dense representations. Using techniques such as dense alignment \cite{leopart, densecl, detcon, cotap}, clustering \cite{mugs, croc, cribo, flsl, bardes2022vicregl}, and reconstruction \cite{ibot, dinov2}, these methods achieve strong results on dense prediction tasks. However, despite optimizing dense representations directly, performance degradation is still observedduring training.

\subsection{Unsupervised Transferability Estimation}
Recently, $\alpha$-REQ \cite{Agrawal2022ReQ} uses the parameter of power-law distributions of covariance matrix singular values as a metric. RankMe \cite{Garrido2023Rankme} employs the effective rank \cite{EffectiveRank} to estimate the transferability performance, and Lidar \cite{thilak2023lidar} further improves it by introducing linear discriminative analysis. Other approaches analyze feature activation statistics \cite{kalibhat2024measuring}, coding rate reduction \cite{yu2020learning} or model memorization effects \cite{Memorization1, Memorization2}. Despite progress in image-level tasks, evaluating dense representations remains an open challenge. Due to space limitation, more related works are discussed in Appendix \ref{app:related_works}.

\section{The Self-supervised Dense Degradation Phenomenon}
\begin{figure*}[t]
    \centering
    
    \begin{minipage}[t]{\textwidth}
    \centering
    \begin{subfigure}{0.24\textwidth}
        \centering
        \includegraphics[width=\linewidth]{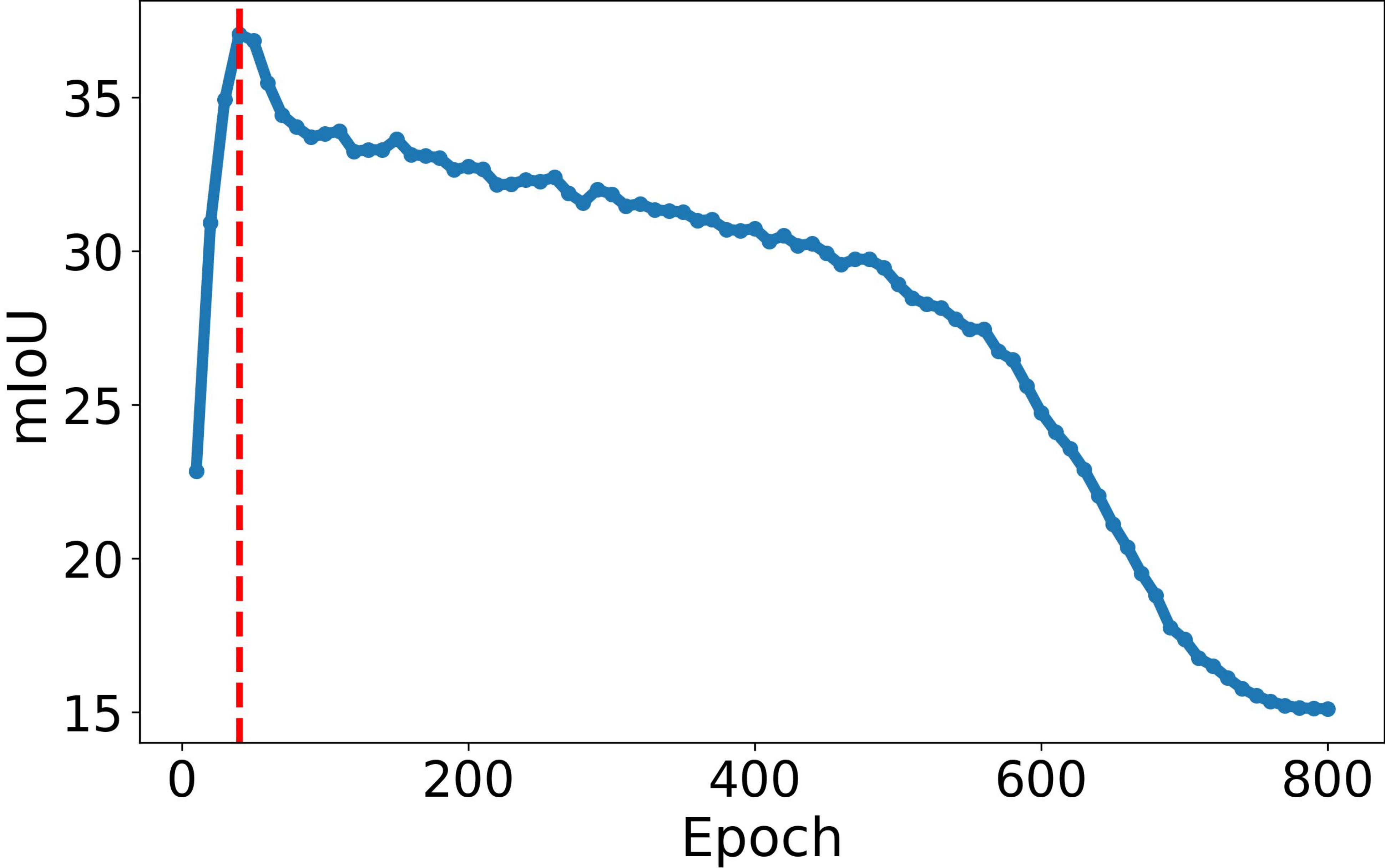}
        \caption{\moco}
    \end{subfigure}
    \hfill
    \begin{subfigure}{0.24\textwidth}
        \centering
        \includegraphics[width=\linewidth]{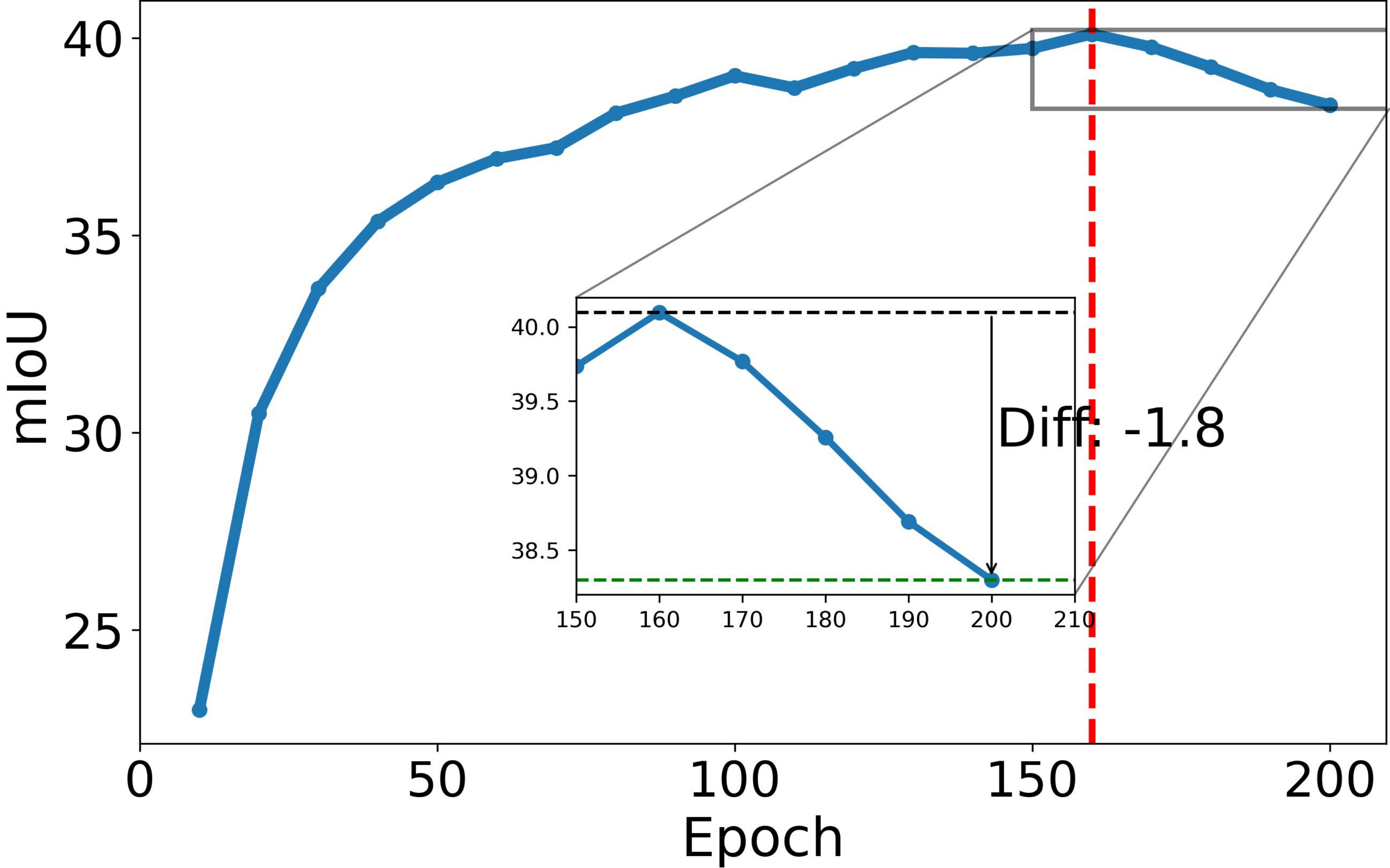}
        \caption{\densecl}
    \end{subfigure}
    \begin{subfigure}{0.24\textwidth}
      \centering
      \includegraphics[width=\linewidth]{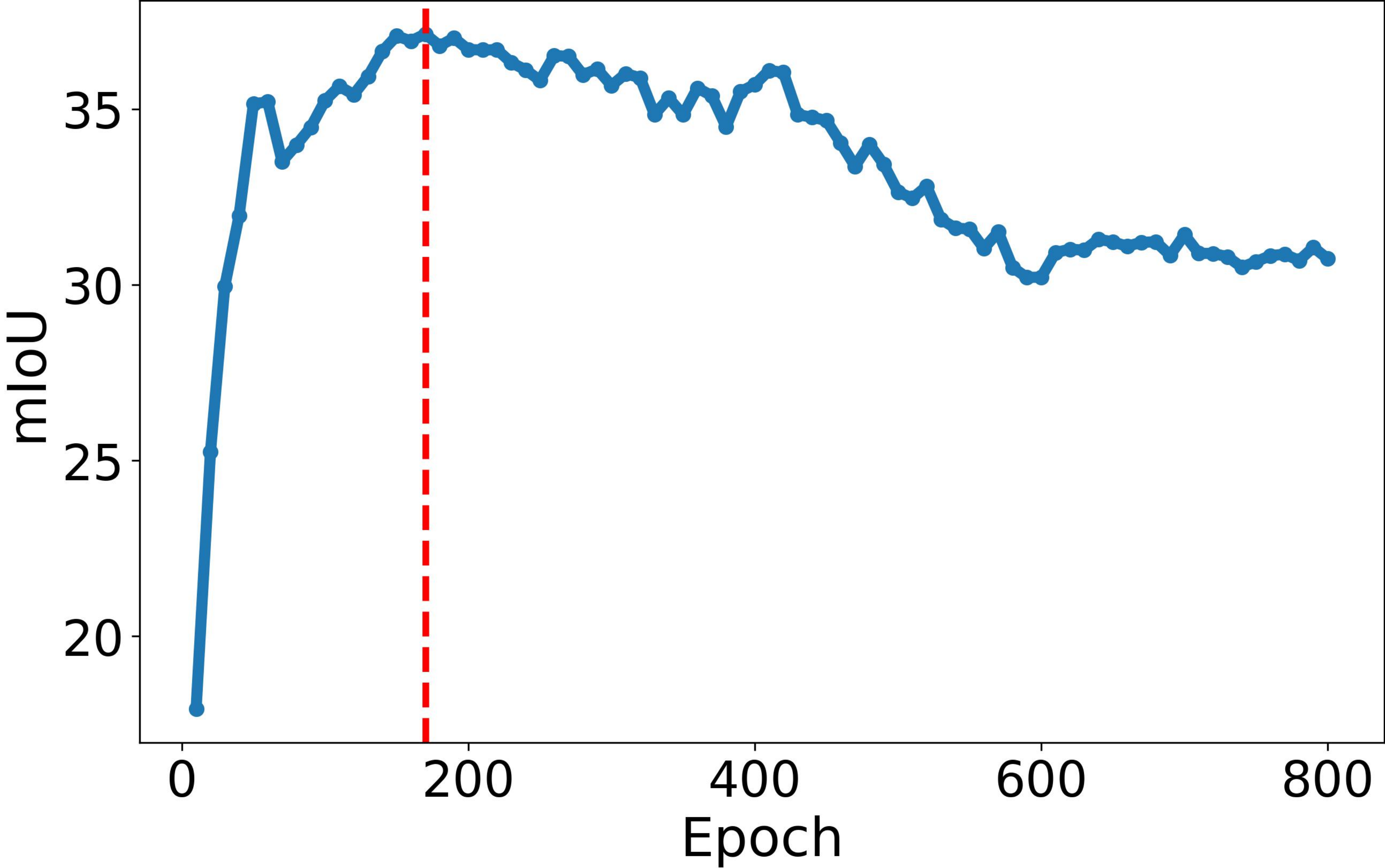}
      \caption{\byol}
    \end{subfigure}
    \hfill
    \begin{subfigure}{0.24\textwidth}
        \centering
        \includegraphics[width=\linewidth]{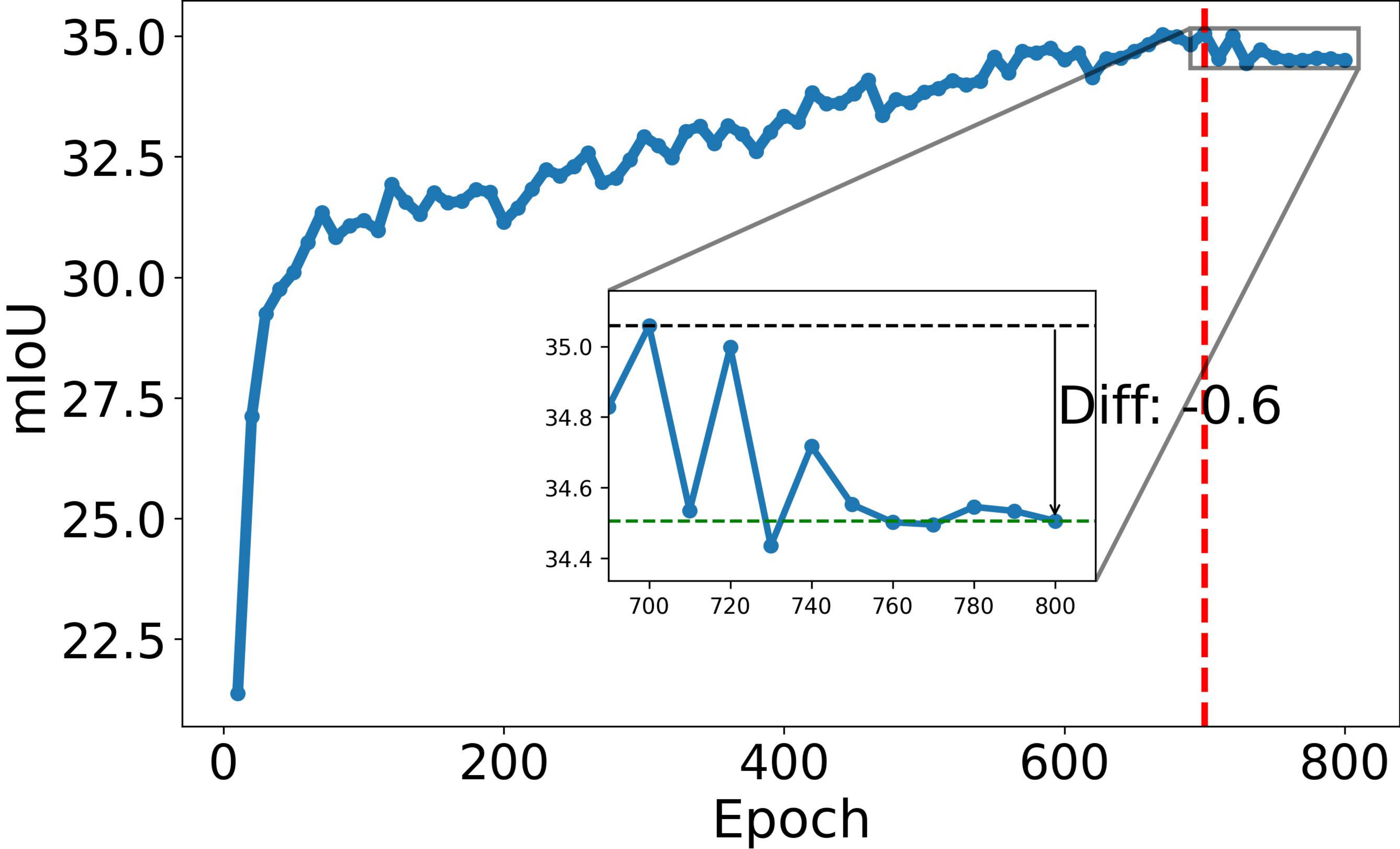}
        \caption{\simsiam}
    \end{subfigure}
    \hfill
    
    \begin{subfigure}{0.24\textwidth}
        \centering
        \includegraphics[width=\linewidth]{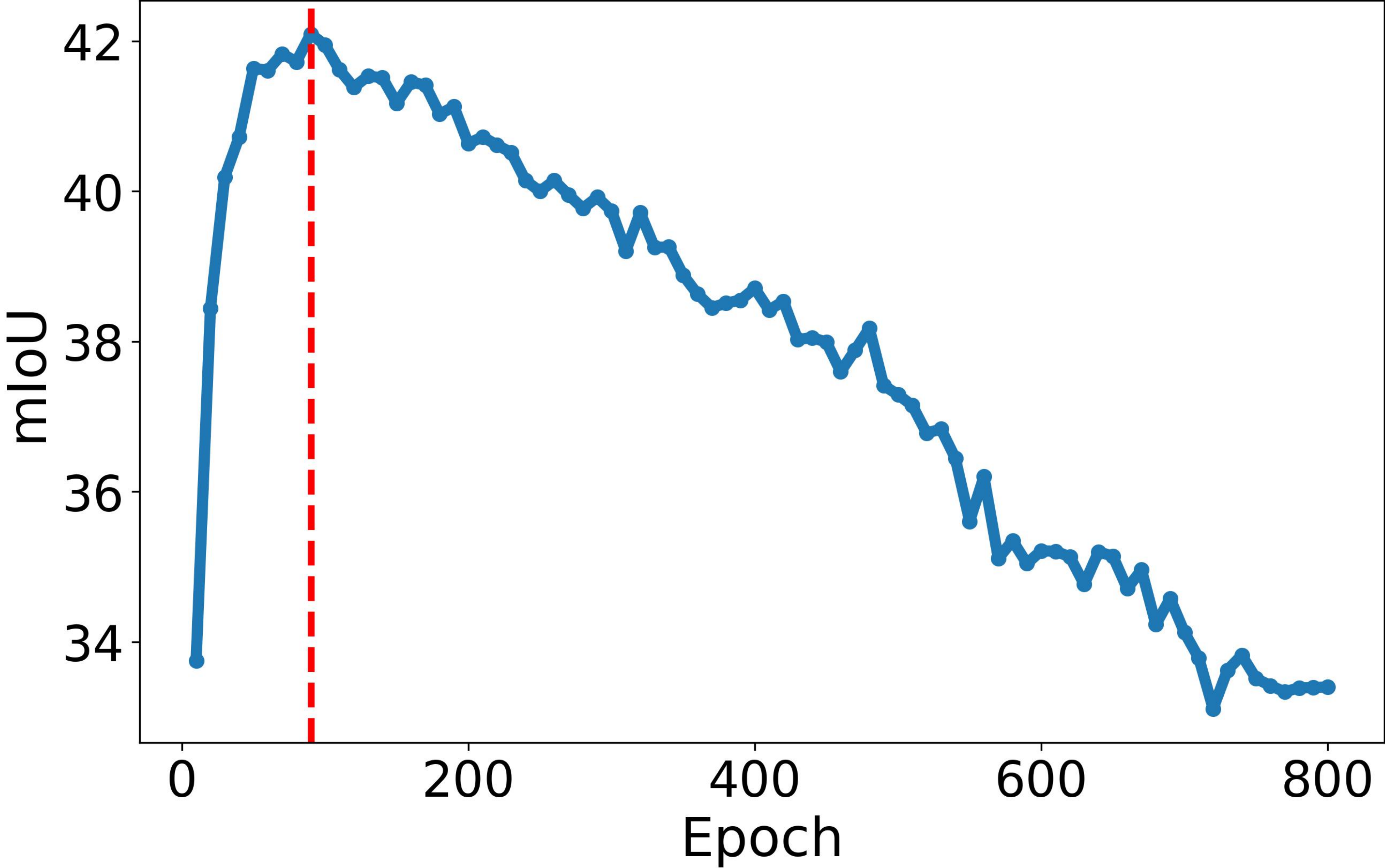}
        \caption{\esvit}
    \end{subfigure}
    \hfill
    \begin{subfigure}{0.24\textwidth}
      \centering
      \includegraphics[width=\linewidth]{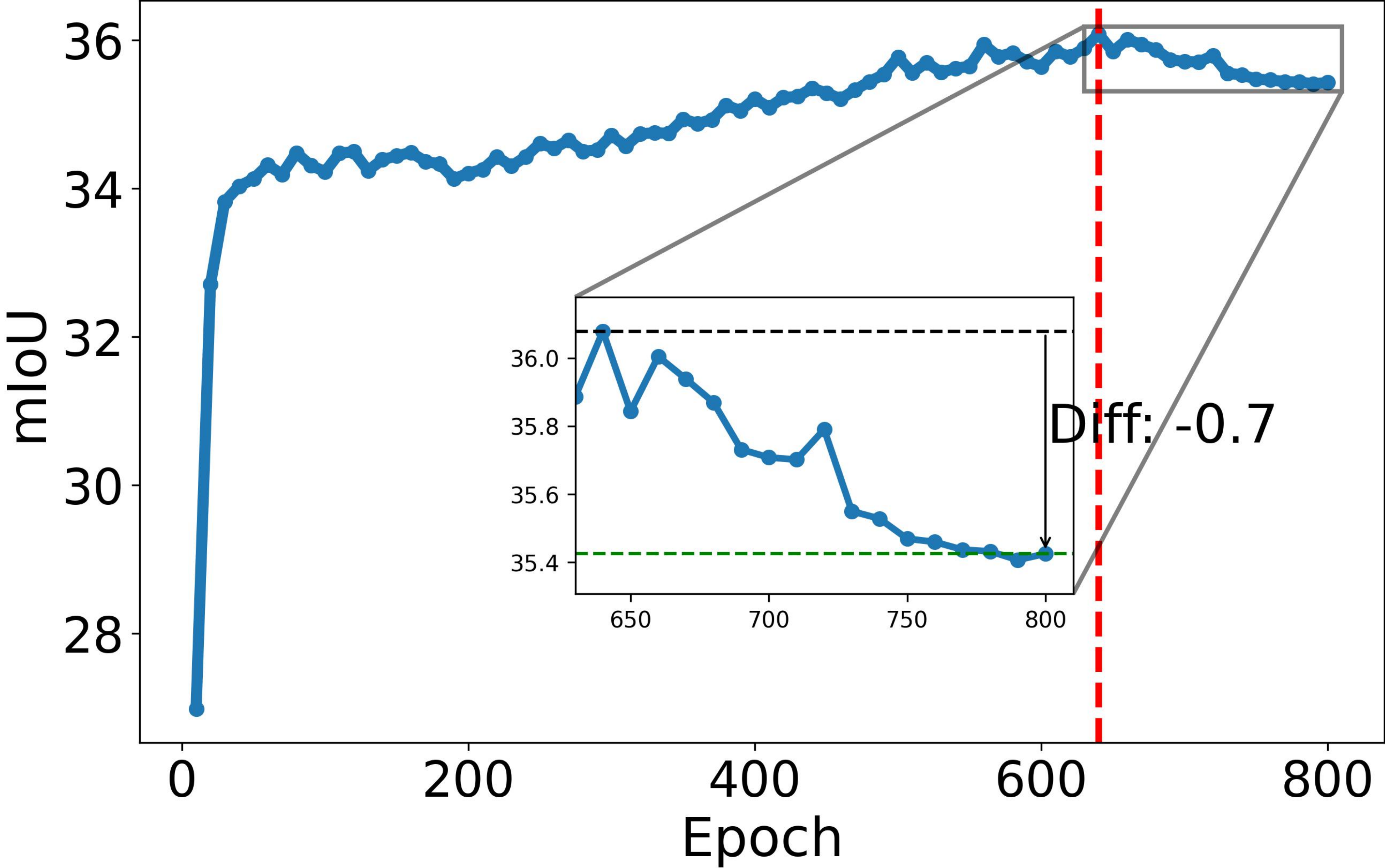}
      \caption{\mec}
    \end{subfigure}
    \begin{subfigure}{0.24\textwidth}
        \centering
        \includegraphics[width=\linewidth]{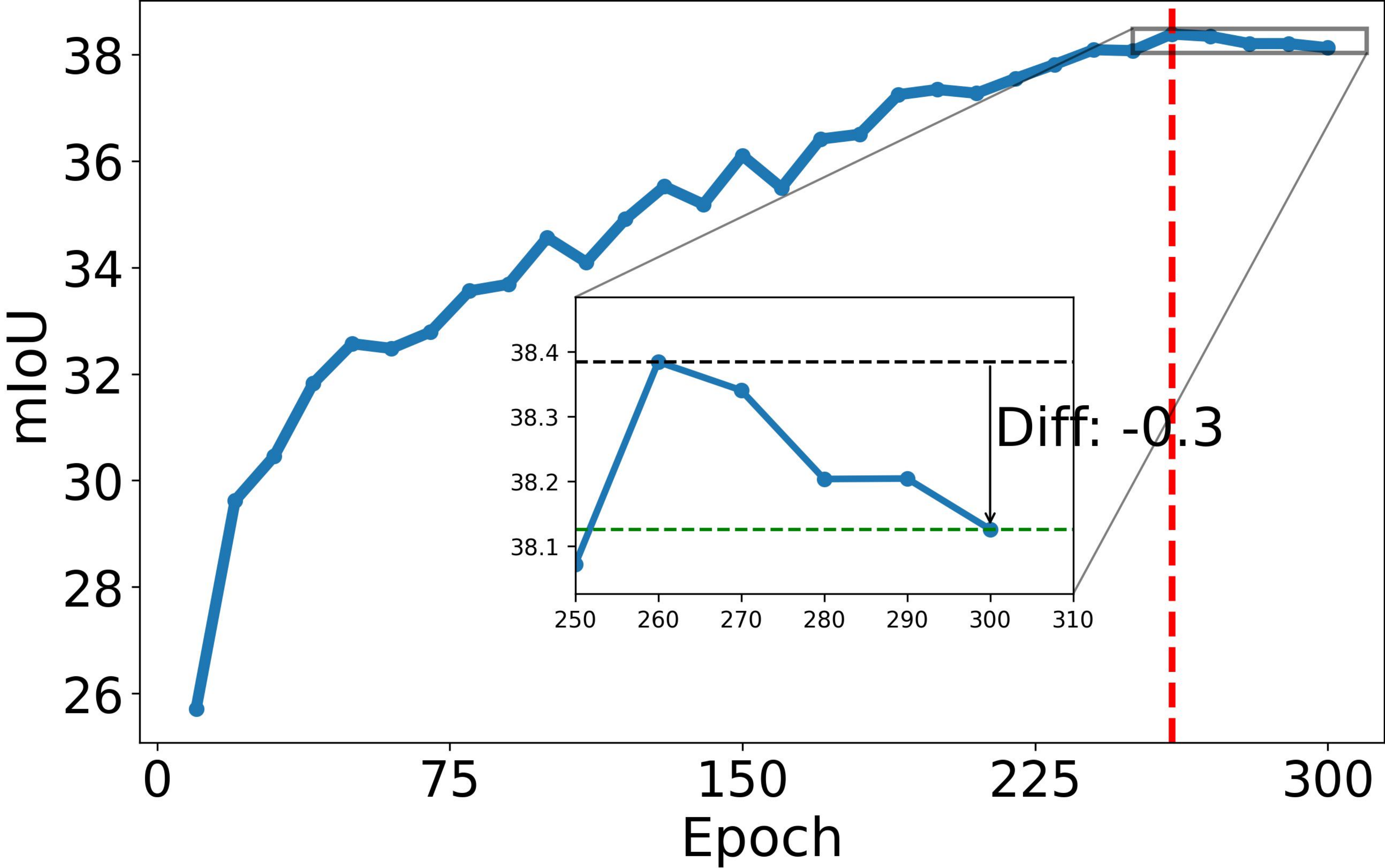}
        \caption{\vicregl}
    \end{subfigure}
    \hfill
    \begin{subfigure}{0.24\textwidth}
      \centering
      \includegraphics[width=\linewidth]{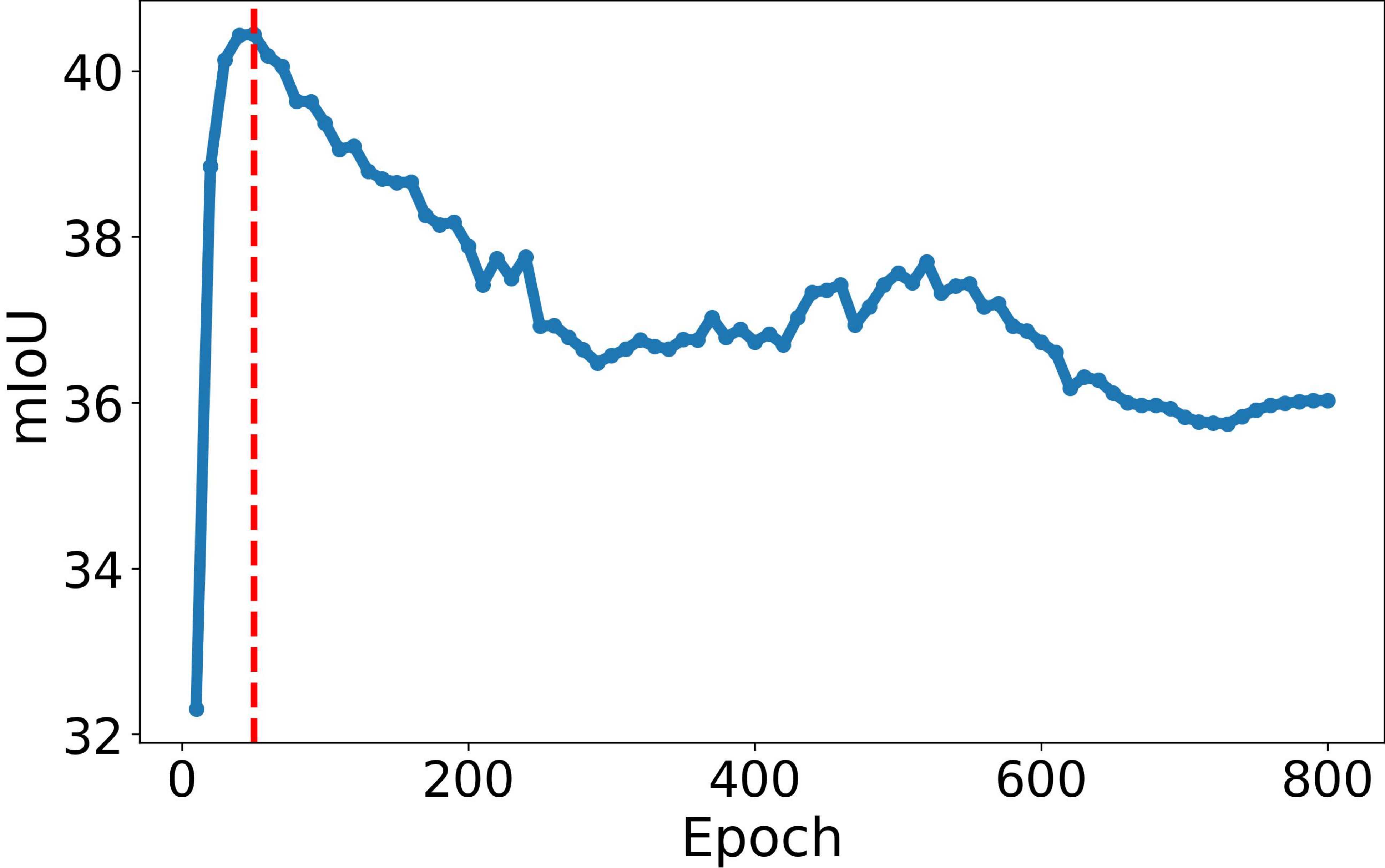}
      \caption{\dino}
    \end{subfigure}
    \hfill
    
    \begin{subfigure}{0.24\textwidth}
        \centering
        \includegraphics[width=\linewidth]{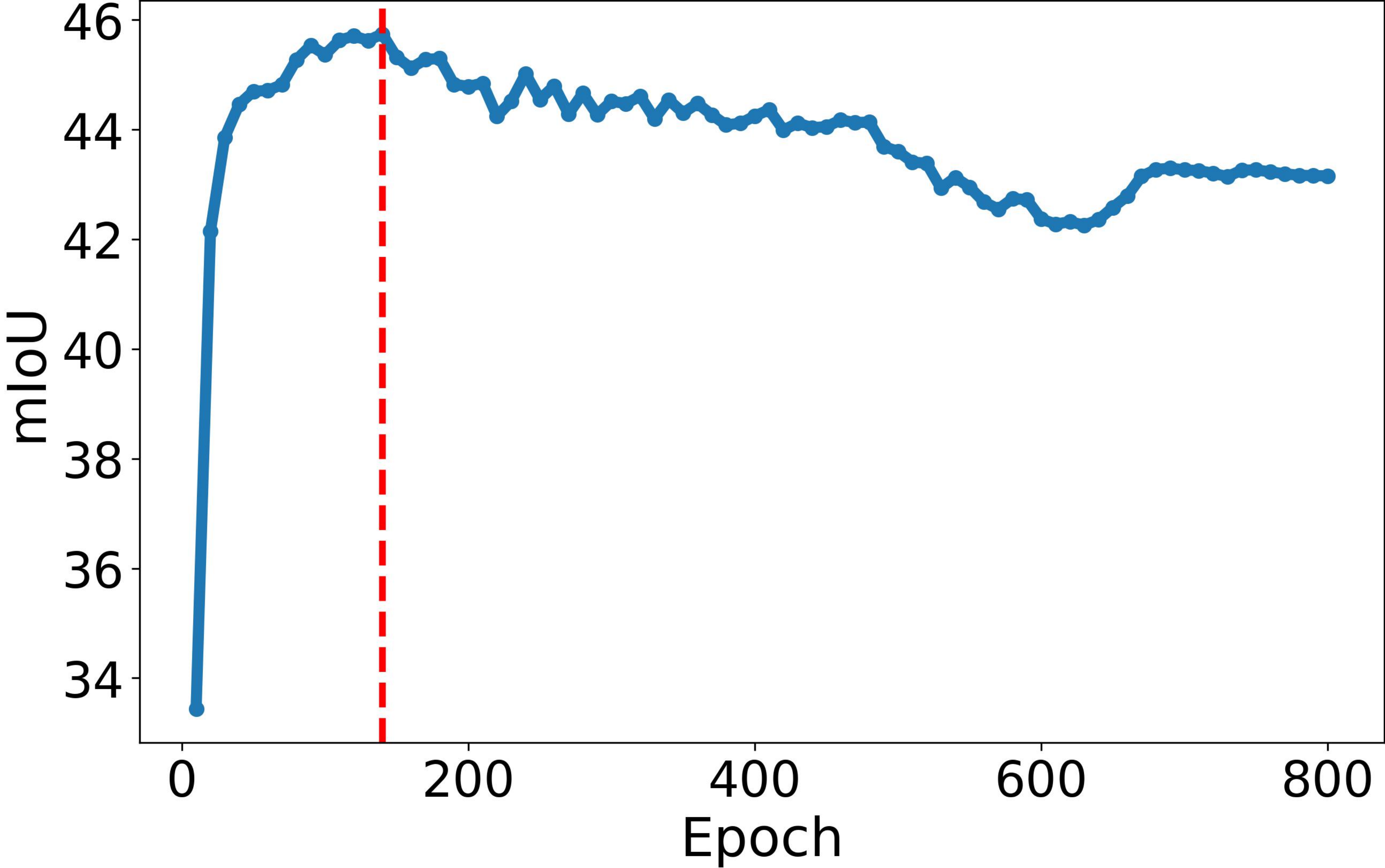}
        \caption{\ibot}
    \end{subfigure}
    \hfill
    \begin{subfigure}{0.24\textwidth}
      \centering
      \includegraphics[width=\linewidth]{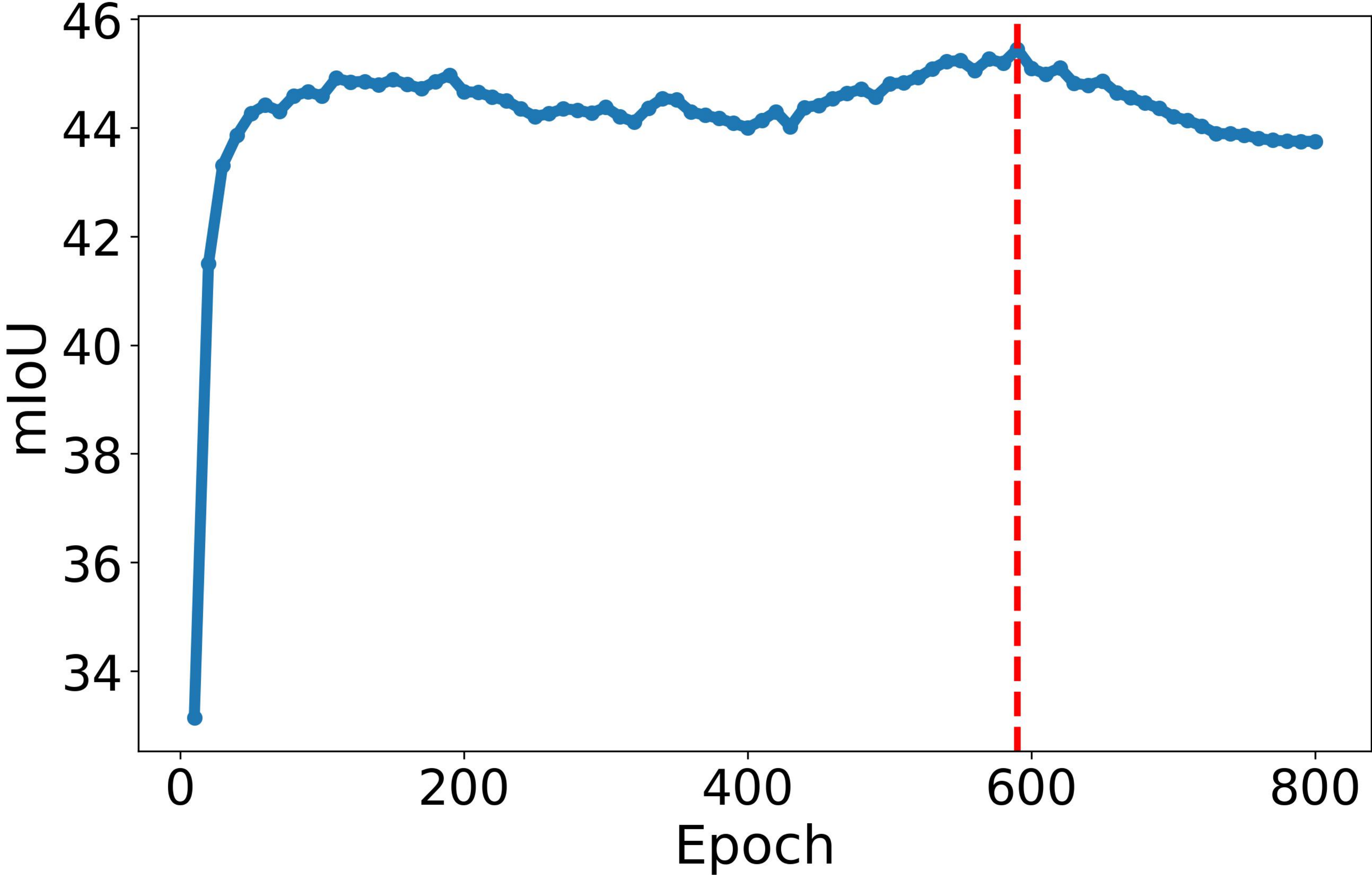}
      \caption{\mugs}
    \end{subfigure}
    \hfill
    \begin{subfigure}{0.24\textwidth}
        \centering
        \includegraphics[width=\linewidth]{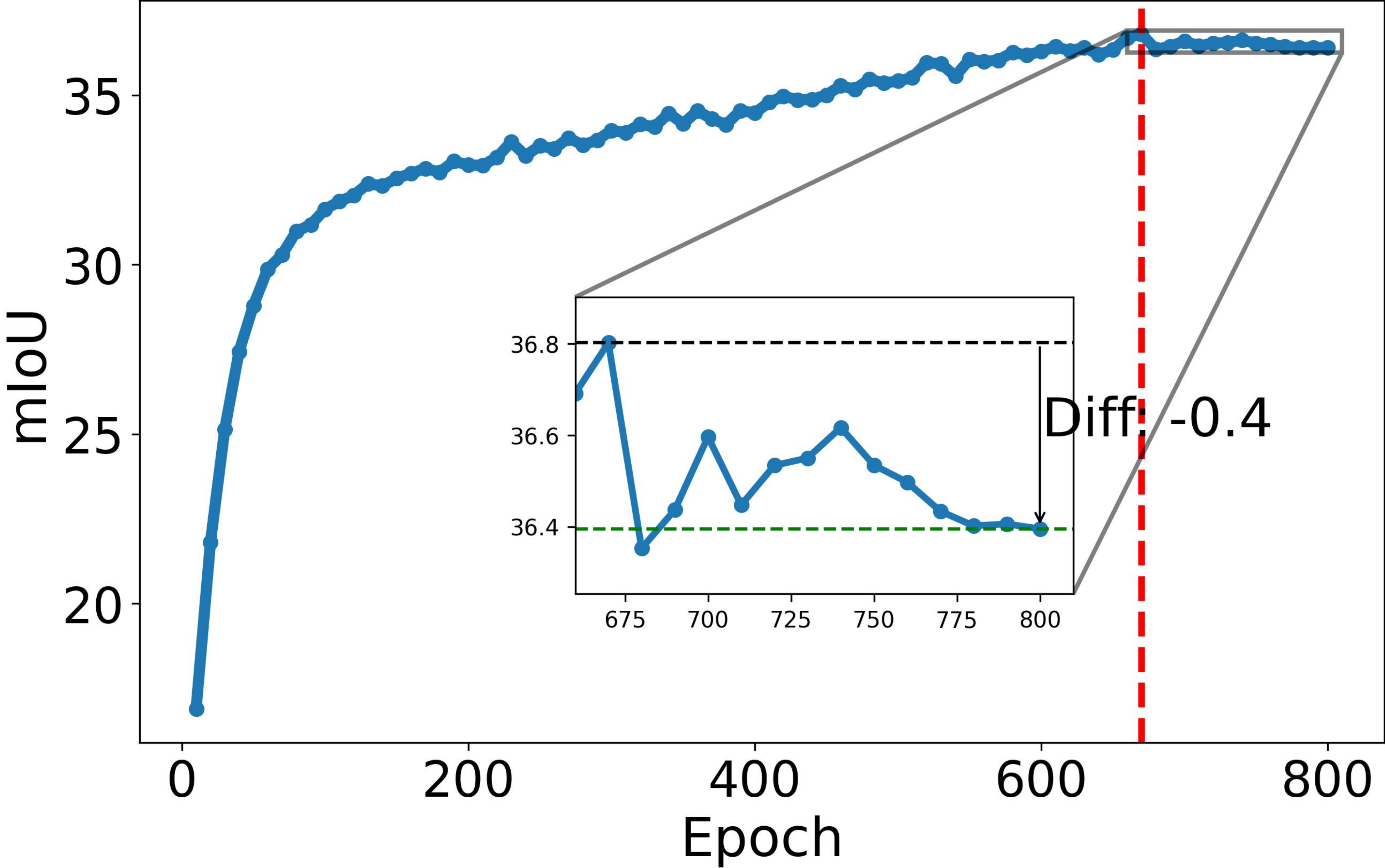}
        \caption{\mae}
    \end{subfigure}
    \hfill
    \begin{subfigure}{0.24\textwidth}
        \centering
        \includegraphics[width=\linewidth]{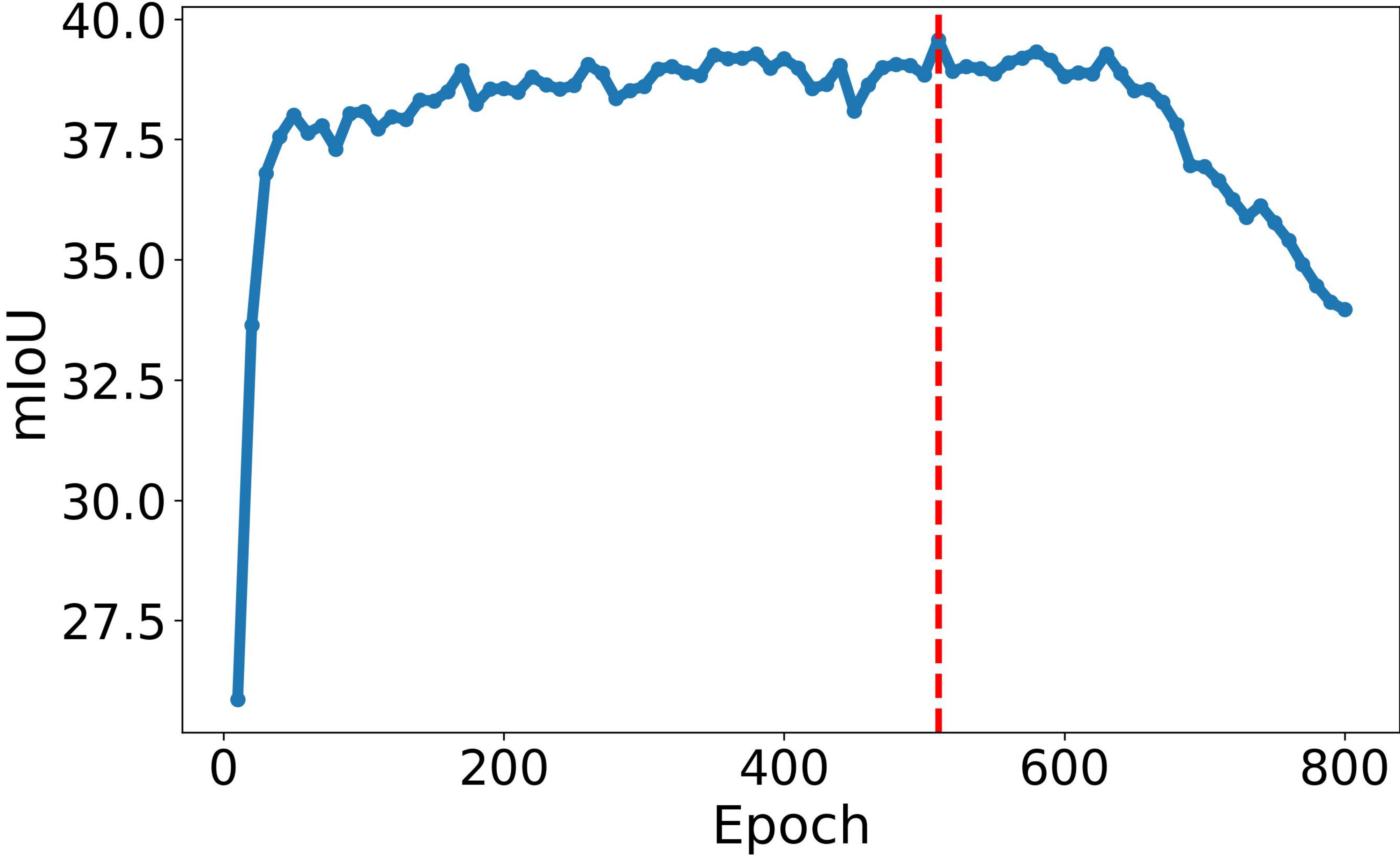}
        \caption{\ijepa}
    \end{subfigure}
    \end{minipage}

    \vspace{0.15cm}
    \resizebox{\textwidth}{!}{ 
    \begin{tabular}{lll | ccc | ccc | ccc | ccc} 
      \toprule
      \multirow{2}{*}{Method Type} & \multirow{2}{*}{Method} & \multirow{2}{*}{Architecture} & \multicolumn{3}{c}{COCO-Stuff} & \multicolumn{3}{c}{PASCAL VOC} & \multicolumn{3}{c}{ADE20k} & \multicolumn{3}{c}{Cityscapes} \\
      & & & Best & Last & Diff & Best & Last & Diff & Best & Last & Diff & Best & Last & Diff \\ 
      \midrule
      \multirow{2}{*}{Contrastive} 
        & MoCo v3 \cite{mocov3} & ViT-Small-16 & 37.1 & 15.1 & -22.0 & 51.1 & 5.9 & -45.2 & 18.3 & 3.9 & -14.4 & 36.4 & 24.9 & -11.5\\ 
        & DenseCL \cite{densecl} & ResNet-50 & 40.1 & 38.3 & -1.8 & 57.5 & 56.2 & -1.3 & 20.9 & 18.1 & -2.8 & 44.3 & 41.7 & -2.6 \\
    
      \midrule
        \multirow{3}{*}{Non-Contrastive} 
        & BYOL \cite{byol} & ResNet-50 & 37.1& 30.7& -6.4& 52.1& 45.4& -6.7& 18.8& 10.9& -7.9 &42.4 & 34.9 & -7.5\\ 
        & SimSiam \cite{simsiam} & ResNet-50 & 35.1 & 34.5 & -0.6 & 47.5 & 46.6 & -0.9 & 15.9 & 14.4 & -1.5 & 39.4 & 39.1 & -0.3\\ 
        & EsViT \cite{esvit} & Swin-Tiny-7 & 42.1 & 33.4 & -8.7 & 60.2 & 54.3 & -5.9 & 24.4 & 19.4 & -5.0 & 51.1 & 47.8 & -3.3\\ 
      \midrule
        \multirow{4}{*}{Volume-Based} 
        & MEC \cite{Mec} & ResNet-50 & 36.1 & 35.4 & -0.7 & 48.8 & 48.0 & -0.8 & 17.4 & 16.3 & -1.1 & 39.8 & 39.8 & 0.0\\ 
        & Barlow Twins \cite{barlowtwins} & ResNet-50 & 37.6& 36.9& -0.7& 51.8& 51.4& -0.4& 20.2& 19.6& -0.6&43.2 & 42.2 & -1.0\\ 
        & VICReg \cite{vicreg} & ResNet-50 & 37.3& 36.7& -0.6& 53.0& 52.7& -0.3& 20.0& 19.6& -0.4 &43.1 & 42.5 & -0.6 \\ 
        & VICRegL \cite{bardes2022vicregl} & ResNet-50 & 38.4& 38.1& -0.3& 54.6& 54.5& -0.1& 20.7& 20.4& -0.3 &45.0 & 44.6 & -0.4\\ 
      \midrule
      \multirow{6}{*}{Clustering-Based} 
        & SwAV \cite{swav} & ResNet-50 &  41.3 & 40.8 & -0.5 & 56.4 & 55.6 & -0.8 & 22.8 & 22.1 & -0.7 & 47.8 & 47.6 & -0.2\\ 
        & DINO \cite{dino} & ViT-Small-16 & 40.4 & 36.0 & -4.4 & 57.1 & 45.8 & -11.3 & 23.4 & 19.2 & -4.2 & 44.4 & 44.3 & -0.1\\ 
        & iBOT \cite{ibot} & ViT-Small-16 & 45.7 & 43.2 & -2.5 & 67.4 & 64.4 & -3.0 & 27.8 & 24.1 & -3.7 & 47.8 & 44.6 & -3.2\\ 
        & iBOT \cite{ibot} & ViT-Base-16 & 48.9& 47.0& -1.9& 71.5& 69.4& -2.1& 31.3& 29.9& -1.4 & 51.3 & 49.9 & -1.4\\ 
        & Mugs \cite{mugs} & ViT-Small-16 & 45.4& 43.7& -1.7& 68.1& 67.6& -0.5& 29.7& 28.6& -1.1 &47.0 & 44.9 & -2.1\\ 
        & ReSA \cite{resa} & ResNet-50 & 36.6& 36.2& -0.4& 49.7& 49.1& -0.6& 18.4& 18.2& -0.2 &39.6 & 38.8 & -0.8\\ 
      \midrule
      \multirow{2}{*}{Masked Modeling} 
        & MAE \cite{mae} & ViT-Small-16 & 36.8 & 36.4 & -0.4 & 49.2 & 47.9 & -1.3 & 18.2 & 17.5 & -0.7 & 37.8 & 35.7 & -2.1\\ 
        & I-JEPA \cite{ijepa} & ViT-Base-16 & 39.6 & 34.0 & -5.6 & 60.2 & 52.6 & -7.6 & 22.4 & 17.9 & -4.5 & 40.1 & 36.2 & -3.9\\ 
      \bottomrule
    \end{tabular}}
    
    \caption{\textbf{Top:} The change in dense performance throughout the pretraining process, assessed via linear segmentation on the COCO-Stuff dataset. Performance degradation is consistently observed across all methods. \textbf{Bottom:} A performance gap between the best and the last models is present across all datasets and methods.}
    \label{fig:combined_visualization}
\end{figure*}
In self-supervised learning, models are typically trained for long periods. It is widely recognized that downstream classification performance generally improves as training loss converges \cite{dino, simsiam, mocov3}. However, we observe that dense performance actually degrades during pretraining, with the performance at the final checkpoint being significantly worse than that of the best model. This observation contradicts previous intuitions. We term this phenomenon \textbf{Self-supervised Dense Degradation (SDD)}, and empirical and theoretical analyze this phenomenon in this section.
\subsection{Empirical Observations of the SDD Phenomenon}
\textbf{SDD is a General and Harmful Phenomenon.}
To investigate whether SDD occurs broadly, we conduct extensive experiments. The main findings are summarized in Fig. \ref{fig:combined_visualization}. These experiments confirm that SDD occurs consistently across \textbf{1) Various Pre-training Approaches}: SDD exists in sixteen state-of-the-art methods across different types of training loss, model architecture and optimization strategies, and \textbf{2) Various Evaluation Protocols}: SDD is evident in both linear probing and transfer learning scenarios (where the backbone is not frozen), spanning diverse downstream datasets, evaluation hyperparameters, and tasks. Detailed results can be found in Appendix \ref{app:sdd}.
\textbf{SDD is Not Caused by Overfitting the Training Data.}
We further investigate whether SDD stems from memorizing the training data. To test this, we train and evaluate DINO \cite{dino} on the same COCO dataset. The trend of dense performance mirrors that observed for DINO trained on ImageNet, with the final checkpoint experiencing a significant degradation of $4.0\%$ in mIoU. This result indicates that SDD is not due to overfitting. Further details are presented in Appendix \ref{app:sdd_dataset}.

Since SDD occurs broadly and is not related to dataset overfitting, identifying a suitable metric to predict downstream performance would be valuable for understanding and mitigating this degradation. However, defining such a metric is challenging. In the following subsection, we seek theoretical insights to help develop such a performance measure.

\subsection{Theoretical Analysis of the SDD Phenomenon}
\label{sec:theoretical_analysis}
Our goal is to establish a theoretically grounded metric for downstream performance. To achieve this, we analyze the downstream performance and identify two crucial factors influencing it: \textbf{1)} class separability, quantified by the difference between inter-class distance and intra-class radius (formally presented in Thm. \ref{thm:main}), and \textbf{2)} the dimensionality of the representations (Cor. \ref{cor:main}). With these insights, readers interested primarily in methodology and experiments may skip the remainder of this section without any loss of continuity.

\subsubsection{Problem Formulation}
\label{sec:problem_formulation}

Given that SDD appears across various methods, analyzing their training processes within a single unified framework is challenging. Therefore, we mainly focus on the linear probing approach, which aims to train a classifier using fixed (dense) representations. Specifically, given a downstream dataset $\mathcal{D} = \{X_i\}_{i=1}^{\bar{N}}$, consisting of $\bar{N}$ images, a fixed encoder $\ft : \mathcal{X} \to \mathbb{R}^{N \times d}$ produces $N$ dense representations $\{\z_i\}_{i=1}^N$ for each image, where $d$ is the dimension of the representations. To simplify, we formulate dense linear probing as a classification problem, where each representation $\z_i$ is assigned to one of $K$ latent classes, represented as $y(\z_i)$. 
The aim of linear probing is to train a classifier $G(\z)$ (for example, a linear head) that accurately maps each $\z$ to its correct latent class. Following the analysis in \cite{huang2021towards}, we choose a simple Nearest Neighbor (NN) classifier:
\[
G(\z) = \arg \min_{k \in [K]} ||\z - \m_k||,
\] 
where $\m_k= \mathbb E_{\z: y(\z) = k} [\z]$ denotes the center of the representations for the $k$-th class. The error rate of the fixed encoder $\ft$ on the downstream dataset is given by:
\[
\text{Err}_\mathcal D(\ft) = \mathbb E_{\x \in \mathcal D} \left[\mathbb P_{\z \in \ft(\x)} \left[ y(\z) \ne G(\z)\right]\right].
\] 
Since the NN classifier can be viewed as a special case of any linear classifier, its error rate naturally serves as an upper bound for all classifiers.
\subsubsection{Decomposing the Downstream Error Rate}
\label{sec:decomposition}
Next, we decompose downstream performance and identify the factors that influence downstream accuracy. For the NN classifier $G$, a representation can be correctly classified if the following condition holds:
\begin{align}
\label{eq:instance}
\underbrace{||\z - \m_{y(\z)}||\vphantom{\min_{k \in [K]\backslash y(\z)}}}_{\textit{Intra-class distance}} -~~~\underbrace{\min_{k \in [K]\backslash y(\z)}||\z - \m_{k}||}_{\textit{Inter-class distance}} \le 0.    
\end{align}

Inspired by this relationship, downstream performance can be expressed in terms of intra and inter-class distances. However, directly measuring these distances is not feasible. The main challenge arises from the fact that, in a self-supervised setting, class labels $y(\z)$ are unavailable. While techniques such as $k$-means clustering can be utilized to generate pseudo labels, we find that instance-wise distance measures still face a critical issue, resulting in meaningless predictions:

\begin{proposition}\label{prop:kmeans}
The instance-wise intra-class distance is always smaller than the inter-class distance when using $k$-means pseudo-labels. Thus, the estimated accuracy is always $1$, regardless of the actual situation. 
\end{proposition}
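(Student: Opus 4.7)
The plan is to observe that the proposition is essentially a tautology stemming from the definitional match between the $k$-means assignment rule and the nearest-neighbor classifier $G$. First I would recall that at convergence, the $k$-means pseudo-label of a representation $\z$ is exactly $y(\z) = \arg\min_{k \in [K]} \|\z - \m_k\|$, where $\m_k$ is the centroid of the $k$-th cluster, i.e. $\m_k = \mathbb{E}_{\z: y(\z)=k}[\z]$. This is precisely the rule by which $G$ assigns a predicted label.

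Next I would instantiate this in the instance-wise inequality (\ref{eq:instance}). Since $y(\z)$ is the index achieving the minimum of $\|\z - \m_k\|$ over all $k \in [K]$, it also achieves the minimum over the subset $[K]\setminus\{y(\z)\}$ on the right-hand side, giving
\begin{equation*}
\|\z - \m_{y(\z)}\| \;=\; \min_{k \in [K]} \|\z - \m_k\| \;\le\; \min_{k \in [K]\setminus\{y(\z)\}} \|\z - \m_k\|.
\end{equation*}
Consequently the intra-class distance never exceeds the inter-class distance, and the predicted label $G(\z)$ always coincides with the $k$-means label $y(\z)$. Summing (or integrating) over all $\z$ therefore yields an estimated error rate of $0$, i.e. an estimated accuracy of $1$, independently of how well the pseudo-labels reflect the true semantic classes.

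There is no real technical obstacle here; the only subtlety worth flagging in the proof is that the equality $\m_k = \mathbb{E}_{\z:y(\z)=k}[\z]$ assumed in the problem formulation is exactly the centroid-update step of $k$-means, so the nearest-centroid property is automatic at a fixed point of the algorithm. The conceptual payoff of the proposition, which I would emphasize in a concluding sentence, is that any estimator built directly from instance-wise intra- versus inter-class distances under $k$-means pseudo-labels is trivially saturated and therefore useless for predicting downstream dense performance. This motivates the need for a qualitatively different measure, such as the class-separability and effective-dimensionality components of DSE introduced in the subsequent section.
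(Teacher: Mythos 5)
Your proof is correct and follows essentially the same route as the paper's: both rest on the defining property that $k$-means assigns each point to its nearest centroid, so the intra-class distance is by construction the minimum over all centroids and the instance-wise condition is trivially satisfied. The paper phrases this as a short proof by contradiction while you argue it directly (and additionally note that the centroid identity $\m_k=\mathbb{E}_{\z:y(\z)=k}[\z]$ holds at a fixed point of Lloyd's algorithm), but these are only stylistic differences.
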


\begin{wrapfigure}[13]{r}{0.5\textwidth} 
    \centering
    \vspace{-16pt}
    \includegraphics[width=0.9\linewidth]{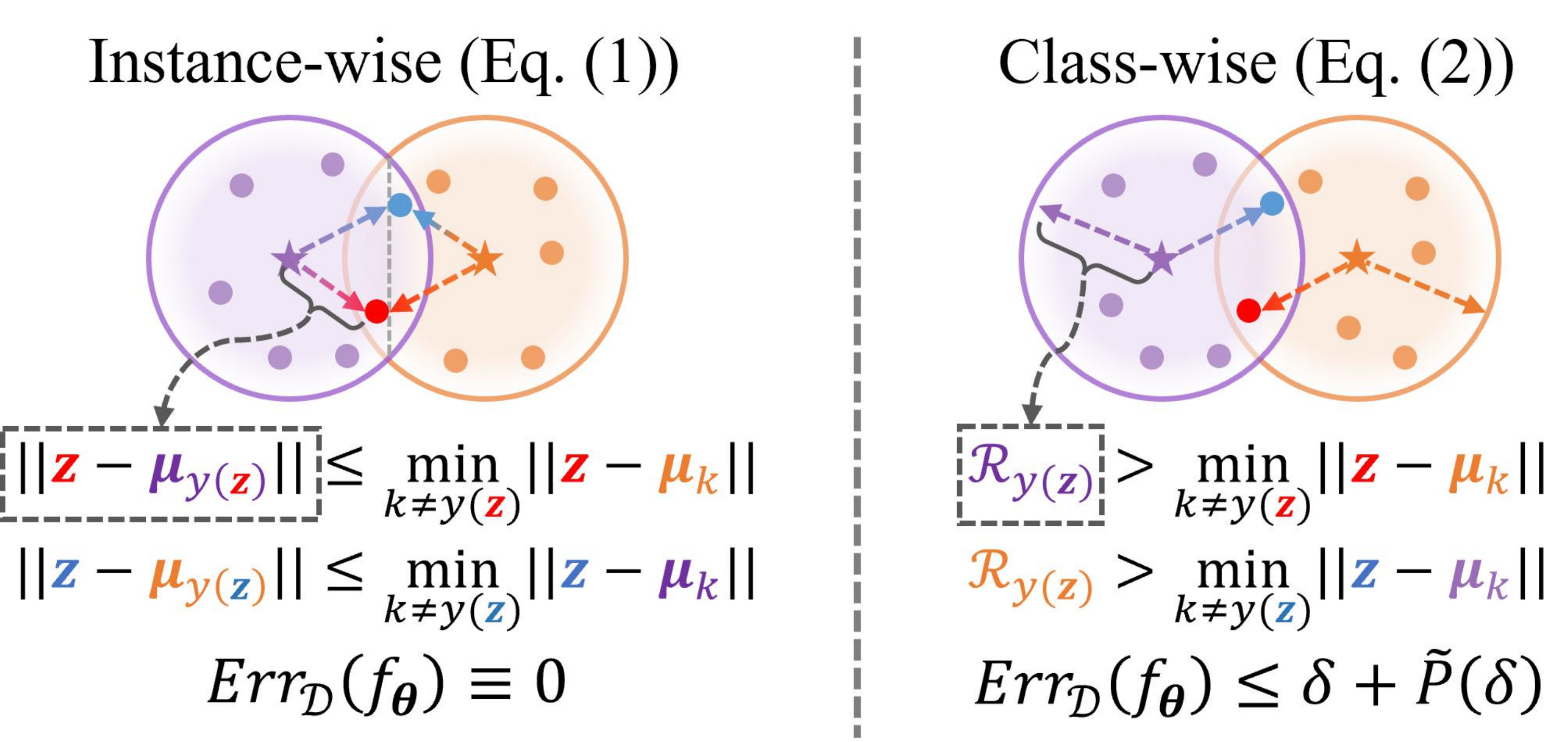}
    \vspace{-5pt} 
    \caption{\textbf{Left:} Instance-wise condition (Eq. \ref{eq:instance}) predicts all examples in the intersection area as \textbf{correctly} classified, leading to an inaccurate error rate estimation. \textbf{Right:} Our class-wise condition (Eq. \ref{eq:radius}). Examples in the intersection area are accurately predicted as \textbf{misclassified}.}
    \label{fig:distance}
\end{wrapfigure}
Proof can be found in the Appendix \ref{app:proof}. The fundamental reason for this issue lies in the fact that the instance-wise distance measure tends to underestimate the intra-class distance for those examples near the decision boundary. We present an illustration of this issue in Fig. \ref{fig:distance}. To address this issue, we replace the instance-wise measure with a class-wise radius. As a result, we reformulate the condition that $\z$ could be correctly classified as:
\begin{align}\label{eq:radius}
\underbrace{\mathcal R_{y(\z)}\vphantom{\min_{k \in [K]\backslash y(\z)}}}_{\textit{Intra-class radius}} -  \underbrace{\min_{k \in [K]\backslash y(\z)}||\z - \m_{k}||}_{\textit{Inter-class distance}} \le 0.
\end{align}

Building on this idea, we further demonstrate that under the assumption that the representations within each class are concentrated (e.g., following a sub-Gaussian distribution), the downstream performance can be guaranteed.

\begin{theorem}[Class-relevant Measure for Downstream Performance]
\label{thm:main}
Let \( Z^j = \{Z:y(Z) = j\} \) be the set of examples in the \( j \)-th class with \( |Z^j| = N_j \). Assume that for all \( j \in [K] \), the examples \( \{\z_i^j\}_{i=1}^{N_j} \) in \( Z^j \) are i.i.d.\ \( R \)-sub-Gaussian random vectors in \( \mathbb{R}^d \). Denote $\bar \z^j  = \frac 1 {N_j} \sum_{i=1}^{N_j}\z_i^j$ and 
\(
Z_c^j = \begin{bmatrix}
\z_1^j - \bar \z^j, \cdots, \z_N^j  - \bar \z^j
\end{bmatrix}
\)
as the centered embedding matrix for $Z^j$. Then, for any $\delta > 0$:
\begin{align}\label{eq:cdf}
    \underbrace{\vphantom{\frac{\sum_{i=1}^d\sigma_i}{\sqrt{N_{y(\z)}-1}}}\text{Err}_\mathcal D(\ft)}_{\text{Downstream error rate}} \le \delta ~+~ \mathbb P_{\z} \Bigg( \underbrace{\vphantom{\frac{\sum_{i=1}^d\sigma_i}{\sqrt{N_{y(\z)}-1}}}D_{\min}^{\z}}_{\text{Inter-class distance}} - ~~~~~\underbrace{\frac{\sum_{i=1}^d\sigma_i(Z_c^{y(\z)})} {\sqrt{N_{y(\z)}-1}}}_{\textit{Estimated intra-class radius}} 
    < C_\delta \Bigg).
\end{align}
Here, $\sigma_i(\cdot)$ represents the $i$-th singular value, $C_\delta$ is a margin term that jointly determined by $R,\delta$, and $N_j$ (please refer to Appendix \ref{app:proof_main} for exact formulation), and $ D_{\min}^{\z} =  \min_{k \in [K]\backslash y(\z)}||\z - \m_{k}||$ denotes the minimal inter-class distance.
\end{theorem}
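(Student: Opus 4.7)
The plan is to reduce the misclassification event $\{\lVert\z - \m_{y(\z)}\rVert > D_{\min}^{\z}\}$ to the margin condition stated in the theorem by constructing, via concentration of measure, a ``good event'' $E$ over the sample randomness with $\mathbb{P}(E^c) \le \delta$ on which the pointwise bound $\lVert\z - \m_{y(\z)}\rVert \le \rho_{y(\z)} + C_\delta$ holds, where $\rho_j := \sum_{i} \sigma_i(Z_c^j)/\sqrt{N_j - 1}$ is the estimated intra-class radius. Once such an $E$ is in place, an error forces $D_{\min}^{\z} < \rho_{y(\z)} + C_\delta$, and bounding $\mathrm{Err}_{\mathcal{D}}(\ft) \le \mathbb{P}(E^c) + \mathbb{P}_{\z}(D_{\min}^{\z} - \rho_{y(\z)} < C_\delta)$ gives exactly the theorem.

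To build $E$, I would handle each class $j$ separately and union-bound over the $K$ classes. Three ingredients compose: (i) sub-Gaussian vector concentration applied to $\z$ in class $j$, yielding $\lVert\z - \m_j\rVert \le \sqrt{\mathrm{tr}(\Sigma_j)} + O(R\sqrt{\log(K/\delta)})$ with high probability; (ii) sample-covariance concentration for i.i.d.\ sub-Gaussian vectors, implying that $\mathrm{tr}(\Sigma_j)$ is close to $\mathrm{tr}(\hat{\Sigma}_j) = \lVert Z_c^j\rVert_F^2 / (N_j - 1)$; (iii) the elementary matrix inequality $\lVert Z_c^j\rVert_F \le \lVert Z_c^j\rVert_* = \sum_i \sigma_i(Z_c^j)$, which converts the Frobenius norm into the nuclear-norm quantity appearing in $\rho_j$. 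Chaining these three yields $\lVert\z - \m_j\rVert \le \rho_j + C_\delta$ on $E$, with $C_\delta$ collecting the additive deviations from (i) and (ii) as an explicit function of $R$, $\delta$, and the class sizes $\{N_j\}$.

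The main obstacle is step (ii): making the trace concentration quantitative without letting the ambient dimension $d$ leak into $C_\delta$ in an uncontrolled way — this will likely require expressing the deviation through the effective rank of $\Sigma_j$ or invoking a norm-sub-Gaussian strengthening of the assumption. A secondary technical point is how to split the failure budget between (i) and (ii), and whether to union-bound the event in (i) over the $K$ classes or to integrate the single-class bound against the class distribution of $\z$; both routes yield the same qualitative shape of $C_\delta$. The nuclear-norm relaxation in (iii) is deliberately loose, and that looseness is exactly what makes $\rho_j$ grow with the effective dimensionality of the class — the second ingredient that the paper's DSE metric will use downstream.
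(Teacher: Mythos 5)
Your proposal matches the paper's proof essentially step for step: the paper builds the high-probability radius bound from sub-Gaussian norm concentration, a Bernstein-type trace concentration for the sample covariance trace, and exactly the relaxation $\|Z_c^j\|_F \le \sum_i \sigma_i(Z_c^j)$, then decomposes each class into a concentrated ``main part'' of probability mass at least $1-\delta$ (with the residual assumed misclassified) and integrates the per-class bounds against the class distribution of $\z$. The only point worth noting is that your stated ``main obstacle'' --- keeping $d$ out of $C_\delta$ --- is not actually overcome in the paper: its explicit $C_\delta^{y(\z)}$ scales like $R\sqrt{d}$ up to logarithmic and $1/N_j$ factors, and the theorem simply absorbs that dependence into the margin term.
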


\begin{remark}
In this theorem, we estimate the intra-class radius $\mathcal R_{y(\z)}$ with the normalized trace of the representation matrix. When this radius (plus a margin term $C_\delta$) is smaller than the inter-class distance, a simple NN classifier would be enough to separate these representations.
\end{remark}

\begin{remark}
For simplicity, we ignore the label estimation error in this theorem. A complete version, including the effect of $k$, is provided in Appendix \ref{app:effect_k}. Briefly, the bound is tightest when $k$ equals the true number of classes, but it generally remains valid when $k$ exceeds the actual number of classes.
\end{remark}

Next, we reveal another key factor affecting the downstream error rate: the dimensionality of representations.

\begin{corollary}[Error Rate Decay with Dimensionality]
\label{cor:main}
Under Thm. \ref{thm:main}'s assumptions with $\min_{k \ne y(\z)}\|\m_{y(\z)} - \m_k\|  > \sqrt d R\left(2 + \sqrt{\frac{\log(8/\delta)}{N_j}} + \sqrt{3}\right)$, for any $\delta> 0$:
\begin{align}
\text{Err}_\mathcal D(\ft) \le \delta + 2K\exp\left(-\tilde C_\delta \cdot d\right),
\end{align}
where $\tilde C_\delta> 0$ is a constant.
\end{corollary}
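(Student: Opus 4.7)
The plan is to start from Theorem~\ref{thm:main}'s bound
\[
\text{Err}_\mathcal D(\ft) \;\le\; \delta + \mathbb P_\z\!\left( D_{\min}^\z - \frac{\sum_{i=1}^d \sigma_i(Z_c^{y(\z)})}{\sqrt{N_{y(\z)}-1}} < C_\delta \right),
\]
and to show that, under the strengthened margin assumption, the residual probability is at most $2K\exp(-\tilde C_\delta\, d)$. The guiding intuition is that the new assumption forces the center gap $\min_{k\ne y(\z)}\|\m_{y(\z)} - \m_k\|$ to be of order $\sqrt d\,R$, while sub-Gaussian concentration in $\mathbb R^d$ yields deviations on the same $\sqrt d\,R$ scale with tails of the form $\exp(-c s^2/R^2)$; choosing $s$ proportional to $\sqrt d\,R$ therefore produces an exponent linear in $d$, as required.

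First, I would condition on the latent class $y(\z) = j$ and apply a union bound over $j \in [K]$. Within a class, the triangle inequality gives $D_{\min}^\z \ge \min_{k\ne j}\|\m_j - \m_k\| - \|\z - \m_j\|$, so the event inside $\mathbb P_\z$ is contained in the union of two sub-events: (i) $\|\z - \m_j\|$ is ``large,'' and (ii) $\sum_{i=1}^d \sigma_i(Z_c^j)/\sqrt{N_j-1}$ is ``large.'' For (i), the standard sub-Gaussian tail on the norm of an $R$-sub-Gaussian vector in $\mathbb R^d$ gives $\mathbb P(\|\z - \m_j\| > \mathbb E\|\z-\m_j\| + t) \le \exp(-c t^2/R^2)$; setting $t$ proportional to $\sqrt d\,R$ converts this into an $\exp(-c_1 d)$ tail. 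For (ii), I would first relax the nuclear norm by Cauchy--Schwarz, $\sum_i \sigma_i(Z_c^j) \le \sqrt d\,\|Z_c^j\|_F$, and then concentrate the sub-exponential quantity $\|Z_c^j\|_F^2 = \sum_{i=1}^{N_j}\|\z_i^j - \bar\z^j\|^2$ (a sum of $N_j$ i.i.d.\ sub-exponential scalars) around its mean; calibrating the deviation again at the scale $\sqrt d\,R$ yields an $\exp(-c_2 d)$ tail.

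With both tails in hand, I would pick the thresholds in (i) and (ii) so that the deterministic residual, obtained by subtracting the two expected values and the two deviation budgets from $\min_{k\ne j}\|\m_j-\m_k\|$, remains at least $C_\delta$. The margin assumption $\min_{k\ne j}\|\m_j - \m_k\| > \sqrt d\,R(2 + \sqrt{\log(8/\delta)/N_j} + \sqrt 3)$ is designed to leave exactly this room: the dominant $\sqrt d\,R$ terms absorb the two expected contributions (both of order $\sqrt d\,R$ after Cauchy--Schwarz), the $\sqrt{\log(8/\delta)/N_j}$ factor covers the $C_\delta$ margin inherited from Theorem~\ref{thm:main}, and the remaining $\sqrt 3$ provides headroom for the two concentration budgets. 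The prefactor $2K$ then arises naturally: $K$ from the union bound over classes and the extra factor $2$ from separately controlling the two deviation events.

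The main obstacle, as I see it, is this final calibration step: the constants produced by the concentration inequalities, the slack in the Cauchy--Schwarz relaxation, and the $\delta$-dependent margin $C_\delta$ inherited from Theorem~\ref{thm:main} must all be balanced simultaneously so that the residual is strictly positive and $\tilde C_\delta > 0$. A secondary subtlety is that the Cauchy--Schwarz bound $\sum_i \sigma_i \le \sqrt d\,\|Z_c^j\|_F$ is tight only when the within-class covariance is approximately isotropic; for more degenerate covariances the bound is loose, but this only affects the numerical constant $\tilde C_\delta$ and not the advertised $\exp(-\tilde C_\delta\, d)$ rate, which is the quantitative content of the corollary.
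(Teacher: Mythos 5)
Your proposal follows essentially the same route as the paper's proof: triangle inequality to reduce the misclassification event to a deviation of $\|\z-\m_{y(\z)}\|$, bounding both the estimated intra-class radius $\frac{1}{\sqrt{N_j-1}}\sum_i\sigma_i(Z_c^j)$ and the margin $C_\delta$ by $O(R\sqrt d)$, and then applying sub-Gaussian norm concentration at deviation scale $\sqrt d\,R$ to obtain the $\exp(-\tilde C_\delta\, d)$ tail with a union bound over the $K$ classes. The only difference is bookkeeping: the paper obtains the factor $2$ from the two-sided norm tail $\mathbb P\left(\|X\|\ge R\sqrt d+t\right)\le 2\exp\!\left(-t^2/(2R^2)\right)$ and folds the concentration of the radius term into the $\delta$ budget (which is the cleaner choice, since Bernstein for $\|Z_c^j\|_F^2$, a sum over the $N_j$ samples, naturally yields an $e^{-cN_j}$ rather than an $e^{-c_2 d}$ tail, so your claimed $\exp(-c_2 d)$ for sub-event (ii) needs $N_j\gtrsim d$ or should simply be absorbed into $\delta$), whereas you attribute the $2$ to two separate deviation events --- a harmless discrepancy that does not change the result.
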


\begin{remark}
This corollary establishes a connection between our analysis and the broadly studied dimensional collapse phenomenon in SSL \cite{Garrido2023Rankme,zhuo2023towards}. When $d$ is small, the downstream performance experiences significant degradation. These findings highlight the critical need to jointly assess the dimensionality of representations.
\end{remark}

\begin{figure*}[t]
    \centering
    \includegraphics[width=\linewidth]{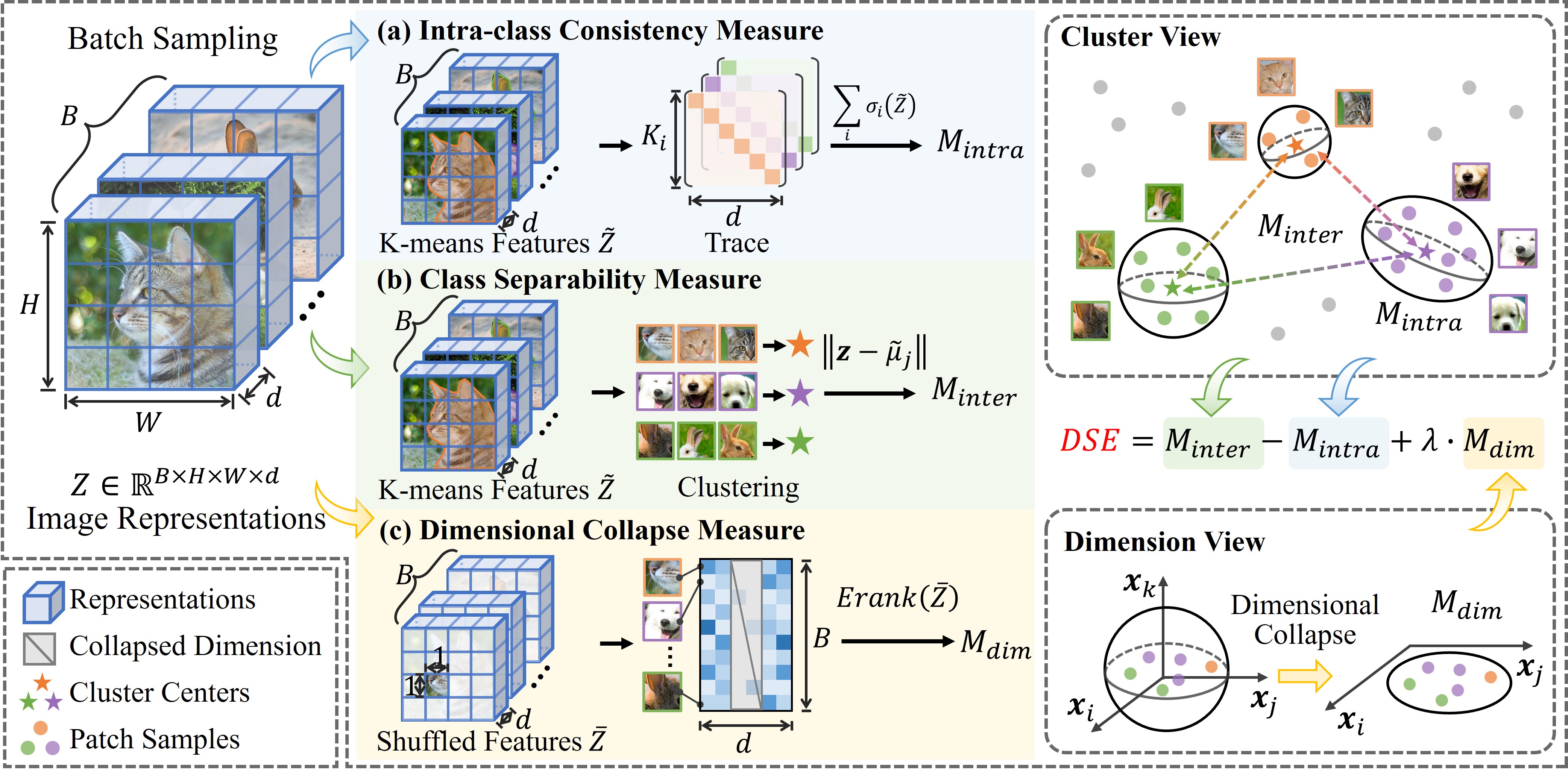}
    \vspace{-4pt}
    \caption{The proposed Dense representation Structure Estimator (DSE) consists of three components. The intra-class consistency measure and inter-class distance measure jointly conduct the class-separability measure, which is motivated by the results in Thm \ref{thm:main}. The dimensional collapse measure estimates the effective dimensionality, which corresponds to the analysis in Cor. \ref{cor:main}.}
    
    \label{fig:overall}
    \vspace{-6pt}
\end{figure*}
\section{Addressing SDD via Dense Representation Structure Estimator}
\label{sec:estimation}
\subsection{Dense Representation Structure Estimator}
Inspired by the analysis in the previous section, we propose a metric called Dense representation Structure Estimator (DSE) with the following formulation:
\begin{align*}
\text{DSE} = \underbrace{\mathbb{E}_{\z}\left[D_{\min}^{\z} - \frac{\sum_{i=1}^d\sigma_i(Z_c^{{y(\z)}})} {\sqrt{(N_{y(\z)}-1)}}\right]}_{\text{Class separability measure}} + \underbrace{\lambda \cdot M_{dim.}\vphantom{\left[D_{min}^j - \frac{\sum_{i=1}^d\sigma_i(Z_c^{j})} {\sqrt{(N_{j}-1)}}\right]}}_{\text{Effective dimensionality}} 
\end{align*}
The first term is derived from the result of Thm. \ref{thm:main}. Since the cumulative density function in Eq. \ref{eq:cdf} decreases monotonically with the deviation, the deviation between two terms can be treated as a measure of downstream performance. The second term corresponds to the analysis in Cor. \ref{cor:main}. For an intuitive understanding, readers can refer to Fig. \ref{fig:overall}.

\textbf{Measuring Class Separability.}
Given a batch of dense representations $Z = \{\z_i\}_{i=1}^{B \times N}$, we first calculate the $k$-means on all $B\times N $ representations to obtain a pseudo-label $\tilde y(\z) \in [k]$ for all representations $\z$. Let $\tilde Z^j = \{\z \in Z: y(\z) =j\}$ denotes the representations in the $j$-th cluster and $\tilde N_j = |\tilde Z^j|$ represents the number of representations, the intra-class radius and inter-class distance are calculated as:
\begin{align*}
     M_{intra} = \frac 1 k \sum_{j=1}^k \frac{\sum_{i=1}^{\min\{\tilde N_j,d\}}\sigma_i(\tilde Z_c^j)}{\sqrt{(\tilde N_j-1)}}, ~~~~~ M_{inter} &= \frac 1 k \sum_{j=1}^k \frac 1 {N_j} \sum_{\z \in \tilde Z^j} \min_{i\ne j}\|\z - \tilde {\m}_i\|_2.
\end{align*}
where $\tilde Z_c^j = \tilde Z^j - \mathbf {1} \frac 1 {\tilde N_j} \sum_{i=1}^{\tilde  N_j}\z_i^T$ is the centered representation matrix, and $\tilde {\m}_j = \frac{1}{|\tilde Z^j|} \sum_{\z \in \tilde Z^j} \z$ represents the center of the $j$-th cluster.

\textbf{Measuring Dimensional Collapse.} 
As discussed in Cor. \ref{cor:main}, the dimensionality of representations affects their separability, and thus it should also be considered. Building on previous work \cite{Garrido2023Rankme}, we first randomly sample $B'$ dense representations from different images (ensuring their independence) and concatenate them to a $B'\times d$ matrix $\bar Z$. By setting $B' \gg d$, the rank of $\bar Z$ reflects the number of non-collapsed dimensions of representations. Thus, we compute its effective rank \cite{EffectiveRank}:
\begin{equation*}
    M_{dim} = \text{Erank}(\bar Z) = \exp\bigg(- \sum_{i=1}^{d} p_i  \log p_i \bigg),
\end{equation*}
where $p_i = \frac{\sigma_i(\bar Z)}{||\sigma_i(\bar Z)||_1}$ is the $i$-th normalized singular value of $\bar Z$.

\textbf{Final Formulation of DSE.} DSE is calculated by:
\begin{equation}\label{eq:DSE}
    \text{DSE} = M_{inter} - M_{intra} + \lambda \cdot M_{dim},
\end{equation}
where $\lambda$ is a parameter that rescales the measure of effective dimensionality to the same amplitude of class-separability statistics. In practice, it is taken as:
\begin{equation*}
    \lambda = \frac {\text{Std}(M_{inter} - M_{intra})}{\text{Std}(M_{dim})},
\end{equation*}
where $\text{Std}(\cdot)$ denotes the standard deviation calculated across all checkpoints.

\begin{algorithm}[tb]
    \caption{DSE-based Model Selection}
    \label{alg:model_selection}
     \begin{algorithmic}
        \STATE {\bfseries Input:} Training dataset $X$, checkpoints $\{\ft^i\}_{i=1}^N$, maximum number of candidates $T$.
        \FOR{$i=1$ {\bfseries to} $N$}
        \STATE Sample a batch of data $\bar X$ from $X$ and calculate the dense representations $Z = \ft^i(X)$
        \STATE Calculate the metric $P_i$ based on Eq. \ref{eq:DSE}
        \ENDFOR
        \STATE Select the local maximum points by $\bar C = \{i:i=\arg\max_{j \in [i-2, i+2]} P_j\}$
        \STATE Keep the indices in $\bar C$ with the top-$T$ metric and obtain $C = \{\ft^i: i \in \bar C\}$.
        \STATE {\bfseries Output:} Model candidates $C$.
     \end{algorithmic}
 \end{algorithm}
\subsection{Mitigating SDD Phenomenon with the DSE Metric}
\textbf{DSE-Guided Off-the-shelf Model Selection.}
When modifying the training process is not feasible, we propose selecting the best model from the saved checkpoints using the DSE metric to reduce the negative impact of the SDD phenomenon. When comparing two models, the one with a higher DSE indicates better class separability and effective dimensionality, and is therefore expected to perform better according to our theory. Based on this idea, we first compute the DSE metric and select checkpoints corresponding to local maxima as potential candidates. To reduce computational costs, we then choose the top $T$ ($T=3$) checkpoints with the highest DSE values from this candidate set as our final models. The complete procedure is shown in Alg. \ref{alg:model_selection}.

\textbf{DSE-regularized Online Optimization.}
Since the DSE metric represents a lower bound on performance and all operations involved in computing DSE are differentiable, we also consider directly optimizing the DSE as an explicit regularizer. Specifically, we add the negative DSE metric to the original loss function for each learning framework:
\[
\mathcal{L} = \mathcal{L}_{original} - \beta \cdot \text{DSE}.
\]
In our experiments, we set $\lambda$ in DSE to 1 and $\beta$ to 0.001. Empirically, we find that training for $10$ epochs starting from the checkpoint with the best initial performance effectively mitigates the SDD phenomenon and enhances downstream performance. More details are presented in Appendix \ref{app:methods}.

\section{Empirical Studies}
\label{sec:method}
\subsection{DSE Metric is a Precise Estimator for Dense Performance}
\begin{figure*}[t]
    \centering
    \begin{tikzpicture}
        \begin{axis}[
            scale only axis,
            legend style={
                at={(0.5,1.05)}, 
                anchor=south,
                legend columns=2, 
                /tikz/every even column/.append style={column sep=1cm},
                font=\smaller, 
                draw=lightgray,
                fill=white, 
                /pgf/number format/1000 sep={}
            },
            legend cell align={left},
            xlabel={}, ylabel={}, 
            xmin=0, xmax=1, ymin=0, ymax=1,
            axis lines=none, 
        ]
            \addlegendimage{color=matplotlibblue, mark=none, line width=1pt}
            \addlegendentry{mIoU on COCO-Stuff}
            \addlegendimage{color=matplotliborange, mark=none, line width=1pt}
            \addlegendentry{The proposed DSE metric}
        \end{axis}
    \end{tikzpicture}

    \begin{subfigure}{0.24\textwidth}
        \centering
        \includegraphics[width=\linewidth]{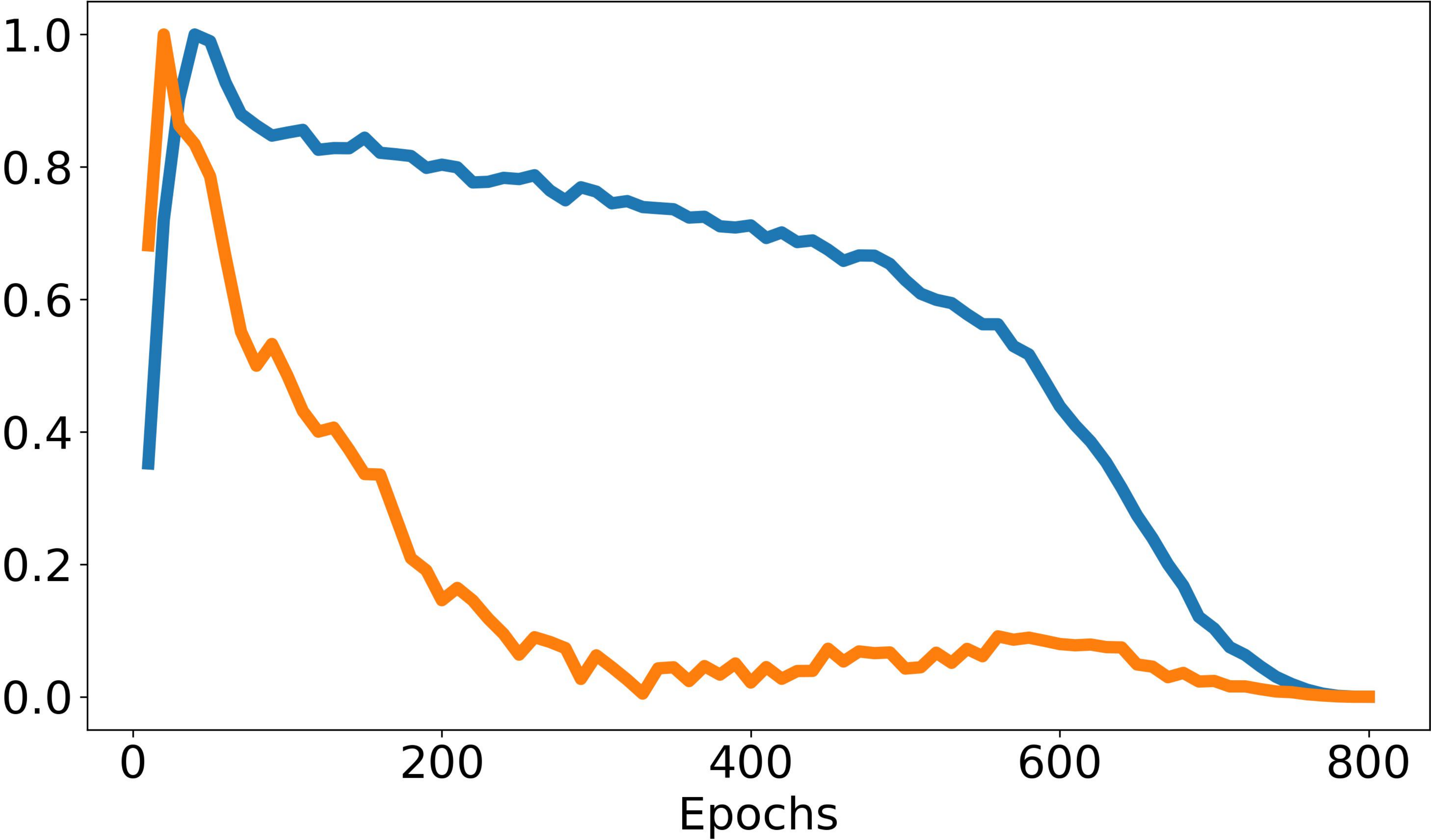}
        \caption{\moco}
    \end{subfigure}
    \hfill
    \begin{subfigure}{0.24\textwidth}
        \centering
        \includegraphics[width=\linewidth]{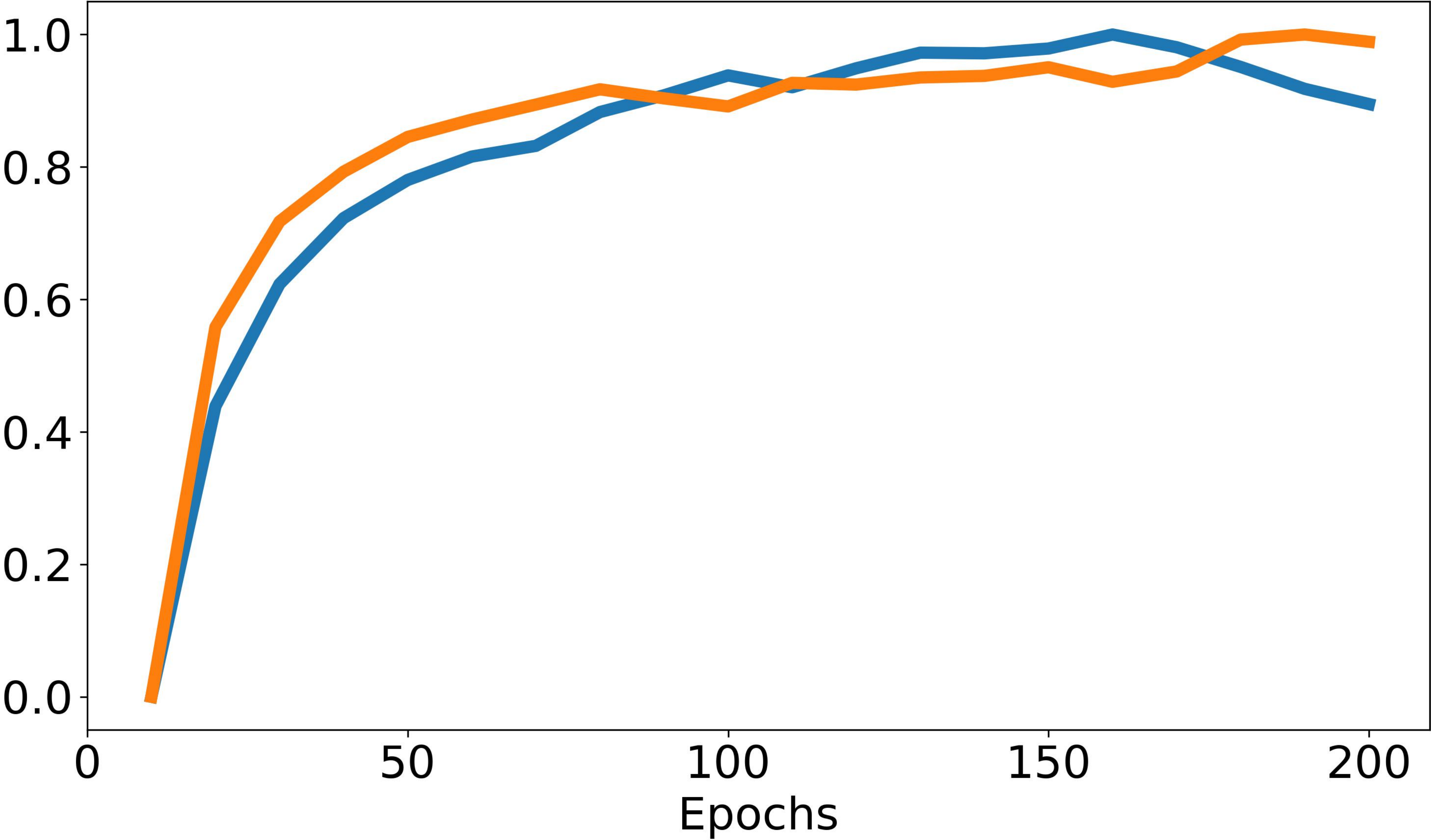}
        \caption{\densecl}
    \end{subfigure}
    \begin{subfigure}{0.24\textwidth}
      \centering
      \includegraphics[width=\linewidth]{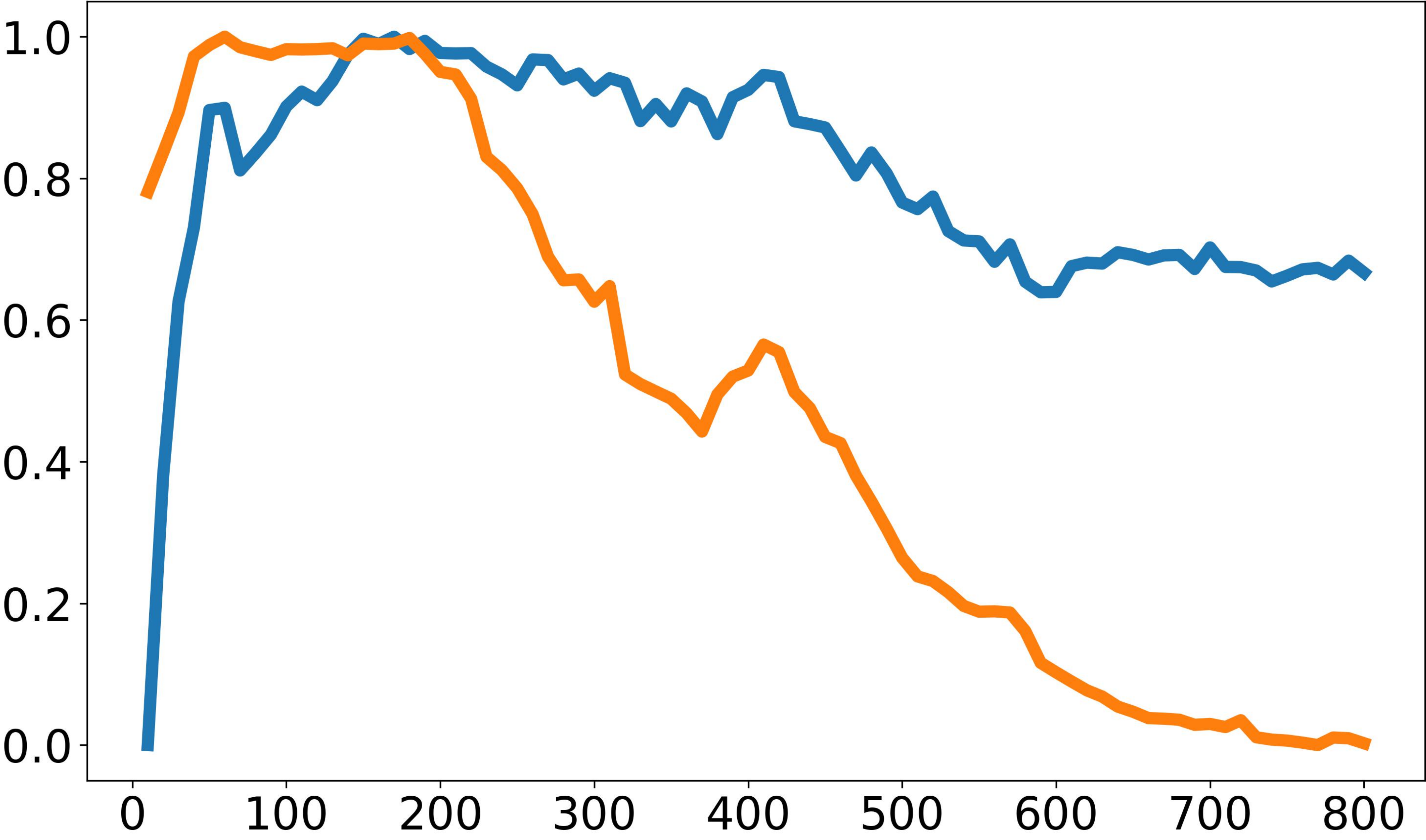}
      \caption{\byol}
    \end{subfigure}
    \hfill
    \begin{subfigure}{0.24\textwidth}
        \centering
        \includegraphics[width=\linewidth]{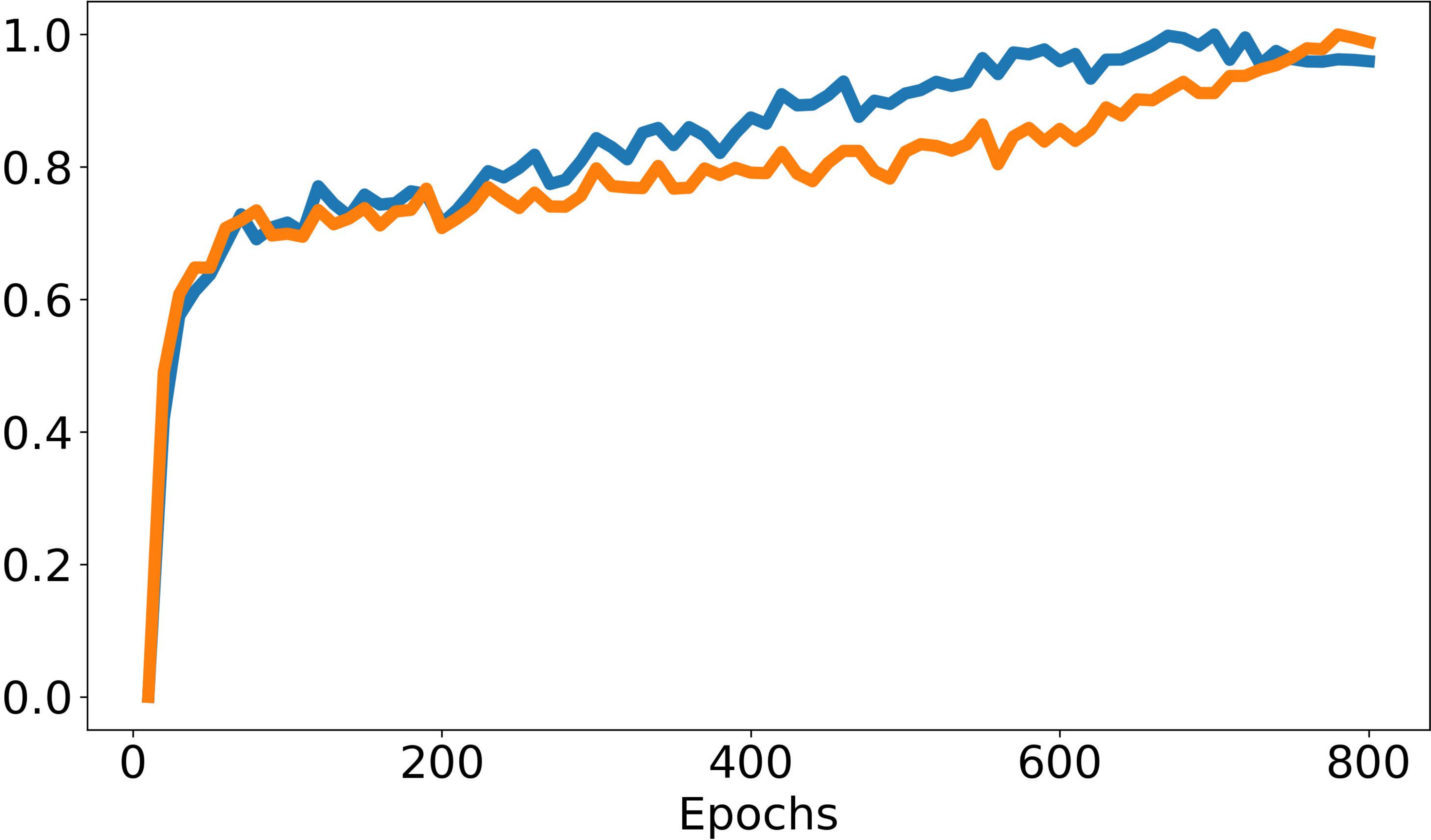}
        \caption{\simsiam}
    \end{subfigure}
    \hfill

    \begin{subfigure}{0.24\textwidth}
        \centering
        \includegraphics[width=\linewidth]{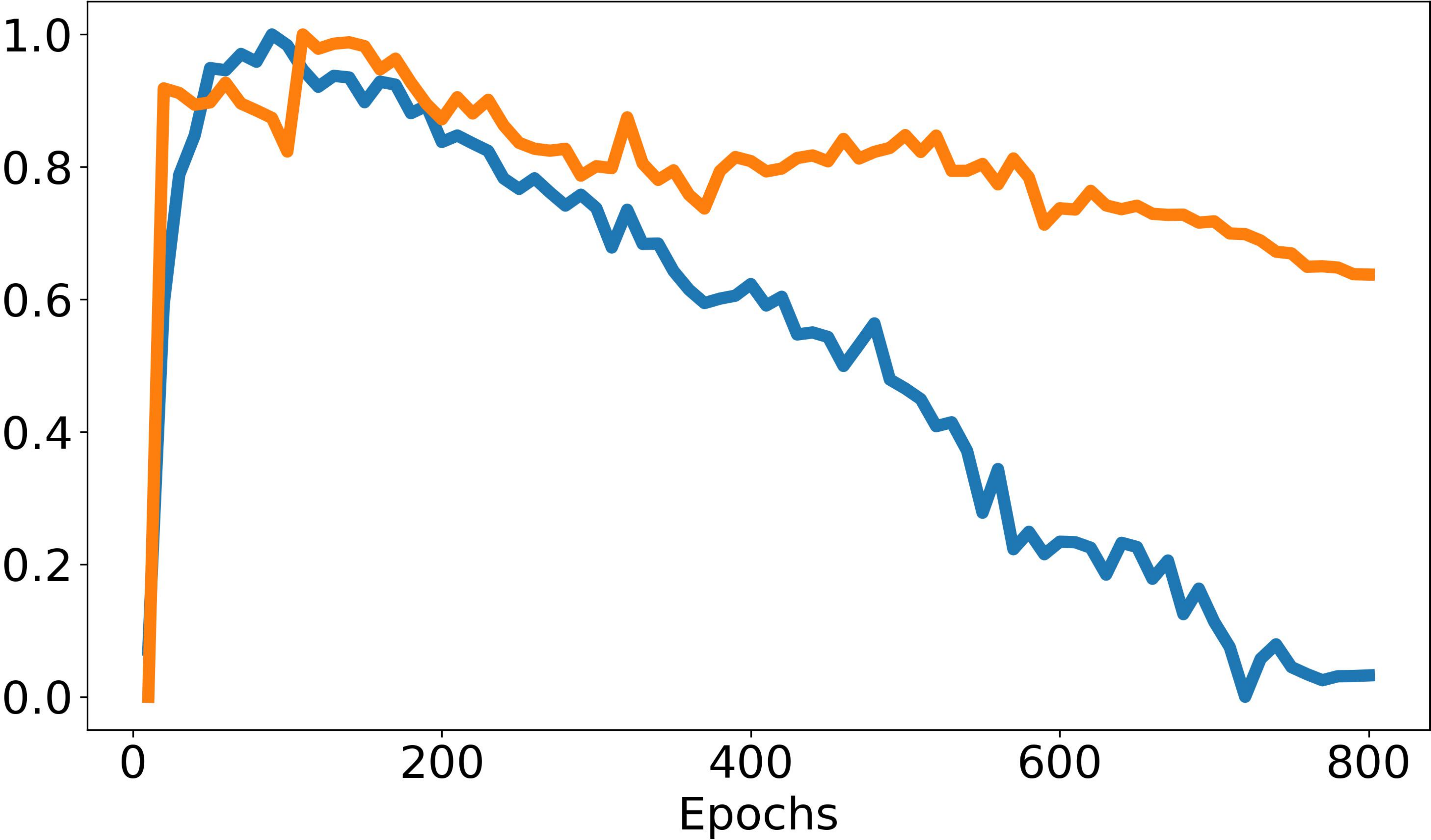}
        \caption{\esvit}
    \end{subfigure}
    \hfill
    \begin{subfigure}{0.24\textwidth}
      \centering
      \includegraphics[width=\linewidth]{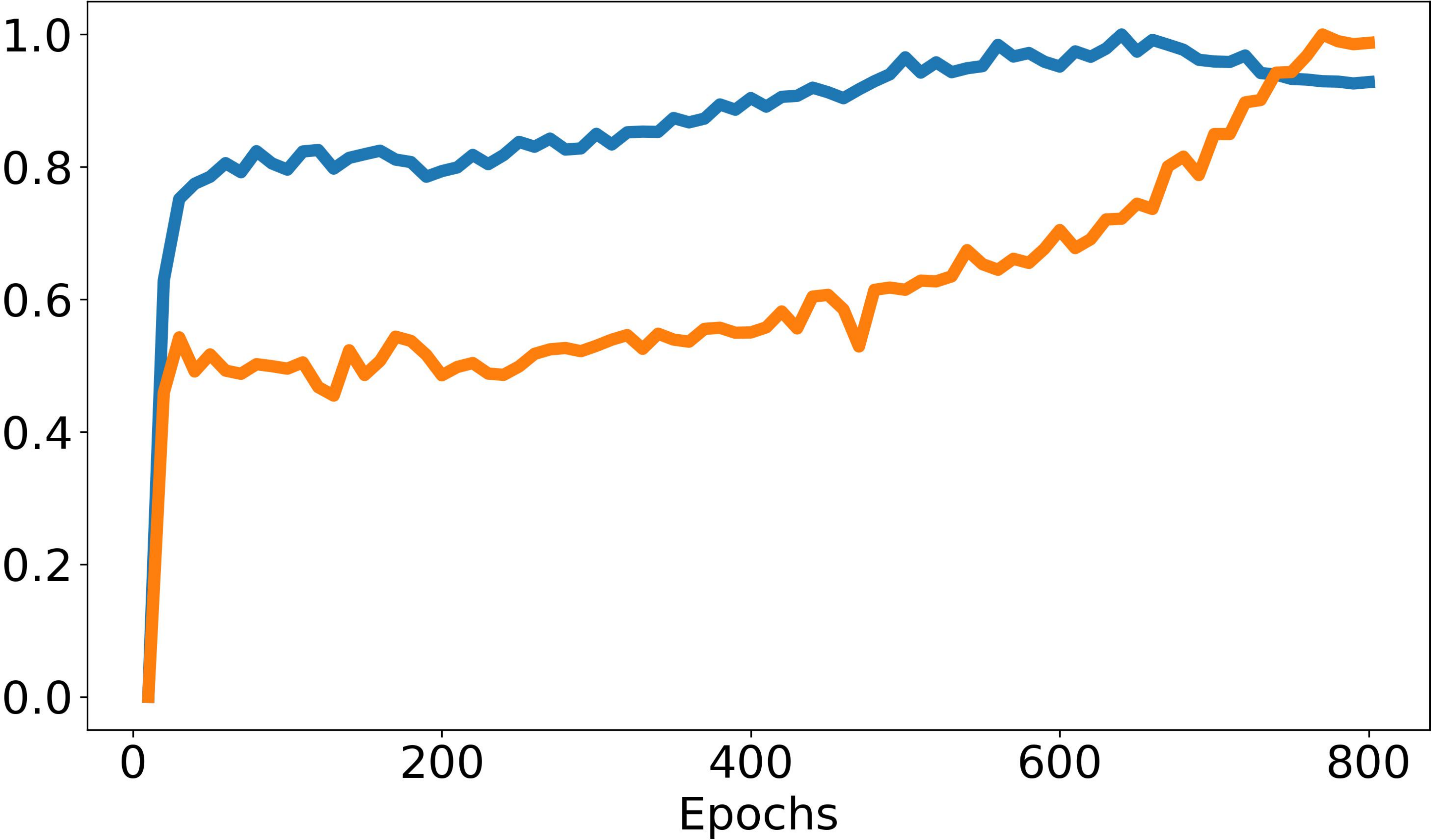}
      \caption{\mec}
    \end{subfigure}
    \begin{subfigure}{0.24\textwidth}
        \centering
        \includegraphics[width=\linewidth]{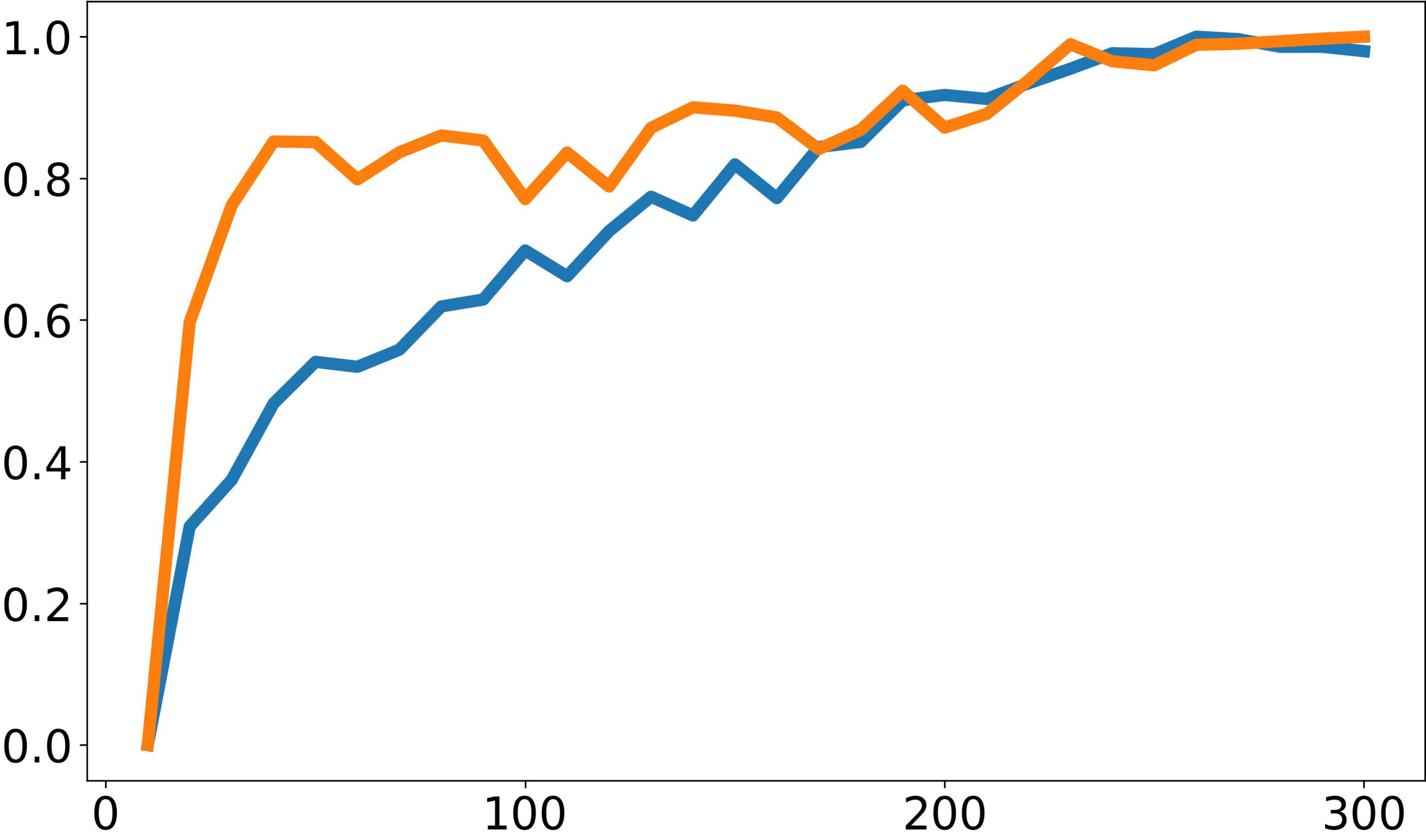}
        \caption{\vicregl}
    \end{subfigure}
    \hfill
    \begin{subfigure}{0.24\textwidth}
      \centering
      \includegraphics[width=\linewidth]{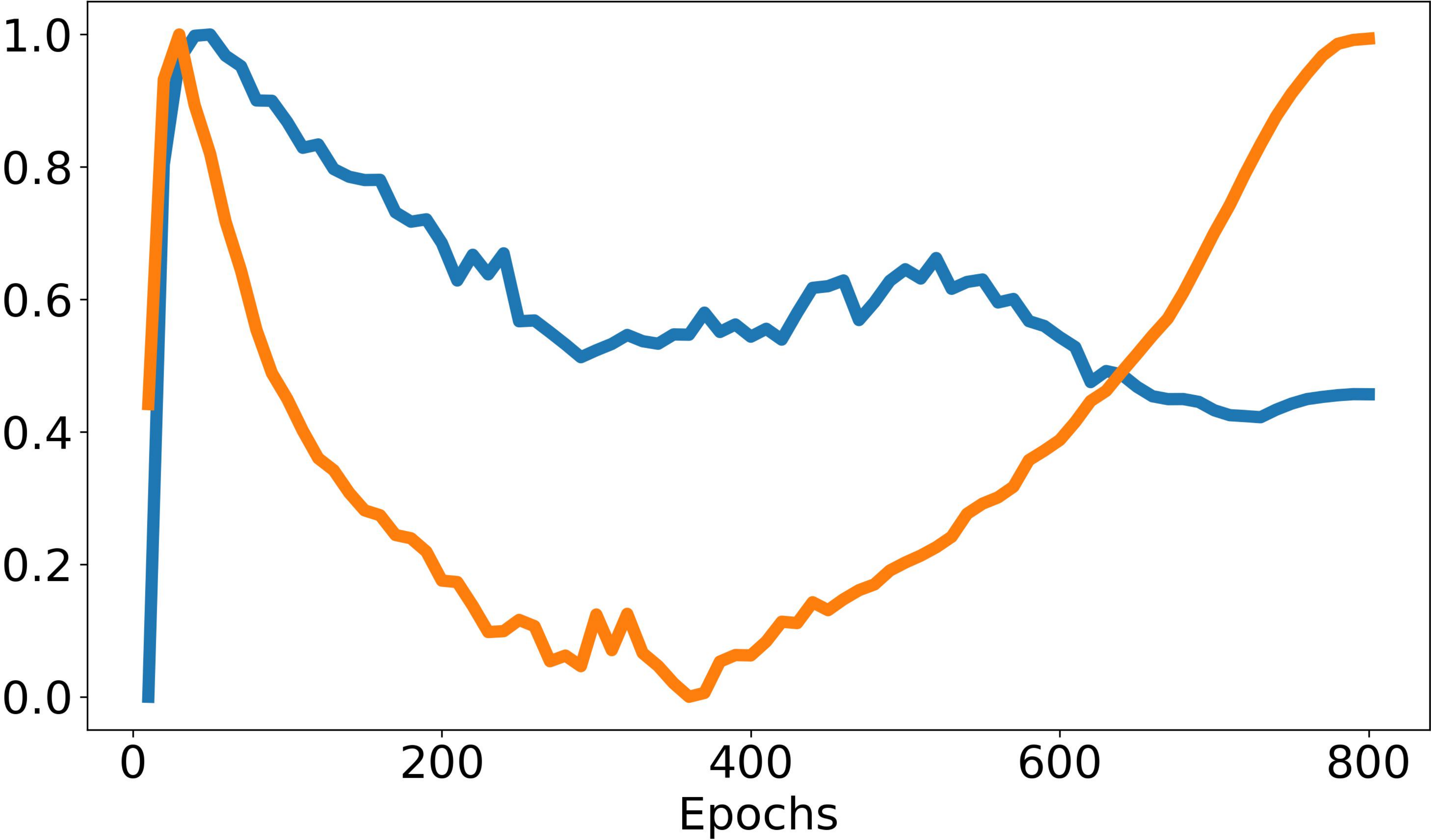}
      \caption{\dino}
    \end{subfigure}
    \hfill

    \begin{subfigure}{0.24\textwidth}
        \centering
        \includegraphics[width=\linewidth]{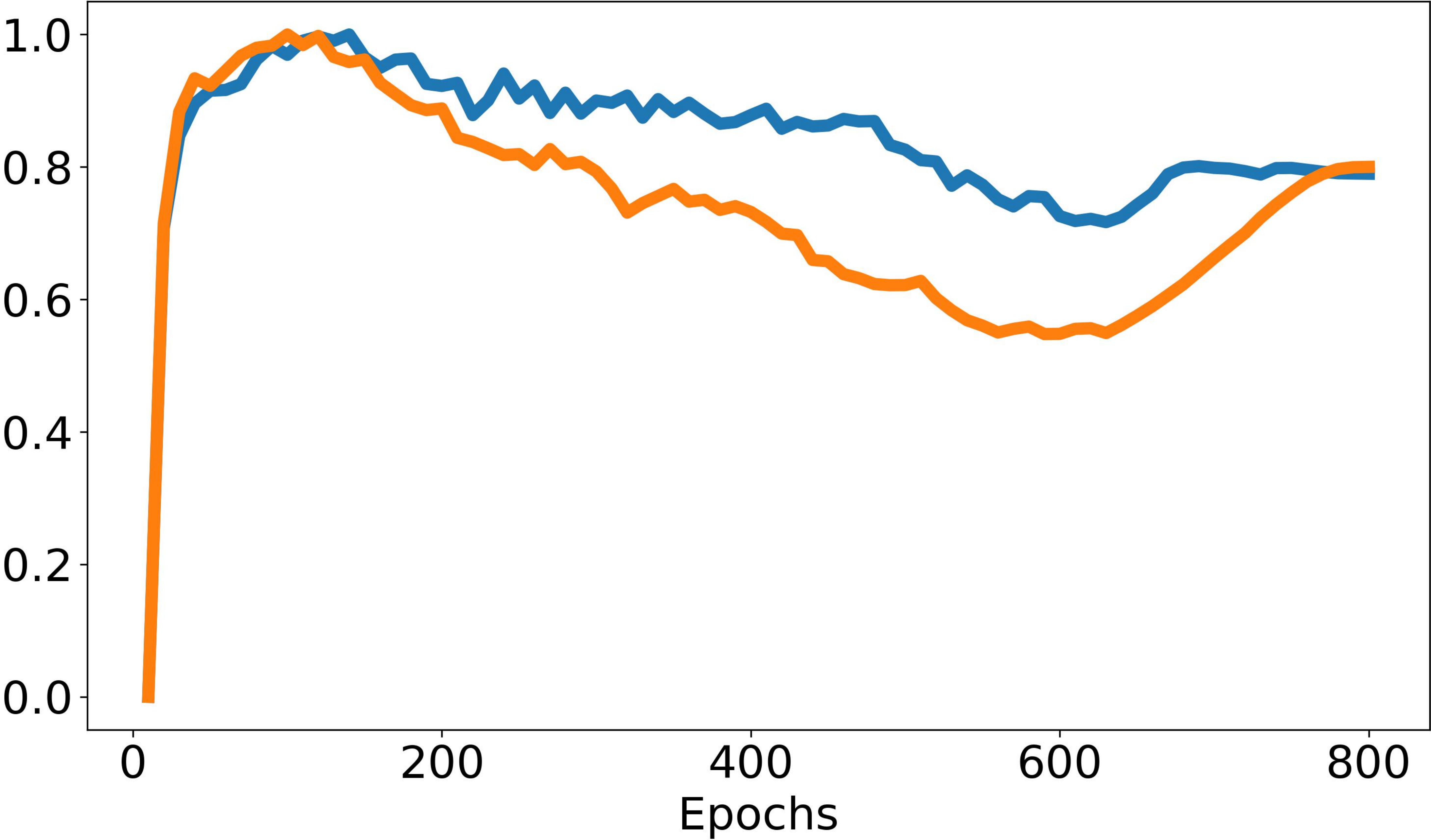}
        \caption{\ibot}
    \end{subfigure}
    \hfill
    \begin{subfigure}{0.24\textwidth}
      \centering
      \includegraphics[width=\linewidth]{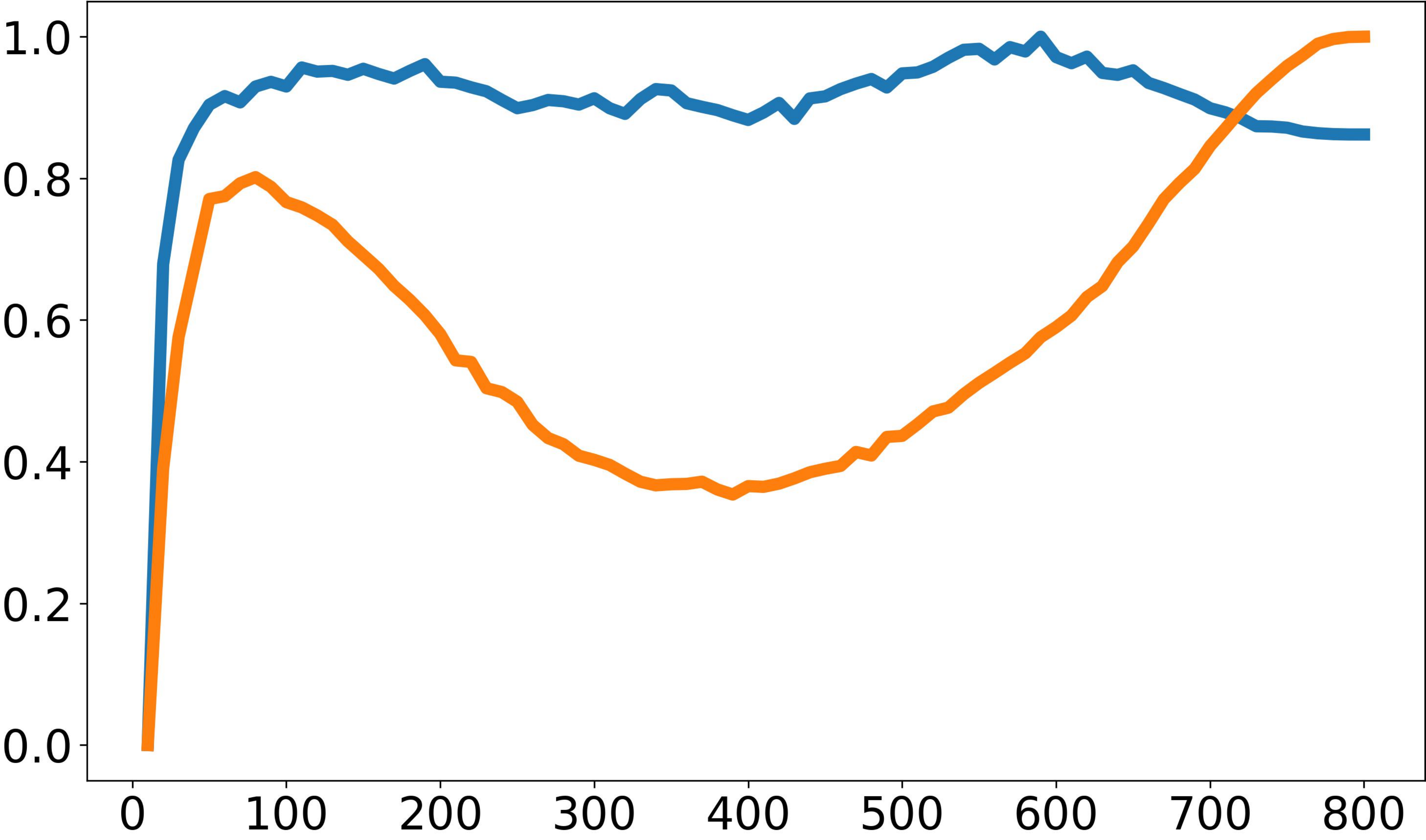}
      \caption{\mugs}
    \end{subfigure}
    \hfill
    \begin{subfigure}{0.24\textwidth}
        \centering
        \includegraphics[width=\linewidth]{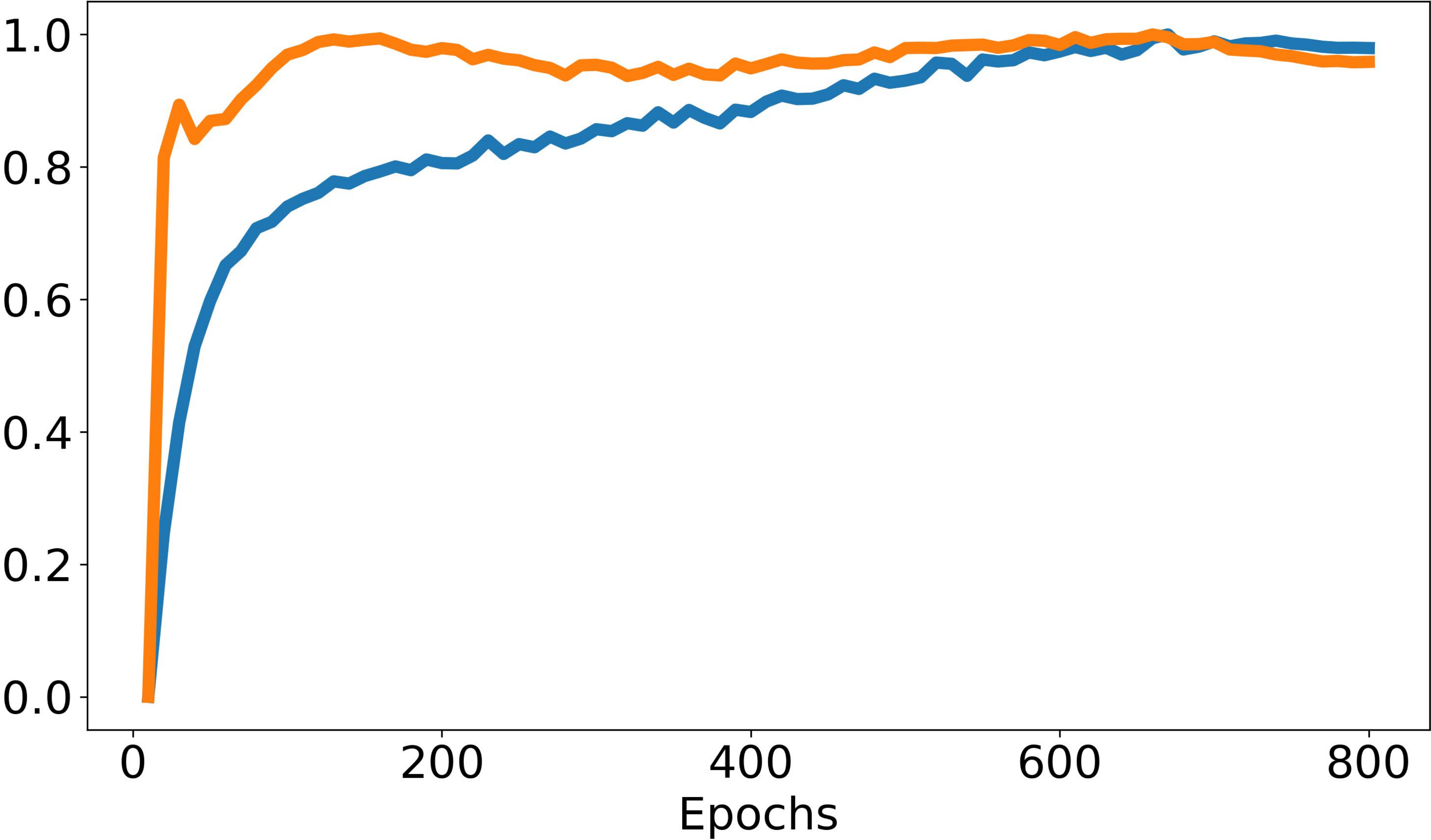}
        \caption{\mae}
    \end{subfigure}
    \hfill
    \begin{subfigure}{0.24\textwidth}
        \centering
        \includegraphics[width=\linewidth]{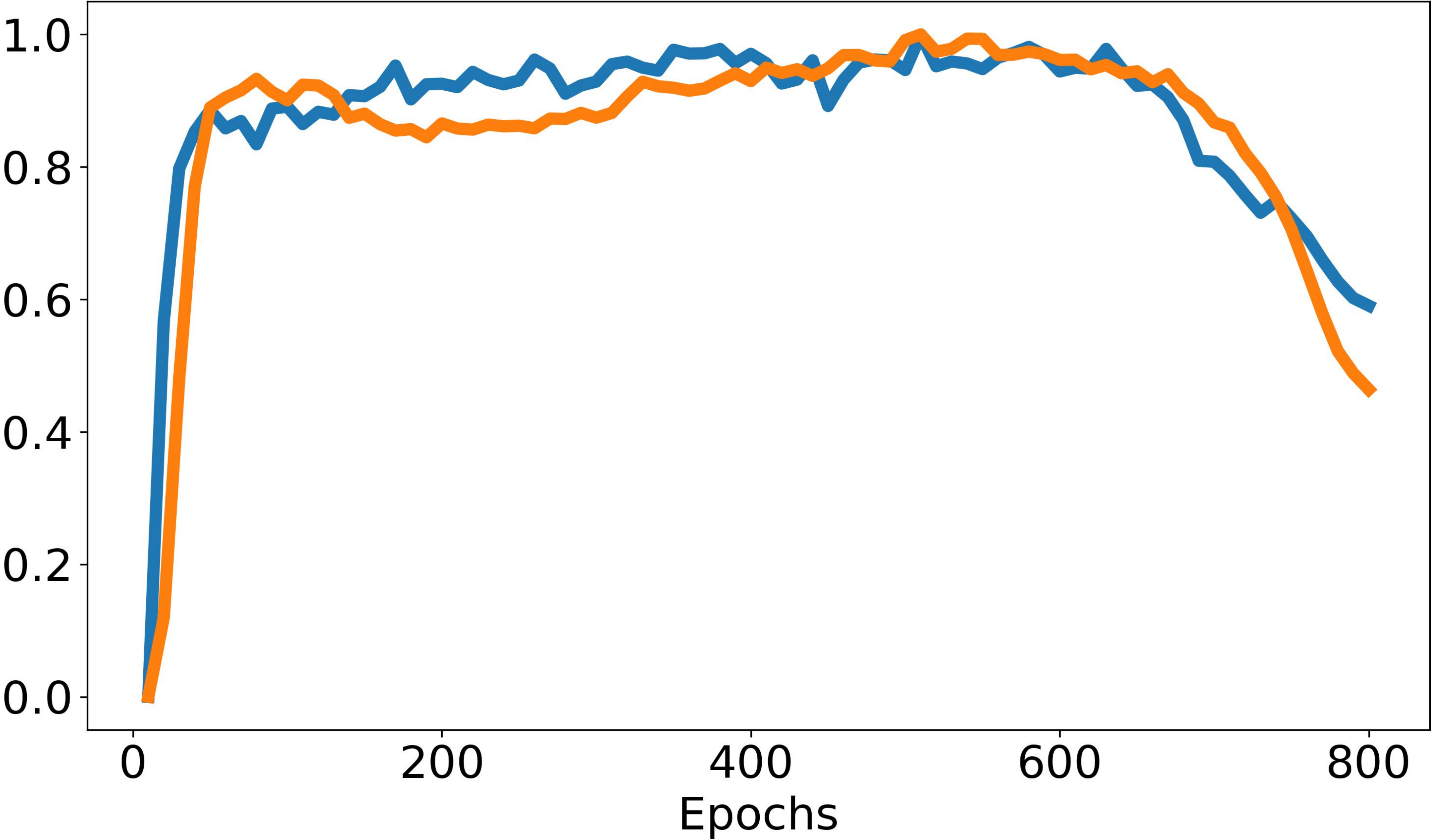}
        \caption{\ijepa}
    \end{subfigure}
    \caption{The proposed DSE metric precisely predicts the downstream performance.}
    \label{fig:metric_voc}
    \vspace{-10pt}
\end{figure*}
\textbf{Experiment Setup.}
To assess the effectiveness of the proposed DSE metric, we examine the correlation between DSE and downstream segmentation performance across sixteen SSL methods and four datasets. We leverage linear transfer learning as the downstream task, with settings remain the same as introduced in the previous section. To save time, only 2048 images ($\sim 0.16\%$) of the \textbf{training dataset} (ImageNet-1k) are used to compute the metric, and both the metric and dense performance are evaluated every 10 epochs. 

To validate the correlation between the proposed metric and downstream performance, the results are evaluated using Kendall's $\tau$ coefficient \cite{kendall1938new}. The coefficient ranges from $[-1,1]$. When $\tau = 1$, the metric is perfectly aligned with the downstream performance, and $\tau = -1$ represents that they are inversely correlated. We provide a detailed explanation in the App. \ref{app:setting}. 
\begin{table}[!t]
  \vspace{-10pt}
  \centering
  \caption{\textbf{Left: Kendall's $\tau$ coefficient of the DSE metric.} We denote the insignificant results with $^*$ ($p>0.05$), otherwise, the results are significant with $p < 0.005$. \textbf{Right: Comparison of Kendall's $\tau$ coefficients for different estimators.} Methods with $^\dagger$ are adapted to dense representations.}
  \label{table:kendall_comparison} 
  \resizebox{\linewidth}{!}{
    \begin{tabular}{l | c | c | c | c} 
      \toprule
      {Method} & {COCO} & {PVOC} & {ADE20k} & {Cityscapes} \\
      \midrule
        \moco & 0.55 & 0.58 & 0.60 & 0.45 \\
        \densecl & 0.70 & 0.81 & 0.52 & 0.63 \\
        \byol &0.61& 0.48& 0.79& 0.78\\
        \simsiam & 0.82 & 0.84 & 0.51 & 0.76 \\
        \esvit & 0.70 & 0.58 & 0.65 & 0.58 \\
        \mec & 0.68 & 0.65 & 0.15$^*$ & 0.63 \\
        \barlowtwins &0.55& 0.57& 0.53& 0.61\\
        \vicreg &0.73& 0.75& 0.74& 0.76\\
        \vicregl &0.72& 0.74& 0.72& 0.72\\
        \swav & 0.90 & 0.91 & 0.89 & 0.88 \\
        \dino & 0.00$^*$ & 0.07$^*$ & 0.15$^*$ & 0.42 \\
        \ibot & 0.68 & 0.64 & 0.49 & 0.07$^*$ \\
        \mugs &0.01$^*$& 0.47& 0.46& 0.20\\
        \resa &0.74& 0.72& 0.71& 0.64\\
        \mae & 0.38 & 0.44 & 0.46 & 0.21 \\
        \ijepa & 0.49 & 0.38 & 0.59 & 0.28 \\
      \bottomrule
    \end{tabular}
    \hspace{0.01\linewidth}
    \begin{tabular}{l | c | c | c | c | c | c} 
      \toprule
      {Estimator} & {Images $\downarrow$} & {COCO} & {VOC} & {ADE} & {City}& {Avg} \\
      \midrule
        $\alpha$-ReQ \cite{Agrawal2022ReQ} & 25600 &-0.07 & -0.05 & -0.05 & 0.09 & -0.02\\
        RankMe \cite{Garrido2023Rankme} & 25600 & -0.10 & -0.09 & -0.14 & 0.00 & -0.08 \\
        Lidar \cite{thilak2023lidar} & 10000 & -0.37 & -0.36 & -0.26 & -0.21 & -0.30 \\
        $\alpha$-ReQ$^\dagger$ \cite{Agrawal2022ReQ} & 2048 & 0.17 & 0.19 & 0.11 & 0.10 & 0.14 \\
        RankMe$^\dagger$ \cite{Garrido2023Rankme} & 2048 & 0.25 & 0.26 & 0.22 & 0.23 & 0.24\\
        Lidar$^\dagger$ \cite{thilak2023lidar} & 2048 & 0.38 & 0.37 & 0.33 & 0.23 & 0.33 \\
       \textbf{DSE (Ours)}& \textbf{2048} & \textbf{0.58} & \textbf{0.60} & \textbf{0.56} & \textbf{0.49} & \textbf{0.57} \\
      \bottomrule
    \end{tabular}
  }
  \vspace{-10pt}
\end{table}

\textbf{Analysis of the Empirical Results.} We draw two main conclusions from Fig. \ref{fig:metric_voc} and Tab. \ref{table:kendall_comparison}. \textbf{1) The DSE metric accurately reflects downstream performance.} The metric curve consistently aligns with downstream performance across different datasets and methods. Hypothesis tests yield an average Kendall's $\tau$ coefficient of 0.57, confirming the reliability of DSE. \textbf{2) Compared to existing estimators, DSE is better suited for dense tasks.} Current estimators are ineffective for dense performance evaluation due to the SDD phenomenon. Even if these estimators are adapted for dense representations (specifically, the adapted RankMe \cite{Garrido2023Rankme} is equivalent to our $M_{dim}$), DSE still significantly outperforms them. The advantage of DSE stems from its ability to more comprehensively characterize dense performance, as analyzed in detail in the following subsection. Additional empirical analyses, omitted here due to space constraints, are provided in Appendix \ref{app:metric}.

It is noteworthy that the DSE metric is theoretically derived from the class-relevance downstream tasks like semantic segmentation. To see if DSE can accurately predict the dense performance beyound segmentation, we present more results on depth estimation task in the Appendix \ref{app:dse_depth_estimation}.

\begin{wrapfigure}[12]{r}{0.5\columnwidth}
    \centering
    \vspace{-1em}
    \begin{tikzpicture}
        \begin{axis}[
            scale only axis,
            legend style={
                at={(0.5,1.05)}, 
                anchor=south,
                legend columns=3, 
                /tikz/every even column/.append style={column sep=0.5cm},
                font=\smaller, 
                draw=lightgray, 
                fill=white, 
                /pgf/number format/1000 sep={} 
            },
            legend cell align={left},
            xlabel={}, ylabel={}, 
            xmin=0, xmax=1, ymin=0, ymax=1, 
            axis lines=none, 
        ]
            \addlegendimage{color=matplotliborange, mark=none, line width=1pt}
            \addlegendentry{$M_{intra}$}
            \addlegendimage{color=matplotlibblue, mark=none, line width=1pt}
            \addlegendentry{$M_{inter}$}
            \addlegendimage{color=matplotlibgreen, mark=none, line width=1pt}
            \addlegendentry{$M_{dim}$}
            \addlegendimage{color=matplotlibred, mark=none, line width=1pt}
            \addlegendentry{$\text{DSE}$}
            \addlegendimage{color=matplotlibpurple, mark=none, line width=1pt}
            \addlegendentry{VOC mIoU}
        \end{axis}
    \end{tikzpicture}
    
    \begin{subfigure}{0.24\columnwidth}
        \centering
        \includegraphics[width=\linewidth]{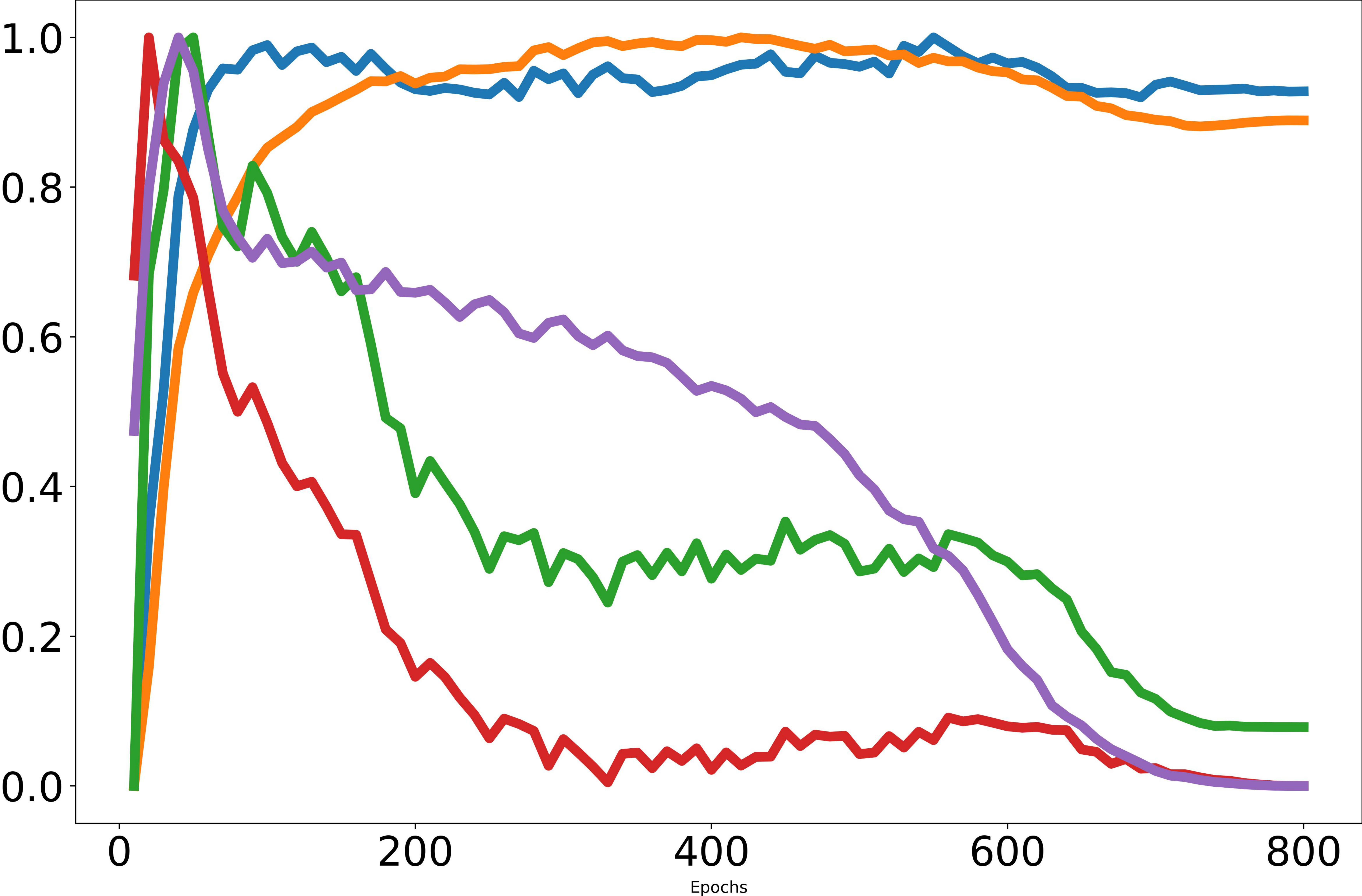}
        \caption{\moco}
    \end{subfigure}
    \hfill
    \begin{subfigure}{0.24\columnwidth}
        \centering
        \includegraphics[width=\linewidth]{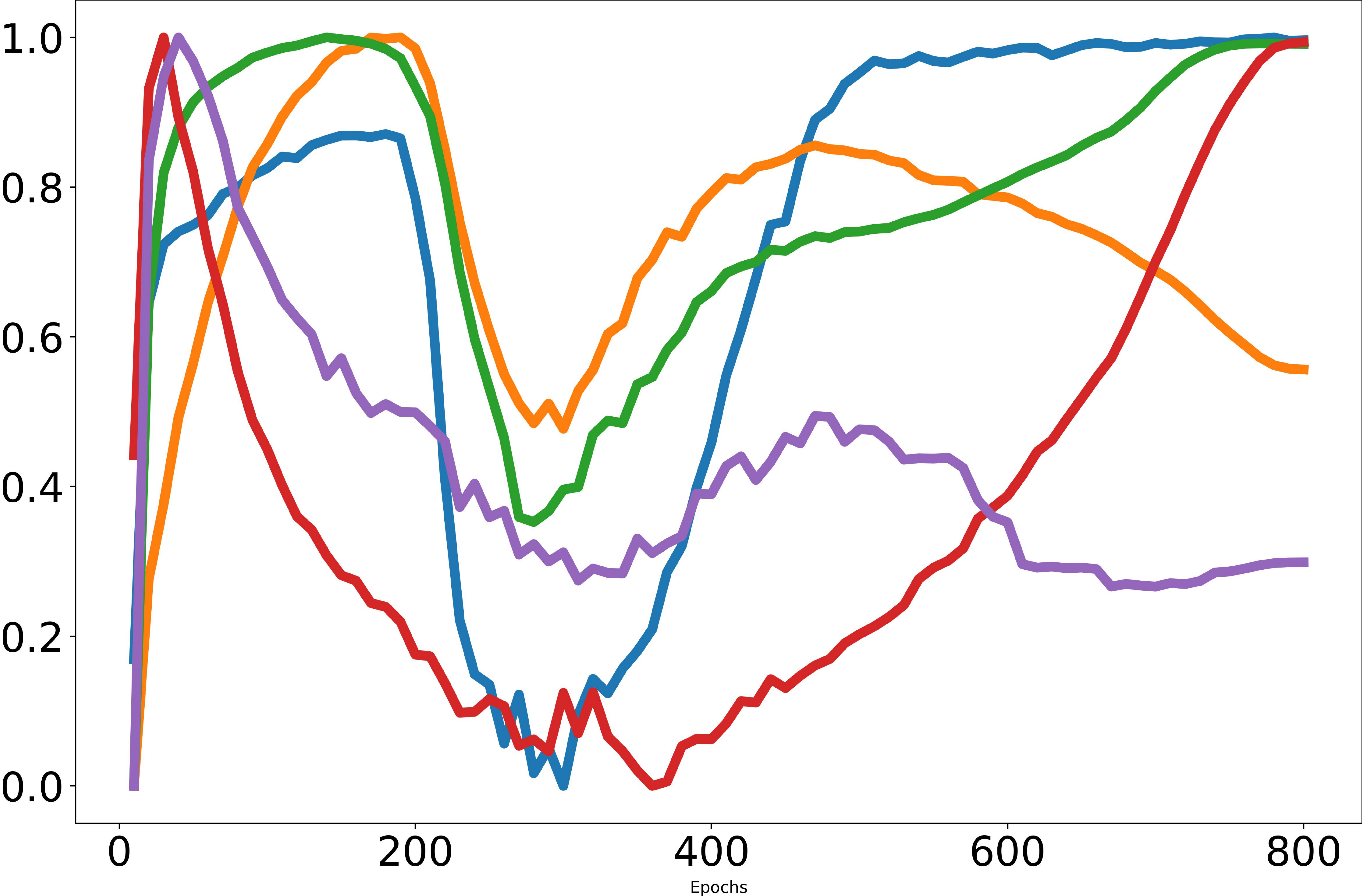}
        \caption{\dino}
    \end{subfigure}
    \caption{Different components of DSE metric.}
    \label{fig:understand_main}
\end{wrapfigure}
\subsection{DSE Metric Demystifies the SDD Phenomenon}
Based on these observations, we provide insights into the causes of the SDD phenomenon from two perspectives: 1) SDD arises from insufficient class separability, reduced effective dimensionality, or both. While SSL aims to balance semantic alignment and the effective dimensionality of representations, these two objectives often involve a trade-off. For instance, fully uniform representations have high effective dimensionality but may lack separability; conversely, focusing too much on semantic alignment can lead to dimension collapse. We argue that the degradation seen in different methods is due to the failure to maintain this trade-off during training, which causes a bias toward one objective. 2) The reasons for degradation depend specifically on the method. As illustrated in Fig. \ref{fig:understand_main}, MoCo v3's performance degradation is due to {\color{matplotlibgreen}dimensional collapse} in dense features, which reduces representation separability. Additionally, the unusual performance drop observed in DINO at around 300 epochs corresponds to a slower reduction in {\color{matplotliborange}intra-class distances} relative to {\color{matplotlibblue}inter-class distances}, decreasing overall class separability.

These insights also explain why DSE improves over other estimators. Although other estimators accurately measure effective dimensionality, they fail to predict collapses caused by reduced class separability. DSE metric addresses this by providing a more complete measure of dense performance.

\begin{table*}[t]
  \centering
  \caption{The overall performance of DSE-based model selection.} 
  \label{tab:model_selection}
  \resizebox{\linewidth}{!}{
    \begin{tabular}{ll | cc | cc | cc | cc} 
      \toprule
      \multirow{2}{*}{Method} & \multirow{2}{*}{Architecture} & \multicolumn{2}{c}{COCO-Stuff} & \multicolumn{2}{c}{PASCAL VOC} & \multicolumn{2}{c}{ADE20k} & \multicolumn{2}{c}{Cityscapes} \\
      & & mIoU & Acc & mIoU & Acc & mIoU & Acc & mIoU & Acc \\ 
      \midrule
        \moco & ViT-Small-16 & 15.1 & 53.9 & 5.9 & 75.5 & 3.9 & 49.7 & 24.9 & 83.5 \\
        \rowcolor{lightblue} +MS & ViT-Small-16 & 30.9~(+15.8) & 69.4~(+15.5) & 42.0~(+36.1) & 85.5~(+10.0) & 16.0~(+12.1) & 63.2~(+13.5) & 34.8~(+9.9) & 86.8~(+3.3) \\
        \densecl & ResNet-50 & 38.3 & 72.0 & 56.2 & 89.2 & 18.1 & 64.5 & 41.7 & 87.1 \\
        \rowcolor{lightblue} +MS & ResNet-50 & 39.7~(+1.4) & 72.8~(+0.8) & 56.9~(+0.7) & 89.5~(+0.3) & 20.5~(+2.4) & 66.0~(+1.5) & 43.4~(+1.7) & 87.8~(+0.7) \\
        \byol & ResNet-50 & 30.7 & 65.2 & 45.4 & 85.8 & 10.9 & 57.8 & 34.9 & 84.5 \\
        \rowcolor{lightblue} +MS & ResNet-50 & 37.1~(+6.4) & 70.3~(+5.1) & 51.1~(+5.7) & 87.9~(+2.1) & 18.7~(+7.8) & 63.5~(+5.7) & 42.2~(+7.3) & 87.3~(+2.8) \\
        \simsiam & ResNet-50 & 34.5 & 67.8 & 46.6 & 86.7 & 14.4 & 59.7 & 39.1 & 85.4 \\
        \rowcolor{lightblue} +MS & ResNet-50 & 35.0~(+0.5) & 68.1~(+0.3) & 47.0~(+0.4) & 86.9~(+0.2) & 15.6~(+1.2) & 60.5~(+0.8) & 39.3~(+0.2) & 85.6~(+0.2) \\
        \esvit & Swin-Tiny-7 & 33.4 & 66.3 & 54.3 & 87.6 & 19.4 & 61.3 & 47.8 & 88.9 \\
        \rowcolor{lightblue} +MS & Swin-Tiny-7 & 41.6~(+8.2) & 73.6~(+7.3) & 59.8~(+5.5) & 89.7~(+2.1) & 24.4~(+5.0) & 67.5~(+6.2) & 50.8~(+3.0) & 89.5~(+0.6) \\
        \mec & ResNet-50 & 35.4 & 67.8 & 48.0 & 87.1 & 16.3 & 60.3 & 39.8 & 85.4 \\
        \rowcolor{lightblue} +MS & ResNet-50 & 35.6~(+0.2) & 68.0~(+0.2) & 48.5~(+0.5) & 87.2~(+0.1) & 17.3~(+1.0) & 60.7~(+0.4) & 39.7~(-0.1) & 85.4~(+0.0) \\
        \barlowtwins & ResNet-50 & 36.9 & 69.0 & 51.4 & 88.1 & 19.6 & 62.4 & 42.3 & 86.8 \\
        \rowcolor{lightblue} +MS & ResNet-50 & 37.4~(+0.5) & 69.3~(+0.3) & 51.6~(+0.2) & 88.2~(+0.1) & 19.9~(+0.3) & 62.6~(+0.2) & 42.8~(+0.5) & 87.0~(+0.2) \\
        \vicreg & ResNet-50 & 36.7 & 69.2 & 52.7 & 88.1 & 19.6 & 62.5 & 42.5 & 86.7 \\
        \rowcolor{lightblue} +MS & ResNet-50 & 37.2~(+0.5) & 69.5~(+0.3) & 53.0~(+0.3) & 88.3~(+0.2) & 19.8~(+0.2) & 62.6~(+0.1) & 42.8~(+0.3) & 86.9~(+0.2) \\
        \vicregl & ResNet-50 & 38.1 & 71.5 & 54.5 & 88.7 & 20.4 & 64.6 & 44.6 & 87.9 \\
        \rowcolor{lightblue} +MS & ResNet-50 & 38.3~(+0.2) & 71.6~(+0.1) & 54.6~(+0.1) & 88.8~(+0.1) & 20.7~(+0.3) & 64.8~(+0.2) & 44.8~(+0.2) & 88.0~(+0.1) \\
        \swav & ResNet-50 & 40.8 & 72.5 & 55.6 & 89.0 & 22.1 & 65.6 & 47.6 & 88.7 \\
        \rowcolor{lightblue} +MS & ResNet-50 & 41.0~(+0.2) & 72.5~(+0.0) & 56.0~(+0.4) & 89.0~(+0.0) & 22.8~(+0.7) & 65.7~(+0.1) & 47.6~(+0.0) & 88.7~(+0.0) \\
        \dino & ViT-Small-16 & 36.0 & 69.7 & 45.8 & 87.3 & 19.2 & 64.8 & 44.3 & 89.1 \\
        \rowcolor{lightblue} +MS & ViT-Small-16 & 40.1~(+4.1) & 74.5~(+4.8) & 56.3~(+10.5) & 89.8~(+2.5) & 22.7~(+3.5) & 68.3~(+3.5) & 44.3~(+0.0) & 89.1~(+0.0) \\
        \ibot & ViT-Small-16 & 43.2 & 73.6 & 64.4 & 91.7 & 24.1 & 69.6 & 44.6 & 89.2 \\
        \rowcolor{lightblue} +MS & ViT-Small-16 & 45.4~(+2.2) & 76.5~(+2.9) & 66.8~(+2.4) & 92.5~(+0.8) & 27.5~(+3.4) & 71.5~(+1.9) & 46.5~(+1.9) & 89.7~(+0.5) \\
        \mae & ViT-Small-16 & 36.4 & 71.7 & 47.9 & 88.2 & 17.5 & 65.1 & 35.7 & 87.0 \\
        \rowcolor{lightblue} +MS & ViT-Small-16 & 36.7~(+0.3) & 71.9~(+0.2) & 48.9~(+1.0) & 88.2~(+0.0) & 18.2~(+0.7) & 65.5~(+0.4) & 37.2~(+1.5) & 87.6~(+0.6) \\
        \mugs & ViT-Small-16 & 43.7 & 74.9 & 67.6 & 92.3 & 28.6 & 70.7 & 44.9 & 89.2 \\
        \rowcolor{lightblue} +MS & ViT-Small-16 & 44.6~(+0.9) & 75.9~(+1.0) & 67.6~(+0.0) & 92.3~(+0.0) & 28.6~(+0.0) & 70.7~(+0.0) & 45.5~(+0.6) & 89.0~(-0.2) \\
        \resa & ResNet-50 & 36.2 & 68.2 & 49.1 & 87.0 & 18.2 & 61.2 & 38.8 & 84.7 \\
        \rowcolor{lightblue} +MS & ResNet-50 & 36.6~(+0.4) & 68.5~(+0.3) & 49.5~(+0.4) & 87.4~(+0.4) & 18.3~(+0.1) & 61.2~(+0.0) & 39.6~(+0.8) & 85.0~(+0.3) \\
        \ijepa & ViT-Base-16 & 34.0 & 68.6 & 52.6 & 88.3 & 17.9 & 63.4 & 36.2 & 86.3 \\
        \rowcolor{lightblue} +MS & ViT-Base-16 & 39.6~(+5.6) & 72.6~(+4.0) & 59.3~(+6.7) & 89.7~(+1.4) & 22.4~(+4.5) & 66.9~(+3.5) & 38.6~(+2.4) & 87.4~(+1.1) \\
        
      \bottomrule
    \end{tabular}
  }
\end{table*}

\subsection{DSE-based Approaches Effectively Mitigate the Negative Impact of SDD}
\begin{wraptable}{r}{0.6\linewidth} 
    \vspace{-12pt} 
    \centering
    \caption{Our approach achieves a performance gain comparable to supervised oracle, without requiring testing data or labels, and with negligible computational cost.}
    \label{tab:time}
    \scalebox{0.9}{
        \begin{tabular}{c|c|c|c|c}
            \toprule
            Estimator & Data & Label & $\Delta$ mIoU $\uparrow$ & GPU hours $\downarrow$ \\
            \midrule
            Loss &  &  & -1.0 & 0.0 \\
            Supervised & $\checkmark$ & $\checkmark$ & +3.6  & 2.43 \\
            \rowcolor{lightblue}DSE &  &  & +3.0 & 0.025 ($\sim 0.01 \times$) \\
            \bottomrule
        \end{tabular}
    }
    \vspace{-8pt}
\end{wraptable}

\textbf{DSE-based Model Selection is Accurate and Efficient.} We validate the effectiveness of our proposed model selection method on four benchmark datasets. As shown in Tab. \ref{tab:model_selection}, our model selection consistently improves mIoU and accuracy across all methods and datasets, achieving an average improvement of $3.0\%$ in mIoU. Compared to the previous state-of-the-art method, iBOT, our approach further improves the best mIoU by an average of $2.5\%$.

In Tab. \ref{tab:time}, we compare our model selection method with two baseline estimators: training loss and supervised downstream performance. Loss-based selection fails to track dense performance under the SDD phenomenon, and supervised selection is impractical due to high computational cost. In contrast, our DSE-based selection achieves competitive results with approximately $97.2\times$ speed-up, highlighting both efficiency and effectiveness.

\begin{figure}[t]
    \centering
    \vspace{-8pt}
    \begin{tikzpicture}
        \begin{axis}[
            scale only axis,
            legend style={
                at={(0.5,1.05)}, 
                anchor=south,
                legend columns=2, 
                /tikz/every even column/.append style={column sep=0.15cm},
                font=\smaller, 
                draw=lightgray, 
                fill=white, 
                /pgf/number format/1000 sep={} 
            },
            legend cell align={left},
            xlabel={}, ylabel={}, 
            xmin=0, xmax=1, ymin=0, ymax=1, 
            axis lines=none, 
        ]
            \addlegendimage{color=matplotliborange, mark=none, line width=1pt}
            \addlegendentry{With DSE Regularization}
            \addlegendimage{color=matplotlibblue, mark=none, line width=1pt}
            \addlegendentry{Baseline}
            
        \end{axis}
    \end{tikzpicture}

    \begin{subfigure}{0.49\textwidth}
        \centering
        \includegraphics[width=0.48\textwidth]{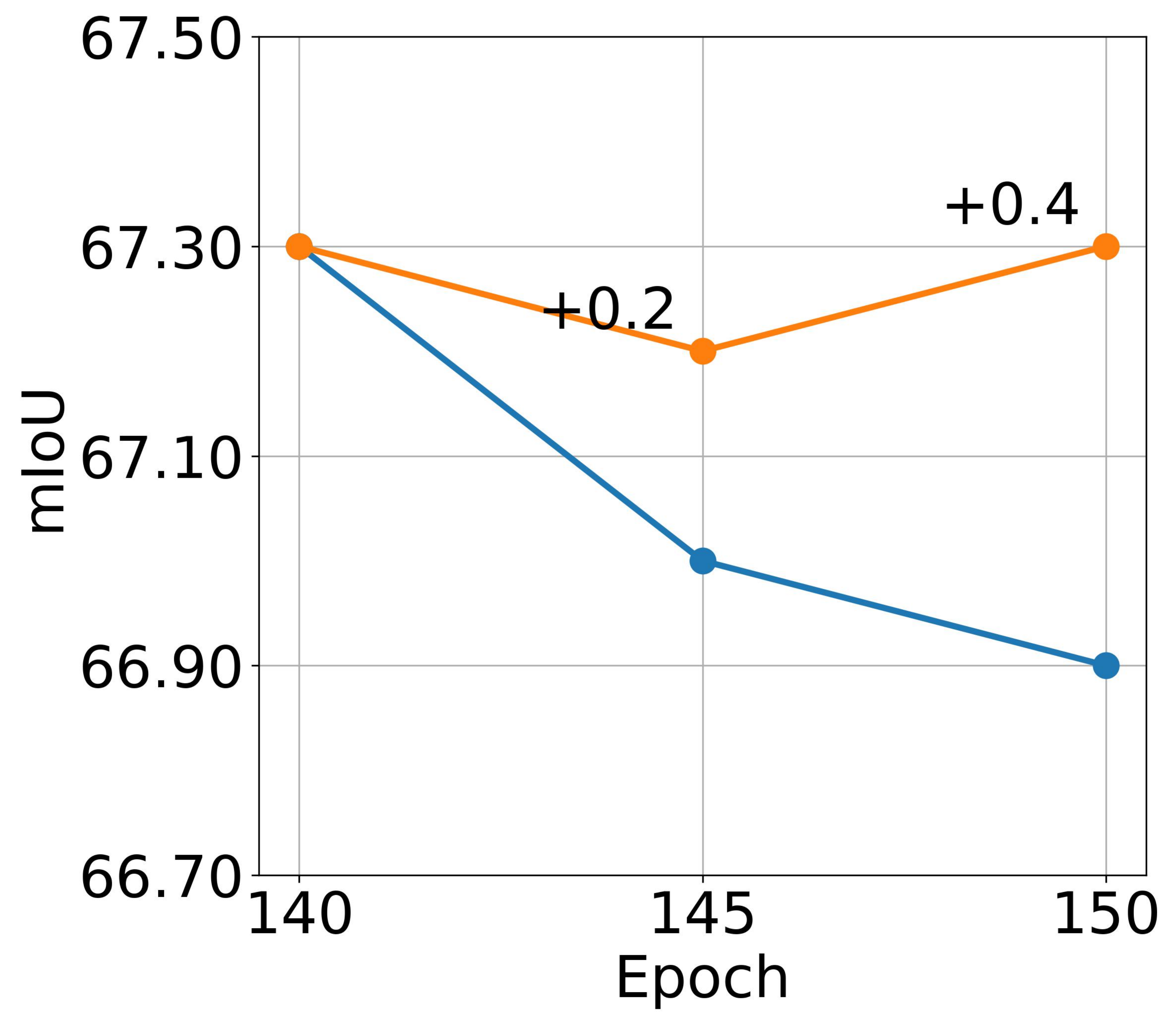}
        \hfill
        \includegraphics[width=0.48\textwidth]{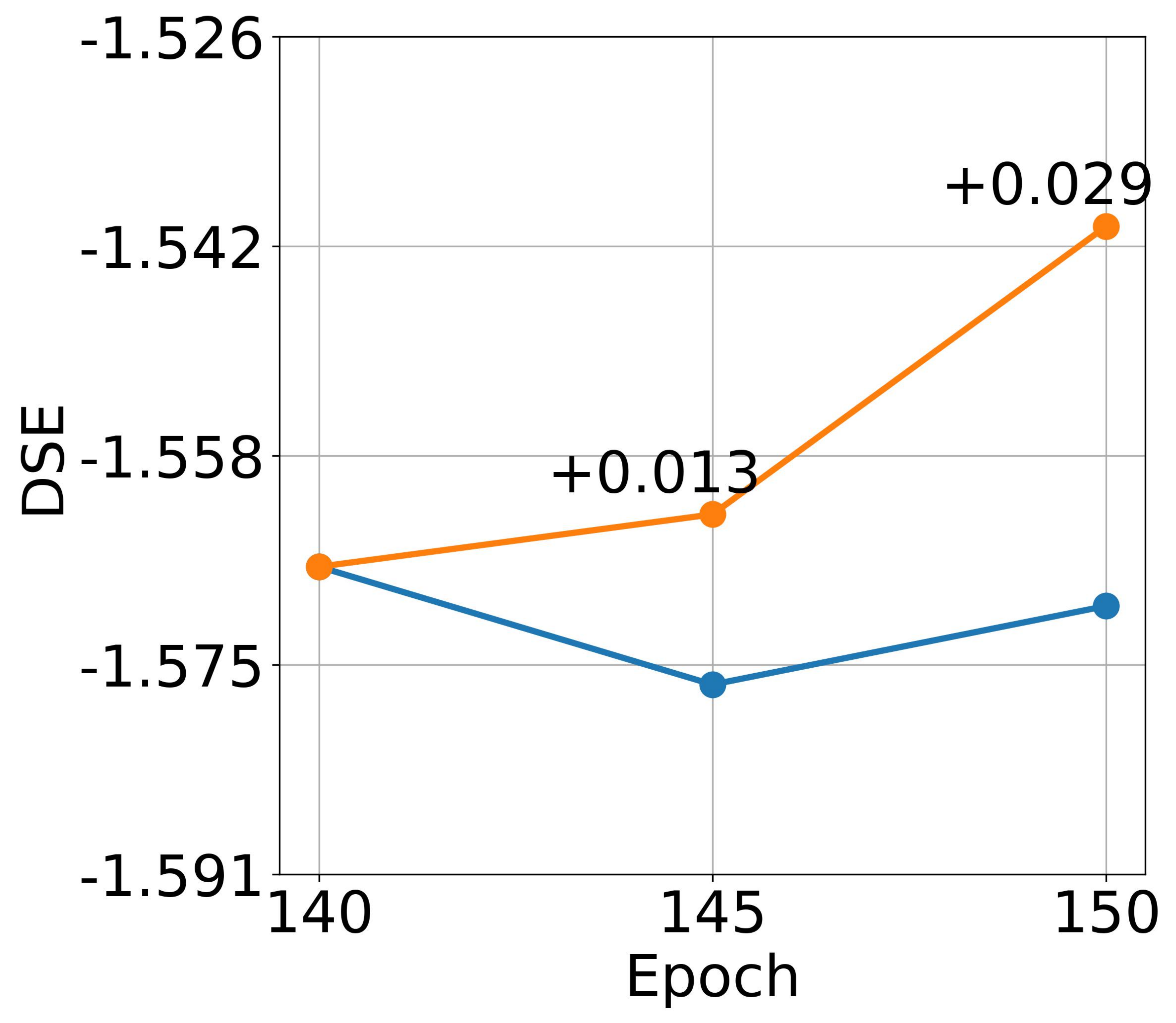}
        \caption{iBOT}
    \end{subfigure}
    \hfill
    \begin{subfigure}{0.49\textwidth}
        \centering
        \includegraphics[width=0.48\textwidth]{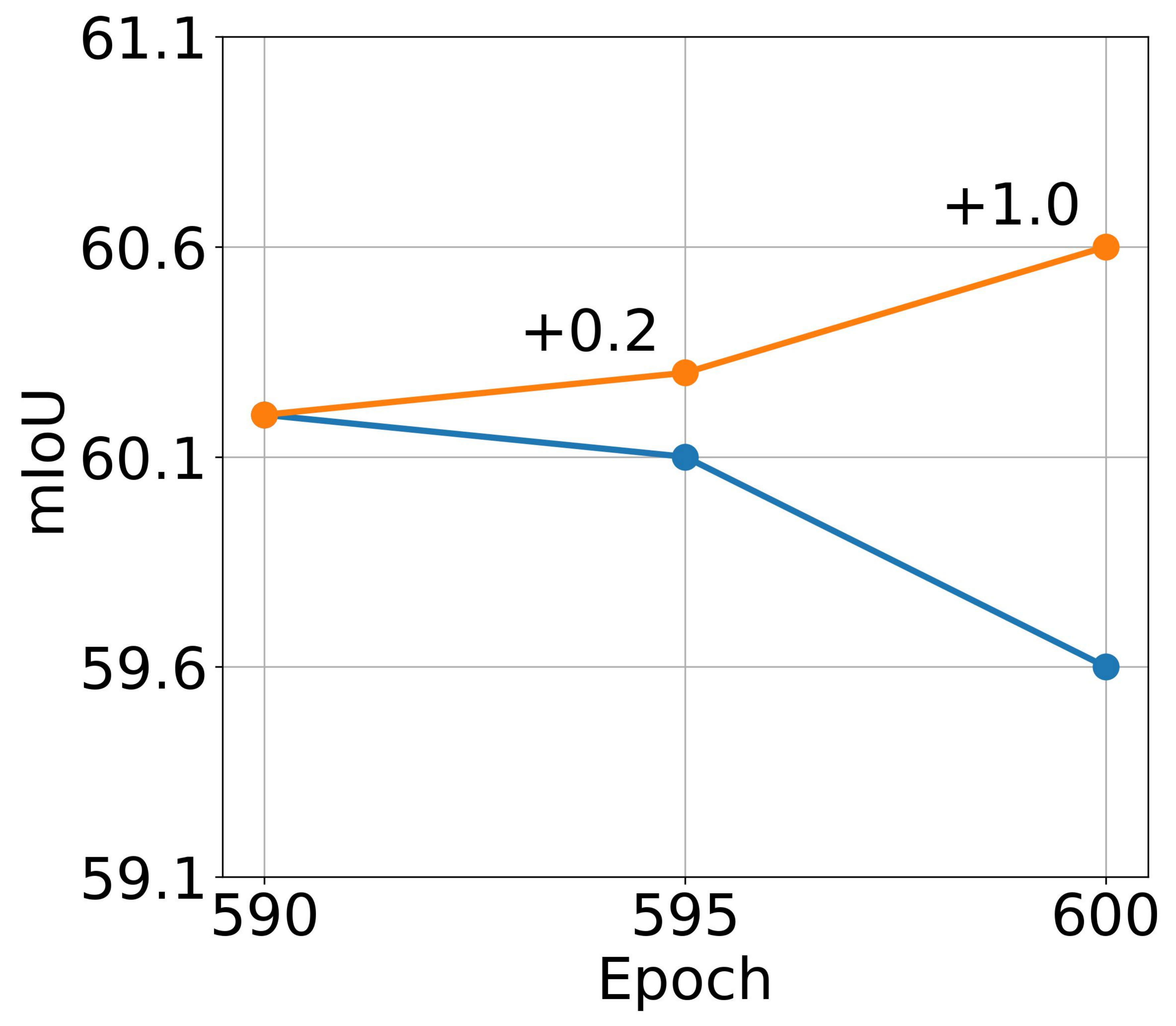}
        \hfill
        \includegraphics[width=0.48\textwidth]{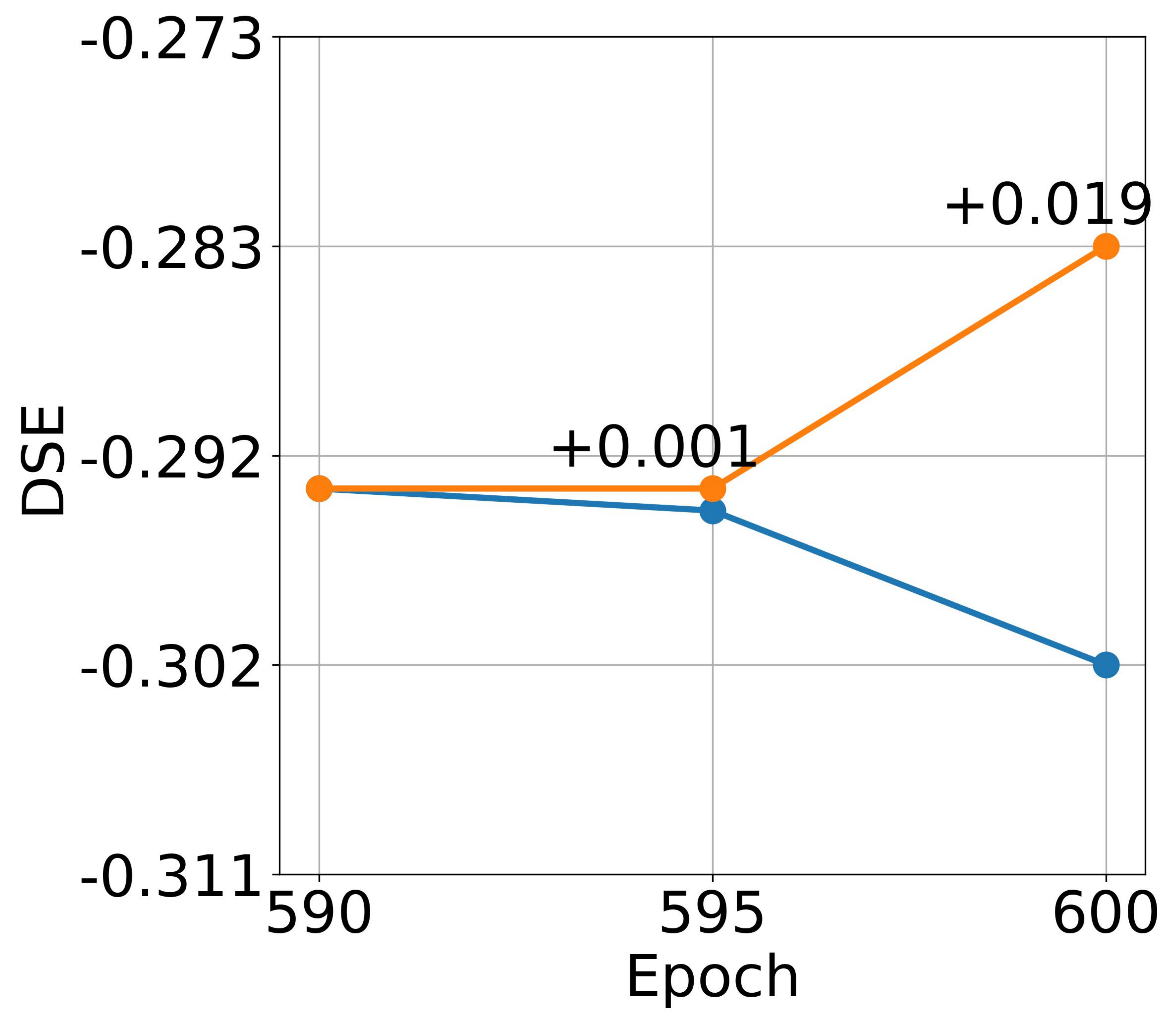}
        \caption{I-JEPA}
    \end{subfigure}
    
    
    \caption{Effect of DSE Regularization on iBOT \cite{ibot} and I-JEPA \cite{ijepa}.}
    \label{fig:regularization_main}
    \vspace{-12pt}
\end{figure}

\textbf{DSE Regularization Improves Dense Performance.} As shown in Fig. \ref{fig:regularization_main}, DSE regularization consistently enhances model performance and the DSE metric. More importantly, it reverses the trend of dense degradation, demonstrating its fundamental capability to mitigate the SDD phenomenon. Results for additional methods and component ablation studies are provided in the Appendix \ref{app:regularization}.

\section{Conclusion}
This paper identifies a widespread and detrimental phenomenon in self-supervised learning, termed Self-supervised Dense Degradation (SDD). Specifically, SDD occurs when dense task performance degrades over the course of training, causing the final checkpoint to exhibit a significant performance gap compared to the best intermediate checkpoint. To address SDD, we propose a Dense Representation Structure Estimator (DSE) that evaluates representation quality for downstream tasks by quantifying class separability and effective dimensionality. Our DSE is theoretically justified and empirically shown to correlate strongly with downstream task performance. To eliminate the harm of SDD, based on DSE, we introduce a checkpoint selection method for the off-the-shelf setting, and also optimize DSE directly as a regularizer. Experiments of sixteen SSL methods across four benchmark datasets confirm that these approaches effectively mitigate the negative effects of SDD.
\newpage
\section*{Acknowledgments}
This work was supported in part by National Natural Science Foundation of China: 62525212, 62236008, 62441232, U21B2038, and U23B2051, in part by Youth Innovation Promotion Association CAS, in part by the Strategic Priority Research Program of the Chinese Academy of Sciences, Grant No. XDB0680201, in part by the China National Postdoctoral Program for Innovative Talents, Grant No. BX20250377.
\begingroup
\small 
\bibliography{neurips_2025}
\endgroup
\newpage
\section*{NeurIPS Paper Checklist}

\begin{enumerate}

\item {\bf Claims}
    \item[] Question: Do the main claims made in the abstract and introduction accurately reflect the paper's contributions and scope?
    \item[] Answer: \answerYes{} 
    \item[] Justification: The contributions and scope of the paper are accurately summarized in the abstract and introduction.
    \item[] Guidelines:
    \begin{itemize}
        \item The answer NA means that the abstract and introduction do not include the claims made in the paper.
        \item The abstract and/or introduction should clearly state the claims made, including the contributions made in the paper and important assumptions and limitations. A No or NA answer to this question will not be perceived well by the reviewers. 
        \item The claims made should match theoretical and experimental results, and reflect how much the results can be expected to generalize to other settings. 
        \item It is fine to include aspirational goals as motivation as long as it is clear that these goals are not attained by the paper. 
    \end{itemize}

\item {\bf Limitations}
    \item[] Question: Does the paper discuss the limitations of the work performed by the authors?
    \item[] Answer: \answerYes{} 
    \item[] Justification: We discuss the limitations of the work in Appendix. \ref{app:discussions}.
    \item[] Guidelines:
    \begin{itemize}
        \item The answer NA means that the paper has no limitation while the answer No means that the paper has limitations, but those are not discussed in the paper. 
        \item The authors are encouraged to create a separate "Limitations" section in their paper.
        \item The paper should point out any strong assumptions and how robust the results are to violations of these assumptions (e.g., independence assumptions, noiseless settings, model well-specification, asymptotic approximations only holding locally). The authors should reflect on how these assumptions might be violated in practice and what the implications would be.
        \item The authors should reflect on the scope of the claims made, e.g., if the approach was only tested on a few datasets or with a few runs. In general, empirical results often depend on implicit assumptions, which should be articulated.
        \item The authors should reflect on the factors that influence the performance of the approach. For example, a facial recognition algorithm may perform poorly when image resolution is low or images are taken in low lighting. Or a speech-to-text system might not be used reliably to provide closed captions for online lectures because it fails to handle technical jargon.
        \item The authors should discuss the computational efficiency of the proposed algorithms and how they scale with dataset size.
        \item If applicable, the authors should discuss possible limitations of their approach to address problems of privacy and fairness.
        \item While the authors might fear that complete honesty about limitations might be used by reviewers as grounds for rejection, a worse outcome might be that reviewers discover limitations that aren't acknowledged in the paper. The authors should use their best judgment and recognize that individual actions in favor of transparency play an important role in developing norms that preserve the integrity of the community. Reviewers will be specifically instructed to not penalize honesty concerning limitations.
    \end{itemize}

\item {\bf Theory assumptions and proofs}
    \item[] Question: For each theoretical result, does the paper provide the full set of assumptions and a complete (and correct) proof?
    \item[] Answer: \answerYes{} 
    \item[] Justification: The assumptions are provided in Sec. \ref{sec:theoretical_analysis} and the proofs are provided in Appendix \ref{app:proof} .
    \item[] Guidelines:
    \begin{itemize}
        \item The answer NA means that the paper does not include theoretical results. 
        \item All the theorems, formulas, and proofs in the paper should be numbered and cross-referenced.
        \item All assumptions should be clearly stated or referenced in the statement of any theorems.
        \item The proofs can either appear in the main paper or the supplemental material, but if they appear in the supplemental material, the authors are encouraged to provide a short proof sketch to provide intuition. 
        \item Inversely, any informal proof provided in the core of the paper should be complemented by formal proofs provided in appendix or supplemental material.
        \item Theorems and Lemmas that the proof relies upon should be properly referenced. 
    \end{itemize}

    \item {\bf Experimental result reproducibility}
    \item[] Question: Does the paper fully disclose all the information needed to reproduce the main experimental results of the paper to the extent that it affects the main claims and/or conclusions of the paper (regardless of whether the code and data are provided or not)?
    \item[] Answer: \answerYes{} 
    \item[] Justification: All details are introduced in Sec. \ref{sec:estimation},.Sec. \ref{sec:method}, and Appendix \ref{app:setting}. The code is provided in the supplementary material.
    \item[] Guidelines:
    \begin{itemize}
        \item The answer NA means that the paper does not include experiments.
        \item If the paper includes experiments, a No answer to this question will not be perceived well by the reviewers: Making the paper reproducible is important, regardless of whether the code and data are provided or not.
        \item If the contribution is a dataset and/or model, the authors should describe the steps taken to make their results reproducible or verifiable. 
        \item Depending on the contribution, reproducibility can be accomplished in various ways. For example, if the contribution is a novel architecture, describing the architecture fully might suffice, or if the contribution is a specific model and empirical evaluation, it may be necessary to either make it possible for others to replicate the model with the same dataset, or provide access to the model. In general. releasing code and data is often one good way to accomplish this, but reproducibility can also be provided via detailed instructions for how to replicate the results, access to a hosted model (e.g., in the case of a large language model), releasing of a model checkpoint, or other means that are appropriate to the research performed.
        \item While NeurIPS does not require releasing code, the conference does require all submissions to provide some reasonable avenue for reproducibility, which may depend on the nature of the contribution. For example
        \begin{enumerate}
            \item If the contribution is primarily a new algorithm, the paper should make it clear how to reproduce that algorithm.
            \item If the contribution is primarily a new model architecture, the paper should describe the architecture clearly and fully.
            \item If the contribution is a new model (e.g., a large language model), then there should either be a way to access this model for reproducing the results or a way to reproduce the model (e.g., with an open-source dataset or instructions for how to construct the dataset).
            \item We recognize that reproducibility may be tricky in some cases, in which case authors are welcome to describe the particular way they provide for reproducibility. In the case of closed-source models, it may be that access to the model is limited in some way (e.g., to registered users), but it should be possible for other researchers to have some path to reproducing or verifying the results.
        \end{enumerate}
    \end{itemize}

\item {\bf Open access to data and code}
    \item[] Question: Does the paper provide open access to the data and code, with sufficient instructions to faithfully reproduce the main experimental results, as described in supplemental material?
    \item[] Answer: \answerYes{} 
    \item[] Justification: We use open-source datasets for evaluation, the code is provided in the supplementary material.
    \item[] Guidelines:
    \begin{itemize}
        \item The answer NA means that paper does not include experiments requiring code.
        \item Please see the NeurIPS code and data submission guidelines (\url{https://nips.cc/public/guides/CodeSubmissionPolicy}) for more details.
        \item While we encourage the release of code and data, we understand that this might not be possible, so “No” is an acceptable answer. Papers cannot be rejected simply for not including code, unless this is central to the contribution (e.g., for a new open-source benchmark).
        \item The instructions should contain the exact command and environment needed to run to reproduce the results. See the NeurIPS code and data submission guidelines (\url{https://nips.cc/public/guides/CodeSubmissionPolicy}) for more details.
        \item The authors should provide instructions on data access and preparation, including how to access the raw data, preprocessed data, intermediate data, and generated data, etc.
        \item The authors should provide scripts to reproduce all experimental results for the new proposed method and baselines. If only a subset of experiments are reproducible, they should state which ones are omitted from the script and why.
        \item At submission time, to preserve anonymity, the authors should release anonymized versions (if applicable).
        \item Providing as much information as possible in supplemental material (appended to the paper) is recommended, but including URLs to data and code is permitted.
    \end{itemize}

\item {\bf Experimental setting/details}
    \item[] Question: Does the paper specify all the training and test details (e.g., data splits, hyperparameters, how they were chosen, type of optimizer, etc.) necessary to understand the results?
    \item[] Answer: \answerYes{} 
    \item[] Justification: We following the default data splits for the datasets used. Other details are provided in Appendix \ref{app:setting}.
    \item[] Guidelines:
    \begin{itemize}
        \item The answer NA means that the paper does not include experiments.
        \item The experimental setting should be presented in the core of the paper to a level of detail that is necessary to appreciate the results and make sense of them.
        \item The full details can be provided either with the code, in appendix, or as supplemental material.
    \end{itemize}

\item {\bf Experiment statistical significance}
    \item[] Question: Does the paper report error bars suitably and correctly defined or other appropriate information about the statistical significance of the experiments?
    \item[] Answer: \answerNo{} 
    \item[] Justification: Though we report statistical significance tests in the paper, we do not report error bars due to the computational cost of the experiments.
    \item[] Guidelines:
    \begin{itemize}
        \item The answer NA means that the paper does not include experiments.
        \item The authors should answer "Yes" if the results are accompanied by error bars, confidence intervals, or statistical significance tests, at least for the experiments that support the main claims of the paper.
        \item The factors of variability that the error bars are capturing should be clearly stated (for example, train/test split, initialization, random drawing of some parameter, or overall run with given experimental conditions).
        \item The method for calculating the error bars should be explained (closed form formula, call to a library function, bootstrap, etc.)
        \item The assumptions made should be given (e.g., Normally distributed errors).
        \item It should be clear whether the error bar is the standard deviation or the standard error of the mean.
        \item It is OK to report 1-sigma error bars, but one should state it. The authors should preferably report a 2-sigma error bar than state that they have a 96\% CI, if the hypothesis of Normality of errors is not verified.
        \item For asymmetric distributions, the authors should be careful not to show in tables or figures symmetric error bars that would yield results that are out of range (e.g. negative error rates).
        \item If error bars are reported in tables or plots, The authors should explain in the text how they were calculated and reference the corresponding figures or tables in the text.
    \end{itemize}

\item {\bf Experiments compute resources}
    \item[] Question: For each experiment, does the paper provide sufficient information on the computer resources (type of compute workers, memory, time of execution) needed to reproduce the experiments?
    \item[] Answer: \answerYes{} 
    \item[] Justification: We provide the information on the computer resources in Appendix \ref{app:setting}.
    \item[] Guidelines:
    \begin{itemize}
        \item The answer NA means that the paper does not include experiments.
        \item The paper should indicate the type of compute workers CPU or GPU, internal cluster, or cloud provider, including relevant memory and storage.
        \item The paper should provide the amount of compute required for each of the individual experimental runs as well as estimate the total compute. 
        \item The paper should disclose whether the full research project required more compute than the experiments reported in the paper (e.g., preliminary or failed experiments that didn't make it into the paper). 
    \end{itemize}
    
\item {\bf Code of ethics}
    \item[] Question: Does the research conducted in the paper conform, in every respect, with the NeurIPS Code of Ethics \url{https://neurips.cc/public/EthicsGuidelines}?
    \item[] Answer: \answerYes{} 
    \item[] Justification: We follow the NeurIPS Code of Ethics.
    \item[] Guidelines:
    \begin{itemize}
        \item The answer NA means that the authors have not reviewed the NeurIPS Code of Ethics.
        \item If the authors answer No, they should explain the special circumstances that require a deviation from the Code of Ethics.
        \item The authors should make sure to preserve anonymity (e.g., if there is a special consideration due to laws or regulations in their jurisdiction).
    \end{itemize}

\item {\bf Broader impacts}
    \item[] Question: Does the paper discuss both potential positive societal impacts and negative societal impacts of the work performed?
    \item[] Answer: \answerNA{} 
    \item[] Justification: In this paper, we focus on foundational research for self-supervised learning, no societal impact of the work performed.
    \item[] Guidelines:
    \begin{itemize}
        \item The answer NA means that there is no societal impact of the work performed.
        \item If the authors answer NA or No, they should explain why their work has no societal impact or why the paper does not address societal impact.
        \item Examples of negative societal impacts include potential malicious or unintended uses (e.g., disinformation, generating fake profiles, surveillance), fairness considerations (e.g., deployment of technologies that could make decisions that unfairly impact specific groups), privacy considerations, and security considerations.
        \item The conference expects that many papers will be foundational research and not tied to particular applications, let alone deployments. However, if there is a direct path to any negative applications, the authors should point it out. For example, it is legitimate to point out that an improvement in the quality of generative models could be used to generate deepfakes for disinformation. On the other hand, it is not needed to point out that a generic algorithm for optimizing neural networks could enable people to train models that generate Deepfakes faster.
        \item The authors should consider possible harms that could arise when the technology is being used as intended and functioning correctly, harms that could arise when the technology is being used as intended but gives incorrect results, and harms following from (intentional or unintentional) misuse of the technology.
        \item If there are negative societal impacts, the authors could also discuss possible mitigation strategies (e.g., gated release of models, providing defenses in addition to attacks, mechanisms for monitoring misuse, mechanisms to monitor how a system learns from feedback over time, improving the efficiency and accessibility of ML).
    \end{itemize}
    
\item {\bf Safeguards}
    \item[] Question: Does the paper describe safeguards that have been put in place for responsible release of data or models that have a high risk for misuse (e.g., pretrained language models, image generators, or scraped datasets)?
    \item[] Answer: \answerNA{} 
    \item[] Justification: The paper poses no such risks.
    \item[] Guidelines:
    \begin{itemize}
        \item The answer NA means that the paper poses no such risks.
        \item Released models that have a high risk for misuse or dual-use should be released with necessary safeguards to allow for controlled use of the model, for example by requiring that users adhere to usage guidelines or restrictions to access the model or implementing safety filters. 
        \item Datasets that have been scraped from the Internet could pose safety risks. The authors should describe how they avoided releasing unsafe images.
        \item We recognize that providing effective safeguards is challenging, and many papers do not require this, but we encourage authors to take this into account and make a best faith effort.
    \end{itemize}

\item {\bf Licenses for existing assets}
    \item[] Question: Are the creators or original owners of assets (e.g., code, data, models), used in the paper, properly credited and are the license and terms of use explicitly mentioned and properly respected?
    \item[] Answer: \answerYes{} 
    \item[] Justification: We properly credit and mention the license and terms of use of the assets used in the paper.
    \item[] Guidelines:
    \begin{itemize}
        \item The answer NA means that the paper does not use existing assets.
        \item The authors should cite the original paper that produced the code package or dataset.
        \item The authors should state which version of the asset is used and, if possible, include a URL.
        \item The name of the license (e.g., CC-BY 4.0) should be included for each asset.
        \item For scraped data from a particular source (e.g., website), the copyright and terms of service of that source should be provided.
        \item If assets are released, the license, copyright information, and terms of use in the package should be provided. For popular datasets, \url{paperswithcode.com/datasets} has curated licenses for some datasets. Their licensing guide can help determine the license of a dataset.
        \item For existing datasets that are re-packaged, both the original license and the license of the derived asset (if it has changed) should be provided.
        \item If this information is not available online, the authors are encouraged to reach out to the asset's creators.
    \end{itemize}

\item {\bf New assets}
    \item[] Question: Are new assets introduced in the paper well documented and is the documentation provided alongside the assets?
    \item[] Answer: \answerYes{} 
    \item[] Justification: We provide the code in the supplementary material.
    \item[] Guidelines:
    \begin{itemize}
        \item The answer NA means that the paper does not release new assets.
        \item Researchers should communicate the details of the dataset/code/model as part of their submissions via structured templates. This includes details about training, license, limitations, etc. 
        \item The paper should discuss whether and how consent was obtained from people whose asset is used.
        \item At submission time, remember to anonymize your assets (if applicable). You can either create an anonymized URL or include an anonymized zip file.
    \end{itemize}

\item {\bf Crowdsourcing and research with human subjects}
    \item[] Question: For crowdsourcing experiments and research with human subjects, does the paper include the full text of instructions given to participants and screenshots, if applicable, as well as details about compensation (if any)? 
    \item[] Answer: \answerNA{} 
    \item[] Justification: The paper does not involve crowdsourcing nor research with human subjects.
    \item[] Guidelines:
    \begin{itemize}
        \item The answer NA means that the paper does not involve crowdsourcing nor research with human subjects.
        \item Including this information in the supplemental material is fine, but if the main contribution of the paper involves human subjects, then as much detail as possible should be included in the main paper. 
        \item According to the NeurIPS Code of Ethics, workers involved in data collection, curation, or other labor should be paid at least the minimum wage in the country of the data collector. 
    \end{itemize}

\item {\bf Institutional review board (IRB) approvals or equivalent for research with human subjects}
    \item[] Question: Does the paper describe potential risks incurred by study participants, whether such risks were disclosed to the subjects, and whether Institutional Review Board (IRB) approvals (or an equivalent approval/review based on the requirements of your country or institution) were obtained?
    \item[] Answer: \answerNA{} 
    \item[] Justification: The paper does not involve crowdsourcing nor research with human subjects.
    \item[] Guidelines:
    \begin{itemize}
        \item The answer NA means that the paper does not involve crowdsourcing nor research with human subjects.
        \item Depending on the country in which research is conducted, IRB approval (or equivalent) may be required for any human subjects research. If you obtained IRB approval, you should clearly state this in the paper. 
        \item We recognize that the procedures for this may vary significantly between institutions and locations, and we expect authors to adhere to the NeurIPS Code of Ethics and the guidelines for their institution. 
        \item For initial submissions, do not include any information that would break anonymity (if applicable), such as the institution conducting the review.
    \end{itemize}

\item {\bf Declaration of LLM usage}
    \item[] Question: Does the paper describe the usage of LLMs if it is an important, original, or non-standard component of the core methods in this research? Note that if the LLM is used only for writing, editing, or formatting purposes and does not impact the core methodology, scientific rigorousness, or originality of the research, declaration is not required.
    \item[] Answer: \answerNA{} 
    \item[] Justification: The paper does not involve LLMs as any important, original, or non-standard components. 
    \item[] Guidelines:
    \begin{itemize}
        \item The answer NA means that the core method development in this research does not involve LLMs as any important, original, or non-standard components.
        \item Please refer to our LLM policy (\url{https://neurips.cc/Conferences/2025/LLM}) for what should or should not be described.
    \end{itemize}

\end{enumerate}

\newpage
\appendix
\startcontents
\section*{Appendix Contents}
\printcontents{}{1}{\setcounter{tocdepth}{3}}

\newpage
\section{Proofs}
\label{app:proof}
\subsection{Proof of Proposition \ref{prop:kmeans}}
\label{app:proof_prop}
\begin{proposition}[Restate of Prop. \ref{prop:kmeans}]
Let \( \mathcal{Z} = \{\z_1, \z_2, \dots, \z_n\} \) be a set of points in \( \mathbb{R}^d \), and let \( \mathcal{C} = \{\c_1, \c_2, \dots, \c_k\} \) be the set of cluster centers obtained by the K-means algorithm. For each point \( \z \in \mathcal{Z} \), let \( \c(\z) \in \mathcal{C} \) denote the cluster center to which \( \z \) is assigned by $k$-means. Then, for every \( \z \in \mathcal{Z} \) and for all \( \c_i \in \mathcal{C} \) with \( \c_i \neq c(\z) \), the Euclidean distance satisfies
\[
\| \z - \c(\z) \|_2 \le \| \z - \c_i \|_2.
\]
As a result, the estimated error rate of the NN classifier is always $0$:
\[
\widetilde{\text{Err}}_\mathcal D(\ft) = \mathbb E_{\x \in \mathcal D} \left[\mathbb P_{\z \in \ft(\x)} \left[ ||\z - \c(\z)|| >  \min_{\c_i \ne \c(\z)}||\z - \c_{i}||\right]\right] \equiv0.
\] 
\end{proposition}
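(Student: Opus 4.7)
The plan is straightforward: the proposition is essentially a direct consequence of the defining optimality condition of the k-means assignment, so the proof amounts to unwinding definitions rather than carrying out any nontrivial estimate. First, I would invoke the fact that for any centroid set $\mathcal{C}$ produced by k-means, the assignment rule is by construction $\c(\z) = \arg\min_{\c \in \mathcal{C}} \|\z - \c\|_2$. This immediately yields the first inequality $\|\z - \c(\z)\|_2 \le \|\z - \c_i\|_2$ for every $\c_i \ne \c(\z)$, without any further argument. No convergence property of k-means is needed; only the assignment step's optimality is used.

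For the second claim about the estimated error rate, I would identify the pseudo-labels used by the NN classifier with the k-means cluster indices, so that the class centers $\m_k$ in $G(\z)$ coincide with the k-means centroids $\c_k$. Under this identification, the NN classifier reduces to $G(\z) = \arg\min_k \|\z - \c_k\|_2 = \c(\z)$, which by definition equals the pseudo-label $y(\z) = \c(\z)$. Hence the event $\{y(\z) \ne G(\z)\}$ is empty for every $\z$ in the support, and $\widetilde{\text{Err}}_\mathcal{D}(\ft) \equiv 0$ as claimed.

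There is no genuine technical obstacle: the only minor point requiring care is tie-breaking when two centroids are equidistant from $\z$, which is resolved by fixing any consistent rule shared between the k-means assignment and the NN classifier. The substantive content of the proposition is conceptual rather than technical — it exhibits precisely why instance-wise distance comparisons are degenerate under k-means pseudo-labels (every point is, tautologically, closer to its assigned centroid than to any other), thereby motivating the switch from the instance-wise condition in Eq.~\ref{eq:instance} to the class-wise radius formulation in Eq.~\ref{eq:radius}.
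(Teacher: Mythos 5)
Your proposal is correct and follows essentially the same route as the paper's proof: both rest on the observation that the k-means assignment rule is by definition the nearest-centroid rule, so the instance-wise inequality holds tautologically and the estimated error vanishes identically (the paper phrases the first step as a short proof by contradiction, which is cosmetically different but logically identical). Your added remarks on tie-breaking and on identifying the class centers $\m_k$ with the k-means centroids are consistent with the paper's implicit assumptions and require no further justification.
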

\begin{proof}
By the definition of the K-means clustering algorithm, each point \( \z \in \mathcal{Z} \) is assigned to the cluster with the nearest center. Specifically, \( \c(\z) \) is chosen to minimize the squared Euclidean distance to \( \z \) among all cluster centers in \( \mathcal{C} \). Formally,
\[
\c(\z) = \arg\min_{\c_j \in \mathcal{C}} \| \z - \c_j \|_2^2.
\]
Assume, for contradiction, that there exists a point \( \z \in \mathcal{Z} \) and a cluster center \( \c_i \in \mathcal{C} \) with \( \c_i \neq \c(\z) \) such that
\[
\| \z - \c(\z) \|_2 > \| \z - \c_i \|_2.
\]
Squaring both sides (since the Euclidean distance is non-negative), we obtain
\[
\| \z - \c(\z) \|_2^2 > \| \z - \c_i \|_2^2.
\]
However, this contradicts the definition of \( \c(\z) \) as the cluster center that minimizes the squared distance to \( \z \). Therefore, our assumption must be false, and it must hold that
\[
\| \z - \c(\z) \|_2 \le \| \z - \c_i \|_2
\]
for all \( \c_i \in \mathcal{C} \) with \( \c_i \neq \c(\z) \). It yields that:
\[
\mathbb P_{\z \in \ft(\x)} \left[ ||\z - \c(\z)|| > \min_{\c_i \ne \c(\z)}||\z - \c_{i}||\right] \equiv 0.
\]
This completes the proof.
\end{proof}

\subsection{Proof of Theorem \ref{thm:main}}
\textbf{Proof Scratch.} We first present a proof scratch for better understanding. The idea is straightforward: by assuming representation in each class is $R$-sub-Gaussian, by the property of concentration, the representations lie in the main part of the distribution with probability of at least $1-\delta$ for any $\delta > 0$. If the radius is smaller than the minimal inter-class distance, all the representations in the main part would be correctly classified, leading to a final error rate smaller than $1-\delta$.
\subsubsection{Preliminaries} 
We first list some important properties used in the derivation.
\begin{lemma}[Sub-Gaussian Property]
\label{lem:sub-gaussian}
A random variable $X$ is $R$-sub-Gaussian if its moment generating function satisfies:
\[
    \mathbb E[\exp(\lambda X)] \le \exp \left(\frac{\lambda^2R^2}{2}\right), \quad \forall \lambda \in \mathbb R.
\]
\end{lemma}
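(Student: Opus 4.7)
The statement labeled Lemma \ref{lem:sub-gaussian} is, in most standard treatments, taken as one of several equivalent definitions of an $R$-sub-Gaussian random variable rather than a theorem per se. My plan is therefore to first clarify which characterization is being taken as primitive, and then either (i) state this as a definition with no proof required, or (ii) derive the MGF bound from an equivalent characterization. Since this appears inside a preliminaries subsection whose function is to support Thm.\ \ref{thm:main}, the cleanest route is option (ii): start from the tail-bound characterization, which is the most commonly stated primitive, and derive the MGF bound as an immediate consequence.

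Concretely, I would assume without loss of generality that $\mathbb{E}[X]=0$ (otherwise shift $X$ and fold the shift into the constant), and take as given the tail inequality
\[
\mathbb{P}(|X|\ge t) \le 2\exp\!\left(-\frac{t^2}{2R^2}\right), \qquad t\ge 0.
\]
From here the plan is a two-line computation: write $\mathbb{E}[e^{\lambda X}] = 1 + \int_0^\infty \lambda e^{\lambda t}\,\mathbb{P}(X\ge t)\,dt - \int_0^\infty \lambda e^{-\lambda t}\,\mathbb{P}(X\le -t)\,dt$ via the layer-cake formula, substitute the sub-Gaussian tail bound, and complete the square in the exponent. The Gaussian integral that remains evaluates exactly to $\exp(\lambda^2 R^2/2)$ up to absolute constants, yielding the claimed MGF bound (possibly after absorbing a universal factor into $R$). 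An even slicker route is Taylor expansion: expand $e^{\lambda X}$, bound the even moments using $\mathbb{E}[X^{2k}]\le (2R^2)^k\, k!$ (which itself follows from the tail bound by integration), and observe that the resulting series is dominated term-by-term by the Taylor expansion of $\exp(\lambda^2 R^2/2)$.

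The main obstacle, if one cares about sharp constants, is that different sources use slightly different normalizations for $R$ (proxy variance versus tail parameter versus Orlicz $\psi_2$ norm), and the constant $1/2$ in the exponent is only exact under the proxy-variance convention. For the purposes of Thm.\ \ref{thm:main}, however, only the order of the constant matters, so I would simply note the equivalence of characterizations up to absolute constants and refer the reader to a standard text (e.g.\ Vershynin's \emph{High-Dimensional Probability}, Prop.\ 2.5.2) for the detailed equivalence proof. In the final writeup I would likely just restate the property, attribute it to a standard reference, and move directly to the application in the subsequent concentration argument for $Z_c^{y(\z)}$.
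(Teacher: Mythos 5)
Your reading is correct and matches the paper's own treatment: the paper states this lemma as the standard MGF characterization of an $R$-sub-Gaussian variable and offers no proof, treating it as a definition/standard fact exactly as you propose (option (i), with a reference such as Vershynin sufficing). Your optional derivation from the tail bound is fine up to absolute constants, but nothing more than the bare statement and a citation is needed here.
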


\begin{lemma}[Sub-Exponential Norm Bound]
\label{lem:sub-exponential}
If $X\in\mathbb R^d$ is an $R$-sub-Gaussian vector with independent coordinates, then $||X||^2 = \sum_{i=1}^dX_i^2$ is sub-exponential. Specifically, the sub-exponential norm $\|X^2\|_{\varphi_1}$ satisfies:
\[
    \|X^2\|_{\varphi_1} \le CR^2d,
\]
where $C>0$ is an absolute constant.
\end{lemma}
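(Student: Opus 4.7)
The plan is to reduce the claim to two well-known facts from high-dimensional probability: (i) the square of a sub-Gaussian scalar is sub-exponential, with a clean norm relationship, and (ii) the Orlicz norm $\|\cdot\|_{\psi_1}$ satisfies the triangle inequality. Given these, the assertion $\|\|X\|^2\|_{\psi_1} \le C R^2 d$ will follow by simply summing $d$ per-coordinate bounds. Independence of the coordinates is not actually needed for this particular inequality; it is stated because it is needed downstream in Bernstein-type concentration, but my proof will only invoke the triangle inequality.

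First I would recall the Orlicz-norm characterization: a random variable $Y$ is sub-exponential iff $\|Y\|_{\psi_1} := \inf\{t>0: \mathbb{E}\exp(|Y|/t)\le 2\}<\infty$, and $X$ is $R$-sub-Gaussian (in the MGF sense of Lemma~\ref{lem:sub-gaussian}) iff $\|X\|_{\psi_2}:=\inf\{t>0:\mathbb{E}\exp(X^2/t^2)\le 2\}\le c_0 R$ for an absolute constant $c_0$; this equivalence is standard (e.g. Vershynin, Prop.~2.5.2). Next I would invoke the identity
\[
\|Y^2\|_{\psi_1} \;=\; \|Y\|_{\psi_2}^{\,2},
\]
which follows directly from the two Orlicz-norm definitions by substituting $t\mapsto t^2$ in $\mathbb{E}\exp(Y^2/t^2)\le 2$. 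Applied coordinatewise, each $R$-sub-Gaussian $X_i$ yields
\[
\|X_i^2\|_{\psi_1} \;\le\; \|X_i\|_{\psi_2}^{\,2} \;\le\; c_0^2 R^2.
\]

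Then I would conclude by the triangle inequality for the Orlicz norm. Since $\|X\|^2 = \sum_{i=1}^d X_i^2$, we obtain
\[
\bigl\| \|X\|^2 \bigr\|_{\psi_1} \;\le\; \sum_{i=1}^d \|X_i^2\|_{\psi_1} \;\le\; d \cdot c_0^2 R^2 \;=\; C R^2 d,
\]
with $C := c_0^2$. This delivers the stated bound.

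The only genuinely nontrivial step is verifying the equivalence between the MGF-style definition of sub-Gaussianity used in Lemma~\ref{lem:sub-gaussian} and the Orlicz-norm $\psi_2$ definition, together with the identity $\|Y^2\|_{\psi_1}=\|Y\|_{\psi_2}^2$; both are classical (see Vershynin, Chapters 2.5--2.7), so I would cite them rather than reprove them. Aside from that, the argument is a one-line application of the triangle inequality, and no sharper tool (such as Bernstein's inequality or independence-based concentration) is required for this specific claim.
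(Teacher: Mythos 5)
Your proposal is correct and follows essentially the same route as the paper's proof: a per-coordinate bound $\|X_i^2\|_{\psi_1}\le C R^2$ (the paper cites Vershynin Prop.~2.7.1, which is exactly your $\|Y^2\|_{\psi_1}=\|Y\|_{\psi_2}^2$ identity combined with the MGF-to-$\psi_2$ equivalence), followed by summing the $d$ coordinate norms. Your remark that independence is not needed for this norm bound (only the triangle inequality is used) is a fair, slightly more careful reading of the same argument.
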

\begin{proof}
    Since each $X_i$ is $R$-sub-Gaussian, $X_i^2$ is sub-exponential with $\|X_i^2\|_{\varphi_1} \le CR^2$ (by \cite{vershynin2018high}, Prop 2.7.1). The sum of $d$ independent sub-exponential variables has a norm of at most $CR^2d$.
\end{proof}

\begin{lemma}[Chernoff Bound for Sub-Exponential Tails]
\label{lem:chernoff_sub_exponential}
Let $Y = \|X\|^2$ where $X \in \mathbb{R}^d$ is an $R$-sub-Gaussian vector with independent coordinates. Then $Y$ is sub-exponential with $\|Y\|_{\psi_1} \le CR^2d$. For any $t > 0$:
\[
P\left(Y \ge \mathbb{E}[Y] + C_1R^2d(t + \sqrt{t})\right) \le e^{-t}.
\]
\end{lemma}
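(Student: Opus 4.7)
The plan is to reduce everything to Bernstein's inequality for sums of independent sub-exponential random variables. The sub-exponential norm bound $\|Y\|_{\psi_1} \le C R^2 d$ is immediate from the preceding Lemma \ref{lem:sub-exponential}, so the only real work is the tail bound.

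First I would write $Y = \sum_{i=1}^d X_i^2$ and note that, since each coordinate $X_i$ is $R$-sub-Gaussian, each $X_i^2$ is sub-exponential with $\|X_i^2\|_{\psi_1} \le c R^2$ for an absolute constant $c$ (this is the single-coordinate version of Lemma \ref{lem:sub-exponential}, e.g.\ Prop.~2.7.1 in Vershynin's \emph{High-Dimensional Probability}). The summands are independent because the coordinates of $X$ are assumed independent, so I can apply the standard Bernstein inequality for sums of centered independent sub-exponentials to $Y - \mathbb{E}[Y] = \sum_i (X_i^2 - \mathbb{E} X_i^2)$. This yields, for every $u > 0$,
\[
\mathbb{P}\!\left(Y - \mathbb{E}[Y] \ge u\right) \;\le\; \exp\!\left(-\,c_0 \min\!\left(\frac{u^2}{d R^4},\; \frac{u}{R^2}\right)\right),
\]
for a universal constant $c_0 > 0$.

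Next, I invert this bound in $t$. Choosing $u = C_0 R^2 \bigl(\sqrt{d t} + t\bigr)$ with $C_0$ large enough, the two terms dominate the Gaussian and exponential regimes of the Bernstein exponent respectively, so that $c_0 \min(u^2/(dR^4), u/R^2) \ge t$ and hence $\mathbb{P}(Y \ge \mathbb{E}[Y] + u) \le e^{-t}$. Finally, to match the form stated in the lemma I simply relax $\sqrt{dt}$ and $t$ using $d \ge 1$: $\sqrt{dt} \le d\sqrt{t}$ and $t \le d t$, so $R^2(\sqrt{dt} + t) \le R^2 d (\sqrt{t} + t)$. Absorbing constants into a new $C_1$ gives the claimed inequality.

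There is no genuine obstacle here beyond careful bookkeeping of absolute constants; the one conceptual point is that the mixed $\sqrt{t} + t$ combination is not accidental but reflects the two-regime nature of Bernstein's bound (a sub-Gaussian regime of order $\sqrt{dt}$ near the mean and a heavier sub-exponential tail of order $t$ far from the mean). A crude single-regime Chernoff calculation on the MGF would produce a weaker estimate of order $R^2 d\, t$ and would miss the correct $\sqrt{t}$ scaling at moderate deviations, which is why the Bernstein route is preferable.
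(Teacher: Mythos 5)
Your proof is correct, and it takes a mildly but genuinely different route from the paper's. The paper treats $Y=\|X\|^2$ as a \emph{single} sub-exponential random variable with $\|Y\|_{\psi_1}\le CR^2d$ (via Lemma~\ref{lem:sub-exponential}), applies the one-variable sub-exponential tail bound with parameter $\alpha=CR^2d$, and inverts it by the substitution $t=\alpha(s+\sqrt{s})$, which directly produces the deviation scale $R^2d(t+\sqrt{t})$. You instead keep the coordinate decomposition $Y=\sum_i X_i^2$ and apply Bernstein's inequality for a sum of $d$ independent sub-exponentials, which exploits the independence at the tail-bound stage rather than only at the norm-bound stage. The payoff is that your intermediate bound has deviation $R^2(\sqrt{dt}+t)$, which is sharper in $d$ in the moderate-deviation regime than the paper's $R^2d(\sqrt{t}+t)$; you then deliberately relax it via $\sqrt{dt}\le d\sqrt{t}$ and $t\le dt$ to match the stated form. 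Both arguments are sound and the bookkeeping in each is routine; your observation that the $\sqrt{t}+t$ shape reflects the two Bernstein regimes is exactly the same inversion the paper performs, just packaged at the level of the sum rather than the aggregate variable.
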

\begin{proof}
From Lemma \ref{lem:sub-exponential}, $\|Y\|_{\psi_1} \le CR^2d =: \alpha$. Using the sub-exponential tail bound:
\[
P(Y - \mathbb{E}[Y] \ge t) \le \exp\left(-c\min\left(\frac{t^2}{\alpha^2}, \frac{t}{\alpha}\right)\right).
\]
Set $t = \alpha(s + s^{1/2})$ for $s > 0$. Then:
\[
\min\left(\frac{t^2}{\alpha^2}, \frac{t}{K}\right) = \min\left(s^2 + 2s^{3/2} + s,\ s + s^{1/2}\right) \ge s.
\]
Thus $P(Y \ge \mathbb{E}[Y] + CR^2d(s + \sqrt{s})) \le e^{-cs}$. Rename $s \to t/c$ for absolute constant $C_1=C\max\{\frac 1 c, \frac 1 {\sqrt c}\}$ yields the proof.
\end{proof}

\begin{lemma}[Bernstein's Inequality \cite{vershynin2018high}]
\label{lem:bernstein}
    Let $Y_1, \dots, Y_n$ be be independent sub-exponential random variables with $\|Y_i\|_{\varphi_1} \le K$. For any $t\ge0$,

\[
P\left(\left|\sum_{i=1}^n (Y_i - \mathbb E[Y])\right| \ge t\right) \le 2\exp\left(-C_2n\min\left(\frac {t^2}{K^2}, \frac t K\right)\right),
\]
where $C_2>0$ is an absolute constant.
\end{lemma}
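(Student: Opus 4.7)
The plan is to derive Bernstein's inequality via the classical Chernoff--Cram\'er method, driven by the moment generating function (MGF) characterization of sub-exponential variables. The central input is the standard equivalence (Proposition~2.7.1 of \cite{vershynin2018high}): there exist absolute constants $c_1, c_2 > 0$ such that whenever $\|Y\|_{\psi_1} \le K$,
\[
\mathbb{E}\bigl[\exp(\lambda(Y - \mathbb{E}[Y]))\bigr] \;\le\; \exp(c_1 \lambda^2 K^2) \qquad \text{for all } |\lambda| \le c_2/K.
\]
I would take this as a black box; its proof goes via a Taylor expansion of the exponential together with the moment bound $\mathbb{E}|Y|^p \le (CK)^p\, p!$ implied by the $\psi_1$ norm.

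The first step is the standard Chernoff bound: apply Markov's inequality to the exponential of the centered sum and use independence to factor the MGF, giving
\[
P\Bigl(\sum_{i=1}^n (Y_i - \mathbb{E}[Y_i]) \ge t\Bigr) \;\le\; e^{-\lambda t}\prod_{i=1}^n \mathbb{E}\bigl[e^{\lambda(Y_i-\mathbb{E}[Y_i])}\bigr] \;\le\; \exp\bigl(c_1 n \lambda^2 K^2 - \lambda t\bigr),
\]
valid for any $0 < \lambda \le c_2/K$ thanks to the MGF bound above.

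The core of the argument is then the optimization over admissible $\lambda$. The unconstrained minimizer of the quadratic-minus-linear exponent is $\lambda^\star = t/(2 c_1 n K^2)$, which lies in $(0,c_2/K]$ precisely when $t \le 2 c_1 c_2 n K$. In this \emph{sub-Gaussian regime}, substituting $\lambda^\star$ yields an exponent of order $-t^2/(nK^2)$. In the complementary \emph{sub-exponential regime}, $\lambda^\star$ exceeds $c_2/K$, so the minimum on the admissible range is attained at the boundary $\lambda = c_2/K$; the condition $t > 2 c_1 c_2 n K$ then lets one absorb the quadratic term $c_1 n \lambda^2 K^2$ into half of the linear term $\lambda t$, leaving an exponent of order $-t/K$. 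Taking the worse of the two regime bounds produces the $\min\{t^2/K^2,\, t/K\}$ structure inside the exponent, with the overall prefactor $n$ and absolute constant $C_2$ collected from the bookkeeping across the case split.

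The two-sided statement follows by repeating the argument with $\lambda < 0$ (equivalently, applying the one-sided bound to $-Y_i$, whose $\psi_1$ norm is also $K$) and using a union bound, which gives the factor of $2$. The only delicate point I anticipate is tracking absolute constants cleanly across the piecewise optimization so that a single $C_2$ covers both regimes, and reconciling the $n$-dependence of the sub-Gaussian-regime exponent $-c t^2/(nK^2)$ with the target form $-C_2 n\min(t^2/K^2, t/K)$; once the scaling convention for $t$ is fixed, the two forms agree and all remaining steps are mechanical.
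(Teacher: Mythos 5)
Your proposal is correct, and it is worth noting that the paper itself does not prove this lemma at all: it is imported verbatim as a known result from \cite{vershynin2018high}, so there is no internal proof to compare against. What you give is precisely the standard textbook argument behind that citation — the MGF characterization of sub-exponential variables (Proposition~2.7.1), the Chernoff bound with independence factoring the MGF, the piecewise optimization over $\lambda\in(0,c_2/K]$ producing the sub-Gaussian regime (interior optimum, exponent $\asymp t^2/(nK^2)$) and the sub-exponential regime (boundary $\lambda=c_2/K$, exponent $\asymp t/K$), and the symmetrization plus union bound for the two-sided statement with the factor $2$. The regime split and the absorption of the quadratic term into half the linear term in the large-$t$ case are handled correctly.

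The one point you flag at the end is genuine and you resolve it the right way: the Chernoff argument for the \emph{sum} yields an exponent $-c\min\bigl(t^2/(nK^2),\,t/K\bigr)$, whereas the lemma as printed puts $n\min\bigl(t^2/K^2,\,t/K\bigr)$ in the exponent while keeping the sum inside the probability. Read literally, that printed form is too strong (take $t=\sqrt{n}\,K$ to see it fail); it is the version of Bernstein's inequality for the empirical mean, i.e.\ it becomes correct after replacing $t$ by $nt$ or, equivalently, dividing the sum by $n$. This is also how the paper subsequently applies the lemma (to deviations of $\tfrac{1}{N}\sum_i \|X_i\|^2$ in the trace-concentration lemma), so your "scaling convention" reconciliation is exactly the intended reading, and with that convention fixed your proof establishes the statement as used.
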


\begin{lemma}[Norm Concentration for Sub-Gaussian Variables \cite{vershynin2018high}]
\label{lem:norm_concentration}
If $X\in \mathbb R^d$ is $R$-sub-Gaussian, then for all $t > 0$,
\[
P\left(\|X\| \ge C_3R(\sqrt d +t)\right) \le \exp (-t^2).
\]
where $C_3>0$ is an absolute constant.
\end{lemma}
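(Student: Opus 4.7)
The plan is to deduce norm concentration for $\|X\|$ from the (already established) sub-exponential concentration of $\|X\|^2$, coupled with a short algebraic trick relating deviations of $\|X\|$ to deviations of $\|X\|^2$. By the scaling $X \mapsto X/R$, it suffices to treat the case $R = 1$ and restore the factor $R$ at the end by homogeneity; this removes the $R$ bookkeeping from the rest of the argument.

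First I would bound the expected squared norm. Since each coordinate $X_i$ is $R$-sub-Gaussian, the standard moment inequality gives $\mathbb{E}[X_i^2] \leq C_0 R^2$, so $a^2 := \mathbb{E}\|X\|^2 \leq C_0 R^2 d$ and in particular $a \leq \sqrt{C_0}\,R\sqrt{d}$. Next I would apply concentration for $\|X\|^2$. Combining Lemma \ref{lem:sub-exponential} (which shows $X_i^2$ is sub-exponential with $\|X_i^2\|_{\psi_1} \leq CR^2$) with Bernstein's inequality (Lemma \ref{lem:bernstein}) applied to the sum $\|X\|^2 = \sum_i X_i^2$ yields, for every $s > 0$,
\[
P\bigl(|\|X\|^2 - a^2| \geq s\bigr) \;\leq\; 2\exp\bigl(-c\min\bigl(s^2/(dR^4),\; s/R^2\bigr)\bigr).
\]

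The key algebraic step is the factorization $\|X\|^2 - a^2 = (\|X\| - a)(\|X\| + a)$. Picking $C_3 \geq 2\sqrt{C_0}$ the hypothesis $\|X\| \geq C_3 R(\sqrt{d}+t)$ forces $\|X\| \geq 2a$, hence $\|X\| + a \geq \|X\|$, which gives
\[
\|X\|^2 - a^2 \;\geq\; \tfrac{1}{2}\|X\|^2 \;\geq\; \tfrac{1}{2}C_3^2 R^2 (\sqrt{d}+t)^2.
\]
Substituting this lower bound for $s$ into the Bernstein tail and splitting into regimes: in the sub-Gaussian regime (roughly $t \leq \sqrt{d}$) the exponent $s^2/(dR^4)$ is bounded below by $cC_3^4 (\sqrt{d}+t)^4/d \geq cC_3^4 d \geq cC_3^4 t^2$, while in the sub-exponential regime ($t > \sqrt{d}$) the exponent $s/R^2$ is bounded below by $cC_3^2 (\sqrt{d}+t)^2 \geq cC_3^2 t^2$. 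In either case the right-hand side is at most $2\exp(-c'C_3^2 t^2)$, and enlarging $C_3$ until $c'C_3^2 \geq 1$ (and absorbing the factor of $2$ for small $t$ via a slightly bigger constant) yields the claimed $\exp(-t^2)$ bound.

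The main obstacle is the bookkeeping between the two regimes of Bernstein's inequality, since the concentration of $\|X\|^2$ is only sub-Gaussian up to deviations of order $R^2 d$ and merely sub-exponential beyond that. The crux is verifying that the crossover point $t \asymp \sqrt{d}$ is exactly where the two halves match: for $t \leq \sqrt{d}$ one uses $(\sqrt{d}+t)^4/d \geq d \geq t^2$, while for $t > \sqrt{d}$ one uses $(\sqrt{d}+t)^2 \geq t^2$, and together these produce a clean sub-Gaussian-in-$t$ tail after the square-root trick. Everything else, including the reduction $X \mapsto X/R$ and the choice of $C_3$, amounts to routine constant-tracking.
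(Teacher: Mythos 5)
Your proof is essentially correct, and it is worth noting that the paper itself gives no argument for this lemma at all — it is quoted from Vershynin's book (the norm-concentration theorem for vectors with independent sub-Gaussian coordinates), whose textbook proof is precisely the route you take: Bernstein applied to $\|X\|^2=\sum_i X_i^2$ plus the square-root trick converting a deviation of $\|X\|$ into a deviation of $\|X\|^2$ of size at least $\tfrac12\|X\|^2$. Two small points deserve attention. First, your argument (like the paper's Lemma \ref{lem:sub-exponential} and the cited theorem) needs the coordinates of $X$ to be independent sub-Gaussian variables; the lemma as stated in the paper says only ``$R$-sub-Gaussian,'' so you should state this hypothesis explicitly, since Bernstein's inequality is what uses it. Second, the final constant-chasing is slightly glossed: for $t$ near $0$ the Bernstein-derived bound $2\exp(-cC_3^2t^2)$ exceeds $1$, so no choice of $C_3$ makes it smaller than $\exp(-t^2)$ through that chain alone. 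The standard patch is to split: for $t\ge 1$ choose $C_3$ with $cC_3^2\ge 1+\log 2$ so that $2\exp(-cC_3^2t^2)\le\exp(-t^2)$, and for $t\le 1$ note $\exp(-t^2)\ge e^{-1}$ while Markov's inequality gives $P\bigl(\|X\|\ge C_3R\sqrt d\bigr)\le \mathbb{E}\|X\|^2/(C_3^2R^2d)\le C_0/C_3^2\le e^{-1}$ once $C_3$ is large enough. With that one-line addition your argument is complete and matches the cited source's proof in all essentials.
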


\subsubsection{Important Lemmas for the Proof} 
Next, we proof some crucial lemmas for the proof.

\begin{lemma}[Trace Concentration for Sub-Gaussian Random Vectors]
\label{lem:trace}
Let $X \in \mathbb{R}^d$ be a mean-zero $R$-sub-Gaussian random variable; i.e., for all $\alpha \in \mathbb{R}^d$,
\[
  \mathbb{E}\bigl[e^{\alpha^\top X}\bigr]
  \;\le\;
  \exp\!\Bigl(\tfrac{R^2\,\|\alpha\|^2}{2}\Bigr).
\]
Let $\mu = \mathbb{E}[X]$ and $\Sigma = \mathrm{Cov}(X)$.  Suppose $\{X_i\}_{i=1}^N$ are i.i.d.\ copies of $X$, and define
\[
  \widehat{\Sigma}
  \;=\;
  \frac{1}{N-1}\sum_{i=1}^N 
    \bigl(X_i - \overline{X}\bigr)\bigl(X_i - \overline{X}\bigr)^\top,
  \quad
  \overline{X}
  \;=\;
  \frac{1}{N}\sum_{i=1}^N X_i.
\]
Then with probability at least $1-\frac \delta 2$,

\[
  \bigl|\mathrm{tr}(\widehat{\Sigma}) - \mathrm{tr}(\Sigma)\bigr|
  \;\le\;
  \tilde C\;R^2 \,
  \left(
  d\sqrt{\frac{\log(8/\delta)}{N}} +  \frac{d + \log(\tfrac{8}{\delta})}{N}
  \right).
\]
where $\tilde C > 0$ is a constant.
\end{lemma}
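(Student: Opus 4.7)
The plan is to decompose $\mathrm{tr}(\widehat\Sigma)$ into scalar quantities by expanding $\|X_i - \bar X\|^2$, then separately apply Bernstein's inequality (Lemma \ref{lem:bernstein}) to the empirical average of squared norms and the sub-Gaussian norm-concentration bound (Lemma \ref{lem:norm_concentration}) to the empirical mean vector. Since $\mu = 0$, we have $\mathrm{tr}(\Sigma) = \mathbb{E}[\|X\|^2]$ and $\sum_{i=1}^N \|X_i - \bar X\|^2 = \sum_{i=1}^N \|X_i\|^2 - N\|\bar X\|^2$, which gives
\begin{align*}
\mathrm{tr}(\widehat\Sigma) - \mathrm{tr}(\Sigma) = \frac{N}{N-1}\left(\frac{1}{N}\sum_{i=1}^N \|X_i\|^2 - \mathbb{E}\|X\|^2\right) + \frac{\mathbb{E}\|X\|^2}{N-1} - \frac{N}{N-1}\|\bar X\|^2,
\end{align*}
so the triangle inequality reduces matters to controlling one random average, one squared norm, and one deterministic residual.

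For the centered empirical average, Lemma \ref{lem:sub-exponential} gives that each $Y_i := \|X_i\|^2$ is sub-exponential with $\|Y_i\|_{\varphi_1} \le C R^2 d$, and Lemma \ref{lem:bernstein} applied with failure probability $\delta/4$ then bounds the deviation by $\tilde C R^2 d\bigl(\sqrt{\log(8/\delta)/N} + \log(8/\delta)/N\bigr)$. For the empirical mean vector, independence of the $X_i$'s implies $\bar X$ is itself $(R/\sqrt{N})$-sub-Gaussian (its MGF factorizes), so Lemma \ref{lem:norm_concentration} applied with $t = \sqrt{\log(8/\delta)}$ gives $\|\bar X\|^2 \le \tilde C R^2 (d + \log(8/\delta))/N$ with probability $\ge 1 - \delta/4$. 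The deterministic residual $\mathbb{E}\|X\|^2/(N-1)$ is at most $C R^2 d/N$. A union bound yields total failure probability $\le \delta/2$.

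Collecting the surviving terms yields the claimed bound, once one observes that in the non-trivial regime $N \gtrsim \log(8/\delta)$ the parasitic exponential-tail contribution $R^2 d\log(8/\delta)/N$ from Bernstein is dominated by its Gaussian-tail partner $R^2 d\sqrt{\log(8/\delta)/N}$ and so absorbs into it. \textbf{The main obstacle} is precisely this absorption together with the bookkeeping needed to align three separate $1/N$-scale contributions (the $d/N$ piece of $\|\bar X\|^2$, the normalization residual $\mathbb{E}\|X\|^2/(N-1)$, and the leftover Bernstein exponential tail) into the single crisp factor $(d + \log(8/\delta))/N$ appearing in the statement. A sharper Hanson--Wright-type inequality applied to the quadratic form $Y^\top P Y$ with $Y = (X_1,\dots,X_N)$ and $P = I_{Nd} - \tfrac{1}{N}\mathbf{1}_N\mathbf{1}_N^\top \otimes I_d$ would actually deliver the tighter $R^2\sqrt{d\log/N}$ rate, but is unnecessary here since the stated bound is deliberately looser, matching what the crude sub-exponential norm $\|Y_i\|_{\varphi_1} \le CR^2 d$ already yields via Bernstein.
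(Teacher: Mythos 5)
Your proposal is correct and follows essentially the same route as the paper's proof: decompose $\mathrm{tr}(\widehat\Sigma)-\mathrm{tr}(\Sigma)$ into the deviation of $\frac1N\sum_i\|X_i\|^2$ from $\mathbb{E}\|X\|^2$ (controlled by Bernstein via the sub-exponential norm $\|\,\|X\|^2\|_{\varphi_1}\le CR^2d$) plus the term $\|\bar X\|^2$ (controlled by norm concentration for the $(R/\sqrt N)$-sub-Gaussian mean), then union bound. If anything you are slightly more careful than the paper, since you explicitly track the $\mathbb{E}\|X\|^2/(N-1)$ normalization residual that the paper's displayed identity silently drops, and you flag that absorbing the Bernstein tail $R^2d\log(8/\delta)/N$ into the stated bound requires the regime $N\gtrsim\log(8/\delta)$ — a point the paper glosses over.
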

\begin{proof}
     
We know
\[
  \mathrm{tr}(\Sigma)
  \;=\;
  \mathbb{E}\bigl[\|X\|^2\bigr] - \|\mu\|^2.
\]
\[
  \mathrm{tr}(\widehat{\Sigma})
  \;=\;
  \frac{1}{N-1} 
  \sum_{i=1}^N 
    \Bigl[\|X_i\|^2\Bigr] - \frac{N}{N-1}\|\overline{X}\|^2.
\]
Rearrange the terms, we have
\[
  \mathrm{tr}(\widehat{\Sigma}) - \mathrm{tr}(\Sigma)
  \;=\;
  \frac{N}{N-1} \Bigl[
    \underbrace{
      \bigl(\tfrac{1}{N}\sum_{i=1}^N \|X_i\|^2 - \mathbb{E}[\|X\|^2]\bigr)
    }_{\text{Term A}}
    \;-\;
    \underbrace{
      \bigl(\|\overline{X}\|^2 - \|\mu\|^2\bigr)
    }_{\text{Term B}}
  \Bigr].
\]
where the factor $\frac{N}{N-1} \approx 1$ only introduces a constant factor. Next, we bound \emph{Term A} and \emph{Term B} separately.

\noindent
\textbf{1.~Bounding Term A.}  
Define
\[
  \text{Term A}
  \;=\;
  \frac{1}{N}\sum_{i=1}^N \|X_i\|^2 \;-\; \mathbb{E}[\|X\|^2].
\]
Since $X$ is $R$-sub-Gaussian, Lem. \ref{lem:sub-exponential} imply that $\|X\|^2$ is sub-exponential with $||X^2||_{\varphi_1} \le CR^2$. Let $Y_i=X_i^2, K=CR^2$, applying Bernstein's inequality (Lem. \ref{lem:bernstein}) gives:
\[
  \mathbb{P}\Bigl(\bigl|\|X\|^2 - \mathbb{E}[\|X\|^2]\bigr| \,\ge\, t\Bigr)
  \;\le\;
  2\,\exp\!\Bigl(-C_2N \min\left(\frac {t^2}{(CR^2d)^2}, \frac {t}{CR^2d}\right)\Bigr).
\]
Next, we solve for $t$ to achieve the desired confidence $1-\frac \delta 4$. Set the right-hand side equal to $\frac \delta 4$:
\[
2\,\exp\!\Bigl(-C_2N \min\left(\frac {t^2}{(CR^2d)^2}, \frac {t}{CR^2d}\right)\Bigr) = \frac \delta 4.
\]
This equation has two regimes:
\begin{itemize}
    \item \textbf{Small $t$}: $\frac{t^2}{(CR^2d)^2} = \frac{1}{C_2N}\log \left(\frac 8 \delta\right) \Rightarrow t = CR^2d\sqrt{\frac{\log(8/\delta)}{C_2N}}$.
    \item \textbf{Large $t$}: $\frac t {CR^2d} = \frac{1}{C_2N}\log \left(\frac 8 \delta\right) \Rightarrow t = \frac{CR^2d\log(8/\delta)}{C_2N}$.
\end{itemize}
To unify both regimes, we define
\[
t = CR^2d\sqrt{\frac{\log(8/\delta)}{C_2N}} + \frac{CR^2d\log(8/\delta)}{C_2N}.
\]

Thus, with probability at least $1-\frac \delta 4$, we conclude
\[
  \bigl|\text{Term A}\bigr|
  \;\le\;
  CR^2d\sqrt{\frac{\log(8/\delta)}{C_2N}} + \frac{CR^2d\log(8/\delta)}{C_2N}.
\]

\noindent
\textbf{2.~Bounding Term B.}  
Since $X$ is zero-mean, we write
\[
  \text{Term B}
  \;=\;
  \|\overline{X}\|^2 - \|\mu\|^2
  \;=\;
  \|\overline{X}\|^2.
\]
Applying Lem. \ref{lem:norm_concentration} and substituting $Y = \bar X$ and $R\rightarrow R/\sqrt N$:
\[
P\left(\|\bar X\| \ge C_3\frac {R}{\sqrt N}(\sqrt d +t)\right) \le \exp (-t^2).
\]
Set $e^{-t^2} = \delta/4 \Rightarrow t = \sqrt{\log(4/\delta)}$, substituting $t$ into above inequality:
\[
\|\bar X\| \le C_3\frac {R}{\sqrt N}(\sqrt d +\sqrt{\log(4/\delta)}).
\quad
\text{with probability of  at least } 1- \frac \delta 4 .
\]
Using $\sqrt d +\sqrt{\log(4/\delta)} \le \sqrt{2(d +{\log(4/\delta)})}$ (by Cauchy-Schwarz), we simplify:
\[
\|\bar X\| \le C_3R\sqrt{\frac{2(d +{\log(4/\delta)})}N}.
\quad
\text{with probability of  at least } 1- \frac \delta 4 .
\]
Then immediately with probability at least $1-\frac \delta 4$. 
\[
  \bigl|\text{Term B}\bigr|
  \le||\bar X||^2 \le 
  2C_3^2R^2\frac{d + \log(\tfrac{4}{\delta})}{N}.
\]
\bigskip
\noindent
\textbf{3.~Combining the bounds.}  
With probability at least $1-\frac \delta 2$ (by a union bound), both Term A and Term B satisfy their respective bounds.  Hence
\begin{align*}
  \bigl|\mathrm{tr}(\widehat{\Sigma}) - \mathrm{tr}(\Sigma)\bigr|
  \;&\le\;
  \frac{N}{N-1}\,\bigl(\bigl|\text{Term A}\bigr| + \bigl|\text{Term B}\bigr|\bigr)\\
  \;&\le\; 
  2\left[CR^2d\sqrt{\frac{\log(8/\delta)}{C_2N}} + \frac{CR^2d\log(8/\delta)}{C_2N} + 2C_3^2R^2\frac{d + \log(\tfrac{4}{\delta})}{N}.\right].
\end{align*}
By selecting $\tilde C = 2\max\{\frac{C}{\sqrt{C_2}}, \frac{C}{{C_2}}, 2C_3^2\}$, we have
\[
  \bigl|\mathrm{tr}(\widehat{\Sigma}) - \mathrm{tr}(\Sigma)\bigr|
  \;\le\;
  \tilde C\;R^2 \,
  \left(
  d\sqrt{\frac{\log(8/\delta)}{N}} +  \frac{d + \log(\tfrac{8}{\delta})}{N}
  \right).
\]
Choosing $\delta$ small enough or letting $N$ grow large makes the $\frac{1}{N}$ term less significant, so the main deviation is typically on the order of 
\[
  O\!\Bigl(
    d\sqrt{\tfrac{\log({8/\delta})}{N}}
  \Bigr).
\]
This completes the proof.
\end{proof}

\begin{lemma}[Sub-Gaussian Radius Bound]
\label{lem:subgaussian_radius}
Let $X \in \mathbb{R}^d$ be a mean-zero $R$-sub-Gaussian random vector with covariance matrix $\Sigma \in \mathbb{R}^{d\times d}$. For any $t>0$, with probability at least $1-\frac \delta 2$,
\[
\|X\|
\;\le\;
\sqrt{\mathrm{trace}(\Sigma) + C_1R^2d\left(\log(2/\delta) + \sqrt{\log(2/\delta)}\right)}\Bigr).
\]
\end{lemma}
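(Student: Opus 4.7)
\medskip
\noindent\textbf{Proof proposal for Lemma \ref{lem:subgaussian_radius}.}
The plan is to control $\|X\|^2$ rather than $\|X\|$ directly, since $\|X\|^2=\sum_i X_i^2$ is a sum whose expectation is exactly the trace term that appears in the statement, namely $\mathbb{E}[\|X\|^2]=\mathrm{trace}(\Sigma)$ (because $X$ is mean zero). After obtaining a high-probability upper bound on $\|X\|^2$, the claim follows by monotonicity of the square root. This reframing is natural: the $R$-sub-Gaussian hypothesis on the vector $X$ is precisely what makes $\|X\|^2$ a sub-exponential scalar, at which point all the machinery on sub-exponential concentration stated earlier in the excerpt becomes directly applicable.

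Concretely, I would proceed in three steps. First, invoke Lemma \ref{lem:sub-exponential} to conclude that $\|X\|^2$ is sub-exponential with $\bigl\|\|X\|^2\bigr\|_{\varphi_1}\le CR^2 d$. Second, apply the Chernoff-type sub-exponential tail bound of Lemma \ref{lem:chernoff_sub_exponential}, which gives
\[
\mathbb{P}\!\left(\|X\|^2 \ge \mathrm{trace}(\Sigma) + C_1 R^2 d\bigl(t+\sqrt{t}\bigr)\right) \le e^{-t}.
\]
Third, choose $t$ to hit the prescribed confidence: setting $e^{-t}=\delta/2$ yields $t=\log(2/\delta)$, so with probability at least $1-\delta/2$,
\[
\|X\|^2 \le \mathrm{trace}(\Sigma) + C_1 R^2 d\bigl(\log(2/\delta)+\sqrt{\log(2/\delta)}\bigr),
\]
and taking square roots produces exactly the inequality in the statement.

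The main obstacle I expect is a bookkeeping issue rather than a deep one: Lemma \ref{lem:sub-exponential} as stated assumes independent coordinates, whereas the lemma to be proved only assumes that $X$ is an $R$-sub-Gaussian vector with some general covariance $\Sigma$. I would resolve this either by noting that the authors implicitly treat the $R$-sub-Gaussian vector assumption as yielding the same $CR^2 d$ bound on $\bigl\|\|X\|^2\bigr\|_{\varphi_1}$ (which holds up to absolute constants for general sub-Gaussian vectors via a standard Hanson--Wright style argument applied with the identity quadratic form), or, alternatively, by by-passing Lemma \ref{lem:sub-exponential} entirely and starting from Lemma \ref{lem:norm_concentration}: the bound $\mathbb{P}(\|X\|\ge C_3 R(\sqrt{d}+s))\le e^{-s^2}$ with $s=\sqrt{\log(2/\delta)}$ and the elementary inequality $(a+b)^2\le 2(a^2+b^2)$ gives $\|X\|^2 \lesssim R^2(d+\log(2/\delta))$ with probability $1-\delta/2$, which is absorbed into the form of the claimed bound after adjusting the absolute constant $C_1$. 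Either route keeps the proof short, with the Chernoff-on-$\|X\|^2$ path being the cleanest and matching the statement verbatim.
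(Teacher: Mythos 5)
Your proposal is correct and follows essentially the same route as the paper's proof: apply the sub-exponential Chernoff bound of Lemma~\ref{lem:chernoff_sub_exponential} to $\|X\|^2$, identify $\mathbb{E}[\|X\|^2]=\mathrm{trace}(\Sigma)$, set $t=\log(2/\delta)$, and take square roots. The independent-coordinates caveat you flag is real — the paper silently applies Lemma~\ref{lem:chernoff_sub_exponential} under the weaker vector sub-Gaussian hypothesis — and your Hanson--Wright (or Lemma~\ref{lem:norm_concentration}) patch addresses a gap the paper itself leaves open.
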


\begin{proof}
Since $X$ is $R$-sub-Gaussian, by Lem. \ref{lem:chernoff_sub_exponential}, denote $Y = X^2$ and $\mathbb E[Y] = \mu$, for any $t > 0$:
\[
P\left(Y \ge \mu+ C_1R^2d(t + \sqrt{t})\right) \le e^{-t}.
\]
By identifying \(\mu=\mathrm{trace}(\Sigma)\), we arrive at 
\[
p\!\Bigl(\,\|X\|^2 \;\ge\; \mathrm{trace}(\Sigma)+C_1R^2d(t + \sqrt{t})\Bigr)
\;\le\; e^{-t}.
\]

Putting \(t=\log\bigl(\tfrac{2}{\delta}\bigr)\) yields
\[
p\Bigl(\|X\|\;\le\;\sqrt{\mathrm{trace}(\Sigma) + C_1R^2d\left(\log(2/\delta) + \sqrt{\log(2/\delta)}\right)}\Bigr)
\;\ge\;
1-\frac \delta 2.
\]
This completes the proof.
\end{proof}

\begin{lemma}[Bounded Radius of Sub-Gaussian Variables]
Let $Z \in \mathbb R^d$ be an $R$-sub-Gaussian random variable, and \(\{\z_i\}_{i=1}^N\) are i.i.d copies of $Z$. Denote $\bar \z  = \frac 1 N \sum_{i=1}^N\z_i$ and 
\[
Z_c = \begin{bmatrix}
\z_1 - \bar \z, &  \z_2  - \bar \z, &\cdots, & \z_N  - \bar \z
\end{bmatrix}
\]
as the centered embedding matrix. Its singular values are
\(
\sigma_1,\;\sigma_2,\;\dots,\;\sigma_d\,\ge\,0.
\)
For any $0 <\delta < 1$, with probability at least $1-\delta$:
\begin{align*}
  \|Z - \mathbb E[Z]\|\;&\le\;\frac{\sum_{i=1}^d\sigma_i(Z_c)} {\sqrt{(N-1)}} \\&+ \sqrt{C_1R^2d\left(\log(2/\delta) + \sqrt{\log(2/\delta)}\right) + \tilde C\;R^2 \,
  \left(
  d\sqrt{\frac{\log(8/\delta)}{N}} +  \frac{d + \log(\tfrac{8}{\delta})}{N}
  \right)} .
\end{align*}

\end{lemma}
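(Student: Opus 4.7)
The plan is to chain the two immediately preceding lemmas and then convert the Frobenius-norm quantity $\mathrm{tr}(\widehat{\Sigma})$ into the nuclear-norm expression $\frac{1}{\sqrt{N-1}}\sum_i\sigma_i(Z_c)$ appearing in the statement.

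First I apply the Sub-Gaussian Radius Bound (the previous lemma) to the mean-zero vector $Z-\mathbb E[Z]$, which is also $R$-sub-Gaussian and has covariance $\Sigma$. This gives, with probability at least $1-\delta/2$,
\[
\|Z-\mathbb E[Z]\|\;\le\;\sqrt{\mathrm{tr}(\Sigma)+C_1R^2d\bigl(\log(2/\delta)+\sqrt{\log(2/\delta)}\bigr)}.
\]
Next I apply the Trace Concentration lemma (Lem.\ on $\mathrm{tr}(\widehat\Sigma)$) to upper-bound $\mathrm{tr}(\Sigma)$ by $\mathrm{tr}(\widehat\Sigma)$ plus its stated correction term, with probability at least $1-\delta/2$. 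A union bound then gives both inequalities simultaneously with probability at least $1-\delta$, and substituting yields
\[
\|Z-\mathbb E[Z]\|\;\le\;\sqrt{\mathrm{tr}(\widehat\Sigma)+E},
\]
where $E$ denotes the sum of the two correction terms appearing inside the final square root in the target statement.

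The second half of the plan is to replace $\sqrt{\mathrm{tr}(\widehat\Sigma)}$ by the nuclear-norm expression. I use sub-additivity of the square root, $\sqrt{a+b}\le\sqrt{a}+\sqrt{b}$ for $a,b\ge 0$, to split the bound as $\sqrt{\mathrm{tr}(\widehat\Sigma)}+\sqrt{E}$. Since $\widehat\Sigma=\tfrac{1}{N-1}Z_cZ_c^{\top}$, its trace equals $\tfrac{1}{N-1}\sum_{i=1}^d\sigma_i(Z_c)^2$. Then $\sqrt{\mathrm{tr}(\widehat\Sigma)}=\tfrac{1}{\sqrt{N-1}}\bigl(\sum_i\sigma_i(Z_c)^2\bigr)^{1/2}$, and the elementary norm inequality $\|x\|_2\le\|x\|_1$ applied to the vector of singular values (which are non-negative) gives
\[
\sqrt{\mathrm{tr}(\widehat\Sigma)}\;\le\;\frac{\sum_{i=1}^d\sigma_i(Z_c)}{\sqrt{N-1}}.
\]
Combining this with $\sqrt{E}$ in its explicit form produces exactly the inequality in the statement.

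The proof is essentially routine once the two preceding lemmas are in hand; the only non-mechanical observation is the $\ell_2$-$\ell_1$ singular-value inequality that converts the Frobenius norm $\|Z_c\|_F$ into the nuclear norm $\sum_i\sigma_i(Z_c)$. This is a slightly loose step (it is tight only when $Z_c$ is rank-one), so the mild subtlety is justifying that this looseness is acceptable for the downstream use of the bound in the DSE metric. The rest, union bound and sub-additivity of the square root, is standard.
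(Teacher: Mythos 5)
Your proposal is correct and follows essentially the same route as the paper's proof: chain the Sub-Gaussian Radius Bound with the Trace Concentration lemma via a union bound, split the square root using $\sqrt{a+b}\le\sqrt{a}+\sqrt{b}$, and pass from $\sqrt{\mathrm{tr}(\widehat\Sigma)}=\frac{1}{\sqrt{N-1}}\bigl(\sum_i\sigma_i(Z_c)^2\bigr)^{1/2}$ to $\frac{1}{\sqrt{N-1}}\sum_i\sigma_i(Z_c)$ by the $\ell_2\le\ell_1$ inequality on the nonnegative singular values. The paper performs exactly these steps in the same order, so there is nothing to add.
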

\begin{proof}
By Lem. \ref{lem:subgaussian_radius}, we have:
\[
p\Bigl(\|Z - \mathbb E[Z]\|\;\sqrt{\mathrm{trace}(\Sigma) + C_1R^2d\left(\log(2/\delta) + \sqrt{\log(2/\delta)}\right)}\Bigr)
\;\ge\;
1-\frac \delta 2.
\]
Denote $\widehat{\Sigma} = \frac 1{N-1} Z_c^TZ_c$ as the centered representation matrix. By Lem. \ref{lem:trace}, with probability of at least $1- \delta$ (with the union bound), we have:
\[
\|Z - \mathbb E[Z]\|\;\le\;\sqrt{\mathrm{trace}(\widehat \Sigma) + C_1R^2d\left(\log(2/\delta) + \sqrt{\log(2/\delta)}\right) + \tilde C\;R^2 \,
  \left(
  d\sqrt{\frac{\log(8/\delta)}{N}} +  \frac{d + \log(\tfrac{8}{\delta})}{N}
  \right)} .
\]
With $\sqrt{a+b} \le \sqrt a +\sqrt b$, rearrange the terms:
\begin{align*}
  \|Z - \mathbb E[Z]\|\;&\le\;\sqrt{\mathrm{trace}(\widehat \Sigma)} \\&+ \sqrt{C_1R^2d\left(\log(2/\delta) + \sqrt{\log(2/\delta)}\right) + \tilde C\;R^2 \,
  \left(
  d\sqrt{\frac{\log(8/\delta)}{N}} +  \frac{d + \log(\tfrac{8}{\delta})}{N}
  \right)} . 
\end{align*}

Since
\[
\mathrm{trace}(\widehat\Sigma) = \frac 1 {N-1}\sum_{i=1}^d\sigma_i(Z_c)^2 \le \frac 1 {N-1}{\big(\sum_{i=1}^d\sigma_i(Z_c)\big)^2}.
\]

Thus, with probability at least $1-\delta$, we have:
\begin{align*}
  \|Z - \mathbb E[Z]\|\;&\le\;\frac{\sum_{i=1}^d\sigma_i(Z_c)} {\sqrt{(N-1)}} \\&+ \sqrt{C_1R^2d\left(\log(2/\delta) + \sqrt{\log(2/\delta)}\right) + \tilde C\;R^2 \,
  \left(
  d\sqrt{\frac{\log(8/\delta)}{N}} +  \frac{d + \log(\tfrac{8}{\delta})}{N}
  \right)} .
\end{align*}

This completes the proof.

\end{proof}

\subsubsection{Proof of the Main Theorem}
\label{app:proof_main}

\begin{theorem} [Formal version of Thm \ref{thm:main}]
Let $Z^j = \{Z:y(Z) =j\}$ be the examples in $j$-th class with $|Z^j| = N_j$. Assume that for all $j \in [K]$, $Z^j \in \mathbb R^d$ is an $R$-sub-Gaussian random variable, and \(\{\z_i^j\}_{i=1}^{N_j}\) are i.i.d copies of $Z^j$. Denote $\bar \z^j  = \frac 1 {N_j} \sum_{i=1}^{N_j}\z_i^j$ and 
\(
Z_c^j = \begin{bmatrix}
\z_1^j - \bar \z^j, &  \z_2^j  - \bar \z^j, &\cdots, & \z_N^j  - \bar \z^j
\end{bmatrix}
\)
as the centered embedding matrix for $Z^j$. Then, for any $\delta > 0$:
\begin{align*}
    \text{Err}_\mathcal D(\ft) \le \delta + \tilde P(\delta).
\end{align*}
With $\sigma_i(\cdot)$ represents the $i$-th singular value
\[
    \tilde P(\delta) = \mathbb P_{\z} \Bigg(
\frac{\sum_{i=1}^d\sigma_i(Z_c^{y(\z)})} {\sqrt{(N_{y(\z)}-1)}} + C_\delta^{y(\z)} > \min_{k \in [K]\backslash {y(\z)}}||\z - \m_k||\Bigg).
\]
Here, 
\[
C_\delta^{y(\z)} =  \sqrt{C_1R^2d\left(\log(2/\delta) + \sqrt{\log(2/\delta)}\right) + \tilde C\;R^2 \,
  \left(
  d\sqrt{\frac{\log(8/\delta)}{N_{{y(\z)}}}} +  \frac{d + \log(\tfrac{8}{\delta})}{N_{{y(\z)}}}
  \right)} 
\]
is a positive class-irrelevant bias term and $C_1,\tilde C$ are positive constants.
\end{theorem}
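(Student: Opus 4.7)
The plan is to lift the class-wise condition in Eq.~\eqref{eq:radius} into a probabilistic statement via sub-Gaussian concentration, and then union-bound the two ways the condition can fail. Concretely, for each class $j$ I would show that a representation $\z \in Z^j$ lies in a ``concentration ball'' of radius $\frac{\sum_i \sigma_i(Z_c^j)}{\sqrt{N_j-1}} + C_\delta^j$ around $\m_j$ with probability at least $1-\delta$, and that whenever $D_{\min}^{\z}$ exceeds this radius the NN classifier $G$ correctly labels $\z$.

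First, I would invoke the ``Bounded Radius of Sub-Gaussian Variables'' lemma class by class. For fixed $j \in [K]$ and $\delta > 0$, the lemma, being an assembly of the sub-Gaussian norm tail (Lem.~\ref{lem:subgaussian_radius}) and the trace concentration estimate (Lem.~\ref{lem:trace}) composed with the elementary inequality $\sum_i \sigma_i^2 \le (\sum_i \sigma_i)^2$, yields
\[
\mathbb{P}_{\z \sim Z^j}\!\left(\|\z - \m_j\| \le \frac{\sum_{i=1}^d \sigma_i(Z_c^j)}{\sqrt{N_j - 1}} + C_\delta^j\right) \ge 1 - \delta,
\]
with $C_\delta^j$ matching the exact functional form in the statement. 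The role of the trace-concentration step is critical here: it is what allows me to replace the unobservable population quantity $\mathrm{tr}(\Sigma_j)$ by the empirical singular-value sum that appears in the final metric.

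Next, I would combine this with the NN error event. By definition of $G$, if the condition in Eq.~\eqref{eq:radius} holds for $\z$ then $G(\z) = y(\z)$, so the misclassification event for a single $\z$ is contained in the union of two bad events: either $\z$ lies outside the concentration ball, or $\z$ lies inside the ball but $D_{\min}^{\z}$ is smaller than the ball's radius. The first bad event has probability at most $\delta$, and the second is exactly the probability appearing in $\tilde P(\delta)$, so
\[
\mathbb{P}_{\z \in \ft(\x)}\!\left[y(\z) \ne G(\z)\right] \le \delta + \mathbb{P}_{\z}\!\left(D_{\min}^{\z} - \frac{\sum_i \sigma_i(Z_c^{y(\z)})}{\sqrt{N_{y(\z)}-1}} < C_\delta^{y(\z)}\right).
\]
Taking the outer expectation over $\x \in \mathcal{D}$ and recognizing $y(\z)$-conditioning on the right-hand side recovers $\text{Err}_\mathcal{D}(\ft) \le \delta + \tilde P(\delta)$, as claimed.

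The main obstacle I expect is the bookkeeping inside $C_\delta^j$ rather than any single deep argument. The radius bound splits into a sub-exponential contribution from the deviation $|\mathrm{tr}(\widehat\Sigma_j) - \mathrm{tr}(\Sigma_j)|$ and a sub-Gaussian contribution from the norm tail of $\z - \m_j$, each of which must be controlled at confidence $1-\delta/2$ so that a single union bound at level $\delta$ absorbs both into the closed-form $C_\delta^j$ with the precise dependence on $R$, $d$, $N_j$, and $\delta$ specified in the statement. Once this constant-tracking is carried out, the remaining steps are direct substitutions and the implicit application of the NN error formula.
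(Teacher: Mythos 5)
Your proposal is correct and follows essentially the same route as the paper: it invokes the same radius lemma (trace concentration plus the sub-Gaussian norm tail, each at level $\delta/2$, combined via $\sum_i\sigma_i^2 \le (\sum_i\sigma_i)^2$) to replace $\mathrm{tr}(\Sigma_j)$ by the empirical singular-value sum, and then bounds the per-class error by the same two-event union (the $\delta$-mass outside the concentration ball, treated as fully misclassified, plus the probability that the ball's radius exceeds $D_{\min}^{\z}$), which is exactly the paper's main-part/outside-part decomposition. No gaps.
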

\begin{proof}
Using a NN classifier, a representation $\z$ could be correctly classified if:
\[
     ||\z-\m_{y(\z)}|| \le ||\z - \m_k||
\]
holds for all $k \in [K]\backslash y(\z)$. 

Using the result of Lem. \ref{lem:subgaussian_radius}, for any class $j$ and $\delta > 0$, the expected distance of the intra-class distance is bounded with probability at least $1-\delta$:
\begin{align*}
  \mathbb E_{\z: y(\z) = j} [\|\z-\m_{y(\z)}\|] &\le \frac{\sum_{i=1}^d\sigma^j_i(Z_c)} {\sqrt{N_j-1}} \\&+ \sqrt{C_1R^2d\left(\log(2/\delta) + \sqrt{\log(2/\delta)}\right) + \tilde{C}R^2
  \left(
  d\sqrt{\frac{\log(8/\delta)}{N_{j}}} +  \frac{d + \log(\tfrac{8}{\delta})}{N_{j}}
  \right)}.
\end{align*}

For the sake of simplicity, we define
\[
C_\delta^j =  \sqrt{C_1R^2d\left(\log(2/\delta) + \sqrt{\log(2/\delta)}\right) + \tilde C\;R^2 \,
  \left(
  d\sqrt{\frac{\log(8/\delta)}{N_{j}}} +  \frac{d + \log(\tfrac{8}{\delta})}{N_{j}}
  \right)} .
\]
For the representations in the $j$-th class, we separate it into two parts: the main part $Z^j_m$ in which all examples lie in the radius, and the outside part $Z^j_o = Z^j \backslash Z^j_m$. From the concentration property, we have $Z^j = Z^j_m \cup Z^j_o$ and $P_{\z \in Z^j}(\z \in Z^j_m) \ge 1-\delta$.

For the main part, the accuracy of the NN classifier on the $j$-th class could be calculated by:
\[
P_{\z \in Z^j_m}\left(
\frac{\sum_{i=1}^d\sigma_i(Z_c^j)} {\sqrt{(N_j-1)}} + C_\delta^j \le \min_{k \in [K]\backslash j}||\z - \m_k||\right).
\]
Thus, the error rate of the $j$-th class should be at least (assuming all the representations in the outside part are misclassified):
\[
\text{Err}^j_\mathcal D(\ft)\le \delta + P_{\z \in Z^j}\left(
\frac{\sum_{i=1}^d\sigma_i(Z_c^j)} {\sqrt{(N_j-1)}} + C_\delta^j > \min_{k \in [K]\backslash j}||\z - \m_k||\right).
\]
Rearrange the terms and take an expectation on $j$ yields the result.
\end{proof}

\subsection{Proof of Corollary \ref{cor:main}}
We first introduce some useful lemmas for the proof.

\begin{lemma}[Sub-Gaussian Norm Concentration \cite{vershynin2018high})]
\label{lem:subgauss_norm}
For $R$-sub-Gaussian vectors $X \in \mathbb{R}^d$:
\[
\mathbb{P}\left(\|X\| \geq R\sqrt{d} + t\right) \leq 2\exp\left(-\frac{t^2}{2R^2}\right).
\]
\end{lemma}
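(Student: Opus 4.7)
The plan is to decompose the one-sided tail into a mean-control step and a concentration-around-the-mean step, and then combine them via a simple inclusion of events. First, I would bound $\mathbb{E}\|X\|$ by using that each coordinate $X_i = \langle e_i, X\rangle$ is a one-dimensional $R$-sub-Gaussian variable (as a linear marginal of $X$), so $\mathbb{E}[X_i^2] \le R^2$. Summing over $i$ gives $\mathbb{E}\|X\|^2 \le dR^2$, and Jensen's inequality then yields $\mathbb{E}\|X\| \le R\sqrt{d}$. This is the term that accounts for the $R\sqrt{d}$ offset in the statement.

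Second, I would establish concentration of $\|X\|$ around $\mathbb{E}\|X\|$. Since $x \mapsto \|x\|$ is $1$-Lipschitz, the task reduces to Lipschitz concentration of a sub-Gaussian vector, namely
\[
\mathbb{P}\bigl(\|X\| - \mathbb{E}\|X\| \ge t\bigr) \le \exp\bigl(-t^2/(2R^2)\bigr).
\]
This can be proved either by the entropy method (Herbst argument applied to a log-Sobolev-type inequality for the sub-Gaussian product measure), or more elementarily by an $\varepsilon$-net argument: write $\|X\| = \sup_{u \in S^{d-1}} \langle u, X\rangle$, take a $\tfrac{1}{2}$-net $\mathcal{N} \subset S^{d-1}$ with $|\mathcal{N}| \le 6^d$, observe $\|X\| \le 2 \max_{u \in \mathcal{N}} \langle u, X\rangle$, and union-bound the sub-Gaussian tails of the linear forms $\langle u, X\rangle$. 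A third route is to note that $\|X\|^2$ is sub-exponential by Lemma \ref{lem:sub-exponential}, apply Bernstein's inequality as in Lemma \ref{lem:bernstein}, and invert using $\sqrt{a+b} \le \sqrt{a}+\sqrt{b}$.

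Combining the two steps is then immediate: since $\mathbb{E}\|X\| \le R\sqrt{d}$, the event $\{\|X\| \ge R\sqrt{d} + t\}$ is contained in $\{\|X\| - \mathbb{E}\|X\| \ge t\}$, so
\[
\mathbb{P}\bigl(\|X\| \ge R\sqrt{d} + t\bigr) \;\le\; \mathbb{P}\bigl(\|X\| - \mathbb{E}\|X\| \ge t\bigr) \;\le\; 2\exp\bigl(-t^2/(2R^2)\bigr),
\]
with the factor $2$ absorbing either sidedness in the concentration step or slack from the decomposition.

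The main obstacle is achieving the sharp constant $1/(2R^2)$ in the exponent. The $\varepsilon$-net approach naturally incurs a worse constant, both from the covering-number overhead $6^d$ and from the factor of $2$ in the $\tfrac{1}{2}$-net comparison, so one must either jointly optimize the net scale $\varepsilon$ and the threshold $t$, or invoke a sharper Lipschitz-concentration theorem for sub-Gaussian vectors (as in Theorem 5.2.2 of Vershynin \cite{vershynin2018high}). Since the paper already cites Vershynin for this lemma, the cleanest presentation is to appeal to that reference for the sharp form of Step 2, while the decomposition above provides the conceptual skeleton.
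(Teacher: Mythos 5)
The paper offers no proof of this lemma at all: it is imported verbatim as a known result with a citation to Vershynin, exactly like Lemmas \ref{lem:bernstein} and \ref{lem:norm_concentration}. So your fallback position (``appeal to the reference for Step 2'') coincides with what the paper actually does, and your Step 1 is fine: under the vector sub-Gaussian MGF condition used elsewhere in the paper (Lemma \ref{lem:trace}), each marginal $\langle e_i,X\rangle$ is mean-zero $R$-sub-Gaussian, so $\mathbb{E}\|X\|^2\le dR^2$ and Jensen gives $\mathbb{E}\|X\|\le R\sqrt d$.

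The genuine gap is Step 2. Dimension-free Lipschitz concentration with tail $\exp\left(-t^2/(2R^2)\right)$ is a property of the Gaussian measure, not of sub-Gaussian vectors in general: Theorem 5.2.2 of Vershynin, which you invoke for the ``sharp form,'' is stated for $X\sim N(0,I_d)$ only, so citing it here is a misattribution. The Herbst/log-Sobolev route is likewise unavailable, because sub-Gaussianity of the coordinates does not imply a log-Sobolev inequality for the product measure (LSI implies sub-Gaussian tails, but not conversely). The $\varepsilon$-net route, as you concede, only yields the weaker statement $\mathbb{P}\bigl(\|X\|\ge C R(\sqrt d+t)\bigr)\le e^{-t^2}$ with an unspecified absolute constant --- which is precisely the paper's other imported result, Lemma \ref{lem:norm_concentration}. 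And the Bernstein route through $\|X\|^2$ (Vershynin's Theorem 3.1.1) requires \emph{independent} coordinates, an assumption the lemma as stated does not even impose, and produces a two-sided bound of the form $2\exp\bigl(-c\,t^2/K^4\bigr)$ with unspecified absolute constant $c$, not the advertised $1/(2R^2)$; recovering the sub-Gaussian-in-$t$ regime there also needs the $|z-1|\ge\delta\Rightarrow|z^2-1|\ge\max(\delta,\delta^2)$ trick rather than $\sqrt{a+b}\le\sqrt a+\sqrt b$. So none of your three proposed routes actually delivers the inequality with the stated constants; only the Gaussian case does. In fairness, the paper's own statement is already sharper than what its citation supports, and the downstream use in Corollary \ref{cor:main} only needs a bound of this general shape, so the clean fix is to state the lemma with unspecified absolute constants (as the paper does for its other concentration lemmas) or to restrict to Gaussian-type vectors --- but as a proof of the lemma as literally written, your key step is not justified.
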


\begin{lemma}[Intra-class Concentration (Adapted from \cite{wainwright2019high})]
\label{lem:intra_concentration}
For any $R$-sub-Gaussian class $j$ with $N_j$ samples:
\[
\frac{1}{\sqrt{N_j-1}}\sum_{i=1}^d\sigma_i(Z_c^j) \leq R\sqrt{d}\left(1 + \sqrt{\frac{\log(8/\delta)}{N_j}} + \frac{\log(8/\delta)}{N_j}\right)
\]
holds with probability $\geq 1-\delta/4$.
\end{lemma}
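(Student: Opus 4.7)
The plan is to reduce the singular-value sum to a concentration statement about the trace of the sample covariance, and then invoke Lemma \ref{lem:trace}. Writing $\hat\Sigma_j := \frac{1}{N_j-1} Z_c^j (Z_c^j)^{\top}$ for the per-class sample covariance, the singular values of $Z_c^j$ obey $\sigma_i(Z_c^j) = \sqrt{N_j-1}\,\sqrt{\lambda_i(\hat\Sigma_j)}$, so the target quantity is
\[
\frac{1}{\sqrt{N_j-1}}\sum_{i=1}^d \sigma_i(Z_c^j) = \sum_{i=1}^d \sqrt{\lambda_i(\hat\Sigma_j)}.
\]
An application of Cauchy--Schwarz on the at most $d$ nonzero eigenvalues gives
\[
\sum_{i=1}^d \sqrt{\lambda_i(\hat\Sigma_j)} \;\le\; \sqrt{d\cdot \mathrm{tr}(\hat\Sigma_j)},
\]
so it only remains to control $\mathrm{tr}(\hat\Sigma_j)$.

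For that, I would invoke Lemma \ref{lem:trace} (applied with confidence parameter $\delta/2$ so that the failure probability is at most $\delta/4$, at the cost of replacing $\log(8/\delta)$ by $\log(16/\delta)$, which is absorbed into the constant). The population trace is bounded by the sub-Gaussian assumption as $\mathrm{tr}(\Sigma_j)\le R^2$, so with probability at least $1-\delta/4$,
\[
\mathrm{tr}(\hat\Sigma_j) \;\le\; R^2\Bigl(1 + C\sqrt{\log(8/\delta)/N_j} + C\log(8/\delta)/N_j\Bigr).
\]
Substituting into the Cauchy--Schwarz bound and using the elementary inequality $\sqrt{1+a+b}\le 1+a/2+b/2$ for $a,b\ge 0$ yields
\[
\frac{1}{\sqrt{N_j-1}}\sum_{i=1}^d \sigma_i(Z_c^j) \;\le\; R\sqrt{d}\Bigl(1+\tfrac{C}{2}\sqrt{\log(8/\delta)/N_j}+\tfrac{C}{2}\log(8/\delta)/N_j\Bigr),
\]
which matches the stated form once the constant is absorbed.

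The main obstacle is reconciling the $R\sqrt{d}$ scaling with the $R$-sub-Gaussian assumption. Under the standard definition $\mathbb{E}[e^{\alpha^{\top}X}]\le e^{R^2\|\alpha\|^2/2}$ one only has $\|\Sigma\|_{\mathrm{op}}\le R^2$, so $\mathrm{tr}(\Sigma)$ can scale as $dR^2$ and the Cauchy--Schwarz step would give an $Rd$ bound rather than $R\sqrt{d}$. To obtain the tighter scaling I would make an implicit normalization explicit at the start of the proof, namely $\mathbb{E}\|X\|^2 \le R^2$ (equivalently, the per-class embeddings lie inside a ball of effective radius $O(R)$, which is reasonable when representations are $\ell_2$-normalized as is typical in SSL). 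Alternatively, the statement can be rewritten with $\sqrt{\mathrm{tr}(\Sigma_j)}$ in place of $R\sqrt{d}$, which makes the dependence on the spread of the representations transparent without altering the downstream argument.
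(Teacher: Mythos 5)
The paper never actually proves this lemma: it is stated as ``adapted from'' the cited textbook and then invoked directly in the proof of Cor.~\ref{cor:main}, so there is no in-paper argument to compare against. Judged on its own, your route --- writing $\tfrac{1}{\sqrt{N_j-1}}\sum_i\sigma_i(Z_c^j)=\sum_i\sqrt{\lambda_i(\hat\Sigma_j)}$, applying Cauchy--Schwarz to get $\sqrt{d\,\mathrm{tr}(\hat\Sigma_j)}$, and then controlling the empirical trace via Lemma~\ref{lem:trace} --- is the natural one, and your diagnosis of the scaling issue is correct: under the paper's own definition $\mathbb{E}[e^{\alpha^\top X}]\le e^{R^2\|\alpha\|^2/2}$ one only gets $\mathrm{tr}(\Sigma_j)\le dR^2$, so this argument honestly yields $O(Rd)$, and the stated $R\sqrt{d}$ bound in fact fails for an isotropic distribution with covariance $R^2 I_d$, where $\sum_i\sqrt{\lambda_i}=Rd$. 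Making the normalization $\mathbb{E}\|X\|^2\le R^2$ explicit (or restating the bound with $\sqrt{\mathrm{tr}(\Sigma_j)}$ in place of $R\sqrt{d}$) is the right repair and is consistent with how the lemma is consumed downstream.

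The one genuine gap in your own write-up is the trace-concentration step. Lemma~\ref{lem:trace} is proved under the coordinate-scale-$R$ assumption, and its deviation term is $\tilde C R^2\bigl(d\sqrt{\log(8/\delta)/N}+(d+\log(8/\delta))/N\bigr)$; the factor $d$ there is a parameter, not a constant, so it cannot be ``absorbed.'' Plugging that lemma in verbatim gives $\sqrt{d\,\mathrm{tr}(\hat\Sigma_j)}\le R\sqrt{d}\,\bigl(1+\tilde C d\sqrt{\log(8/\delta)/N_j}+\cdots\bigr)^{1/2}$, which does not match the claimed form, so your displayed bound $\mathrm{tr}(\hat\Sigma_j)\le R^2\bigl(1+C\sqrt{\log(8/\delta)/N_j}+C\log(8/\delta)/N_j\bigr)$ silently drops the $d$'s. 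Once you impose $\mathbb{E}\|X\|^2\le R^2$ you must also re-derive the trace concentration at that scale (e.g., assume $\|\,\|X\|^2\,\|_{\psi_1}\lesssim R^2$ and rerun the Bernstein argument of Lemma~\ref{lem:trace}), after which the deviation is of order $R^2\bigl(\sqrt{\log(1/\delta)/N_j}+\log(1/\delta)/N_j\bigr)$ and your final step via $\sqrt{1+a+b}\le 1+a/2+b/2$ goes through. With that adjustment your proposal is sound; as written, mixing the old lemma with the new normalization is internally inconsistent, and it is worth stating plainly that the lemma as printed is not derivable from the paper's stated assumption alone.
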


\begin{lemma}[Dimensional Scaling of the Concentration Term]
\label{lem:concentration_scaling}
The concentration term $C_\delta^j$ admits the dimensional scaling:
\[
C_\delta^j \leq R\sqrt{ \underbrace{\sqrt{Cd\log(2/\delta)}}_{\text{Sub-Gaussian term}} + \underbrace{C'd}_{\text{Covariance term}} } + \mathcal{O}\left(R\sqrt{\frac{d}{N_j}}\right).
\]
For $d \geq \log(8/\delta)$, this simplifies to $C_\delta^j \leq \sqrt{3} R\sqrt{d}$ with probability $\geq 1-\delta/4$.
\end{lemma}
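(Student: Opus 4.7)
\textbf{Proof proposal for Lemma \ref{lem:concentration_scaling}.} The plan is to start from the explicit closed form of $C_\delta^j$ that was derived inside the proof of Theorem~\ref{thm:main},
\[
C_\delta^j = \sqrt{C_1 R^2 d\bigl(\log(2/\delta)+\sqrt{\log(2/\delta)}\bigr) + \tilde C R^2\Bigl(d\sqrt{\log(8/\delta)/N_j} + (d+\log(8/\delta))/N_j\Bigr)},
\]
and apply the elementary inequality $\sqrt{a+b}\le\sqrt{a}+\sqrt{b}$ to peel the sub-Gaussian contribution (inherited from Lemma~\ref{lem:subgaussian_radius}) apart from the covariance-estimation contribution (inherited from Lemma~\ref{lem:trace}). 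The former gives the first underbraced ``Sub-Gaussian term'' and the latter the ``Covariance term,'' with leftover $N_j$-dependent cross pieces absorbed into the $\mathcal{O}(R\sqrt{d/N_j})$ remainder.

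After splitting, I would bound the first piece using $\log(2/\delta)+\sqrt{\log(2/\delta)}\le 2\log(2/\delta)$ (valid once $\log(2/\delta)\ge 1$), obtaining a term of the form $R\sqrt{Cd\log(2/\delta)}$ for an absolute constant $C$; this matches the first underbrace up to what appears to be a superfluous inner $\sqrt{\cdot}$ in the statement. For the second piece, the assumption $d\ge\log(8/\delta)$ yields $d+\log(8/\delta)\le 2d$, and writing the remaining $d\sqrt{\log(8/\delta)/N_j}$ as $d\cdot(\log(8/\delta)/N_j)^{1/2}$ lets me bound that contribution by $R\sqrt{C'd}+\mathcal O(R\sqrt{d/N_j})$ after another application of $\sqrt{a+b}\le\sqrt{a}+\sqrt{b}$. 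Collecting the two halves yields the first stated inequality.

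To produce the clean simplification $C_\delta^j\le\sqrt{3}R\sqrt{d}$, I would factor $R\sqrt{d}$ outside the original square root and bound the residual coefficient
\[
C_1\bigl(\log(2/\delta)+\sqrt{\log(2/\delta)}\bigr)+\tilde C\Bigl(\sqrt{\log(8/\delta)/N_j}+1/N_j+\log(8/\delta)/(dN_j)\Bigr)
\]
by $3$. Under $d\ge\log(8/\delta)$ the last summand is at most $1/N_j$, and the $\log(2/\delta)$ pieces are implicitly rolled into the constant $\tilde C_\delta$ of the downstream corollary. A union bound over the events of Lemmas~\ref{lem:subgaussian_radius} and~\ref{lem:trace} (each holding with probability at least $1-\delta/4$) delivers the claimed $1-\delta/4$ guarantee, since each concentration event controls its respective half of the bound.

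The main obstacle is transparent constant tracking: the lemma statement telescopes several distinct logarithmic factors into the single symbol $C_\delta^j$ and then further into an absolute $\sqrt{3}$, so the delicate step is verifying that the constants $C_1,\tilde C$ inherited from Bernstein and the Chernoff bound for sub-exponential tails can indeed be chosen so that the residual coefficient does not exceed $3$ in the claimed regime. Once this bookkeeping is done, the proof is just an algebraic rearrangement of already-established concentration bounds, and feeds directly into Corollary~\ref{cor:main} by matching the $\sqrt{3}R\sqrt{d}$ term in the margin hypothesis $\min_{k\ne y(\z)}\|\m_{y(\z)}-\m_k\| > \sqrt{d}R(2+\sqrt{\log(8/\delta)/N_j}+\sqrt{3})$.
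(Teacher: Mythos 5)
The paper never proves this lemma: it is one of three auxiliary statements (together with Lemmas \ref{lem:subgauss_norm} and \ref{lem:intra_concentration}) that are asserted without proof immediately before the proof of Corollary \ref{cor:main}, so there is no official argument to compare yours against. Your starting point — the explicit formula $C_\delta^j = \sqrt{C_1R^2d(\log(2/\delta)+\sqrt{\log(2/\delta)}) + \tilde CR^2(d\sqrt{\log(8/\delta)/N_j}+(d+\log(8/\delta))/N_j)}$ from the formal version of Theorem \ref{thm:main} — is the right one, and splitting it with $\sqrt{a+b}\le\sqrt a+\sqrt b$ does recover the first display up to the typo you correctly flag (the inner $\sqrt{\cdot}$ in the ``Sub-Gaussian term'' is spurious, and the $R$ versus $R^2$ placement is inconsistent as printed).

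The genuine gap is in the second claim, and you name the obstacle but do not — and cannot, as stated — close it. After factoring out $R\sqrt d$, the dominant residual coefficient is $C_1(\log(2/\delta)+\sqrt{\log(2/\delta)})$, which is unbounded as $\delta\to 0$; the hypothesis $d\ge\log(8/\delta)$ controls the ratio $\log(8/\delta)/d$ but only yields $\sqrt{C_1 d\log(2/\delta)}\le\sqrt{C_1}\,d$, i.e.\ $C_\delta^j = O(Rd)$ rather than $O(R\sqrt d)$. Your suggestion that the $\log(2/\delta)$ pieces can be ``rolled into the constant $\tilde C_\delta$ of the downstream corollary'' is not legitimate here, because $C_\delta^j\le\sqrt 3R\sqrt d$ is used as a standalone numerical bound in the margin hypothesis of Corollary \ref{cor:main}; the inequality only holds if one additionally assumes $\log(2/\delta)$ is bounded by an absolute constant compatible with $C_1$ and $\tilde C$, which the lemma does not state. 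Two smaller issues: $C_\delta^j$ is a deterministic function of $(R,\delta,d,N_j)$, so attaching a probability $1-\delta/4$ to a bound on it is a category error you inherit from the statement rather than resolve; and a union bound over two events each of probability $\ge 1-\delta/4$ gives $\ge 1-\delta/2$, not $1-\delta/4$.
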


\begin{corollary}[Formal Version of Corollary \ref{cor:main}]
Under Thm \ref{thm:main}'s assumptions with the condition $\min_{k \ne y(\z)}\|\m_{y(\z)} - \m_k\|  > \sqrt d R\left(2 + \sqrt{\frac{\log(8/\delta)}{N_j}} + \sqrt{3}\right)$, for any $\delta> 0$:
\[
\text{Err}_\mathcal D(\ft) \le \delta + 2K\exp\left(-\tilde C_\delta \cdot d\right),
\]
where $\tilde C_\delta = \frac{\sqrt{\frac{\log(8/\delta)}{N_j}} + \sqrt{3}}{2} > 0$ is a constant.
\end{corollary}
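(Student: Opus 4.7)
The plan is to start from the bound in Theorem~\ref{thm:main}, which already reduces the error rate to $\delta + \tilde P(\delta)$, and then show that under the hypothesized separation condition the random event inside $\tilde P(\delta)$ is essentially a sub-Gaussian tail event on $\|\z - \m_{y(\z)}\|$. The three random quantities involved are (i) the estimated intra-class radius $R_{\text{intra}}^{y(\z)} := \sum_{i}\sigma_i(Z_c^{y(\z)})/\sqrt{N_{y(\z)}-1}$, (ii) the margin term $C_\delta^{y(\z)}$, and (iii) the inter-class distance $\min_{k\ne y(\z)} \|\z - \m_k\|$. My approach is to control (i) and (ii) by deterministic-looking upper bounds of the form $R\sqrt d \cdot (\text{const})$ using the preparatory Lemmas~\ref{lem:intra_concentration} and~\ref{lem:concentration_scaling}, and control (iii) from below via the triangle inequality combined with the hypothesized separation of centers $\m_{y(\z)}, \m_k$.

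Concretely, I would first apply Lemma~\ref{lem:intra_concentration} to obtain $R_{\text{intra}}^{y(\z)} \le R\sqrt d (1 + \sqrt{\log(8/\delta)/N_j} + \log(8/\delta)/N_j)$ with probability $\ge 1-\delta/4$, and Lemma~\ref{lem:concentration_scaling} to obtain $C_\delta^{y(\z)} \le \sqrt 3\,R\sqrt d$ with probability $\ge 1-\delta/4$ (using the regime $d \ge \log(8/\delta)$). Adding these two bounds yields
\[
R_{\text{intra}}^{y(\z)} + C_\delta^{y(\z)} \;\le\; R\sqrt d \Bigl(1 + \sqrt{\tfrac{\log(8/\delta)}{N_j}} + \sqrt 3\Bigr)
\]
with probability $\ge 1-\delta/2$. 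Next, by the triangle inequality, $\|\z - \m_k\| \ge \|\m_{y(\z)} - \m_k\| - \|\z - \m_{y(\z)}\|$, so the event in $\tilde P(\delta)$ can only happen if
\[
\|\z - \m_{y(\z)}\| \;>\; \|\m_{y(\z)} - \m_k\| - R_{\text{intra}}^{y(\z)} - C_\delta^{y(\z)}.
\]
Plugging in the separation hypothesis $\min_{k\ne y(\z)} \|\m_{y(\z)}-\m_k\| > R\sqrt d\,(2 + \sqrt{\log(8/\delta)/N_j} + \sqrt 3)$ and the upper bound just obtained, the right-hand side is at least $R\sqrt d \cdot (1 + \eta)$ for some positive slack $\eta$ determined by the strict inequality in the hypothesis.

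Finally, I would apply the sub-Gaussian norm concentration (Lemma~\ref{lem:subgauss_norm}) to $\z - \m_{y(\z)}$: setting $t$ so that $R\sqrt d + t$ matches the margin above yields a tail probability of the form $2\exp(-t^2/(2R^2))$, which with $t$ proportional to $R\sqrt d$ gives the desired $2\exp(-\tilde C_\delta\,d)$ decay. A union bound over the (at most $K$) competing classes inside the $\min_{k\ne y(\z)}$ produces the factor $2K$, and a further union bound with the earlier $1-\delta/2$ event combines with the $\delta$ already present in Theorem~\ref{thm:main} to yield the claimed $\delta + 2K\exp(-\tilde C_\delta\,d)$.

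The main obstacle, I expect, is bookkeeping: choosing the failure-probability budget so that the $\delta/4$ events from Lemmas~\ref{lem:intra_concentration} and \ref{lem:concentration_scaling} and the residual term from Theorem~\ref{thm:main} combine cleanly into a single $\delta$, and pinning down the exact constant $\tilde C_\delta = (\sqrt{\log(8/\delta)/N_j} + \sqrt 3)/2$ by carefully aligning the slack $t$ in Lemma~\ref{lem:subgauss_norm} with the margin extracted from the separation hypothesis. A secondary, more routine difficulty is absorbing the $\log(8/\delta)/N_j$ term in Lemma~\ref{lem:intra_concentration} into the $\sqrt{\log(8/\delta)/N_j}$ term; this requires a mild regime assumption on $N_j$ which is implicit in the statement.
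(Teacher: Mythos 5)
Your proposal follows essentially the same route as the paper's proof: the triangle inequality to convert the misclassification event into a tail event on $\|\z - \m_{y(\z)}\|$, Lemmas~\ref{lem:intra_concentration} and~\ref{lem:concentration_scaling} to bound the estimated radius and margin term by multiples of $R\sqrt d$, Lemma~\ref{lem:subgauss_norm} for the sub-Gaussian tail with $t$ proportional to $R\sqrt d$, and a union bound over the $K$ classes for the factor $2K$. The bookkeeping you flag as the main obstacle (folding the $\delta/4$ failure events and the slack $t$ into the exact constant $\tilde C_\delta$) is precisely the part the paper also handles only loosely, so your outline is a faithful match.
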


\begin{proof}
By the triangle inequality
\[
\|\z - \m_k\| \geq \|\m_{y(\z)} - \m_k\| - \|\z - \m_{y(\z)} \|.
\]

$\z$ would be correctly classified when:
\[
\|\z - \m_{y(\z)}\| \le \min_{k\neq j}\|\m_{y(\z)} - \m_k\| - \frac{\sum_{i=1}^d\sigma_i(Z_c^{y(\z)})} {\sqrt{(N_{y(\z)}-1)}} - C_\delta^{y(\z)}.
\]

Using Lemma \ref{lem:intra_concentration} and Lemma \ref{lem:concentration_scaling}, and denote $\Delta = \min_{k \ne y(\z)}\|\m_{y(\z)} - \m_k\| / \sqrt d$, we have:
\[
\|\z - \m_{y(\z)}\| \le \sqrt d\left(\Delta - R\left(1 + \sqrt{\frac{\log(8/\delta)}{N_j}} + \sqrt{3}\right)\right).
\]
For the sake of simplicity, denote $\tilde R = R\left(1 + \sqrt{\frac{\log(8/\delta)}{N_j}} + \sqrt{3}\right)$. By the concentration of sub-Gaussian norm (\ref{lem:subgauss_norm}), we know:
\[
\mathbb{P}\left(\|\z - \m_{y(\z)}\| \geq R\sqrt{d} + t\right) \leq 2\exp\left(-\frac{t^2}{2R^2}\right).
\]
Selecting $t = (\Delta - \tilde R -R)\sqrt d$ gives:
\[
\mathbb{P}\left(\|\z - \m_{y(\z)}\|\geq \sqrt{d}(\Delta - \tilde R)\right) \leq 2\exp\left(-d\frac{\sqrt{\frac{\log(8/\delta)}{N_j}} + \sqrt{3}}{2}\right).
\]
Denote $\tilde C_\delta = \frac{\sqrt{\frac{\log(8/\delta)}{N_j}} + \sqrt{3}}{2}$ and apply the union bound across all classes yields:
\[
\tilde{P}(\delta) \leq 2K\exp\left(-\tilde C_\delta \cdot d\right).
\]
Immediately, 
\[
\text{Err}_\mathcal D(\ft) \le \delta + 2K\exp\left(-\tilde C_\delta \cdot d\right),
\]
which completes the proof.
\end{proof}
\subsection{Analysis of the Effect of k}
\label{app:effect_k}
In the previous analysis, we assumed the pseudo-label to be accurate for simplicity. In this subsection, we analyze the error introduced by $k \ne C$. From the original error decomposition:
\[
\text{Err}(k)\le \delta + \tilde{P}_C(\delta) = \delta + \tilde{P}_k(\delta) + \underbrace{(\tilde{P}_C(\delta) - \tilde{P}_k(\delta))}_{\Delta(k)}.
\]
It is easy to tell that \(\text{Err}(k)\le \delta + \tilde{P}_k(\delta) \implies \text{Err}(k)\le \delta + \tilde{P}_C(\delta)\) when $\Delta(k) \ge 0$. In the next theorem, we model the relationship between $\Delta(k)$ and $k$.

\begin{theorem}[Error Bound with Clustering Deviation]
\label{thm:revised_delta}
For any $\delta > 0$, the error bound satisfies:
\[
\text{Err} \le \delta + \tilde{P}_k(\delta) + \Delta(k)
\]
where $\Delta(k) := \tilde{P}_C(\delta) - \tilde{P}_k(\delta)$ exhibits the following properties:
\begin{itemize}
    \item $\Delta(k) \ge 0$ when $k > C$.
    \item As $k$ decreases from $C$ to 1, $\Delta(k)$ first increases to a positive peak, then decreases to negative values.
\end{itemize}
\end{theorem}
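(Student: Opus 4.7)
The first inequality is an identity: applying Theorem \ref{thm:main} with the ground-truth labeling (corresponding to $k = C$ clusters) yields $\text{Err} \le \delta + \tilde P_C(\delta)$, and substituting $\tilde P_C(\delta) = \tilde P_k(\delta) + \Delta(k)$ produces the claimed bound for every $k$. The theorem therefore reduces to establishing the two structural properties of $\Delta(k)$, both of which concern how the intra-class radius $\frac{1}{\sqrt{N-1}}\sum_i \sigma_i(Z_c^{y(\z)})$ and the nearest-competing-center distance $\min_{\ell \ne y(\z)} \|\z - \m_\ell\|$ inside $\tilde P(\delta)$ respond to changing the clustering granularity.

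For the regime $k > C$, I plan to model an over-clustering as a refinement in which each true class $j$ is split into sub-clusters $\{Z^{j,s}\}_{s=1}^{k_j}$ with $\sum_j k_j = k$. Two monotonicities drive the argument. First, the intra-radius contribution shrinks under refinement: a law-of-total-variance style inequality bounds the weighted sum of within-sub-cluster traces by the within-class trace, and the nuclear-norm quantity $\sum_i \sigma_i$ inherits the same direction through Cauchy--Schwarz. Second, although the $\arg\min$ over centers now ranges over more candidates and can therefore only decrease, any new competing sub-cluster center of the same true class sits within the \emph{original} class radius of $\z$, so its impact on the event defining $\tilde P(\delta)$ (namely, that the intra-radius plus margin exceeds the nearest inter-cluster distance) is absorbed exactly by the radius shrinkage just established. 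Combining the two delivers $\tilde P_k(\delta) \le \tilde P_C(\delta)$, i.e.\ $\Delta(k) \ge 0$.

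For $k < C$, the argument is a coarsening dual based on a greedy agglomeration model in which the two closest true class means are merged at each step. A first-order analysis around $k = C$ then shows that, when two close classes $A$ and $B$ are glued together, the intra-radius grows only by roughly $\|\m_A - \m_B\|/2$ whereas the relevant nearest-center distance for points in $A \cup B$ jumps from $\|\m_A - \m_B\|$ up to the next-nearest mean distance, a strictly larger increment whenever $A$ and $B$ are well separated from the remaining classes. Hence $\tilde P_k(\delta)$ initially dips below $\tilde P_C(\delta)$, making $\Delta(k)$ positive. Continuing to merge eventually forces glueing of classes with larger and larger separation; once the incremental radius gain exceeds the incremental inter-distance gain, $\tilde P_k$ begins to rise sharply, and for sufficiently small $k$ the intra-class radius effectively spans the whole data while the inter-cluster distances become vacuous, so $\tilde P_k(\delta)$ saturates near $1$ and $\Delta(k)$ turns negative. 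An intermediate-value argument on the discrete curve $k \mapsto \Delta(k)$ then produces the positive peak claimed between $k = C$ and $k = 1$.

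The main obstacle will be making the $k < C$ analysis fully quantitative, since at each merge step two competing first-order effects on $\tilde P_k$ must be controlled simultaneously. I would resolve this by imposing a mild ordering assumption on the pairwise class-mean gaps so that greedy agglomeration is well defined, and then invoking Weyl-type perturbation bounds on the singular-value sums of the block-structured centered matrices to obtain a clean sign comparison between the radius growth and the inter-distance growth at each merge. The $k > C$ half, by contrast, should follow cleanly from the refinement monotonicities described above, with the sub-Gaussian concentration already used in Theorem \ref{thm:main} reused only to control the margin term $C_\delta^{y(\z)}$.
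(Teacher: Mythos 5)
Your proposal follows the same architecture as the paper's argument: the displayed inequality is just the identity $\mathrm{Err} \le \delta + \tilde P_C(\delta) = \delta + \tilde P_k(\delta) + \Delta(k)$, the case $k > C$ is handled by a refinement-monotonicity argument, and the case $k < C$ by tracking successive merges, with early merges of close/overlapping classes pushing $\Delta(k)$ up and later merges of well-separated classes pushing it down --- which is exactly the paper's two-regime analysis via the separation ratio $\rho = d_{12}/(\mathcal R_1 + \mathcal R_2)$. The one substantive divergence is in the over-clustering case. The paper's (one-sentence) argument implicitly counts assignment to \emph{any} sub-cluster of the correct class as a correct prediction, so splitting a class creates no new competitors and $\Delta(k)\ge 0$ follows immediately; you instead keep same-class sub-centers inside the $\arg\min$ and assert that their effect is \emph{absorbed exactly} by the shrinkage of the intra-class radius. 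That cancellation is the crux of your version and is not obviously true: a point sitting near the boundary between two sub-clusters of its own class can see its nearest competing-center distance collapse to a small fraction of the original class radius, while the nuclear-norm radius of its sub-cluster need not shrink by a matching amount, so you would either have to prove this absorption or adopt the paper's convention that same-class sub-clusters do not count as competitors. For $k < C$ your plan and the paper's discussion are at the same heuristic level; the extra machinery you propose (a greedy agglomeration ordering, Weyl-type perturbation bounds, an intermediate-value argument on the discrete curve $k \mapsto \Delta(k)$) goes beyond what the paper actually carries out, which is a qualitative comparison of the overlapping and separated merge regimes, so if anything your route demands more than the paper delivers and should be read as a program for a rigorous version rather than a reproduction of the existing proof.
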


\begin{proof}
Intuitively, the proof starts by $k=C$ and discusses the change of $\Delta(k)$ with two cases: 1) Increasing k leads to the splitting of original clusters, and 2) Decreasing k merges two adjacent clusters.

\textbf{Case 1: $k > C$ (Over-clustering)}
When $k > C$, since over clustering converts some false positives into true positives without affecting negative predictions, $\Delta(k) \ge 0$ naturally holds.

\textbf{Case 2: $k < C$ (Under-clustering)}
Denote the class center and radius as $\m_j := \mathbb{E}[\z|\z \in \mathcal{C}_j]$, $\mathcal{R}_j := \sup_{\z \in \mathcal{C}_j} \|\z - \m_j\|$, respectively. The distance between class centers is then defined as $d_{ij} := \|\m_i - \m_j\|$. When merging $\mathcal{C}_1$ and $\mathcal{C}_2$ into $\mathcal{C}^*$, define the merged center and radius as $\m_*$ and $\mathcal{R}_*$, respectively. 

The impact of merging depends on the separation between classes. Let $\mathcal{C}_1$ and $\mathcal{C}_2$ be two classes with separation ratio $\rho = \frac{d_{12}}{\mathcal{R}_1 + \mathcal{R}_2}$. We distinguish two regimes:
\begin{itemize}
  \item Overlapping Merging ($\rho \ll 1$): Classes are poorly separated, with $d_{12}$ small relative to their radius.
  \item Separated Merging ($\rho \ge 1$): Classes are distinct, with $d_{12}$ comparable to or larger than radius.
\end{itemize}

When merging two overlapping classes ($\rho \ll 1$), define the center and radius of the merged cluster $\mathcal{C}^* = \mathcal{C}_1 \cup \mathcal{C}_2$ as: $\m_* = \frac{N_1\m_1 + N_2\m_2}{N_1 + N_2}$ and $\mathcal{R}_* \le \max(\mathcal{R}_1, \mathcal{R}_2) + \frac{\min(N_1, N_2)}{N_1 + N_2}d_{12}$, respectively. For $\z \notin \mathcal{C}^*$, the minimal distance to other classes improves due to the merged center's shift:
\[
D_{\min}^{\z}(k) \ge D_{\min}^{\z}(C) + \underbrace{\|\m_* - \m_{\text{proj}}\|}_{\text{Gain from center shift}},
\]
where $\m_{\text{proj}}$ is the nearest original center. This increases the margin $D_{\min}^{\z}(k) - \mathcal{R}_*$ for non-merged classes, reducing $\tilde{P}_k(\delta)$. 

For $\z \in \mathcal{C}^*$, the radius $\mathcal{R}_*$ remains comparable to original radius since $d_{12}$ is small. The dominant effect is the elimination of misclassification between $\mathcal{C}_1$ and $\mathcal{C}_2$. Thus, $\tilde{P}_k(\delta) < \tilde{P}_C(\delta)$, resulting in $\Delta(k) > 0$.

When merging well-separated classes ($\rho \ge 1$), the merged radius $\mathcal{R}_* = \max(\mathcal{R}_1, \mathcal{R}_2) + d_{12}$ becomes significantly larger. For $\z \in \mathcal{C}^*$:
\[
D_{\min}^{\z}(k) - \mathcal{R}_* \le \|\z - \m_*\| - \mathcal{R}_* \le \mathcal{R}_1 + \frac{N_2}{N_1 + N_2}d_{12} - d_{12} \ll 0,
\]
increasing $\tilde{P}_k(\delta)$. For $\z \notin \mathcal{C}^*$, the minimal distance may decrease slightly, but the dominant effect is the inflated $\mathcal{R}_*$, leading to $\tilde{P}_k(\delta) > \tilde{P}_C(\delta)$ and $\Delta(k) < 0$.

From above analysis, we see that as $k$ decreases from $C$ to 1, the trajectory of $\Delta(k)$ follows:
\begin{itemize}
    \item \textbf{Initial Mergers:} Overlapping classes are merged first (since $k$-means prioritizes reducing within-cluster variance). This reduces $\tilde{P}_k(\delta)$, causing $\Delta(k) > 0$.
    \item \textbf{Late Merges:} Remaining classes are better separated, and merging them inflates $\mathcal{R}_*$ significantly. $\tilde{P}_k(\delta)$ increases to 1, making $\Delta(k) < 0$.
\end{itemize}
\end{proof}

Since $\Delta(k)$ could possibly be smaller than 0 when $k < C$, in practice, we suggest taking $k$ equal to or slightly larger than the real number of clusters to ensure the tightest bound.
\clearpage
\section{Additional Related Works}
\label{app:related_works}
\textbf{Self-supervised Learning Approaches Beyond Images.}
In recent years, self-supervised learning has been applied to multiple modalities, including video \cite{Liu_2025_CVPR, liu2024not, tong2022videomae}, point clouds \cite{zhang2021self, sauder2019self}, time-series data \cite{zhang2024self,zhang2022self}, and cell images \cite{dai2025exploring, sanchez2023cloome}. While SSL has achieved strong results in these areas, examining whether similar fine-grained performance degradation occurs during training is a valuable direction for future work.

\textbf{Studies on Class Separability.}
In this work, our theoretical analysis uncovers the connection between class separability and downstream performance. We clarify that this connection is not unique to our study; it has been widely explored in the machine learning community \cite{fisher1936use, bengio2013representation, fukunaga2013introduction}. In the context of self-supervised learning, several studies have examined this relationship through the lens of alignment and uniformity in contrastive learning \cite{wang2020understanding}, or through coding rate reduction \cite{yu2020learning}.

Although we do not introduce the concept of class separability, our work makes non-trivial contributions by bridging the downstream performance with measurable factors. Furthermore, we propose practical methods for evaluating the quality of dense representations.

\textbf{Supervised Transferability Estimation.} 
Due to the high computational cost of transfer learning, a large number of studies have emerged to estimate downstream performance without fine-tuning \cite{huang2022frustratingly, jiang2019fantastic, you2021logme, dwivedi2019representation, deshpande2021linearized, bolya2021scalable, agostinelli2022transferability, shao2022not, li2023exploring}. Early approaches focus on approximating the posterior distribution of target datasets \cite{tran2019transferability, Leep, MuLeep}. Many works also leverage the energy score \cite{ETran} or the class separability \cite{xu2023fast, pandy2022transferability} as the metric.
While effective in supervised settings, these methods require labeled data, limiting their applicability to self-supervised scenarios.

\textbf{Concurrent Studies on SSL Degradation.}
We also acknowledge a concurrent study, DINO v3 \cite{dinov3}, which investigates degradation in self-supervised learning. In this paper, we show that the SDD phenomenon is widespread across  sixteen  state-of-the-art methods, datasets, and tasks, and we propose a theoretically grounded metric for performance estimation, model selection, and regularization. In contrast, DINO v3 focuses on degradation within the iBOT/DINO v2 family and introduces gram-matrix distillation to address it. The findings in DINO v3 strongly support the scalability of the SDD phenomenon. Their gram-matrix loss selects an early model with hand-crafted iteration steps as the teacher for correlation distillation, which could be improved by integrating our DSE-based model selection. Thus, the techniques in DINO v3 and those in this paper are likely complementary, and DINO v3 provides valuable scaling evidence for the SDD phenomenon that we do not include here due to resource limitations.

We also note that Wen et al. \cite{cotap} study performance degradation during training from the perspective of insufficient semantic concentration. Their semantic concentration framework effectively mitigates the degradation by improving intra-class compactness.

\clearpage
\section{Detailed Methodology for DSE-regularized Online Optimization}
\label{app:methods}
As mentioned earlier, the DSE metric provides a lower bound on dense performance. Since all operations involved in computing DSE are differentiable, directly optimizing DSE can potentially address the SDD problem effectively.
In practice, we include DSE explicitly as a regularizer in the training process:
\[
\mathcal{L} = \mathcal{L}_{original} - \beta \cdot \text{DSE}.
\]
Although calculating DSE itself is model-agnostic, integrating it into the training procedure requires certain model-specific adjustments. Specifically, all baseline methods in this study use a Joint-Embedding Self-Supervised Learning (JE-SSL) framework, which employs a siamese network with two global views of size $224 \times 224$ during pretraining. Thus, we use the student model (the encoder) to extract dense representations from these global views, and then compute the DSE based on these representations. For approaches involving masked modeling, we additionally feed the unmasked views through the student model to obtain dense representations.

To illustrate how DSE can be integrated into the training process, we take the training procedure of DINO \cite{dino} as an example. We provide the code for the DSE regularizer class in the Supplementary Material. This allows DSE regularization to be easily integrated into any JE-SSL framework by extracting dense representations from the student model and including the DSE regularizer in the loss function.

\begin{algorithm}[h]
    \caption{An example PyTorch pseudocode of DINO with DSE-regularized training.}
    \label{algo:DINO}
     \definecolor{codeblue}{rgb}{0.25,0.5,0.5}
     \lstset{
       basicstyle=\fontsize{7.2pt}{7.2pt}\ttfamily\bfseries,
       commentstyle=\fontsize{7.2pt}{7.2pt}\color{codeblue},
       keywordstyle=\fontsize{7.2pt}{7.2pt},
       morekeywords={dse,loss}, 
     }
 \begin{lstlisting}[language=python]
 # fs, ft: student and teacher encoder
 # gs, gt: student and teacher heads
 # C: center (K)
 # tps, tpt: student and teacher temperatures
 # l, m: network and center momentum rates
 # DSE: DSE Estimator
 # a: weight of DSE regularization loss
 
 ft.params = fs.params
 gt.params = gs.params
 for x in loader: # load a minibatch x with n samples
     x1, x2 = augment(x), augment(x) # random views
     z1, z2 = fs(x1), fs(x2) # student output n-by-(p+1)-by-d
     z1_cls, z1_patch = z1[:,0], z1[:,1:]
     z2_cls, z2_patch = z2[:,0], z2[:,1:] # extract cls and patch tokens

     z_patch = Concat(z1_patch, z2_patch)

     s1, s2 = gs(z1_cls), gs(z2_cls) # student output n-by-K
     t1, t2 = gt(ft(x1)), gt(ft(x2)) # teacher output n-by-K
     
     dse = - DSE(z_patch)
     loss = H(t1, s2)/2 + H(t2, s1)/2 + a * dse
     loss.backward() # back-propagate
 
     # student, teacher and center updates
     update(gs) # SGD
     gt.params = l*gt.params + (1-l)*gs.params
     C = m*C + (1-m)*cat([t1, t2]).mean(dim=0)
 
 def H(t, s):
     t = t.detach() # stop gradient
     s = softmax(s / tps, dim=1)
     t = softmax((t - C) / tpt, dim=1) # center + sharpen
     return - (t * log(s)).sum(dim=1).mean()
 \end{lstlisting}
 \end{algorithm}

\clearpage
\section{Detailed Settings}
\label{app:setting}
\subsection{Pretraining}
For pretraining, we reimplement all methods based on their original settings, but we disable automatic mixed precision (AMP) in I-JEPA \cite{ijepa} to prevent training instability. All models are trained for 800 epochs on ImageNet-1k \cite{krizhevsky2012imagenet}, except for SwAV \cite{swav}, VICReg \cite{vicreg}, VICRegL \cite{bardes2022vicregl}, and DenseCL \cite{densecl}. We observe model collapse when training SwAV for more epochs, and take the default settings for DenseCL, VICReg and VICRegL. The full pretraining hyperparameters are listed in Tab. \ref{tab:pretrain}.

\begin{table*}[htbp]
  \centering
  \caption{The hyperparameters used for pretraining.} 
  \label{tab:pretrain} 
  \resizebox{0.9\linewidth}{!}{
    \begin{tabular}{ll |llllll} 
      \toprule
      {Method} & {Architecture} & Learning Rate &Optimizer &Warm-up Epochs &Epochs & Batch size & Image size \\ 
      \midrule
         
         MoCo v3 \cite{mocov3} & ViT-Small-16 & 1.5e-4 & AdamW & 40 & 800 & 4096 & $2\times224^2$ \\ 
         DenseCL \cite{densecl} & ResNet-50 & 0.03 & SGD & 0 & 200 & 256 & $2\times224^2$ \\ 
         \byol & ResNet-50 & 0.2 & LARS & 10 & 800 & 1024 & $2\times224^2$ \\ 
         SimSiam \cite{simsiam} & ResNet-50 & 0.5 & SGD & 10 & 800 & 512 & $2\times224^2$ \\ 
         EsViT \cite{esvit} & Swin-Tiny-7 & 5e-4 & AdamW & 5 & 800 & 1024 & $2\times224^2$ \\ 
         MEC \cite{Mec} & ResNet-50 & 0.5 & SGD & 10 & 800 & 512 & $2\times224^2$ \\ 
         \barlowtwins & ResNet-50 & 0.2 & LARS & 10 & 800 & 2048 & $2\times224^2$ \\      
         VICReg \cite{vicreg} & ResNet-50 & 0.2 & LARS & 10 & 1000 & 2048 & $2\times224^2$ \\       
         \vicregl & ResNet-50 & 0.2 & LARS & 10 & 300 & 2048 & $2\times224^2$ \\     
         SwAV \cite{swav} & ResNet-50 & 0.6 & SGD & 0 & 400 & 512 & $2\times224^2+6\times96^2$ \\ 
         DINO \cite{dino} & ViT-Small-16 & 5e-4 & AdamW & 10 & 800 & 1024 & $2\times 224^2 + 10 \times 96^2$ \\ 
         iBOT \cite{ibot} & ViT-Small-16 & 5e-4 & AdamW & 10 & 800 & 1024 & $2\times 224^2 + 10 \times 96^2$ \\ 
         \mugs & ViT-Small-16 & 8e-4 & AdamW & 10 & 800 & 1024 & $2\times 224^2 + 10 \times 96^2$ \\ 
         \resa & ResNet-50 & 0.5 & SGD & 2 & 800 & 1024 & $2\times224^2$ \\  
         MAE \cite{mae} & ViT-Small-16 & 1.5e-4 & AdamW & 40 & 800 & 4096 & $224^2$ \\ 
         I-JEPA \cite{ijepa} & ViT-Base-16 & 1e-3 & AdamW & 15 & 800 & 2048 & $224^2$ \\
      \bottomrule
    \end{tabular}
}
\end{table*}

For readers' convenience, we provide a brief introduction to these methods as follows:
\begin{itemize}
    \item \textbf{MoCo v3} \cite{mocov3}: Improves contrastive learning stability and performance by combining MoCo \cite{moco} with ViTs.
    
    \item \textbf{DenseCL} \cite{densecl}: Enhances self-supervised learning by applying contrastive loss \cite{infonce} at the dense feature level, enabling better dense-level representation learning.
    
    \item \textbf{BYOL}\footnote{For BYOL, we use the pytorch implementation from \url{https://github.com/sthalles/PyTorch-BYOL}} \cite{byol}: Explores non-contrastive SSL with an asymmetric design including an extra prediction head and a stop-gradient mechanism to avoid collapse.
    
    \item \textbf{SimSiam} \cite{simsiam}: Proposes a simple siamese network for self-supervised learning without negative samples, relying on stop-gradient and predictor mechanisms to avoid collapsing solutions.
    
    \item \textbf{EsViT} \cite{esvit}: Enhances self-supervised ViT training by combining masked image modeling with contrastive learning for improved efficiency and scalability.
    
    \item \textbf{MEC} \cite{Mec}: Introduces the idea of maximum entropy encoding that explicitly optimizes on the structure of the representations, leading to generalizable representations.
    
    \item \textbf{Barlow Twins} \cite{barlowtwins}: Uses an invariance term and a redundancy-reduction term to optimize representation structure, effectively preventing collapse.

    \item \textbf{VICReg} \cite{vicreg}: Builds on Barlow Twins by adding a variance regularization loss to keep the representation space well spread.
    
    \item \textbf{VICRegL} \cite{bardes2022vicregl}: Extends VICReg to local features, learning strong dense representations while maintaining image-level quality; it also shows a trade-off between coarse and fine-grained performance.
    
    \item \textbf{SwAV} \cite{swav}: Combines online clustering with SSL, using swapped prediction between cluster assignments to learn meaningful representations without pairwise comparisons. Assigning representations to clusters naturally enhances the class separability.
    
    \item \textbf{DINO} \cite{dino}: Leverages self-distillation with ViTs and a teacher-student framework to achieve strong instance-level representations.
    
    \item \textbf{iBOT} \cite{ibot}: Integrates masked image modeling with self-distillation in features space of ViTs, enabling joint learning of global and local visual representations.
    
    \item \textbf{Mugs} \cite{mugs}: Proposes a multi-granularity discriminative framework that performs well on both instance-level and dense prediction tasks.
    
    \item \textbf{ReSA} \cite{resa}: Improves clustering-based SSL by learning cluster assignments directly from encoder features, removing the need for prototypes.
    
    \item \textbf{MAE} \cite{mae}: Introduces a simple and scalable masked autoencoder framework for self-supervised learning, reconstructing masked patches from visible ones.
    
    \item \textbf{I-JEPA} \cite{ijepa}: Proposes a joint-embedding predictive architecture for self-supervised learning, focusing on predicting representations of masked regions in an abstract latent space.
\end{itemize}

\subsection{Evaluation}
\textbf{Linear Semantic Segmentation.} 
To assess dense representation quality, we adopt the standard linear evaluation protocol standard in SSL \cite{dinov2, leopart, ibot, mugs}. We evaluate four benchmarks: COCO-Stuff27 \cite{cocostuff}, PASCAL VOC \cite{voc}, ADE20k \cite{ade20k}, and Cityscapes \cite{cordts2016cityscapes}. We remove projectors, and train only a lightweight classifier on dense features. For all models, we use the last layer's patch embeddings. Input images are resized to $336\times 336$ (except $896\times 896$ for Cityscapes to preserve detail) and the classification head is trained on 100,000 images. The head is optimized using a batch size of 256 (64 for Cityscapes due to GPU memory limitation) and a learning rate of $0.01 \times \sqrt{\text{batch size} / 256}$ using an Adam \cite{kingma2014adam} optimizer.

In the linear probing setup, the backbone is kept fixed, and only a small classifier is trained using the dense features. In the linear transfer learning setup, the backbone is fine-tuned with a lower learning rate.

\textbf{Semi-supervised Video Object Segmentation.} We evaluate the semi-supervised video object segmentation on the DAVIS 2017 dataset \cite{davis} following the \cite{dino}. During testing, the labels of the first frame are given, and we propagate predictions to subsequent frames via $k$-NN similarity matching of dense features.

\textbf{$k$-NN Image Classification. (for the motivation)} Following the standard $k$-NN setting in DINO \cite{dino}, we apply $k$-NN classification using class tokens (or averaged patch tokens if no class token exists) mapped to ImageNet-1k labels.

\subsection{Kendall's $\tau$ Coefficient}
We measure the correlation between our metric and downstream performance using Kendall's $\tau$ coefficient, which is a standard metric for transferability analysis \cite{you2021logme, ETran, xu2023fast}:
\begin{equation*}
    \tau = \frac {2}{N (N-1)}\sum_{1 \le i < j \le N} \text{sign} (M_i - M_j) \text{sign} (P_i - P_j).
\end{equation*}
Here, $N$ is the number of checkpoints, $M_i, P_i$ denote the value of metrics and downstream performance calculated using $i$-th checkpoint, respectively. When $\tau = 1$, the proposed metric is perfectly aligned with the downstream performance, and $\tau = -1$ represents that the proposed metric is inversely correlated with the downstream performance. 

\subsection{Details in DSE Calculation} 
For metric calculation, we use $2048$ randomly sampled images from the pertaining dataset (ImageNet-1k), and no extra data is introduced for performance estimation. As standard testing protocol, the images are first resized to $256\times256$, and then center cropped into $224\times 224$ and normalized. For pseudo-label generation, the $k$-means clustering is applied. To improve robustness, we compute $M_{intra}$ by averaging the results obtained by $B=1$ and $B=8$. We set $k=3$ and $k=24$ for $B=1$ and $B=8$, respectively. Sensitivity analyses of these parameters are discussed in later sections.

\subsection{Computational Resources}
\label{app:resources}
All pretraining are conducted on 8 NVIDIA A100 GPUs, the evaluation is done on 8 NVIDIA 4090 GPUs, and the DSE metric is computed on a single NVIDIA 4090 GPU. The results in Tab. \ref{tab:time} are obtained by averaging the time cost of 5 runs on the 4090 GPU.
\clearpage
\clearpage
\section{Additional Experiment Results of the SDD Phenomenon}  
\label{app:sdd}
\subsection{The SDD Phenomenon Exists Across Datasets}  
To comprehensively analyze the SDD phenomenon, we visualize training curves for the state-of-the-art methods on the PASCAL VOC, COCO-Stuff, ADE20k, and Cityscapes datasets in Fig.~\ref{fig:SDD_voc_app}, Fig.~\ref{fig:SDD_coco}, Fig.~\ref{fig:SDD_ade}, and Fig.~\ref{fig:SDD_cityscapes}, respectively. The SDD phenomenon persists consistently across all datasets, with training curves exhibiting similar degradation patterns. These results confirm that SDD is dataset-agnostic and closely linked to dense representation quality.  

\begin{figure}[H]
    \centering
    \begin{subfigure}{0.19\textwidth}
        \centering
        \includegraphics[width=\linewidth]{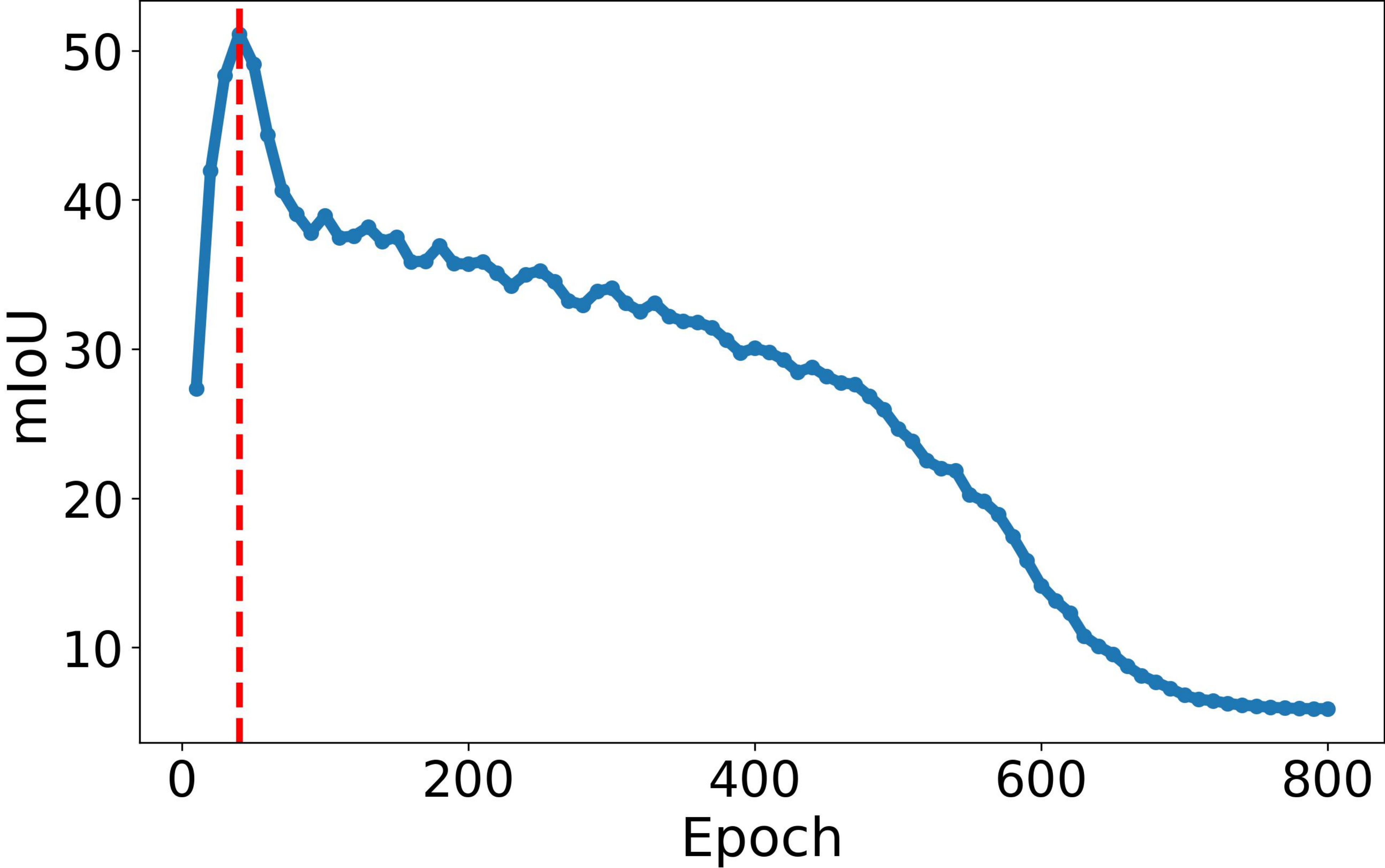}
        \caption{\moco}
    \end{subfigure}
    \hfill
    \begin{subfigure}{0.19\textwidth}
        \centering
        \includegraphics[width=\linewidth]{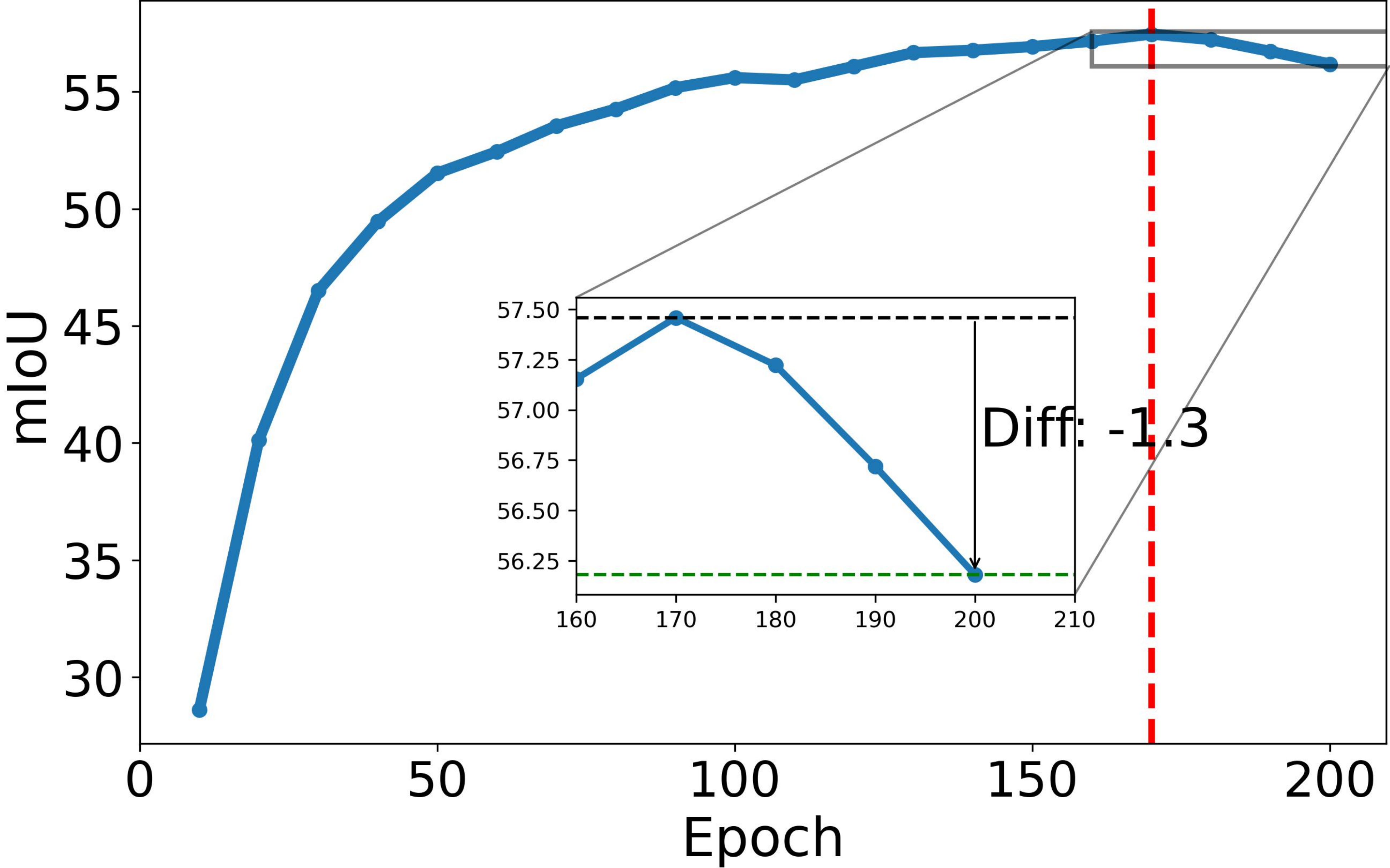}
        \caption{\densecl}
    \end{subfigure}
    \hfill
    \begin{subfigure}{0.19\textwidth}
        \centering
        \includegraphics[width=\linewidth]{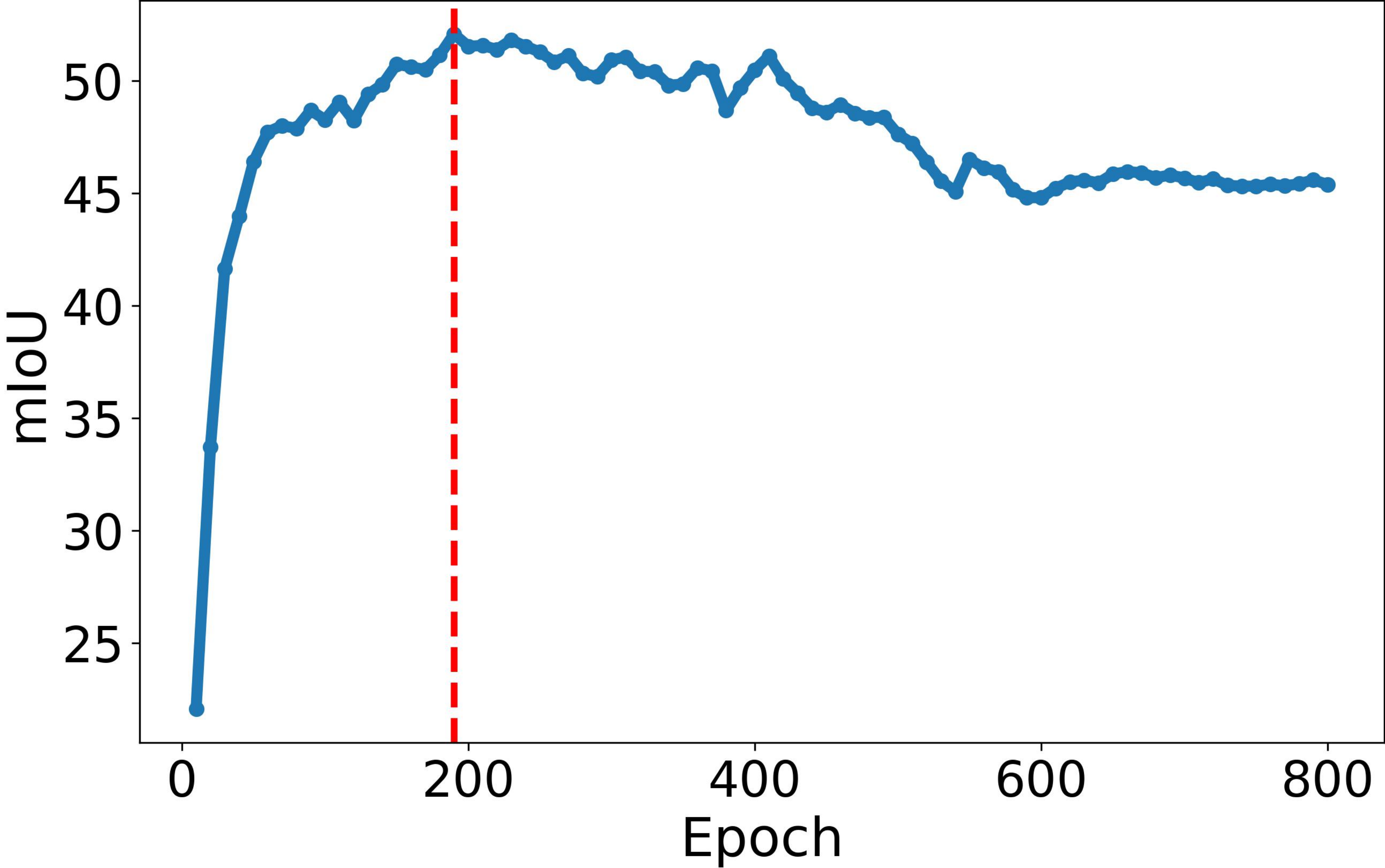}
        \caption{\byol}
    \end{subfigure}
    \hfill
    \begin{subfigure}{0.19\textwidth}
        \centering
        \includegraphics[width=\linewidth]{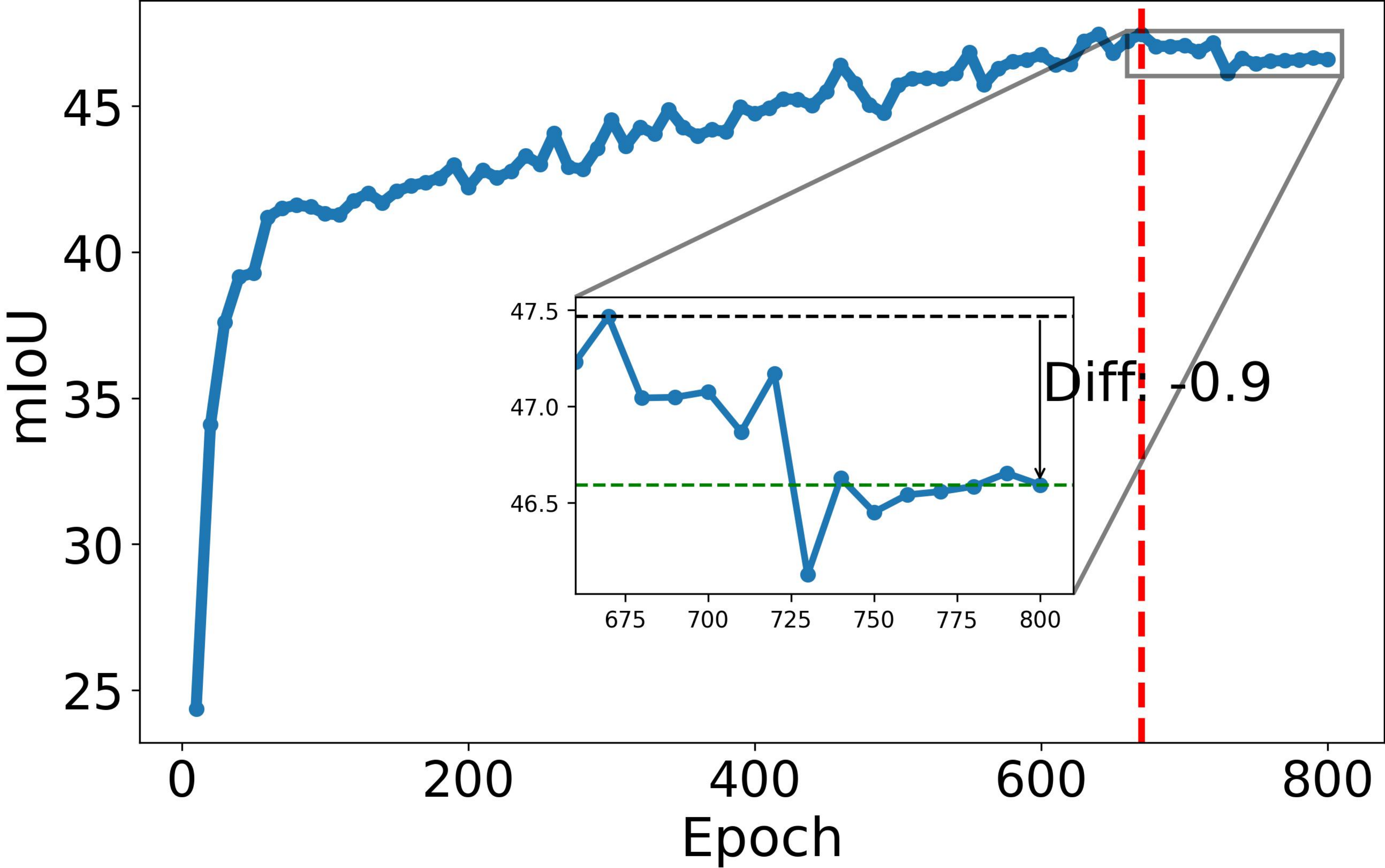}
        \caption{\simsiam}
    \end{subfigure}
    \hfill
    \begin{subfigure}{0.19\textwidth}
        \centering
        \includegraphics[width=\linewidth]{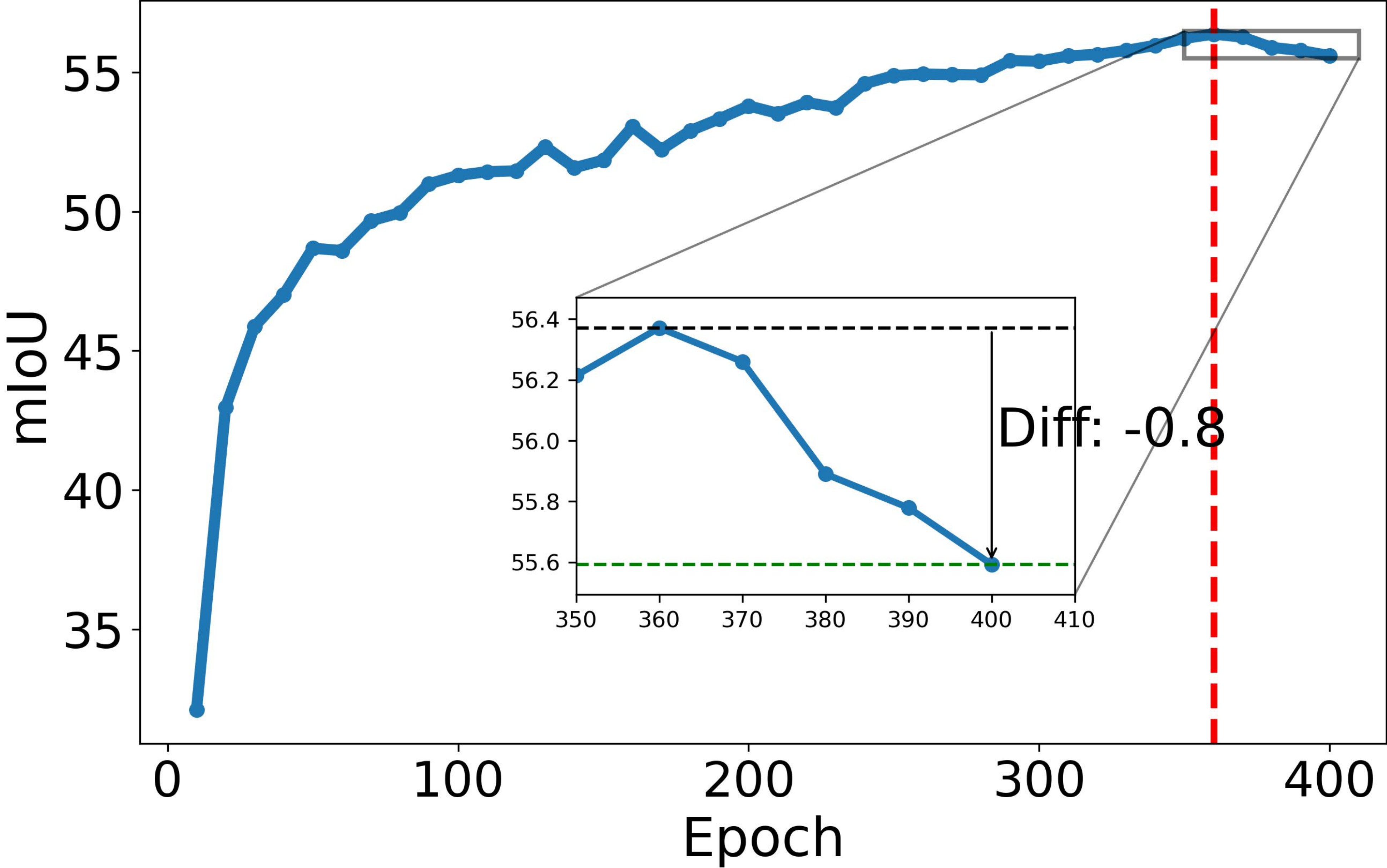}
        \caption{\swav}
    \end{subfigure}
    \vspace{0.15cm}
    \begin{subfigure}{0.19\textwidth}
        \centering
        \includegraphics[width=\linewidth]{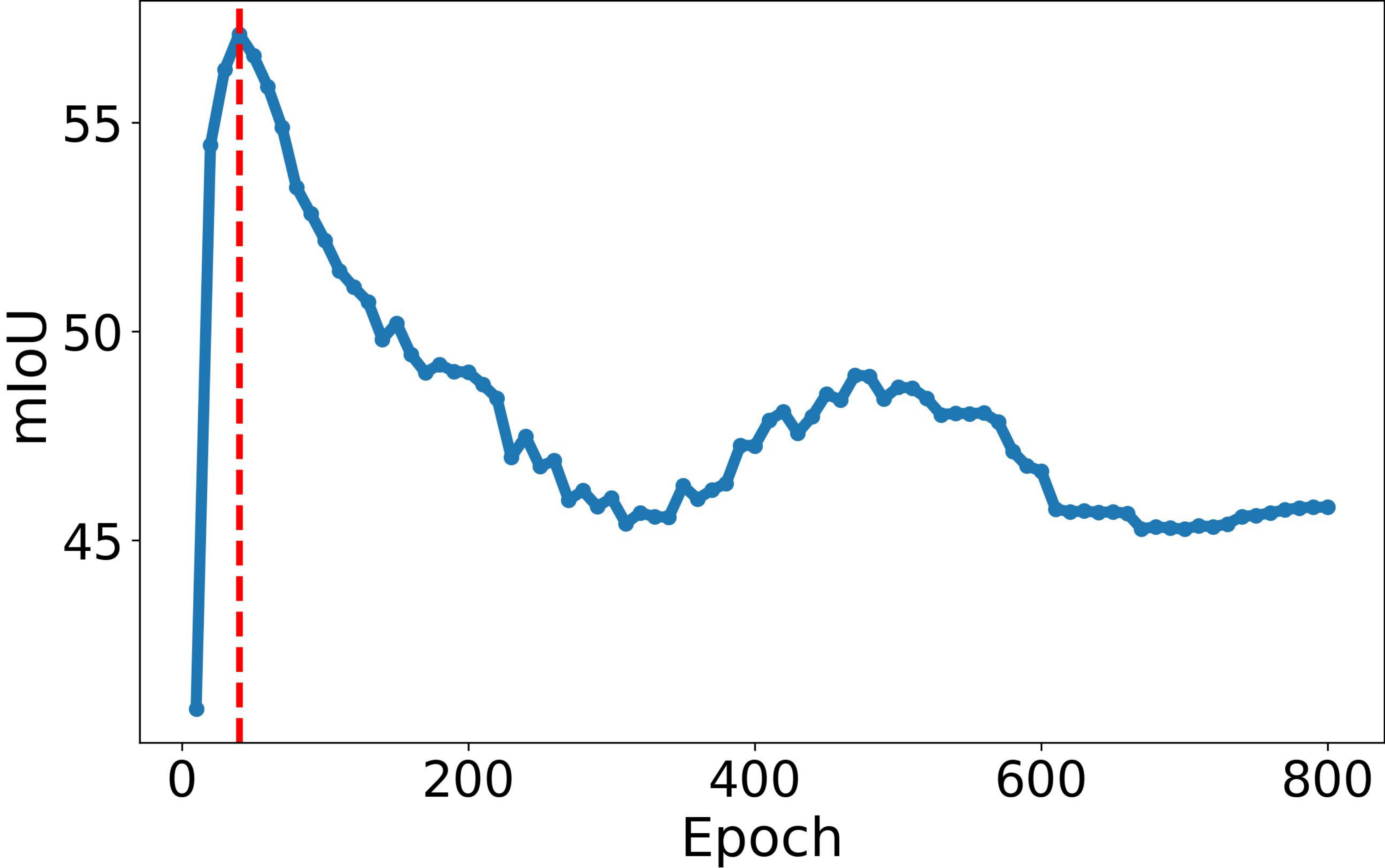}
        \caption{\dino}
    \end{subfigure}
    \hfill
    \begin{subfigure}{0.19\textwidth}
        \centering
        \includegraphics[width=\linewidth]{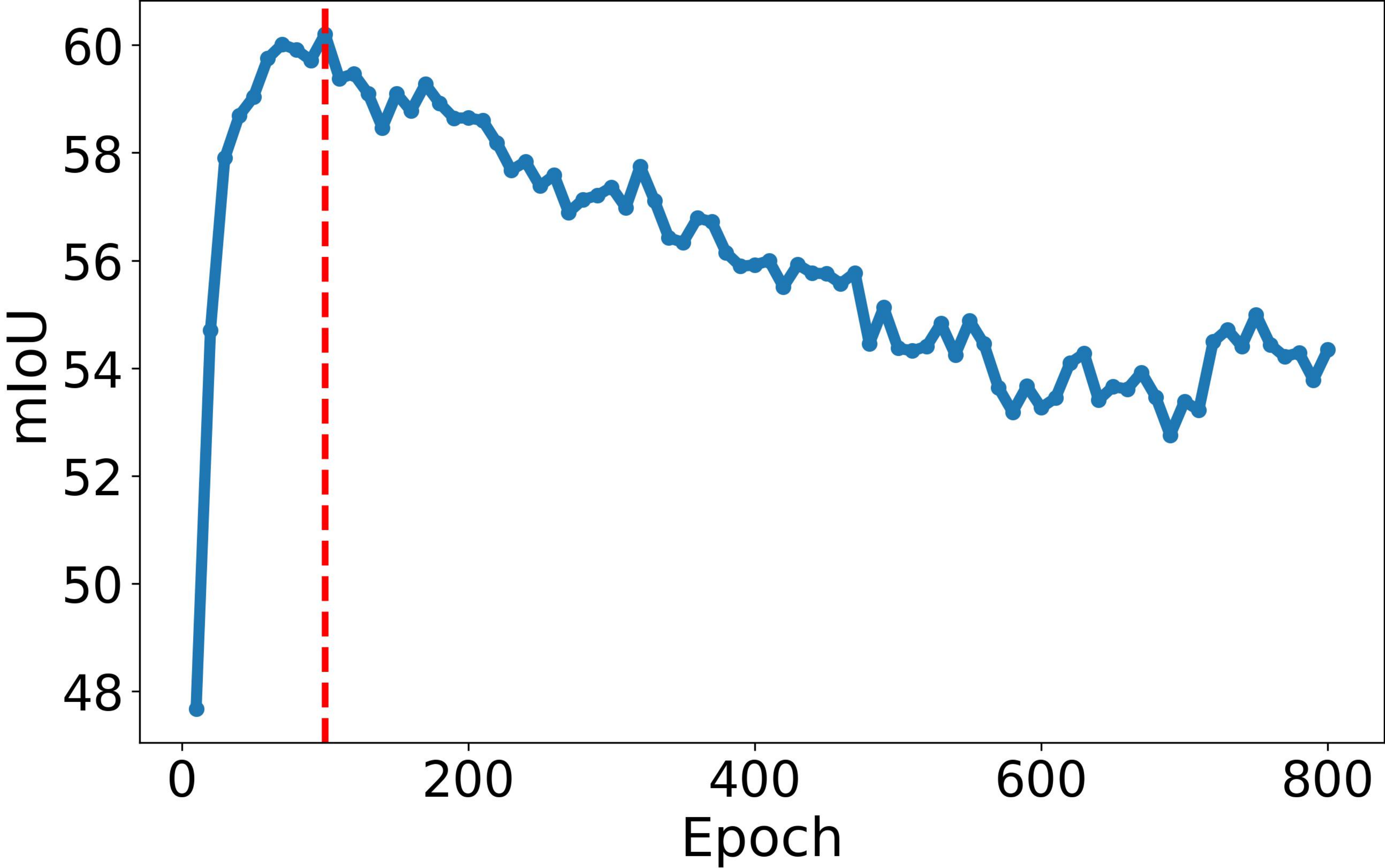}
        \caption{\esvit}
    \end{subfigure}
    \hfill
    \begin{subfigure}{0.19\textwidth}
        \centering
        \includegraphics[width=\linewidth]{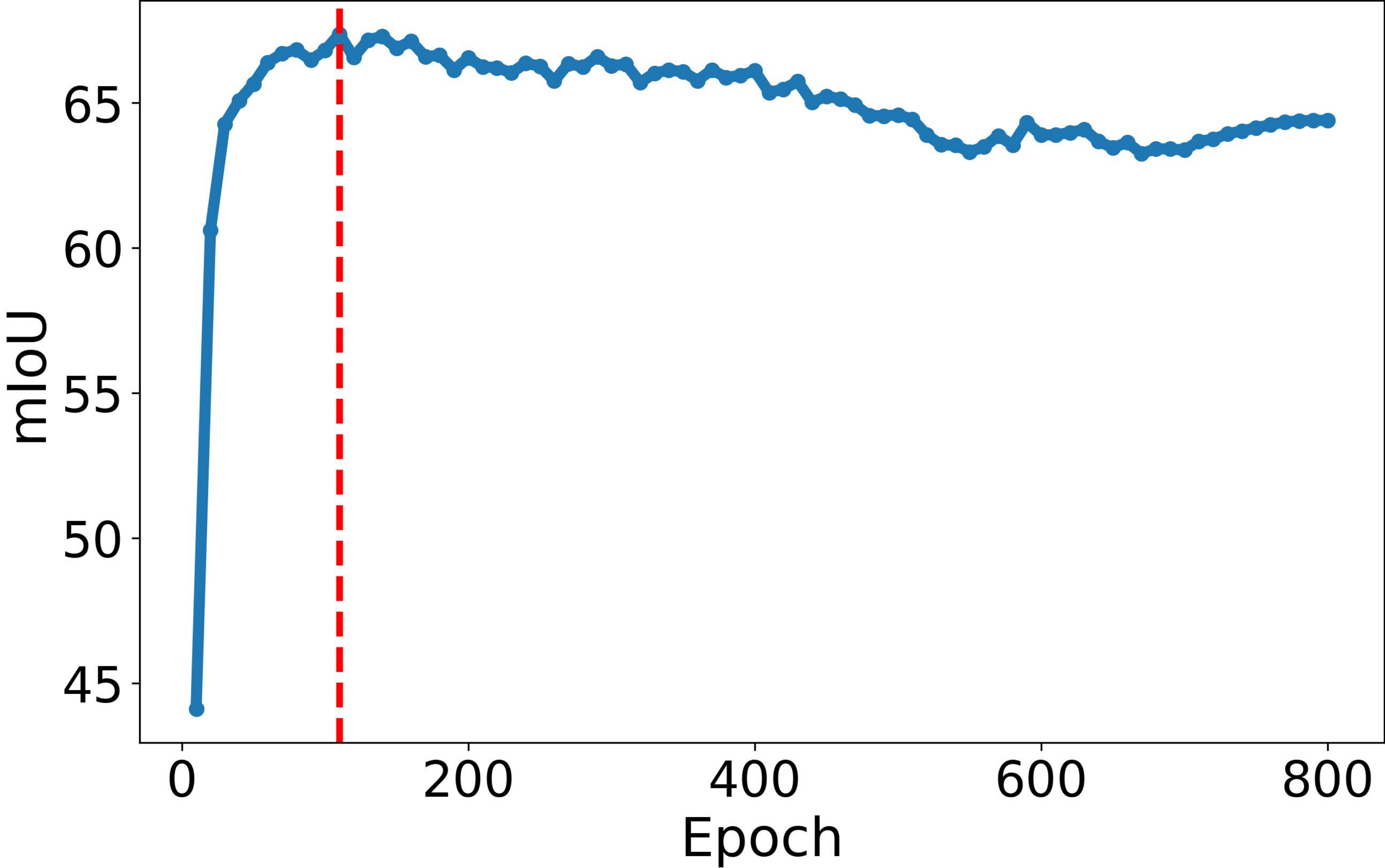}
        \caption{\ibot}
    \end{subfigure}
    \hfill
    \begin{subfigure}{0.19\textwidth}
        \centering
        \includegraphics[width=\linewidth]{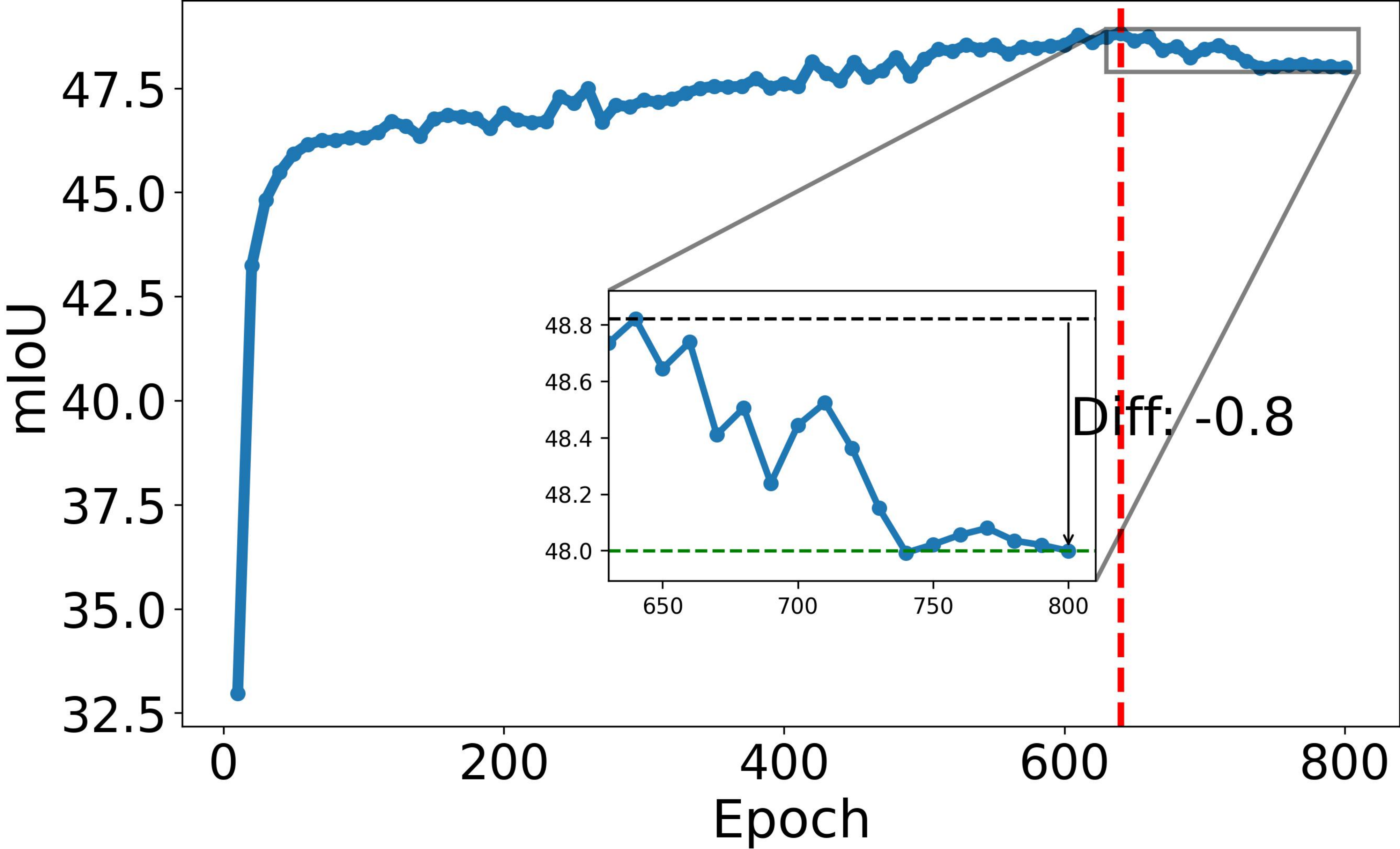}
        \caption{\mec}
    \end{subfigure}
    \hfill
    \begin{subfigure}{0.19\textwidth}
        \centering
        \includegraphics[width=\linewidth]{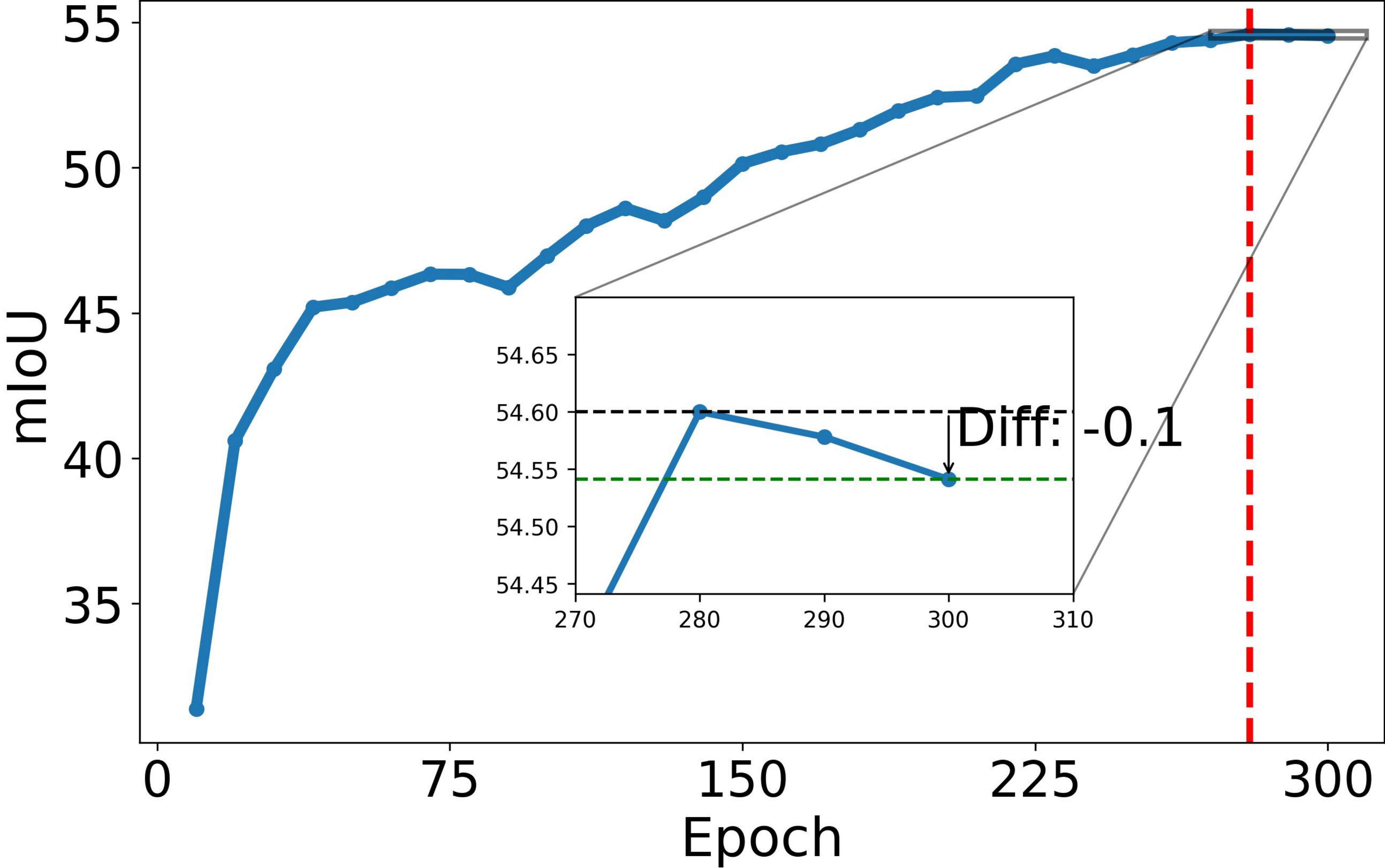}
        \caption{\vicregl}
    \end{subfigure}
    \vspace{0.15cm}
    \hfill
    \begin{subfigure}{0.19\textwidth}
        \centering
        \includegraphics[width=\linewidth]{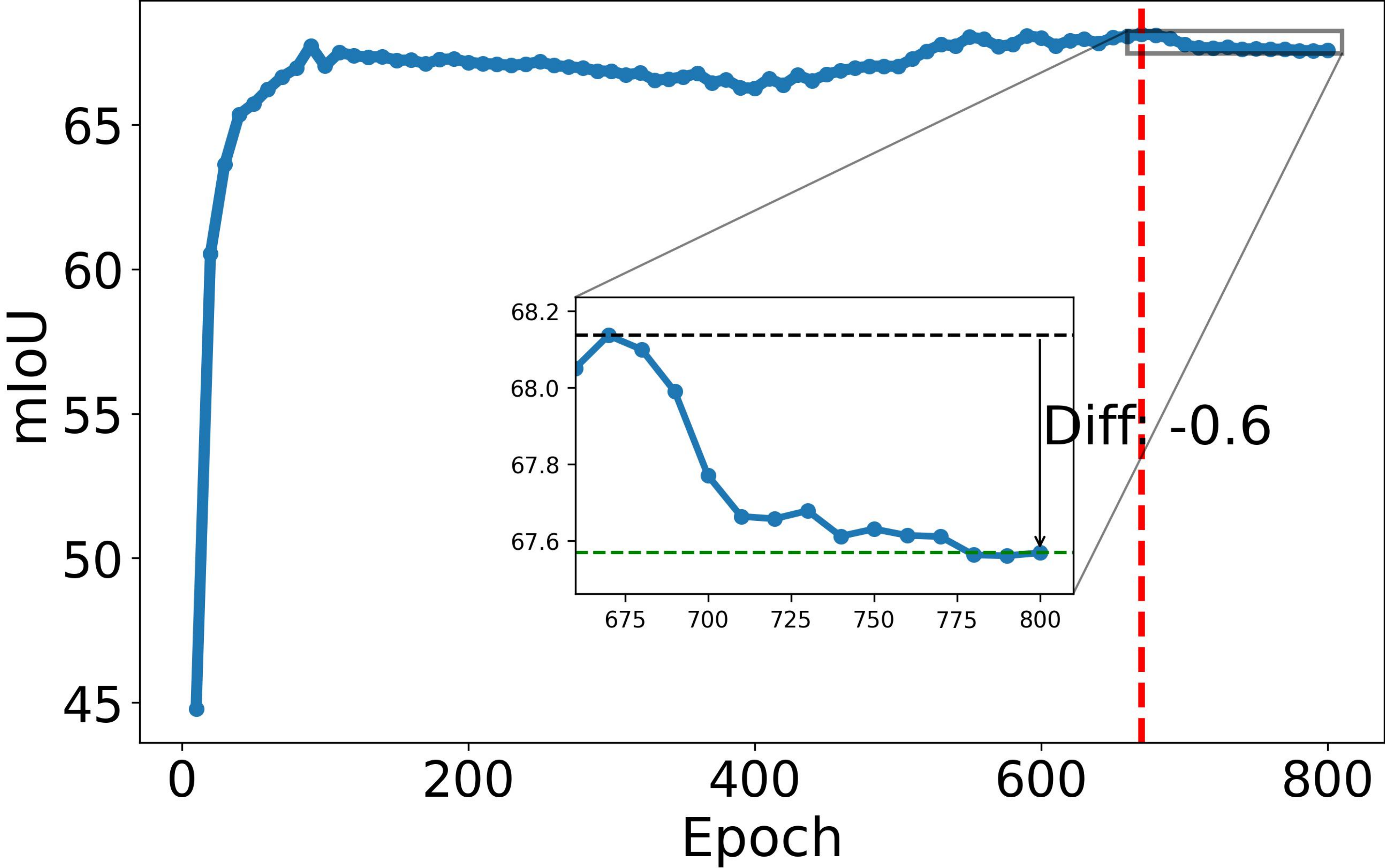}
        \caption{\mugs}
    \end{subfigure}
    \hspace{0.05\textwidth}
    \begin{subfigure}{0.19\textwidth}
        \centering
        \includegraphics[width=\linewidth]{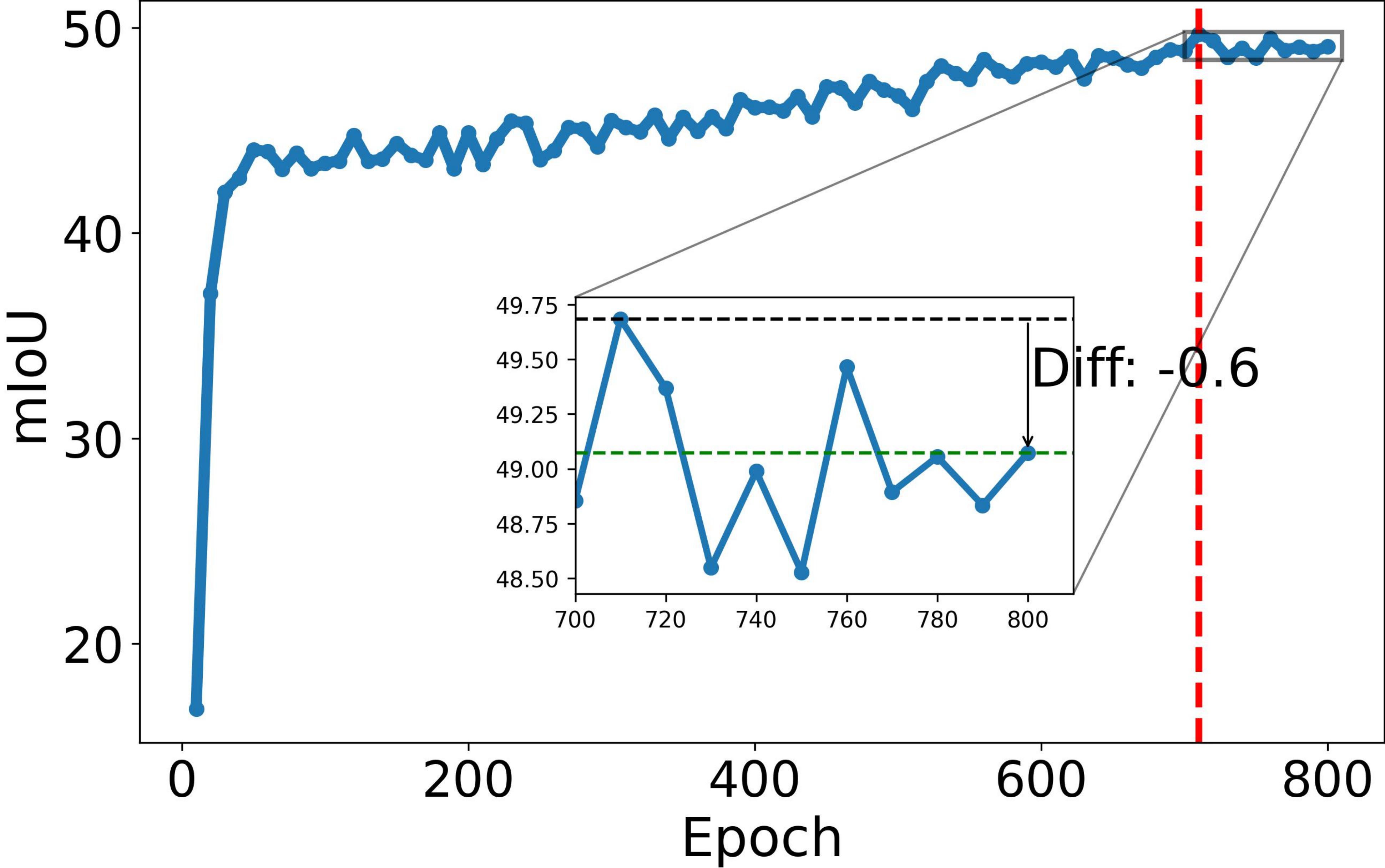}
        \caption{\resa}
    \end{subfigure}
    \hspace{0.05\textwidth}
    \begin{subfigure}{0.19\textwidth}
        \centering
        \includegraphics[width=\linewidth]{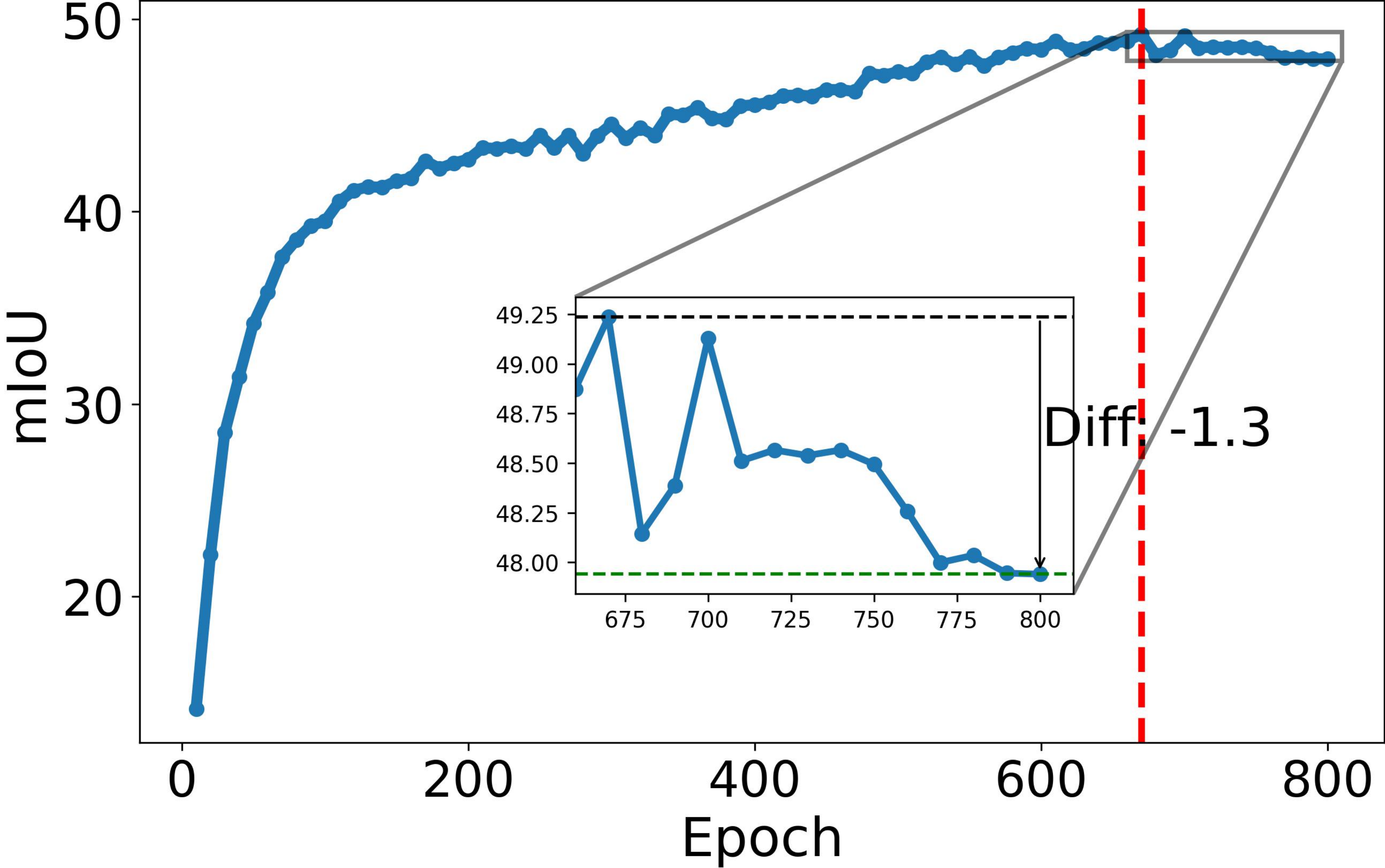}
        \caption{\mae}
    \end{subfigure}
    \hspace{0.05\textwidth}
    \begin{subfigure}{0.19\textwidth}
        \centering
        \includegraphics[width=\linewidth]{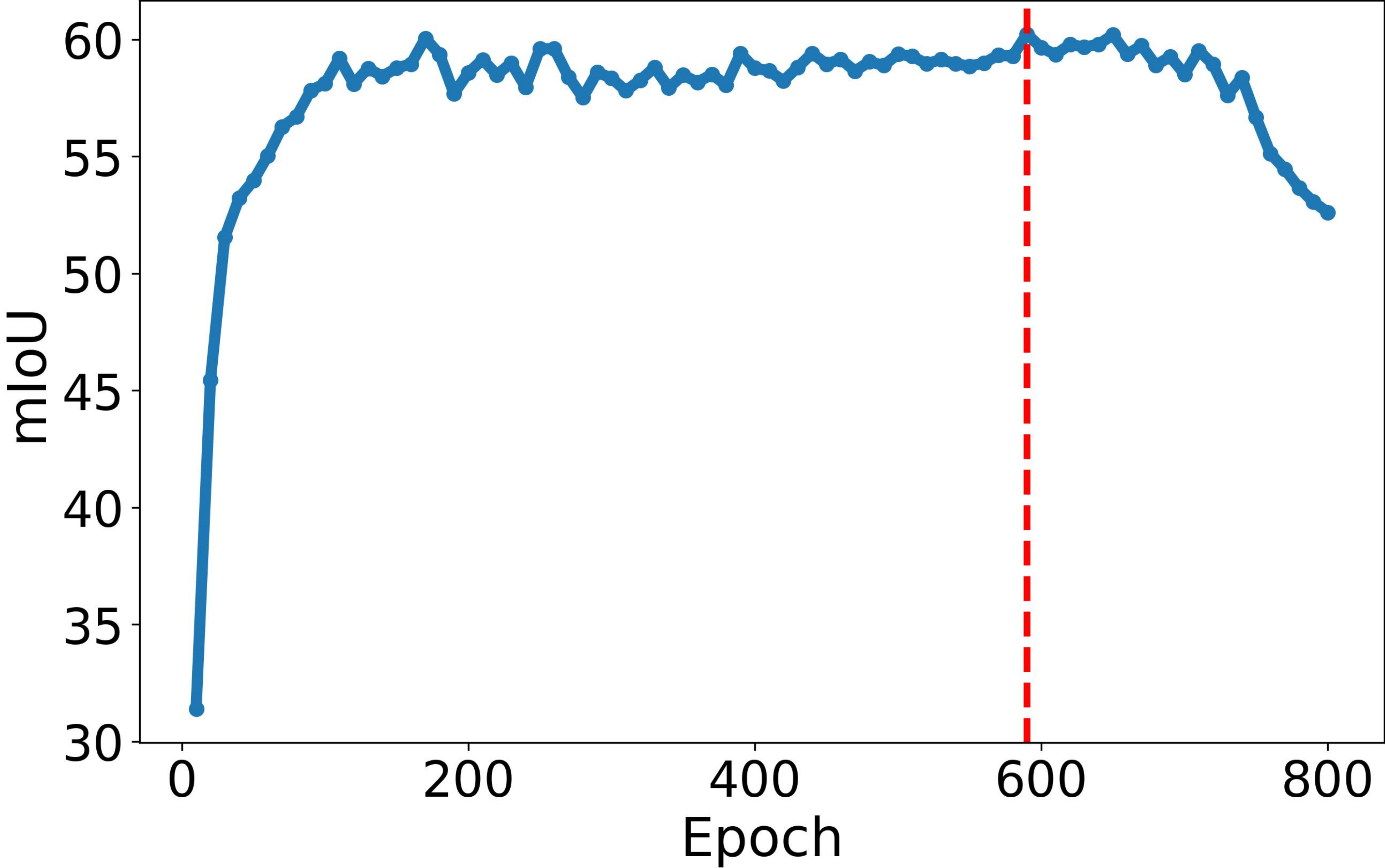}
        \caption{\ijepa}
    \end{subfigure}
    \hfill
    \vspace{-2pt}
    \caption{The SDD phenomenon on PASCAL VOC.}
    \label{fig:SDD_voc_app}
\end{figure}

\begin{figure}[H]
    \centering
    \begin{subfigure}{0.19\textwidth}
        \centering
        \includegraphics[width=\linewidth]{images/Performance/MoCo_cocostuff27.pdf}
        \caption{\moco}
    \end{subfigure}
    \hfill
    \begin{subfigure}{0.19\textwidth}
        \centering
        \includegraphics[width=\linewidth]{images/Performance/DenseCL-Imagenet_cocostuff27.pdf}
        \caption{\densecl}
    \end{subfigure}
    \hfill
    \begin{subfigure}{0.19\textwidth}
        \centering
        \includegraphics[width=\linewidth]{images/Performance/BYOL_cocostuff27.pdf}
        \caption{\byol}
    \end{subfigure}
    \hfill
    \begin{subfigure}{0.19\textwidth}
        \centering
        \includegraphics[width=\linewidth]{images/Performance/SimSiam_cocostuff27.pdf}
        \caption{\simsiam}
    \end{subfigure}
    \hfill
    \begin{subfigure}{0.19\textwidth}
        \centering
        \includegraphics[width=\linewidth]{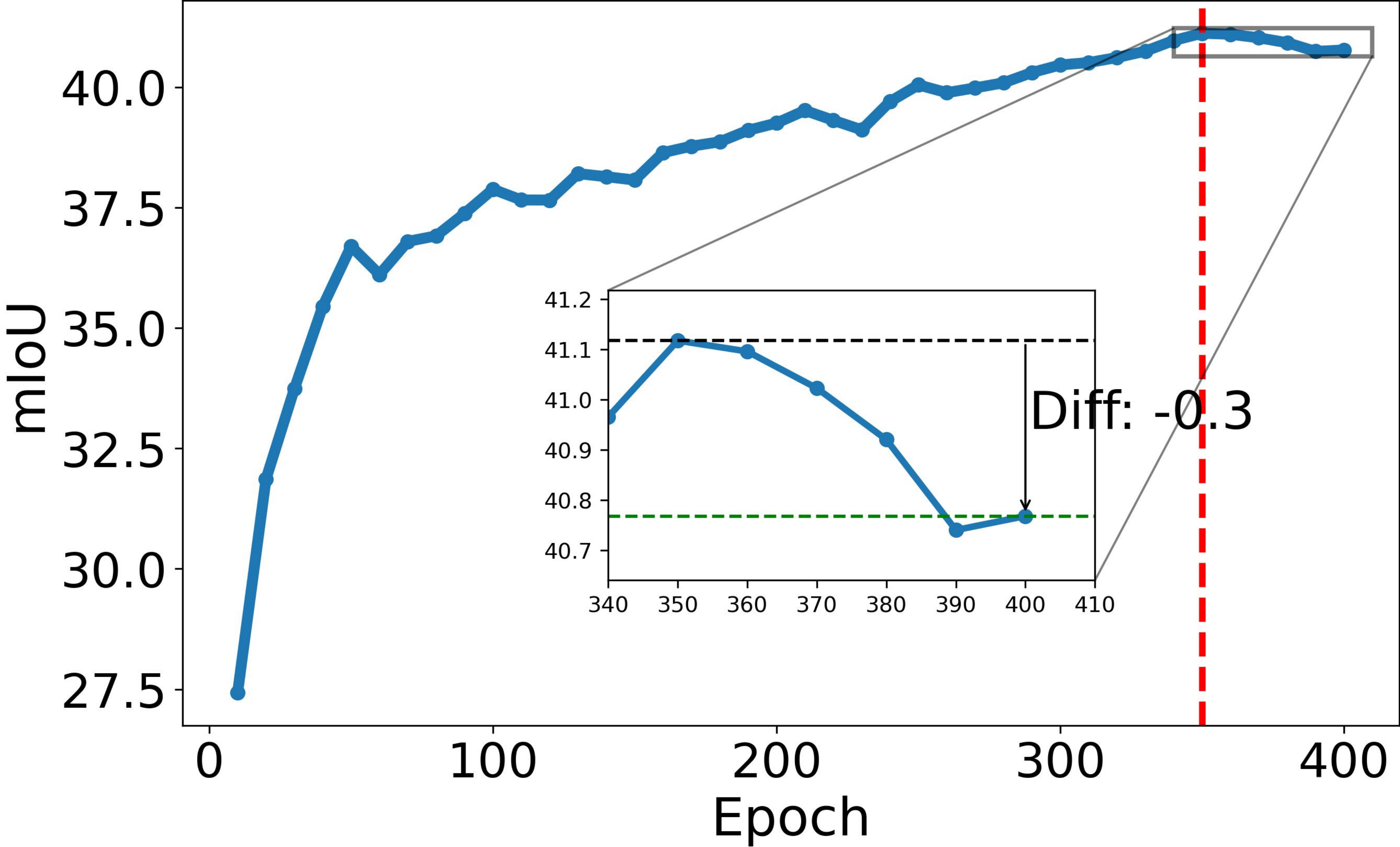}
        \caption{\swav}
    \end{subfigure}
    \vspace{0.15cm}
    \begin{subfigure}{0.19\textwidth}
        \centering
        \includegraphics[width=\linewidth]{images/Performance/DINO_cocostuff27.pdf}
        \caption{\dino}
    \end{subfigure}
    \hfill
    \begin{subfigure}{0.19\textwidth}
        \centering
        \includegraphics[width=\linewidth]{images/Performance/EsViT_cocostuff27.pdf}
        \caption{\esvit}
    \end{subfigure}
    \hfill
    \begin{subfigure}{0.19\textwidth}
        \centering
        \includegraphics[width=\linewidth]{images/Performance/iBOT_cocostuff27.pdf}
        \caption{\ibot}
    \end{subfigure}
    \hfill
    \begin{subfigure}{0.19\textwidth}
        \centering
        \includegraphics[width=\linewidth]{images/Performance/MEC_cocostuff27.pdf}
        \caption{\mec}
    \end{subfigure}
    \hfill
    \begin{subfigure}{0.19\textwidth}
        \centering
        \includegraphics[width=\linewidth]{images/Performance/VICRegL_cocostuff27.pdf}
        \caption{\vicregl}
    \end{subfigure}
    \vspace{0.15cm}
    \hfill
    \begin{subfigure}{0.19\textwidth}
        \centering
        \includegraphics[width=\linewidth]{images/Performance/Mugs_cocostuff27.pdf}
        \caption{\mugs}
    \end{subfigure}
    \hspace{0.05\textwidth}
    \begin{subfigure}{0.19\textwidth}
        \centering
        \includegraphics[width=\linewidth]{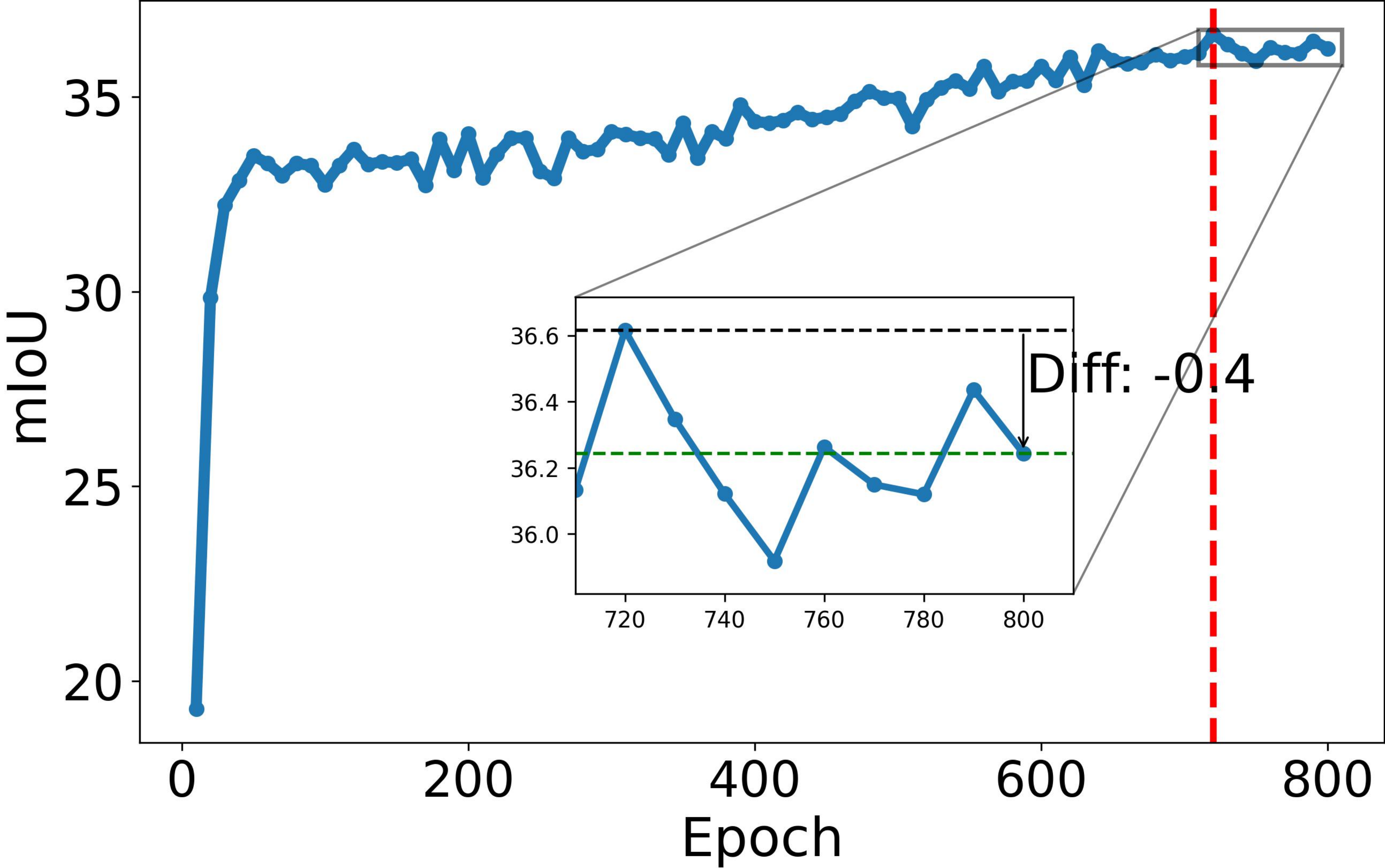}
        \caption{\resa}
    \end{subfigure}
    \hspace{0.05\textwidth}
    \begin{subfigure}{0.19\textwidth}
        \centering
        \includegraphics[width=\linewidth]{images/Performance/MAE_cocostuff27.pdf}
        \caption{\mae}
    \end{subfigure}
    \hspace{0.05\textwidth}
    \begin{subfigure}{0.19\textwidth}
        \centering
        \includegraphics[width=\linewidth]{images/Performance/I-JEPA_cocostuff27.pdf}
        \caption{\ijepa}
    \end{subfigure}
    \hfill
    \vspace{-2pt}
    \caption{The SDD phenomenon on COCO-Stuff.}
    \label{fig:SDD_coco}
\end{figure}

\begin{figure}[H]
    \centering
    \begin{subfigure}{0.19\textwidth}
        \centering
        \includegraphics[width=\linewidth]{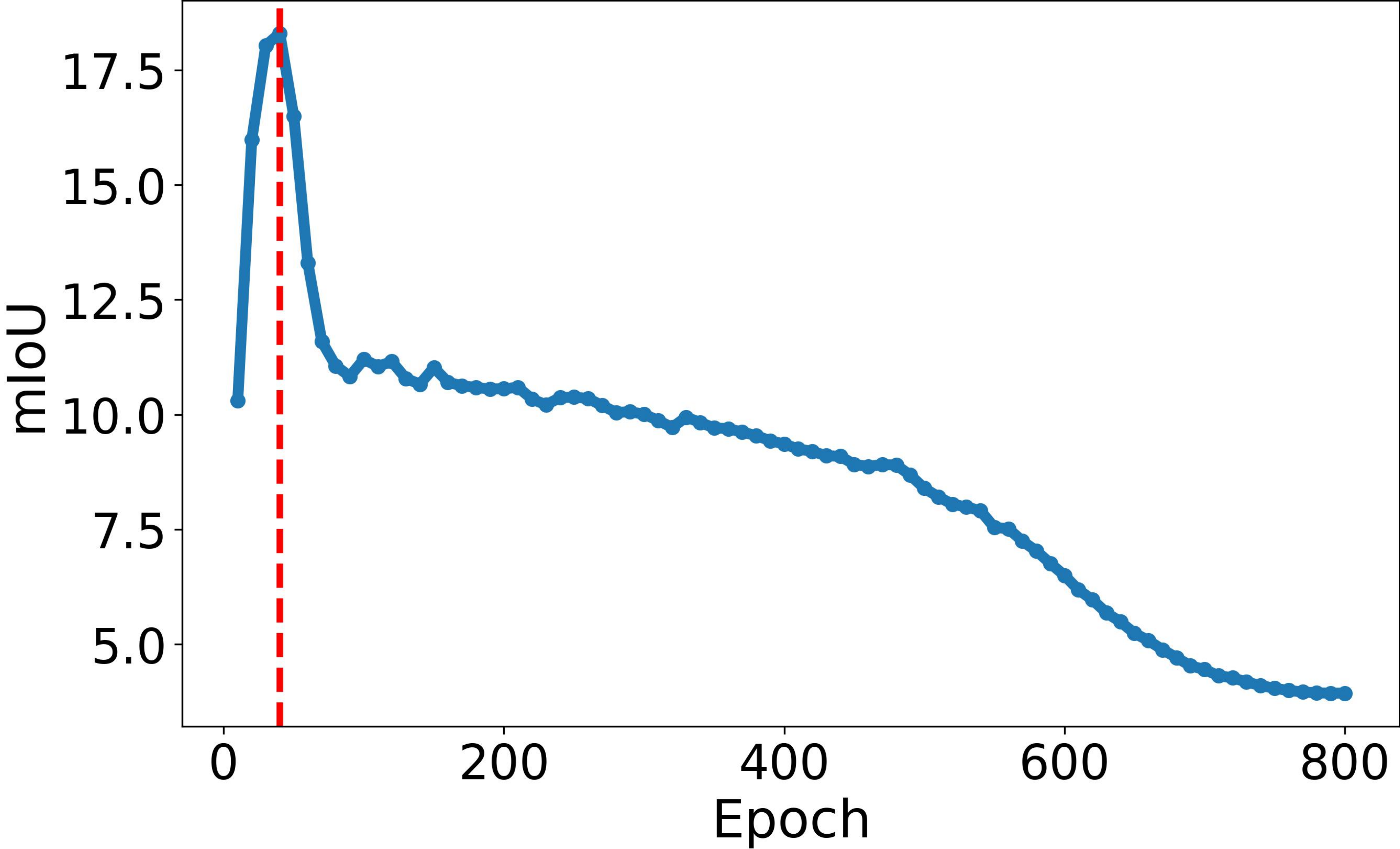}
        \caption{\moco}
    \end{subfigure}
    \hfill
    \begin{subfigure}{0.19\textwidth}
        \centering
        \includegraphics[width=\linewidth]{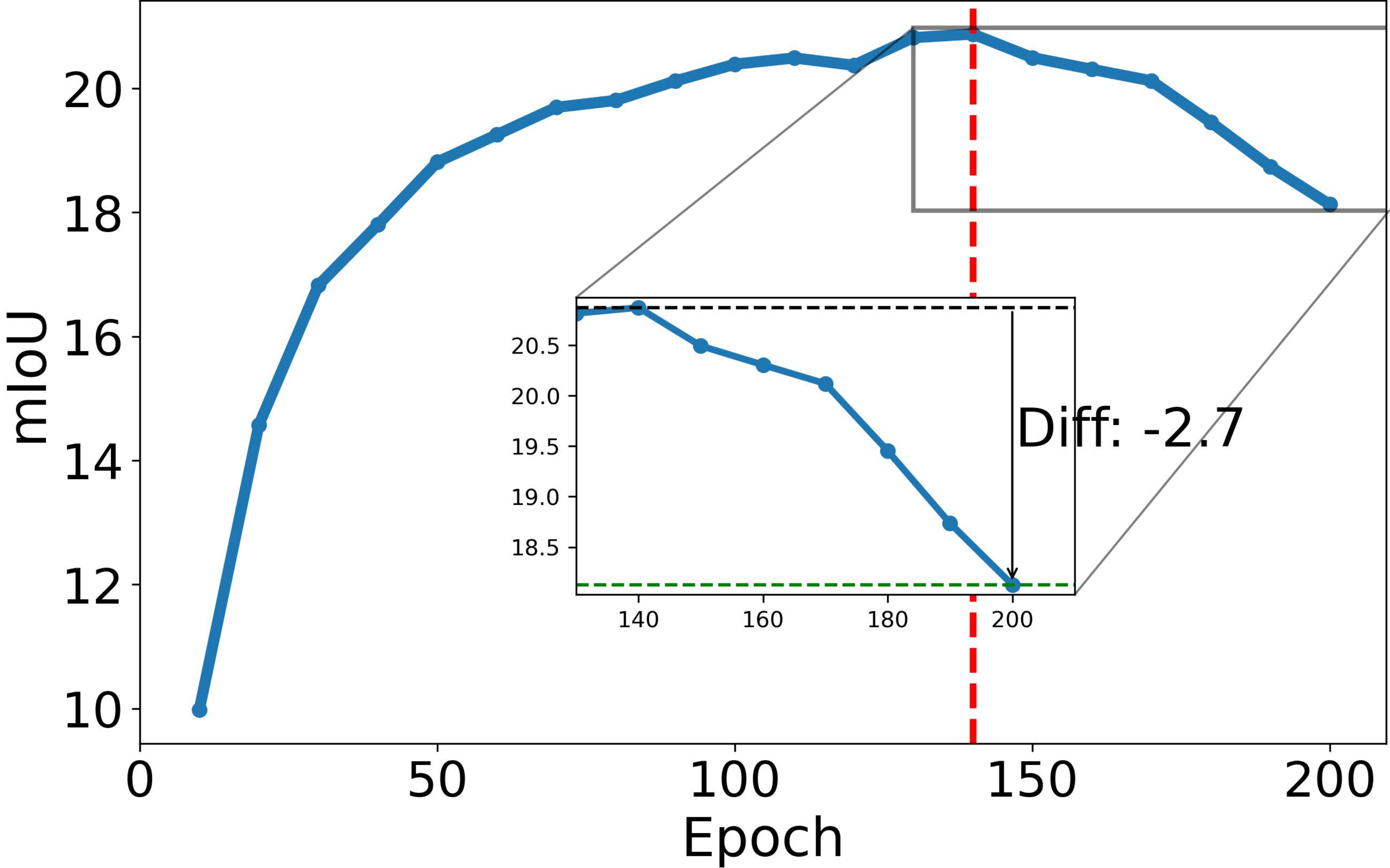}
        \caption{\densecl}
    \end{subfigure}
    \hfill
    \begin{subfigure}{0.19\textwidth}
        \centering
        \includegraphics[width=\linewidth]{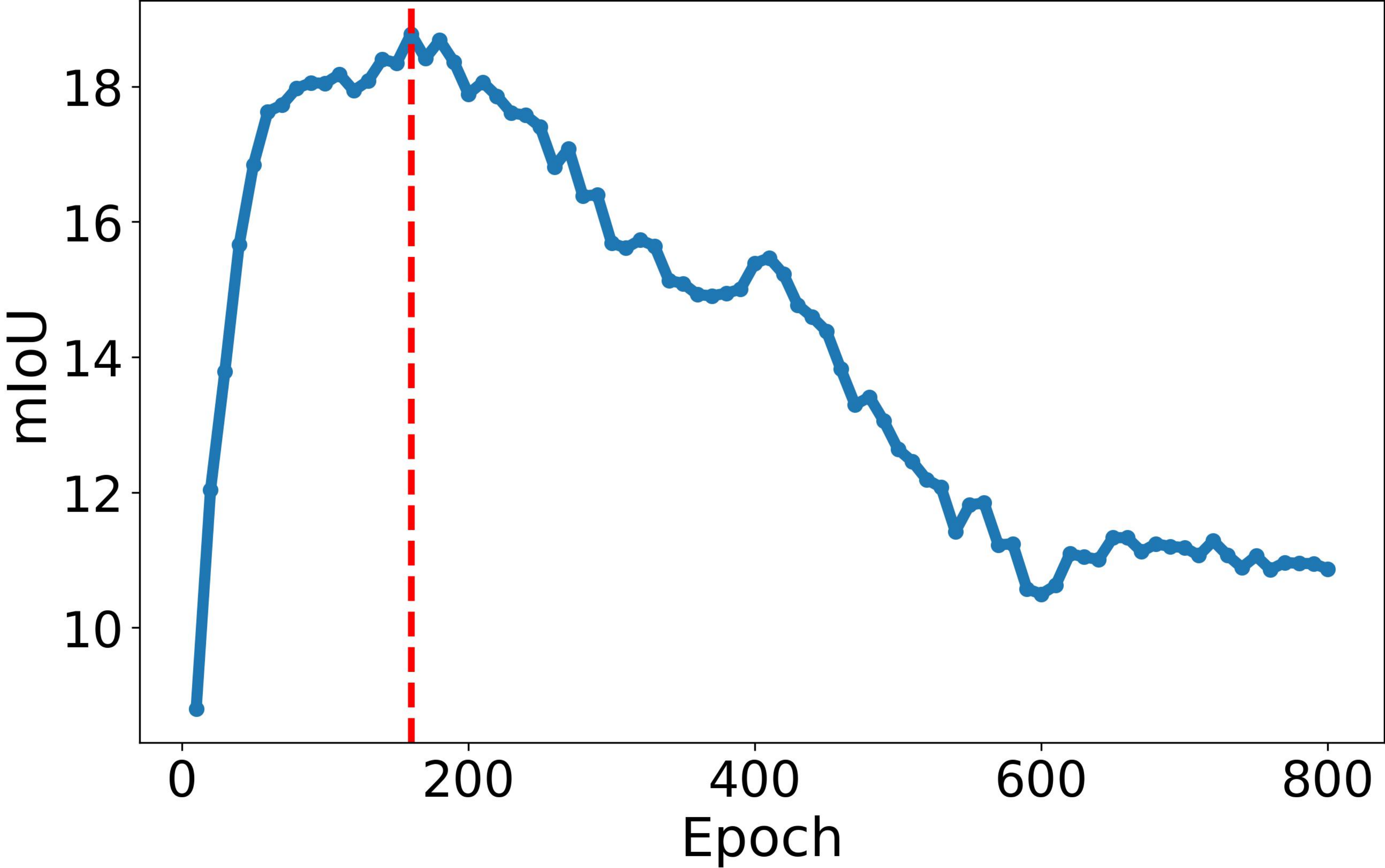}
        \caption{\byol}
    \end{subfigure}
    \hfill
    \begin{subfigure}{0.19\textwidth}
        \centering
        \includegraphics[width=\linewidth]{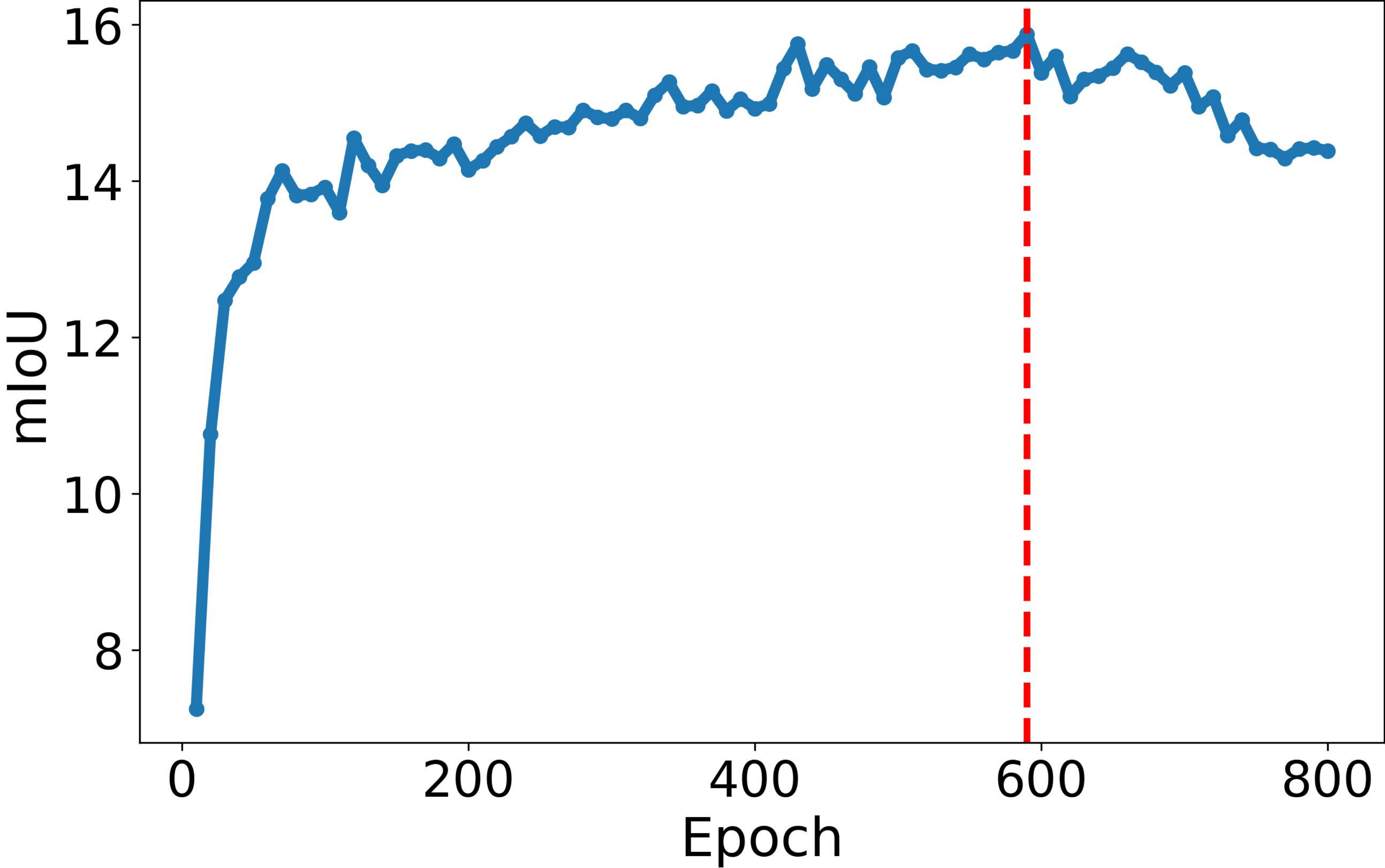}
        \caption{\simsiam}
    \end{subfigure}
    \hfill
    \begin{subfigure}{0.19\textwidth}
        \centering
        \includegraphics[width=\linewidth]{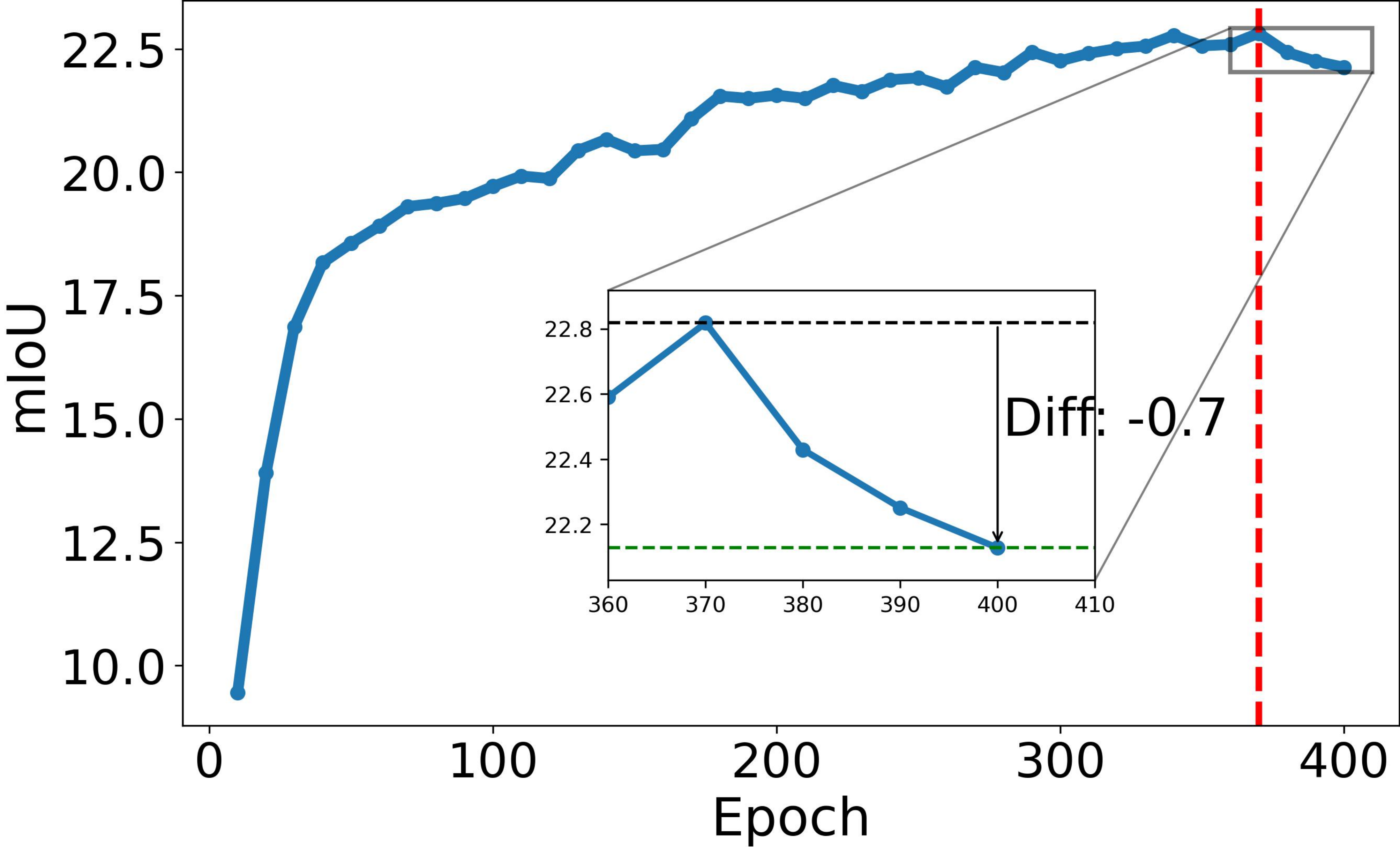}
        \caption{\swav}
    \end{subfigure}
    \vspace{0.15cm}
    \begin{subfigure}{0.19\textwidth}
        \centering
        \includegraphics[width=\linewidth]{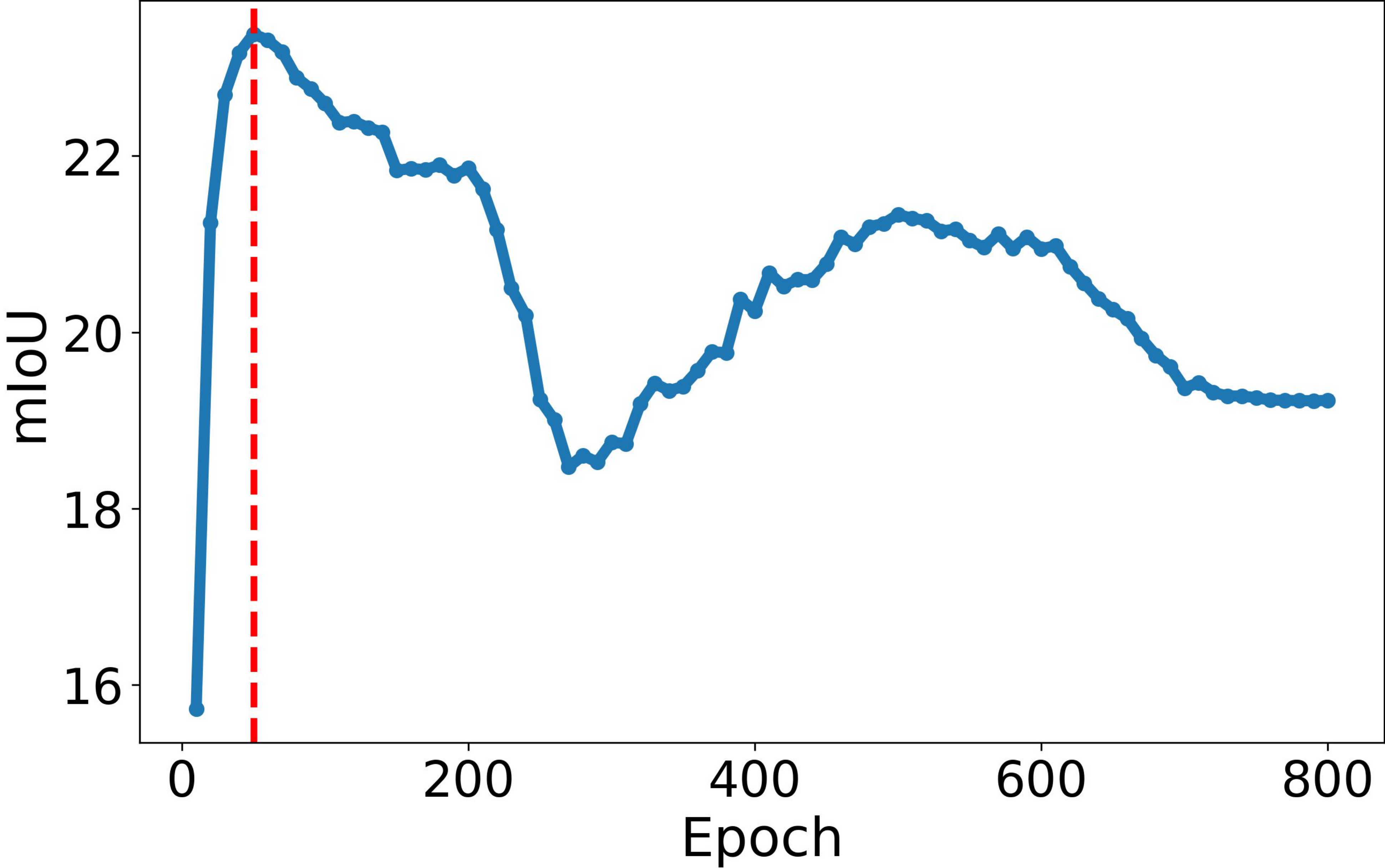}
        \caption{\dino}
    \end{subfigure}
    \hfill
    \begin{subfigure}{0.19\textwidth}
        \centering
        \includegraphics[width=\linewidth]{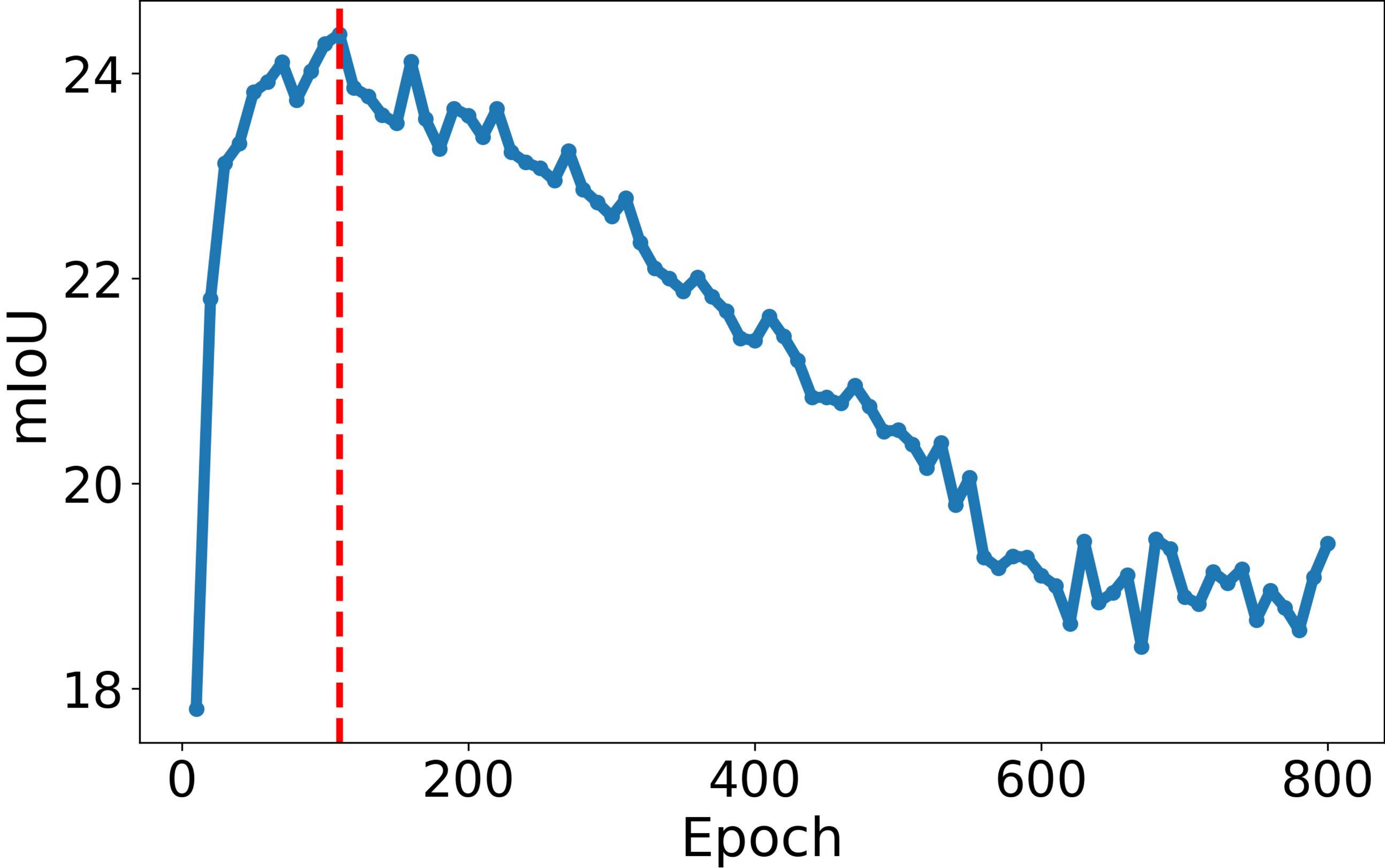}
        \caption{\esvit}
    \end{subfigure}
    \hfill
    \begin{subfigure}{0.19\textwidth}
        \centering
        \includegraphics[width=\linewidth]{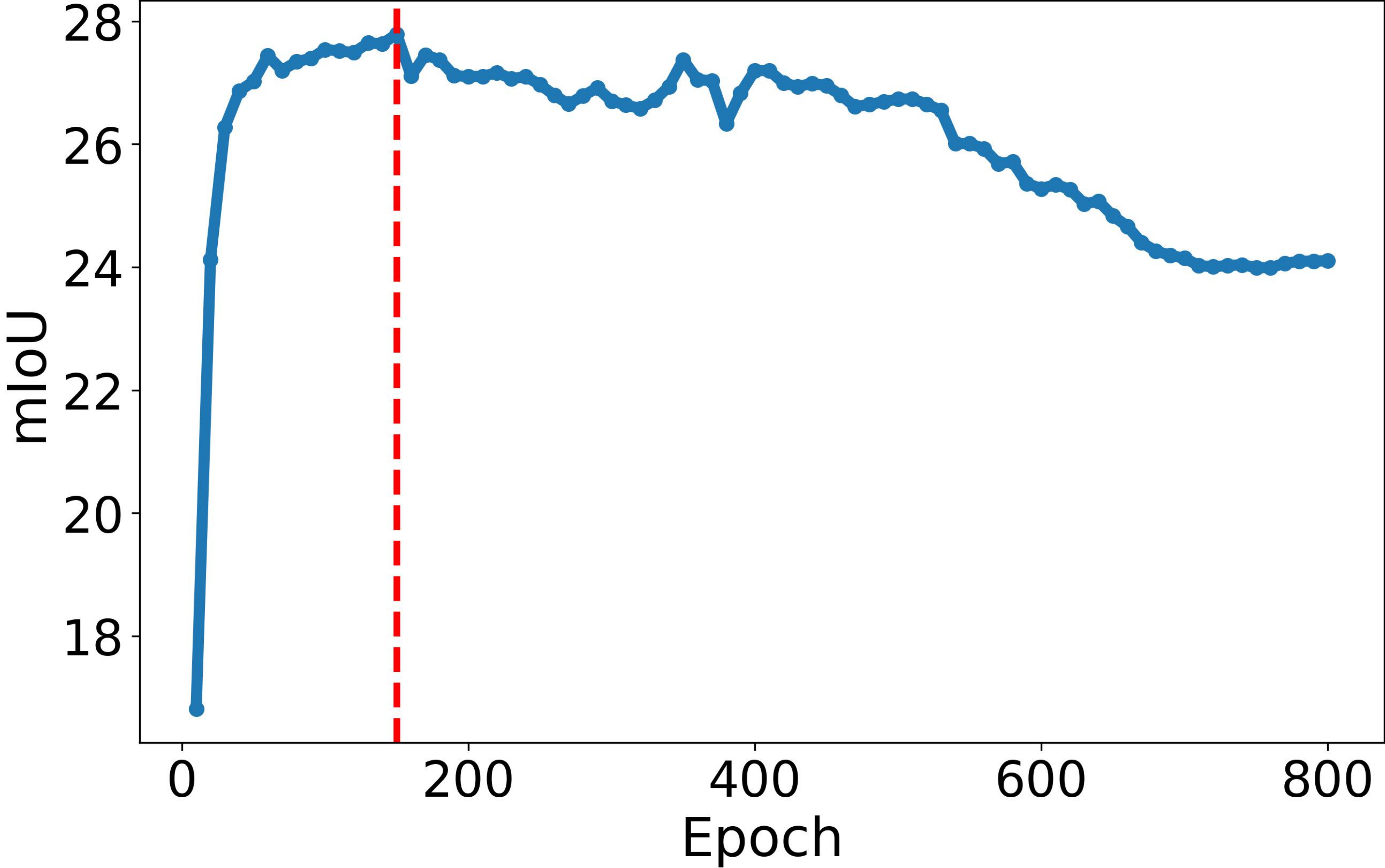}
        \caption{\ibot}
    \end{subfigure}
    \hfill
    \begin{subfigure}{0.19\textwidth}
        \centering
        \includegraphics[width=\linewidth]{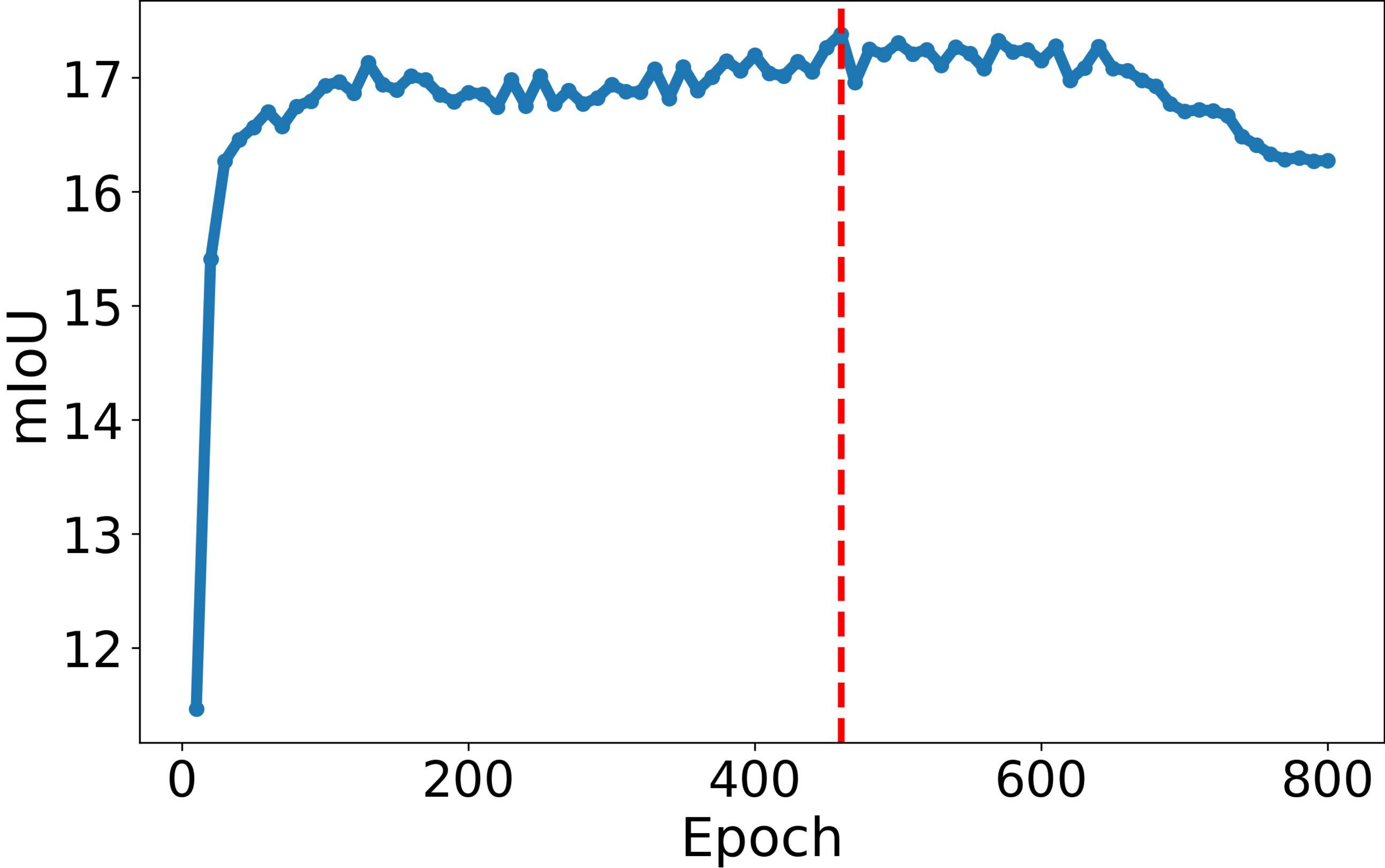}
        \caption{\mec}
    \end{subfigure}
    \hfill
    \begin{subfigure}{0.19\textwidth}
        \centering
        \includegraphics[width=\linewidth]{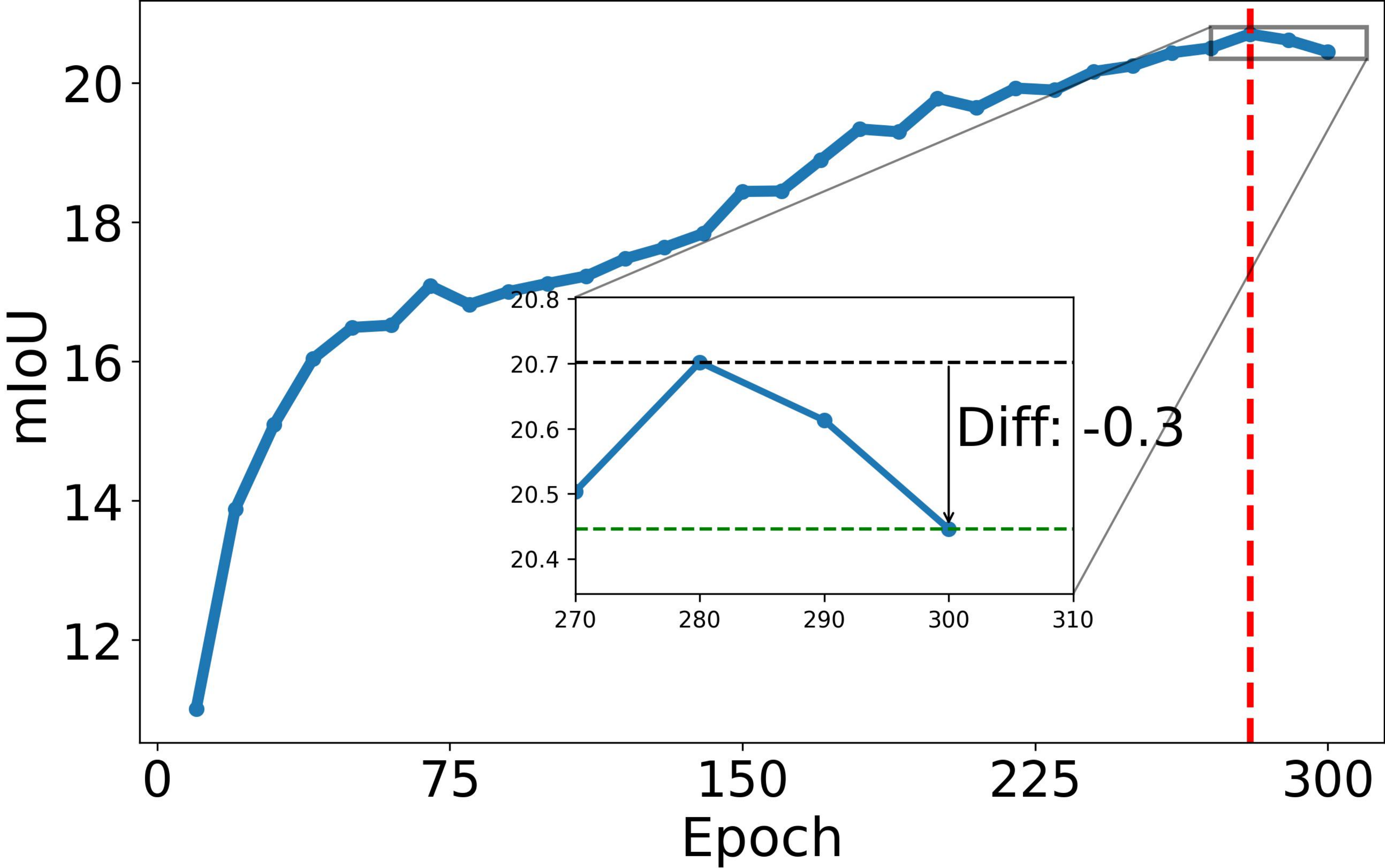}
        \caption{\vicregl}
    \end{subfigure}
    \vspace{0.15cm}
    \hfill
    \begin{subfigure}{0.19\textwidth}
        \centering
        \includegraphics[width=\linewidth]{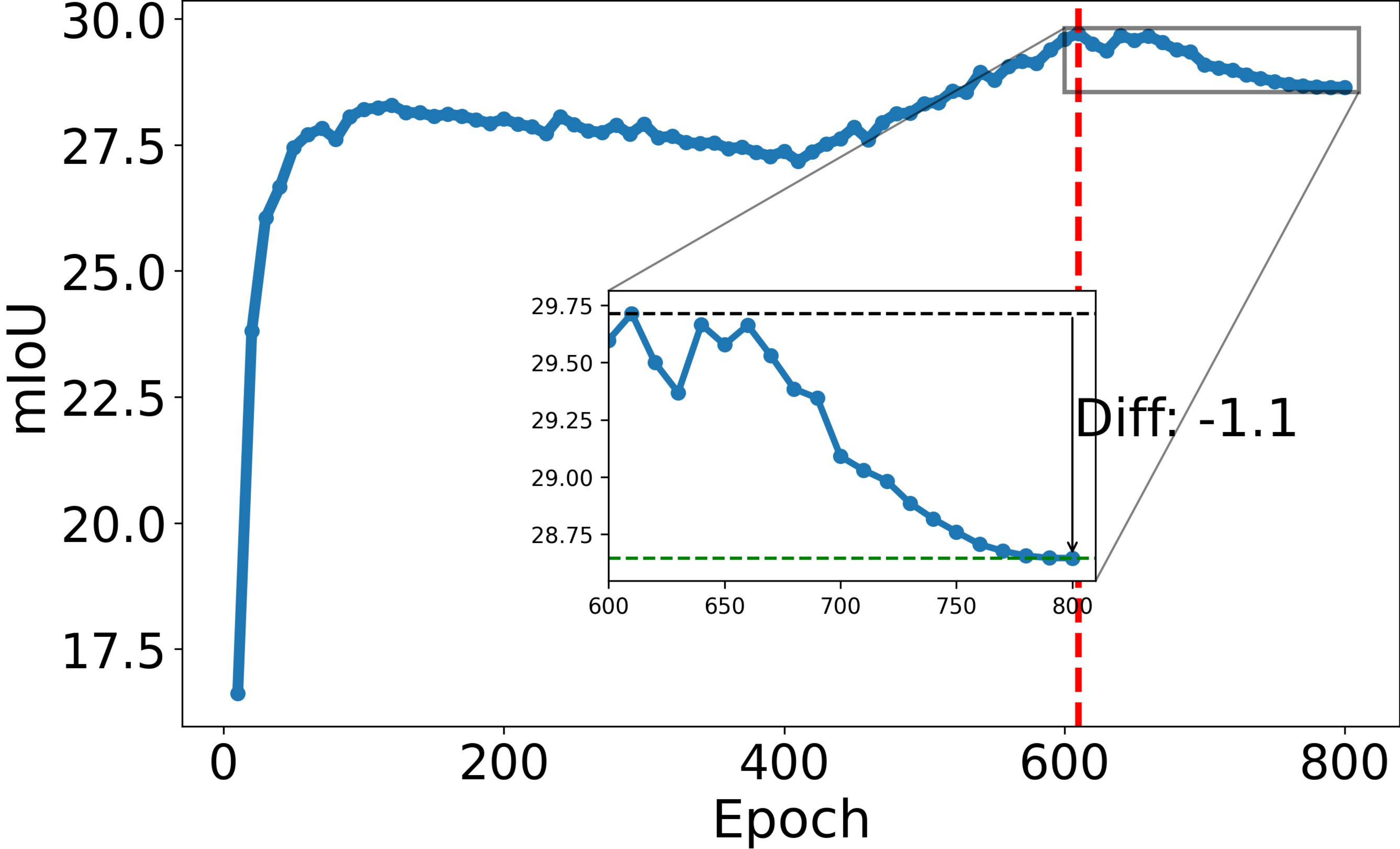}
        \caption{\mugs}
    \end{subfigure}
    \hspace{0.05\textwidth}
    \begin{subfigure}{0.19\textwidth}
        \centering
        \includegraphics[width=\linewidth]{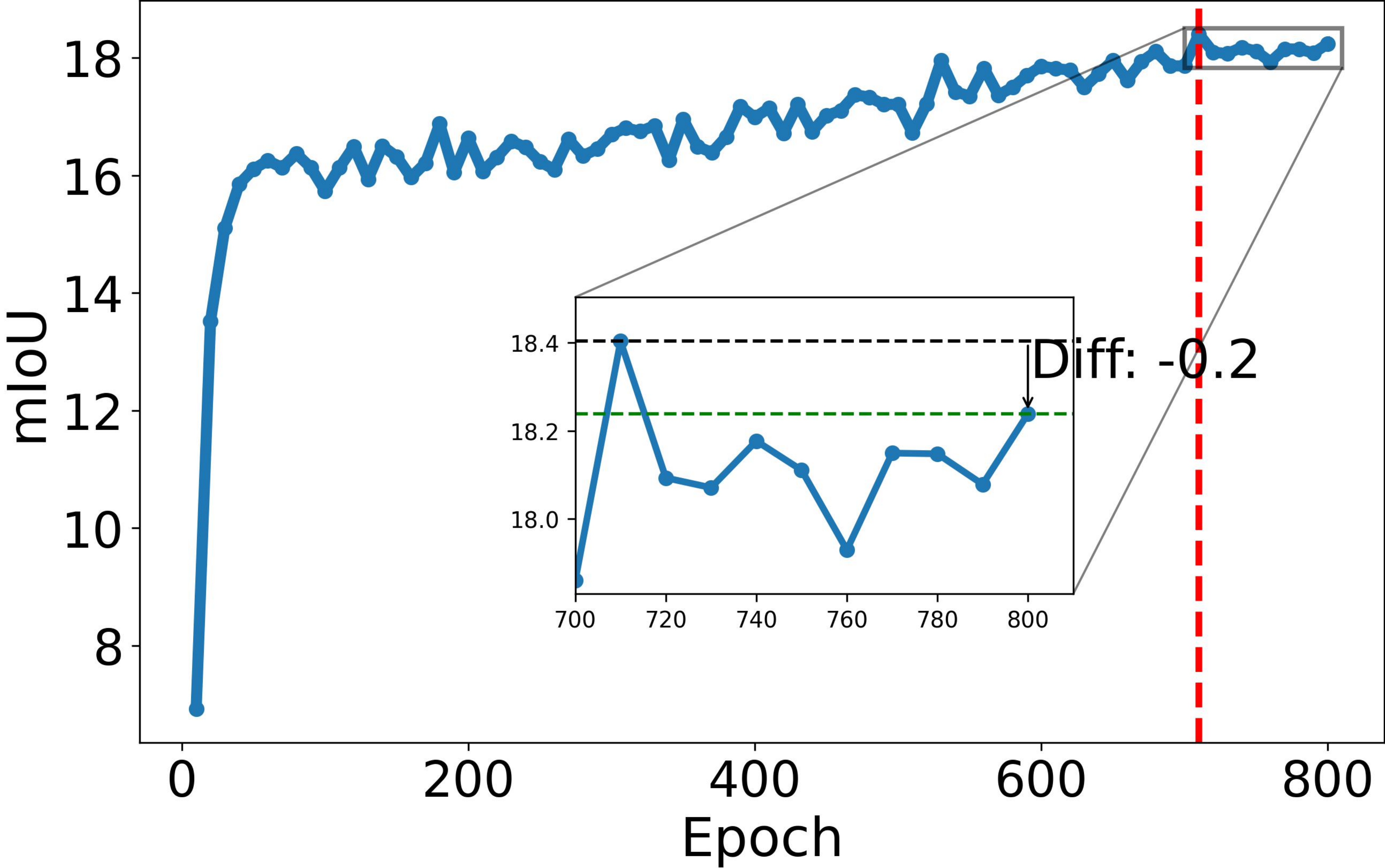}
        \caption{\resa}
    \end{subfigure}
    \hspace{0.05\textwidth}
    \begin{subfigure}{0.19\textwidth}
        \centering
        \includegraphics[width=\linewidth]{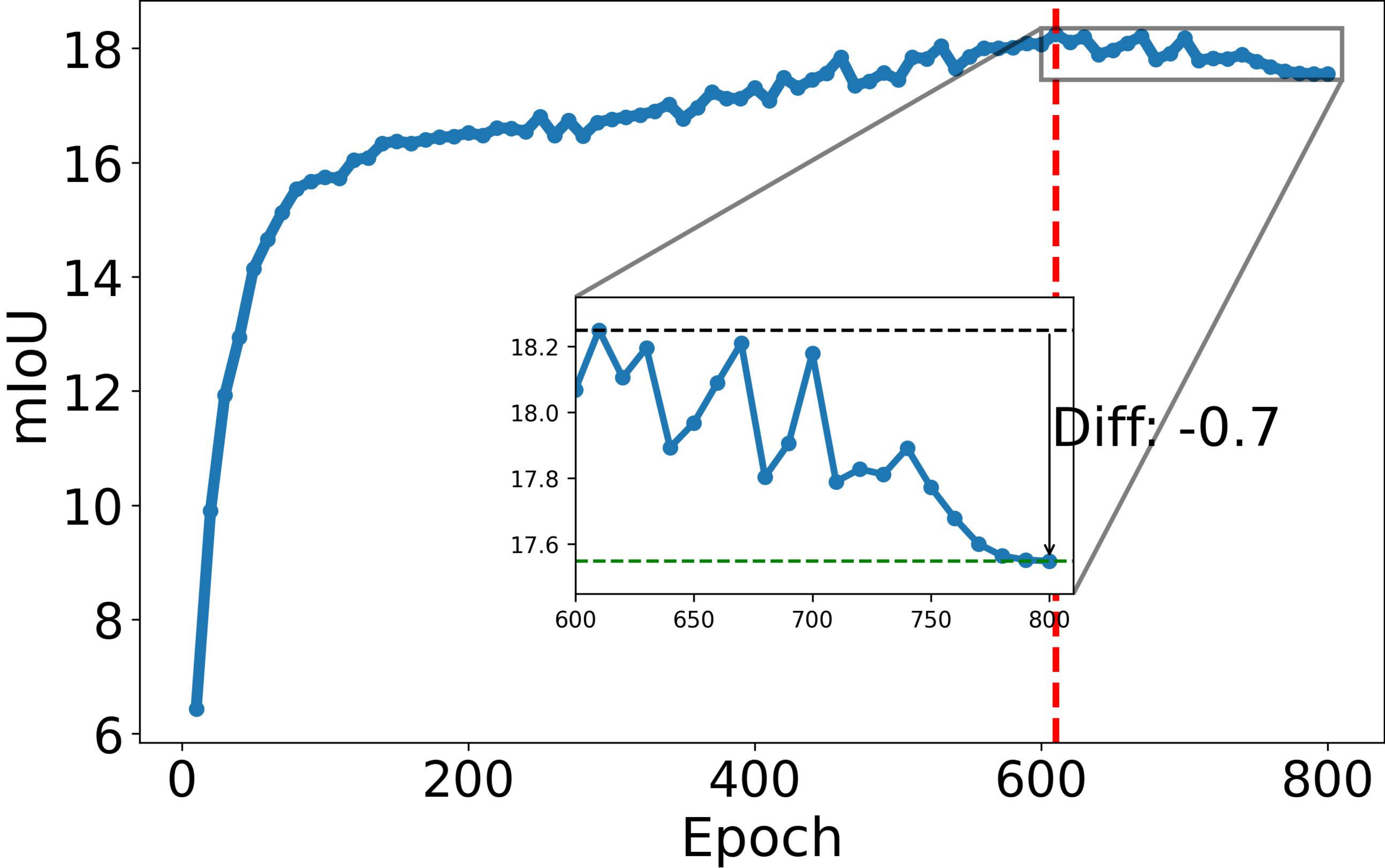}
        \caption{\mae}
    \end{subfigure}
    \hspace{0.05\textwidth}
    \begin{subfigure}{0.19\textwidth}
        \centering
        \includegraphics[width=\linewidth]{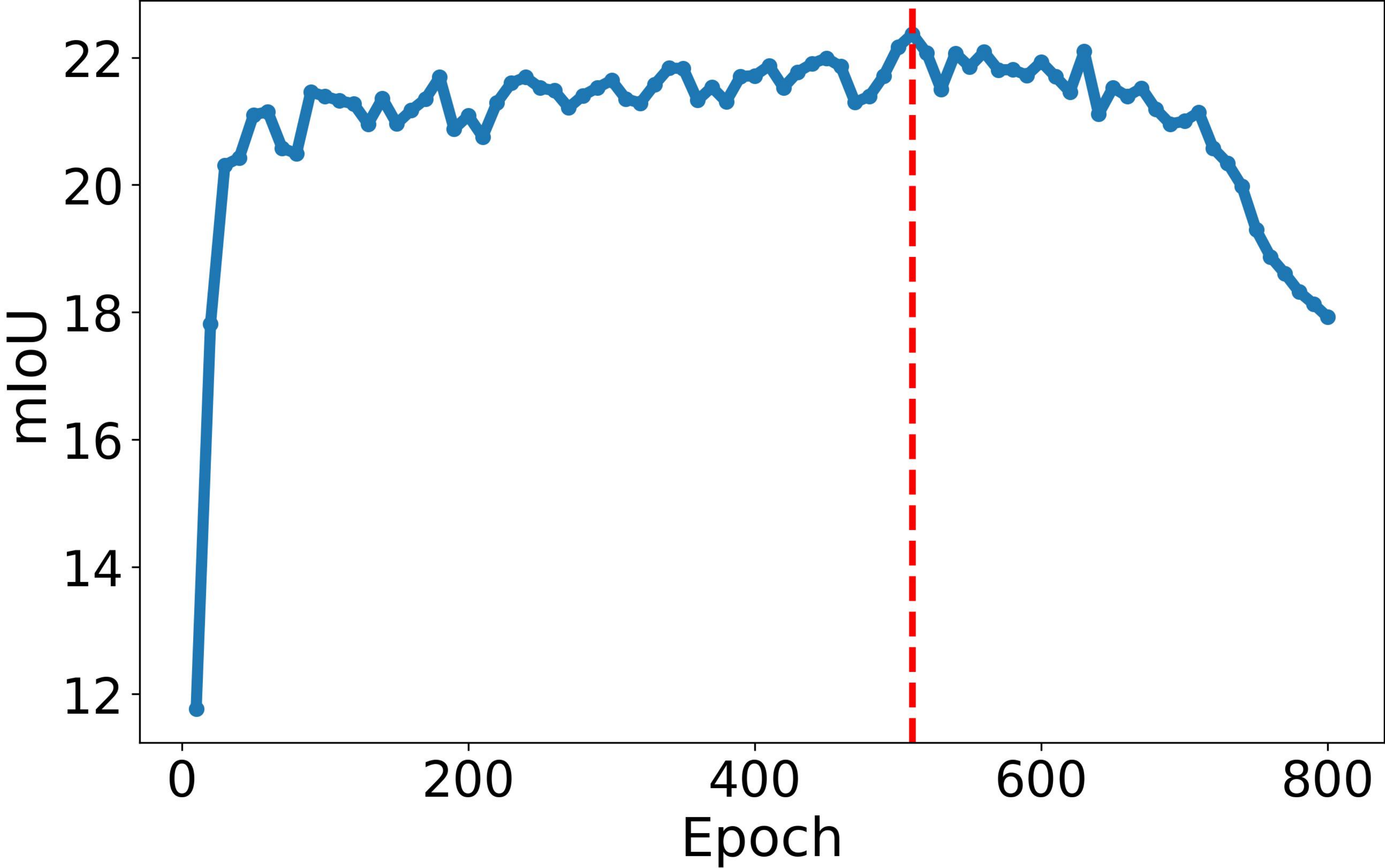}
        \caption{\ijepa}
    \end{subfigure}
    \hfill
    \vspace{-2pt}
    \caption{The SDD phenomenon on ADE20k.}
    \label{fig:SDD_ade}
\end{figure}
\begin{figure}[H]
    \centering
    \begin{subfigure}{0.19\textwidth}
        \centering
        \includegraphics[width=\linewidth]{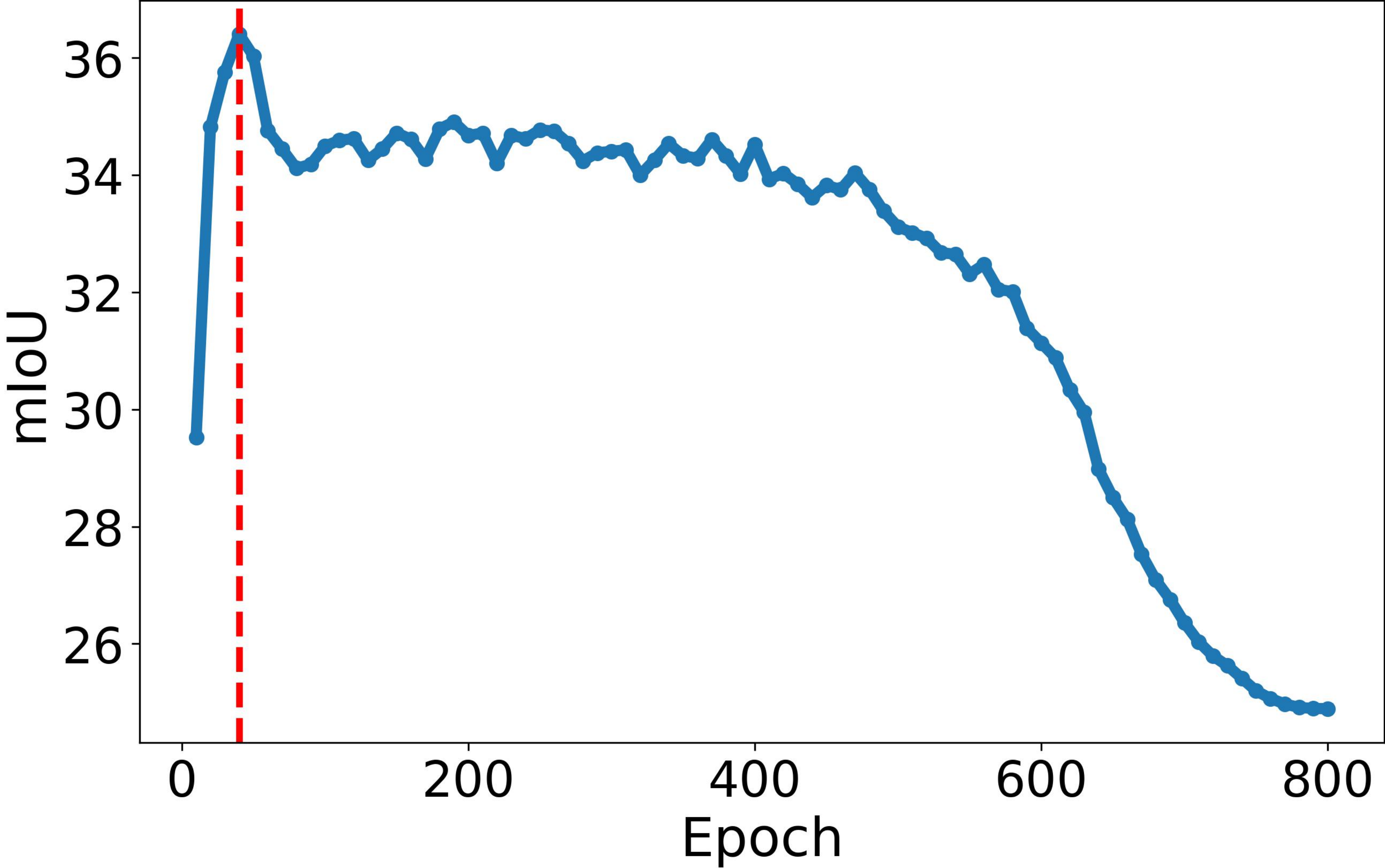}
        \caption{\moco}
    \end{subfigure}
    \hfill
    \begin{subfigure}{0.19\textwidth}
        \centering
        \includegraphics[width=\linewidth]{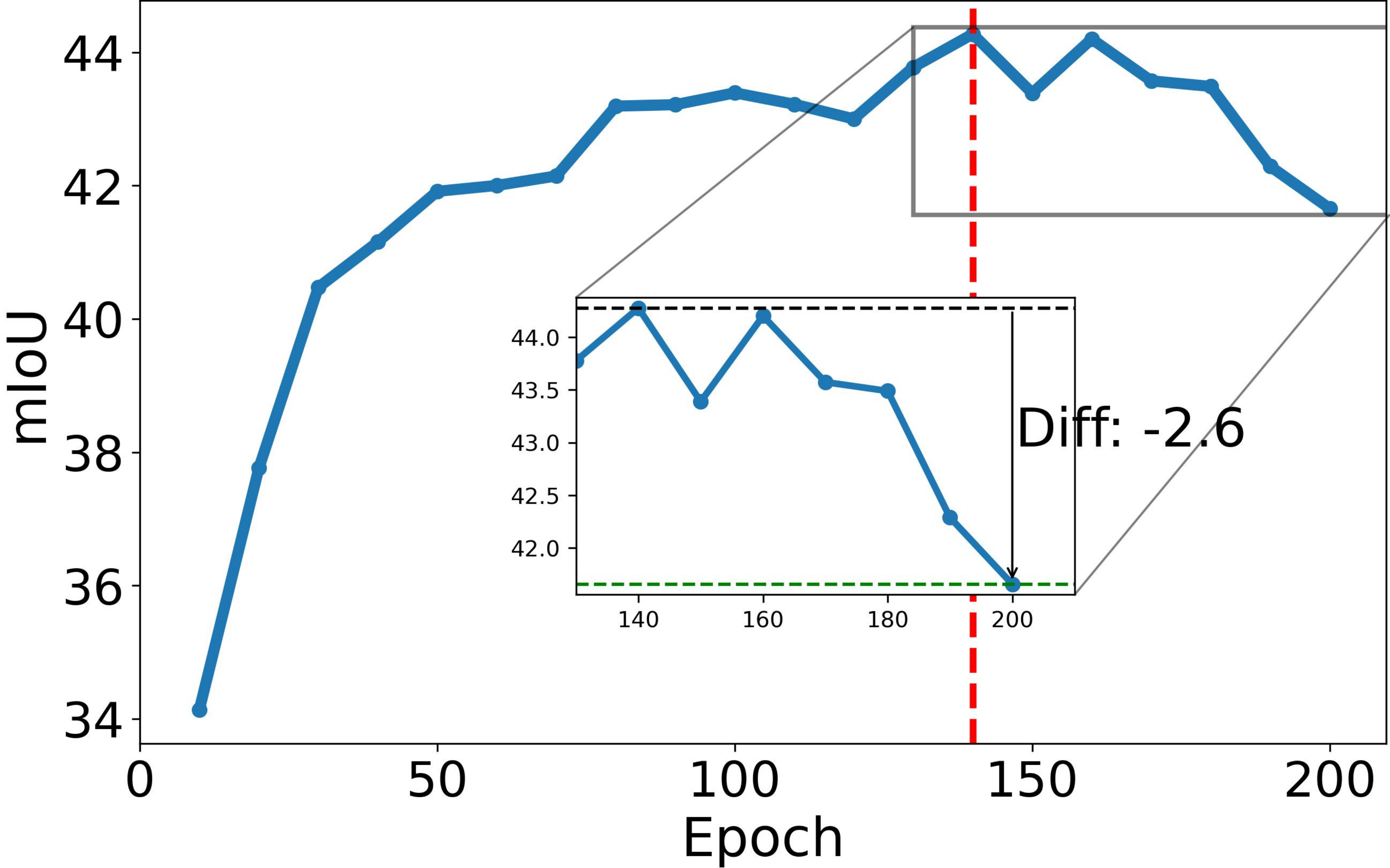}
        \caption{\densecl}
    \end{subfigure}
    \hfill
    \begin{subfigure}{0.19\textwidth}
        \centering
        \includegraphics[width=\linewidth]{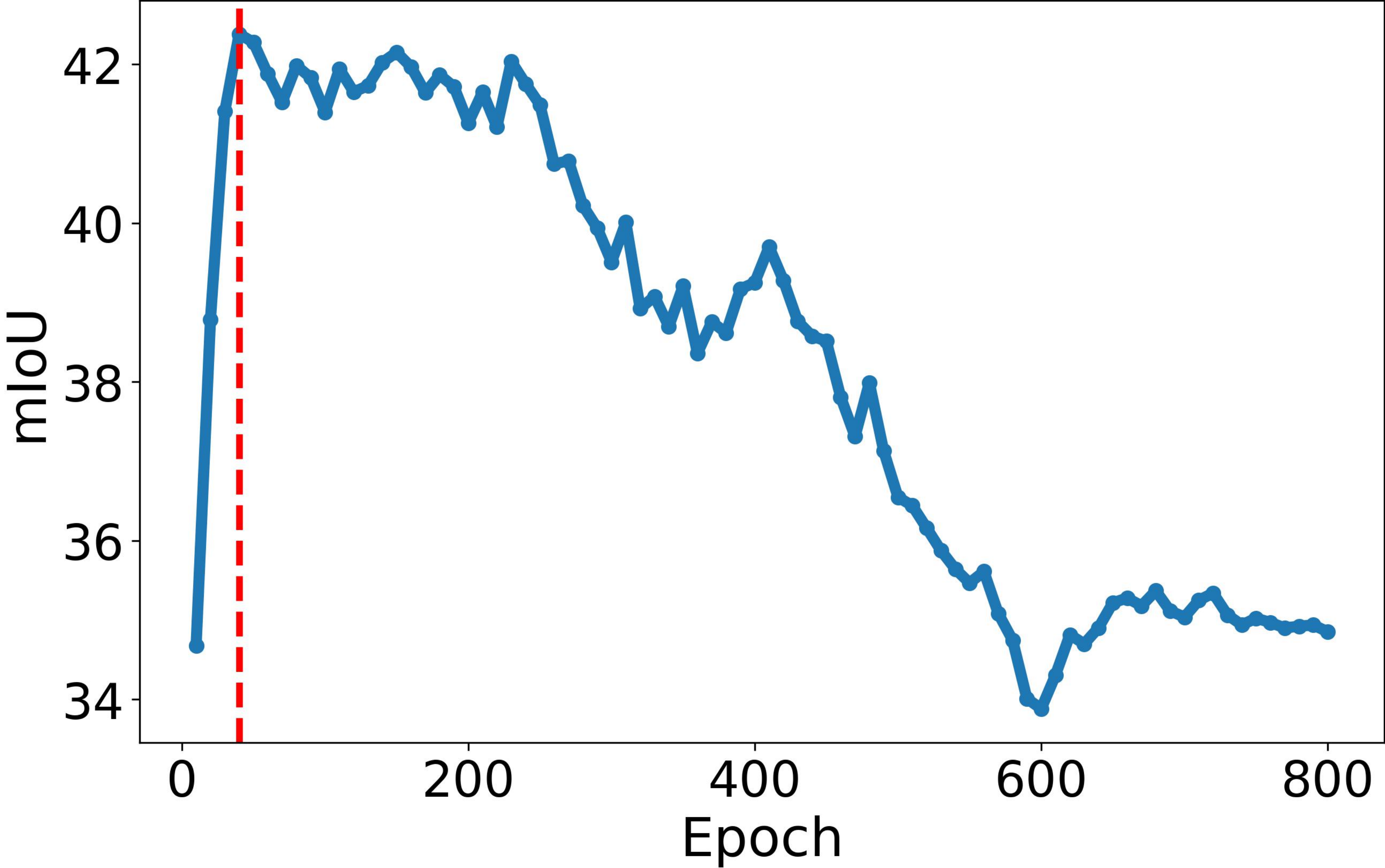}
        \caption{\byol}
    \end{subfigure}
    \hfill
    \begin{subfigure}{0.19\textwidth}
        \centering
        \includegraphics[width=\linewidth]{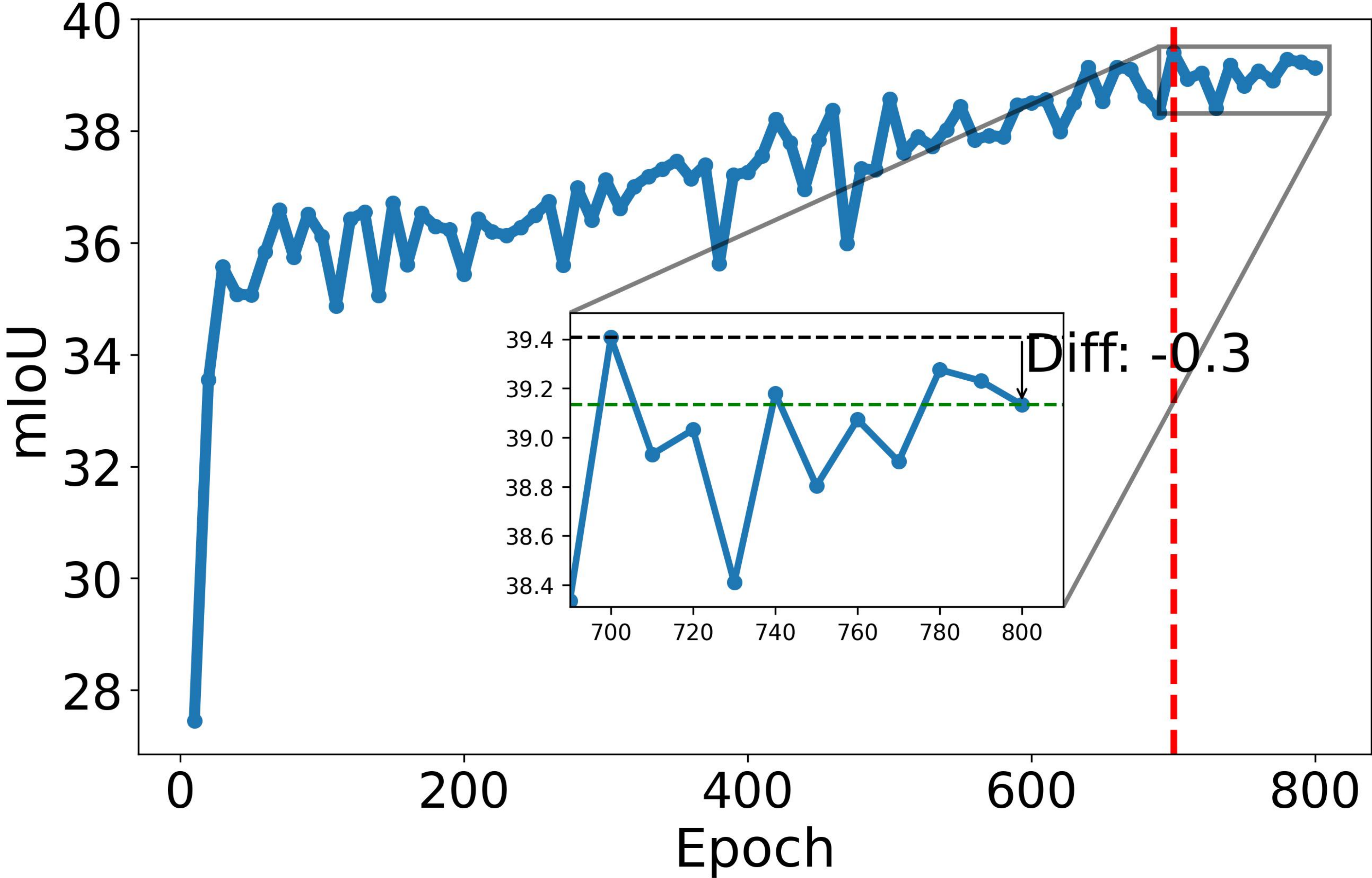}
        \caption{\simsiam}
    \end{subfigure}
    \hfill
    \begin{subfigure}{0.19\textwidth}
        \centering
        \includegraphics[width=\linewidth]{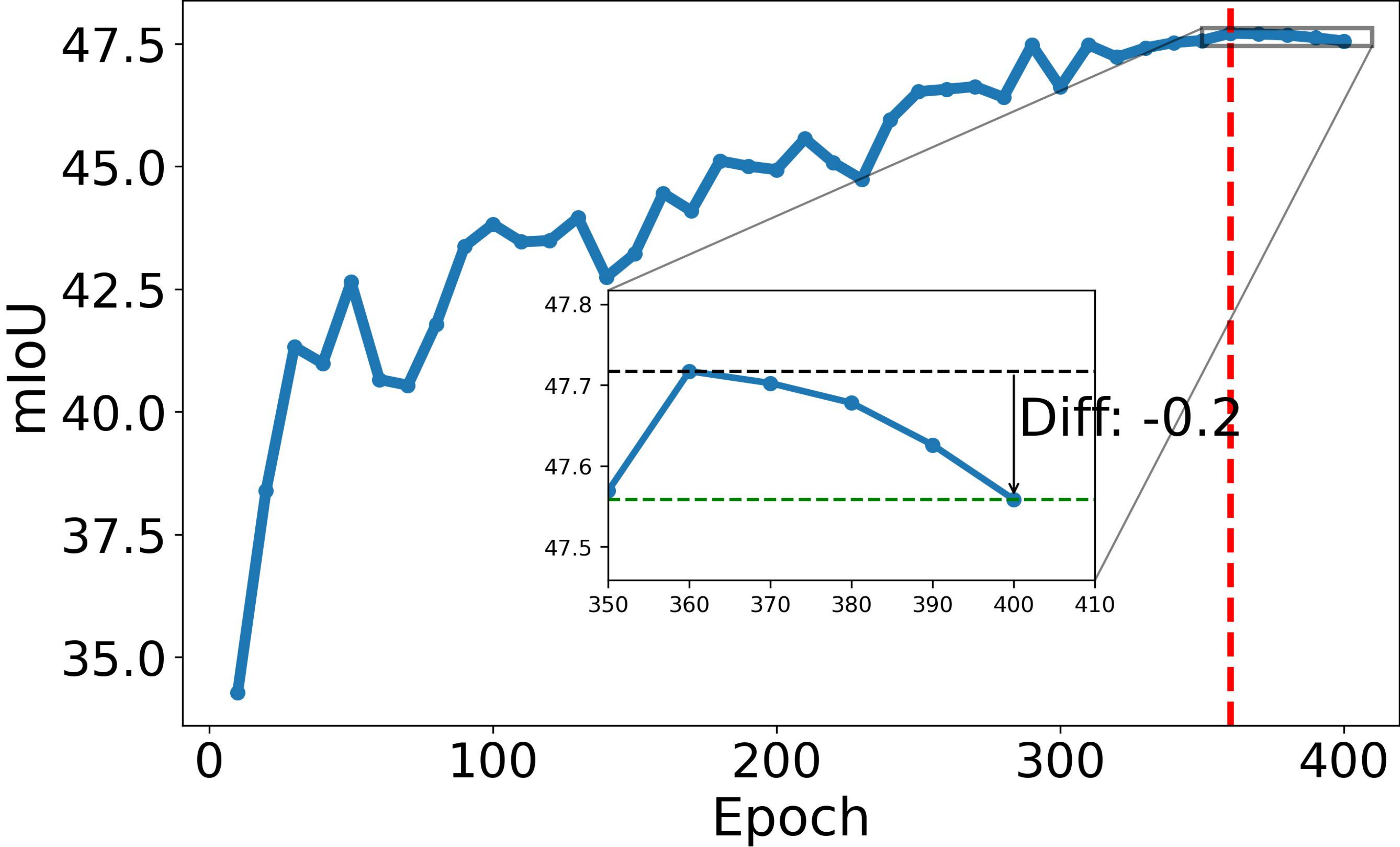}
        \caption{\swav}
    \end{subfigure}
    \vspace{0.15cm}
    \begin{subfigure}{0.19\textwidth}
        \centering
        \includegraphics[width=\linewidth]{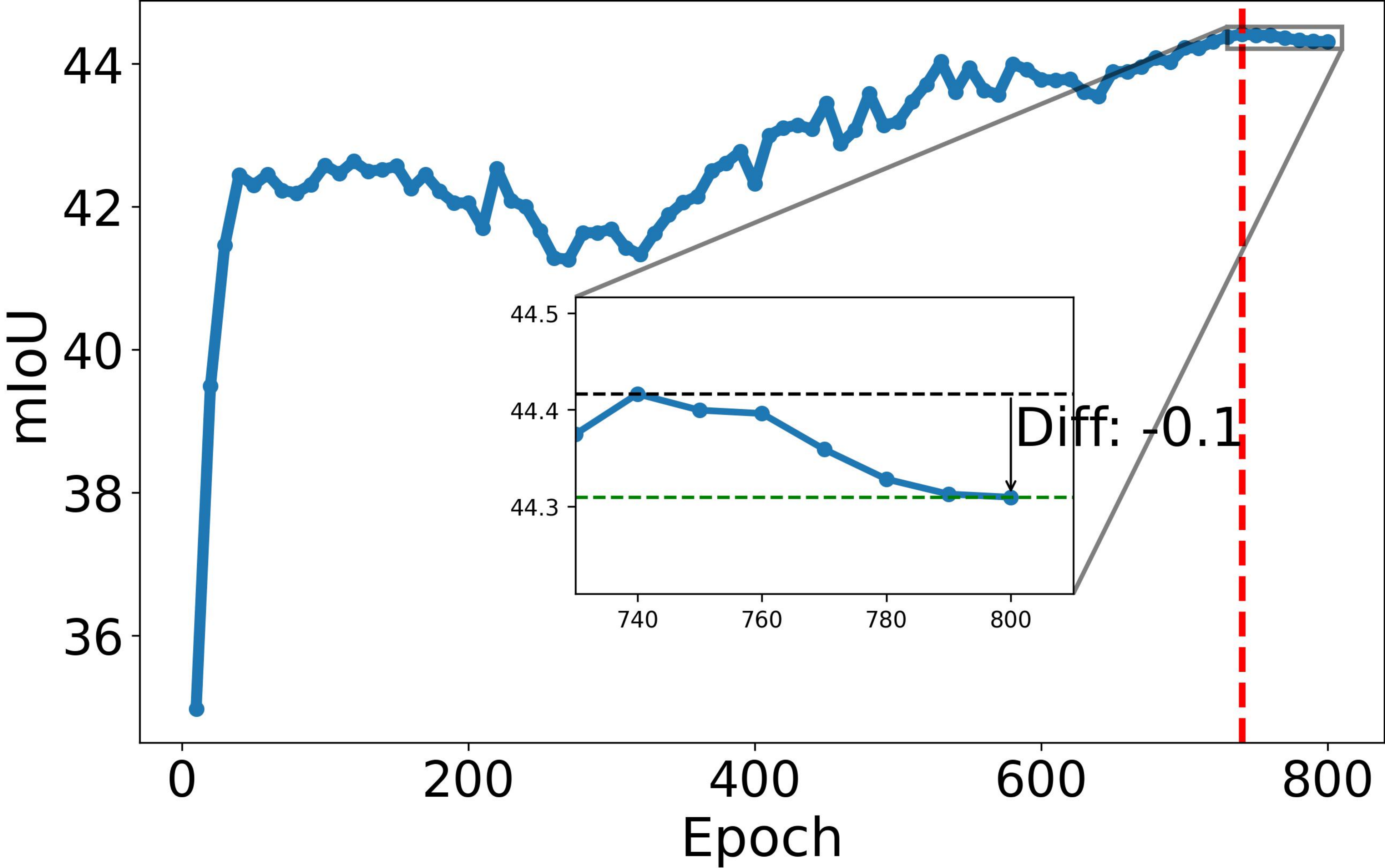}
        \caption{\dino}
    \end{subfigure}
    \hfill
    \begin{subfigure}{0.19\textwidth}
        \centering
        \includegraphics[width=\linewidth]{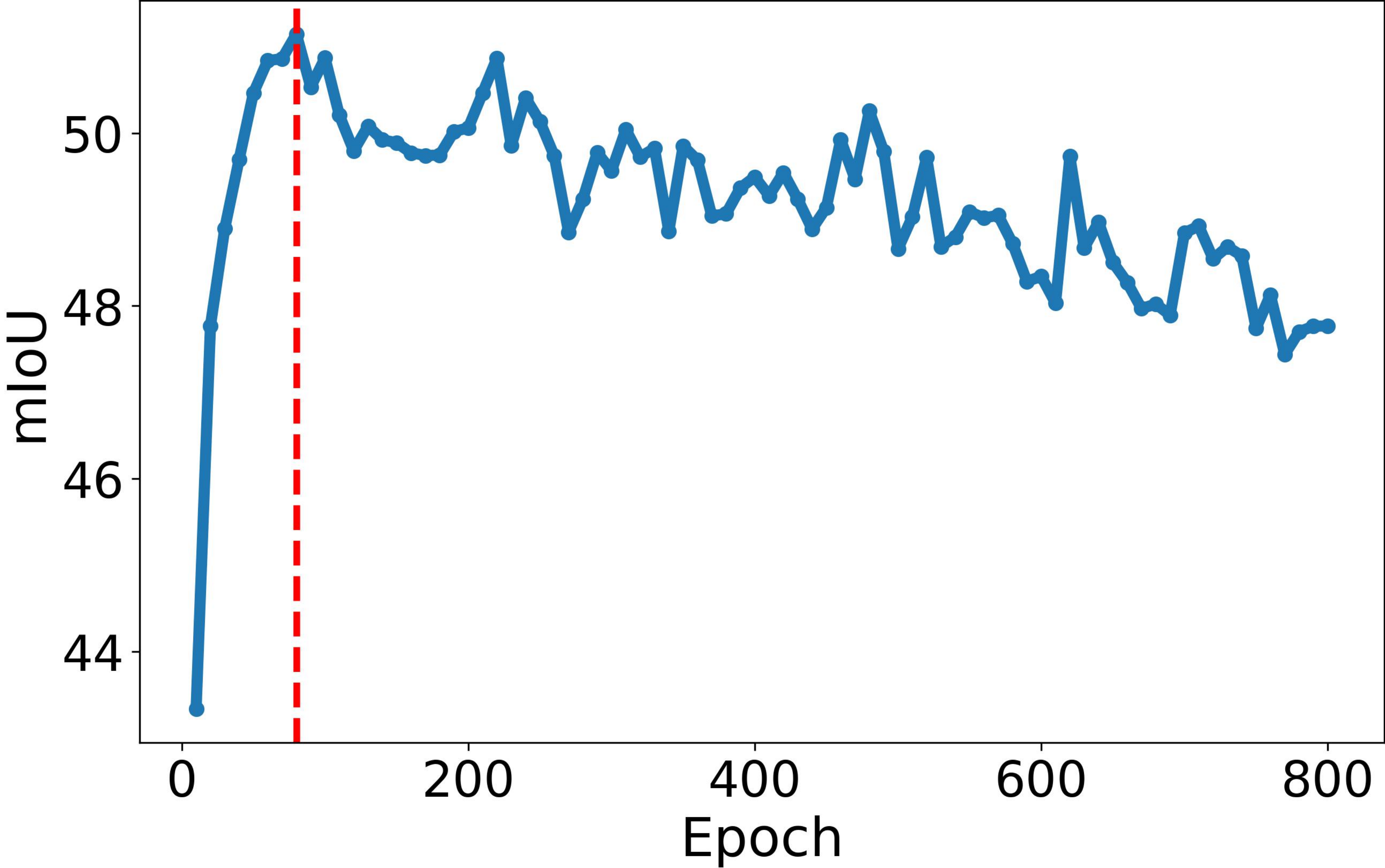}
        \caption{\esvit}
    \end{subfigure}
    \hfill
    \begin{subfigure}{0.19\textwidth}
        \centering
        \includegraphics[width=\linewidth]{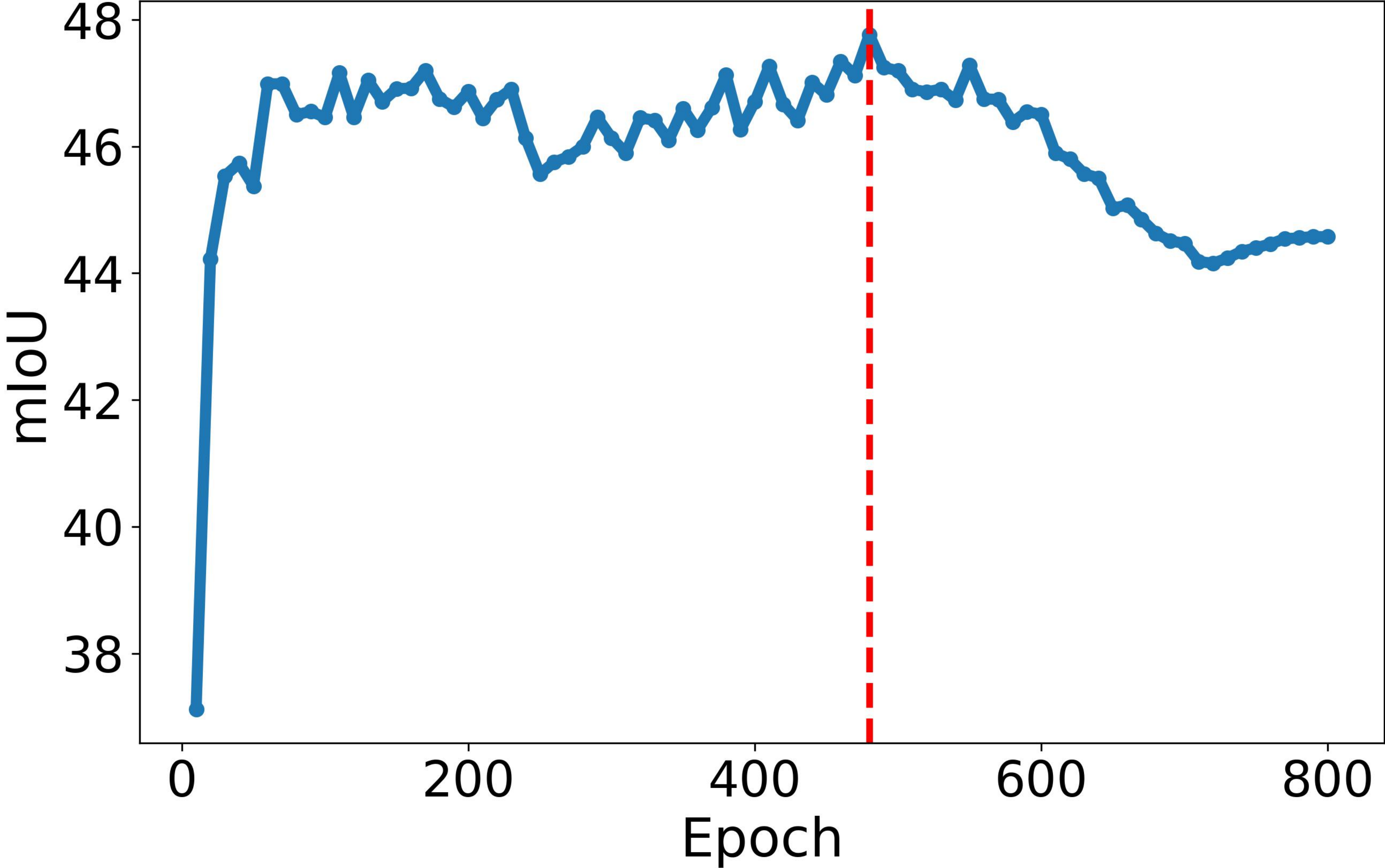}
        \caption{\ibot}
    \end{subfigure}
    \hfill
    \begin{subfigure}{0.19\textwidth}
        \centering
        \includegraphics[width=\linewidth]{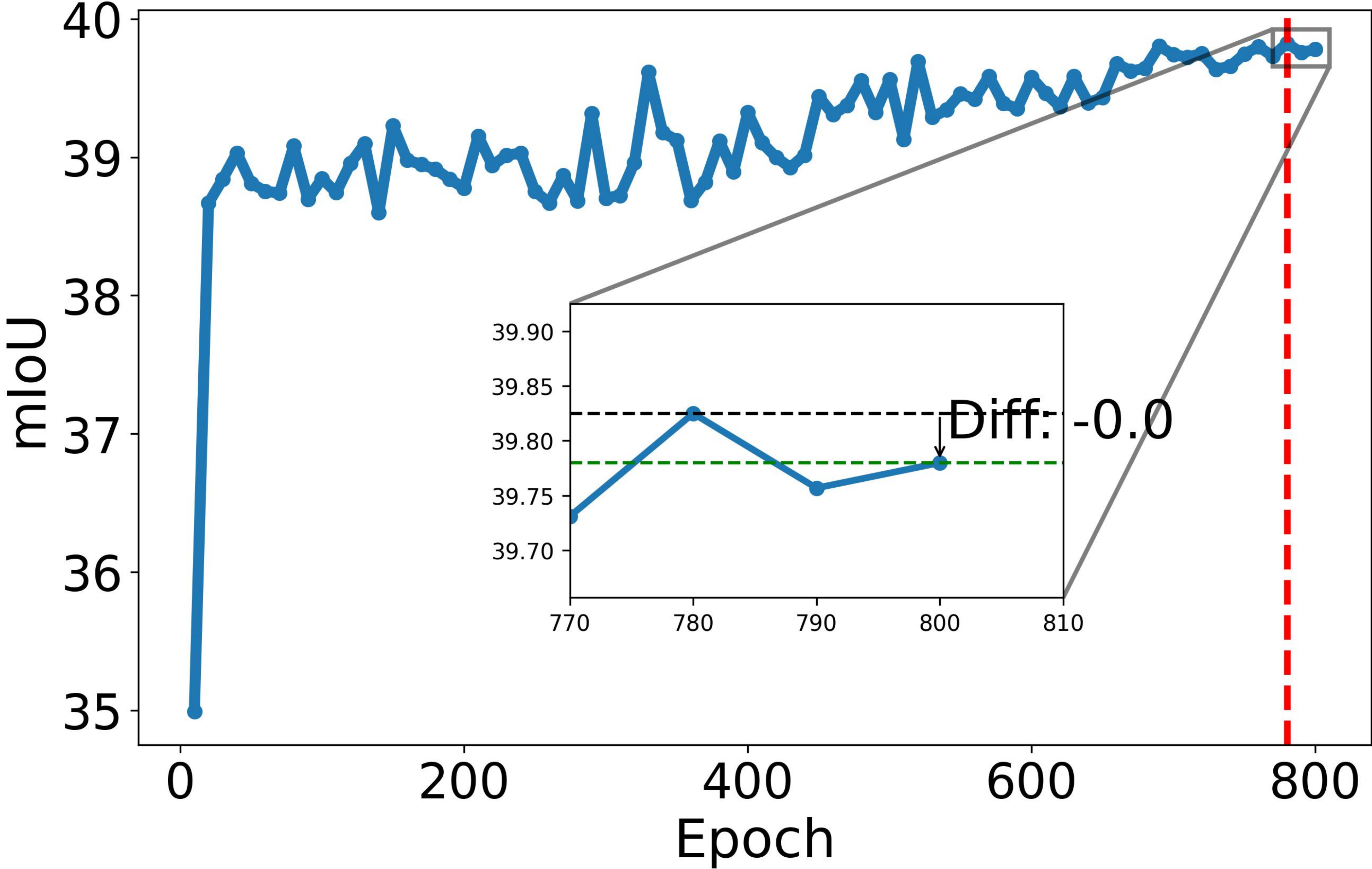}
        \caption{\mec}
    \end{subfigure}
    \hfill
    \begin{subfigure}{0.19\textwidth}
        \centering
        \includegraphics[width=\linewidth]{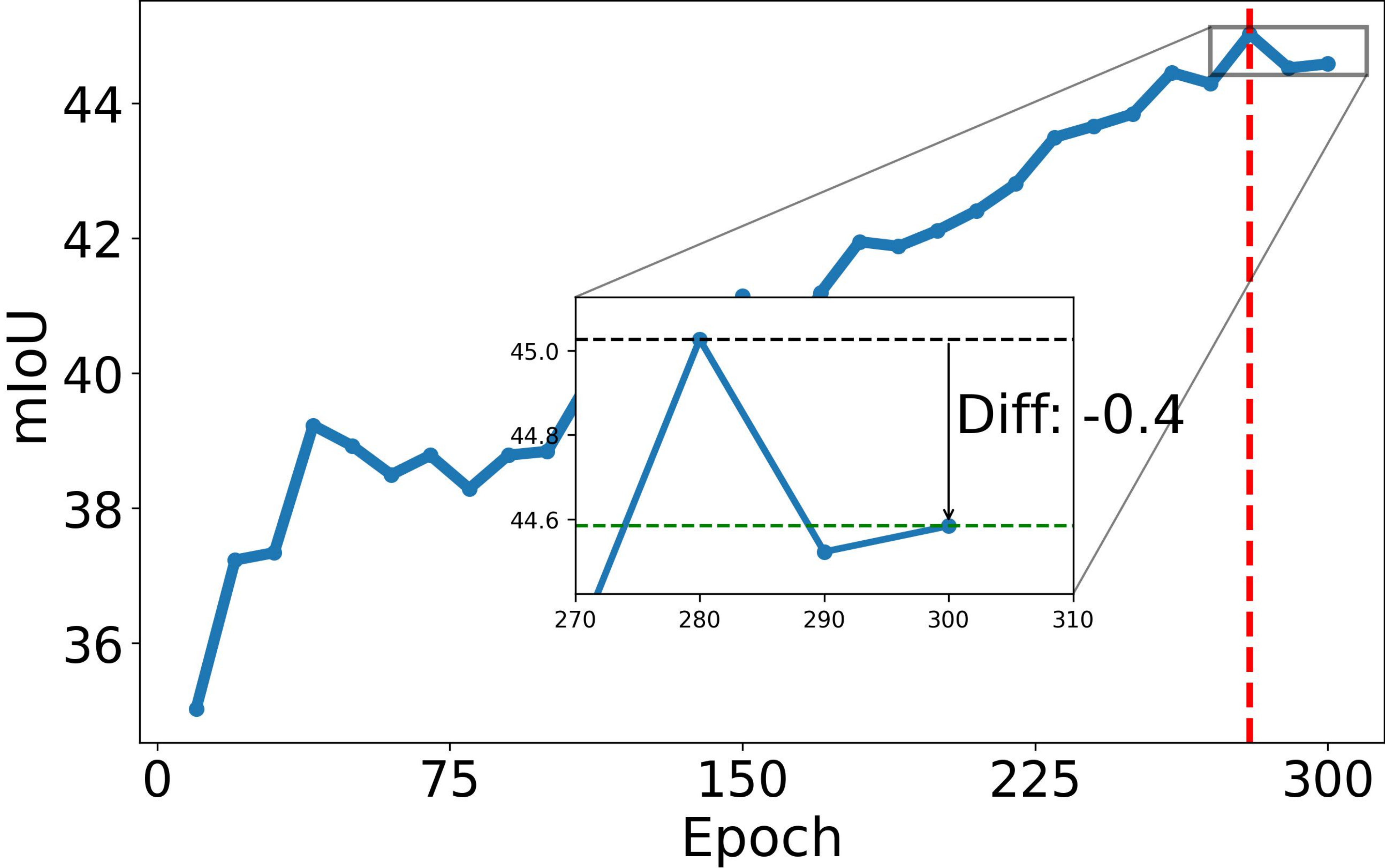}
        \caption{\vicregl}
    \end{subfigure}
    \vspace{0.15cm}
    \hfill
    \begin{subfigure}{0.19\textwidth}
        \centering
        \includegraphics[width=\linewidth]{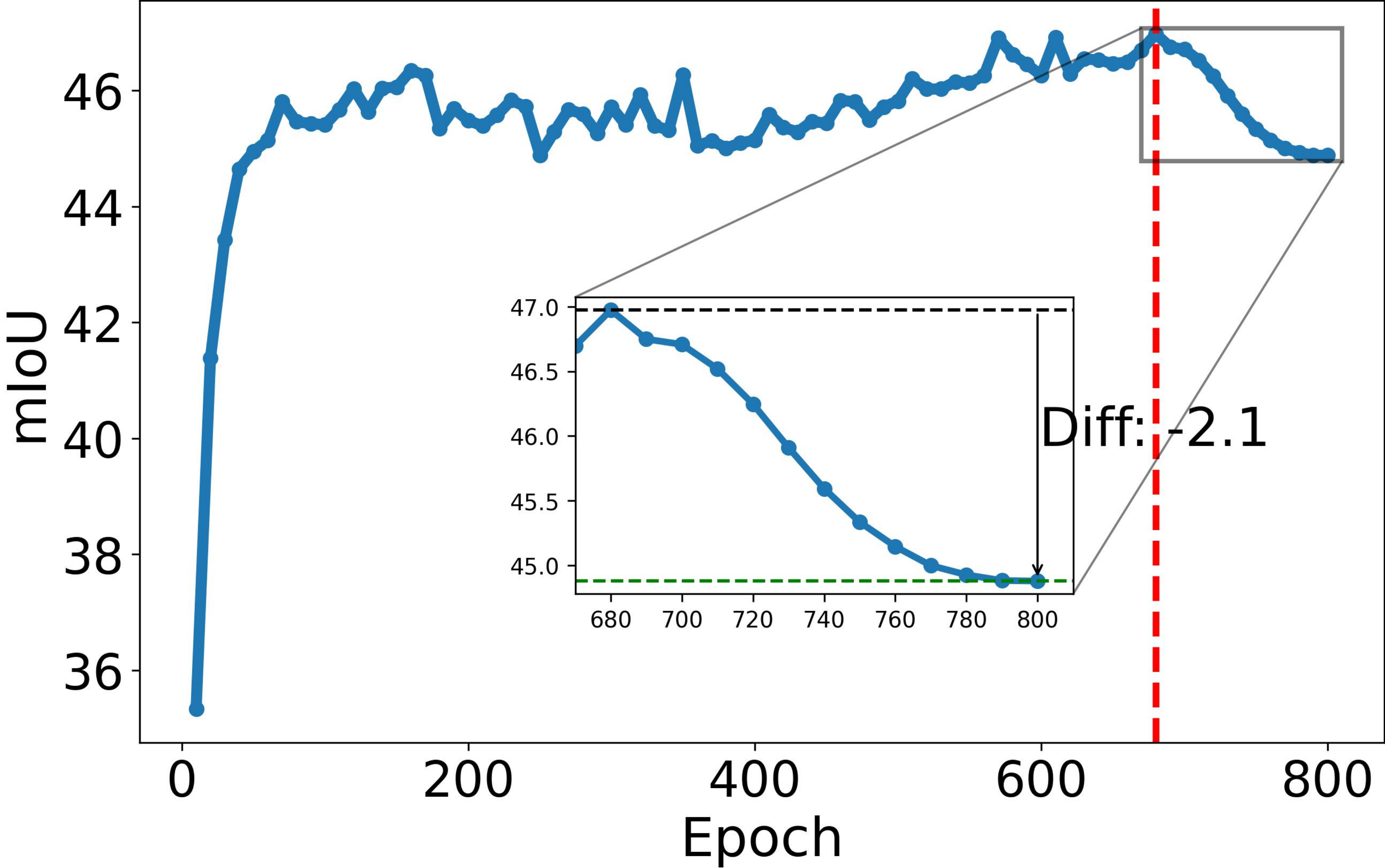}
        \caption{\mugs}
    \end{subfigure}
    \hspace{0.05\textwidth}
    \begin{subfigure}{0.19\textwidth}
        \centering
        \includegraphics[width=\linewidth]{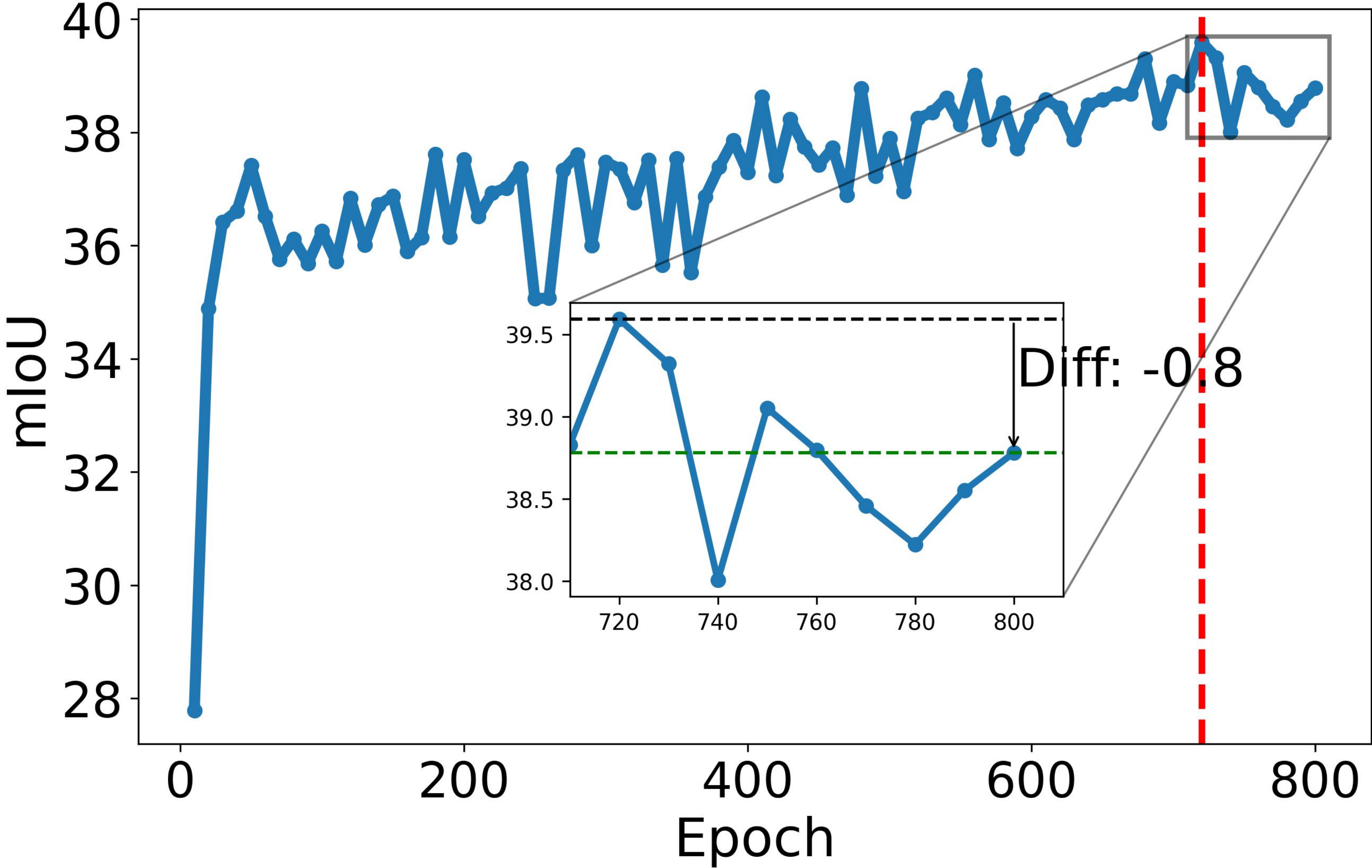}
        \caption{\resa}
    \end{subfigure}
    \hspace{0.05\textwidth}
    \begin{subfigure}{0.19\textwidth}
        \centering
        \includegraphics[width=\linewidth]{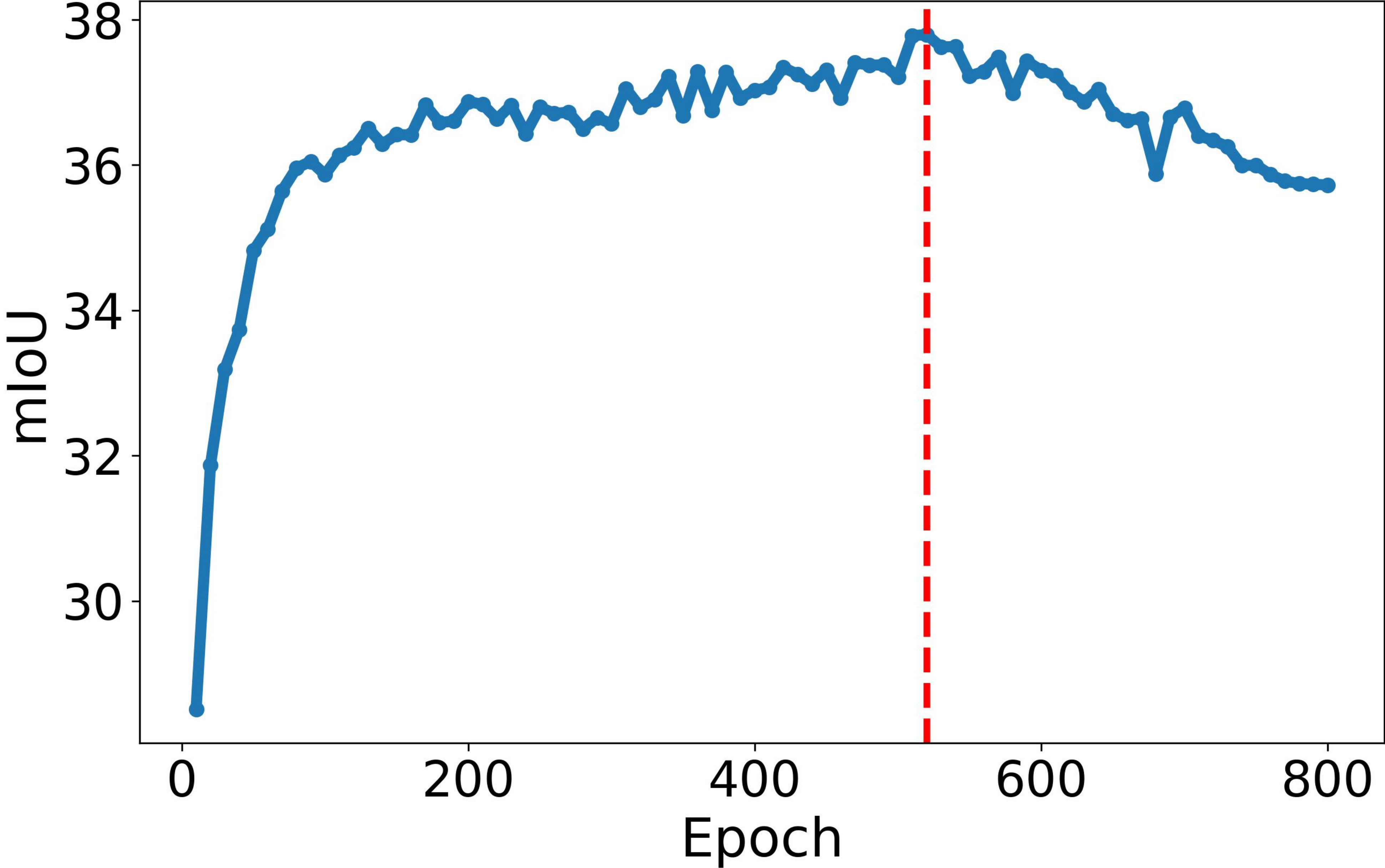}
        \caption{\mae}
    \end{subfigure}
    \hspace{0.05\textwidth}
    \begin{subfigure}{0.19\textwidth}
        \centering
        \includegraphics[width=\linewidth]{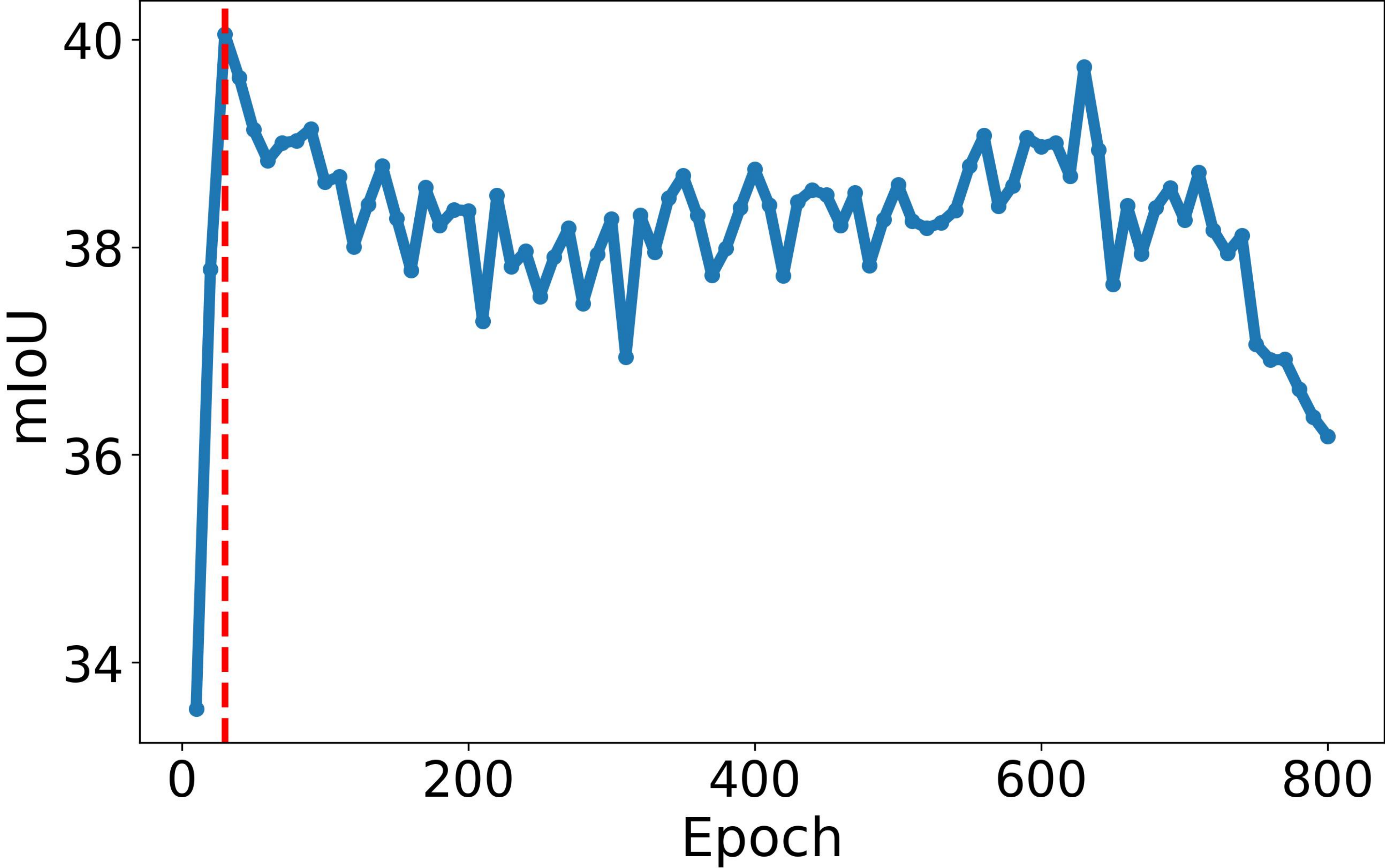}
        \caption{\ijepa}
    \end{subfigure}
    \hfill
    \vspace{-2pt}
    \caption{The SDD phenomenon on Cityscapes.}
    \label{fig:SDD_cityscapes}
\end{figure}
  
\clearpage
\subsection{The SDD Phenomenon Exists Under Varying Evaluation Protocols}  
\subsubsection{The SDD Phenomenon Exists when Backbone is Not Frozen.}
To examine how changes to the backbone during fine-tuning influence downstream performance, we evaluated checkpoints where the backbone was unfrozen and present the results in Tab. \ref{tab:sdd_finetune}. The findings reveal a pattern similar to that seen in the linear probing setting, with an average decrease of 2.9\% in mIoU. While fine-tuning the backbone notably improves downstream performance, the decline becomes less pronounced as the backbone continues to adapt during training. Nevertheless, a similar trend to the fixed-backbone setting remains, further supporting the existence of the SDD phenomenon.

\begin{table}[htbp]
  \centering
  
  \caption{ A performance gap between the best and the last models is present across all datasets and methods when \textbf{backbone is not frozen}.}
    \resizebox{\textwidth}{!}{ 
    \begin{tabular}{lll | ccc | ccc | ccc} 
      \toprule
      \multirow{2}{*}{Method Type} & \multirow{2}{*}{Method} & \multirow{2}{*}{Architecture} & \multicolumn{3}{c}{COCO-Stuff} & \multicolumn{3}{c}{PASCAL VOC} & \multicolumn{3}{c}{ADE20k}  \\
      & & & Best & Last & Diff & Best & Last & Diff & Best & Last & Diff  \\ 
      \midrule
      \multirow{2}{*}{Contrastive} 
        & MoCo v3 \cite{mocov3} & ViT-Small-16 &39.6 & 35.4 & -4.2 & 57.3 & 18.1 & -39.2 & 21.0 & 6.1 & -14.9 \\ 
        & DenseCL \cite{densecl} & ResNet-50 & 43.2 & 42.4 & -0.8 & 62.3 & 61.8 & -0.5 & 25.0 & 24.8 & -0.2 \\
      \midrule
      \multirow{6}{*}{Non-Contrastive} 
        & MEC \cite{Mec} & ResNet-50 & 41.4 & 41.2 & -0.2 & 57.8 & 57.6 & -0.2 & 22.0 & 22.0 & 0.0 \\ 
        & SimSiam \cite{simsiam} & ResNet-50 & 42.7 & 42.6 & -0.1 & 60.0 & 59.9 & -0.1 & 22.5 & 22.5 & 0.0\\ 
        & SwAV \cite{swav} & ResNet-50 &  38.8 & 38.8 & 0.0 & 55.3 & 55.1 & -0.2 & 20.1 & 20.0 & -0.1 \\
        & DINO \cite{dino} & ViT-Small-16 & 42.6 & 41.0 & -1.6 & 62.0 & 60.6 & -1.4 & 25.6 & 25.5 & -0.1\\ 
        & EsViT \cite{esvit} & Swin-Tiny-7 &42.9 & 39.5 & -3.4 & 62.5 & 56.0 & -6.5 & 25.4 & 22.2 & -3.2\\ 
        & iBOT \cite{ibot} & ViT-Small-16 &46.9 & 44.9 & -2.0 & 69.7 & 68.2 & -1.5 & 29.1 & 28.1 & -1.0\\ 
      \midrule
      \multirow{2}{*}{Masked Modeling} 
        & MAE \cite{mae} & ViT-Small-16 & 39.1 & 39.0 & -0.1 & 55.3 & 54.9 & -0.4 & 19.9 & 19.6 & -0.3\\ 
        & I-JEPA \cite{ijepa} & ViT-Base-16 &46.1 & 44.8 & -1.3 & 70.0 & 67.5 & -2.5 & 29.5 & 28.1 & -1.4\\ 
      \bottomrule
    \end{tabular}
  }
  \label{tab:sdd_finetune}
\end{table}

\subsubsection{The SDD Phenomenon Exists in Varying Evaluation Hyperparameters.}
To investigate whether SDD stems from specific evaluation settings, we test different learning rates and fine-tuning durations for downstream tasks. As shown in Fig.~\ref{fig:ablation_lr} and Fig.~\ref{fig:ablation_iters}, performance degradation trends remain consistent across hyperparameter configurations. This demonstrates that SDD is not an artifact of specific evaluation hyperparameters but reflects a general correlation with representation quality.  

\begin{figure}[H]
    \centering
    \begin{tikzpicture}
        \begin{axis}[
            scale only axis,
            legend style={
                at={(0.5,1.05)}, 
                anchor=south,
                legend columns=3, 
                /tikz/every even column/.append style={column sep=1cm},
                font=\smaller, 
                draw=lightgray, 
                fill=white, 
                /pgf/number format/1000 sep={} 
            },
            legend cell align={left},
            xlabel={}, ylabel={}, 
            xmin=0, xmax=1, ymin=0, ymax=1, 
            axis lines=none, 
        ]
            \addlegendimage{color=matplotlibblue, mark=none, line width=1pt}
            \addlegendentry{lr=$0.005$}
            \addlegendimage{color=matplotliborange, mark=none, line width=1pt}
            \addlegendentry{lr=$0.01$}
            \addlegendimage{color=matplotlibgreen, mark=none, line width=1pt}
            \addlegendentry{lr=$0.02$}
        \end{axis}
    \end{tikzpicture}
    
    \begin{subfigure}{0.24\textwidth}
        \centering
        \includegraphics[width=\linewidth]{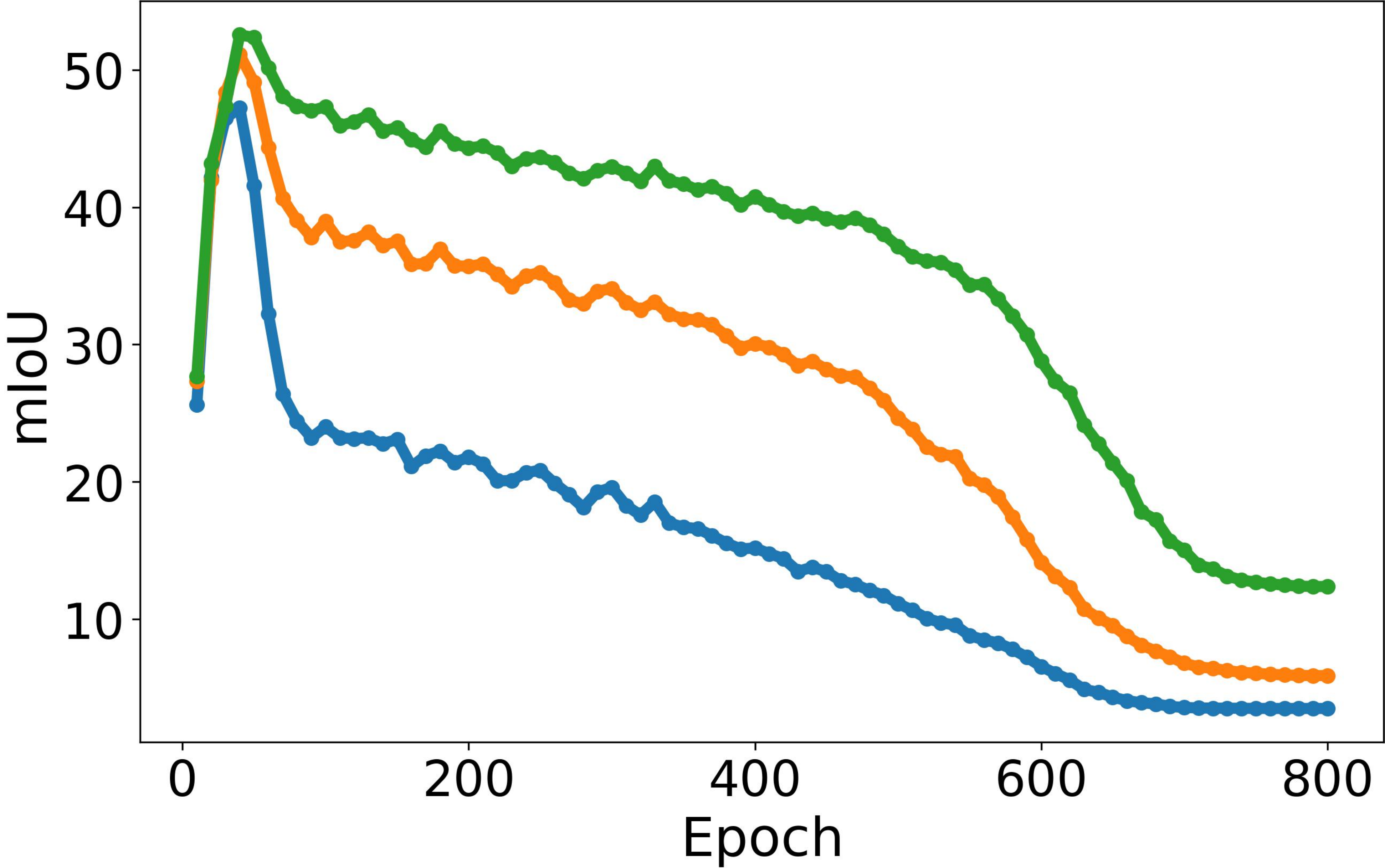}
        \caption{\moco}
    \end{subfigure}
    \hfill
    \begin{subfigure}{0.24\textwidth}
        \centering
        \includegraphics[width=\linewidth]{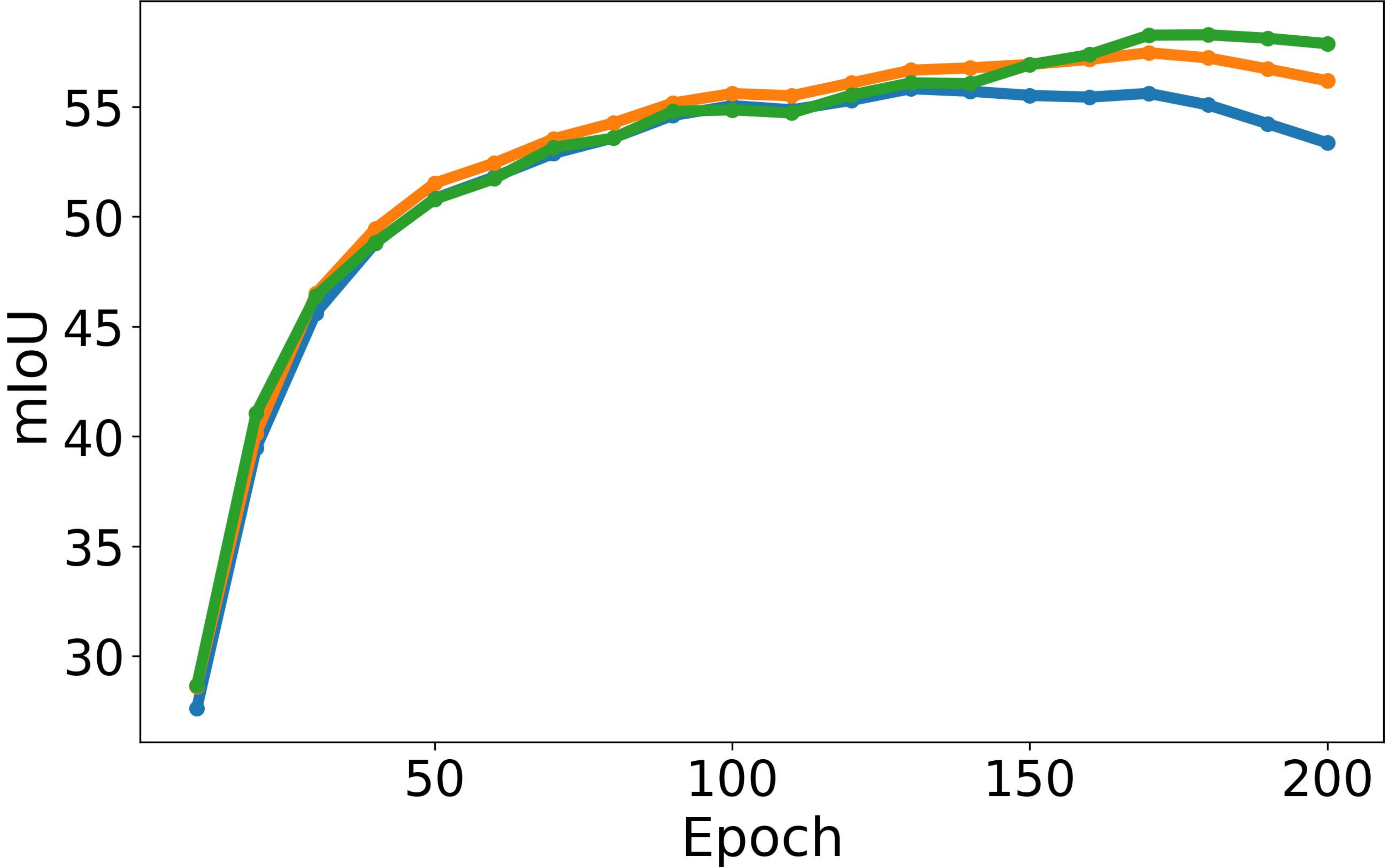}
        \caption{\densecl}
    \end{subfigure}
    \hfill
    \begin{subfigure}{0.24\textwidth}
        \centering
        \includegraphics[width=\linewidth]{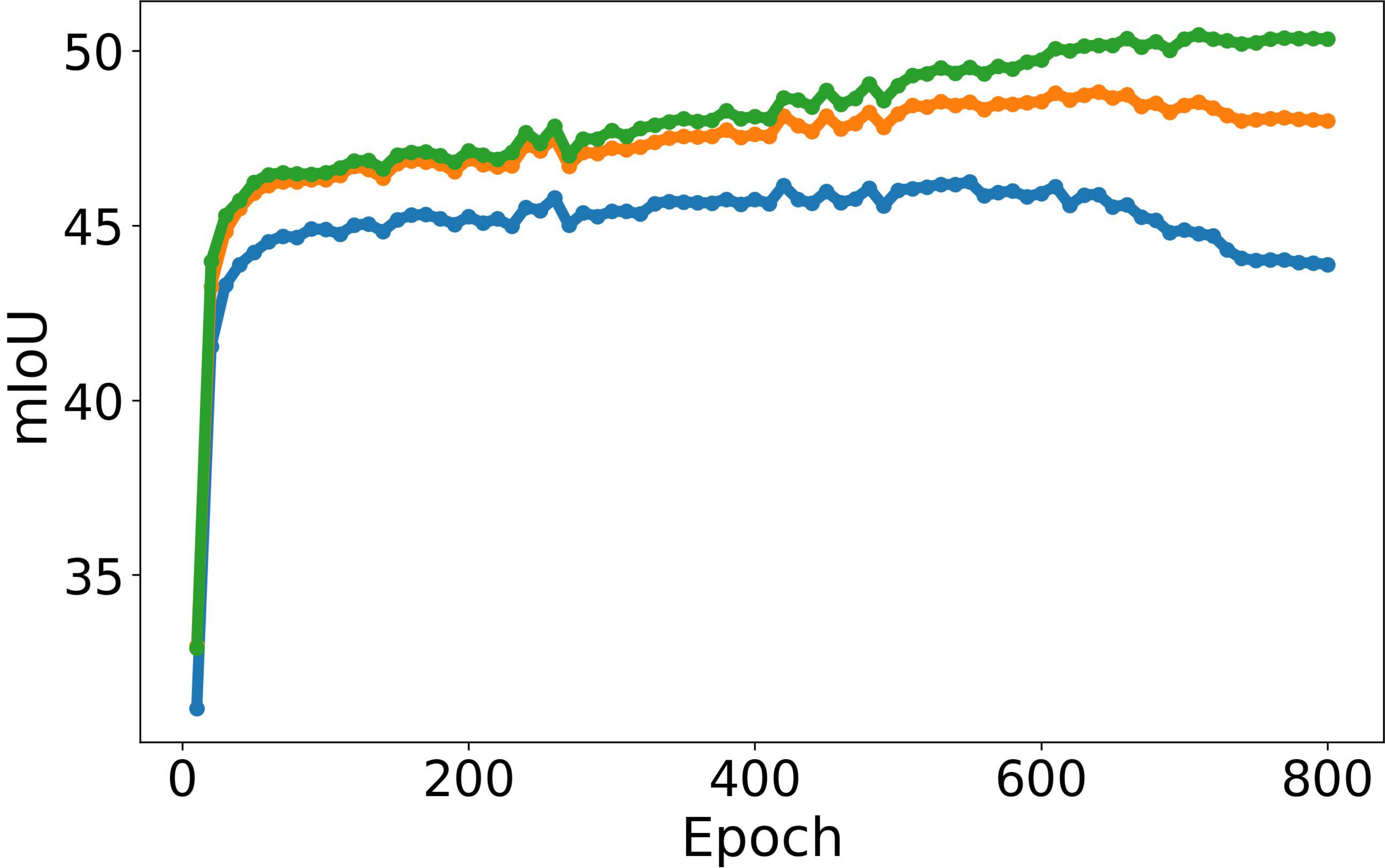}
        \caption{\mec}
    \end{subfigure}
    \hfill
    \begin{subfigure}{0.24\textwidth}
        \centering
        \includegraphics[width=\linewidth]{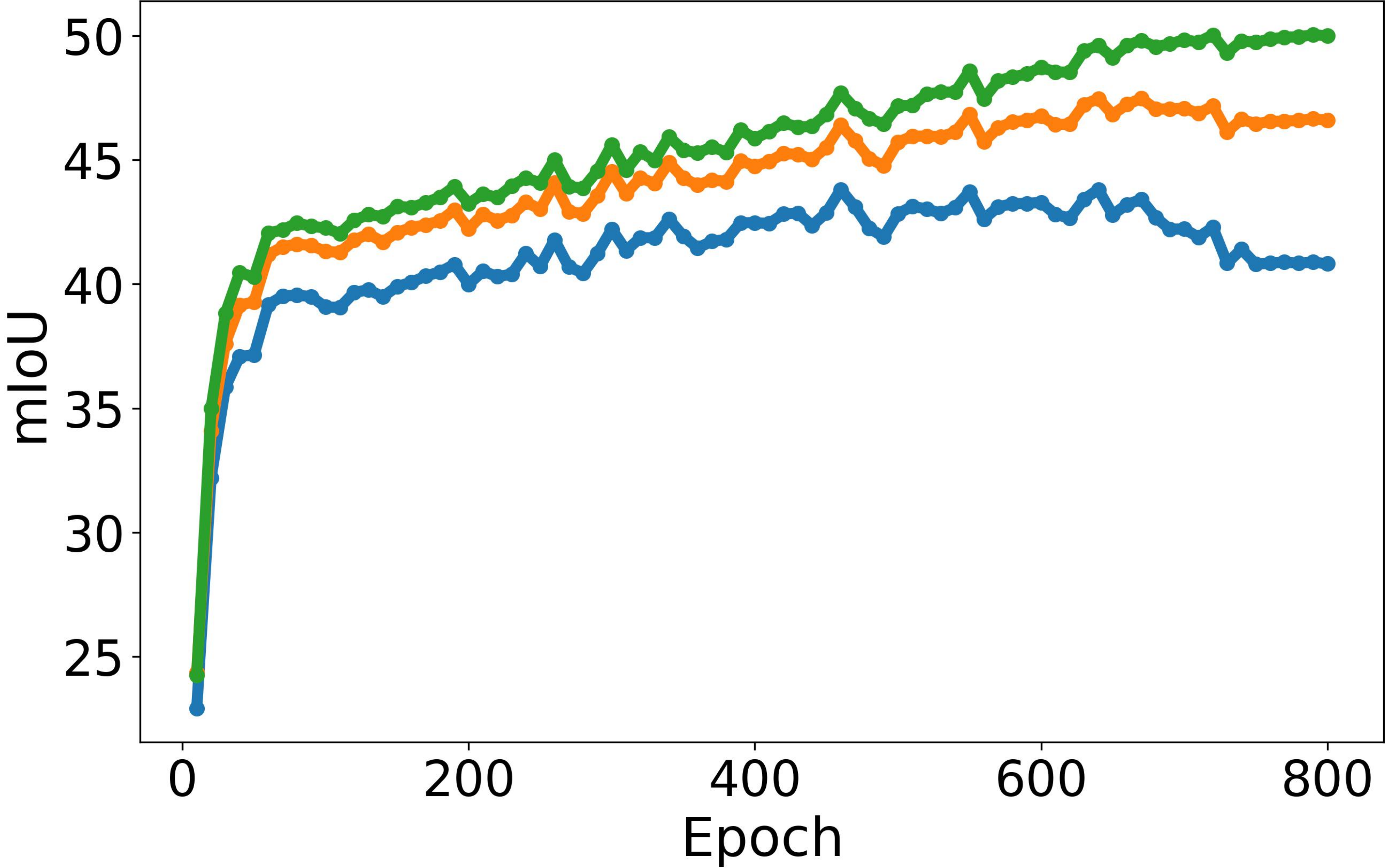}
        \caption{\simsiam}
    \end{subfigure}
    \vspace{0.15cm} 
    \begin{subfigure}{0.24\textwidth}
        \centering
        \includegraphics[width=\linewidth]{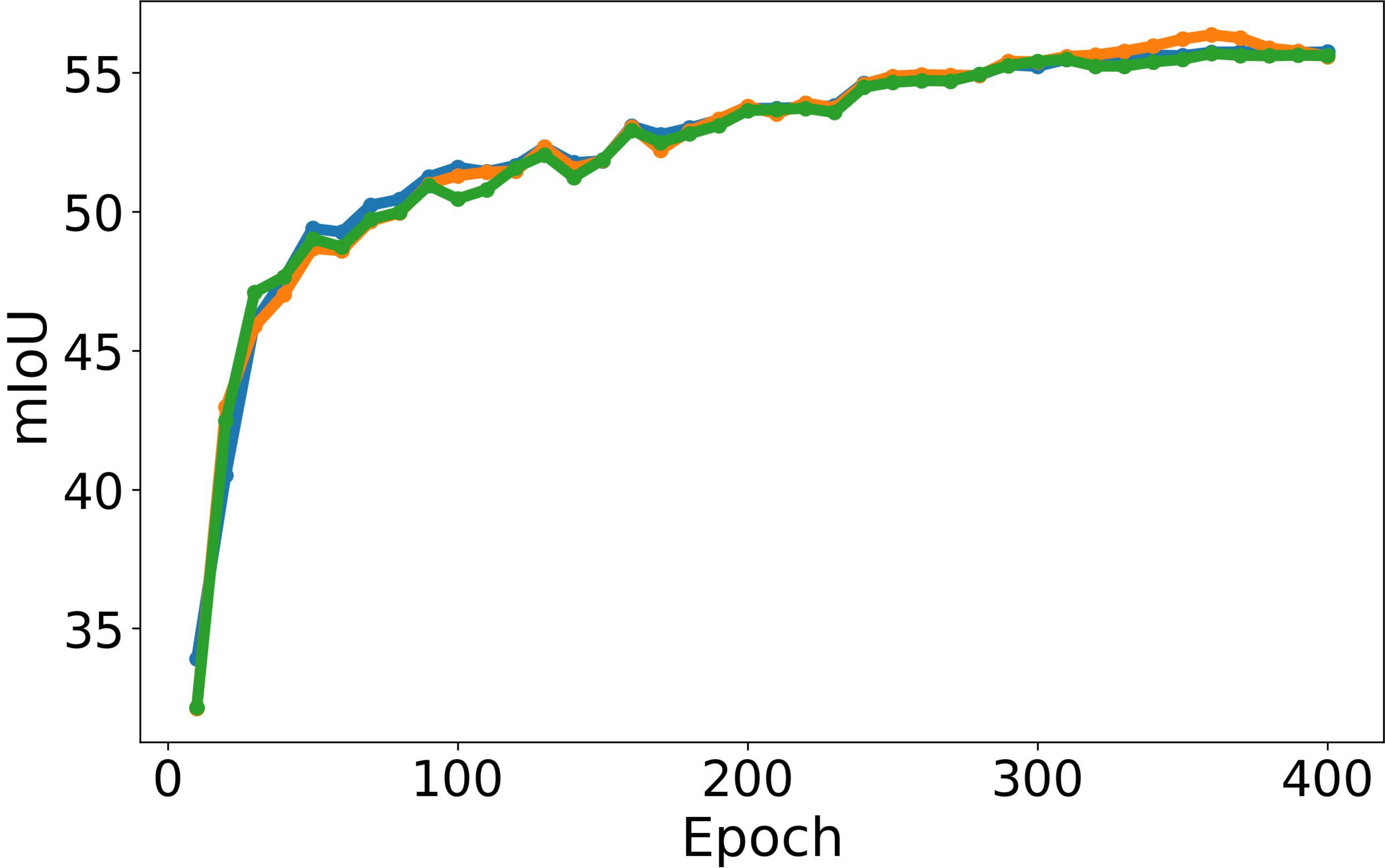}
        \caption{\swav}
    \end{subfigure}
    \hfill
    \begin{subfigure}{0.24\textwidth}
        \centering
        \includegraphics[width=\linewidth]{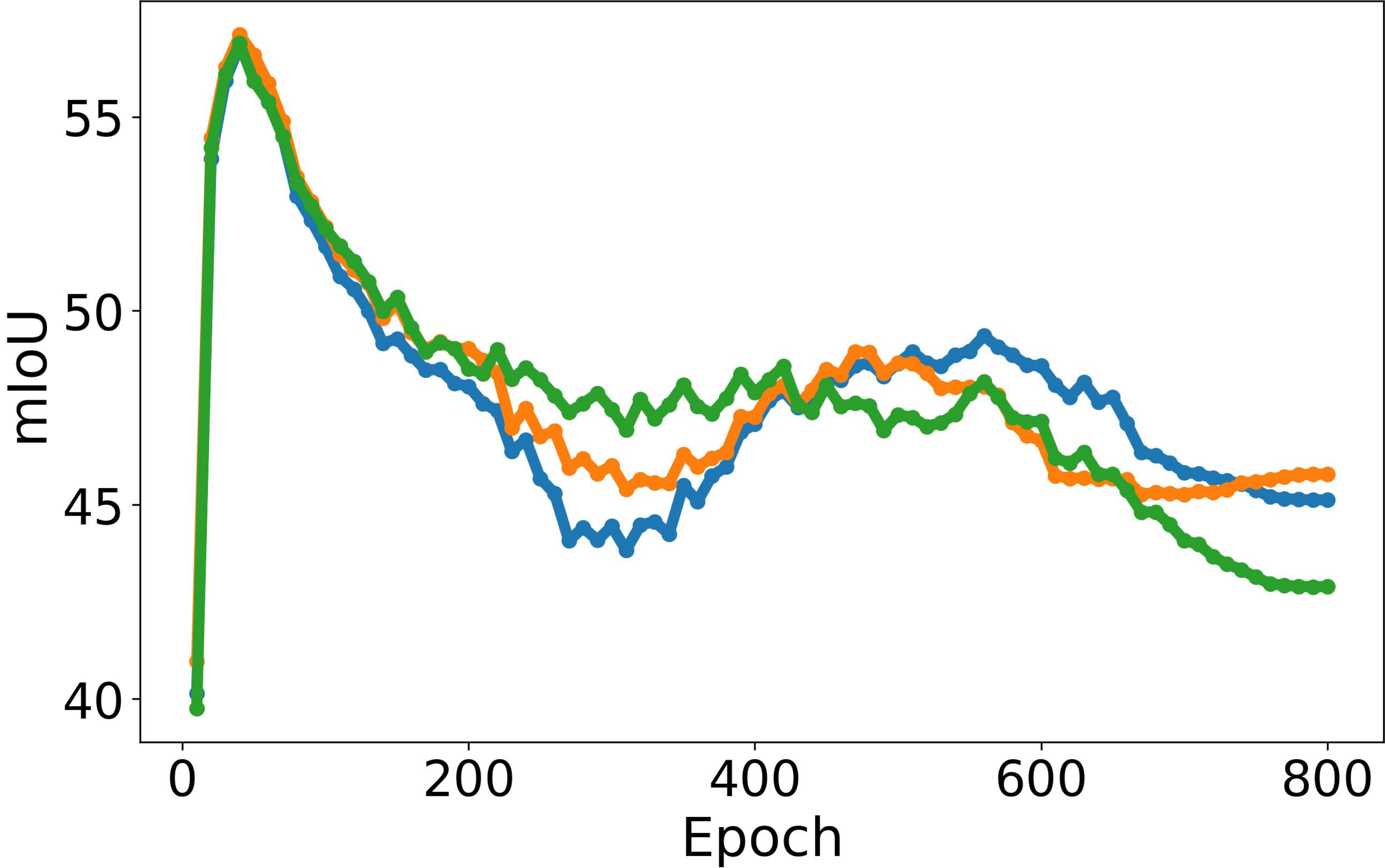}
        \caption{\dino}
    \end{subfigure}
    \hfill
    \begin{subfigure}{0.24\textwidth}
        \centering
        \includegraphics[width=\linewidth]{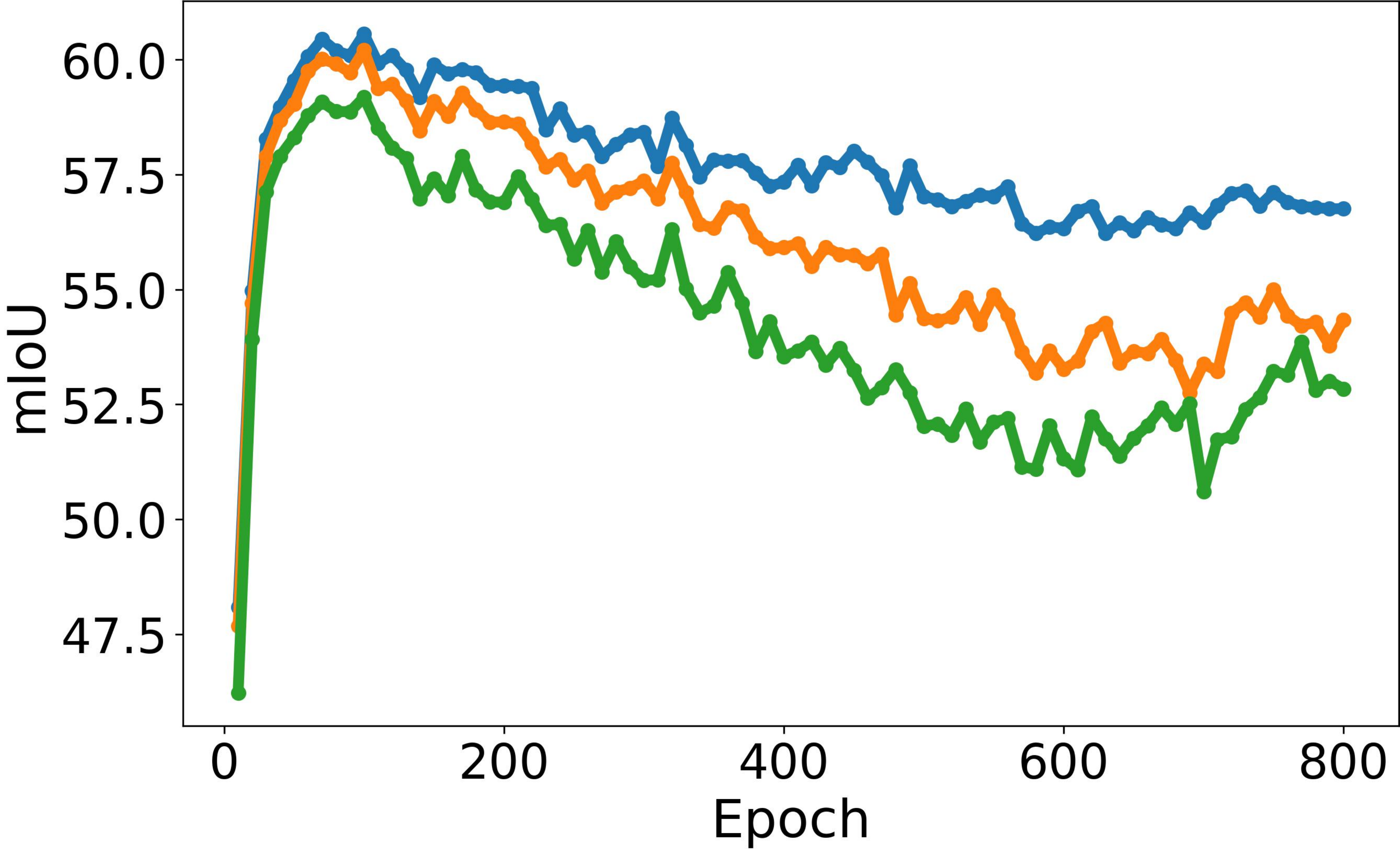}
        \caption{\esvit}
    \end{subfigure}
    \hfill
    \begin{subfigure}{0.24\textwidth}
        \centering
        \includegraphics[width=\linewidth]{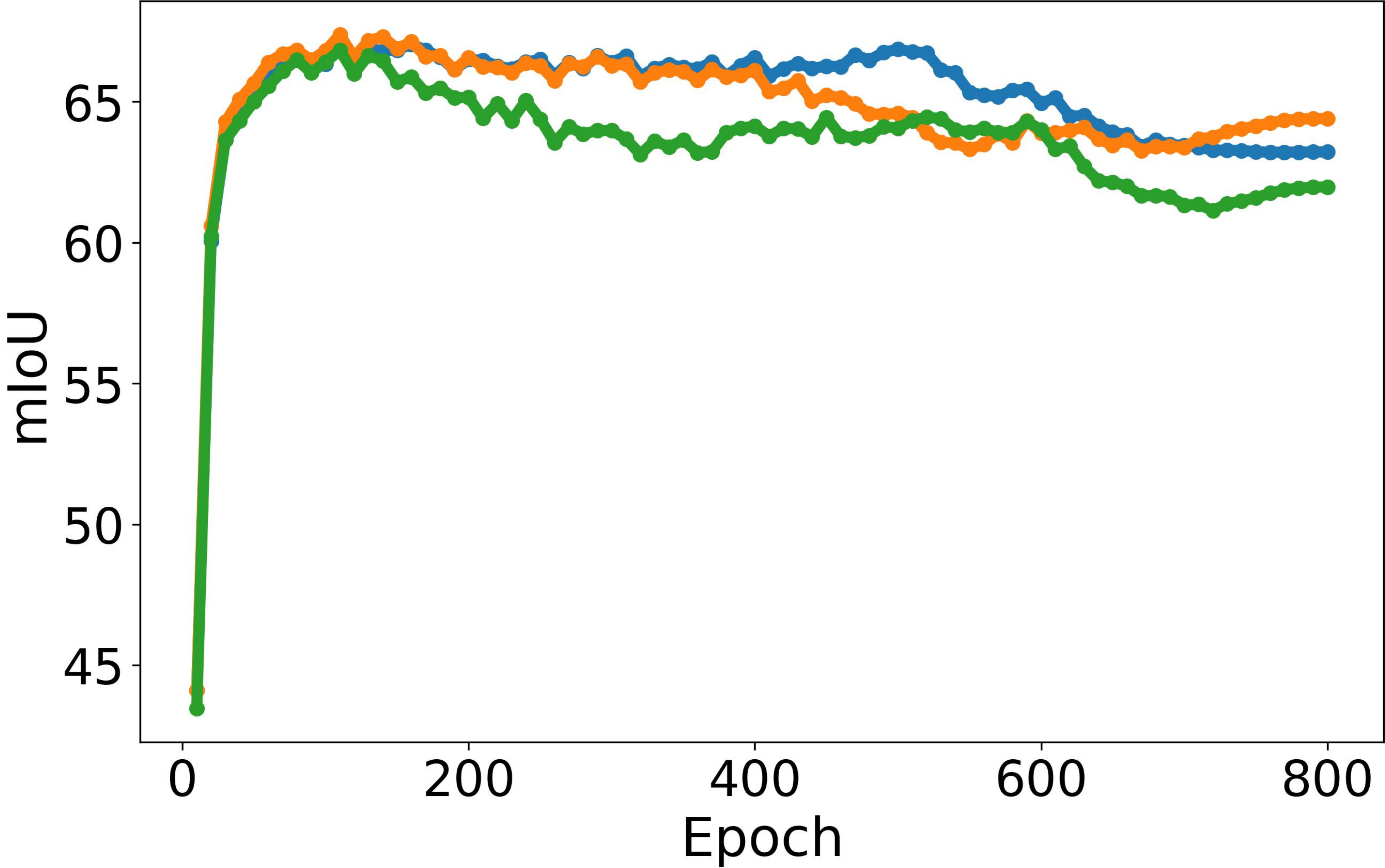}
        \caption{\ibot}
    \end{subfigure}
    \vspace{0.15cm} 
    \hfill 
    \begin{subfigure}{0.24\textwidth}
        \centering
        \includegraphics[width=\linewidth]{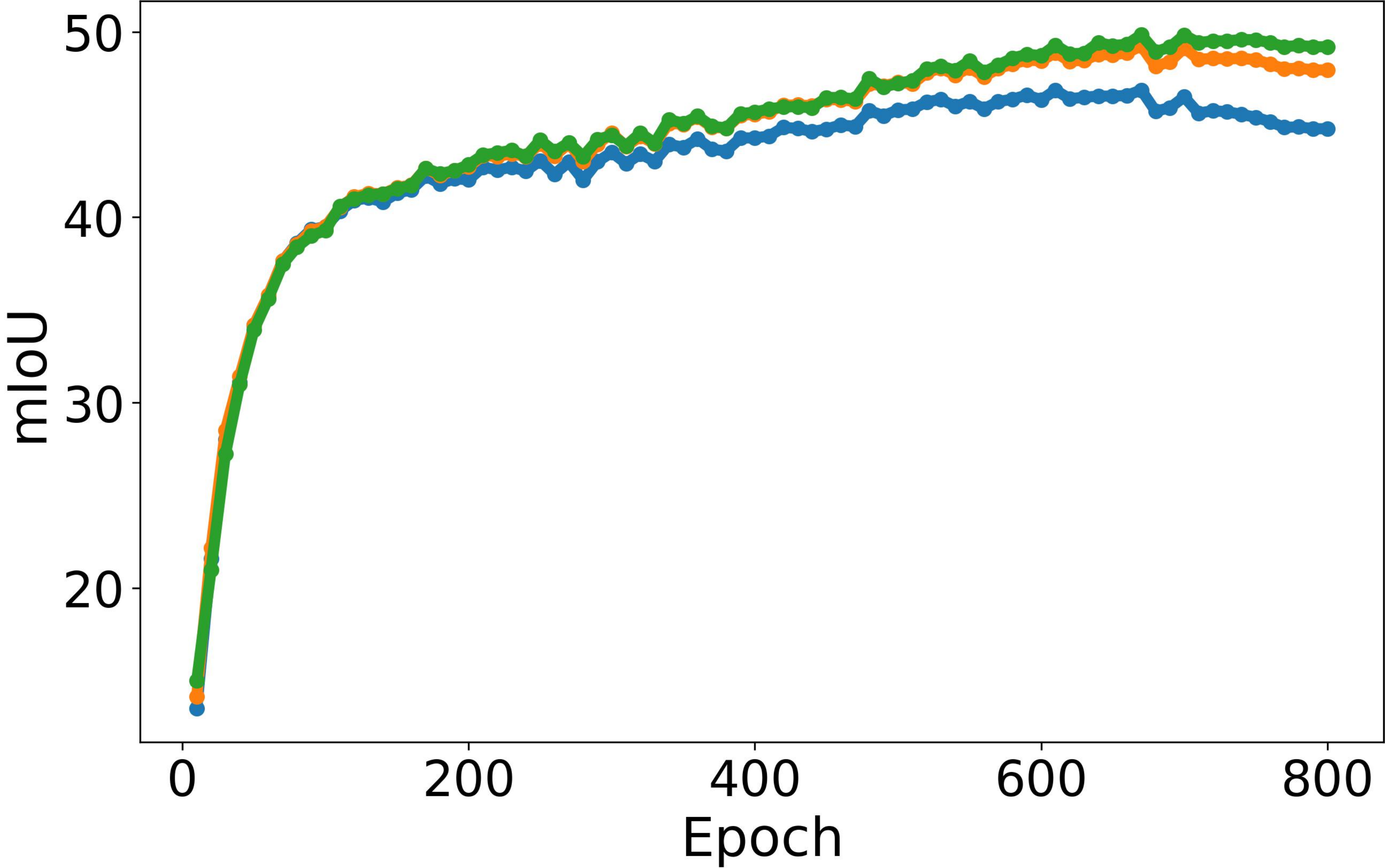}
        \caption{\mae}
    \end{subfigure}
    \hspace{0.0\textwidth} 
    \begin{subfigure}{0.24\textwidth}
        \centering
        \includegraphics[width=\linewidth]{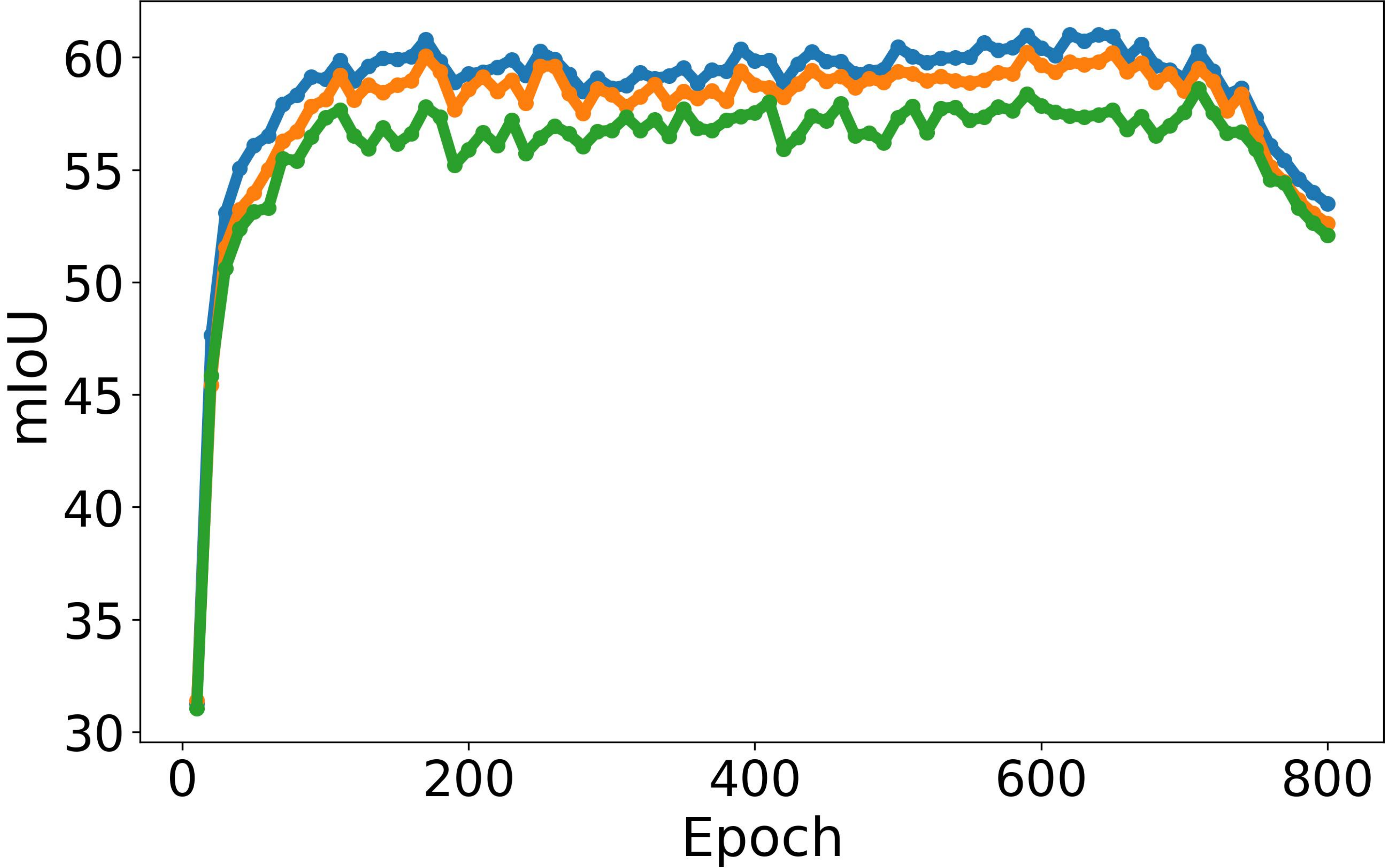}
        \caption{\ijepa}
    \end{subfigure}
    \hfill 
    \vspace{-2pt}
    \caption{The SDD phenomenon consistently exists across different learning rates.}
    \label{fig:ablation_lr}
\end{figure}  
\begin{figure}[H]
    \centering
    \begin{tikzpicture}
        \begin{axis}[
            scale only axis,
            legend style={
                at={(0.5,1.05)}, 
                anchor=south,
                legend columns=3, 
                /tikz/every even column/.append style={column sep=1cm},
                font=\smaller, 
                draw=lightgray, 
                fill=white, 
                /pgf/number format/1000 sep={} 
            },
            legend cell align={left},
            xlabel={}, ylabel={}, 
            xmin=0, xmax=1, ymin=0, ymax=1, 
            axis lines=none, 
        ]
            \addlegendimage{color=matplotlibblue, mark=none, line width=1pt}
            \addlegendentry{50k training images}
            \addlegendimage{color=matplotliborange, mark=none, line width=1pt}
            \addlegendentry{100k training images}
            \addlegendimage{color=matplotlibgreen, mark=none, line width=1pt}
            \addlegendentry{150k training images}
        \end{axis}
    \end{tikzpicture}
    
    \begin{subfigure}{0.24\textwidth}
        \centering
        \includegraphics[width=\linewidth]{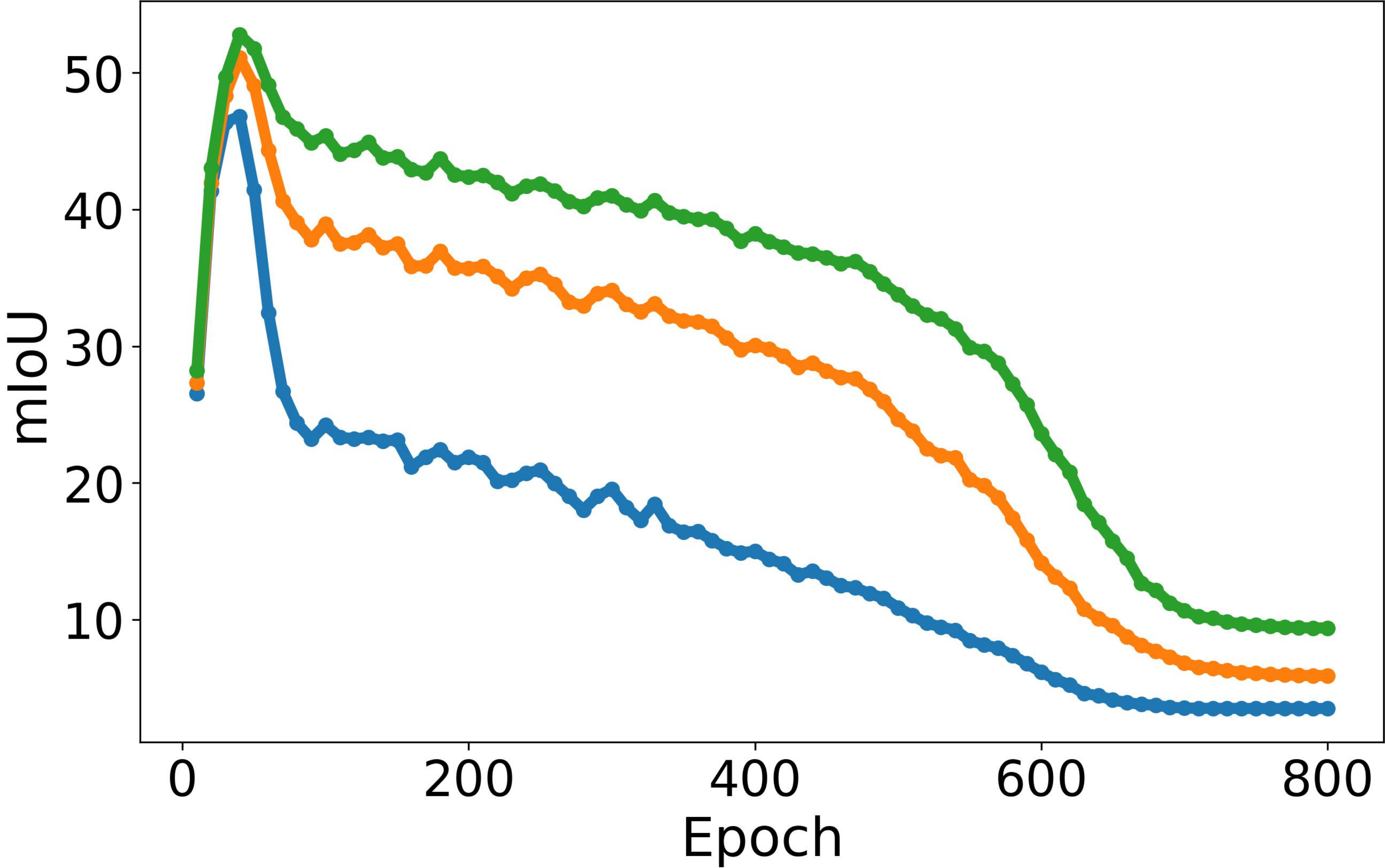}
        \caption{\moco}
    \end{subfigure}
    \hfill
    \begin{subfigure}{0.24\textwidth}
        \centering
        \includegraphics[width=\linewidth]{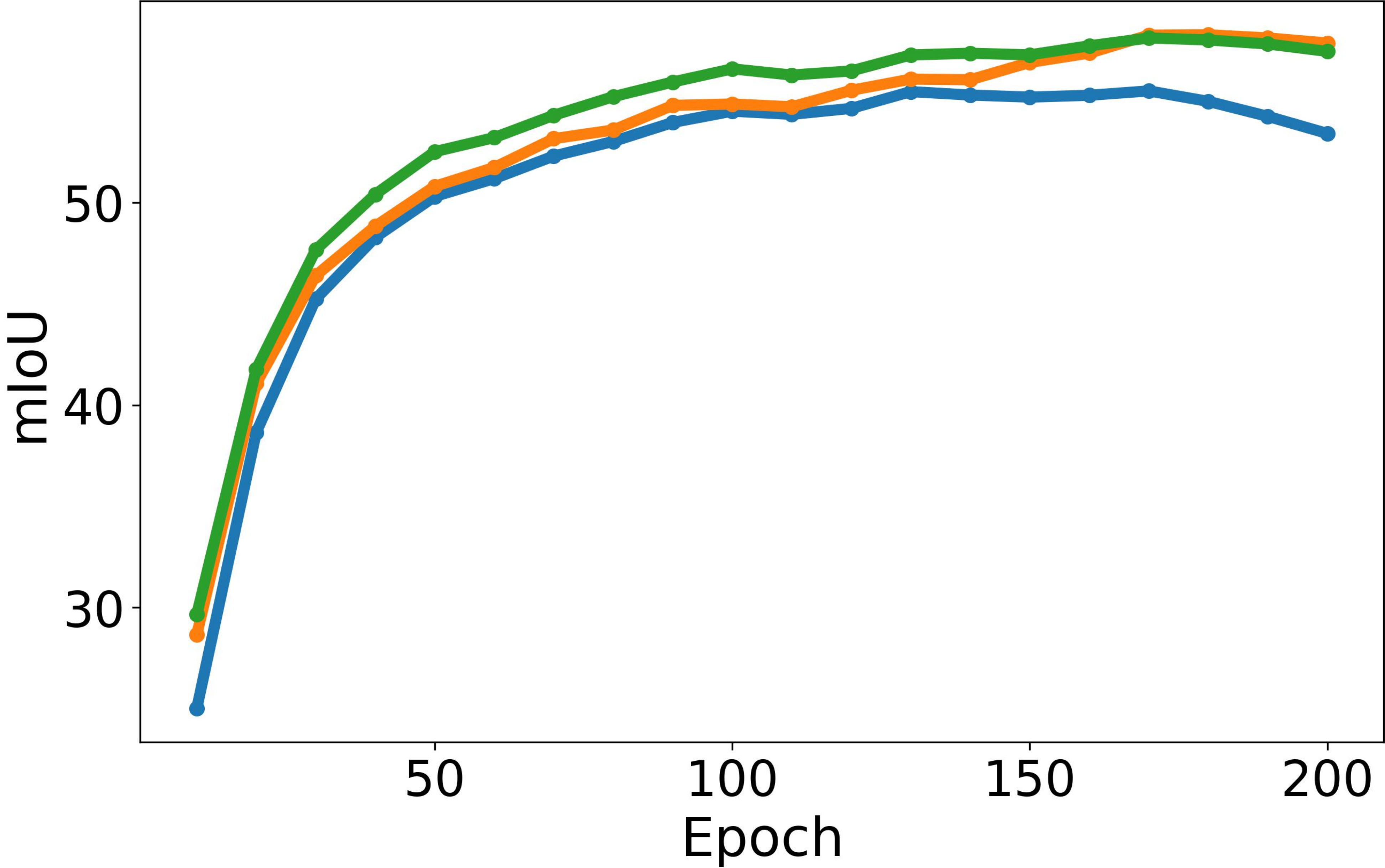}
        \caption{\densecl}
    \end{subfigure}
    \hfill
    \begin{subfigure}{0.24\textwidth}
        \centering
        \includegraphics[width=\linewidth]{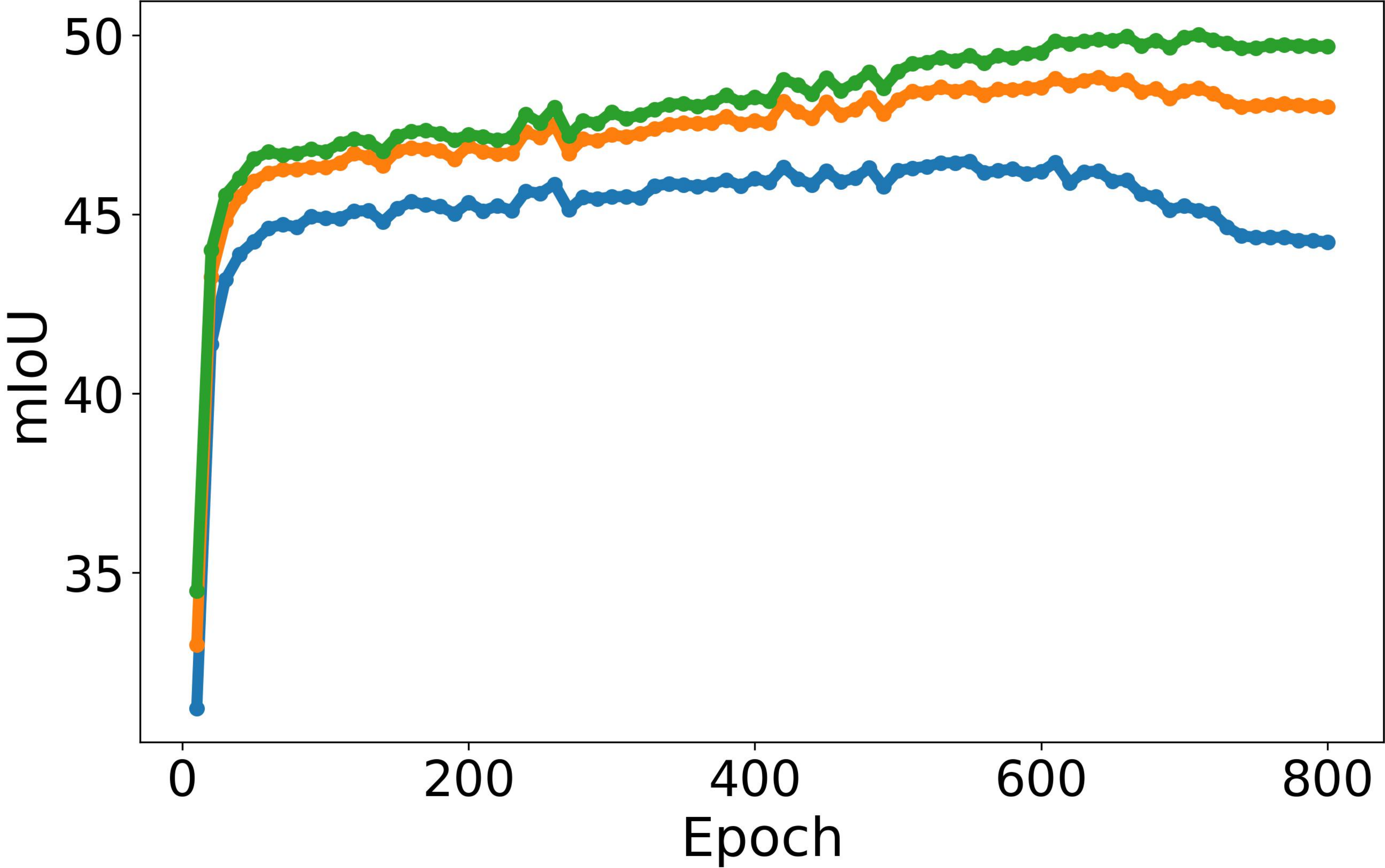}
        \caption{\mec}
    \end{subfigure}
    \hfill
    \begin{subfigure}{0.24\textwidth}
        \centering
        \includegraphics[width=\linewidth]{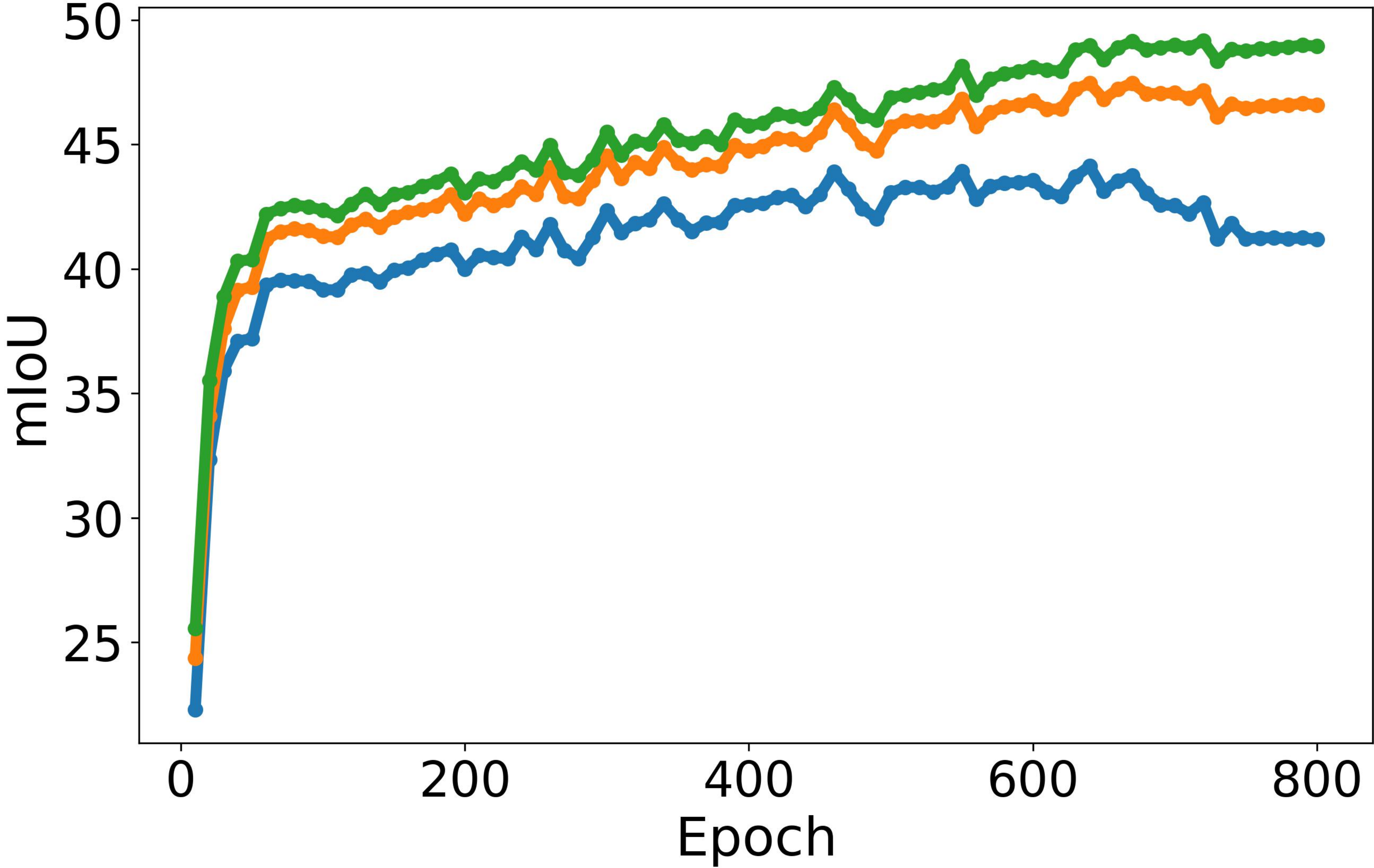}
        \caption{\simsiam}
    \end{subfigure}
    \vspace{0.15cm} 
    \begin{subfigure}{0.24\textwidth}
        \centering
        \includegraphics[width=\linewidth]{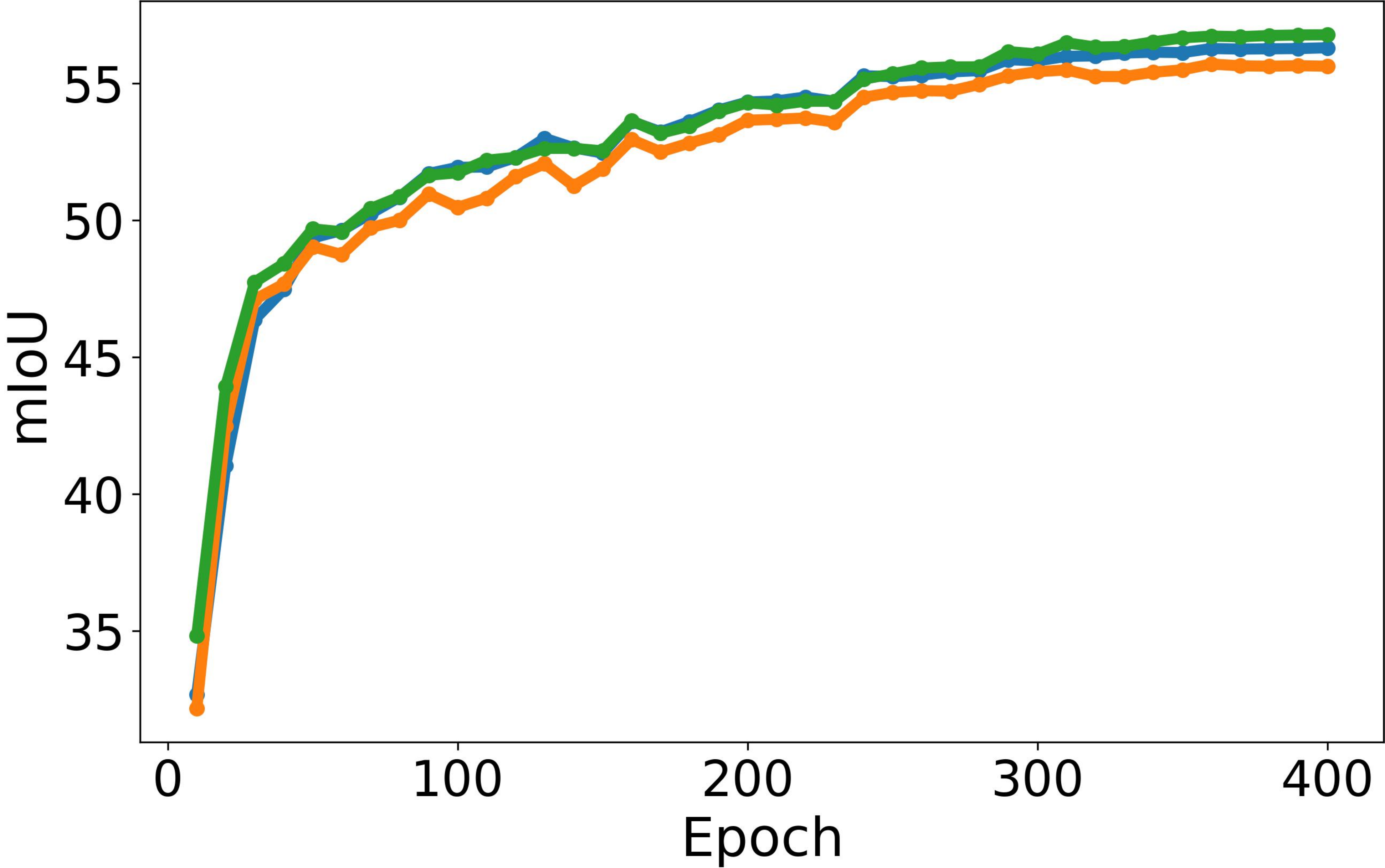}
        \caption{\swav}
    \end{subfigure}
    \hfill
    \begin{subfigure}{0.24\textwidth}
        \centering
        \includegraphics[width=\linewidth]{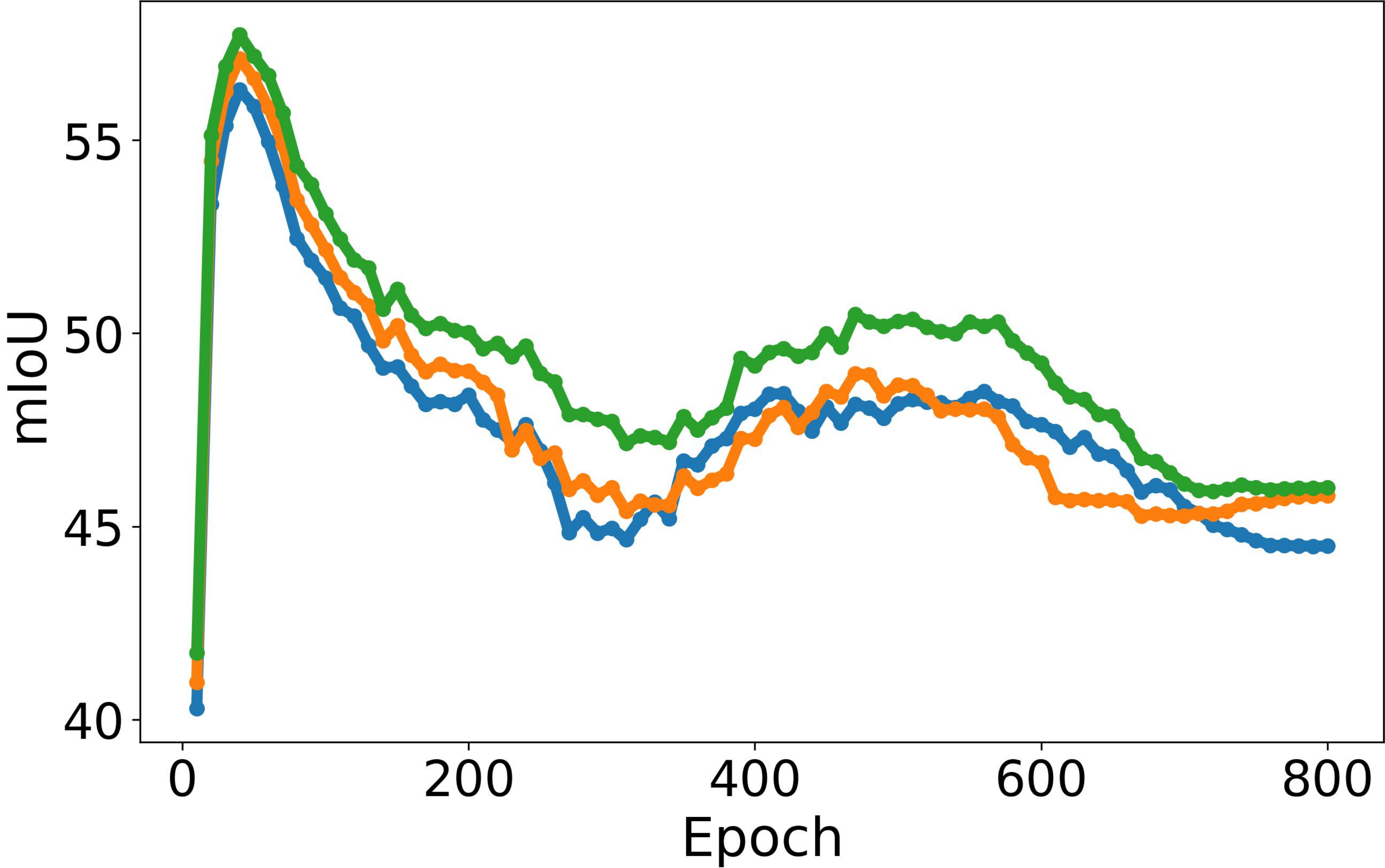}
        \caption{\dino}
    \end{subfigure}
    \hfill
    \begin{subfigure}{0.24\textwidth}
        \centering
        \includegraphics[width=\linewidth]{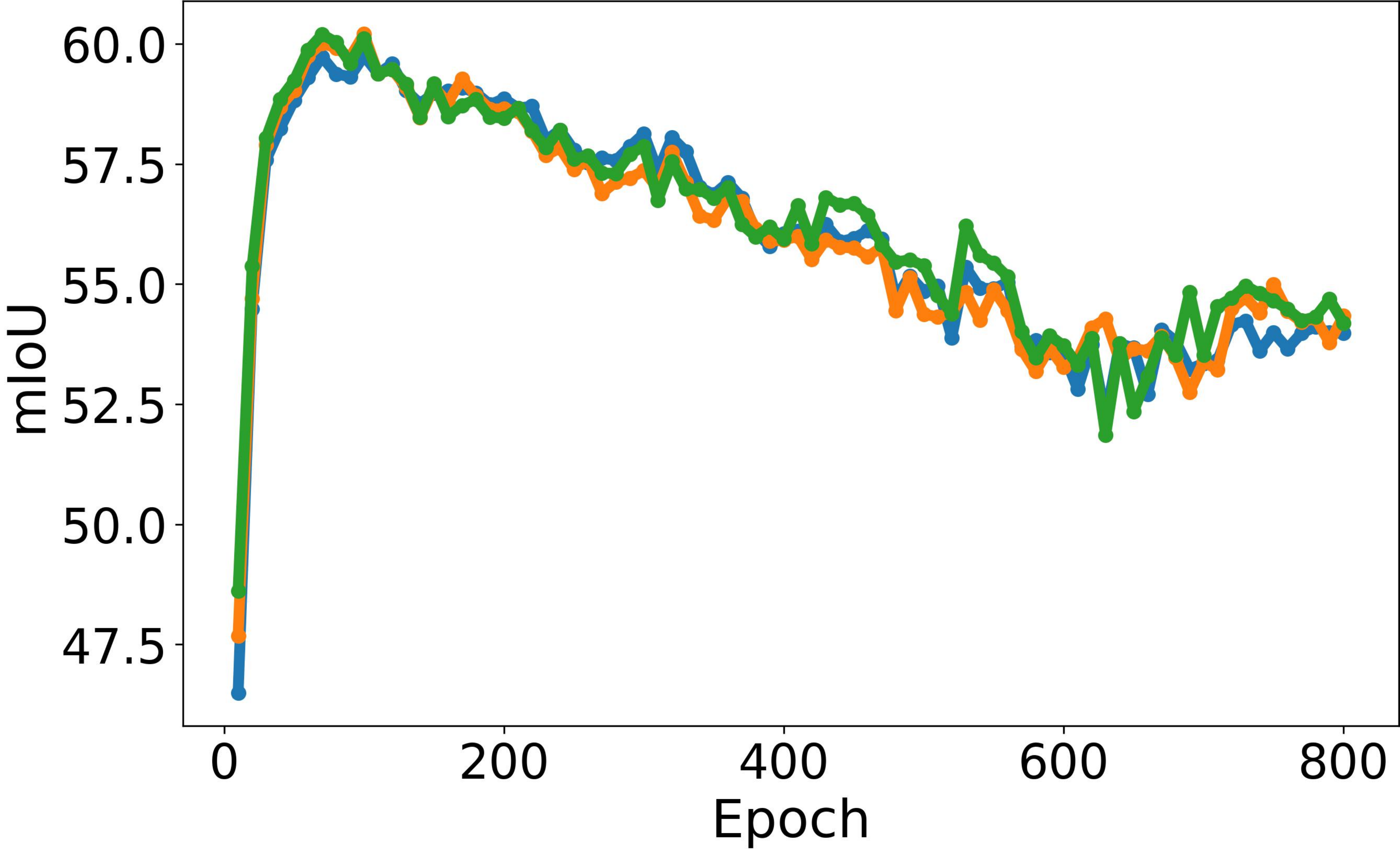}
        \caption{\esvit}
    \end{subfigure}
    \hfill
    \begin{subfigure}{0.24\textwidth}
        \centering
        \includegraphics[width=\linewidth]{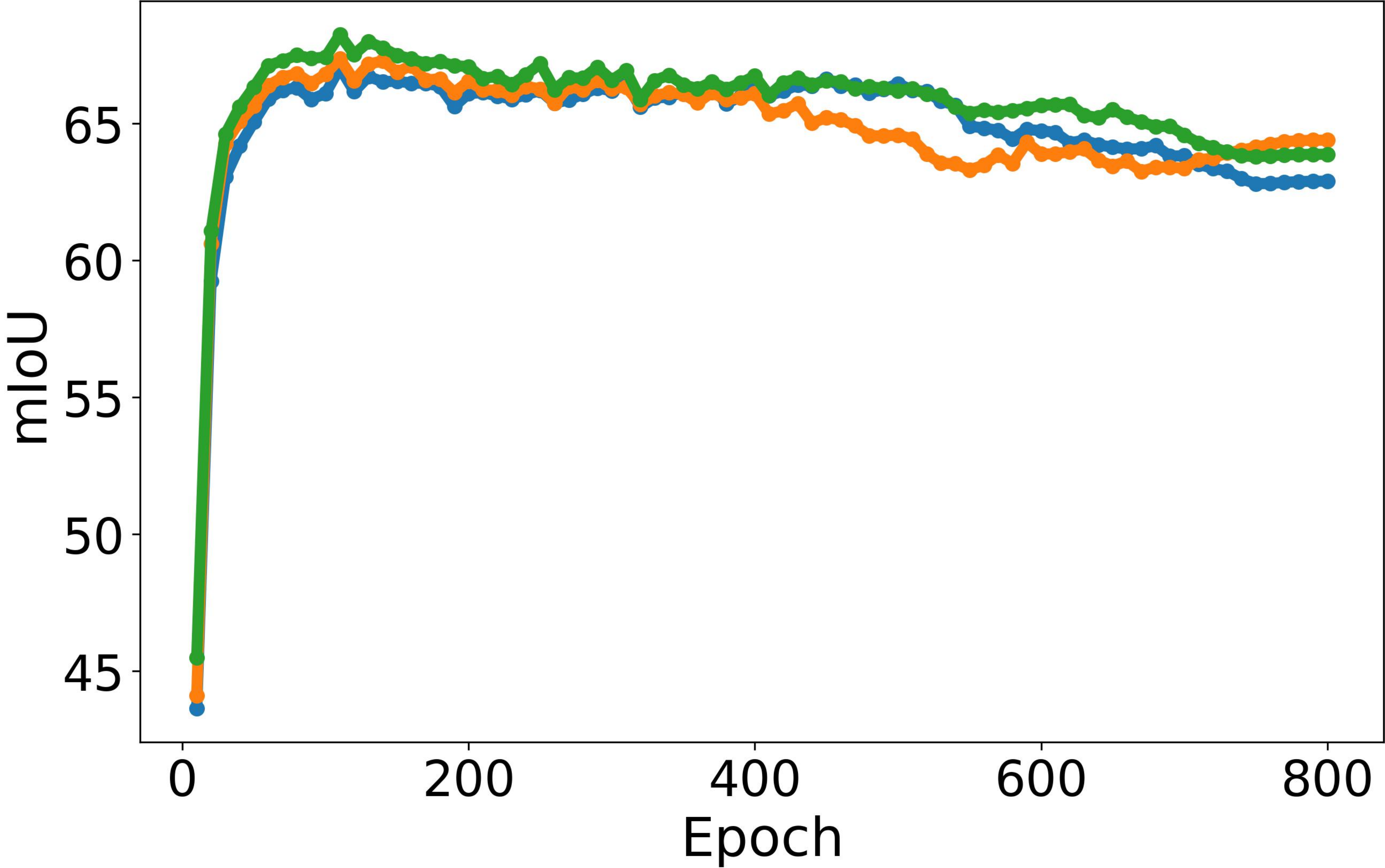}
        \caption{\ibot}
    \end{subfigure}
    \vspace{0.15cm} 
    \hfill 
    \begin{subfigure}{0.24\textwidth}
        \centering
        \includegraphics[width=\linewidth]{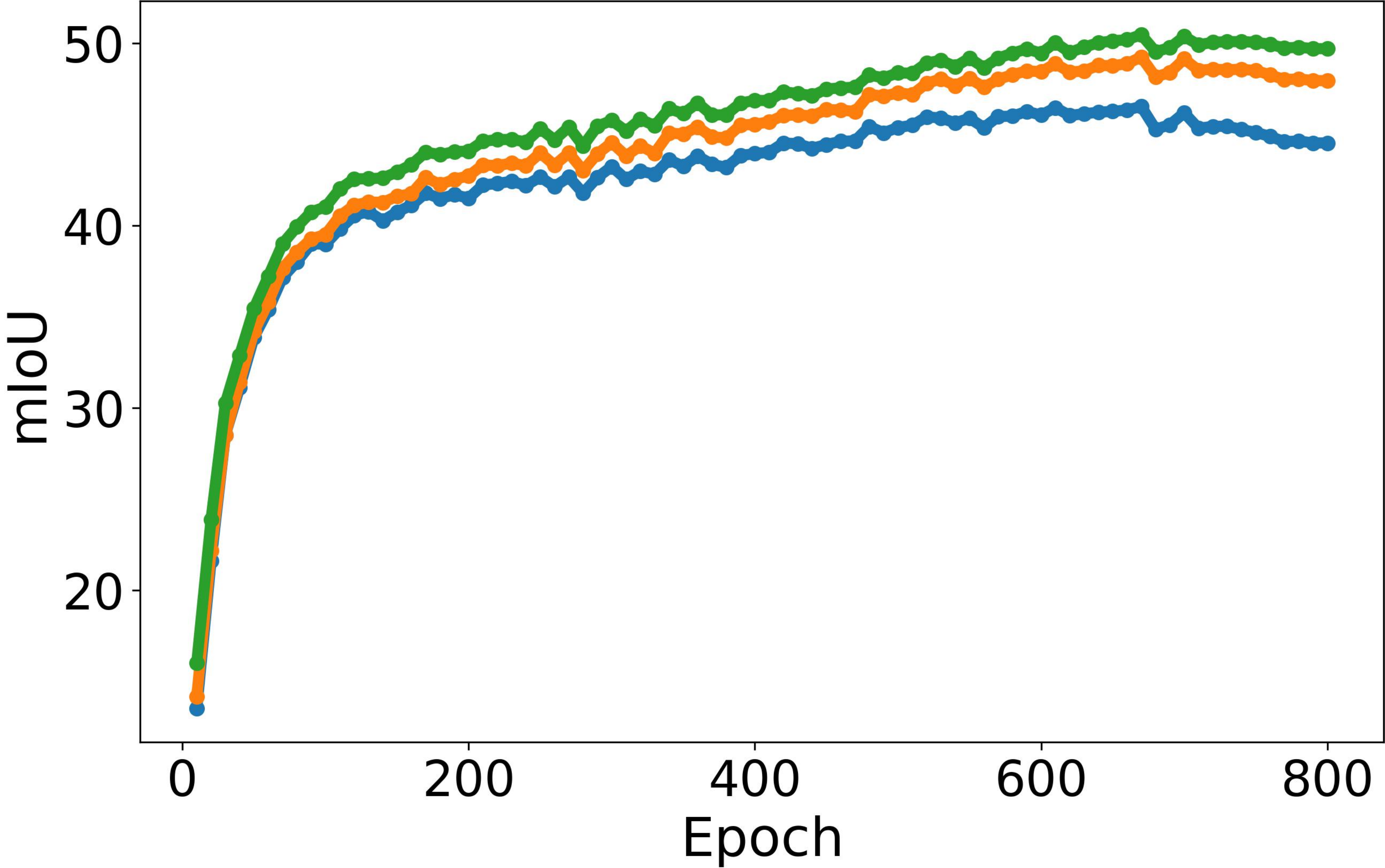}
        \caption{\mae}
    \end{subfigure}
    \hspace{0.0\textwidth} 
    \begin{subfigure}{0.24\textwidth}
        \centering
        \includegraphics[width=\linewidth]{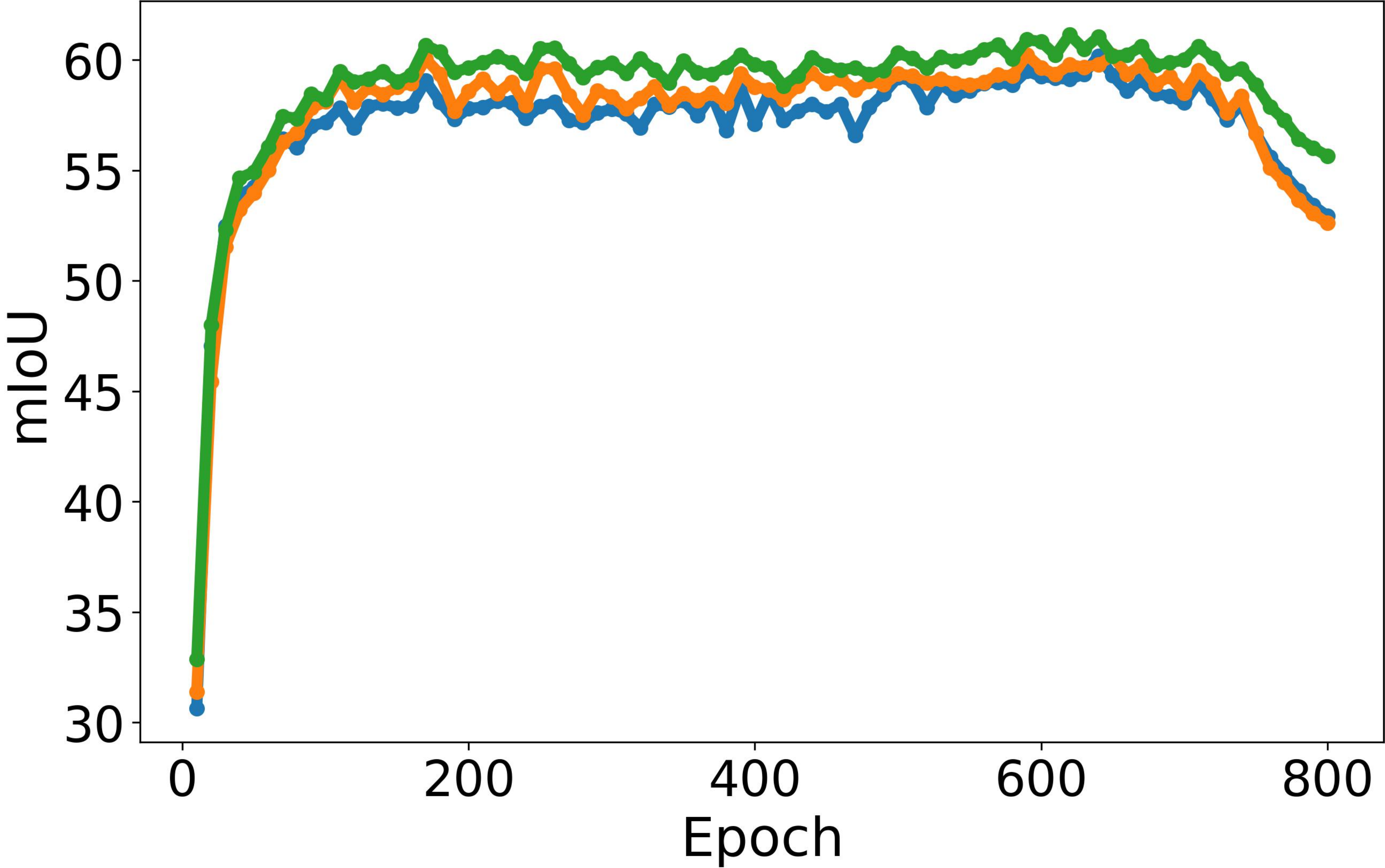}
        \caption{\ijepa}
    \end{subfigure}
    \hfill 
    \vspace{-2pt}
    \caption{The SDD phenomenon consistently exists across different fine-tuning iterations.}
    \label{fig:ablation_iters}
\end{figure}  
\subsection{The SDD Phenomenon is not Caused by Dataset Overfitting}  
\label{app:sdd_dataset}
To examine whether SDD is caused by dataset overfitting, we train DINO \cite{dino} and evaluate its performance on the same COCO-Stuff dataset. All settings are kept identical to those used for training on ImageNet, except for the training dataset.

As shown in Tab. \ref{tab:sdd_coco}, the model exhibits a similar performance degradation as when pretrained on ImageNet. This results in a large gap between the best and final checkpoints, suggesting that SDD is not due to dataset overfitting.

\begin{table}[h]
\caption{The SDD phenomenon when training and evaluating on the same COCO dataset.}
\centering
\resizebox{0.4\linewidth}{!}{ 
\begin{tabular}{lccc}
\toprule
Method & Best & Last & Diff \\
\midrule
DINO \cite{dino} & 36.3 & 32.3 & -4.0 \\
\bottomrule
\end{tabular}
}
\label{tab:sdd_coco}
\end{table}

\subsection{The SDD Phenomenon Exists in Varying Downstream Tasks}  
We evaluate semi-supervised video object segmentation on DAVIS-2017 \cite{davis} using the protocol from \cite{dino}. Due to the limited performance of non-ViT architectures, we focus on ViT models. Results in Tab.~\ref{table:vos} reveal persistent performance gaps between optimal and the last checkpoints, confirming that SDD generalizes to diverse dense downstream tasks (More results on depth estimation are provided in Appendix \ref{app:dse_depth_estimation}).  
\begin{table*}[!ht]
  \centering
  \caption{SDD phenomenon of Video object segmentation task on DAVIS-2017 dataset.}
  \label{table:vos} 
  \resizebox{\linewidth}{!}{
    \begin{tabular}{lll | ccc | ccc| ccc} 
      \toprule
      \multirow{2}{*}{Method Type} & \multirow{2}{*}{Method} & \multirow{2}{*}{Architecture} & \multicolumn{3}{c}{$\mathcal{J}\&\mathcal{F}$} & \multicolumn{3}{c}{$\mathcal{J}_{\mathrm{mean}}$} & \multicolumn{3}{c}{$\mathcal{F}_{\mathrm{mean}}$}  \\
      & & & Best & Last & Diff & Best & Last & Diff & Best & Last & Diff  \\ 
      \midrule
      \multirow{1}{*}{Contrastive} 
        & MoCo v3 \cite{mocov3} & ViT-Small-16 & 61.5&60.7&-0.8 & 59.6&59.0&-0.6 & 63.3&62.5&-0.8 \\ 
        
      \bottomrule
      
      \multirow{2}{*}{Non-Contrastive} 
        & DINO \cite{dino} & ViT-Small-16 & 61.9&61.8&-0.1 & 60.2&59.8&-0.4 & 63.8&63.7&-0.1 \\ 
        & iBOT \cite{ibot} & ViT-Small-16 & 62.4&61.6&-0.8 & 61.1&60.3&-0.8 & 64.3&62.9&-1.4 \\ 
      \midrule

      \multirow{2}{*}{Masked Modeling} 
        & MAE \cite{mae} & ViT-Small-16 & 49.2&41.2&-8.0 & 48.7&39.7&-9.0 & 49.7&42.8&-6.9 \\ 
        & I-JEPA \cite{ijepa} & ViT-Base-16 & 59.6&54.5&-5.1 & 58.7&53.6&-5.1 & 60.6&55.4&-5.2\\ 
      \bottomrule
    \end{tabular}
  }
\end{table*}

\clearpage
\section{Additional Experiment Results of the DSE Metric}
\label{app:metric}

\subsection{DSE Metric is a Precise Estimator for Depth Estimation Task}
\label{app:dse_depth_estimation}
We further examine whether SDD is unique to semantic segmentation or affects other dense tasks. First, We conduct an additional experiments on depth estimation. Similar to our linear probing setting, we freeze the model and train a linear layer to predict the depth value on NYU-depth v2 dataset \cite{nyu_depth_v2}. As shown in Tab. \ref{tab:depth_sdd}, the SDD phenomenon persists in depth estimation task. The results validates that SDD is a general phenomenon that not specific to segmentation task.

Next, to see if the proposed DSE metric works well on depth estimation task, we also compute the Kendall's $\tau$ coefficient between the DSE metric and depth estimation performance. As shown in Tab. \ref{tab:depth_kendall_selection}, metric positively correlates with the depth estimation performance, and DSE-based model selection consistently improves the model performance. While these results demonstrate the effectiveness of DSE across different downstream tasks, we note that the Kendall's $\tau$ coefficient is relatively lower compared with the segmentation task. The reasons are two-fold: 1) Our DSE metric is derived from the class-relevant performance, the class-seperability may not be fully useful for depth estimation task. 2) As a regression task, the RMSE curve quakes, making it hard to predict. We would like to further imporve the DSE metric to strengthen its capability on depth estimation task.
\begin{table*}[htbp]
  \centering
  \caption{The SDD phenomenon in depth estimation task. We report the RMSE metric (lower is better) on NYU-depth v2 dataset.}
  \label{tab:depth_sdd}
  \begin{tabular}{lccc}
    \toprule
    Method & Best $\downarrow$ & Last $\downarrow$ & Difference \\
    \midrule
    \moco & 0.638 & 1.589 & 0.951 \\
    \densecl & 0.547 & 0.559 & 0.012 \\
    \simsiam & 0.597 & 0.608 & 0.011 \\
    \swav & 0.705 & 0.725 & 0.020 \\
    \dino & 0.515 & 0.553 & 0.038 \\
    \mae & 0.680 & 0.775 & 0.095 \\
    \ijepa & 0.460 & 0.487 & 0.027 \\
    \bottomrule
  \end{tabular}
\end{table*}

\begin{table*}[htbp]
  \centering
  \caption{\textbf{Left: Kendall's $\tau$ coefficient between DSE and RMSE on depth estimation.} \textbf{Right: DSE-based model selection results.} We report RMSE metric with the improvement compared to the last epoch.}
  \label{tab:depth_kendall_selection}
  \resizebox{0.6\linewidth}{!}{
    \begin{tabular}{lc}
      \toprule
      Method & Kendall's $\tau$ \\
      \midrule
      \moco & 0.452 \\
      \densecl & 0.752 \\
      \simsiam & 0.659 \\
      \swav & 0.247 \\
      \dino & 0.071 \\
      \mae & 0.107 \\
      \ijepa & 0.190 \\
      \bottomrule
    \end{tabular}
    \hspace{0.01\linewidth}
    \begin{tabular}{lc}
      \toprule
      Method & RMSE $\downarrow$ \\
      \midrule
      \moco & 0.638 (-0.951) \\
      \densecl & 0.547 (-0.012) \\
      \simsiam & 0.598 (-0.010) \\
      \swav & 0.714 (-0.009) \\
      \dino & 0.532 (-0.021) \\
      \mae & 0.709 (-0.066) \\
      \ijepa & 0.479 (-0.008) \\
      \bottomrule
    \end{tabular}
  }
\end{table*}

\subsection{DSE Precisely Indicates Performance Trends Across Unseen Datasets}  
To comprehensively analyze the relationship between DSE and downstream performance, we present full results across all datasets and methods. These results confirm that DSE consistently identifies performance trends and peaks, establishing it as a reliable metric for model selection.  

\begin{figure}[H]
    \centering
    \begin{tikzpicture}
        \begin{axis}[
            scale only axis,
            legend style={
                at={(0.5,1.05)}, 
                anchor=south,
                legend columns=2, 
                /tikz/every even column/.append style={column sep=1cm},
                font=\smaller, 
                draw=lightgray,
                fill=white, 
                /pgf/number format/1000 sep={}
            },
            legend cell align={left},
            xlabel={}, ylabel={}, 
            xmin=0, xmax=1, ymin=0, ymax=1,
            axis lines=none, 
        ]
            \addlegendimage{color=matplotlibblue, mark=none, line width=1pt}
            \addlegendentry{mIoU on PASCAL VOC}
            \addlegendimage{color=matplotliborange, mark=none, line width=1pt}
            \addlegendentry{The proposed DSE metric}
        \end{axis}
    \end{tikzpicture}
    
    \begin{subfigure}{0.19\textwidth}
        \centering
        \includegraphics[width=\linewidth]{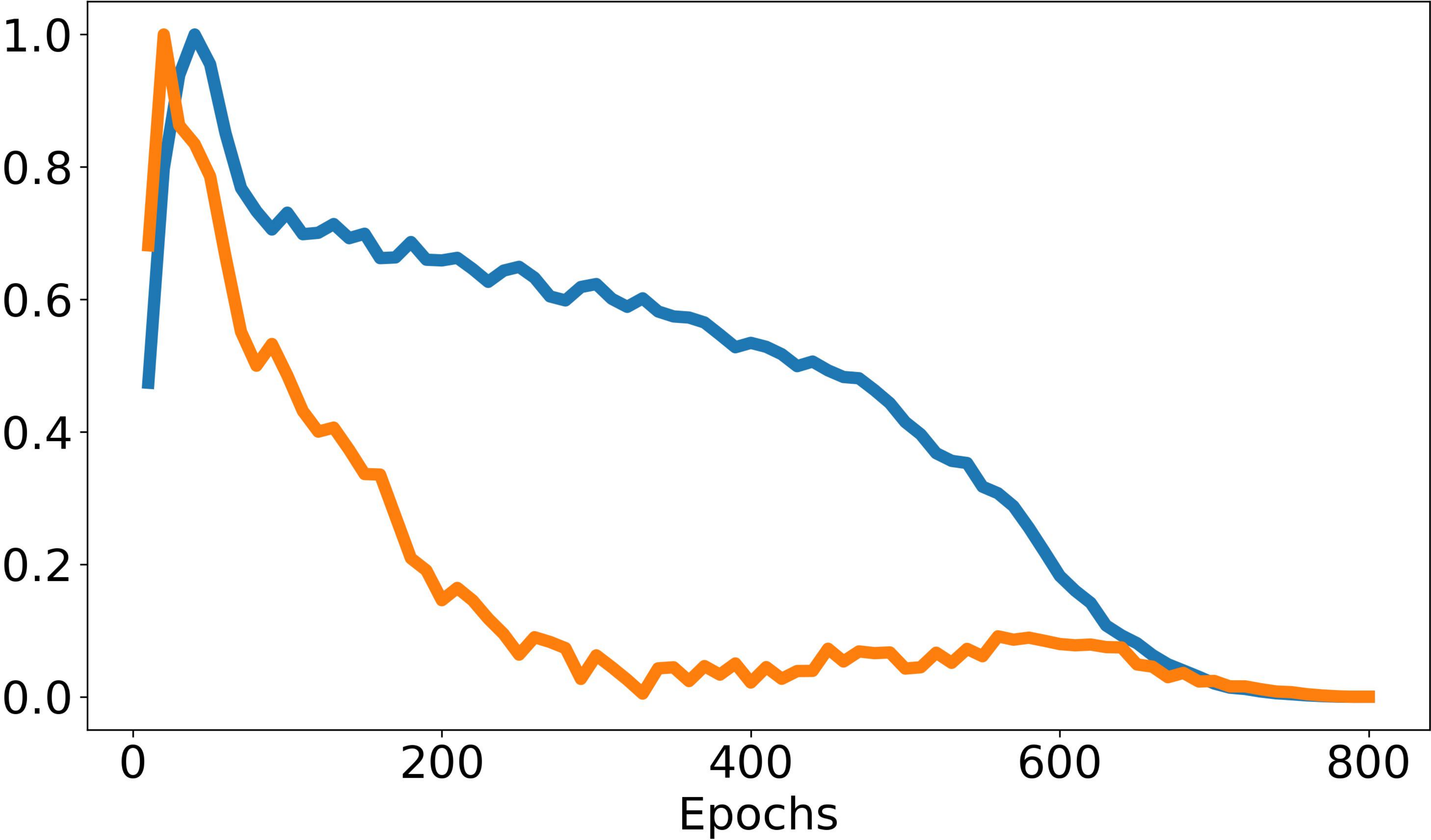}
        \caption{\moco}
    \end{subfigure}
    \hfill
    \begin{subfigure}{0.19\textwidth}
        \centering
        \includegraphics[width=\linewidth]{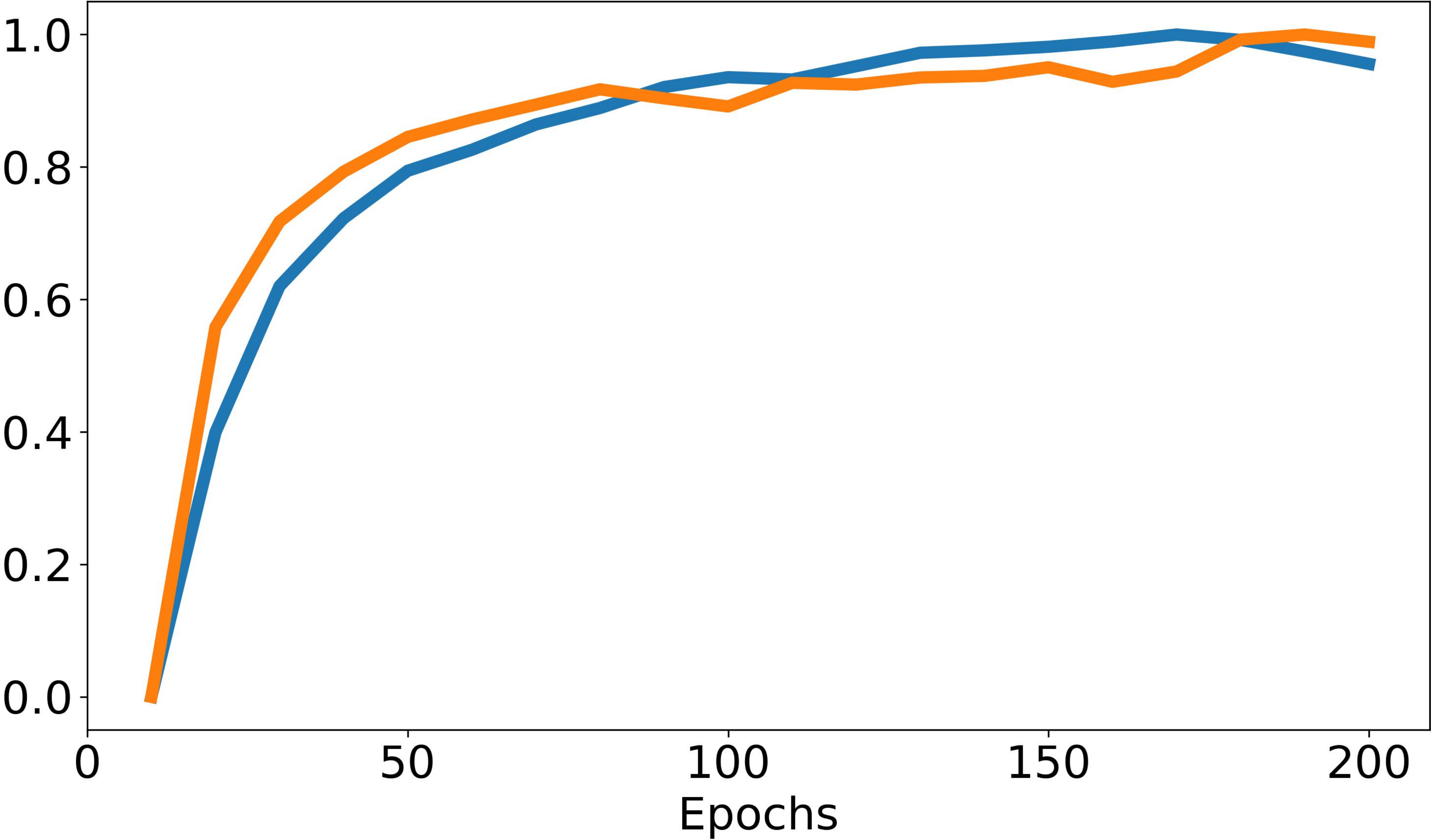}
        \caption{\densecl}
    \end{subfigure}
    \hfill
    \begin{subfigure}{0.19\textwidth}
        \centering
        \includegraphics[width=\linewidth]{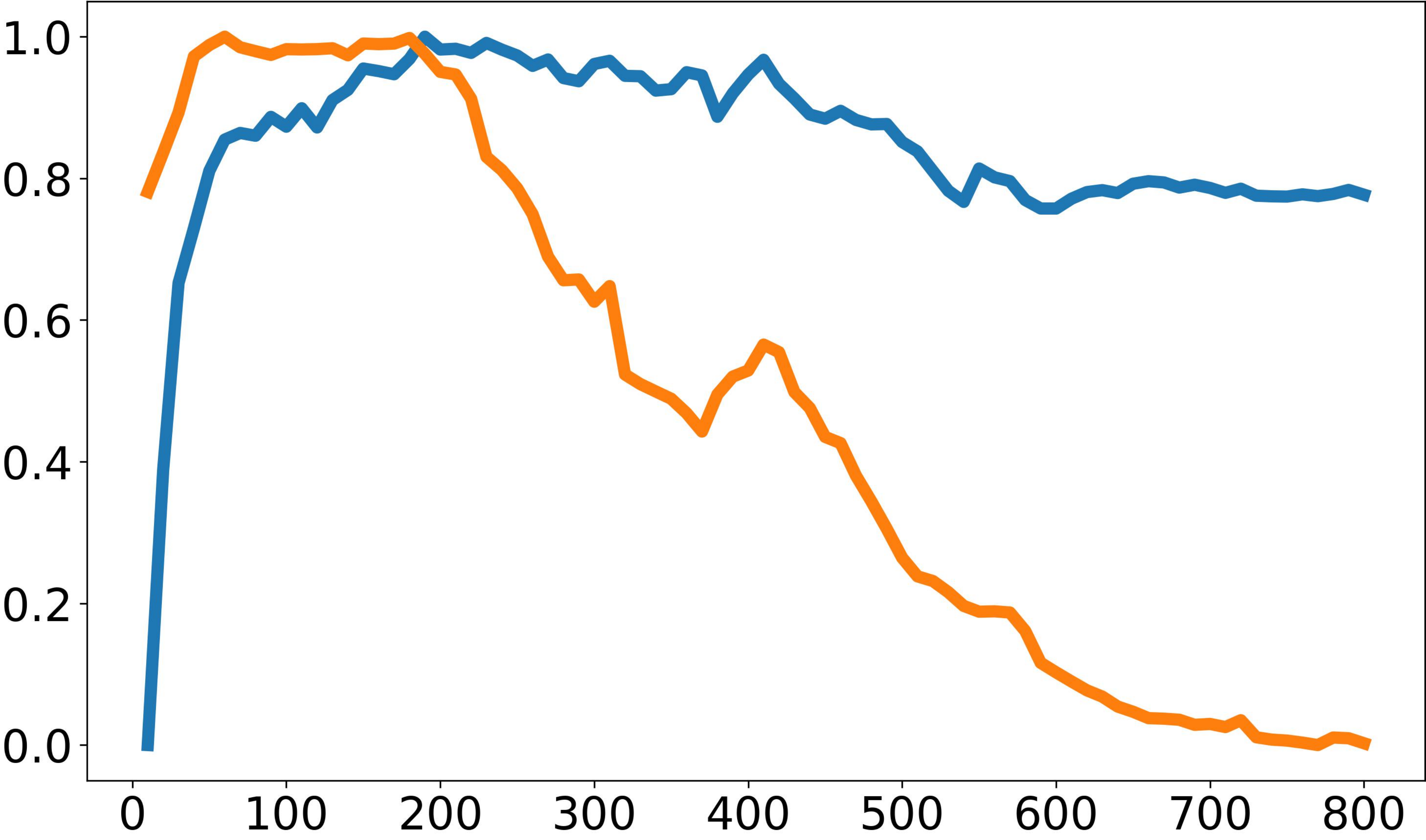}
        \caption{\byol}
    \end{subfigure}
    \hfill
    \begin{subfigure}{0.19\textwidth}
        \centering
        \includegraphics[width=\linewidth]{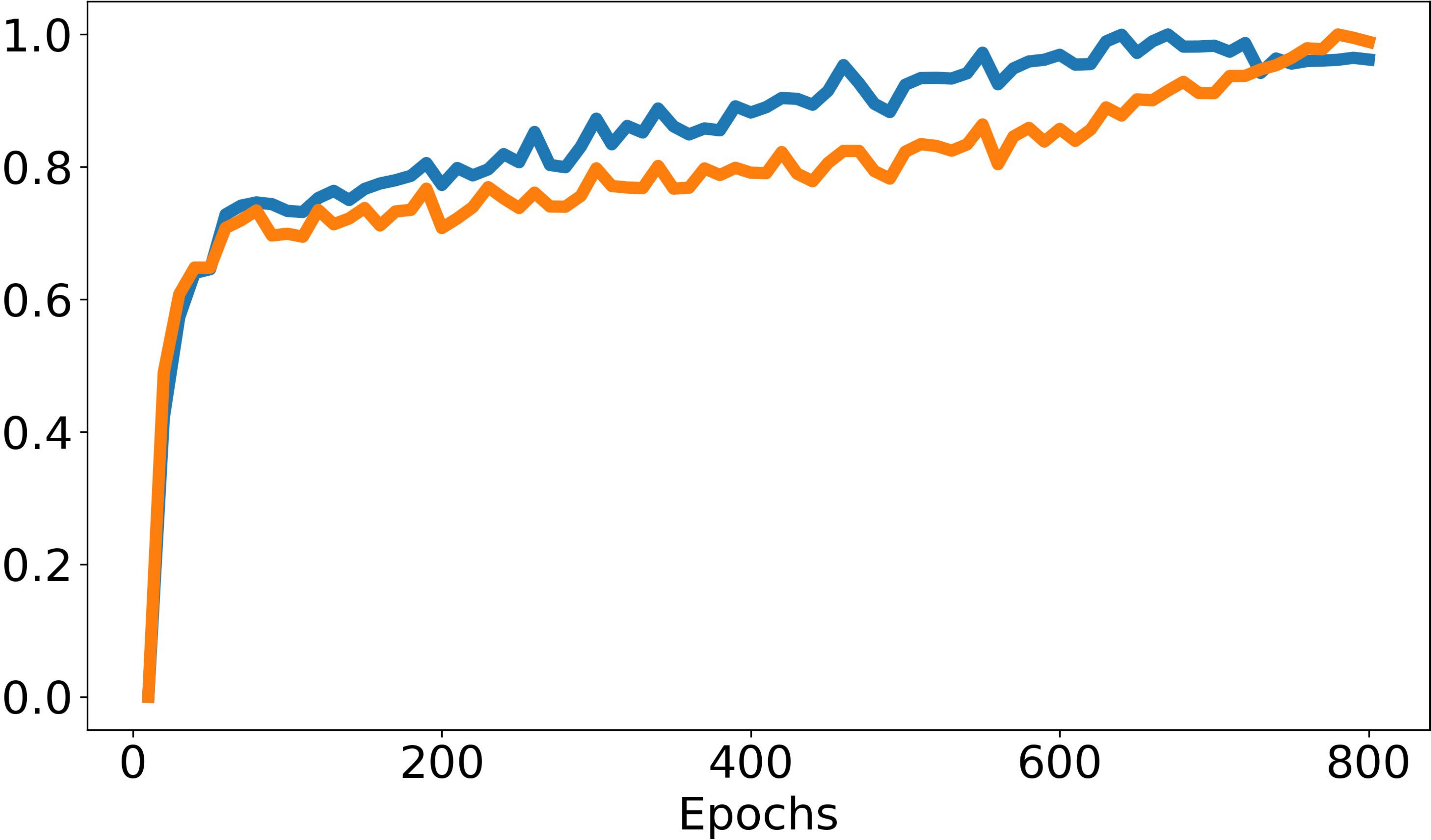}
        \caption{\simsiam}
    \end{subfigure}
    \hfill
    \begin{subfigure}{0.19\textwidth}
        \centering
        \includegraphics[width=\linewidth]{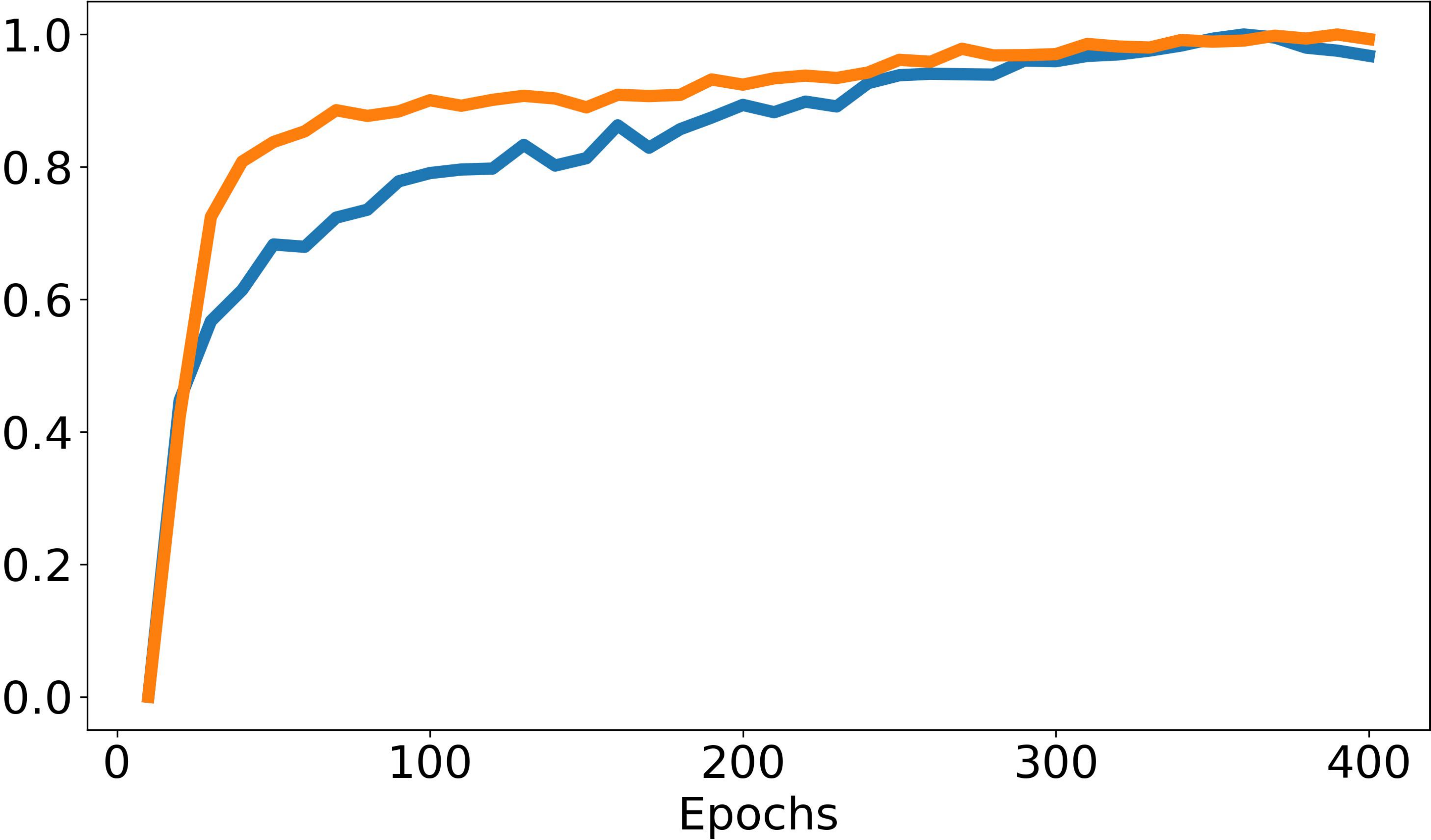}
        \caption{\swav}
    \end{subfigure}
    \vspace{0.15cm}
    \begin{subfigure}{0.19\textwidth}
        \centering
        \includegraphics[width=\linewidth]{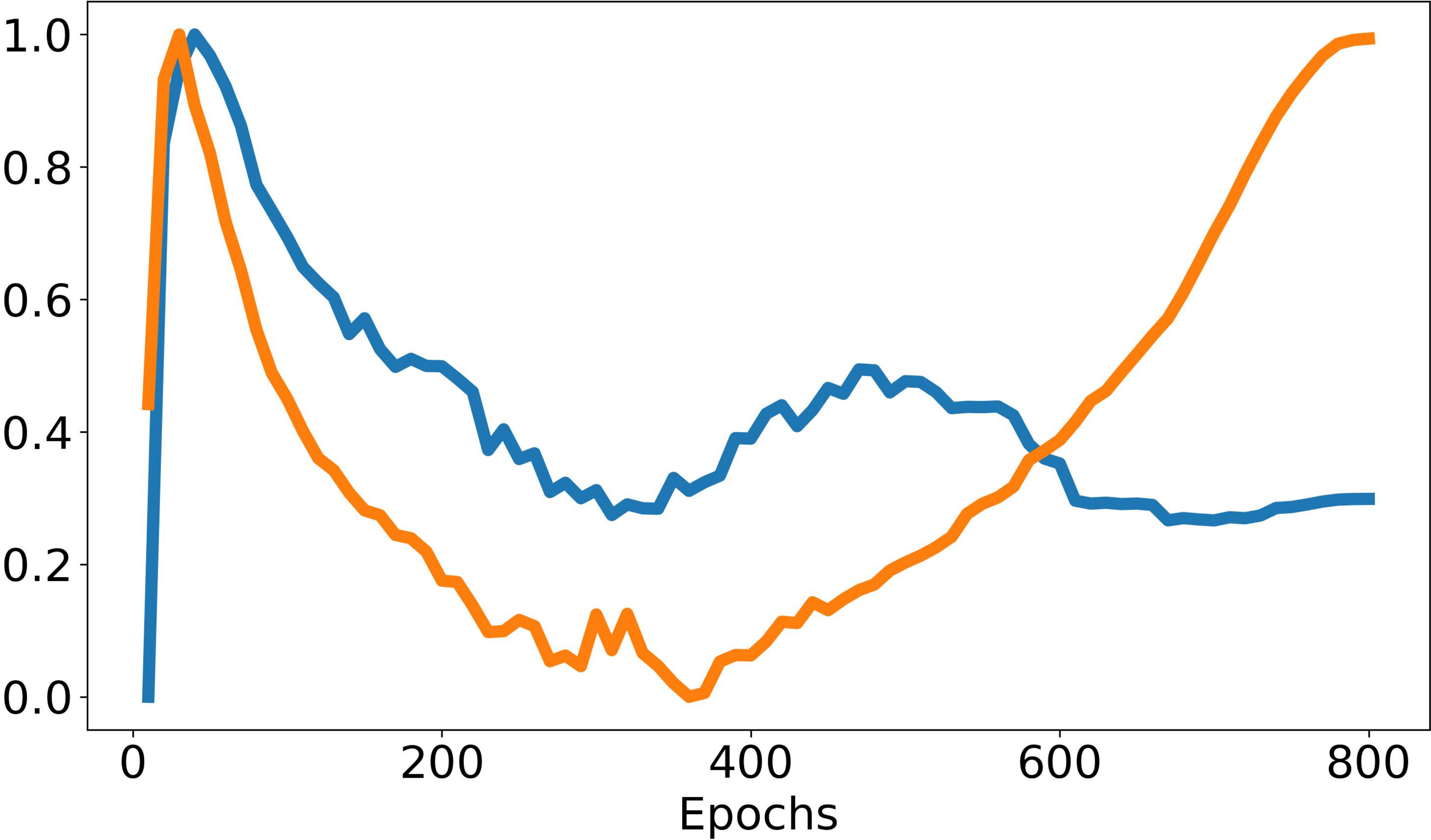}
        \caption{\dino}
    \end{subfigure}
    \hfill
    \begin{subfigure}{0.19\textwidth}
        \centering
        \includegraphics[width=\linewidth]{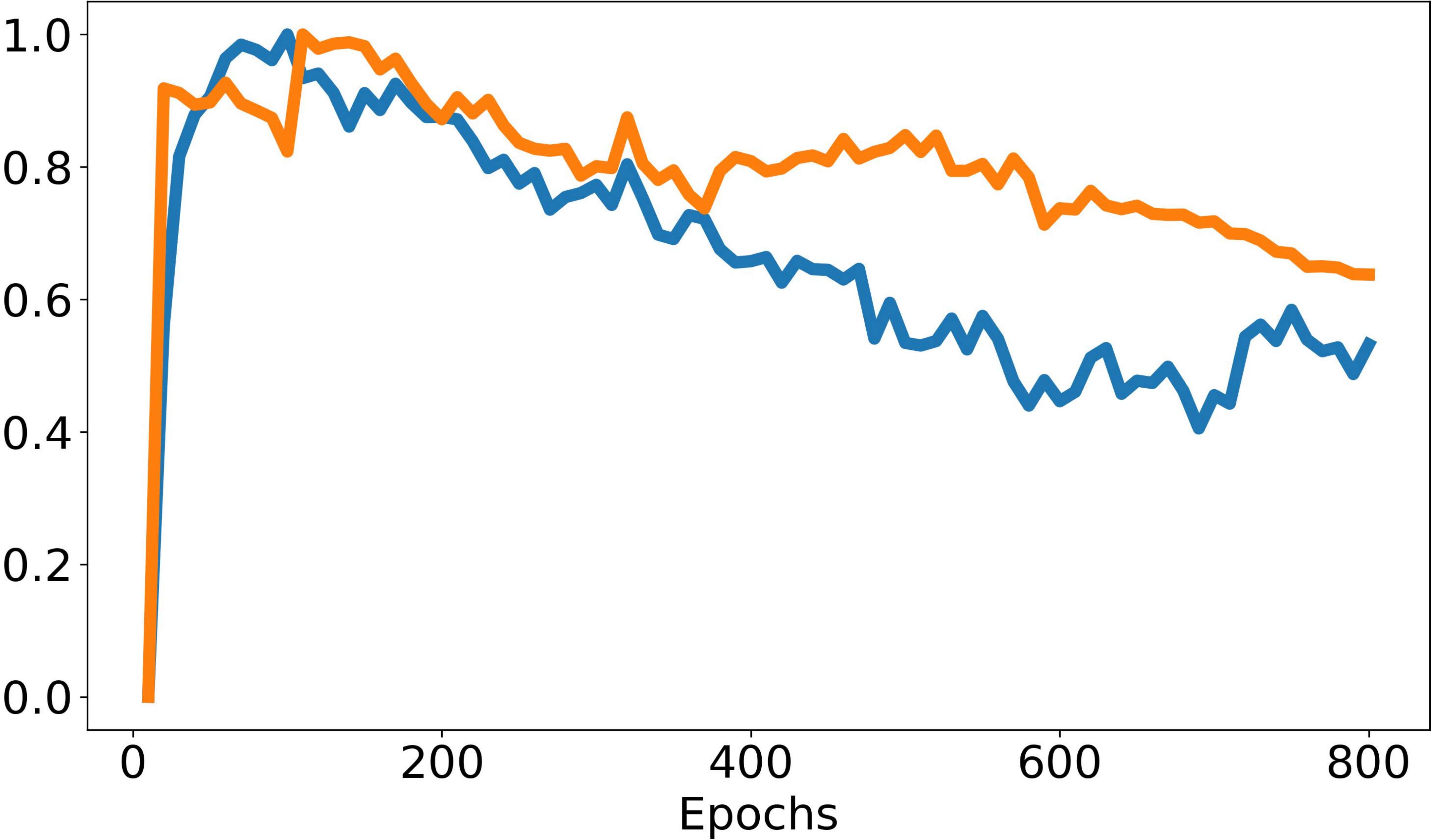}
        \caption{\esvit}
    \end{subfigure}
    \hfill
    \begin{subfigure}{0.19\textwidth}
        \centering
        \includegraphics[width=\linewidth]{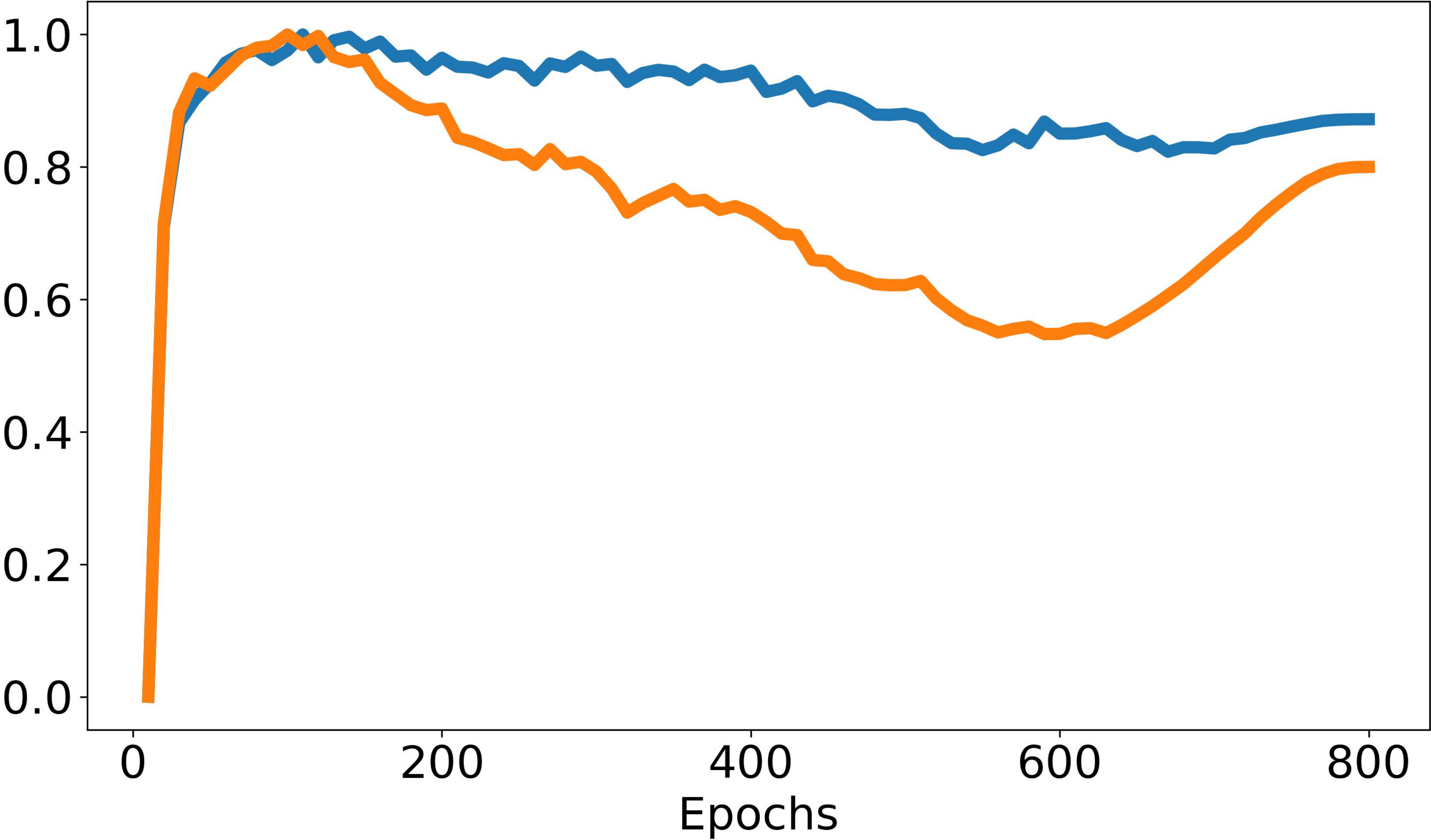}
        \caption{\ibot}
    \end{subfigure}
    \hfill
    \begin{subfigure}{0.19\textwidth}
        \centering
        \includegraphics[width=\linewidth]{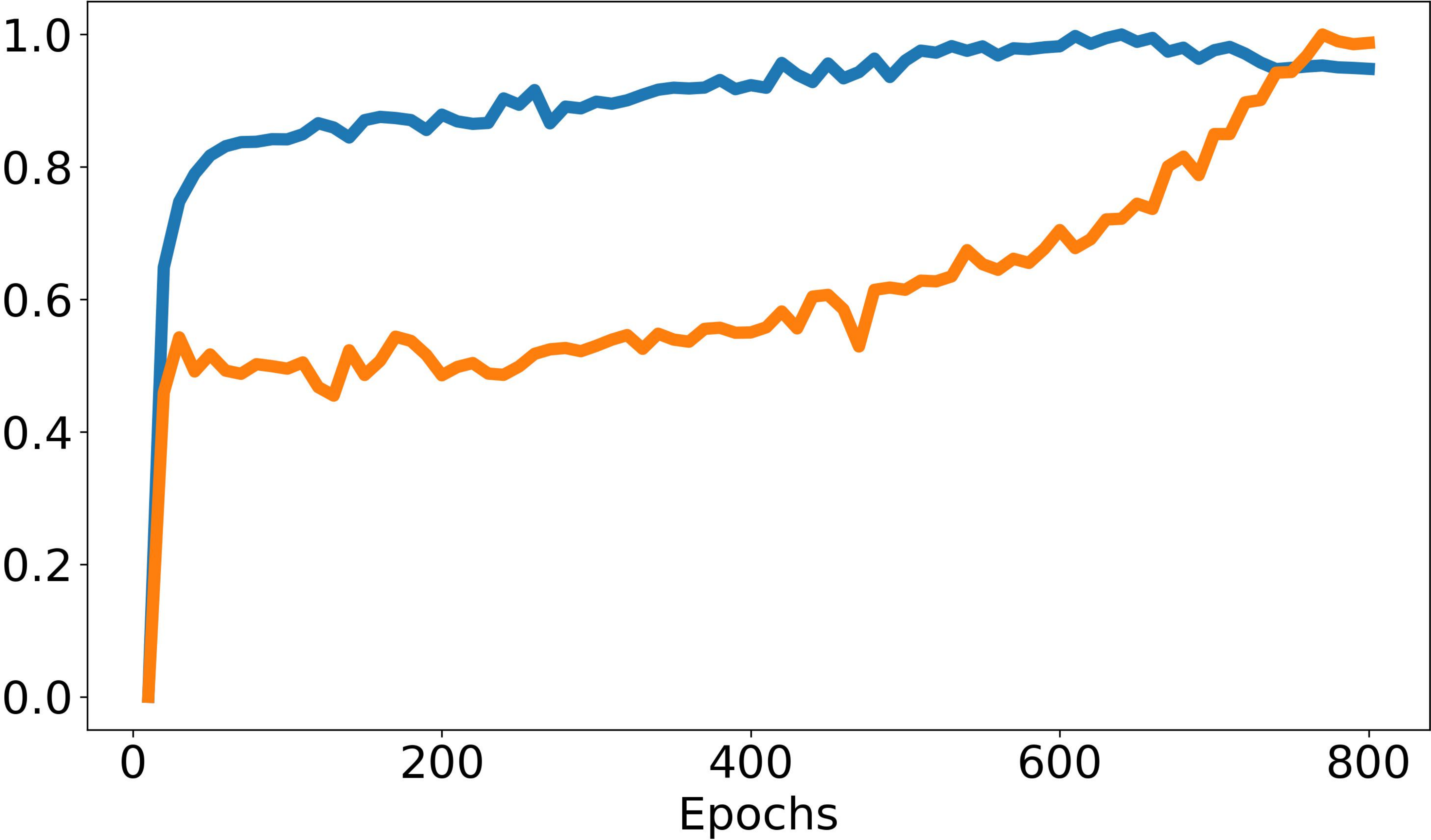}
        \caption{\mec}
    \end{subfigure}
    \hfill
    \begin{subfigure}{0.19\textwidth}
        \centering
        \includegraphics[width=\linewidth]{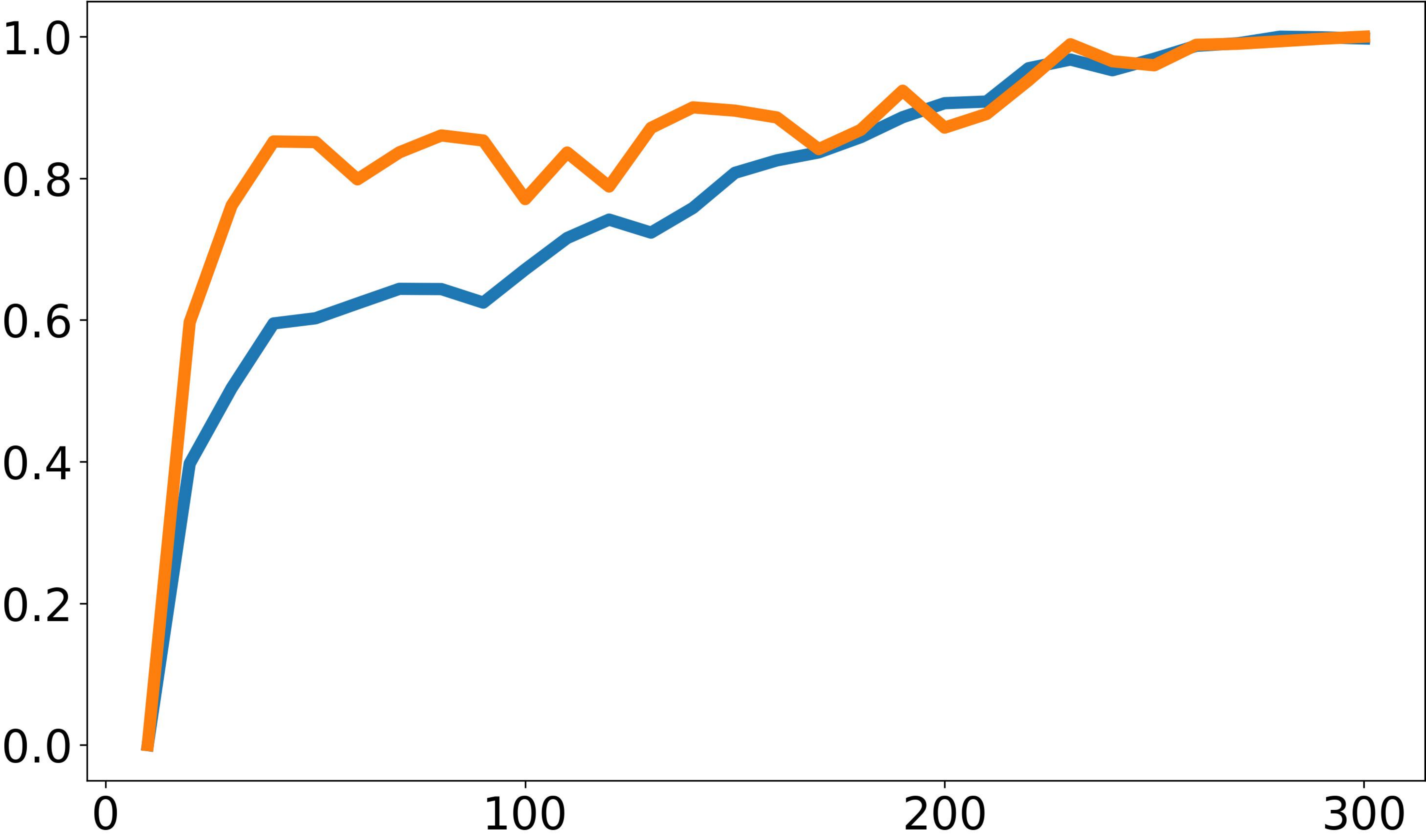}
        \caption{\vicregl}
    \end{subfigure}
    \vspace{0.15cm}
    \hfill
    \begin{subfigure}{0.19\textwidth}
        \centering
        \includegraphics[width=\linewidth]{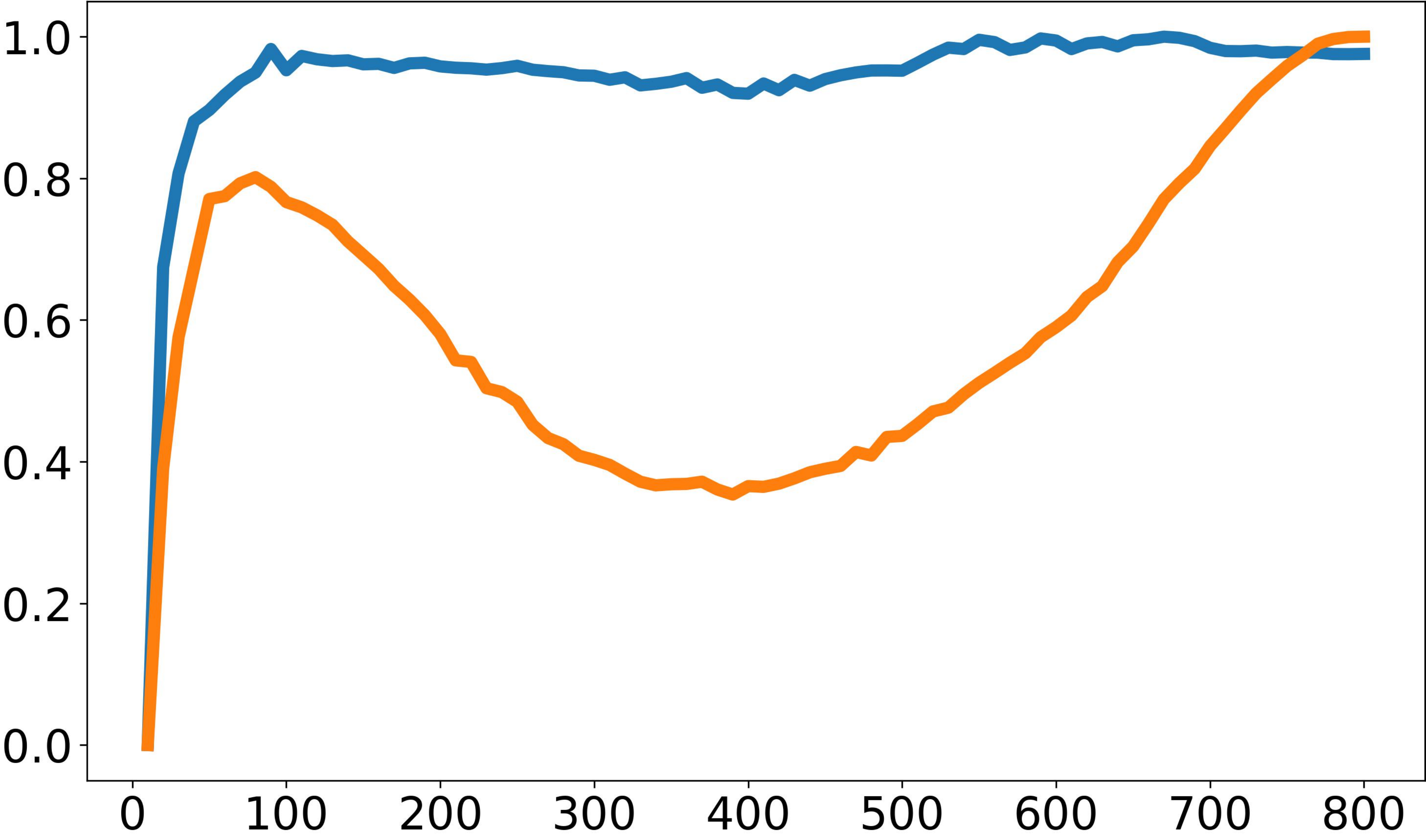}
        \caption{\mugs}
    \end{subfigure}
    \hspace{0.05\textwidth}
    \begin{subfigure}{0.19\textwidth}
        \centering
        \includegraphics[width=\linewidth]{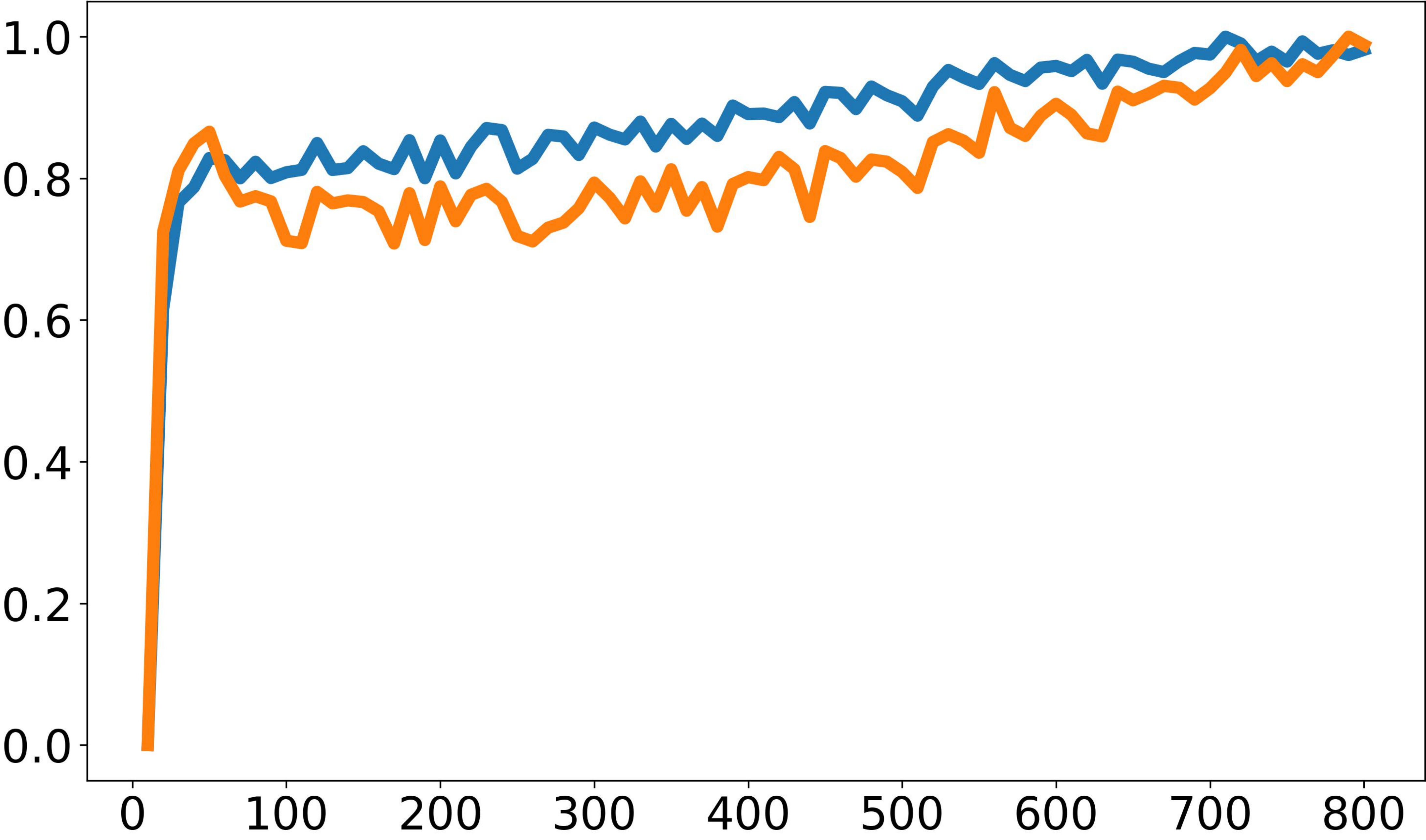}
        \caption{\resa}
    \end{subfigure}
    \hspace{0.05\textwidth}
    \begin{subfigure}{0.19\textwidth}
        \centering
        \includegraphics[width=\linewidth]{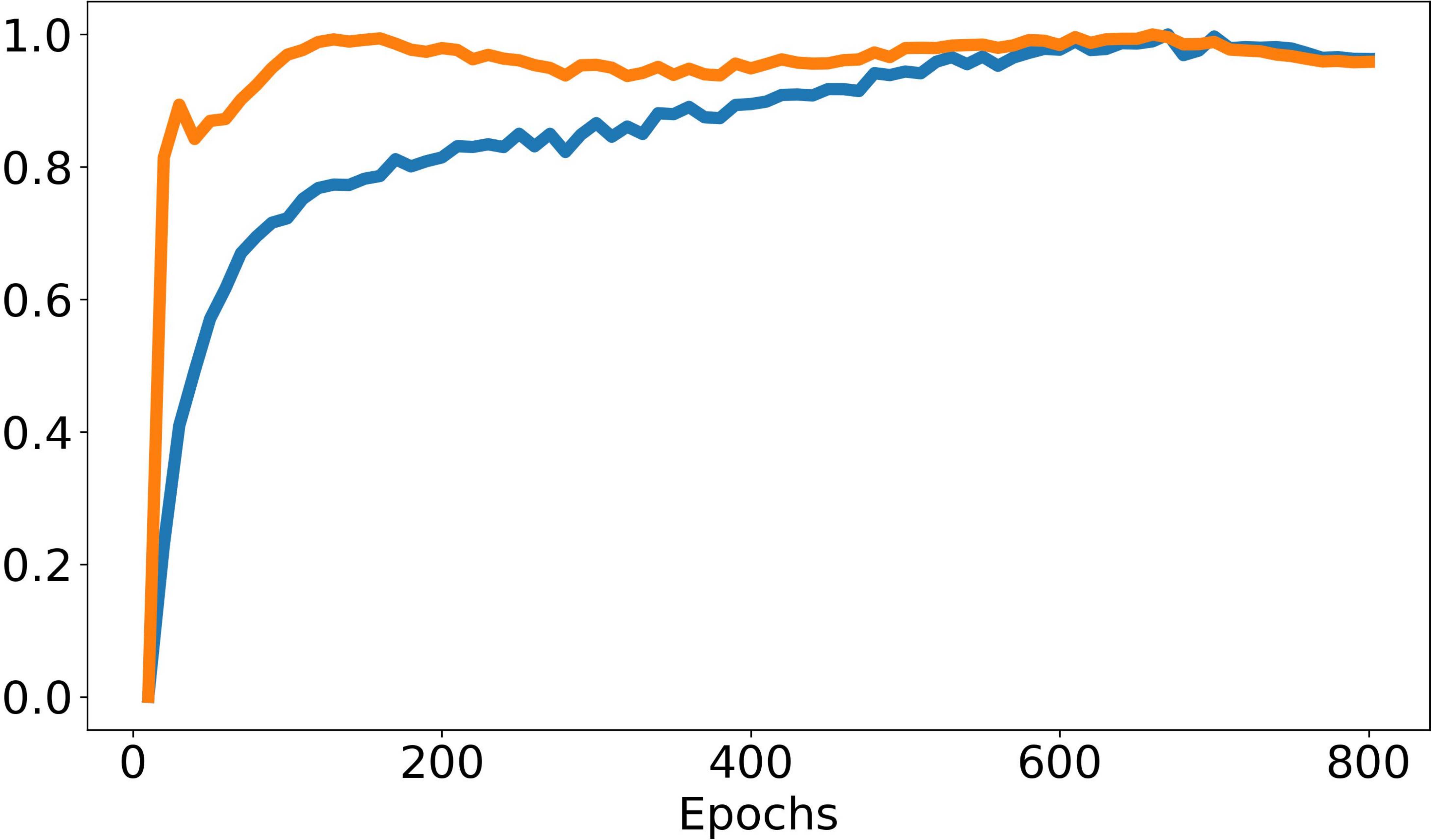}
        \caption{\mae}
    \end{subfigure}
    \hspace{0.05\textwidth}
    \begin{subfigure}{0.19\textwidth}
        \centering
        \includegraphics[width=\linewidth]{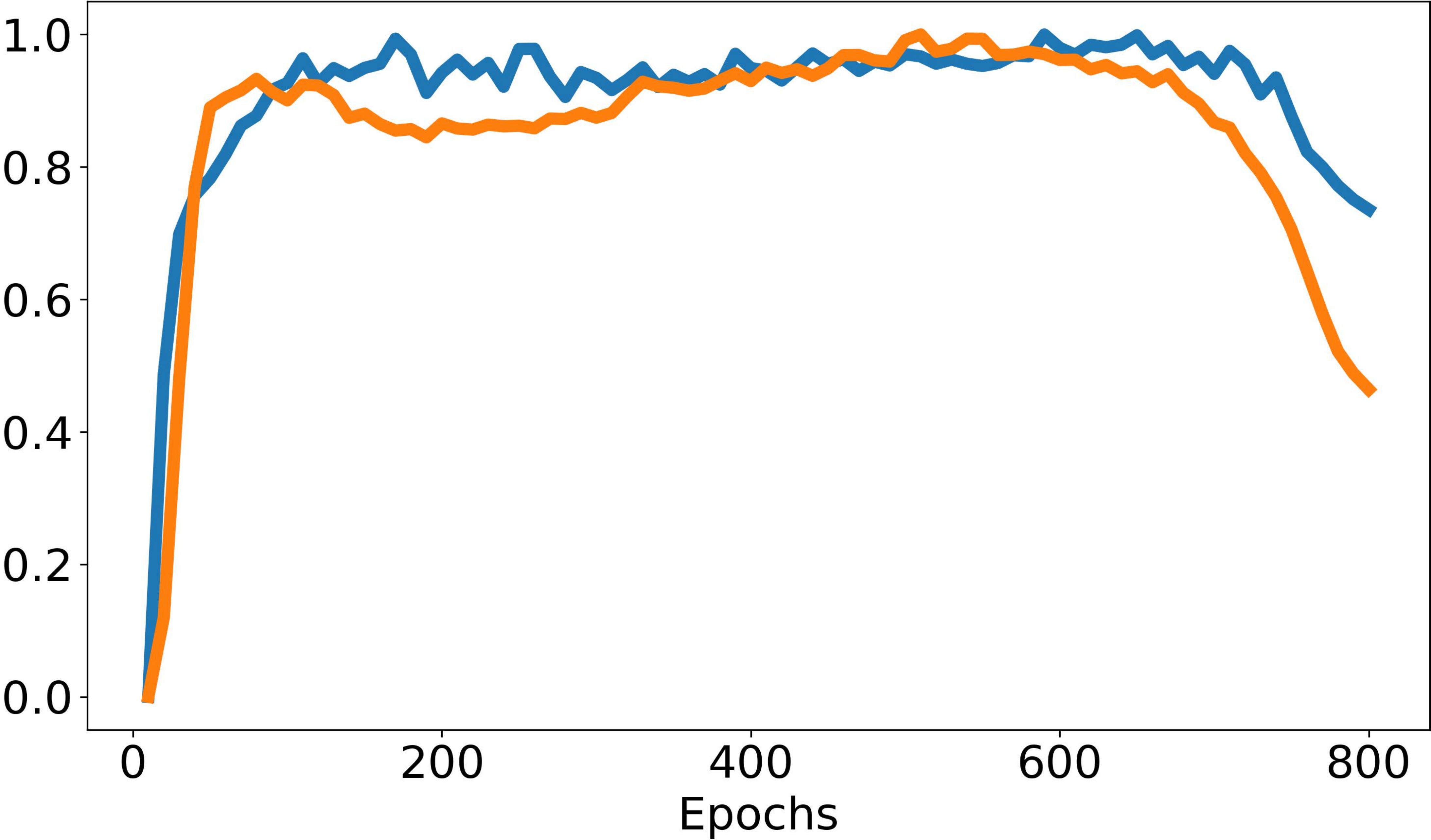}
        \caption{\ijepa}
    \end{subfigure}
    \hfill
    \vspace{-2pt}
    \caption{The proposed DSE metric precisely predicts the downstream performance on the PASCAL VOC dataset.}
    \label{fig:metric_voc_app}
\end{figure}
\begin{figure}[H]
    \centering
    \begin{tikzpicture}
        \begin{axis}[
            scale only axis,
            legend style={
                at={(0.5,1.05)}, 
                anchor=south,
                legend columns=2, 
                /tikz/every even column/.append style={column sep=1cm},
                font=\smaller, 
                draw=lightgray,
                fill=white, 
                /pgf/number format/1000 sep={}
            },
            legend cell align={left},
            xlabel={}, ylabel={}, 
            xmin=0, xmax=1, ymin=0, ymax=1,
            axis lines=none, 
        ]
            \addlegendimage{color=matplotlibblue, mark=none, line width=1pt}
            \addlegendentry{mIoU on COCO-Stuff}
            \addlegendimage{color=matplotliborange, mark=none, line width=1pt}
            \addlegendentry{The proposed DSE metric}
        \end{axis}
    \end{tikzpicture}
    
    \begin{subfigure}{0.19\textwidth}
        \centering
        \includegraphics[width=\linewidth]{images/metric/metric_simple_cocostuff27/moco-ori-800.pdf}
        \caption{\moco}
    \end{subfigure}
    \hfill
    \begin{subfigure}{0.19\textwidth}
        \centering
        \includegraphics[width=\linewidth]{images/metric/metric_simple_cocostuff27/densecl-imagenet-200.pdf}
        \caption{\densecl}
    \end{subfigure}
    \hfill
    \begin{subfigure}{0.19\textwidth}
        \centering
        \includegraphics[width=\linewidth]{images/metric/metric_simple_cocostuff27/byol-ori-800.pdf}
        \caption{\byol}
    \end{subfigure}
    \hfill
    \begin{subfigure}{0.19\textwidth}
        \centering
        \includegraphics[width=\linewidth]{images/metric/metric_simple_cocostuff27/simsiam-ori-800.pdf}
        \caption{\simsiam}
    \end{subfigure}
    \hfill
    \begin{subfigure}{0.19\textwidth}
        \centering
        \includegraphics[width=\linewidth]{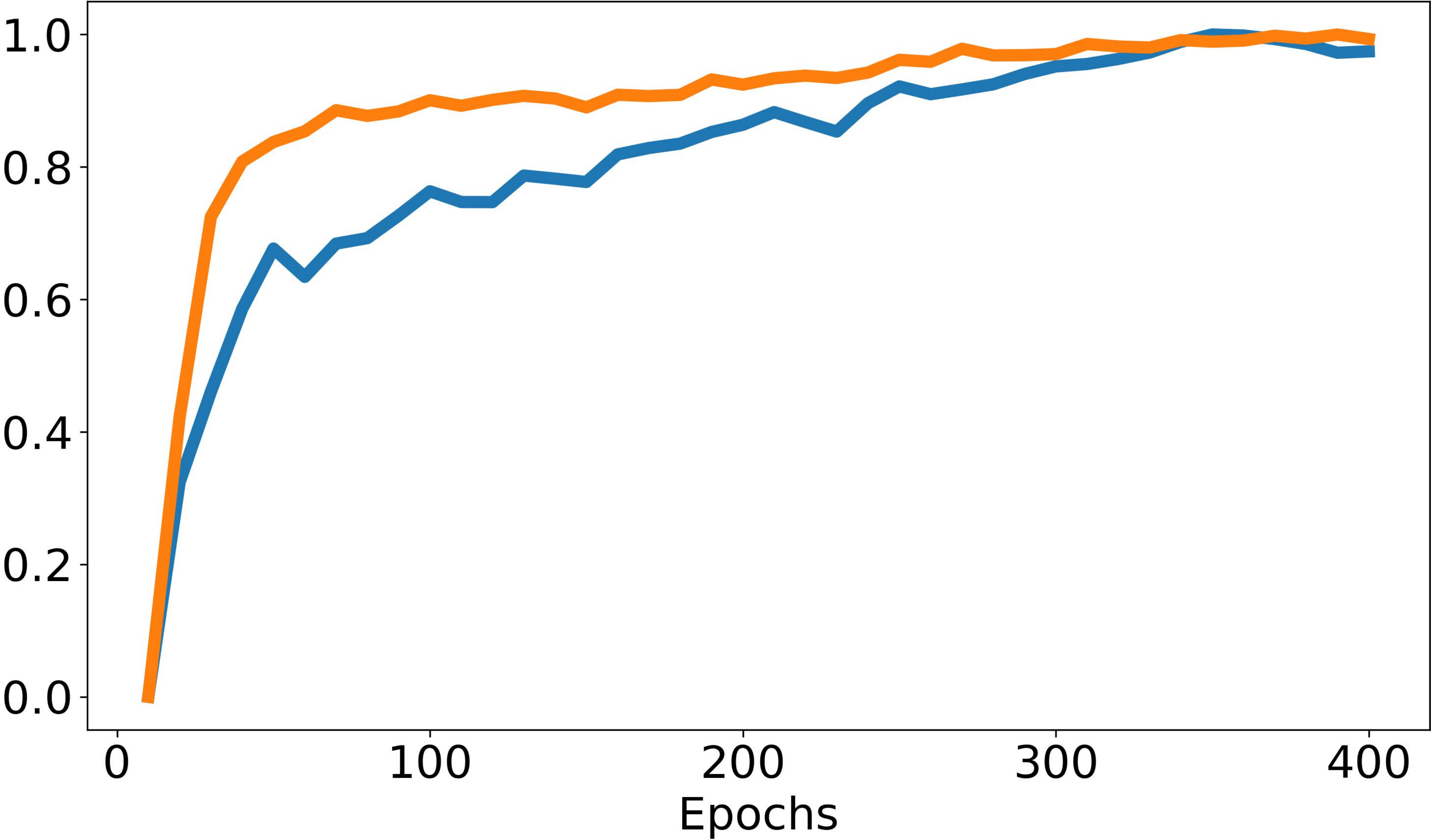}
        \caption{\swav}
    \end{subfigure}
    \vspace{0.15cm}
    \begin{subfigure}{0.19\textwidth}
        \centering
        \includegraphics[width=\linewidth]{images/metric/metric_simple_cocostuff27/dino-ori-800.pdf}
        \caption{\dino}
    \end{subfigure}
    \hfill
    \begin{subfigure}{0.19\textwidth}
        \centering
        \includegraphics[width=\linewidth]{images/metric/metric_simple_cocostuff27/esvit-ori-800.pdf}
        \caption{\esvit}
    \end{subfigure}
    \hfill
    \begin{subfigure}{0.19\textwidth}
        \centering
        \includegraphics[width=\linewidth]{images/metric/metric_simple_cocostuff27/ibot-ori-800.pdf}
        \caption{\ibot}
    \end{subfigure}
    \hfill
    \begin{subfigure}{0.19\textwidth}
        \centering
        \includegraphics[width=\linewidth]{images/metric/metric_simple_cocostuff27/mec-ori-800.pdf}
        \caption{\mec}
    \end{subfigure}
    \hfill
    \begin{subfigure}{0.19\textwidth}
        \centering
        \includegraphics[width=\linewidth]{images/metric/metric_simple_cocostuff27/vicregl-ori-300.pdf}
        \caption{\vicregl}
    \end{subfigure}
    \vspace{0.15cm}
    \hfill
    \begin{subfigure}{0.19\textwidth}
        \centering
        \includegraphics[width=\linewidth]{images/metric/metric_simple_cocostuff27/mugs-ori-800.pdf}
        \caption{\mugs}
    \end{subfigure}
    \hspace{0.05\textwidth}
    \begin{subfigure}{0.19\textwidth}
        \centering
        \includegraphics[width=\linewidth]{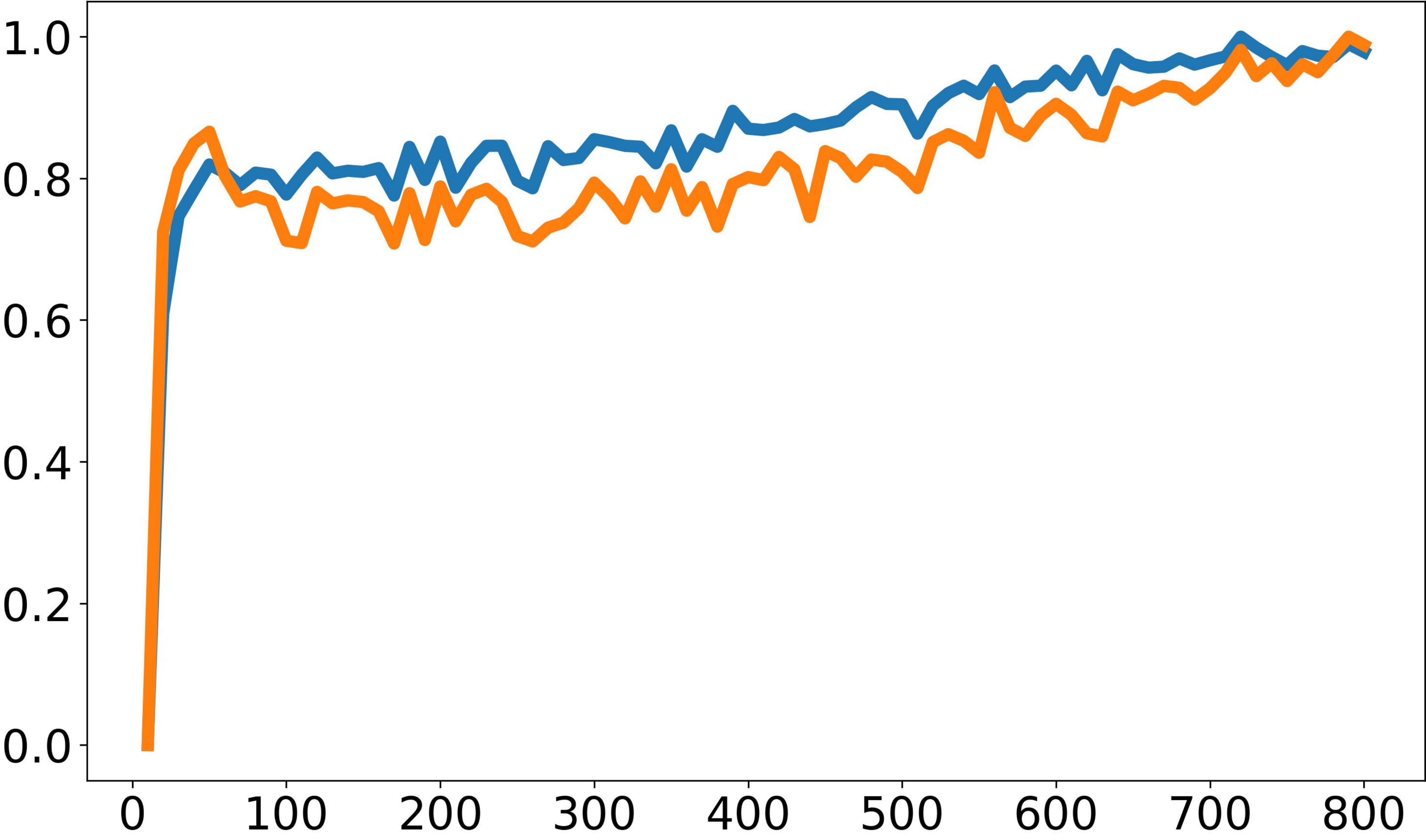}
        \caption{\resa}
    \end{subfigure}
    \hspace{0.05\textwidth}
    \begin{subfigure}{0.19\textwidth}
        \centering
        \includegraphics[width=\linewidth]{images/metric/metric_simple_cocostuff27/mae-ori-800.pdf}
        \caption{\mae}
    \end{subfigure}
    \hspace{0.05\textwidth}
    \begin{subfigure}{0.19\textwidth}
        \centering
        \includegraphics[width=\linewidth]{images/metric/metric_simple_cocostuff27/ijepa-ori-800.pdf}
        \caption{\ijepa}
    \end{subfigure}
    \hfill
    \vspace{-2pt}
    \caption{The proposed DSE metric precisely predicts the downstream performance on the COCO-Stuff dataset.}
    \label{fig:metric_coco}
\end{figure}  
\begin{figure}[H]
    \centering
    \begin{tikzpicture}
        \begin{axis}[
            scale only axis,
            legend style={
                at={(0.5,1.05)}, 
                anchor=south,
                legend columns=2, 
                /tikz/every even column/.append style={column sep=1cm},
                font=\smaller, 
                draw=lightgray,
                fill=white, 
                /pgf/number format/1000 sep={}
            },
            legend cell align={left},
            xlabel={}, ylabel={}, 
            xmin=0, xmax=1, ymin=0, ymax=1,
            axis lines=none, 
        ]
            \addlegendimage{color=matplotlibblue, mark=none, line width=1pt}
            \addlegendentry{mIoU on ADE20k}
            \addlegendimage{color=matplotliborange, mark=none, line width=1pt}
            \addlegendentry{The proposed DSE metric}
        \end{axis}
    \end{tikzpicture}
    
    \begin{subfigure}{0.19\textwidth}
        \centering
        \includegraphics[width=\linewidth]{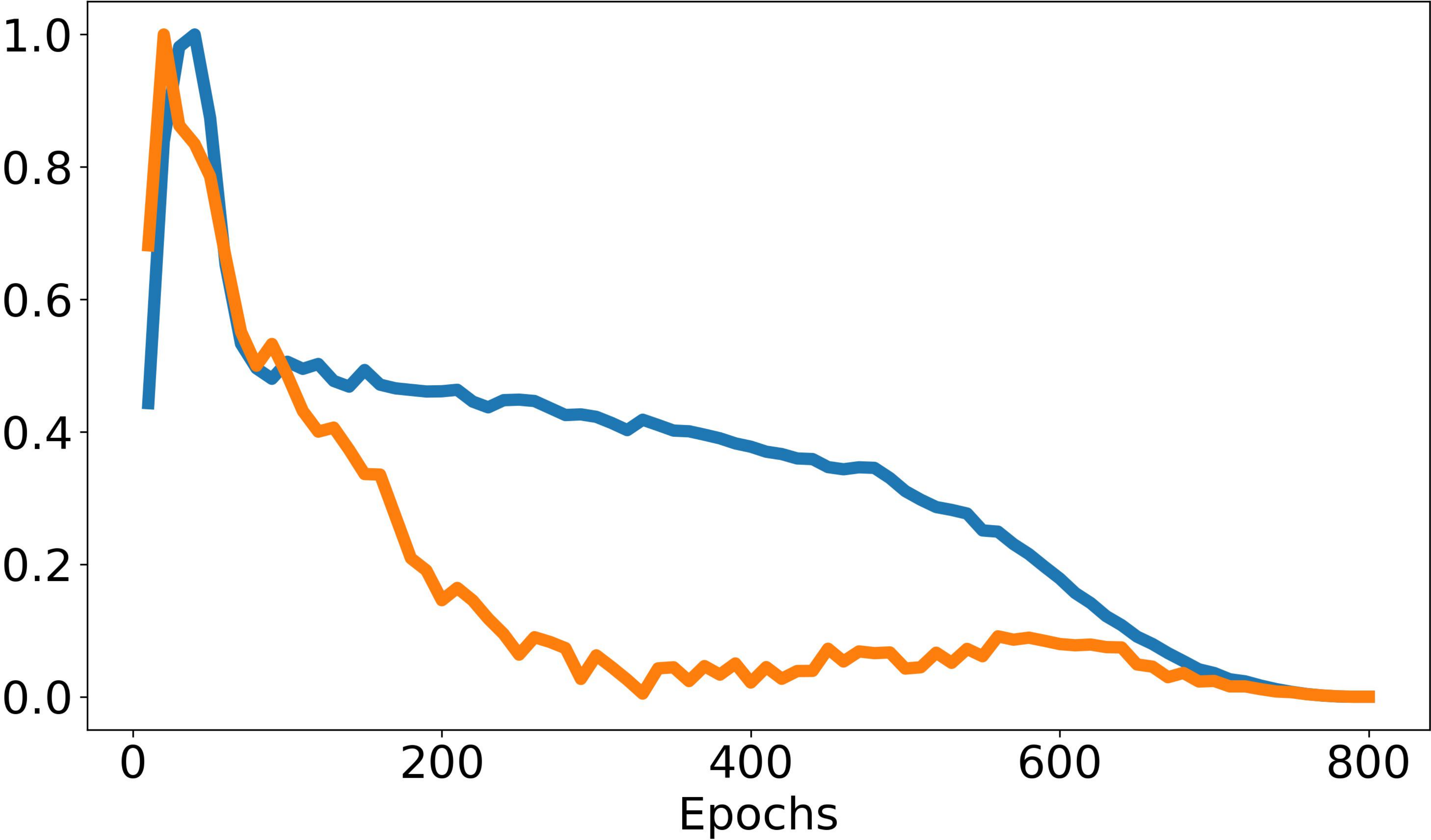}
        \caption{\moco}
    \end{subfigure}
    \hfill
    \begin{subfigure}{0.19\textwidth}
        \centering
        \includegraphics[width=\linewidth]{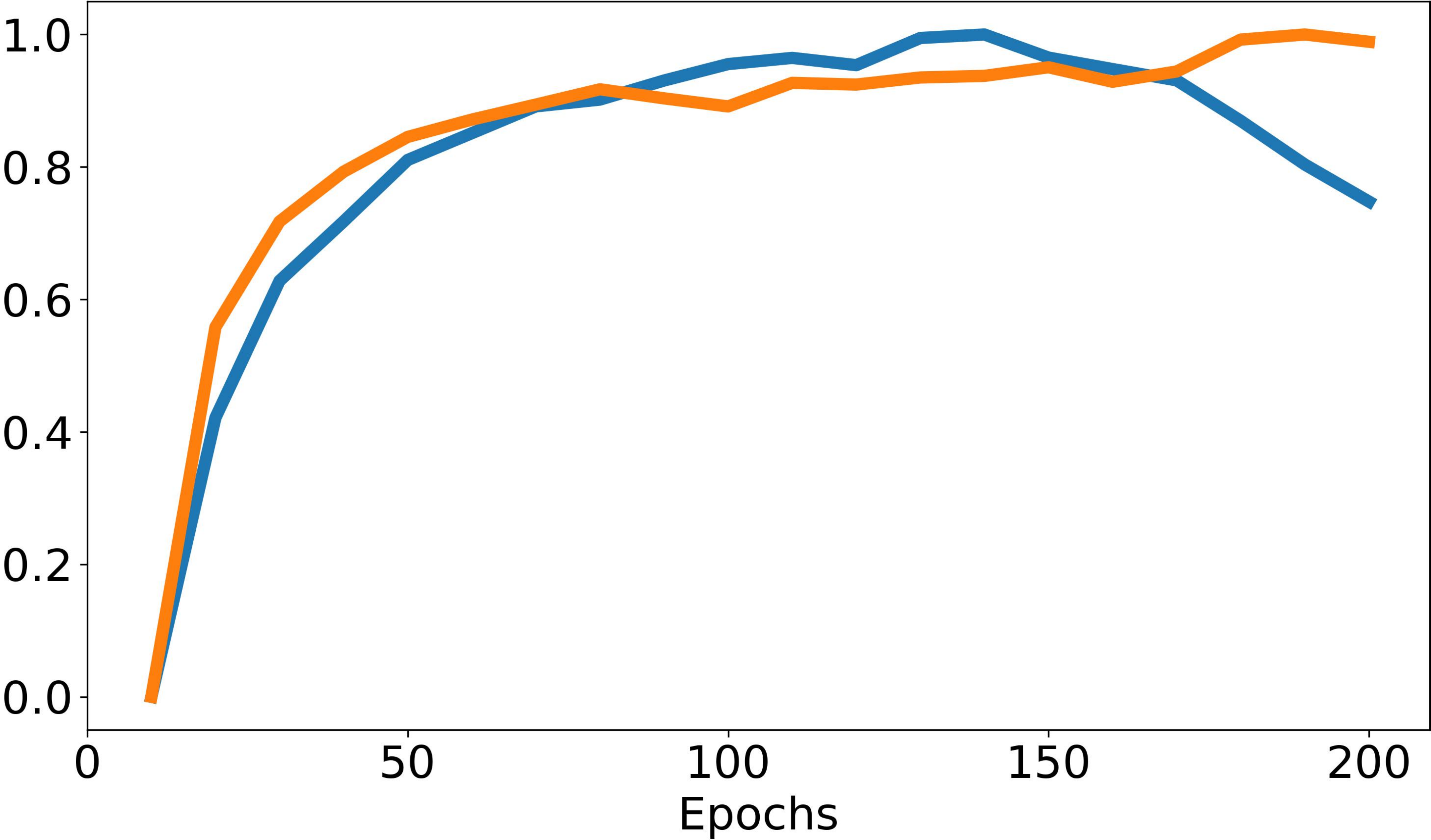}
        \caption{\densecl}
    \end{subfigure}
    \hfill
    \begin{subfigure}{0.19\textwidth}
        \centering
        \includegraphics[width=\linewidth]{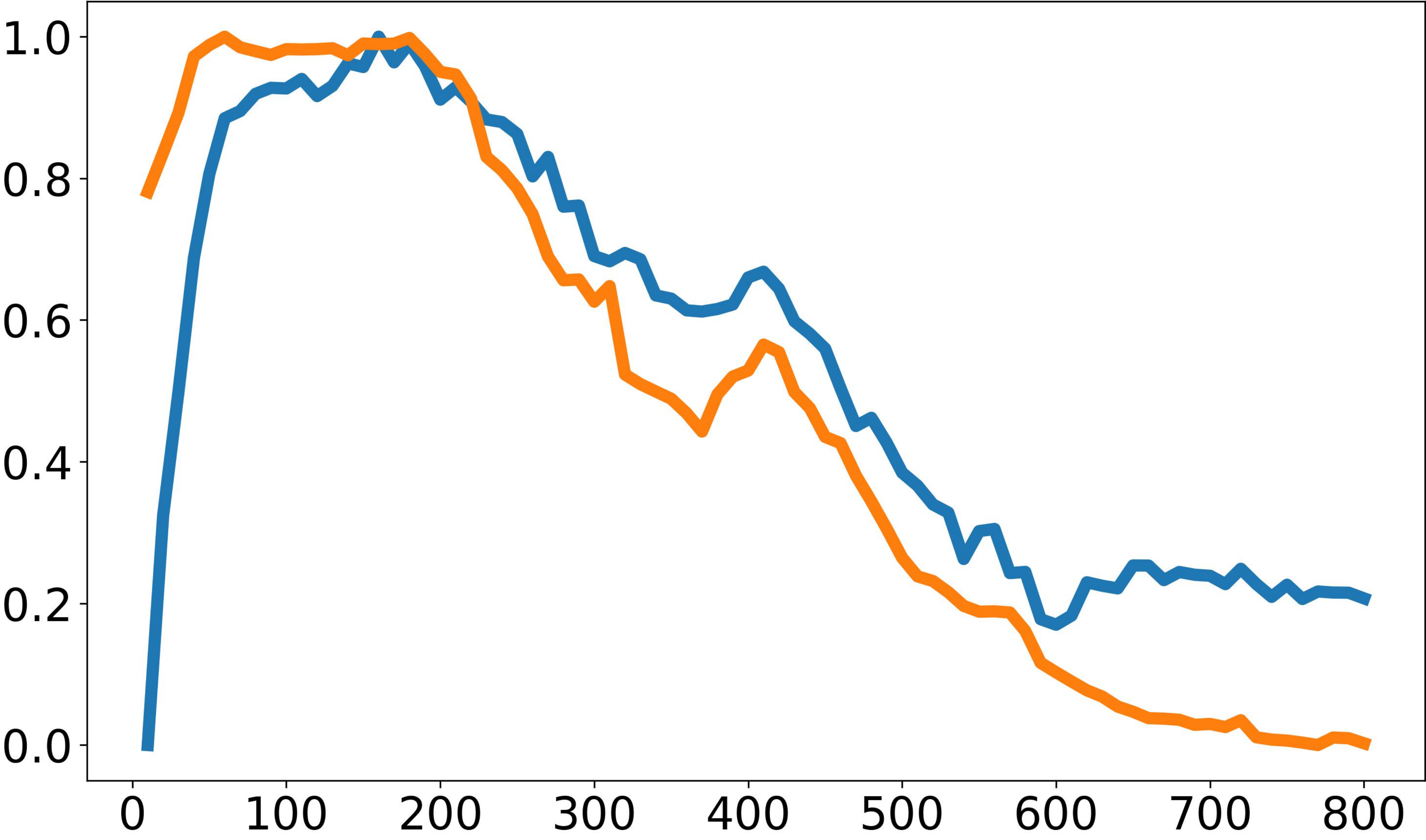}
        \caption{\byol}
    \end{subfigure}
    \hfill
    \begin{subfigure}{0.19\textwidth}
        \centering
        \includegraphics[width=\linewidth]{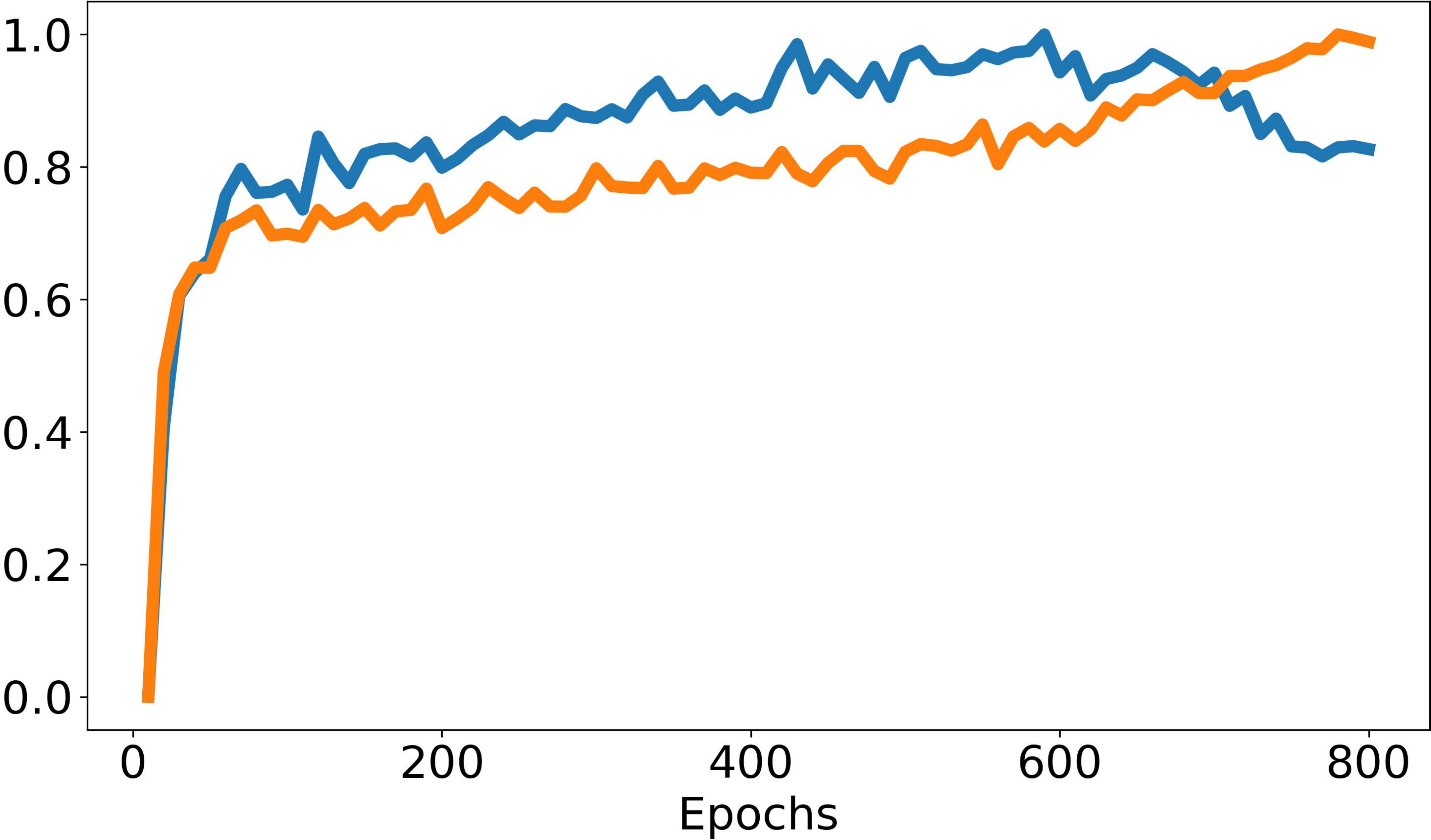}
        \caption{\simsiam}
    \end{subfigure}
    \hfill
    \begin{subfigure}{0.19\textwidth}
        \centering
        \includegraphics[width=\linewidth]{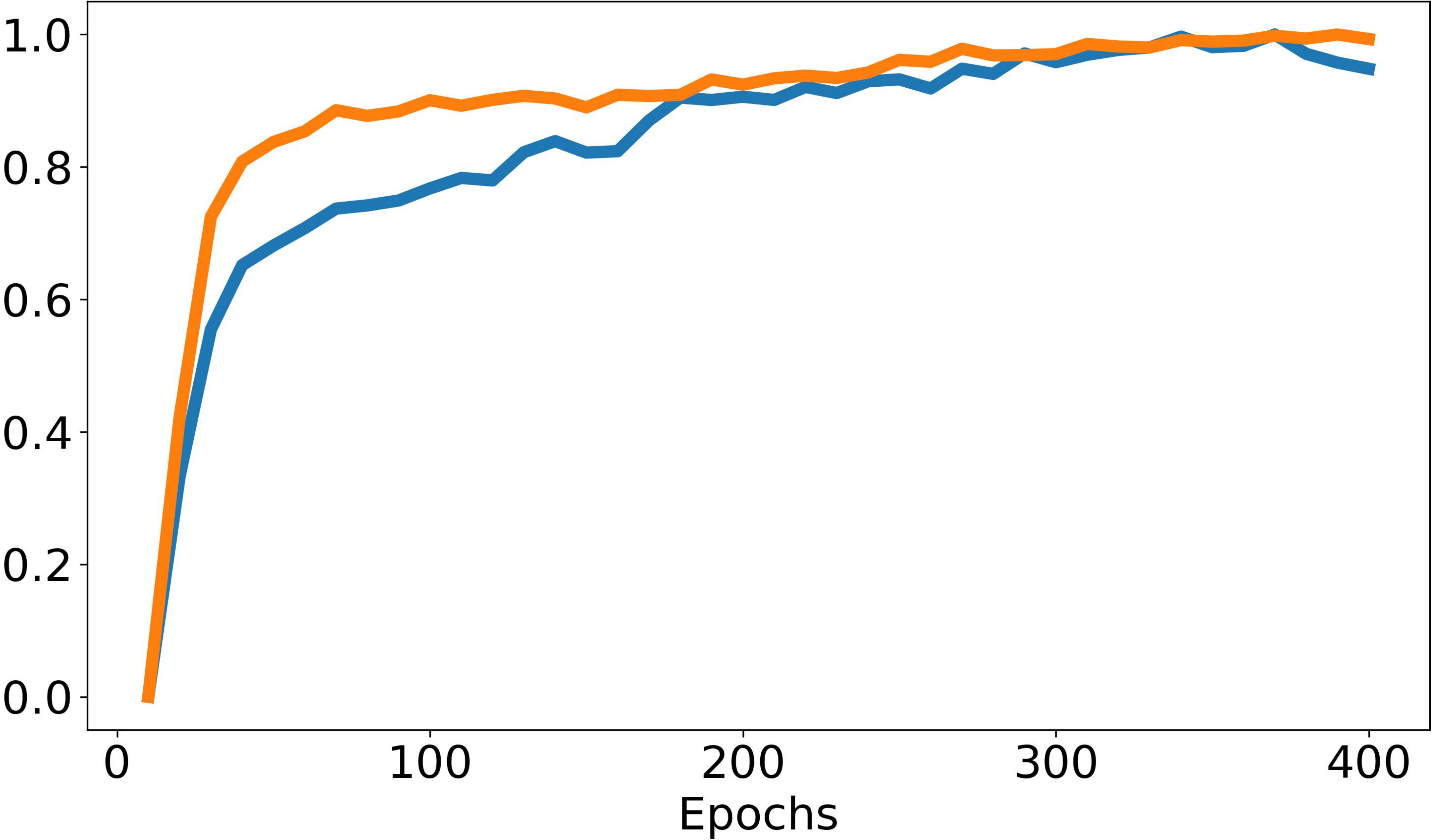}
        \caption{\swav}
    \end{subfigure}
    \vspace{0.15cm}
    \begin{subfigure}{0.19\textwidth}
        \centering
        \includegraphics[width=\linewidth]{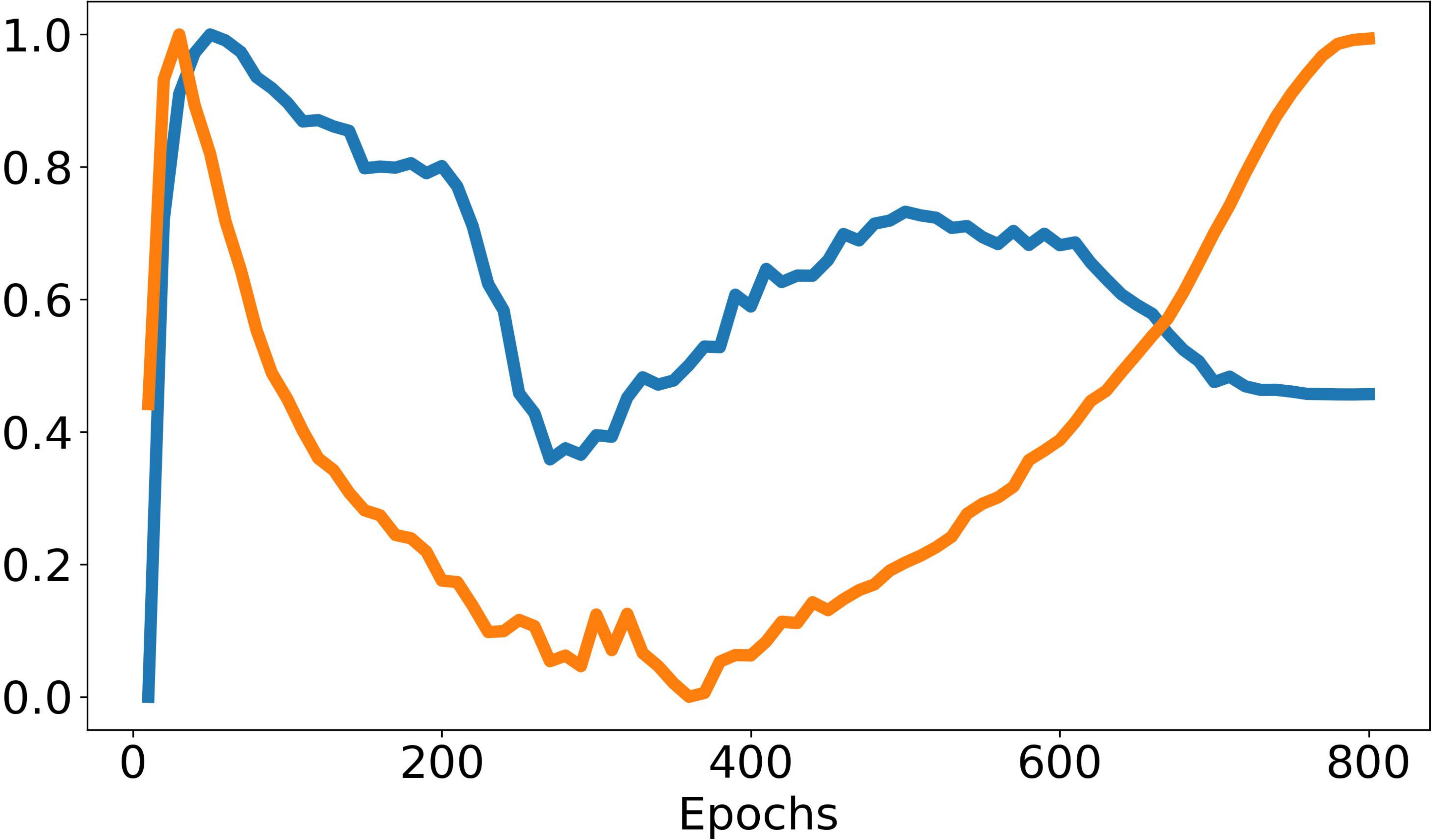}
        \caption{\dino}
    \end{subfigure}
    \hfill
    \begin{subfigure}{0.19\textwidth}
        \centering
        \includegraphics[width=\linewidth]{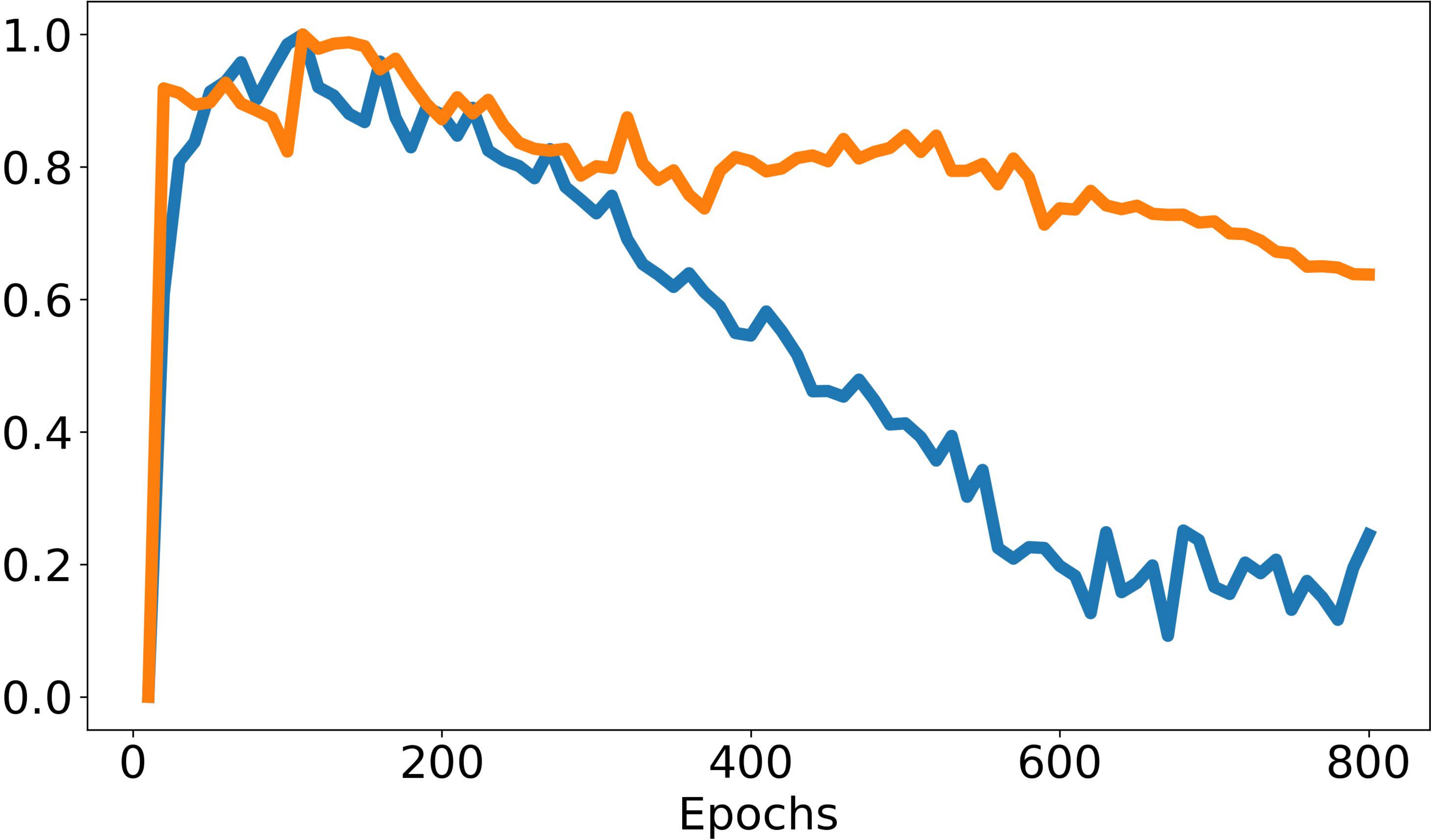}
        \caption{\esvit}
    \end{subfigure}
    \hfill
    \begin{subfigure}{0.19\textwidth}
        \centering
        \includegraphics[width=\linewidth]{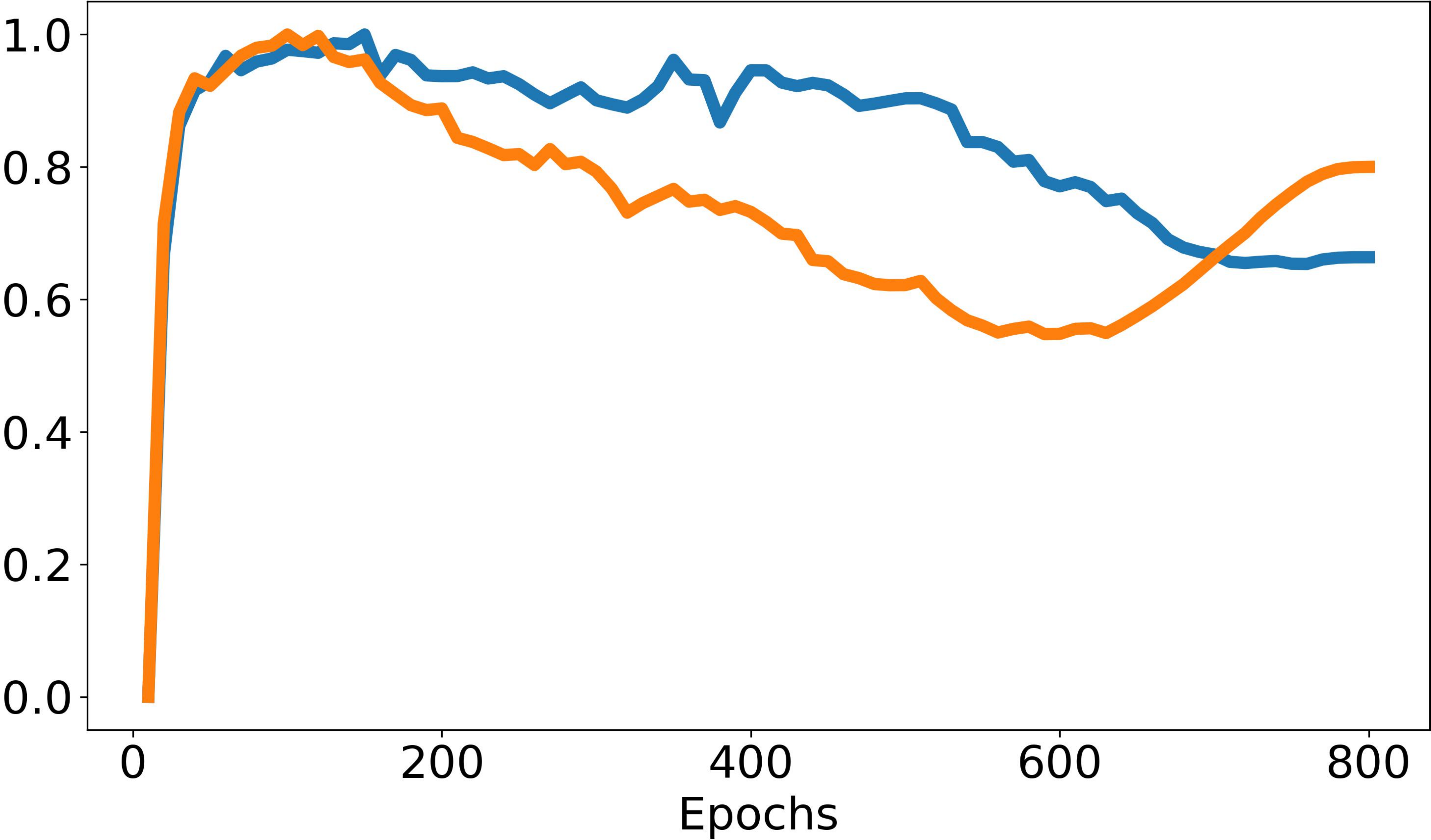}
        \caption{\ibot}
    \end{subfigure}
    \hfill
    \begin{subfigure}{0.19\textwidth}
        \centering
        \includegraphics[width=\linewidth]{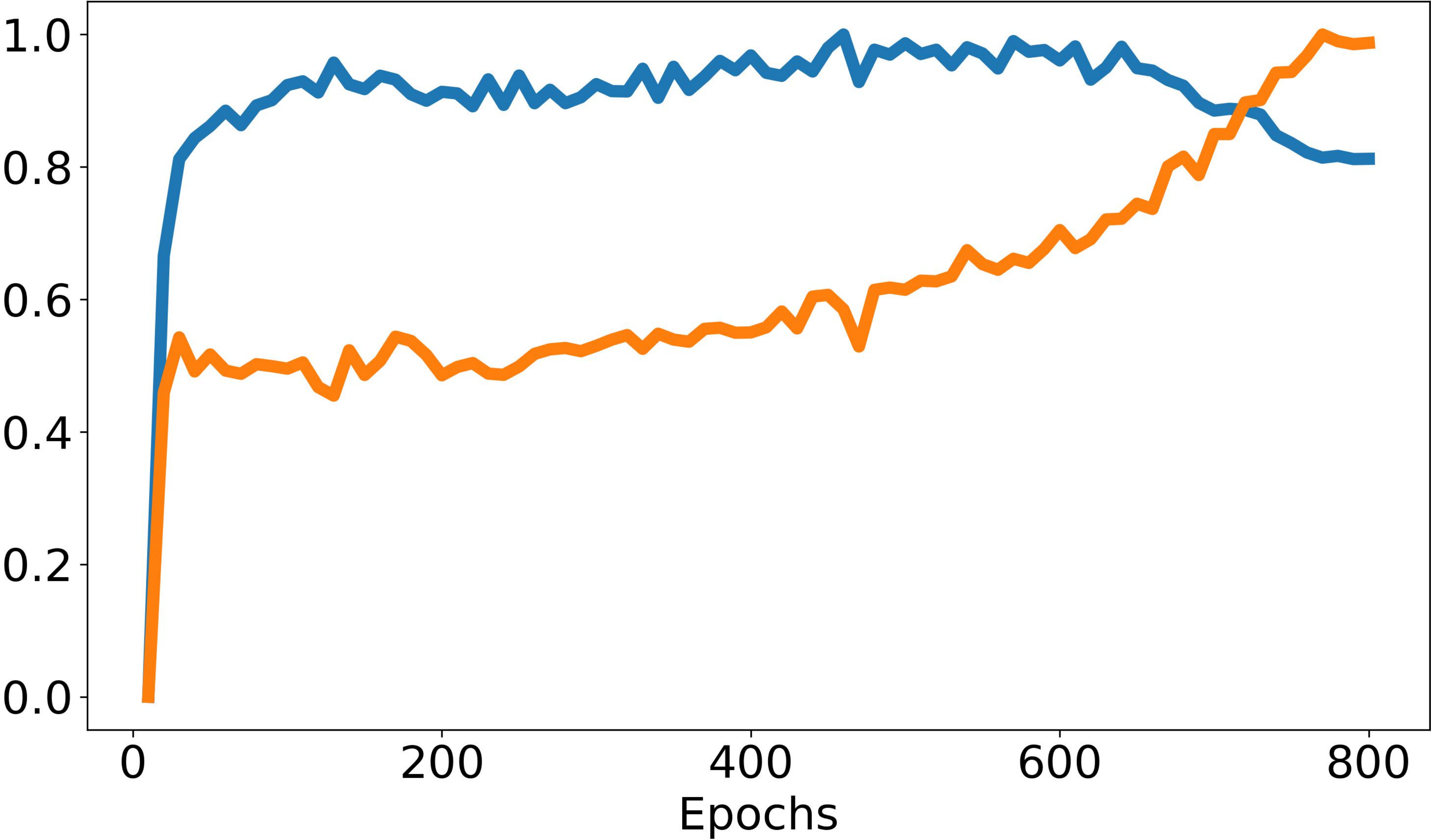}
        \caption{\mec}
    \end{subfigure}
    \hfill
    \begin{subfigure}{0.19\textwidth}
        \centering
        \includegraphics[width=\linewidth]{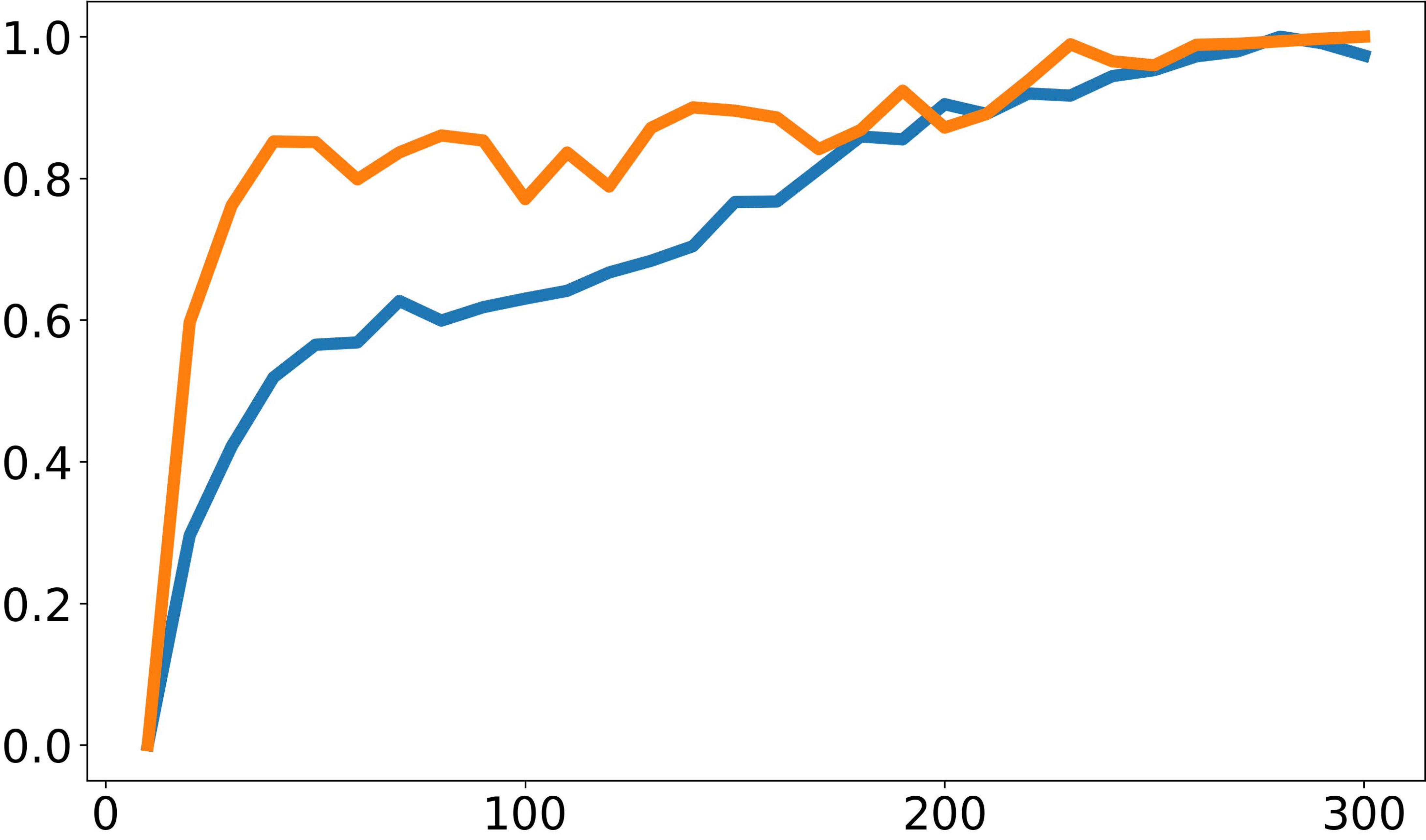}
        \caption{\vicregl}
    \end{subfigure}
    \vspace{0.15cm}
    \hfill
    \begin{subfigure}{0.19\textwidth}
        \centering
        \includegraphics[width=\linewidth]{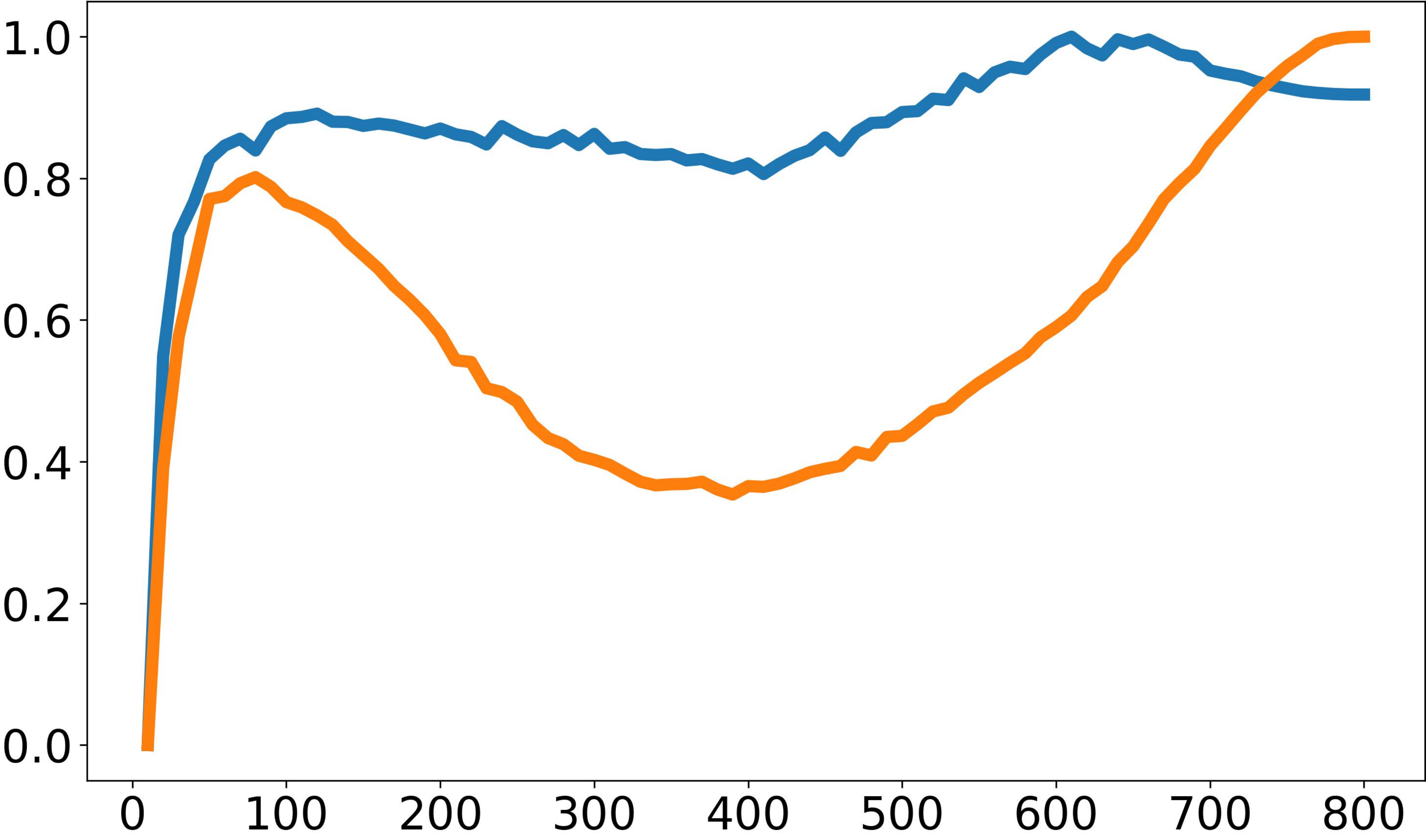}
        \caption{\mugs}
    \end{subfigure}
    \hspace{0.05\textwidth}
    \begin{subfigure}{0.19\textwidth}
        \centering
        \includegraphics[width=\linewidth]{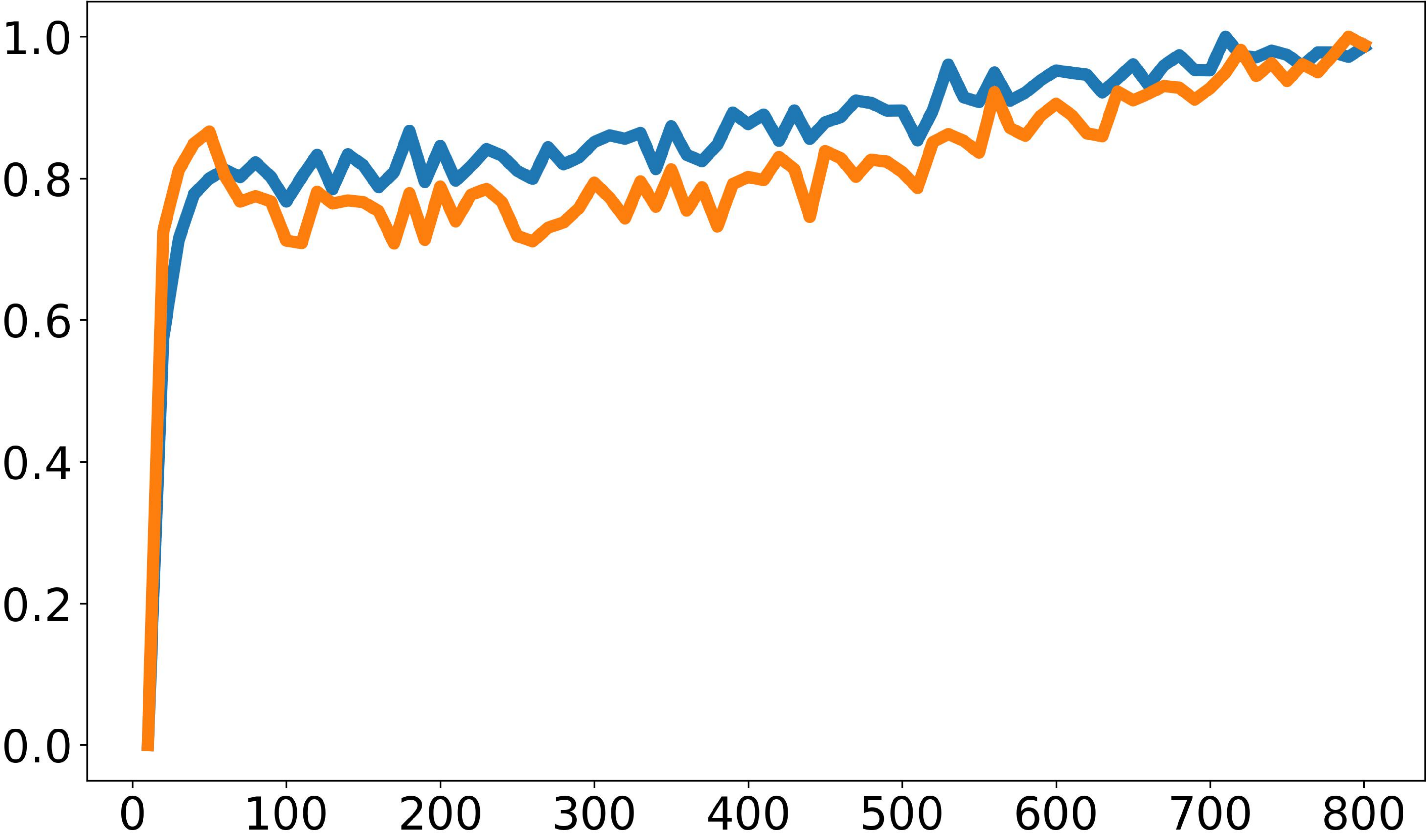}
        \caption{\resa}
    \end{subfigure}
    \hspace{0.05\textwidth}
    \begin{subfigure}{0.19\textwidth}
        \centering
        \includegraphics[width=\linewidth]{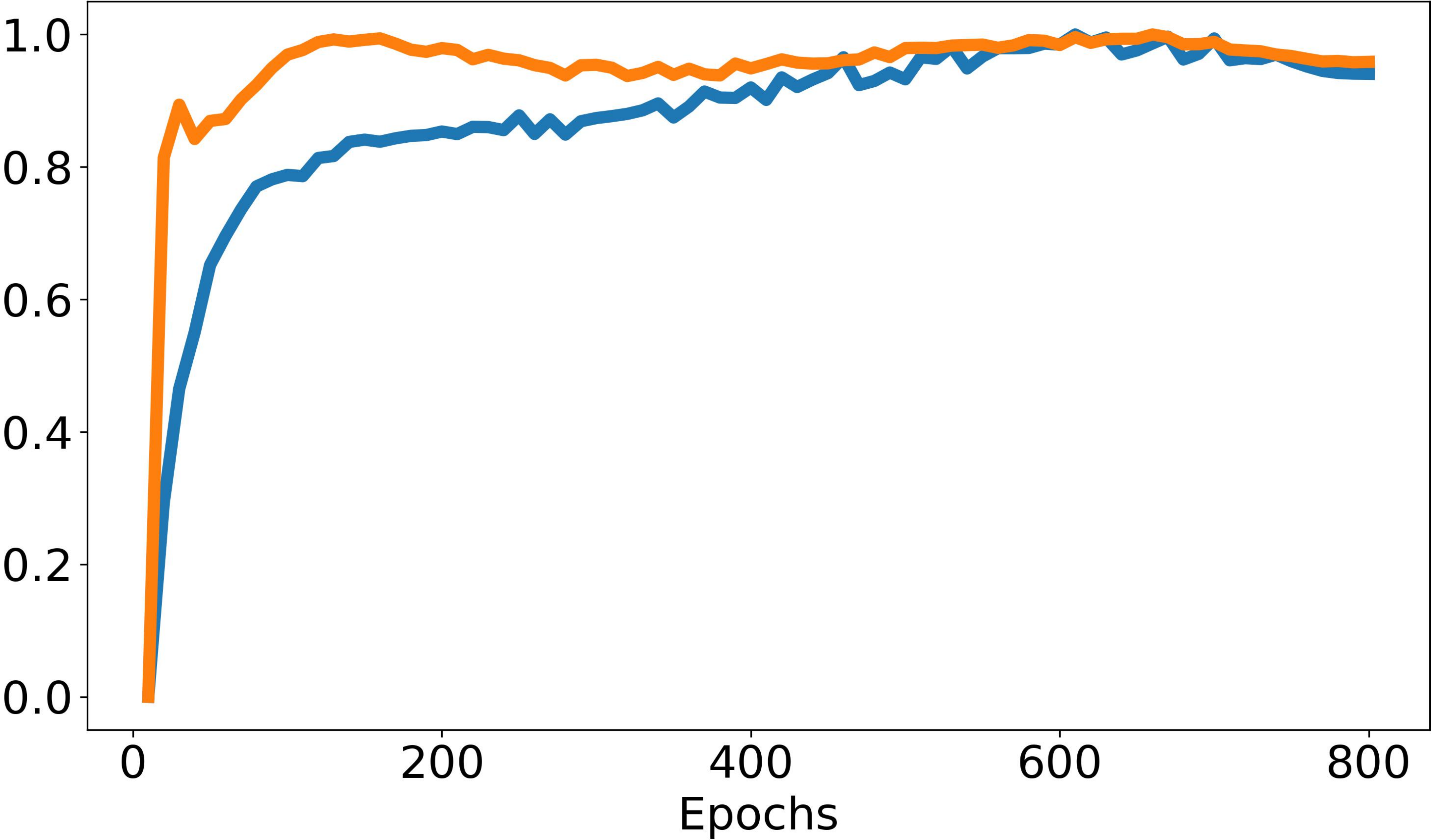}
        \caption{\mae}
    \end{subfigure}
    \hspace{0.05\textwidth}
    \begin{subfigure}{0.19\textwidth}
        \centering
        \includegraphics[width=\linewidth]{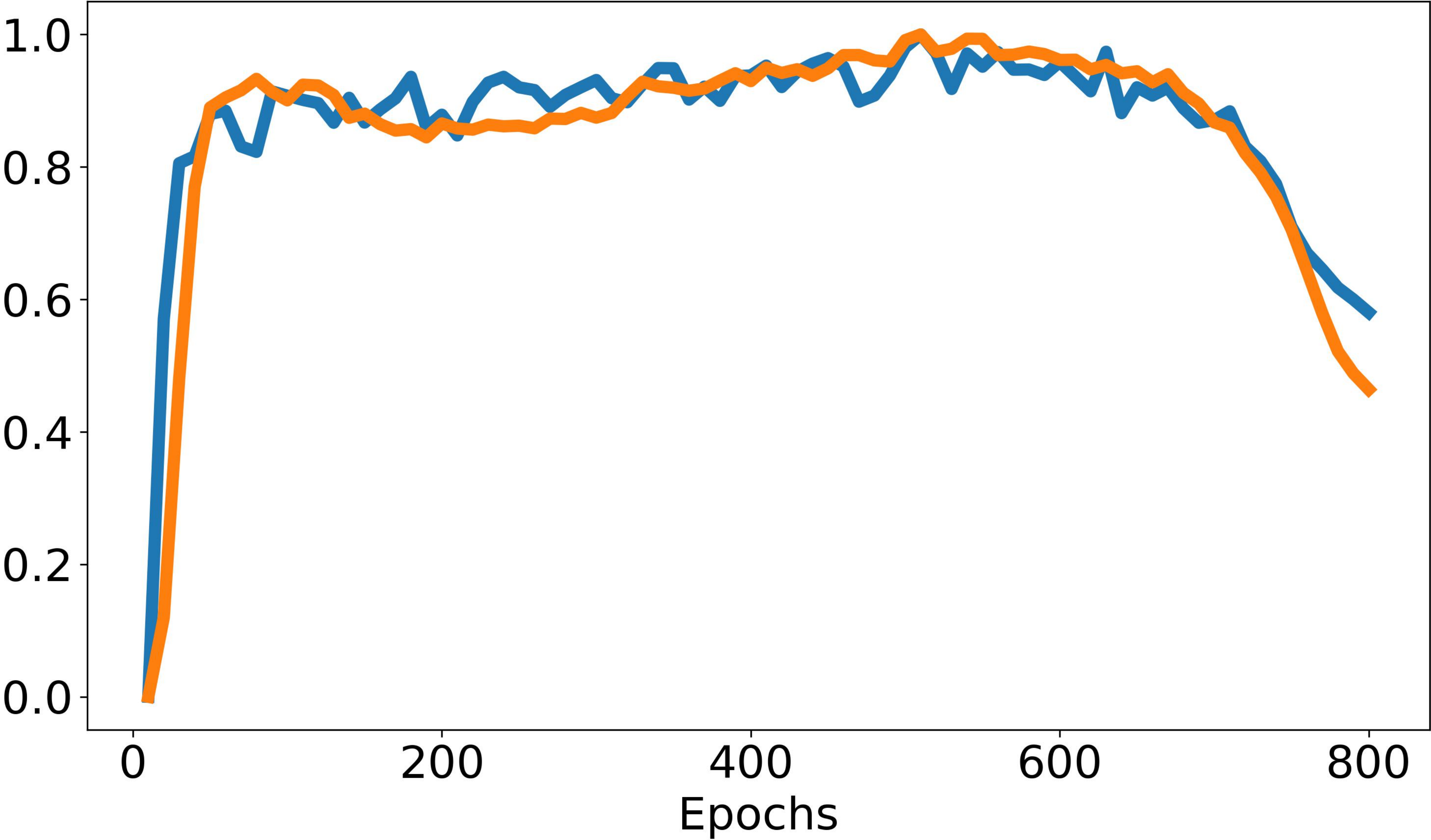}
        \caption{\ijepa}
    \end{subfigure}
    \hfill
    \vspace{-2pt}
    \caption{The proposed DSE metric precisely predicts the downstream performance on the ADE20k dataset.}
    \label{fig:metric_ade20k}
\end{figure}  
\begin{figure}[H]
    \centering
    \begin{tikzpicture}
        \begin{axis}[
            scale only axis,
            legend style={
                at={(0.5,1.05)}, 
                anchor=south,
                legend columns=2, 
                /tikz/every even column/.append style={column sep=1cm},
                font=\smaller, 
                draw=lightgray,
                fill=white, 
                /pgf/number format/1000 sep={}
            },
            legend cell align={left},
            xlabel={}, ylabel={}, 
            xmin=0, xmax=1, ymin=0, ymax=1,
            axis lines=none, 
        ]
            \addlegendimage{color=matplotlibblue, mark=none, line width=1pt}
            \addlegendentry{mIoU on Cityscapes}
            \addlegendimage{color=matplotliborange, mark=none, line width=1pt}
            \addlegendentry{The proposed DSE metric}
        \end{axis}
    \end{tikzpicture}
    
    \begin{subfigure}{0.19\textwidth}
        \centering
        \includegraphics[width=\linewidth]{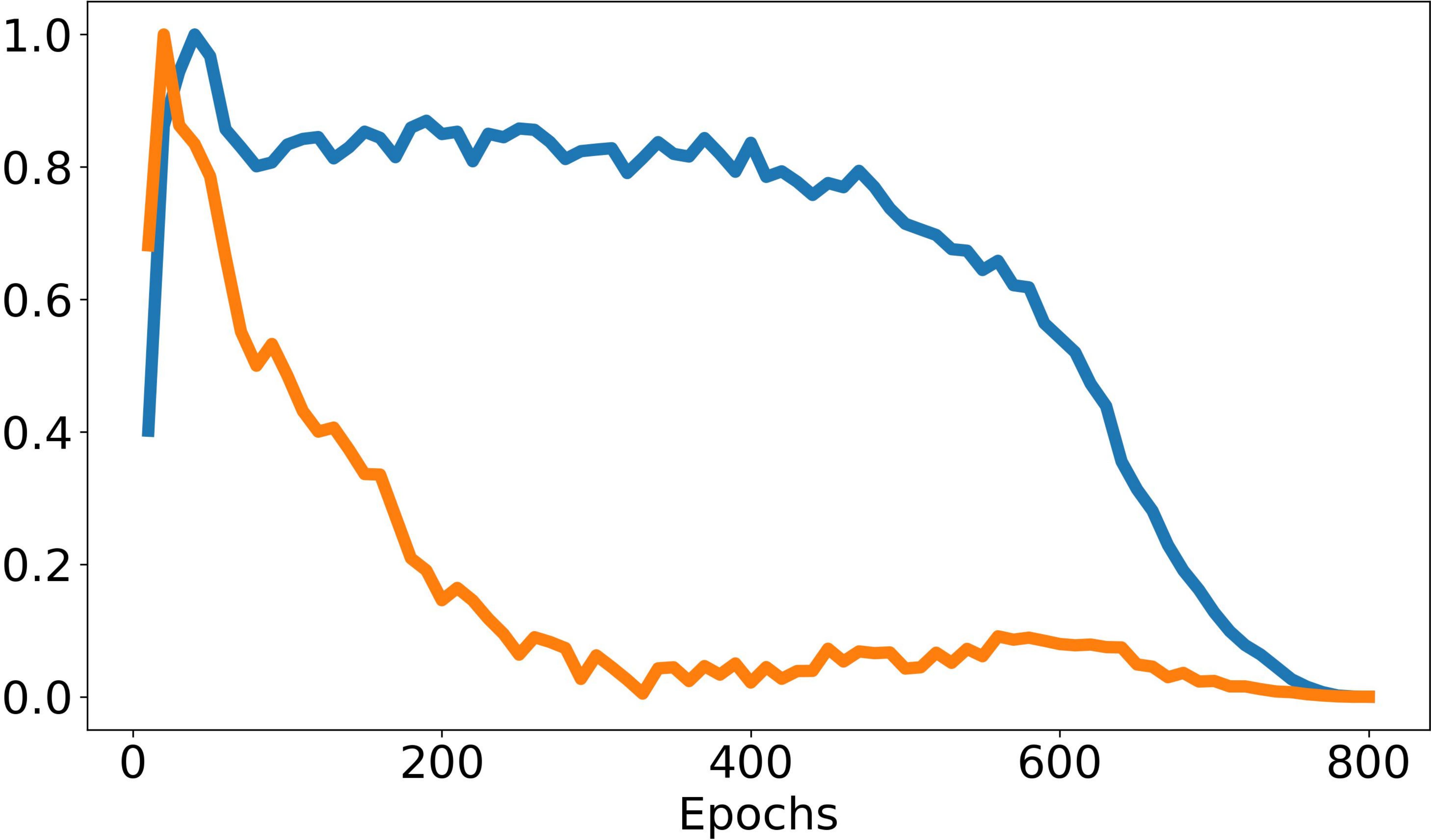}
        \caption{\moco}
    \end{subfigure}
    \hfill
    \begin{subfigure}{0.19\textwidth}
        \centering
        \includegraphics[width=\linewidth]{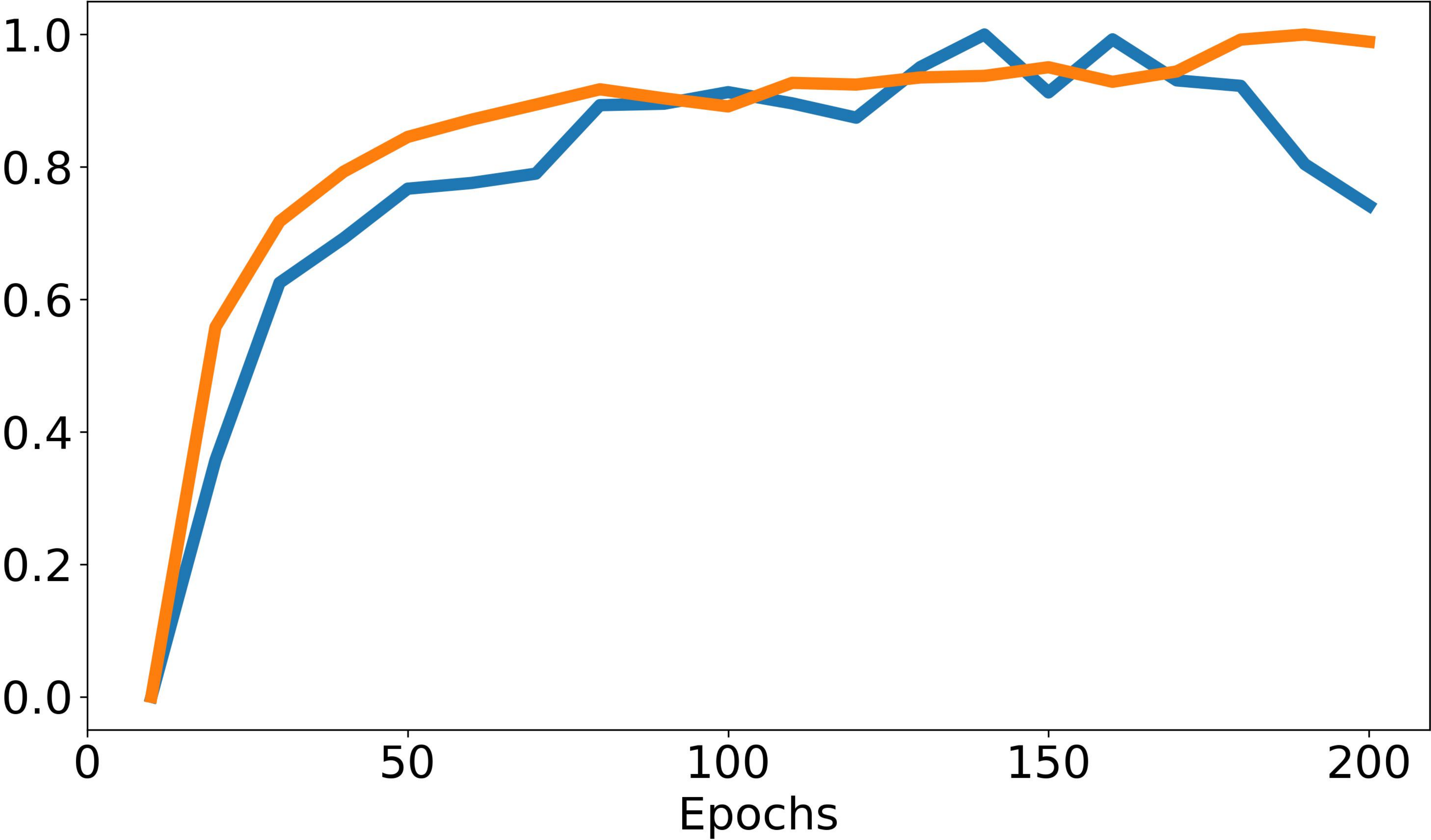}
        \caption{\densecl}
    \end{subfigure}
    \hfill
    \begin{subfigure}{0.19\textwidth}
        \centering
        \includegraphics[width=\linewidth]{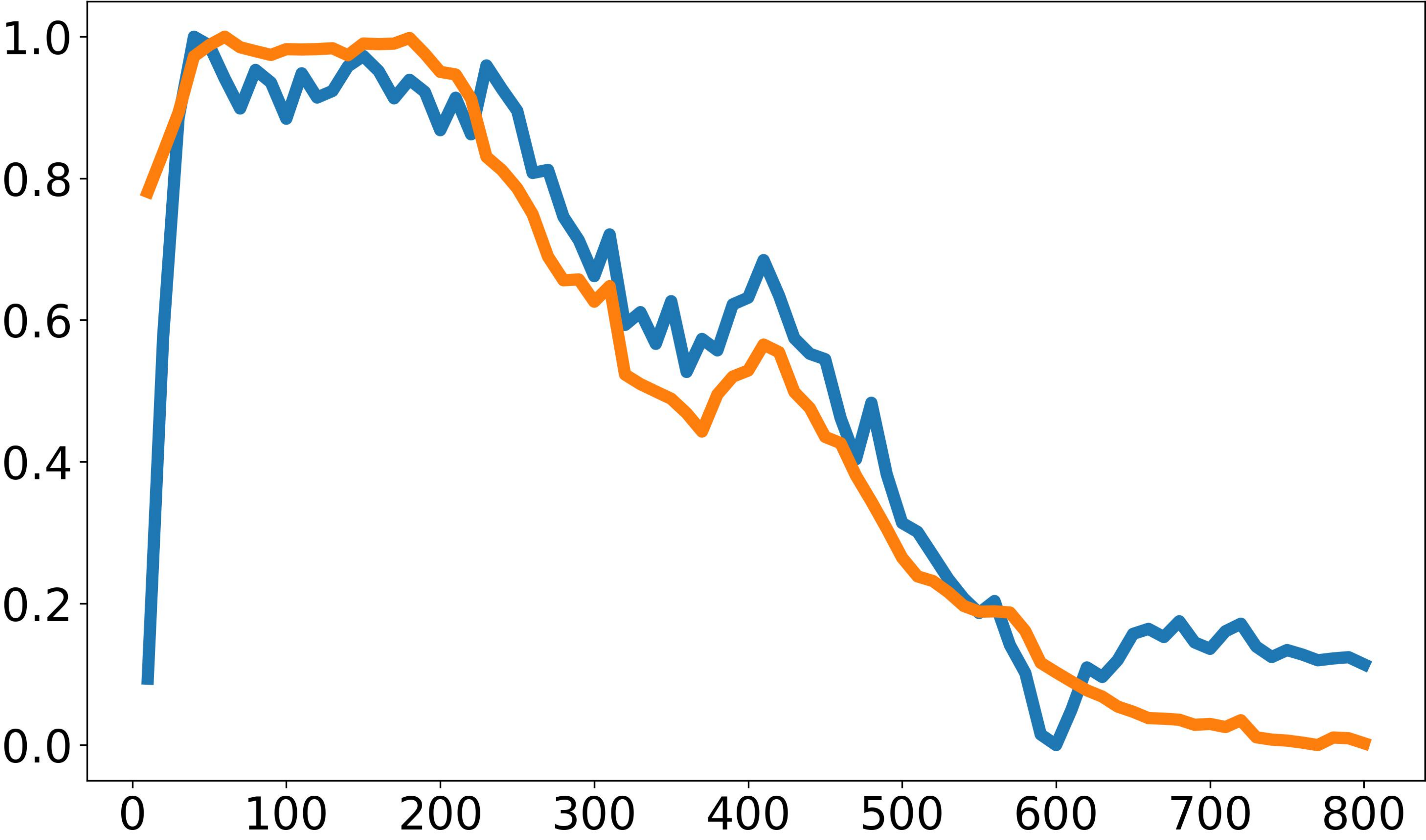}
        \caption{\byol}
    \end{subfigure}
    \hfill
    \begin{subfigure}{0.19\textwidth}
        \centering
        \includegraphics[width=\linewidth]{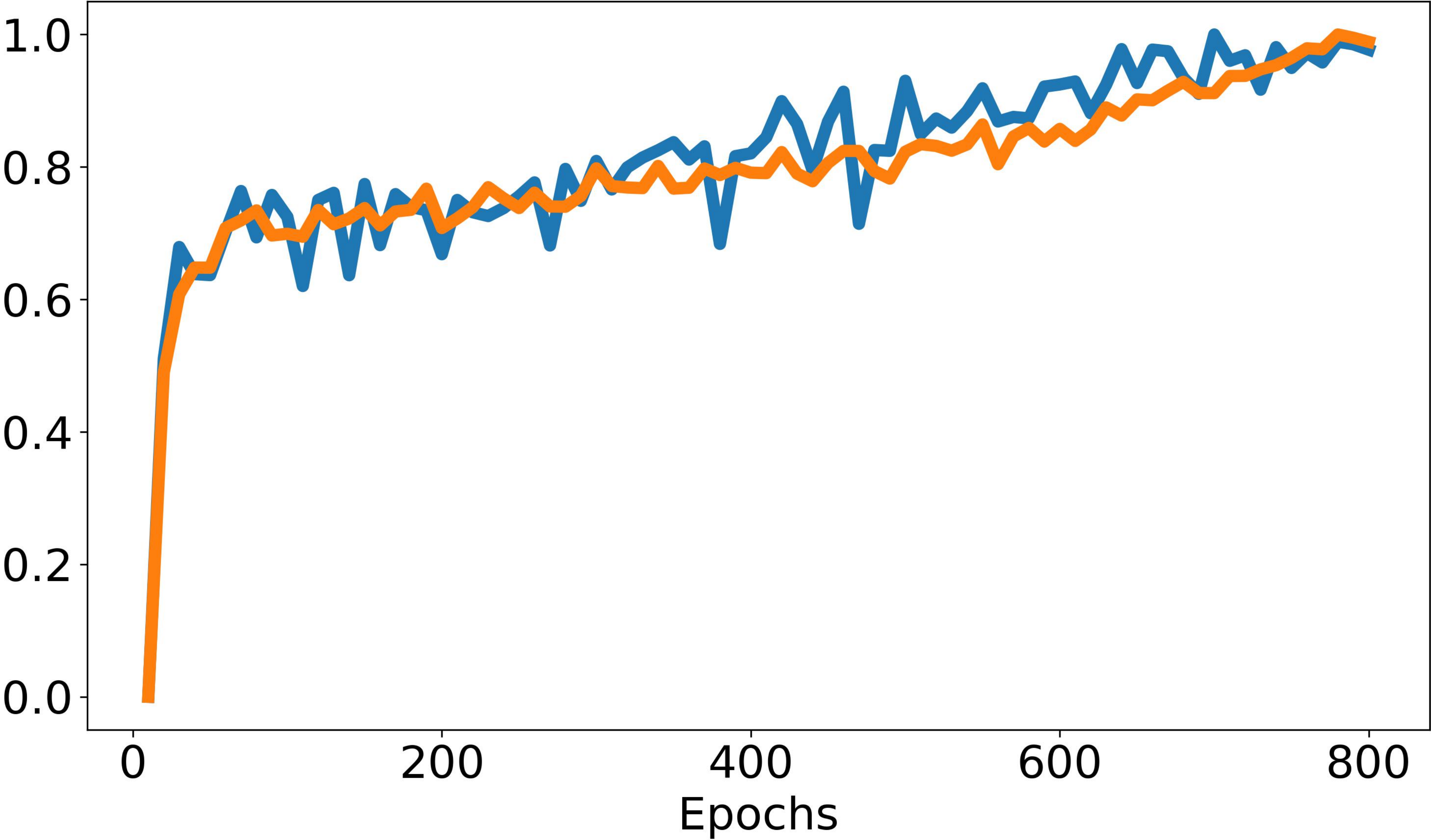}
        \caption{\simsiam}
    \end{subfigure}
    \hfill
    \begin{subfigure}{0.19\textwidth}
        \centering
        \includegraphics[width=\linewidth]{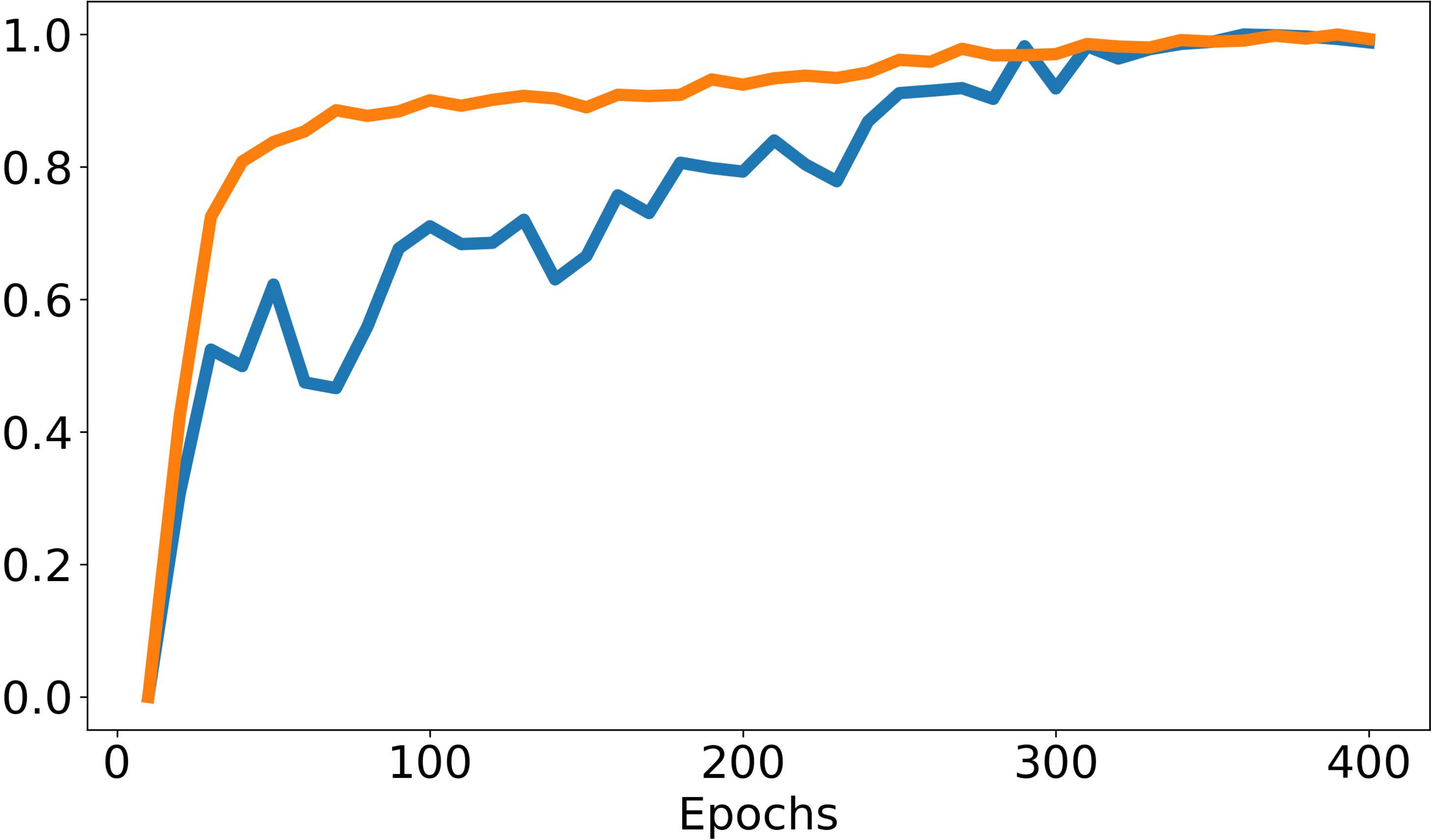}
        \caption{\swav}
    \end{subfigure}
    \vspace{0.15cm}
    \begin{subfigure}{0.19\textwidth}
        \centering
        \includegraphics[width=\linewidth]{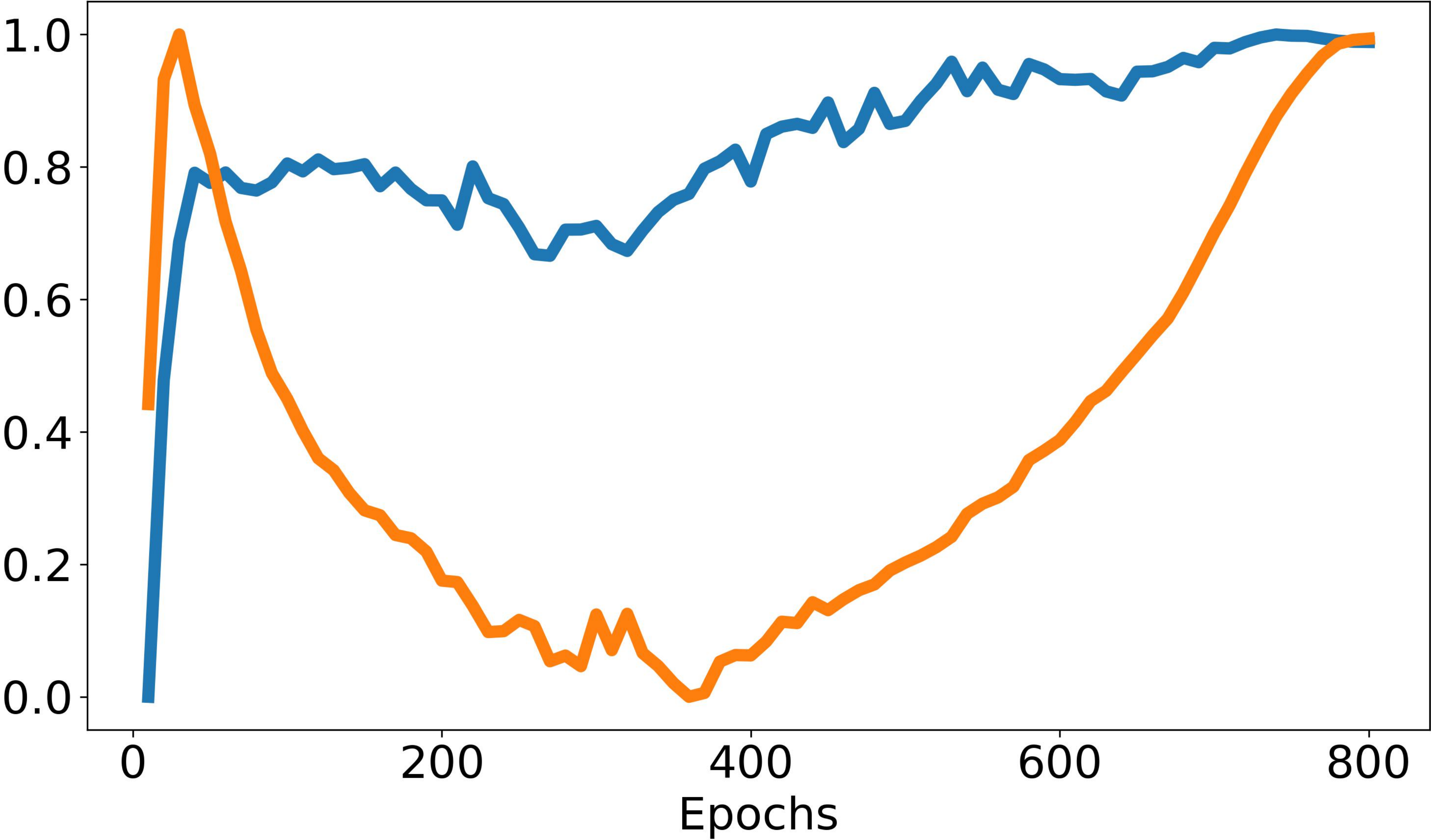}
        \caption{\dino}
    \end{subfigure}
    \hfill
    \begin{subfigure}{0.19\textwidth}
        \centering
        \includegraphics[width=\linewidth]{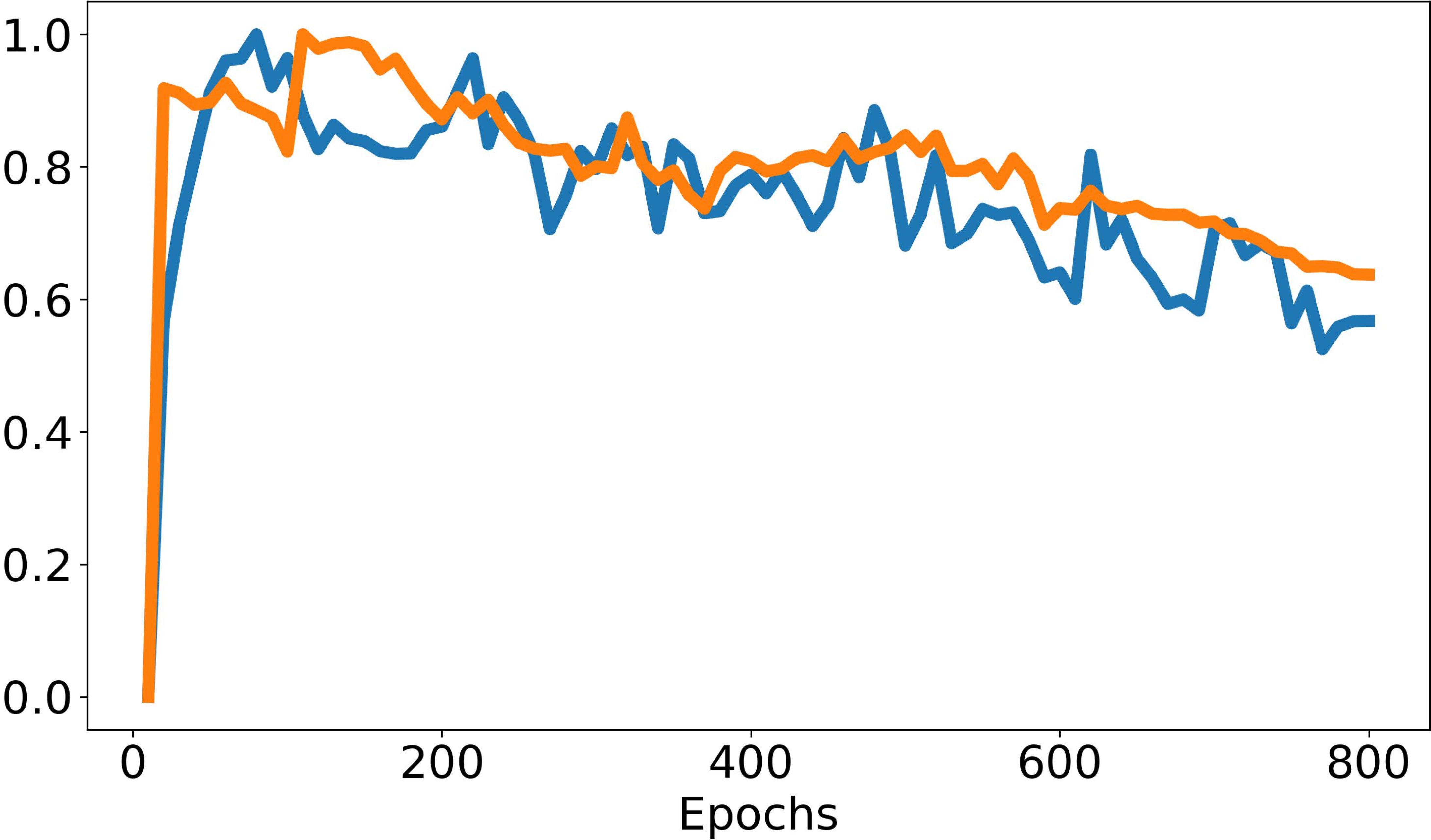}
        \caption{\esvit}
    \end{subfigure}
    \hfill
    \begin{subfigure}{0.19\textwidth}
        \centering
        \includegraphics[width=\linewidth]{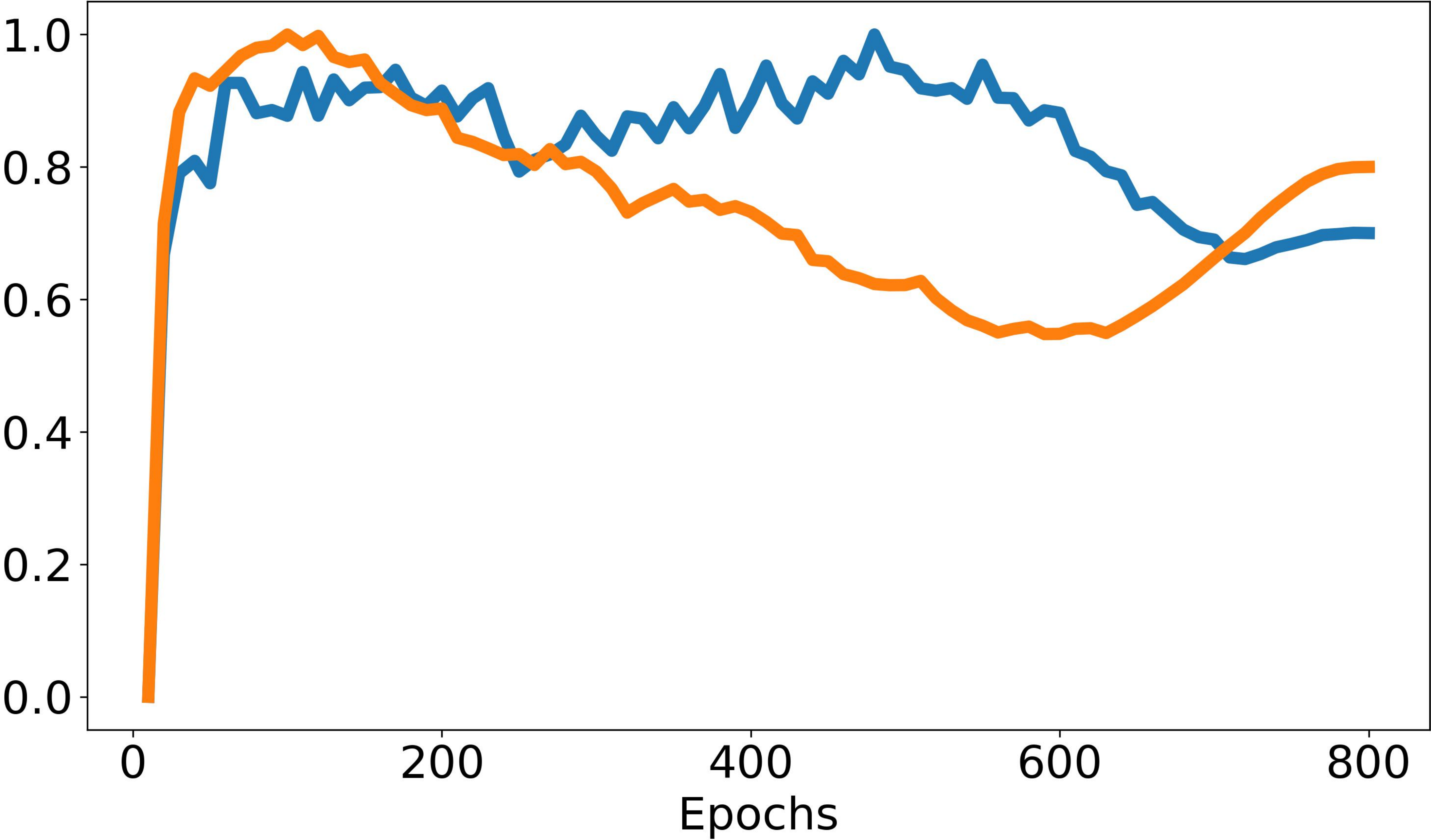}
        \caption{\ibot}
    \end{subfigure}
    \hfill
    \begin{subfigure}{0.19\textwidth}
        \centering
        \includegraphics[width=\linewidth]{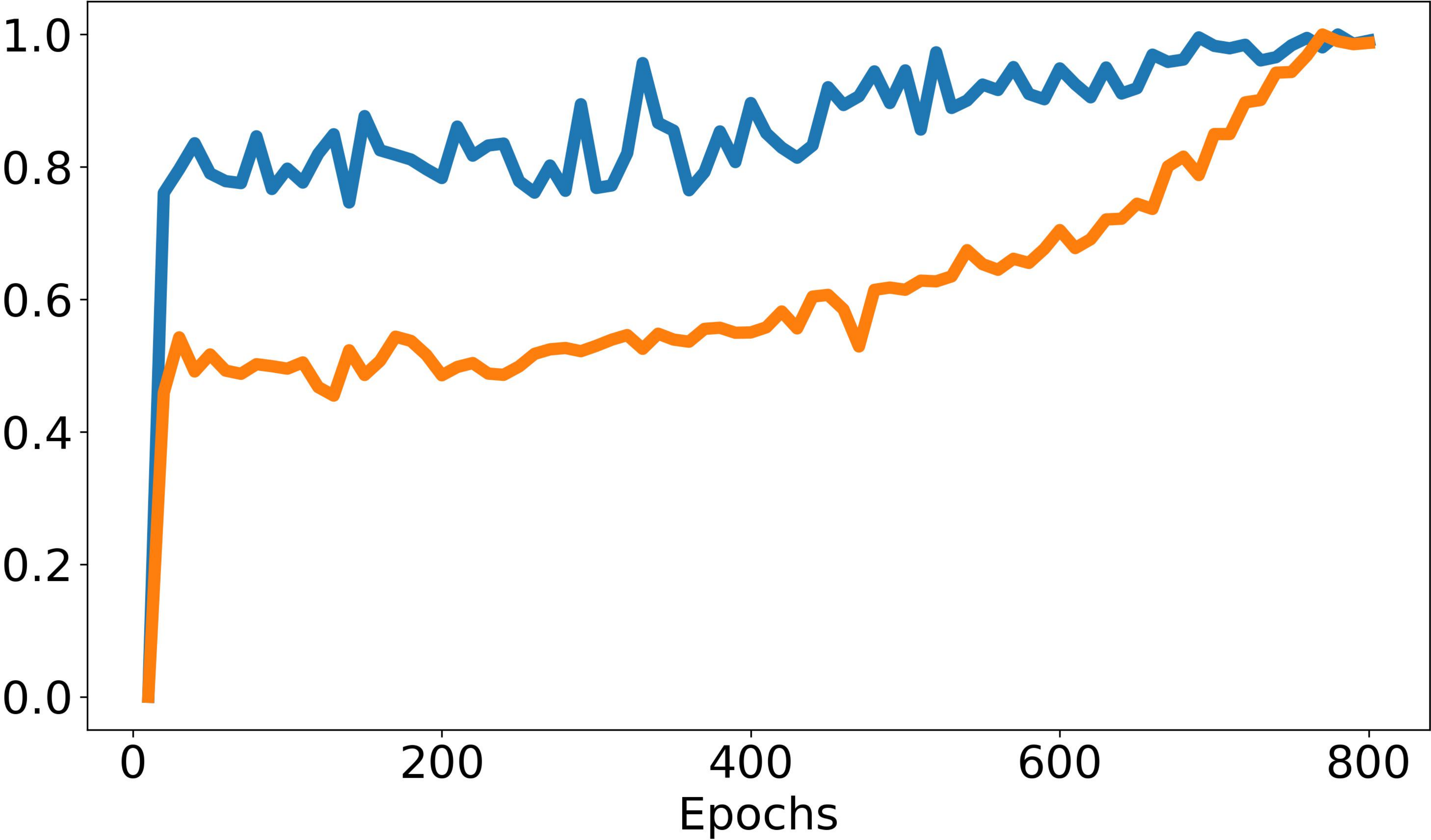}
        \caption{\mec}
    \end{subfigure}
    \hfill
    \begin{subfigure}{0.19\textwidth}
        \centering
        \includegraphics[width=\linewidth]{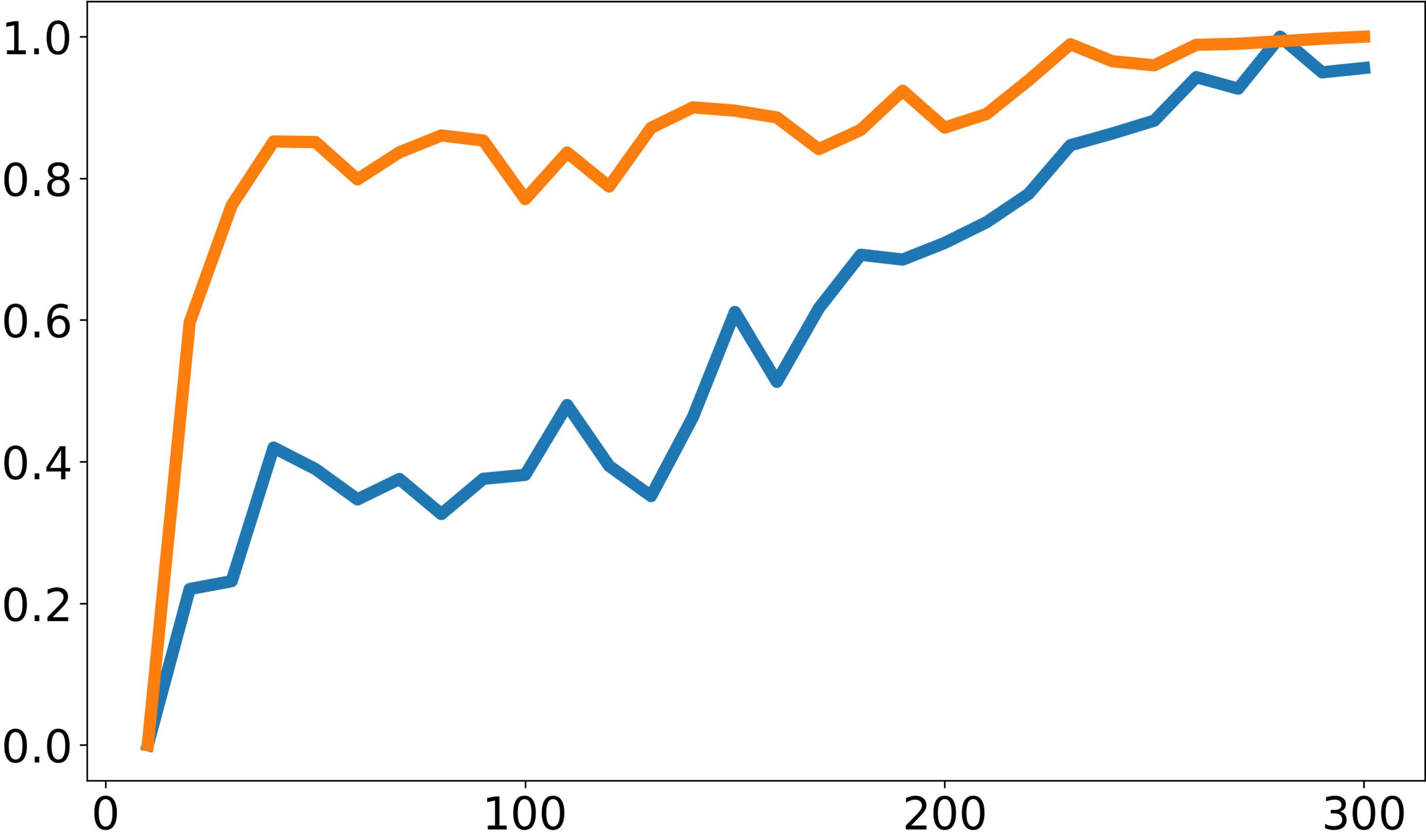}
        \caption{\vicregl}
    \end{subfigure}
    \vspace{0.15cm}
    \hfill
    \begin{subfigure}{0.19\textwidth}
        \centering
        \includegraphics[width=\linewidth]{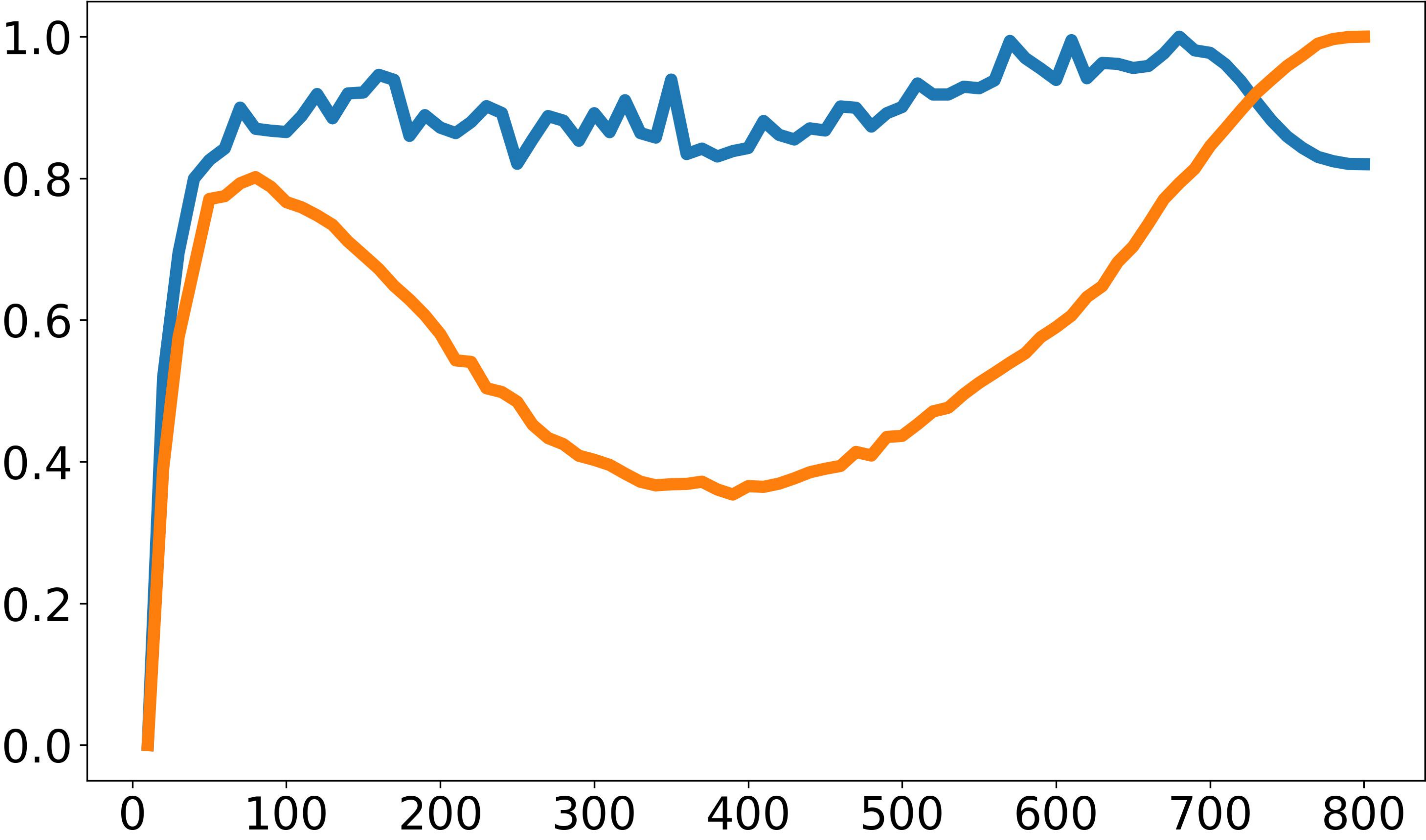}
        \caption{\mugs}
    \end{subfigure}
    \hspace{0.05\textwidth}
    \begin{subfigure}{0.19\textwidth}
        \centering
        \includegraphics[width=\linewidth]{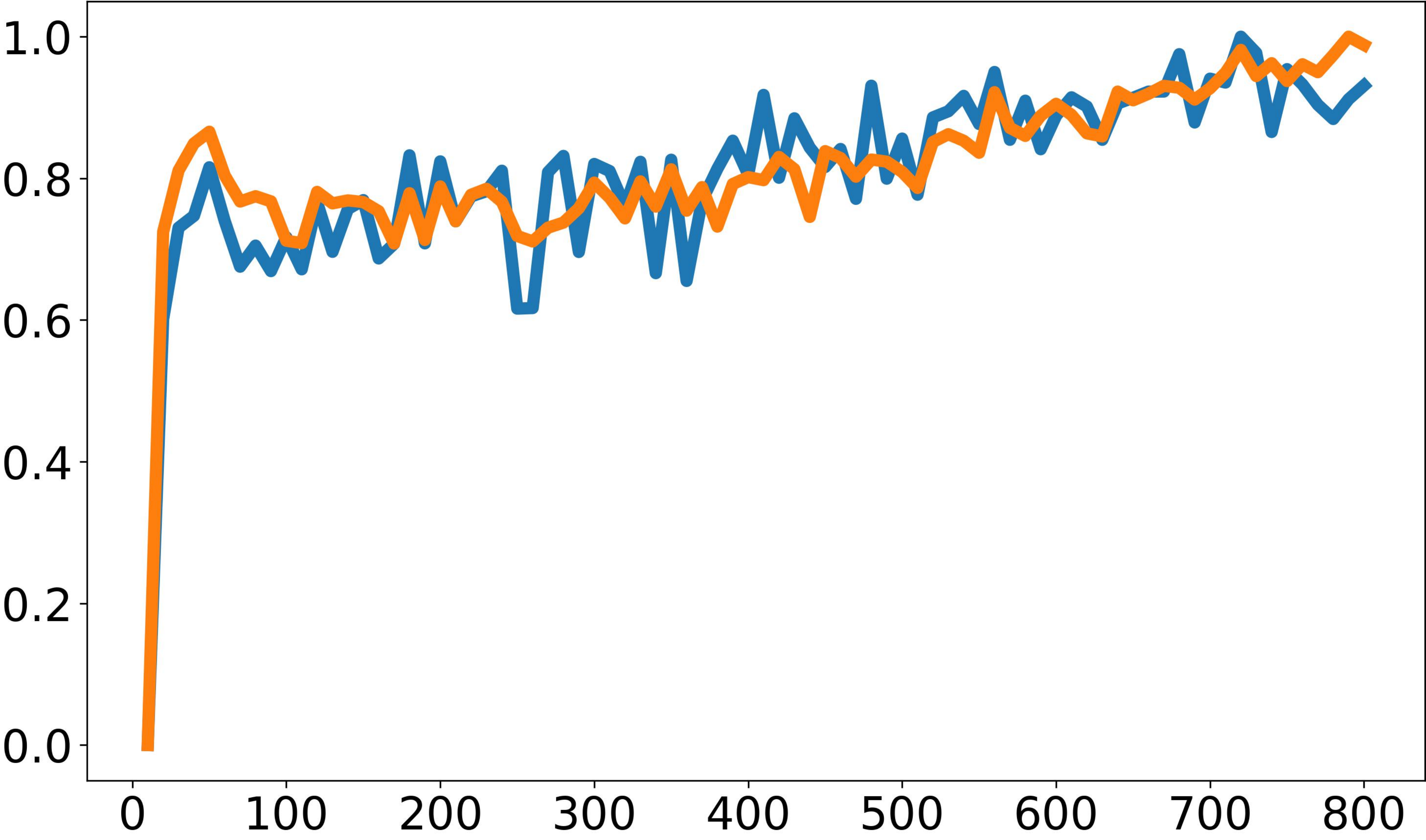}
        \caption{\resa}
    \end{subfigure}
    \hspace{0.05\textwidth}
    \begin{subfigure}{0.19\textwidth}
        \centering
        \includegraphics[width=\linewidth]{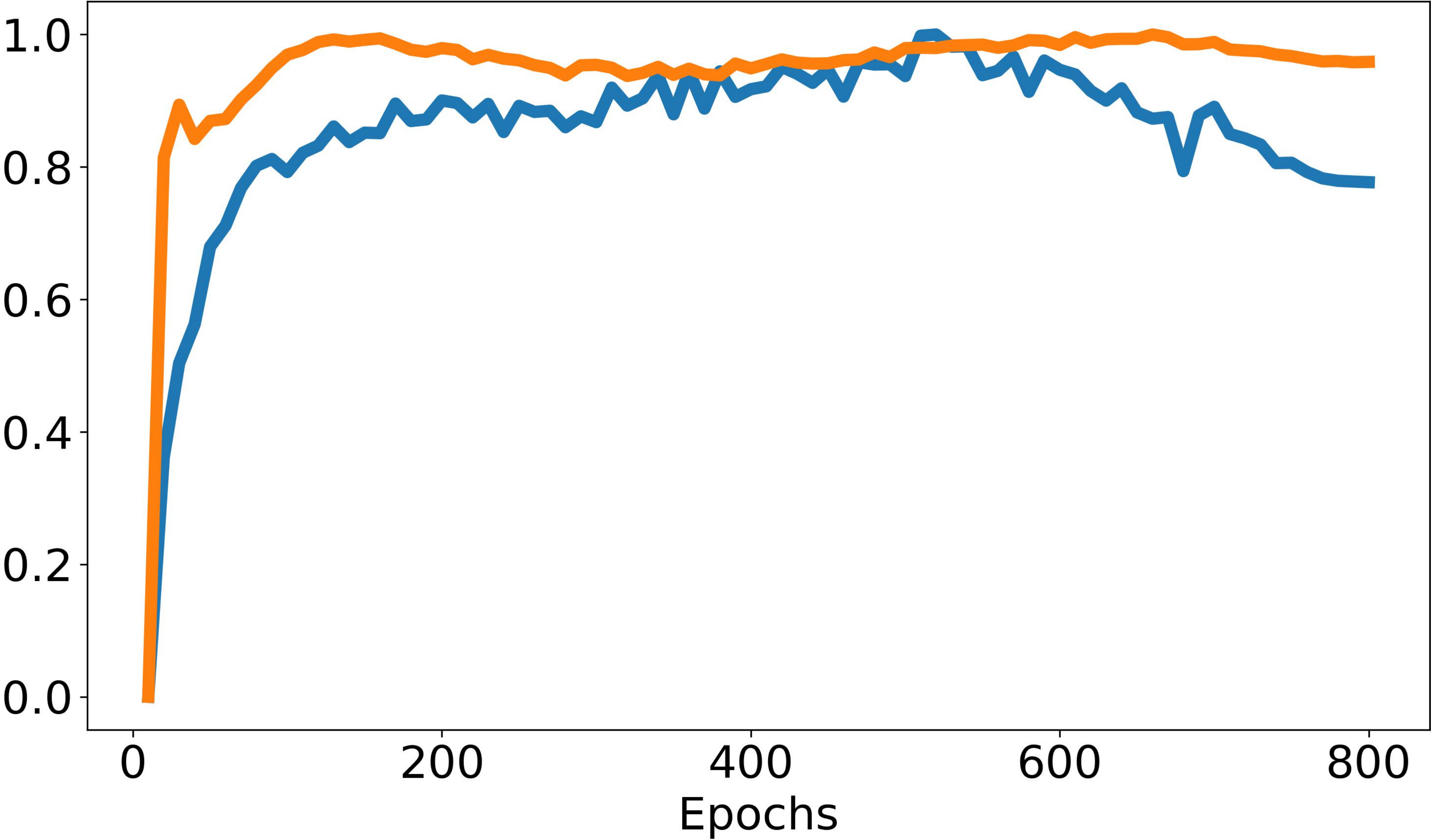}
        \caption{\mae}
    \end{subfigure}
    \hspace{0.05\textwidth}
    \begin{subfigure}{0.19\textwidth}
        \centering
        \includegraphics[width=\linewidth]{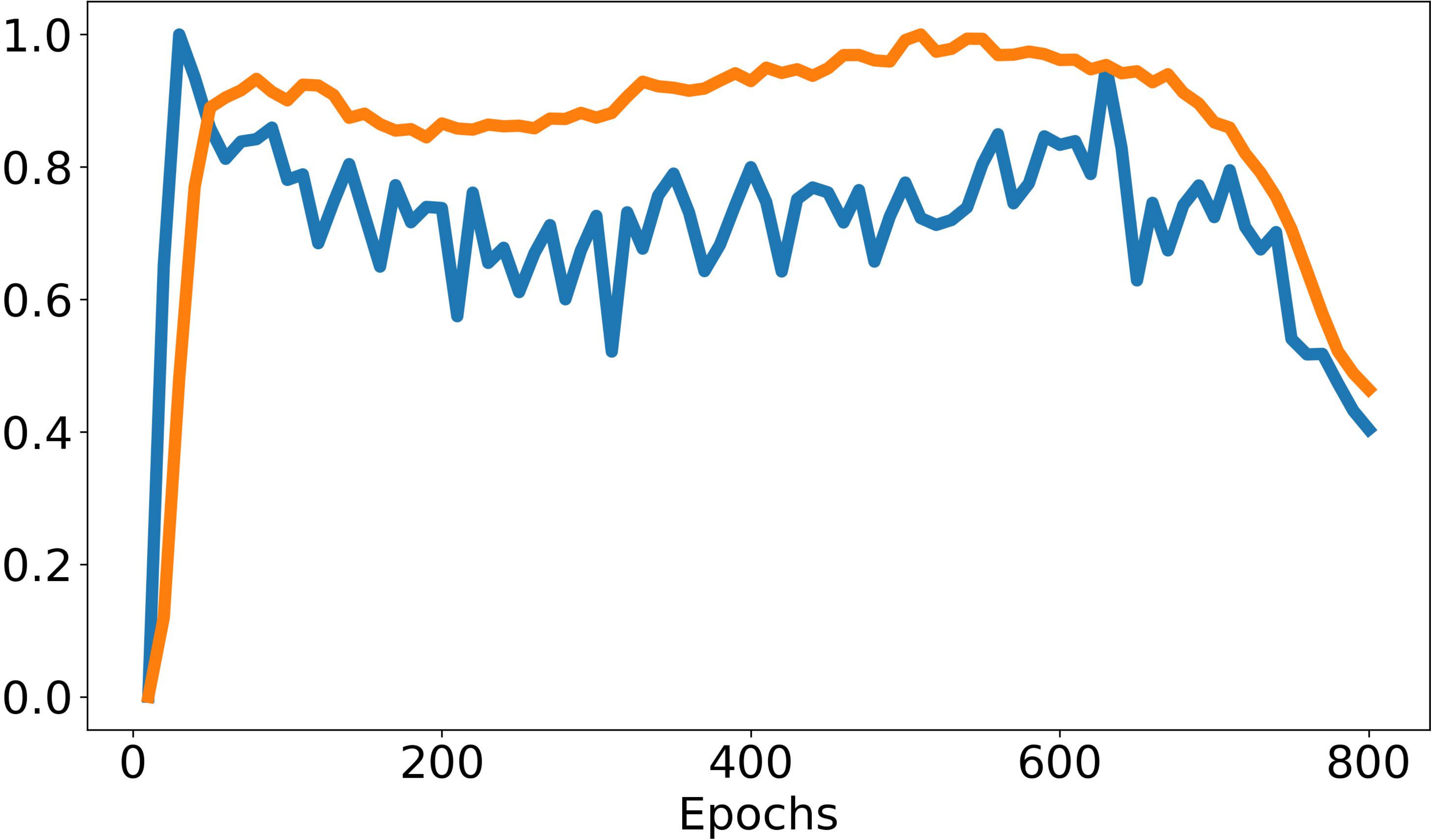}
        \caption{\ijepa}
    \end{subfigure}
    \hfill
    \vspace{-2pt}
    \caption{The proposed DSE metric precisely predicts the downstream performance on the Cityscapes dataset.}
    \label{fig:metric_cityscapes}
\end{figure}  

\subsection{Ablation Study of Different Components.} 
\label{app:ablation}
As shown in Tab. \ref{tab:ablation}, both the class-separability and dimensionality measures exhibit a positive correlation with downstream performance, supporting our theoretical findings. Combining these components further enhances Kendall's $\tau$ coefficient. Notably, when only $M_{dim}$ is used (Line 2 in Tab. \ref{tab:ablation}), it is equivalent to adapting the RankMe metric \cite{Garrido2023Rankme} to dense representations. These results highlight DSE's superior ability to assess dense representation quality.
\begin{table}[t]
    \centering
    \caption{Ablation study of different components of DSE. We report the average Kendall's $\tau$ coefficient across different methods.}
    \label{tab:ablation}
    \resizebox{0.8\linewidth}{!}{
        \begin{tabular}{c|c|ccccc}
    \toprule
            $M_{inter} - M_{intra}$ & $M_{dim}$ & COCO & VOC & ADE & City & Avg\\
            \midrule
             $\checkmark$ && 0.45 & 0.42 & 0.33 & 0.37 & 0.39 \\
             &$\checkmark$ & 0.25 & 0.26 & 0.22 & 0.23 & 0.24 \\
            $\checkmark$ &$\checkmark$ & \textbf{0.58} & \textbf{0.60} & \textbf{0.56} & \textbf{0.49} & \textbf{0.57} \\
            \bottomrule
        \end{tabular}
    }
\end{table}

\subsection{Sensitivity Analysis on Number of Images and Clusters}
\label{app:sensitivity}
While DSE itself does not explicitly depend on hyperparameters, pseudo-label generation introduces unavoidable parameters. We analyze their sensitivity to assess whether DSE's accuracy is contingent on specific choices.  

\textbf{Effect of the number of images.} By default, we only use a small amount of data (2048 images, $\sim 0.16\%$ of the training dataset). This is the minimum batch size $B'$ to meet the requirement of $B' \gg d$ in order to obtain $B'$ independent representations. In this part, we test whether this limited sample size suffices for robust DSE estimation.
Fig.~\ref{fig:sensitivity_data} shows no significant change in DSE with larger datasets, confirming that 2,048 images suffice for accurate performance estimation.  
\begin{figure}[H]
    \centering
    \begin{tikzpicture}
        \begin{axis}[
            scale only axis,
            legend style={
                at={(0.5,1.05)}, 
                anchor=south,
                legend columns=4, 
                /tikz/every even column/.append style={column sep=1cm},
                font=\smaller, 
                draw=lightgray, 
                fill=white, 
                /pgf/number format/1000 sep={}
            },
            legend cell align={left},
            xlabel={}, ylabel={}, 
            xmin=0, xmax=1, ymin=0, ymax=1,
            axis lines=none, 
        ]
            \addlegendimage{color=matplotlibblue, mark=none, line width=1pt}
            \addlegendentry{2048 images}
            \addlegendimage{color=matplotliborange, mark=none, line width=1pt}
            \addlegendentry{7168 images}
            \addlegendimage{color=matplotlibgreen, mark=none, line width=1pt}
            \addlegendentry{13312 images}
            \addlegendimage{color=matplotlibred, mark=none, line width=1pt}
            \addlegendentry{26624 images}
        \end{axis}
    \end{tikzpicture}
    
    \begin{subfigure}{0.24\textwidth}
        \centering
        \includegraphics[width=\linewidth]{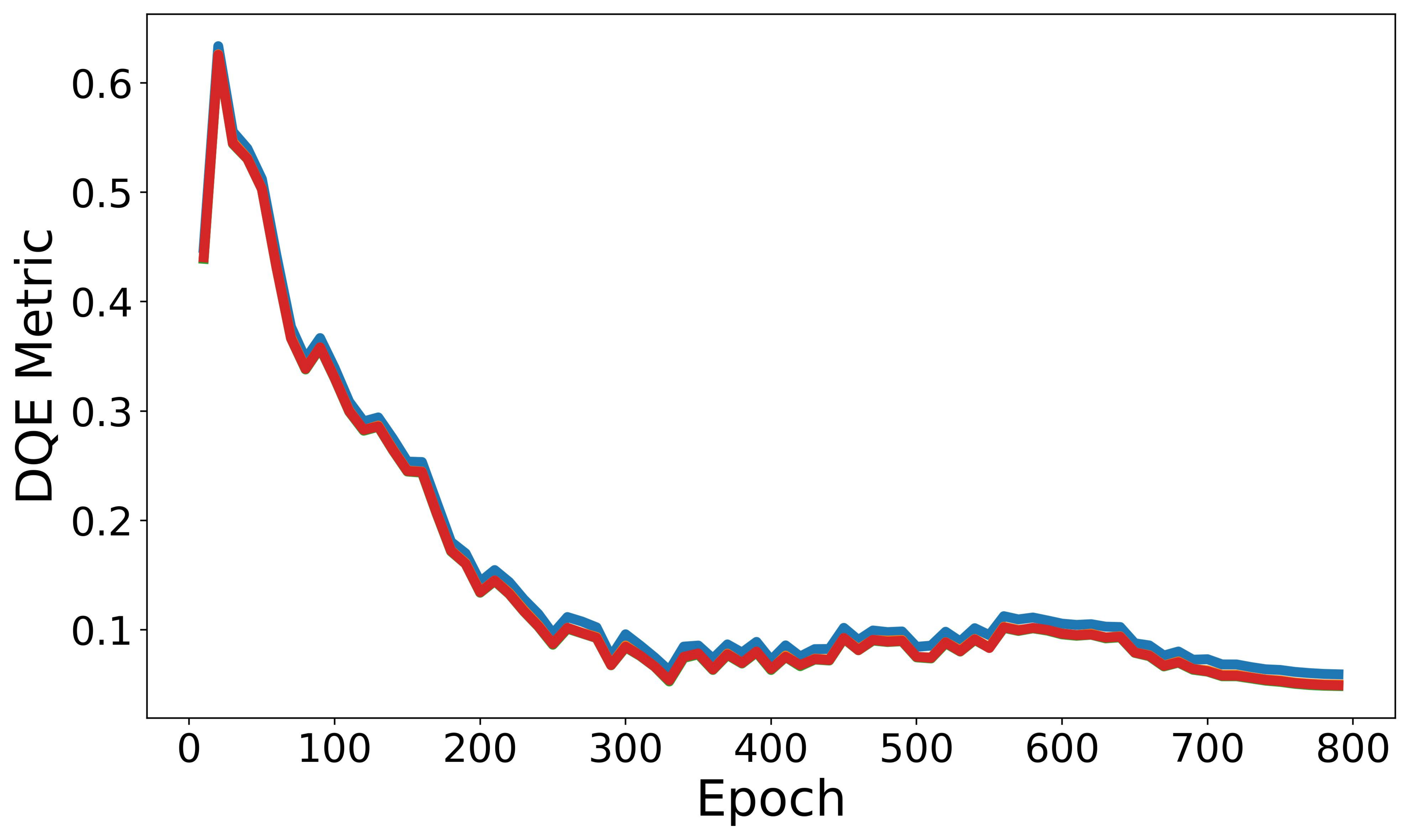}
        \caption{\moco} 
    \end{subfigure}
    \hfill
    \begin{subfigure}{0.24\textwidth}
        \centering
        \includegraphics[width=\linewidth]{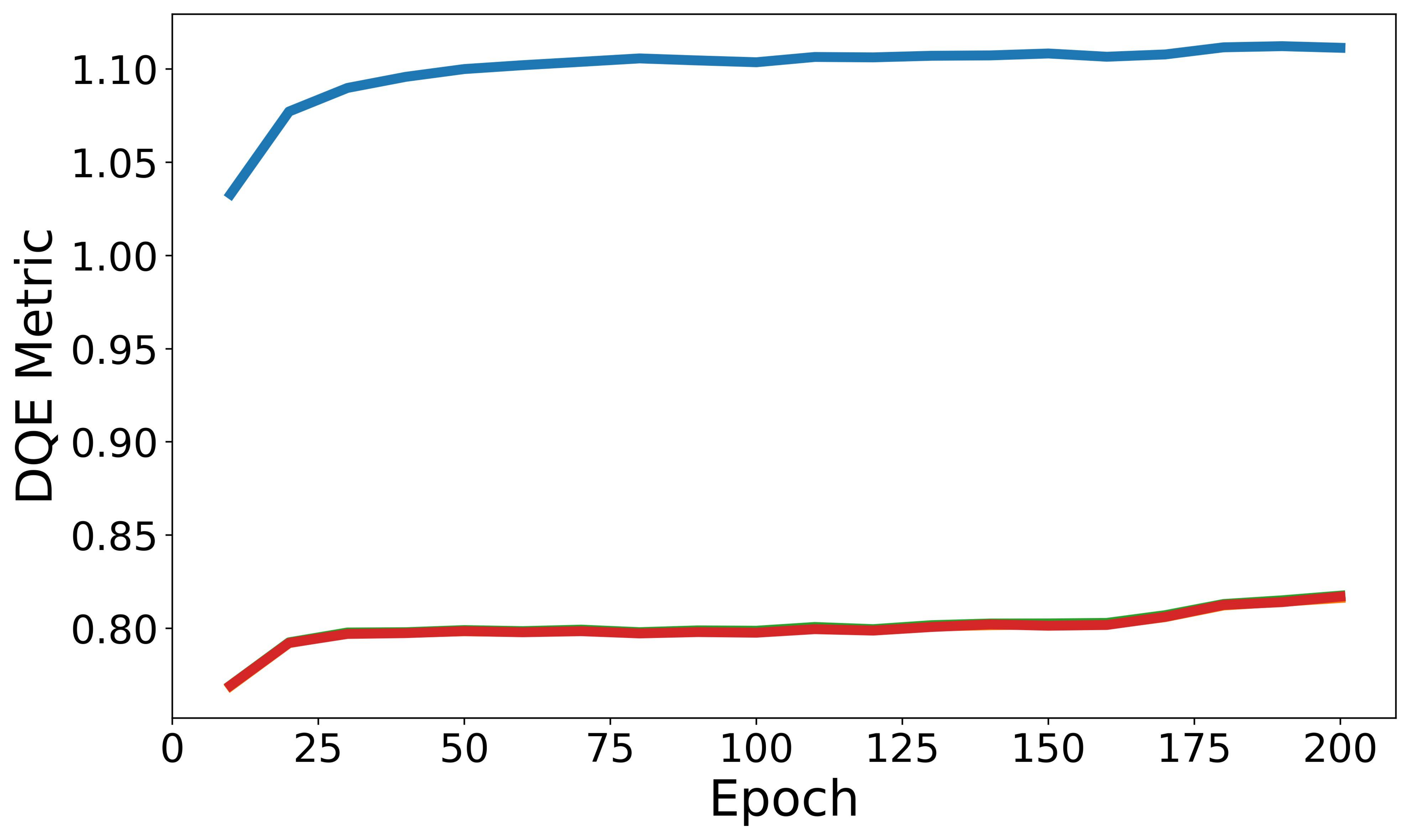} 
        \caption{\densecl}
    \end{subfigure}
    \hfill
    \begin{subfigure}{0.24\textwidth}
        \centering
        \includegraphics[width=\linewidth]{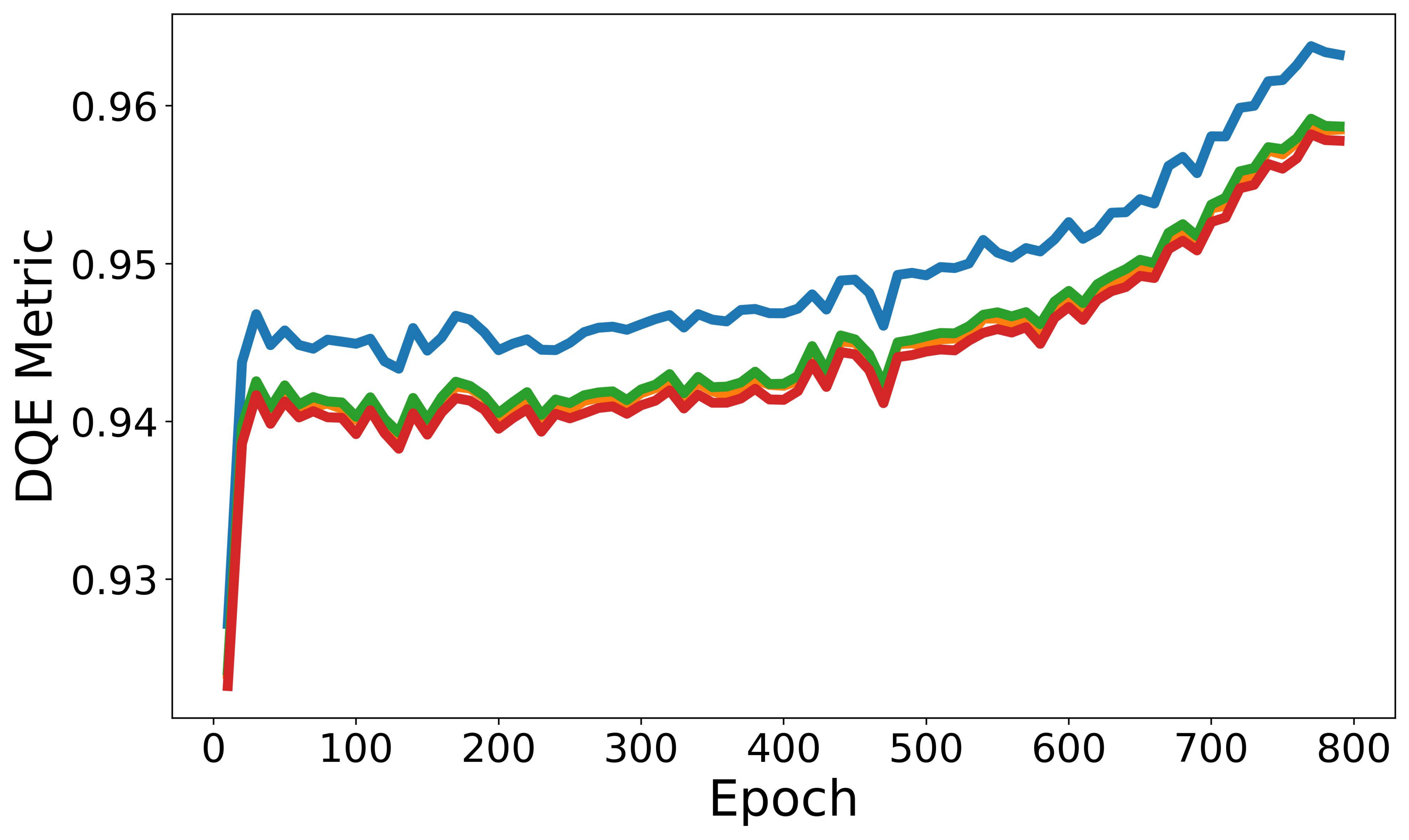}
        \caption{\mec}
    \end{subfigure}
    \hfill
    \begin{subfigure}{0.24\textwidth}
        \centering
        \includegraphics[width=\linewidth]{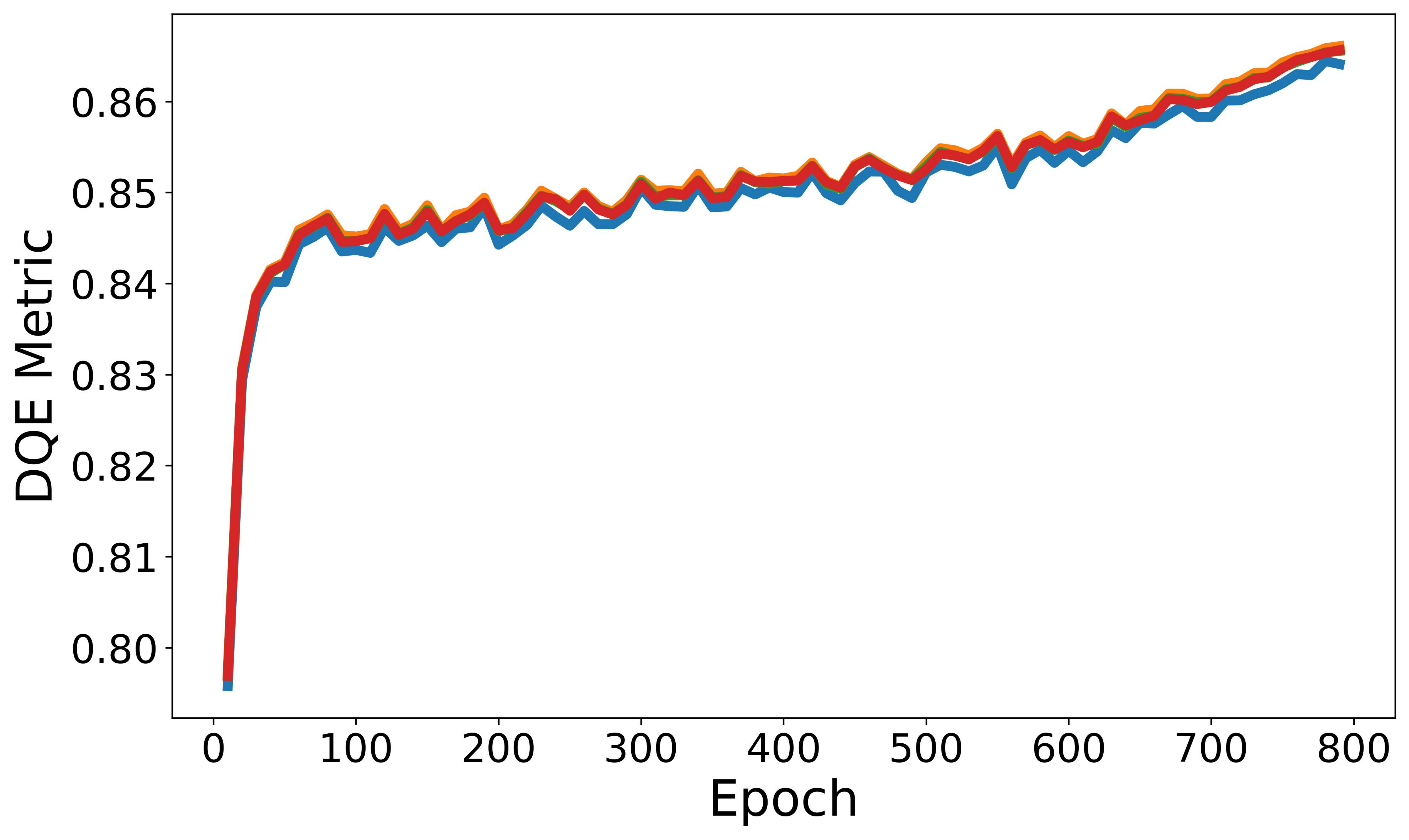}
        \caption{\simsiam}
    \end{subfigure}
    \vspace{0.15cm} 
    \begin{subfigure}{0.24\textwidth}
        \centering
        \includegraphics[width=\linewidth]{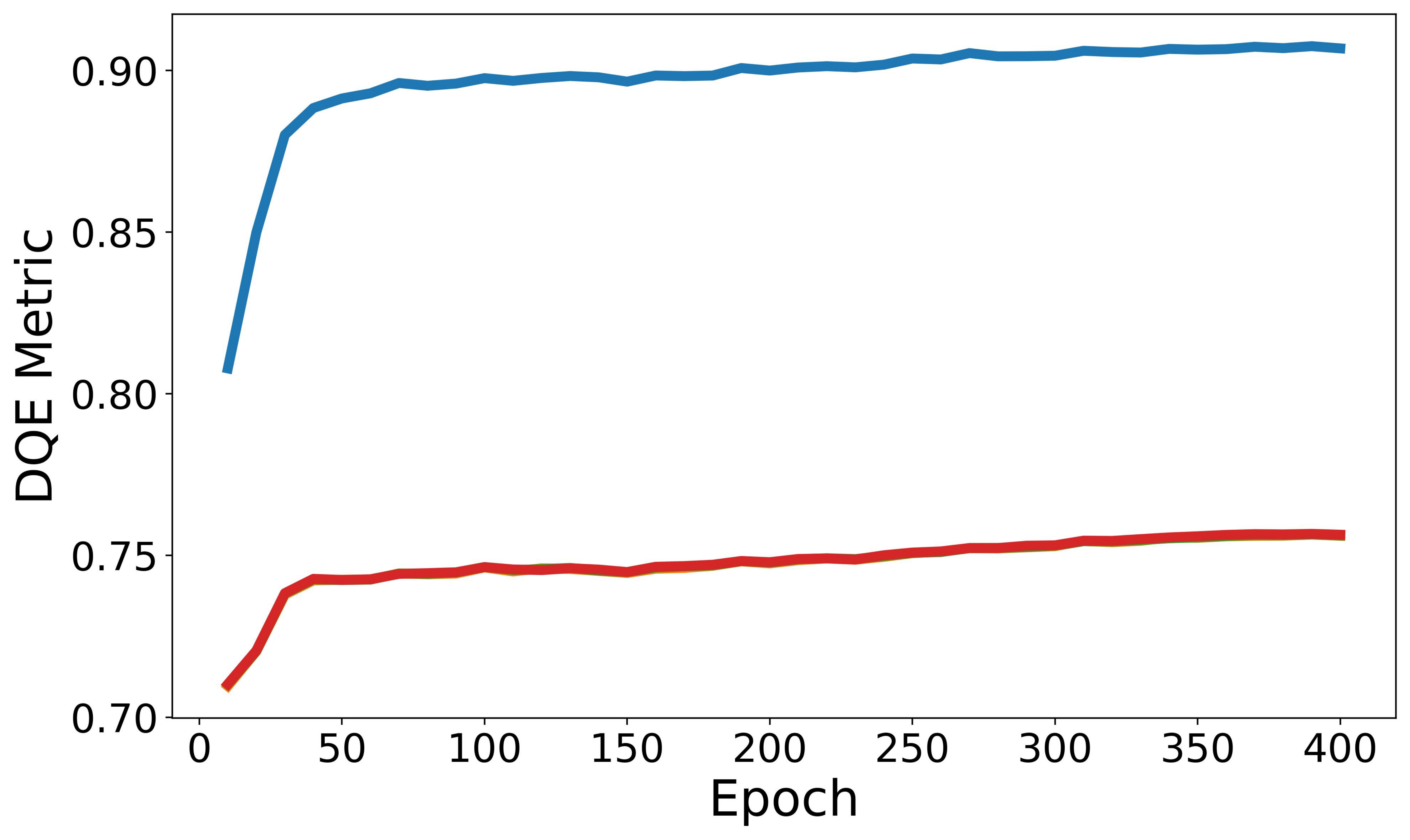} 
        \caption{\swav}
    \end{subfigure}
    \hfill
    \begin{subfigure}{0.24\textwidth}
        \centering
        \includegraphics[width=\linewidth]{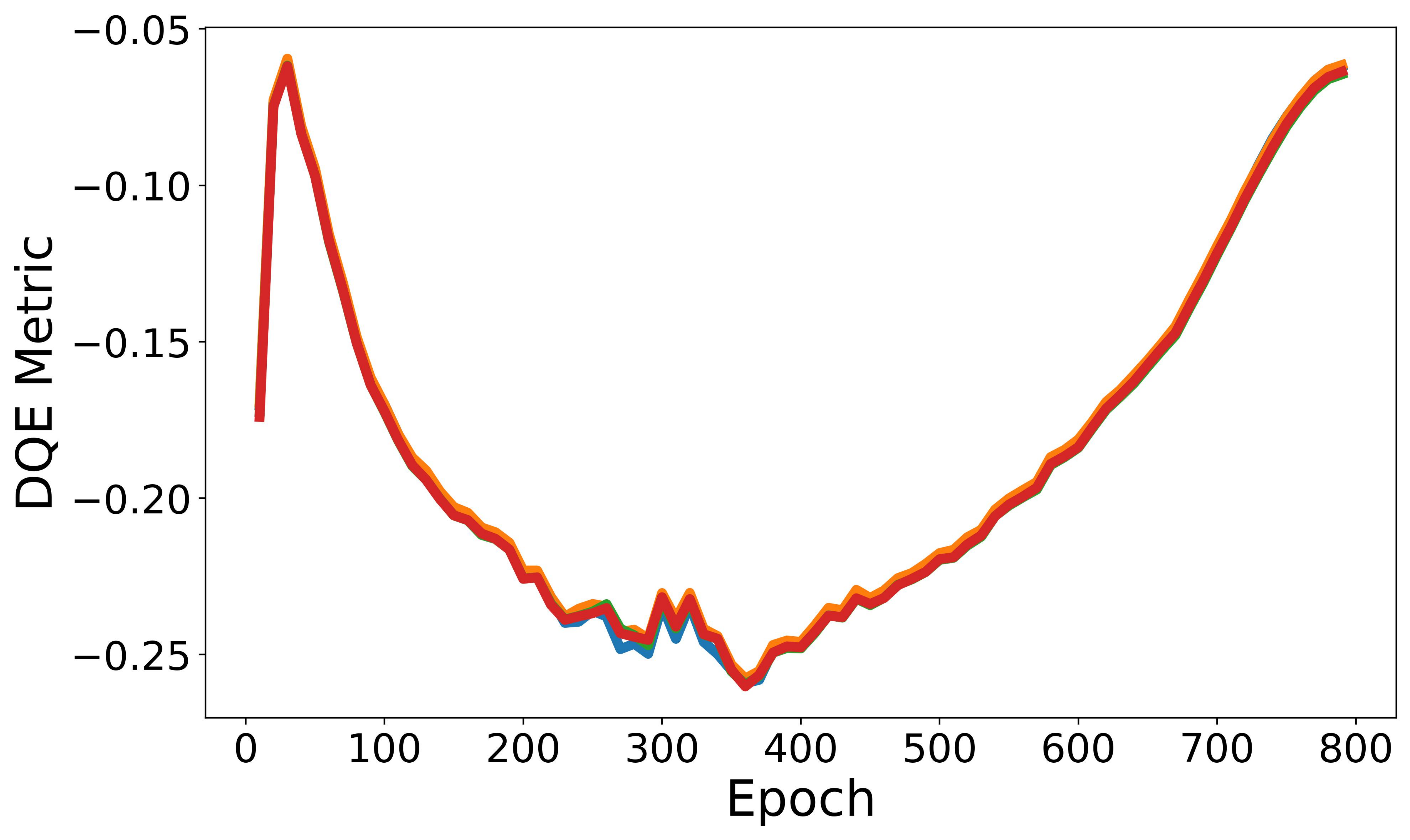}
        \caption{\dino}
    \end{subfigure}
    \hfill
    \begin{subfigure}{0.24\textwidth}
        \centering
        \includegraphics[width=\linewidth]{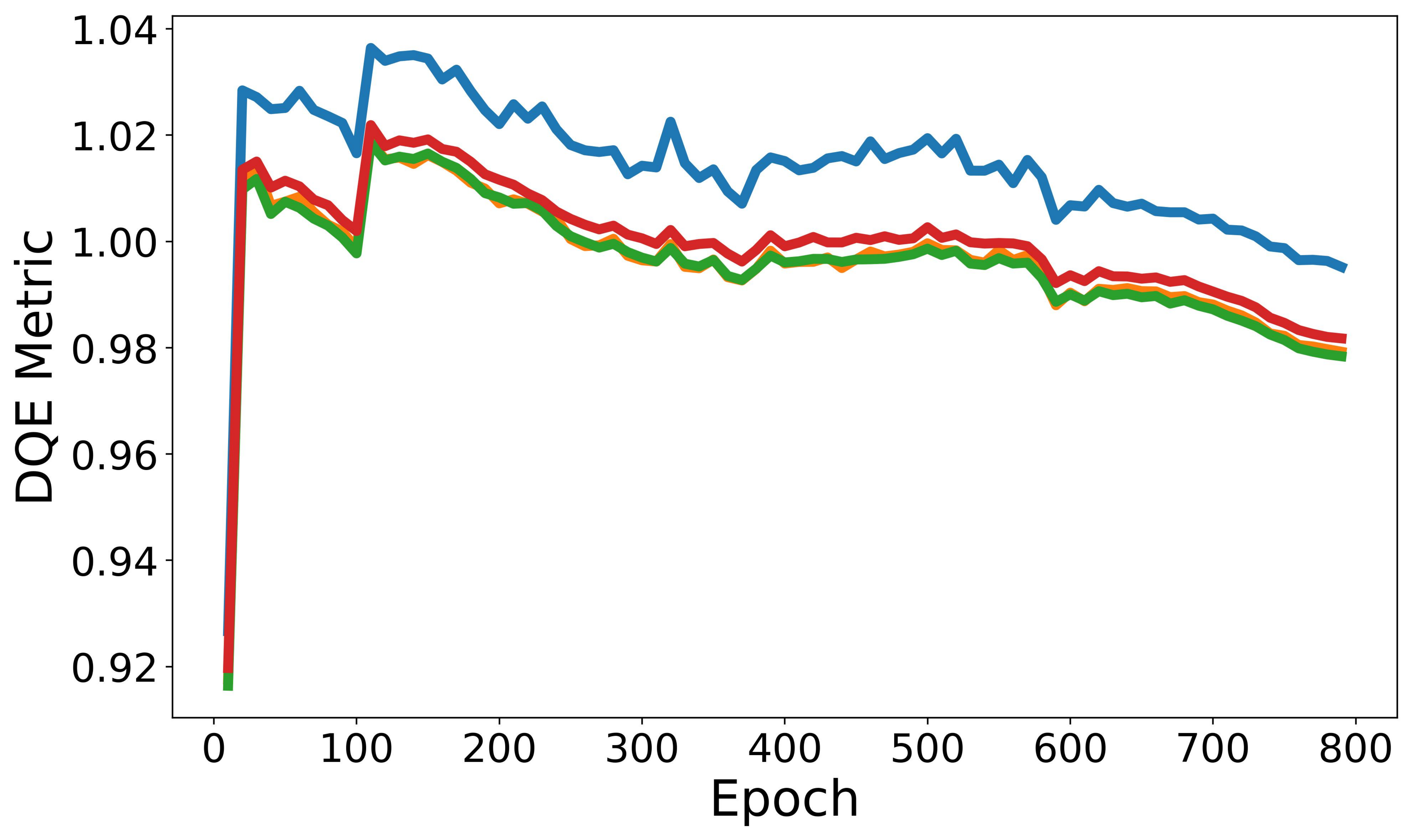}
        \caption{\esvit}
    \end{subfigure}
    \hfill
    \begin{subfigure}{0.24\textwidth}
        \centering
        \includegraphics[width=\linewidth]{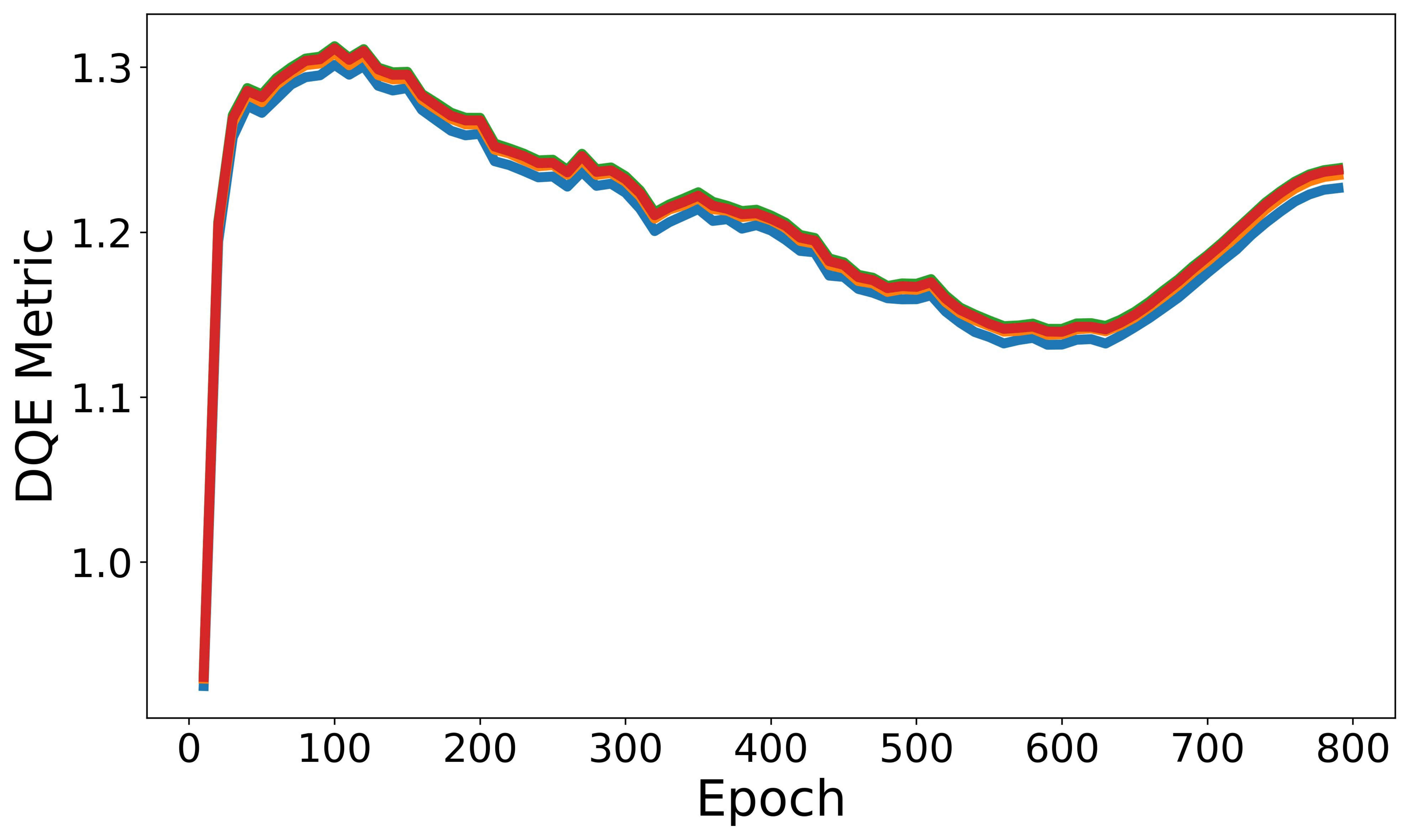}
        \caption{\ibot}
    \end{subfigure}
    \vspace{0.15cm} 
    \hfill 
    \begin{subfigure}{0.24\textwidth}
        \centering
        \includegraphics[width=\linewidth]{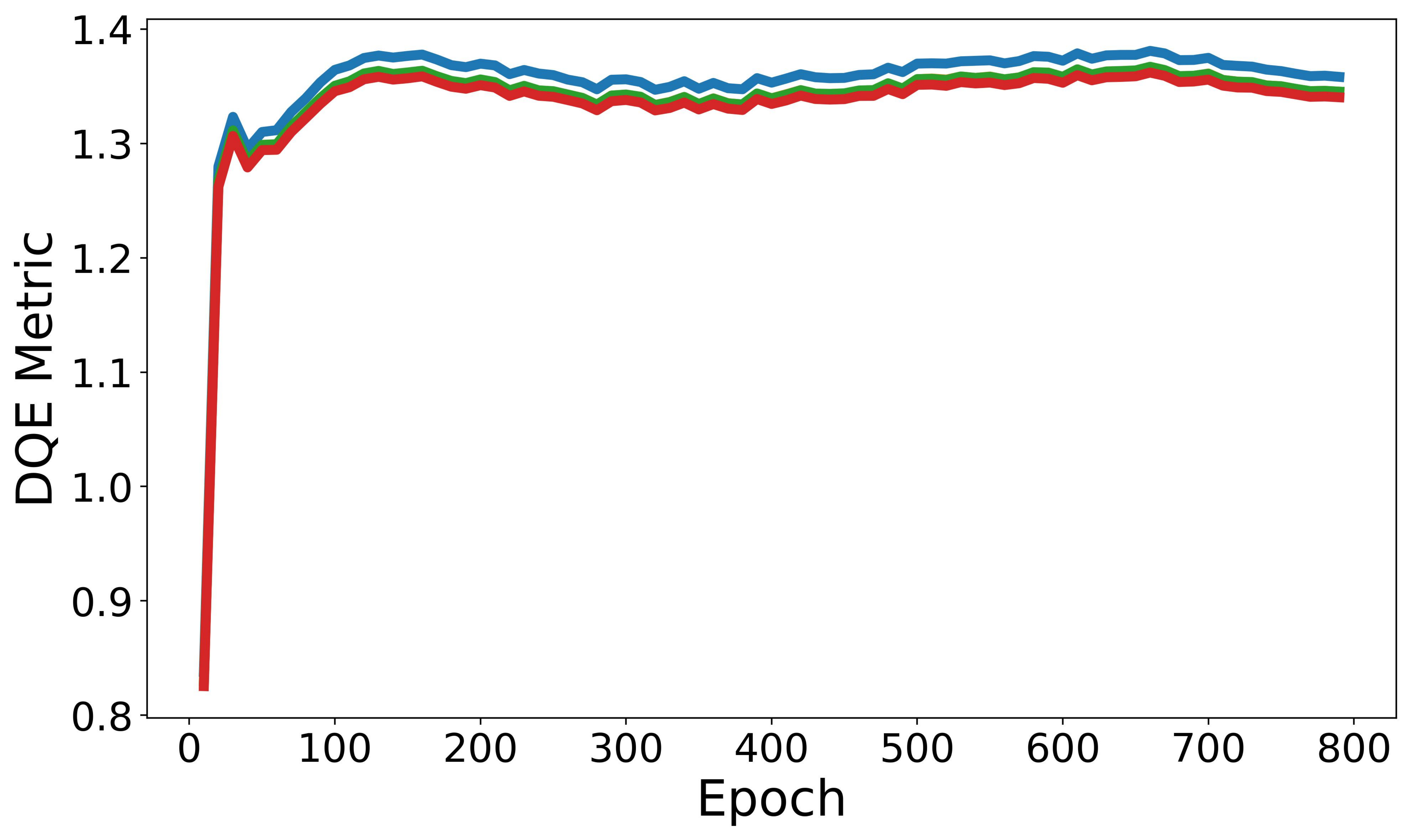}
        \caption{\mae}
    \end{subfigure}
    \hspace{0.0\textwidth} 
    \begin{subfigure}{0.24\textwidth}
        \centering
        \includegraphics[width=\linewidth]{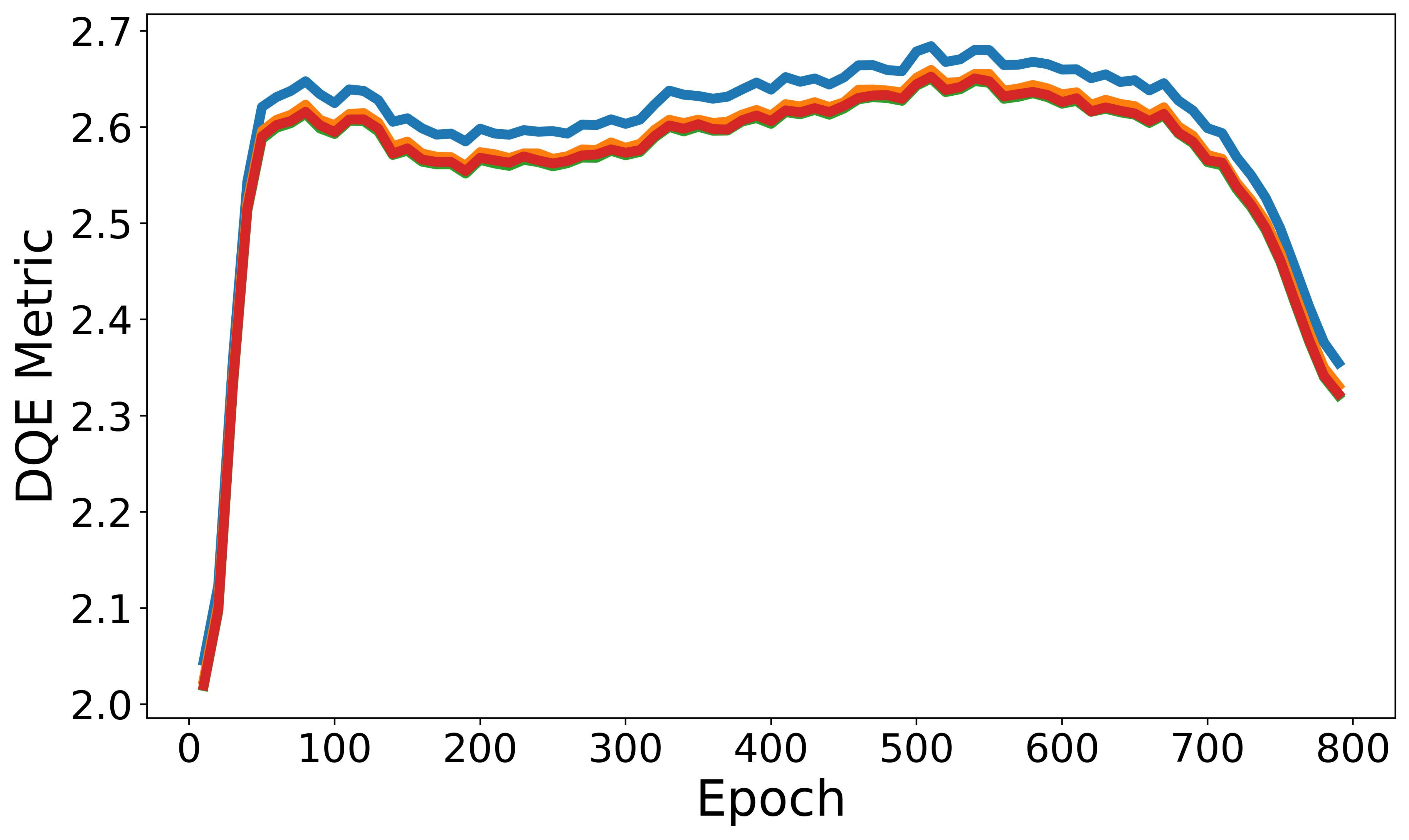}
        \caption{\ijepa}
    \end{subfigure}
    \hfill 
    \vspace{-2pt}
    \caption{Sensitivity analysis of different numbers of images used for the DSE calculation.}
    \label{fig:sensitivity_data}
\end{figure}

\textbf{Effect of the number of clusters.}
In this part, we examine the impact of the choice of $k$. The number of clusters is a key parameter for estimating representation quality because the actual number of classes in an image is unknown without labels. As a result, the estimated class-related metrics may be biased if the number of clusters does not match the true number of classes. Based on the earlier theoretical analysis, we recommend choosing $k$ slightly larger than the expected number of classes. Since ImageNet is curated, we assume that the average number of classes per image (including the background as one class) is about 2 or 3. Therefore, we explore the effect of setting $k$ between 3 and 5.
Fig.~\ref{fig:sensitivity_cluster} shows that while absolute DSE values vary with \(k\), performance trends remain consistent, preserving DSE's utility of predicting the relative performance between models.
\begin{figure}[H]
    \centering
    \begin{tikzpicture}
        \begin{axis}[
            scale only axis,
            legend style={
                at={(0.5,1.05)}, 
                anchor=south,
                legend columns=3, 
                /tikz/every even column/.append style={column sep=1cm},
                font=\smaller, 
                draw=lightgray, 
                fill=white, 
                /pgf/number format/1000 sep={} 
            },
            legend cell align={left},
            xlabel={}, ylabel={}, 
            xmin=0, xmax=1, ymin=0, ymax=1, 
            axis lines=none, 
        ]
            \addlegendimage{color=matplotlibblue, mark=none, line width=1pt}
            \addlegendentry{3 clusters}
            \addlegendimage{color=matplotliborange, mark=none, line width=1pt}
            \addlegendentry{4 clusters}
            \addlegendimage{color=matplotlibgreen, mark=none, line width=1pt}
            \addlegendentry{5 clusters}
        \end{axis}
    \end{tikzpicture}
    
    \begin{subfigure}{0.24\textwidth}
        \centering
        \includegraphics[width=\linewidth]{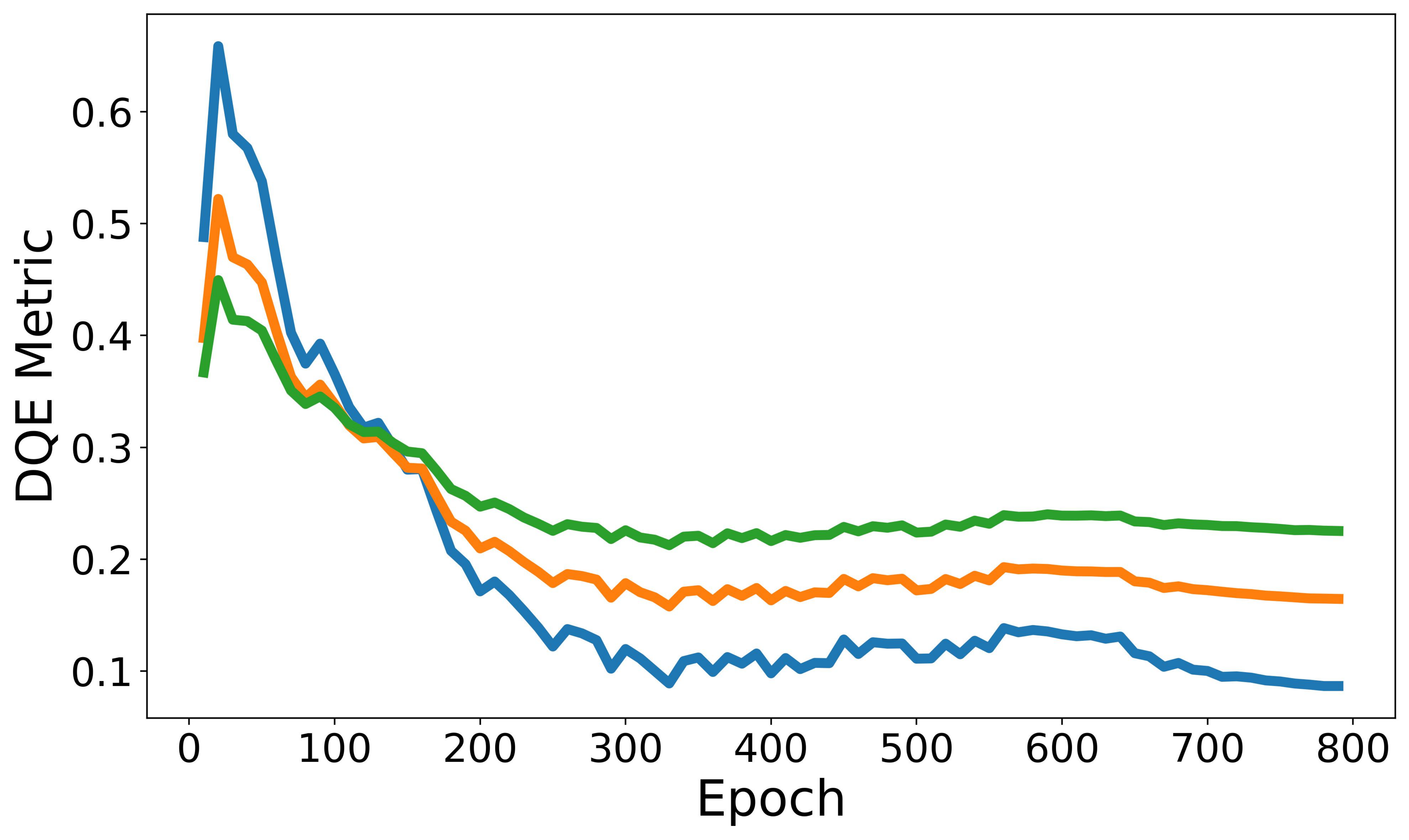}
        \caption{\moco}
    \end{subfigure}
    \hfill
    \begin{subfigure}{0.24\textwidth}
        \centering
        \includegraphics[width=\linewidth]{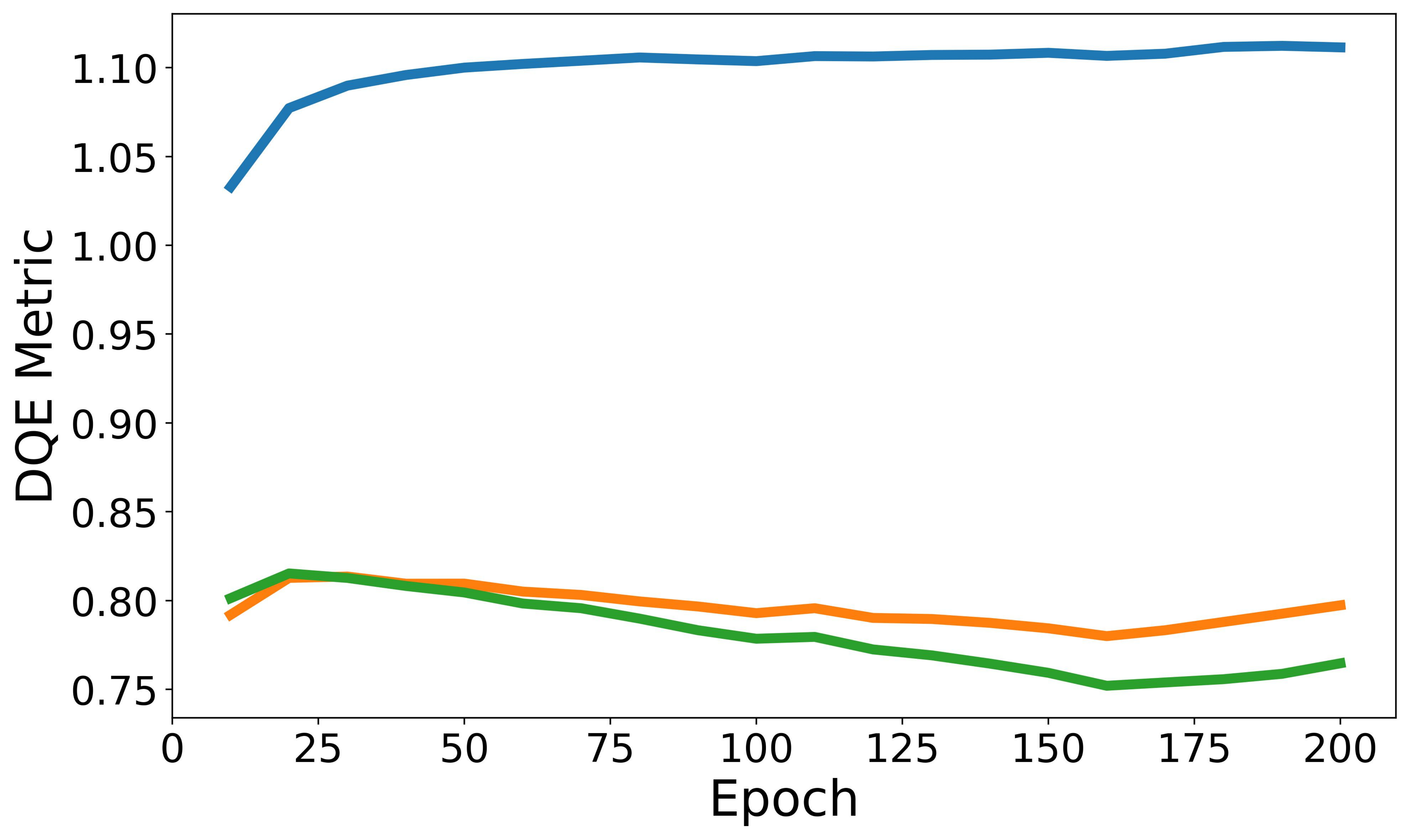} 
        \caption{\densecl}
    \end{subfigure}
    \hfill
    \begin{subfigure}{0.24\textwidth}
        \centering
        \includegraphics[width=\linewidth]{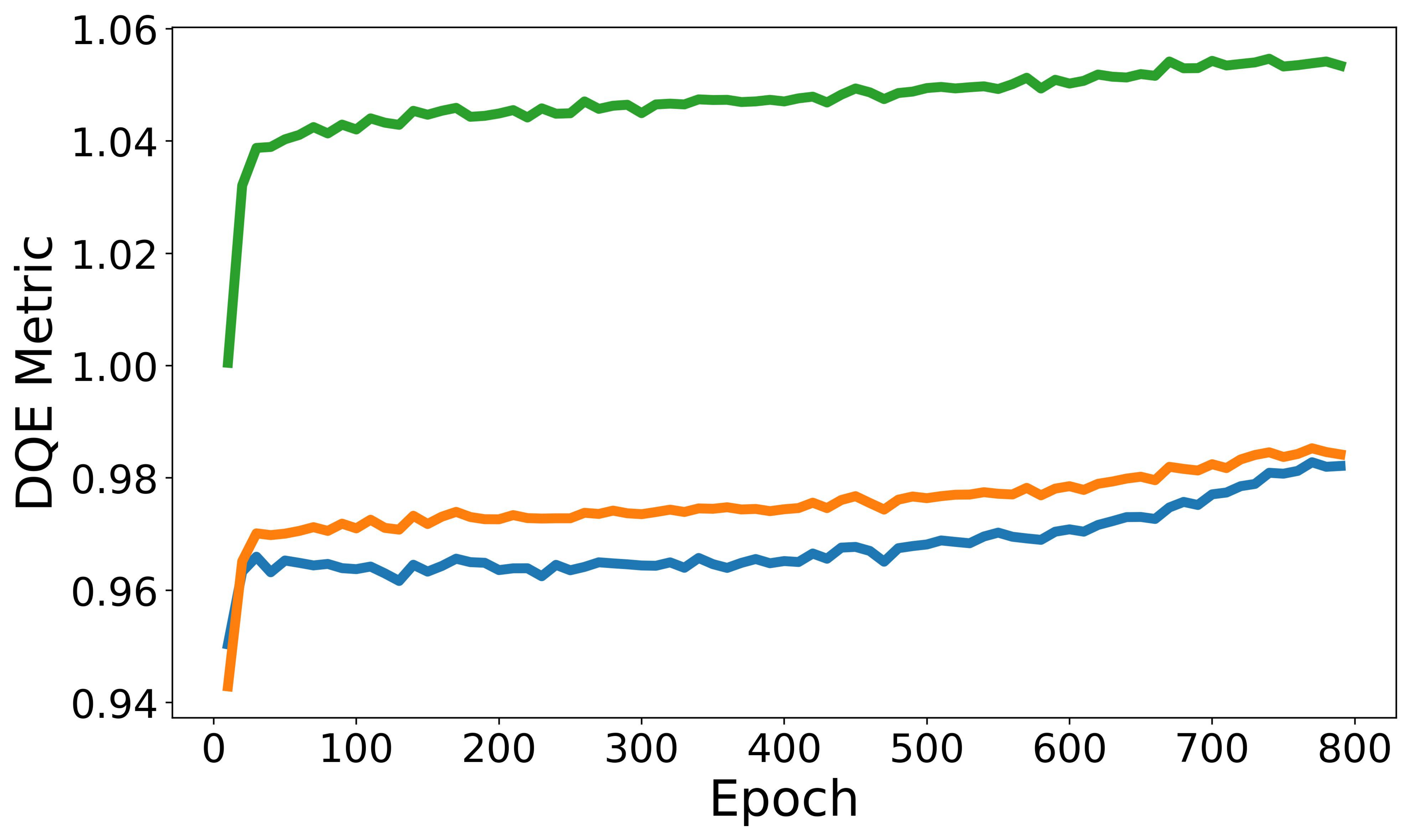}
        \caption{\mec}
    \end{subfigure}
    \hfill
    \begin{subfigure}{0.24\textwidth}
        \centering
        \includegraphics[width=\linewidth]{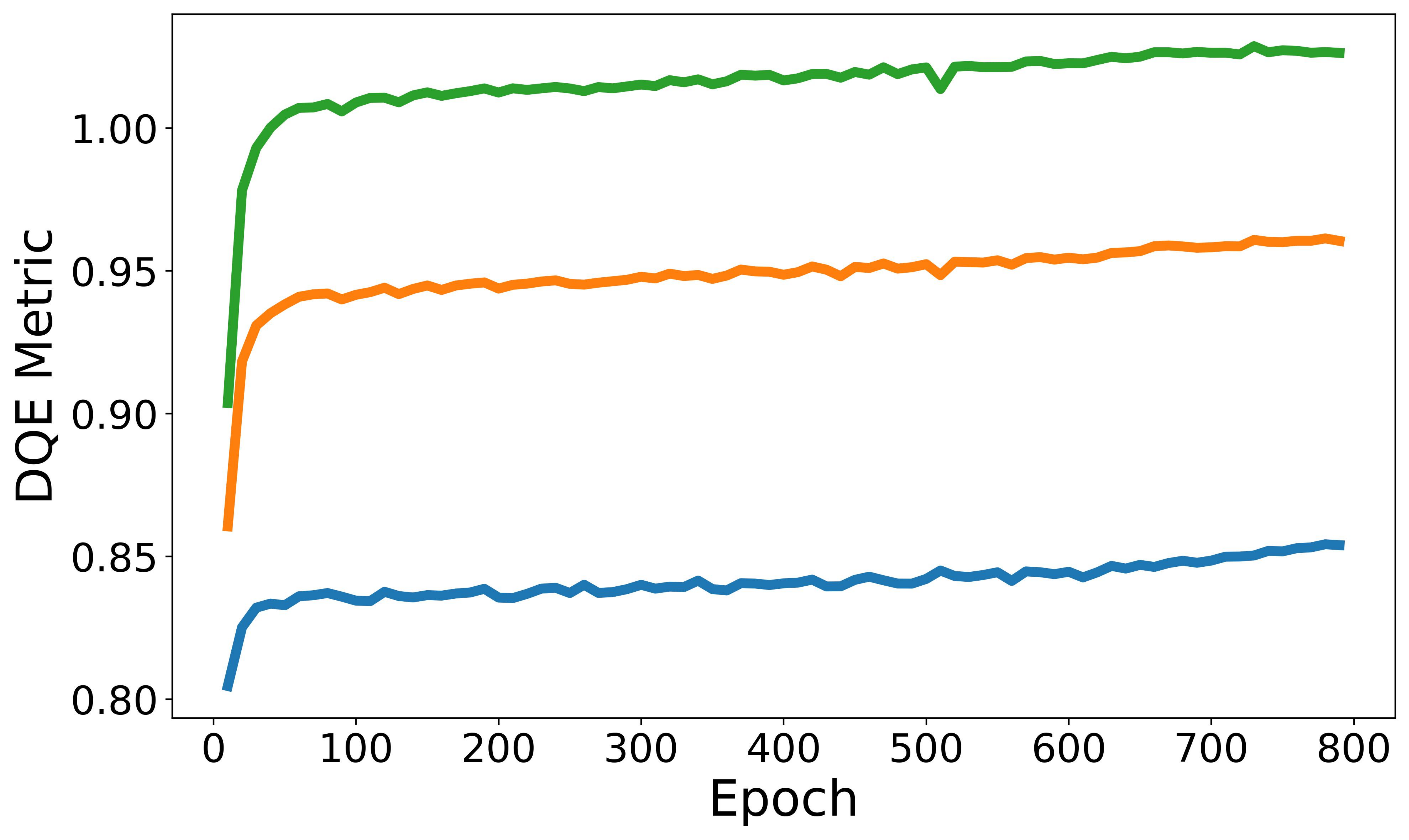}
        \caption{\simsiam}
    \end{subfigure}
    \vspace{0.15cm} 
    \begin{subfigure}{0.24\textwidth}
        \centering
        \includegraphics[width=\linewidth]{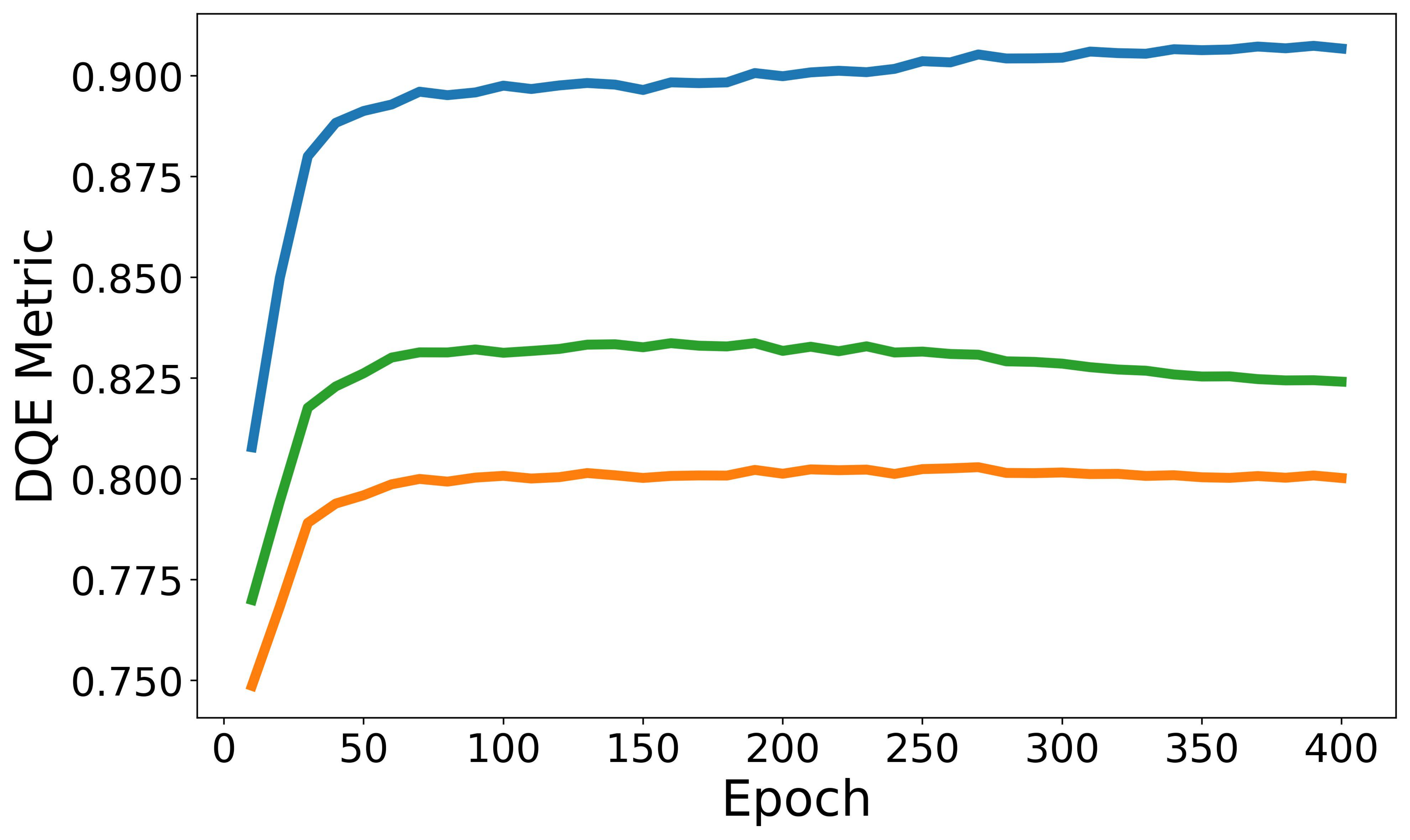} 
        \caption{\swav}
    \end{subfigure}
    \hfill
    \begin{subfigure}{0.24\textwidth}
        \centering
        \includegraphics[width=\linewidth]{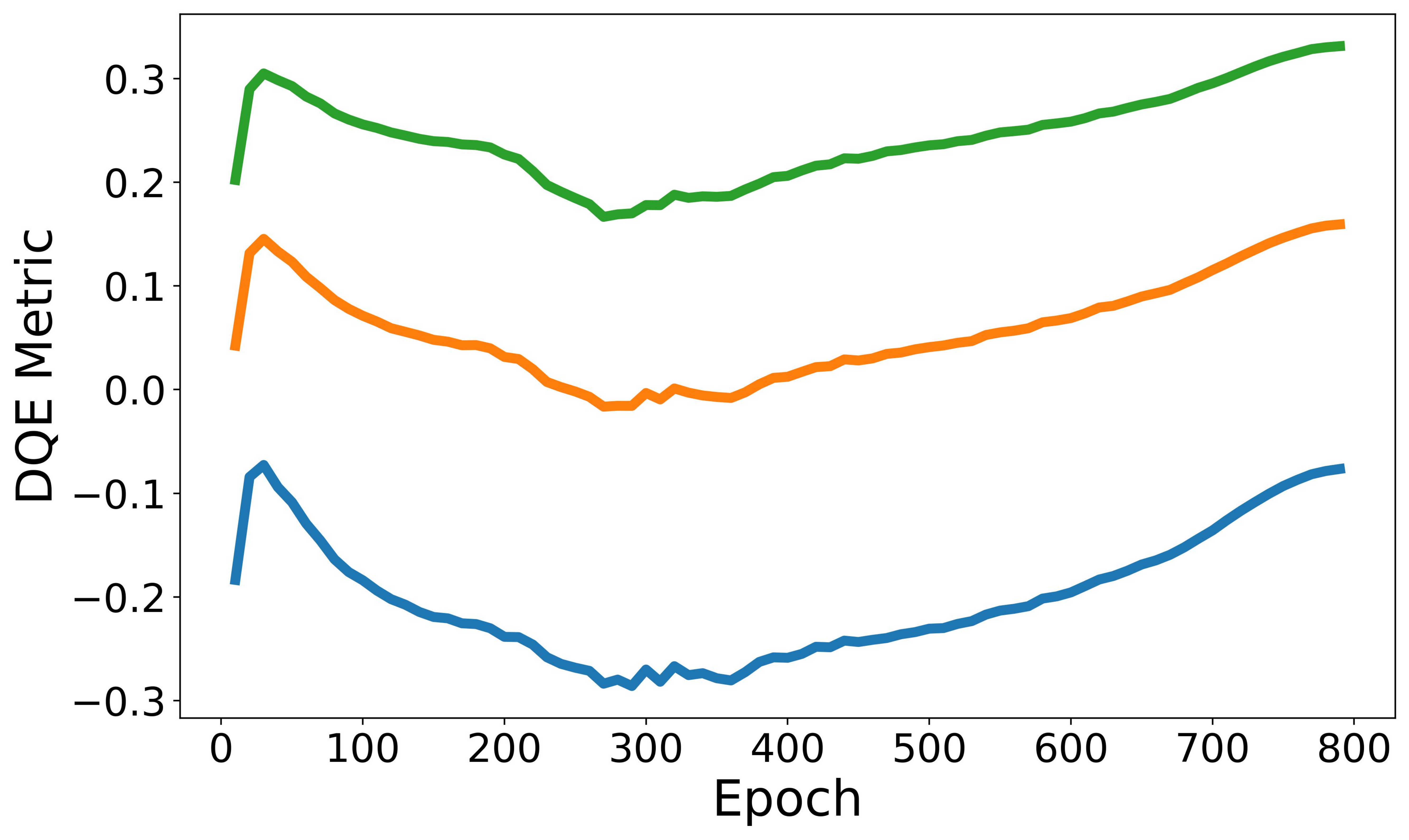}
        \caption{\dino}
    \end{subfigure}
    \hfill
    \begin{subfigure}{0.24\textwidth}
        \centering
        \includegraphics[width=\linewidth]{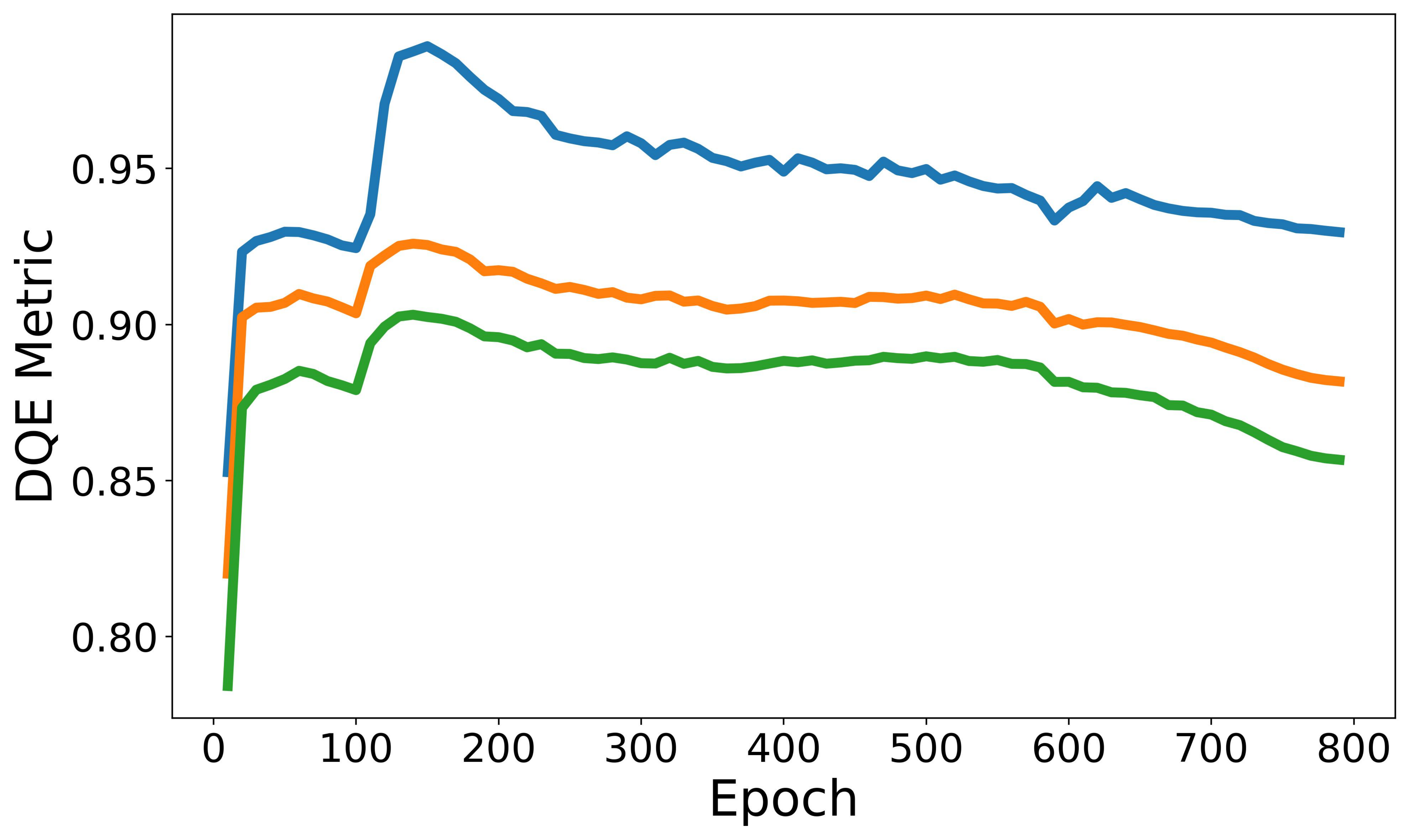}
        \caption{\esvit}
    \end{subfigure}
    \hfill
    \begin{subfigure}{0.24\textwidth}
        \centering
        \includegraphics[width=\linewidth]{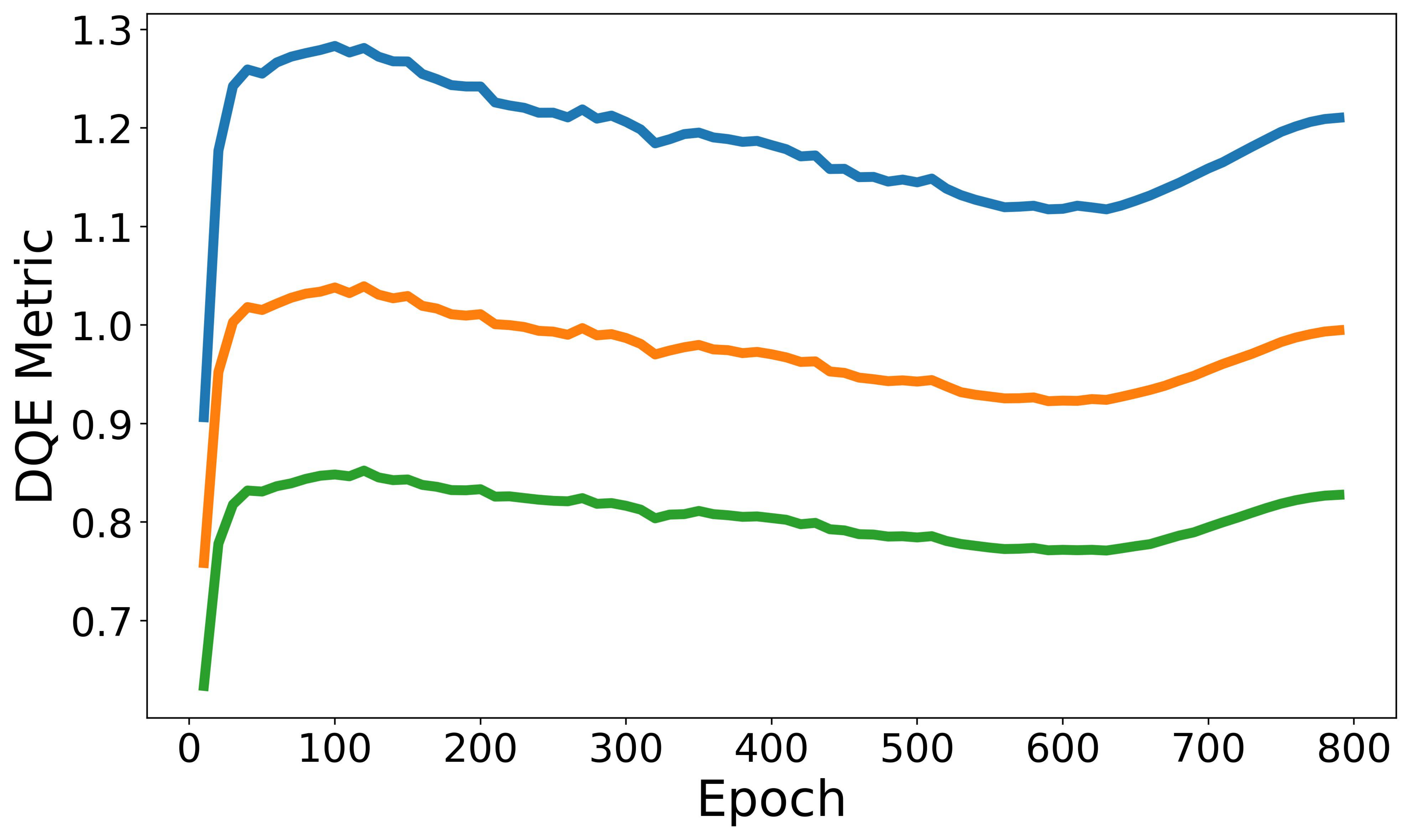}
        \caption{\ibot}
    \end{subfigure}
    \vspace{0.15cm} 
    \hfill 
    \begin{subfigure}{0.24\textwidth}
        \centering
        \includegraphics[width=\linewidth]{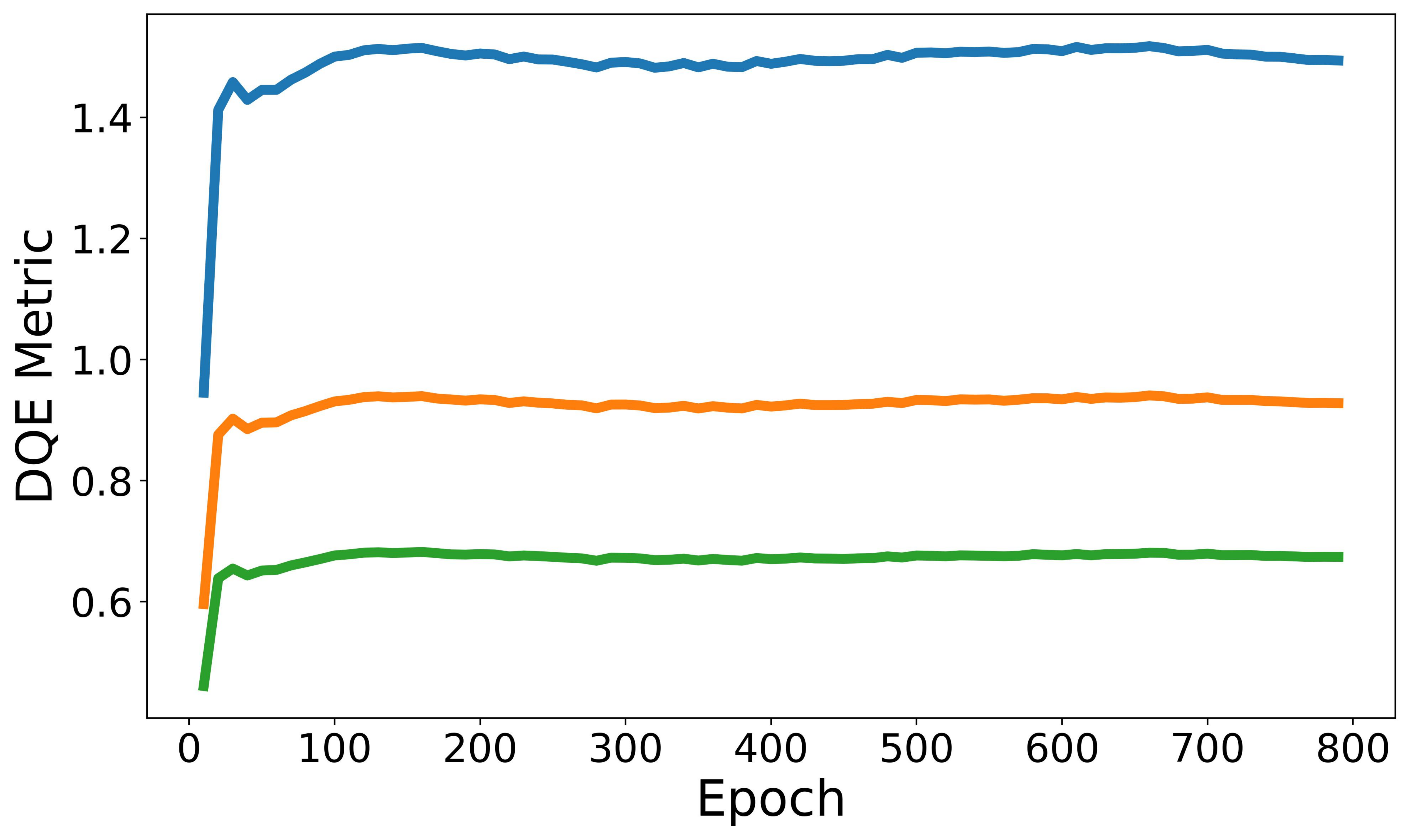}
        \caption{\mae}
    \end{subfigure}
    \hspace{0.0\textwidth} 
    \begin{subfigure}{0.24\textwidth}
        \centering
        \includegraphics[width=\linewidth]{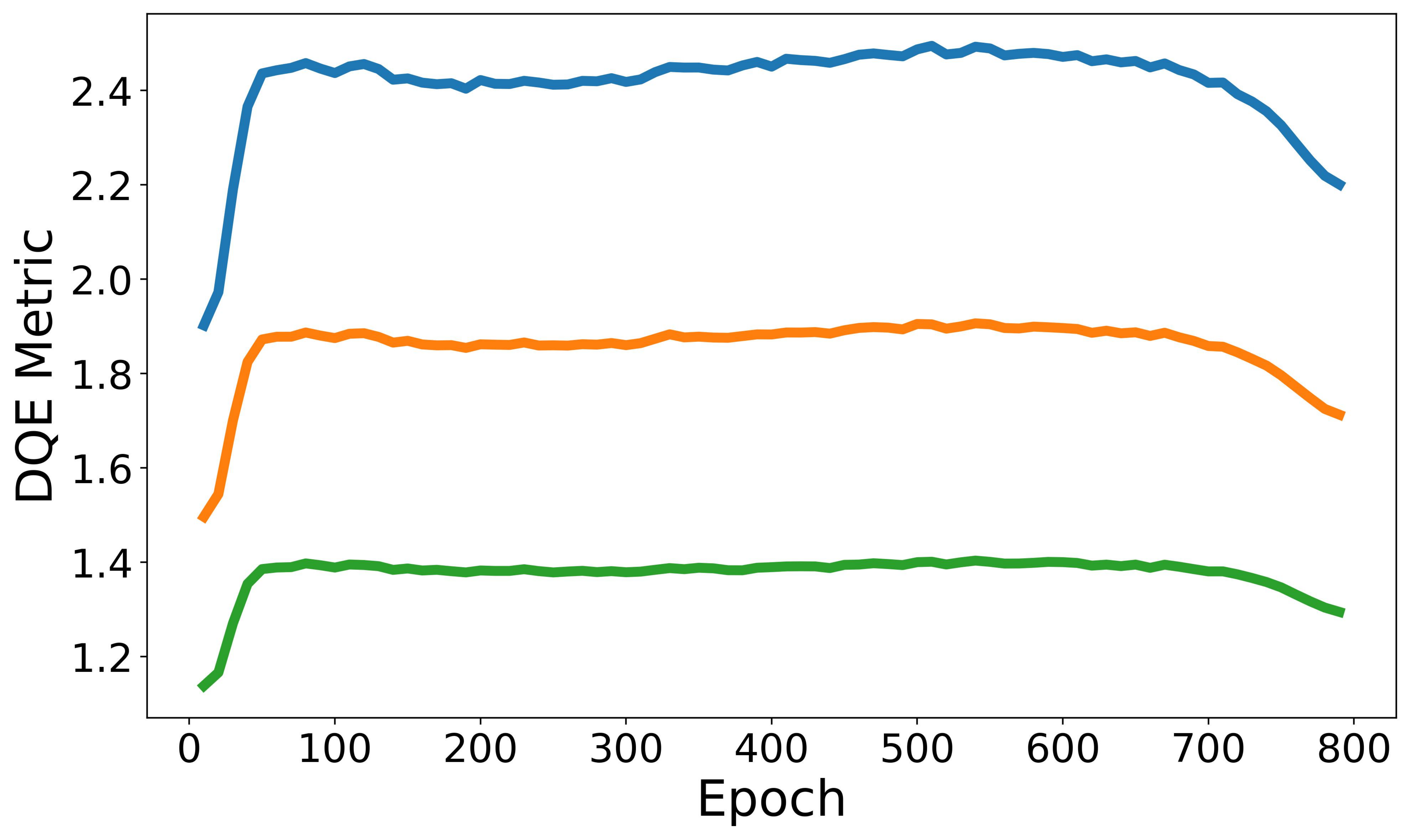}
        \caption{\ijepa}
    \end{subfigure}
    \hfill 
    \vspace{-2pt}
    \caption{Sensitivity analysis of different numbers of clusters $k$.}
    \label{fig:sensitivity_cluster}
\end{figure}

\subsection{Analysis of Bias Introduced by Data Distribution Shift and Label Estimation Error}
\label{app:bias}
When estimating downstream performance, two types of bias are introduced: 1) Data distribution shift—DSE is computed on the training dataset (i.e., ImageNet), while the target performance is evaluated on testing sets (e.g., COCO, VOC). 2) Label estimation error—DSE relies on pseudo-labels generated by the $k$-means algorithm, which may introduce bias during label assignment.

We conduct two groups of experiments to analyze their impact: 1) DSE calculated on the training set, which serves as the default setting for metric computation, and 2) DSE calculated on the testing set using ground truth labels. As shown in Fig.~\ref{fig:bias_intra_new}, Fig.~\ref{fig:bias_inter_new}, and Fig.~\ref{fig:bias_dim_new} although systematic errors are present, these biases act as shifting factors rather than altering the overall trend of $M_{inter}, M_{intra}$, and $M_{dim}$. Consequently, the estimated DSE metric remains reliable for comparing model performance.
\begin{figure}[H]
    \centering
    \begin{tikzpicture}
        \begin{axis}[
            scale only axis,
            legend style={
                at={(0.5,1.05)}, 
                anchor=south,
                legend columns=2, 
                /tikz/every even column/.append style={column sep=1cm},
                font=\smaller, 
                draw=lightgray,
                fill=white, 
                /pgf/number format/1000 sep={}
            },
            legend cell align={left},
            xlabel={}, ylabel={}, 
            xmin=0, xmax=1, ymin=0, ymax=1,
            axis lines=none, 
        ]
            \addlegendimage{color=matplotlibblue, mark=none, line width=1pt}
            \addlegendentry{Estimated $M_{inter}$ metric}
            \addlegendimage{color=matplotliborange, mark=none, line width=1pt}
            \addlegendentry{Real inter-class distance calculated with testing data and label}

        \end{axis}
    \end{tikzpicture}
    
    \begin{subfigure}{0.24\textwidth}
        \centering
        \includegraphics[width=\linewidth]{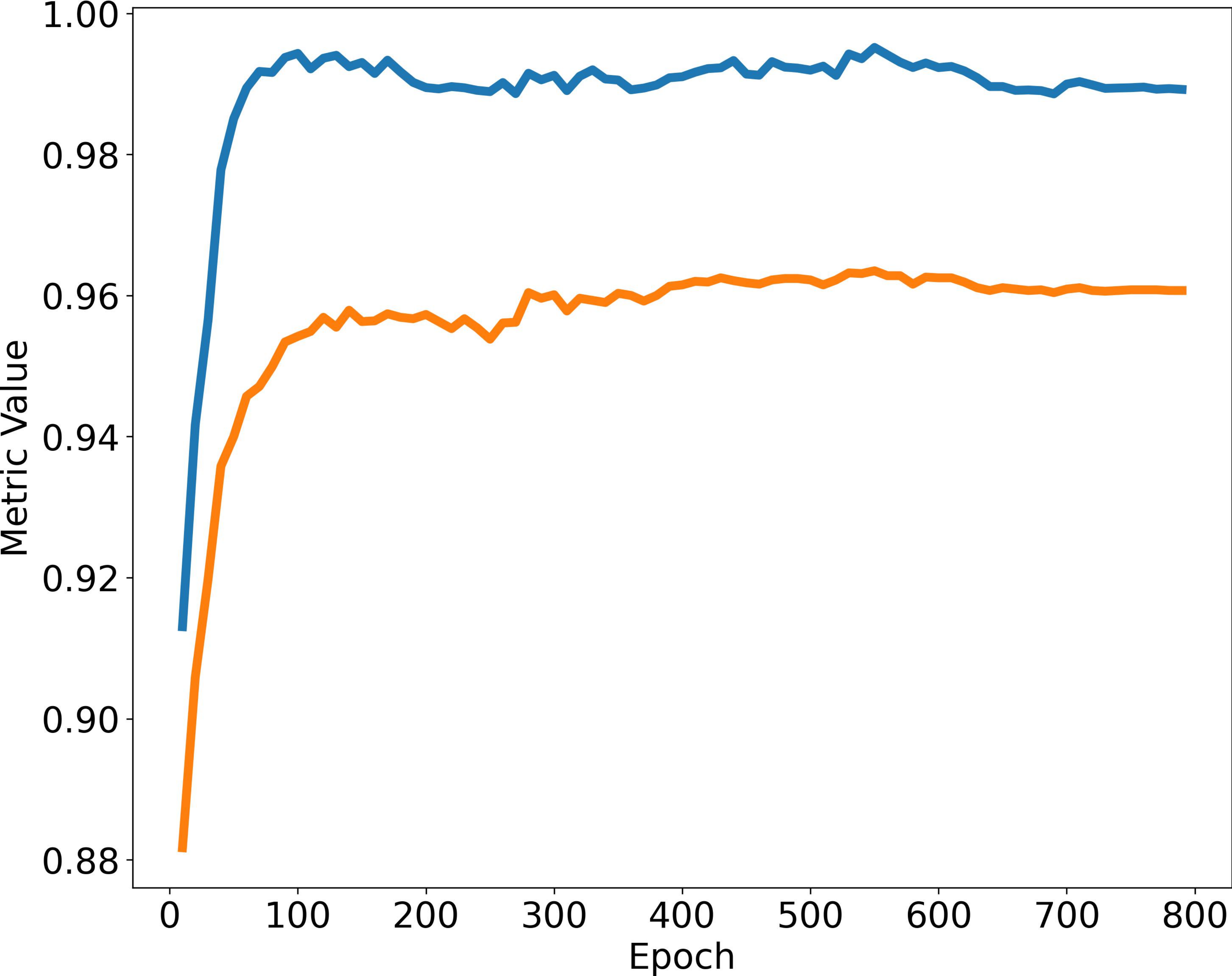}
        \caption{\moco}
    \end{subfigure}
    \hfill
    \begin{subfigure}{0.24\textwidth}
        \centering
        \includegraphics[width=\linewidth]{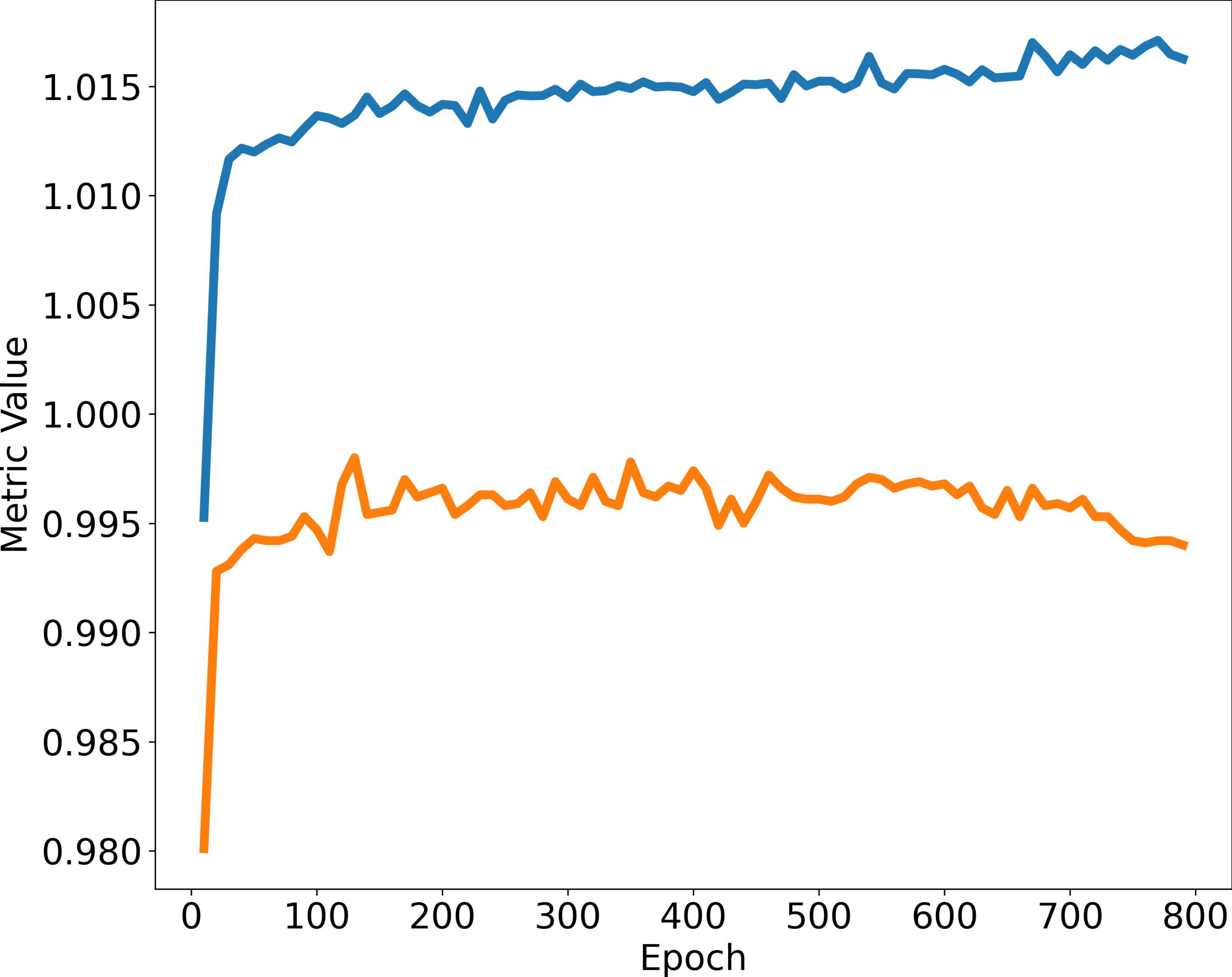}
        \caption{\mec}
    \end{subfigure}
    \hfill
    \begin{subfigure}{0.24\textwidth}
        \centering
        \includegraphics[width=\linewidth]{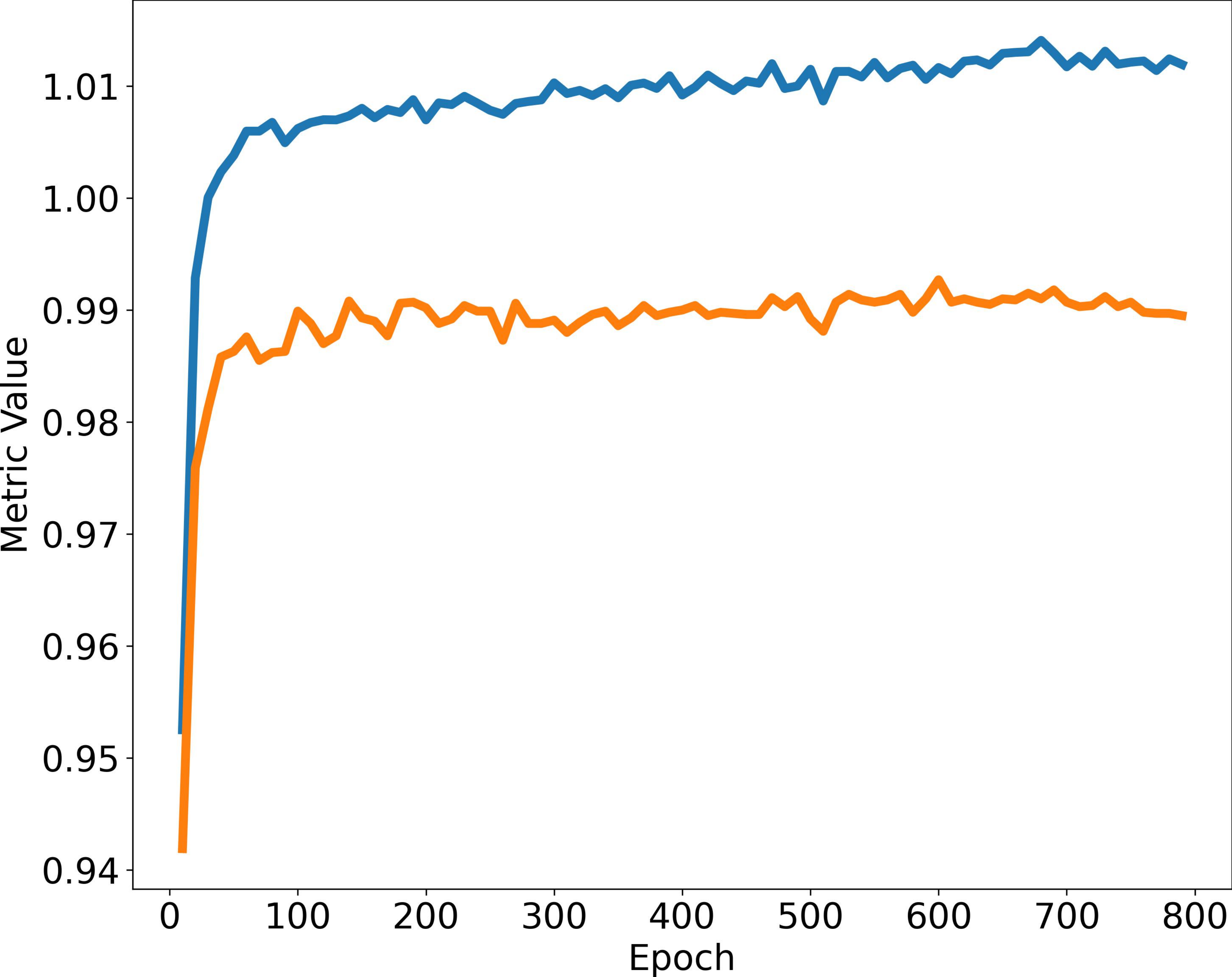}
        \caption{\simsiam}
    \end{subfigure}
    \hfill
    \begin{subfigure}{0.24\textwidth}
        \centering
        \includegraphics[width=\linewidth]{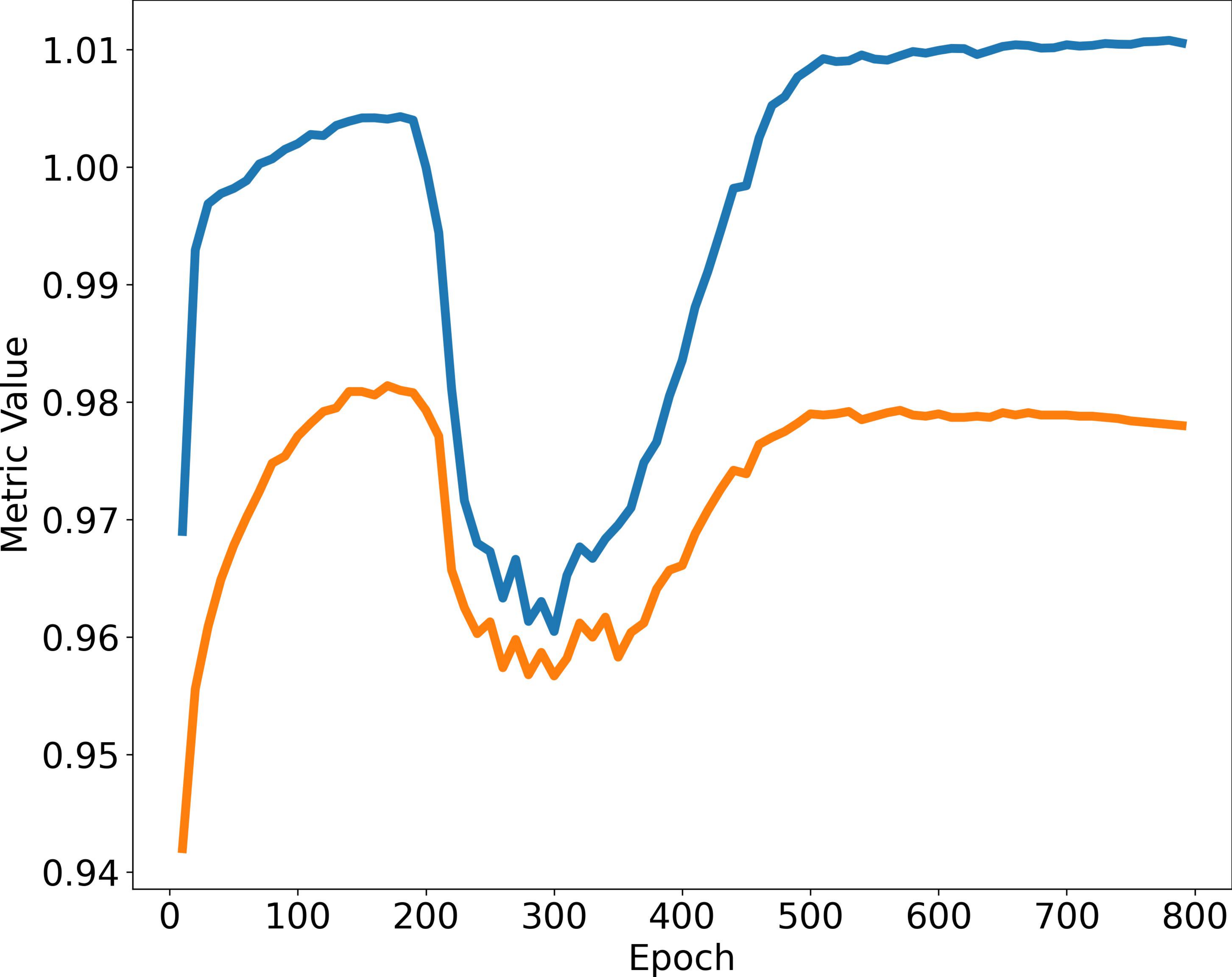}
        \caption{\dino}
    \end{subfigure}
    
    \begin{subfigure}{0.24\textwidth}
        \centering
        \includegraphics[width=\linewidth]{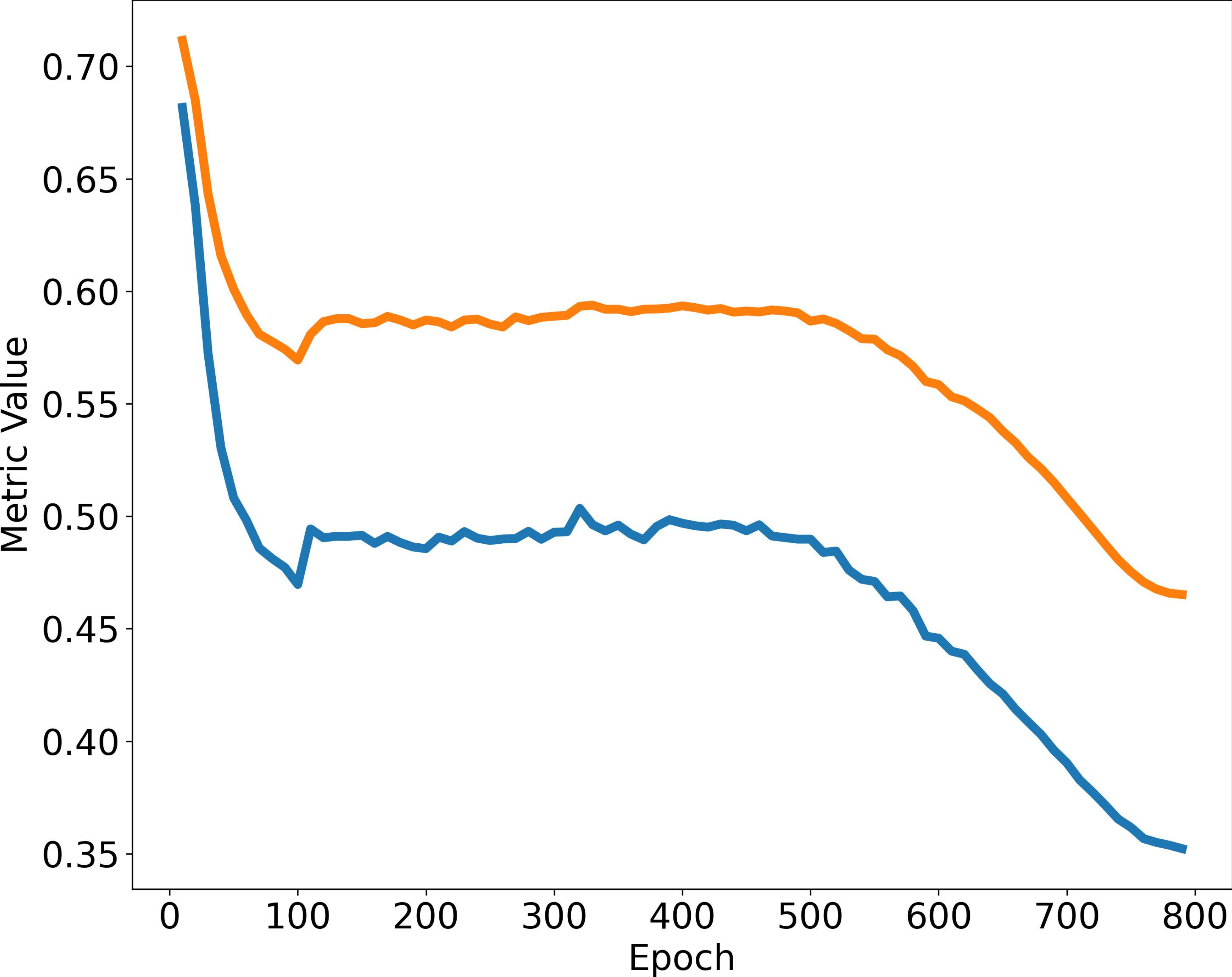}
        \caption{\esvit}
    \end{subfigure}
    \hfill
     \begin{subfigure}{0.24\textwidth}
        \centering
        \includegraphics[width=\linewidth]{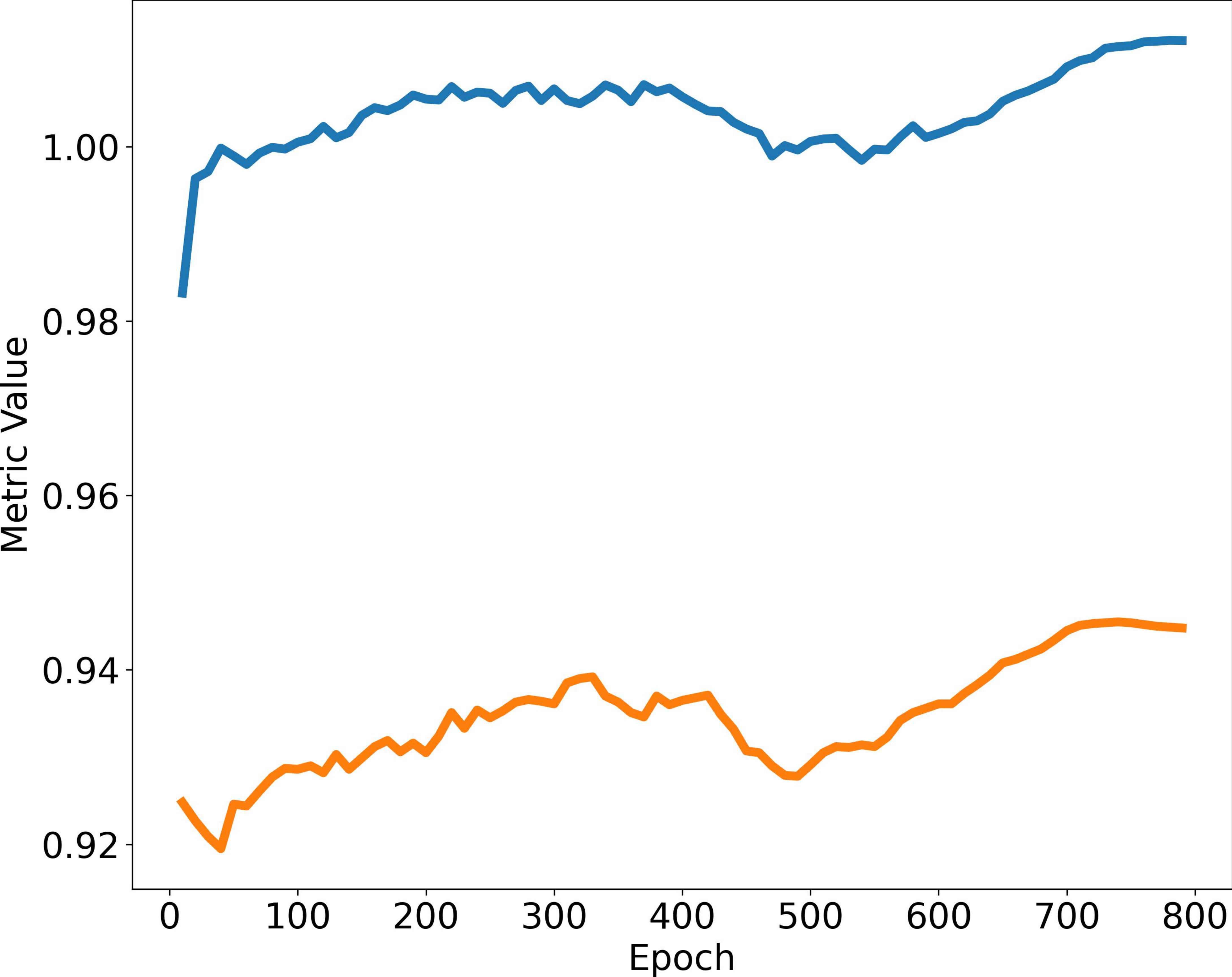}
        \caption{\ibot}
    \end{subfigure}
    \hfill
    \begin{subfigure}{0.24\textwidth}
        \centering
        \includegraphics[width=\linewidth]{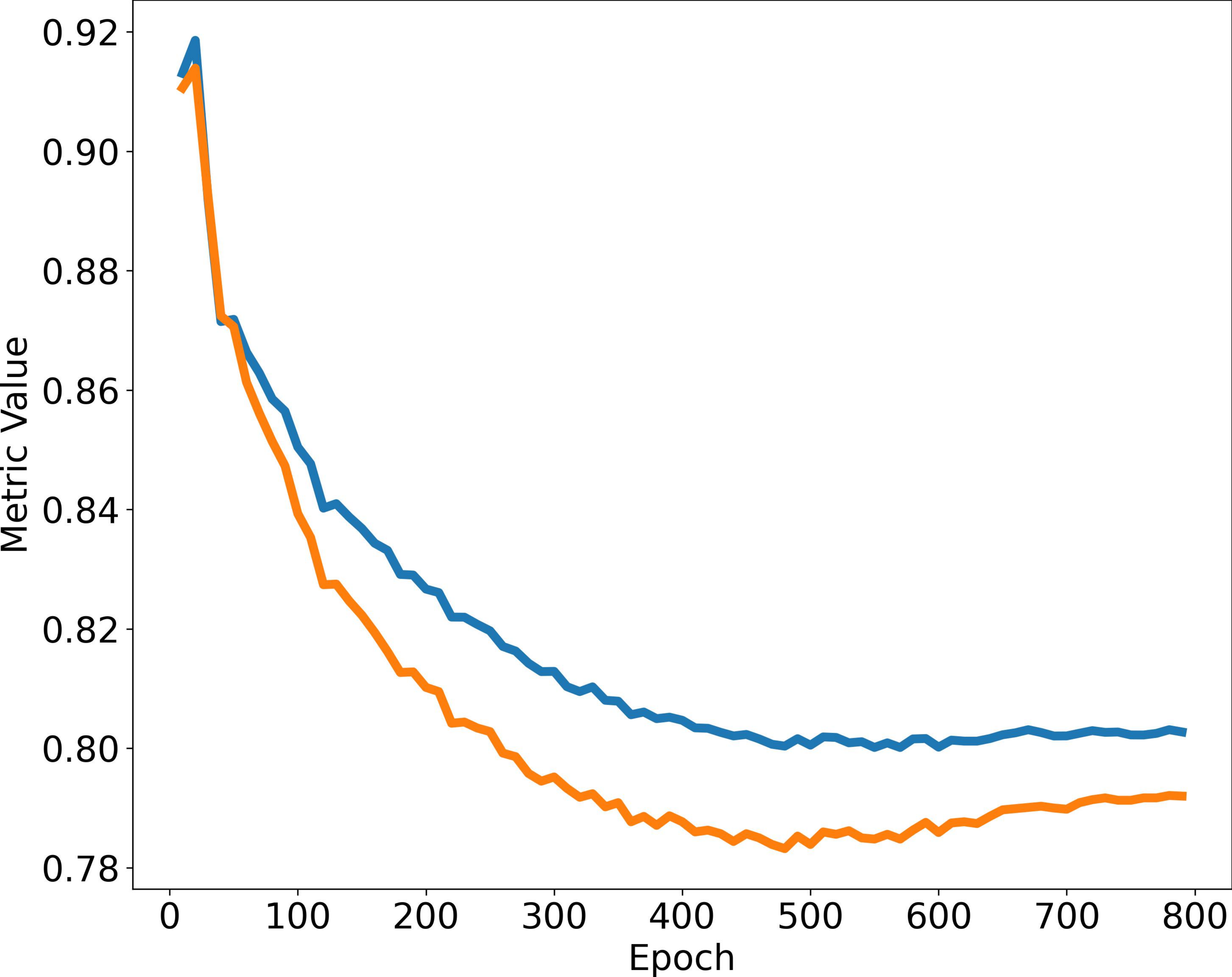}
        \caption{\mae}
    \end{subfigure}
    \hfill
    \begin{subfigure}{0.24\textwidth}
        \centering
        \includegraphics[width=\linewidth]{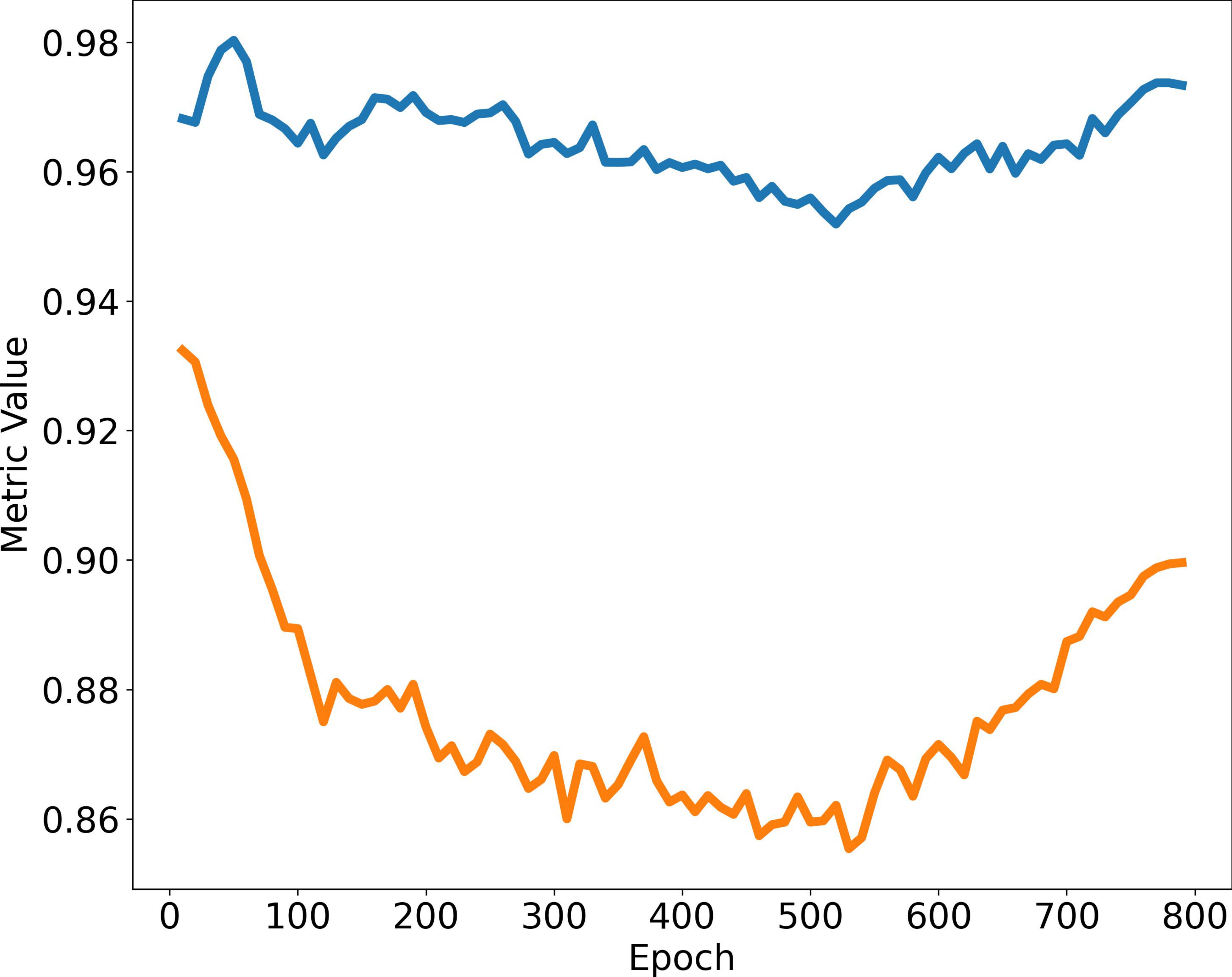}
        \caption{\ijepa}
    \end{subfigure}
    \caption{Comparison between the estimated $M_{inter}$ and the real inter-class distance.}

    \label{fig:bias_inter_new}
\end{figure}
\begin{figure}[H]
    \centering
    \begin{tikzpicture}
        \begin{axis}[
            scale only axis,
            legend style={
                at={(0.5,1.05)}, 
                anchor=south,
                legend columns=2, 
                /tikz/every even column/.append style={column sep=1cm},
                font=\smaller, 
                draw=lightgray,
                fill=white, 
                /pgf/number format/1000 sep={}
            },
            legend cell align={left},
            xlabel={}, ylabel={}, 
            xmin=0, xmax=1, ymin=0, ymax=1,
            axis lines=none, 
        ]
            \addlegendimage{color=matplotlibblue, mark=none, line width=1pt}
            \addlegendentry{Estimated $M_{intra}$ metric}
            \addlegendimage{color=matplotliborange, mark=none, line width=1pt}
            \addlegendentry{Real intra-class radius calculated with testing data and label}
        \end{axis}
    \end{tikzpicture}
    
    \begin{subfigure}{0.24\textwidth}
        \centering
        \includegraphics[width=\linewidth]{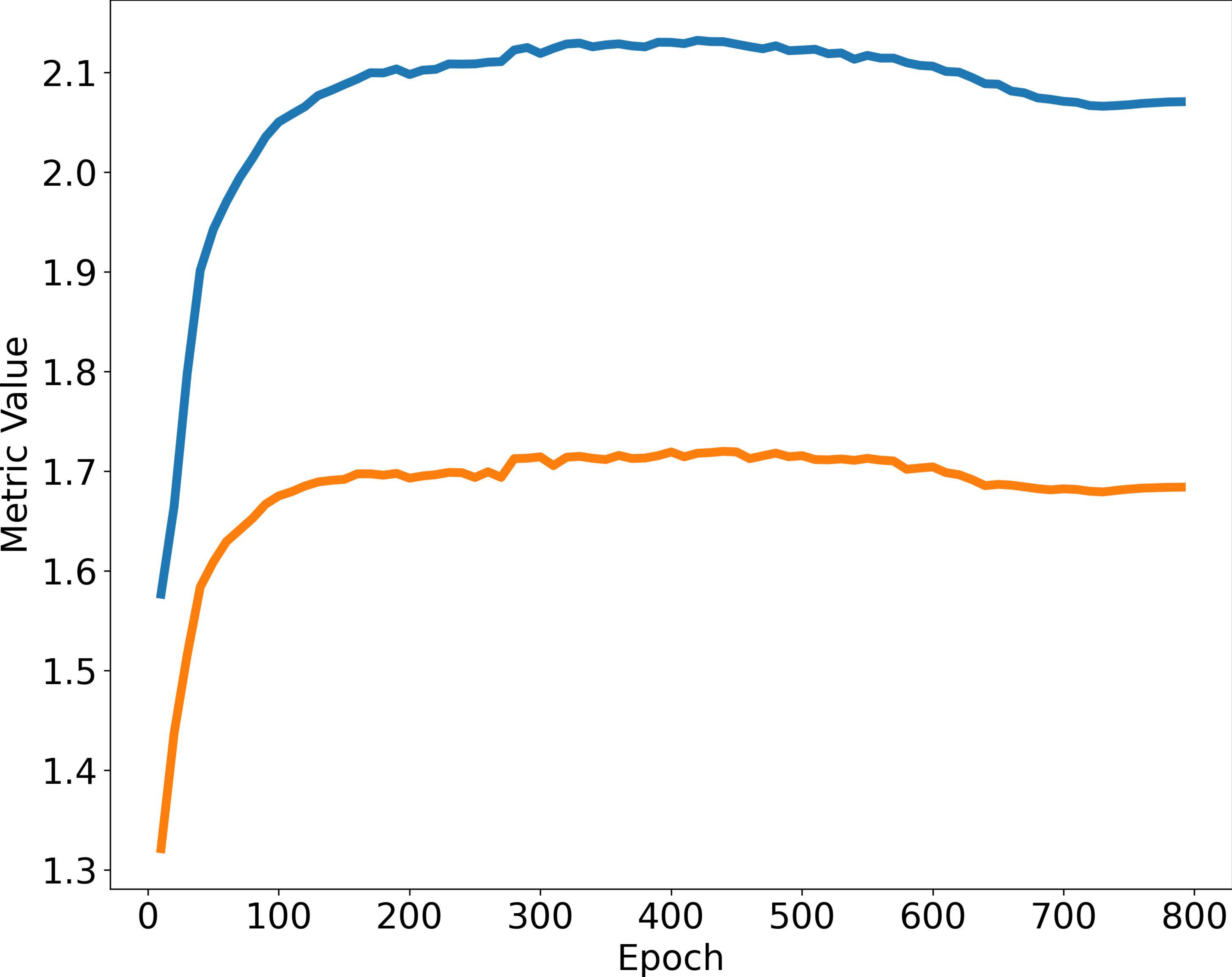}
        \caption{\moco}
    \end{subfigure}
    \hfill
    \begin{subfigure}{0.24\textwidth}
        \centering
        \includegraphics[width=\linewidth]{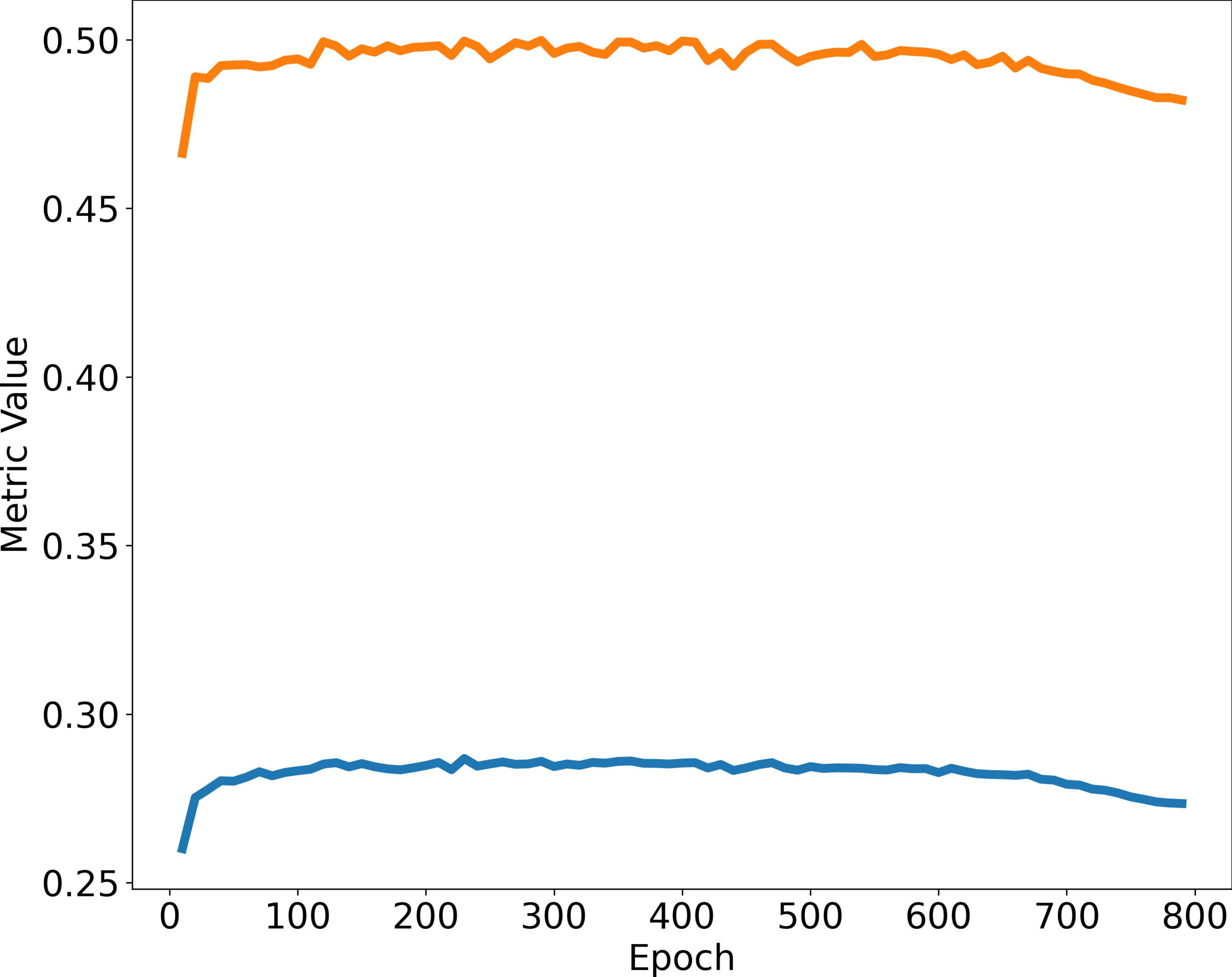}
        \caption{\mec}
    \end{subfigure}
    \hfill
    \begin{subfigure}{0.24\textwidth}
        \centering
        \includegraphics[width=\linewidth]{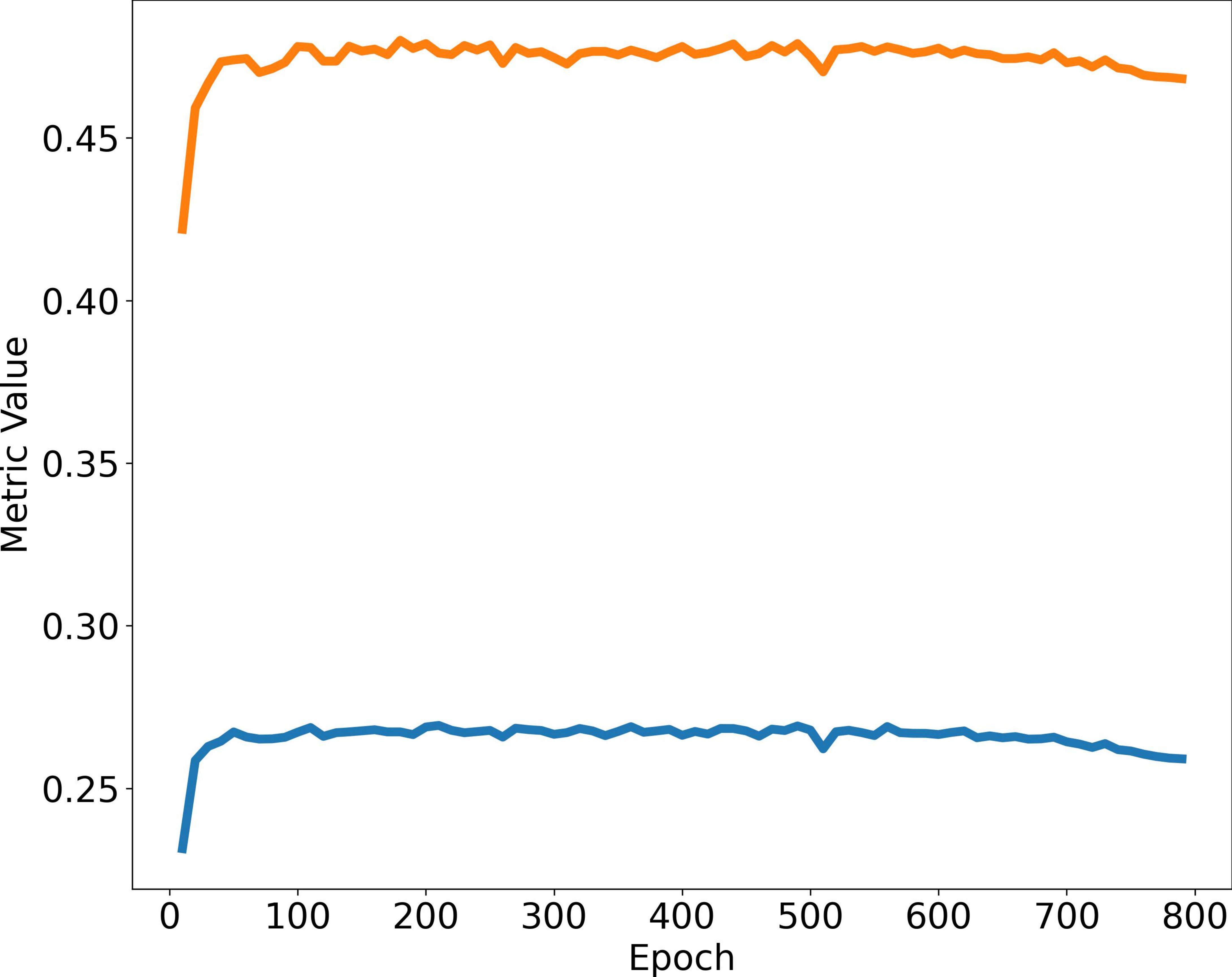}
        \caption{\simsiam}
    \end{subfigure}
    \hfill
    \begin{subfigure}{0.24\textwidth}
        \centering
        \includegraphics[width=\linewidth]{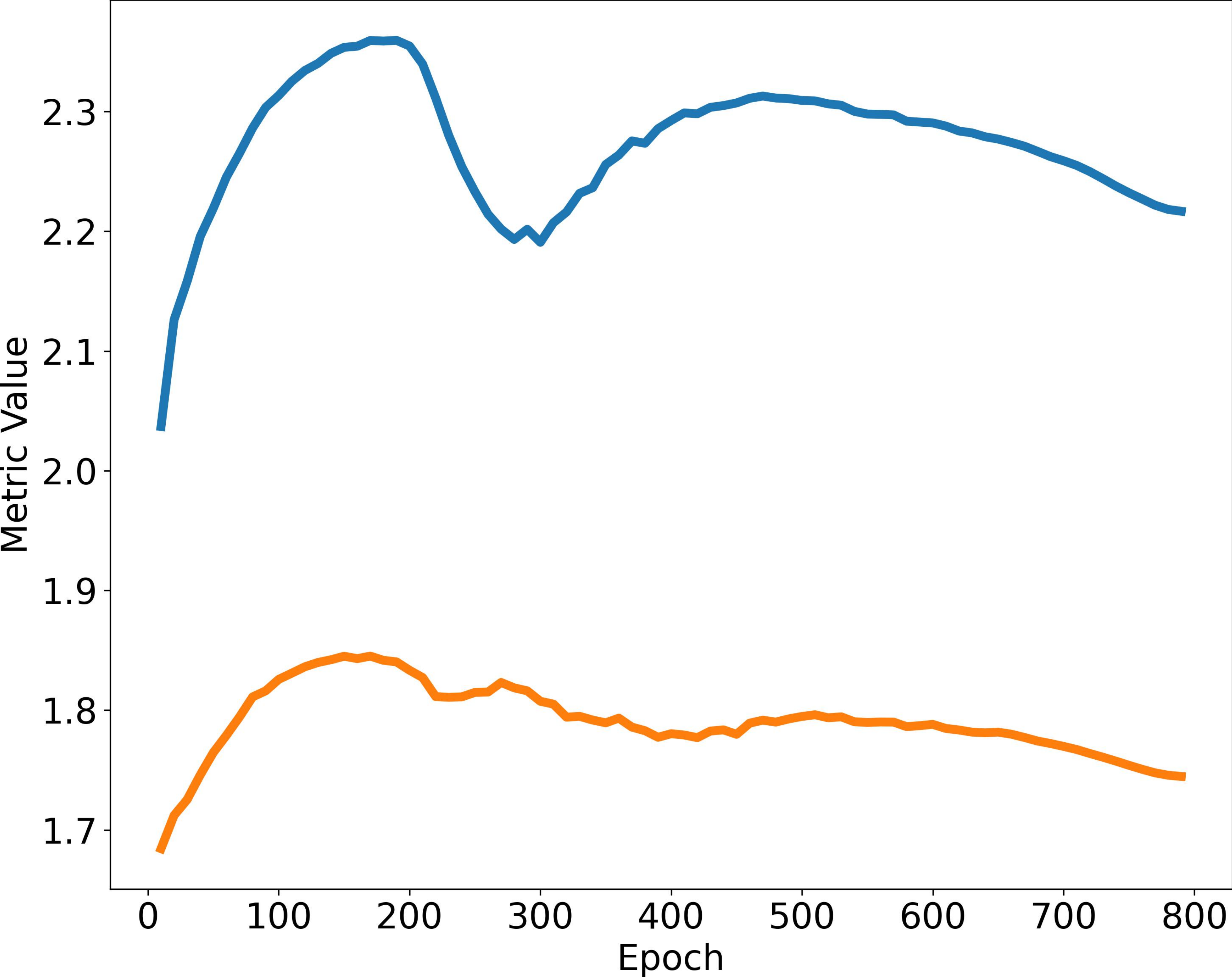}
        \caption{\dino}
    \end{subfigure}
    
    \begin{subfigure}{0.24\textwidth}
        \centering
        \includegraphics[width=\linewidth]{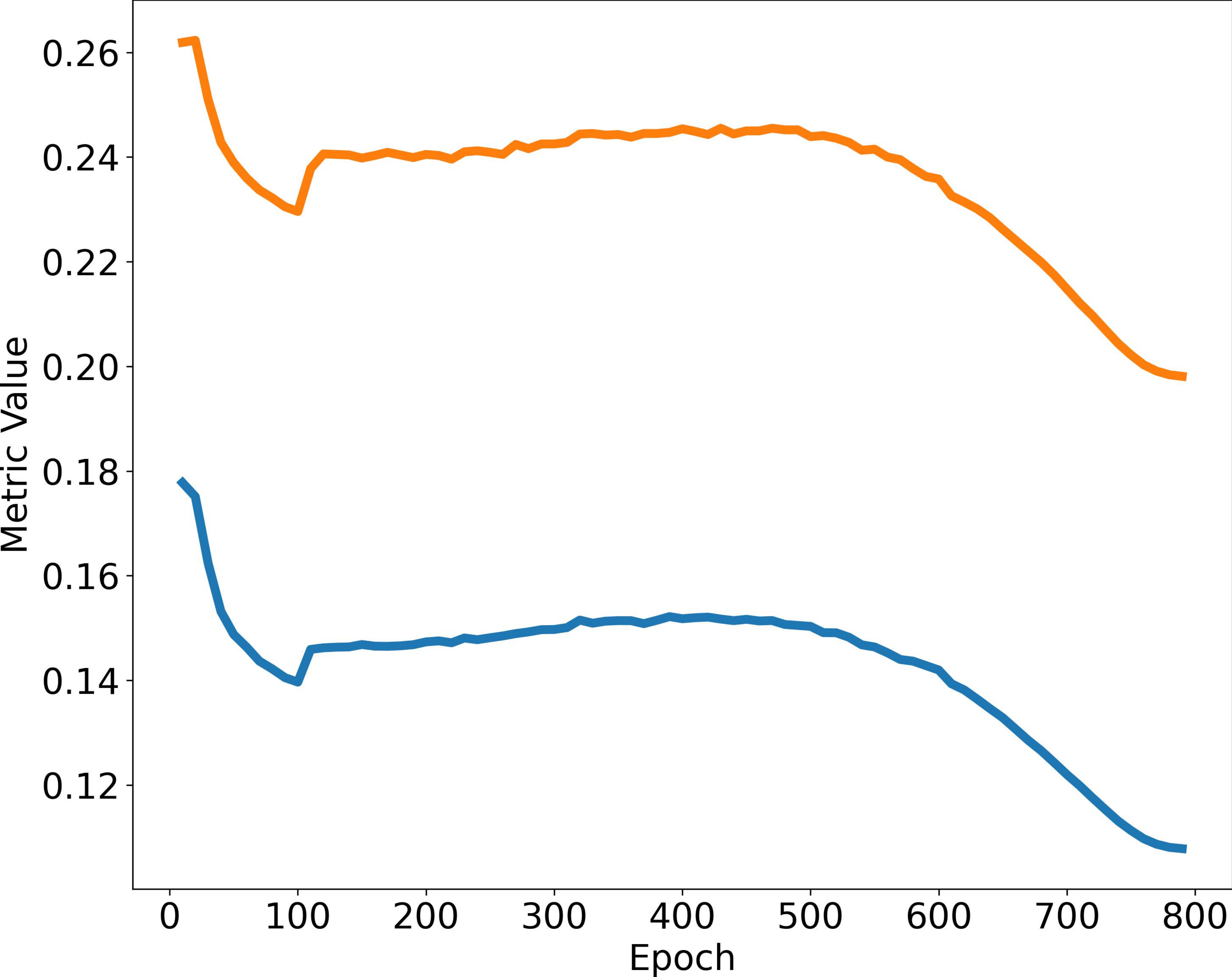}
        \caption{\esvit}
    \end{subfigure}
    \hfill
     \begin{subfigure}{0.24\textwidth}
        \centering
        \includegraphics[width=\linewidth]{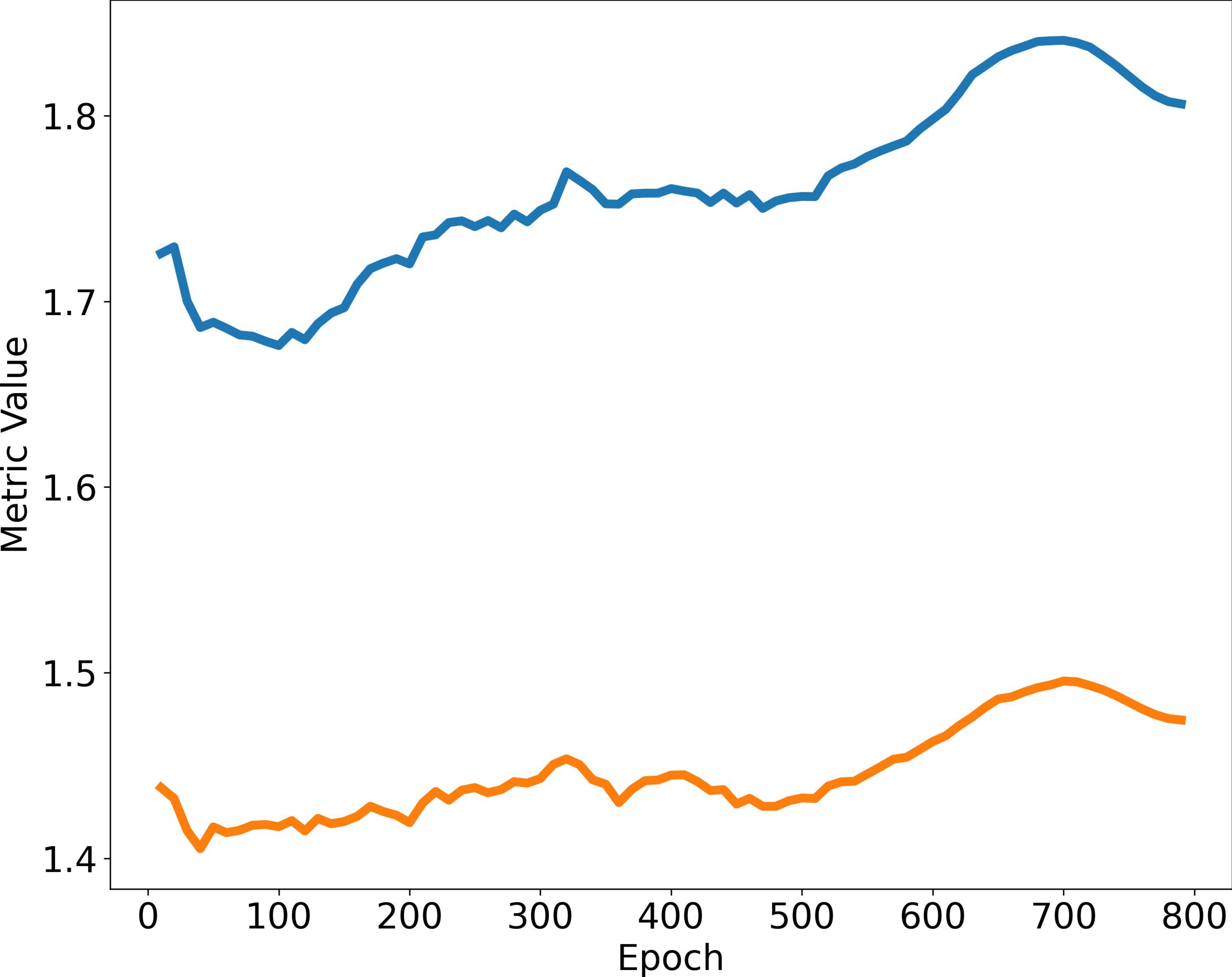}
        \caption{\ibot}
    \end{subfigure}
    \hfill
    \begin{subfigure}{0.24\textwidth}
        \centering
        \includegraphics[width=\linewidth]{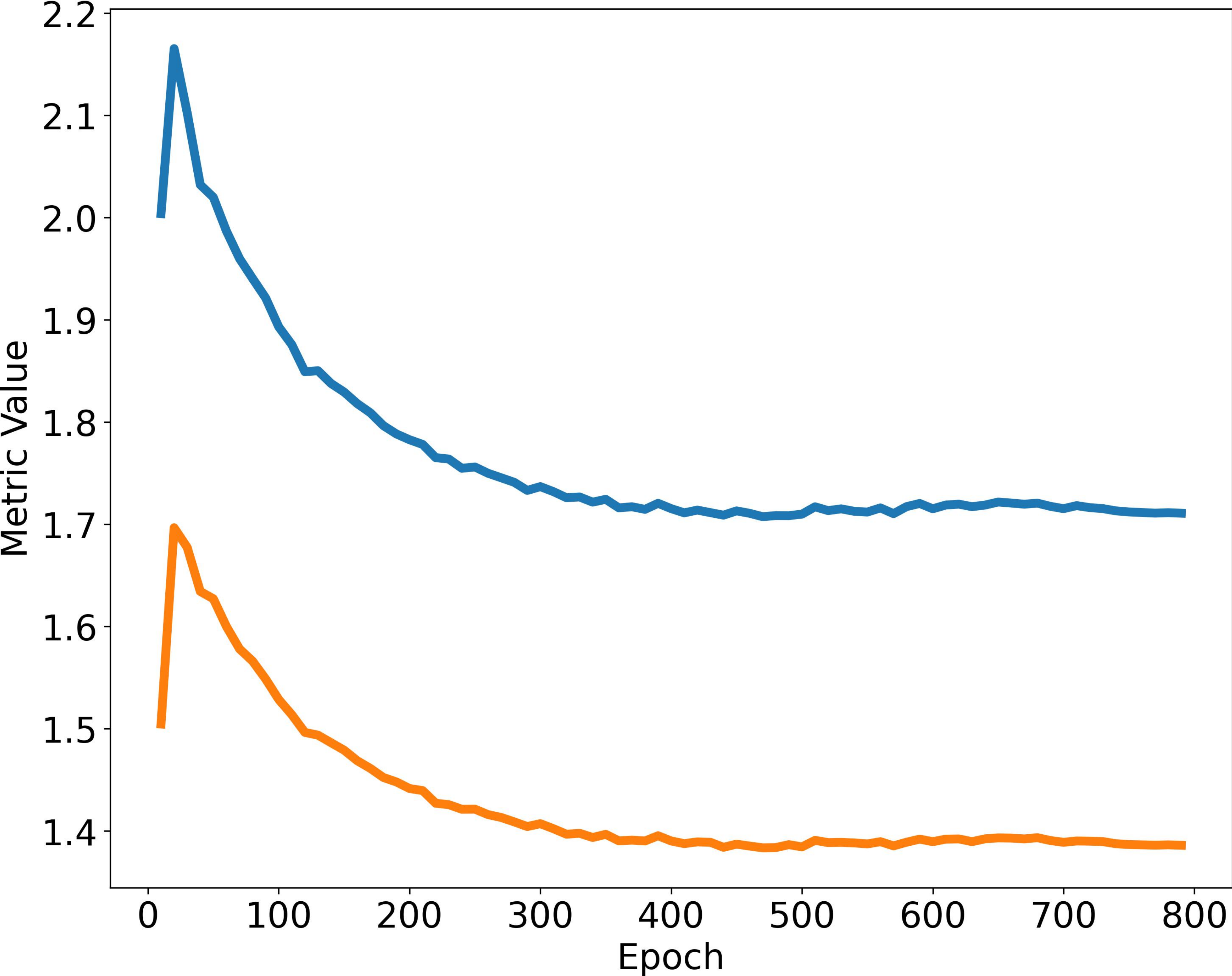}
        \caption{\mae}
    \end{subfigure}
    \hfill
    \begin{subfigure}{0.24\textwidth}
        \centering
        \includegraphics[width=\linewidth]{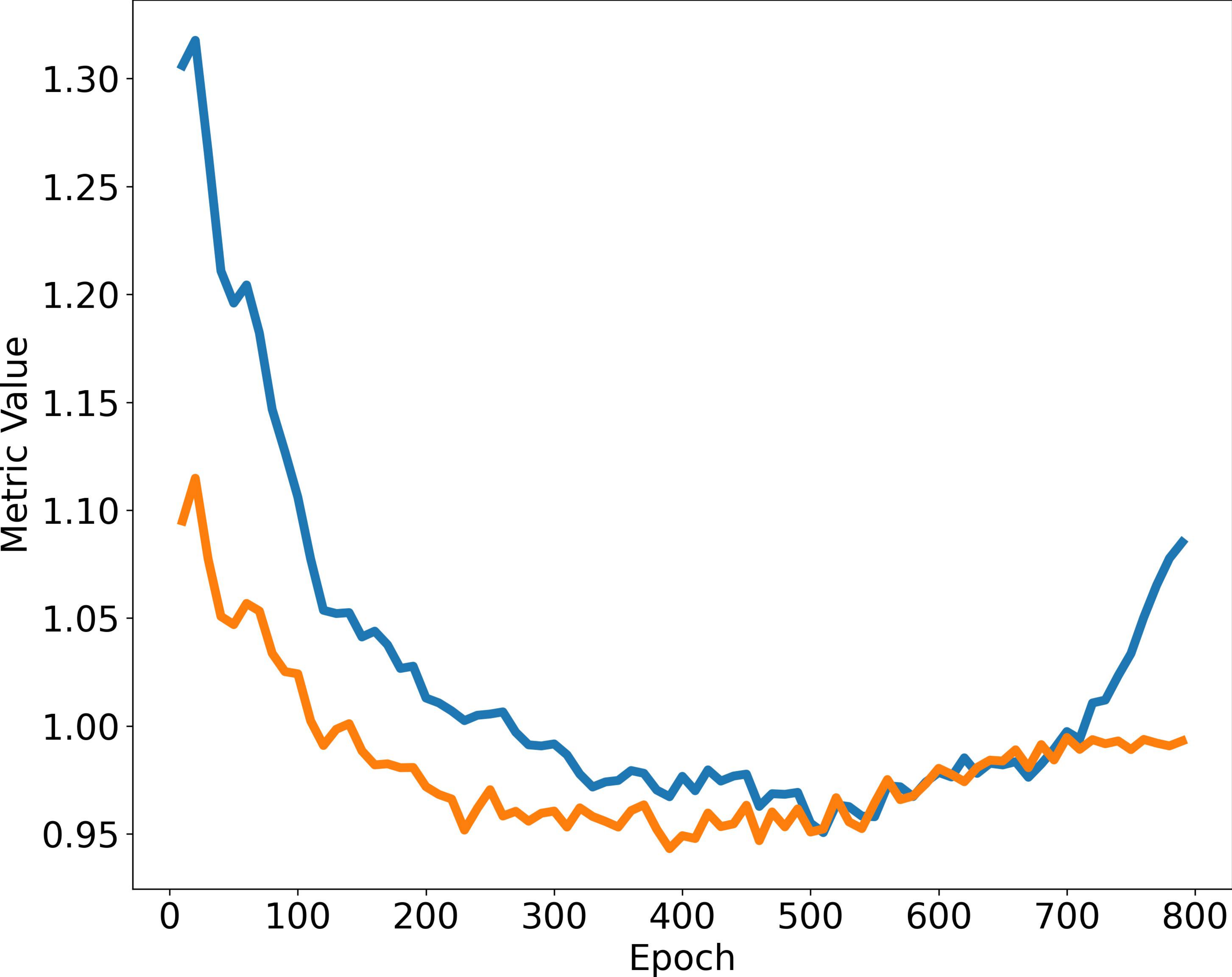}
        \caption{\ijepa}
    \end{subfigure}
    \caption{Comparison between the estimated $M_{intra}$ and the real intra-class radius.}
    \label{fig:bias_intra_new}
\end{figure}

\begin{figure}[htbp]
    \centering
    \begin{tikzpicture}
        \begin{axis}[
            scale only axis,
            legend style={
                at={(0.5,1.05)}, 
                anchor=south,
                legend columns=2, 
                /tikz/every even column/.append style={column sep=1cm},
                font=\smaller, 
                draw=lightgray,
                fill=white, 
                /pgf/number format/1000 sep={}
            },
            legend cell align={left},
            xlabel={}, ylabel={}, 
            xmin=0, xmax=1, ymin=0, ymax=1,
            axis lines=none, 
        ]
            \addlegendimage{color=matplotlibblue, mark=none, line width=1pt}
            \addlegendentry{Estimated $M_{dim}$ metric}
            \addlegendimage{color=matplotliborange, mark=none, line width=1pt}
            \addlegendentry{Real effective rank calculated with testing data}

        \end{axis}
    \end{tikzpicture}
    
    \begin{subfigure}{0.24\textwidth}
        \centering
        \includegraphics[width=\linewidth]{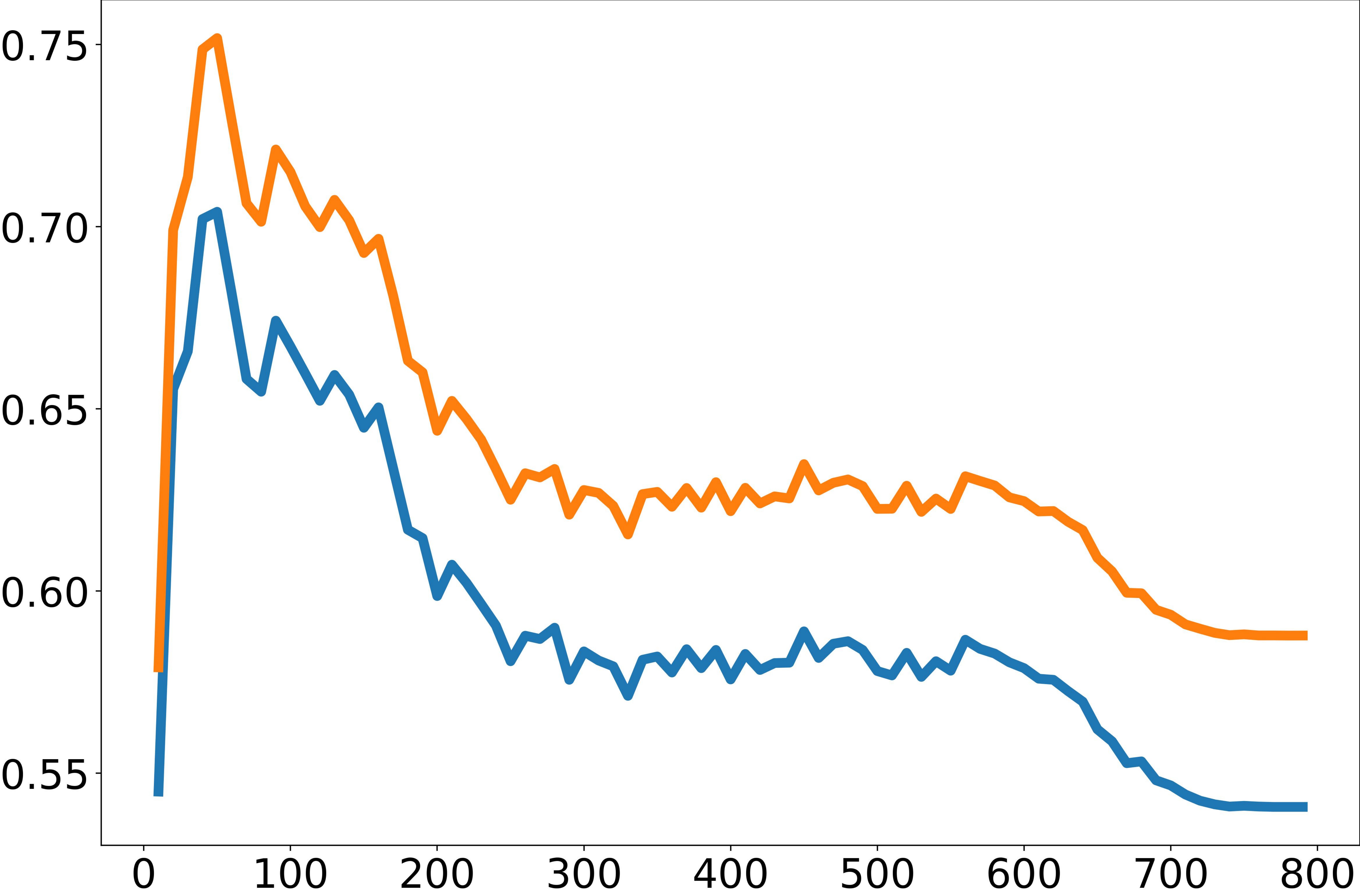}
        \caption{\moco}
    \end{subfigure}
    \hfill
    \begin{subfigure}{0.24\textwidth}
        \centering
        \includegraphics[width=\linewidth]{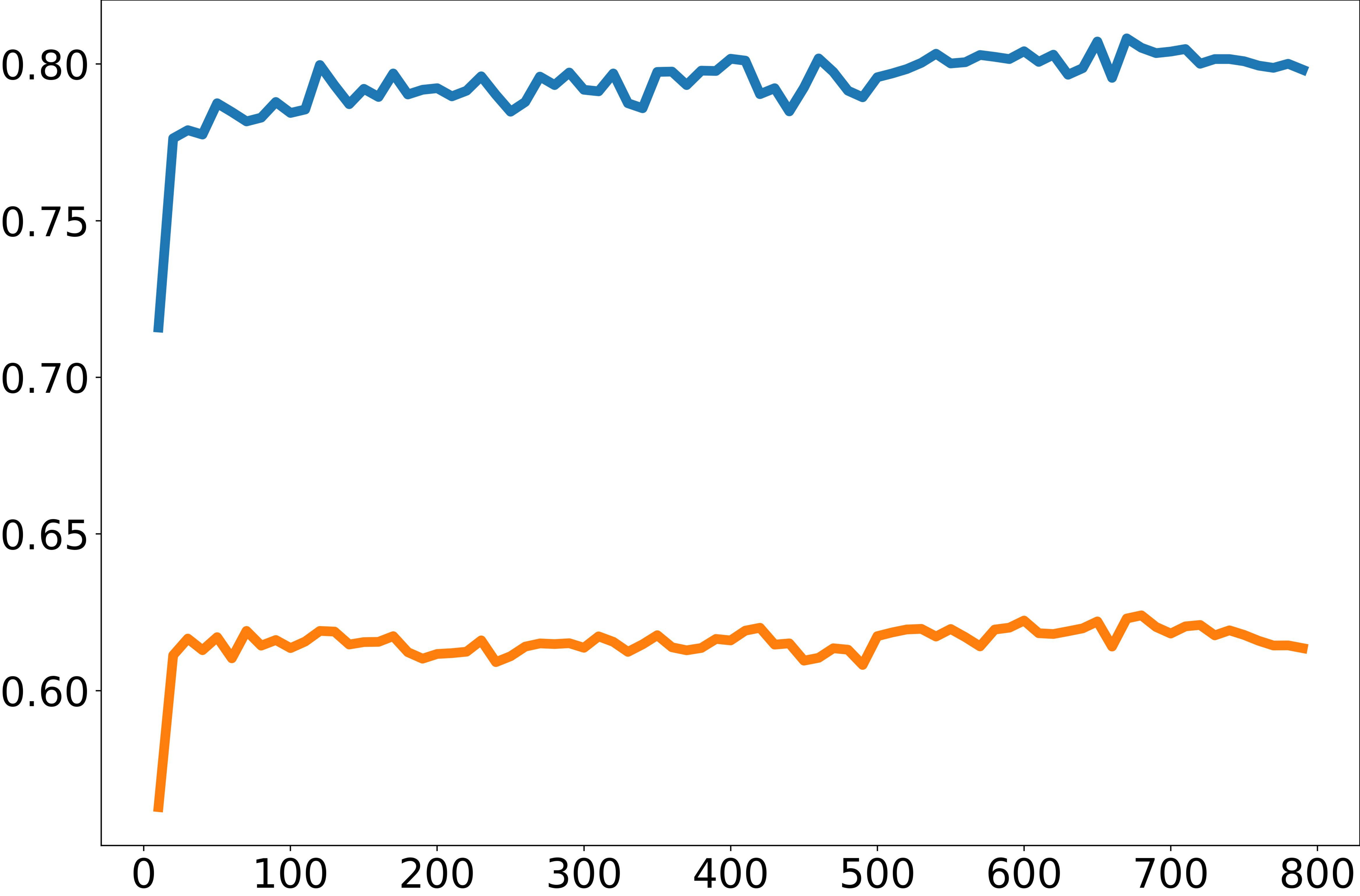}
        \caption{\mec}
    \end{subfigure}
    \hfill
    \begin{subfigure}{0.24\textwidth}
        \centering
        \includegraphics[width=\linewidth]{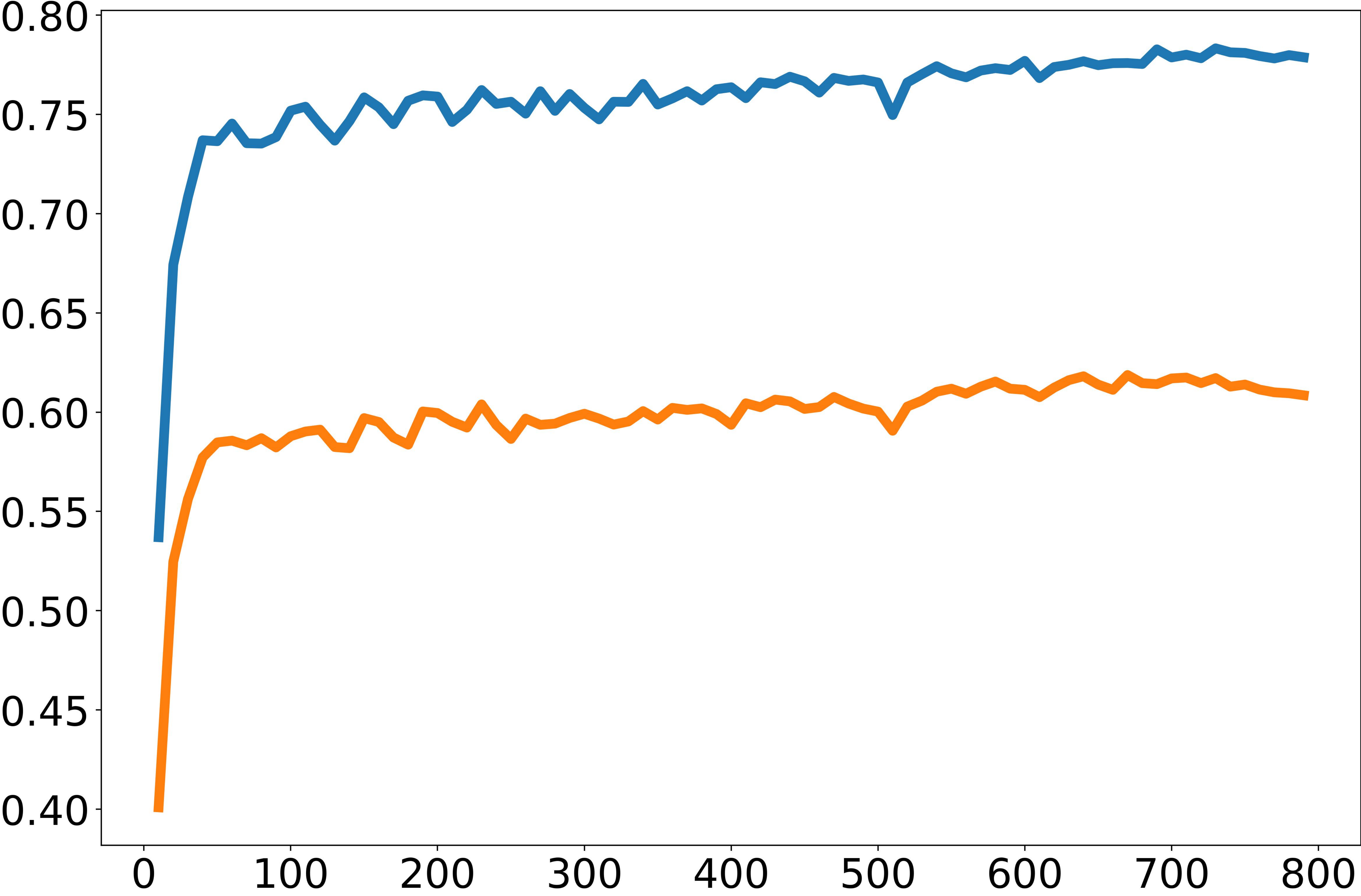}
        \caption{\simsiam}
    \end{subfigure}
    \hfill
    \begin{subfigure}{0.24\textwidth}
        \centering
        \includegraphics[width=\linewidth]{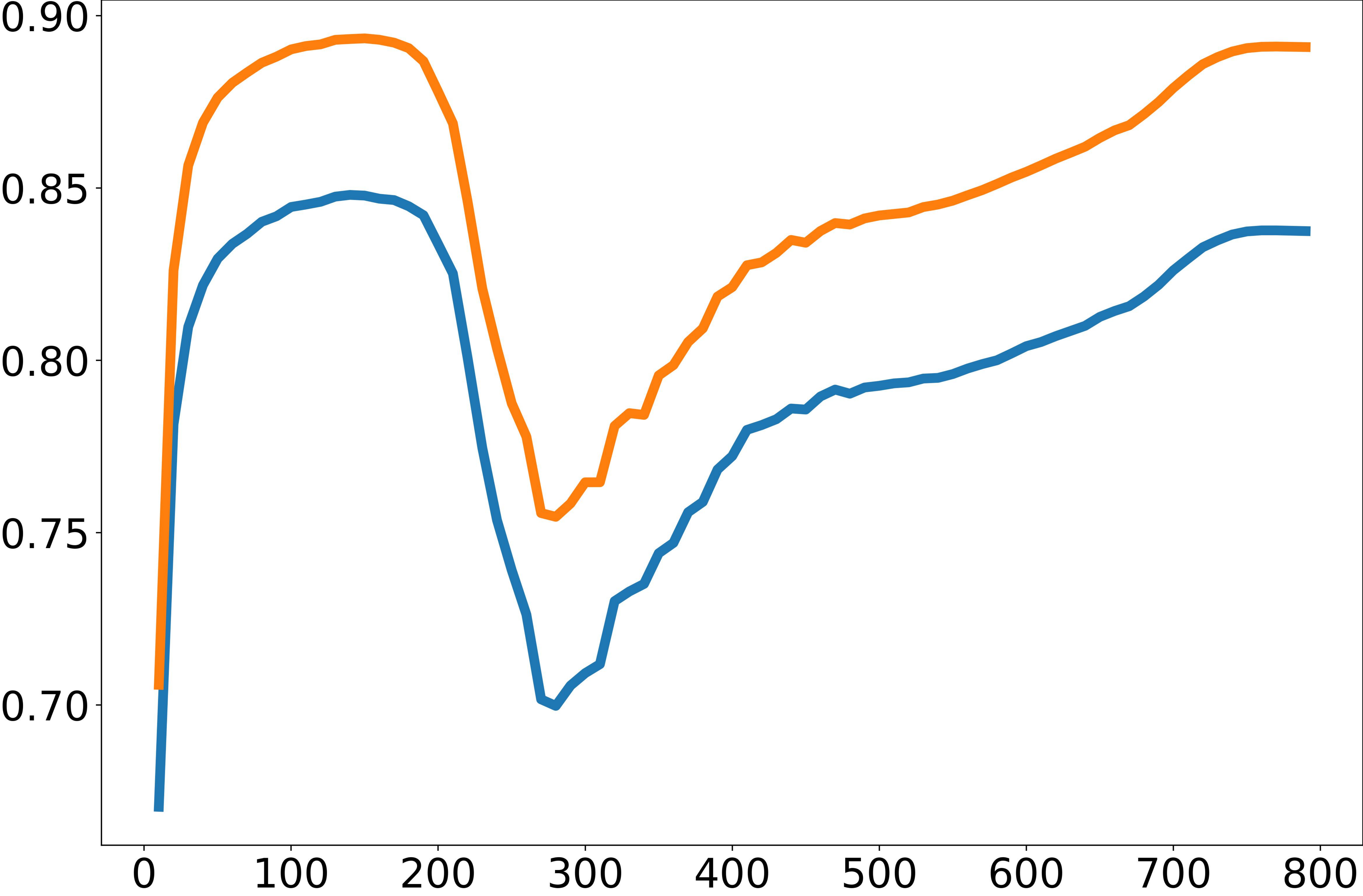}
        \caption{\dino}
    \end{subfigure}
    
    \begin{subfigure}{0.24\textwidth}
        \centering
        \includegraphics[width=\linewidth]{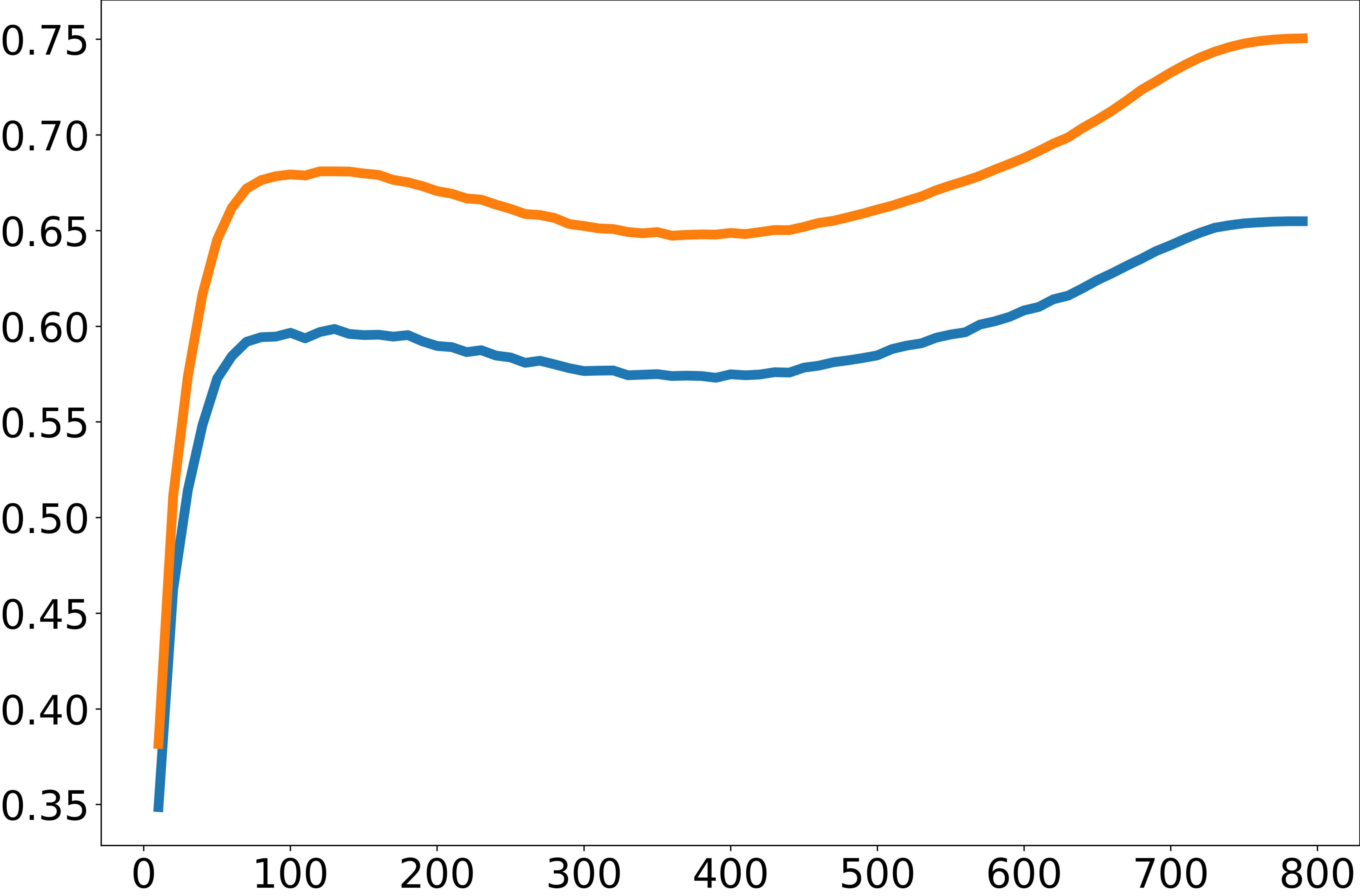}
        \caption{\esvit}
    \end{subfigure}
    \hfill
     \begin{subfigure}{0.24\textwidth}
        \centering
        \includegraphics[width=\linewidth]{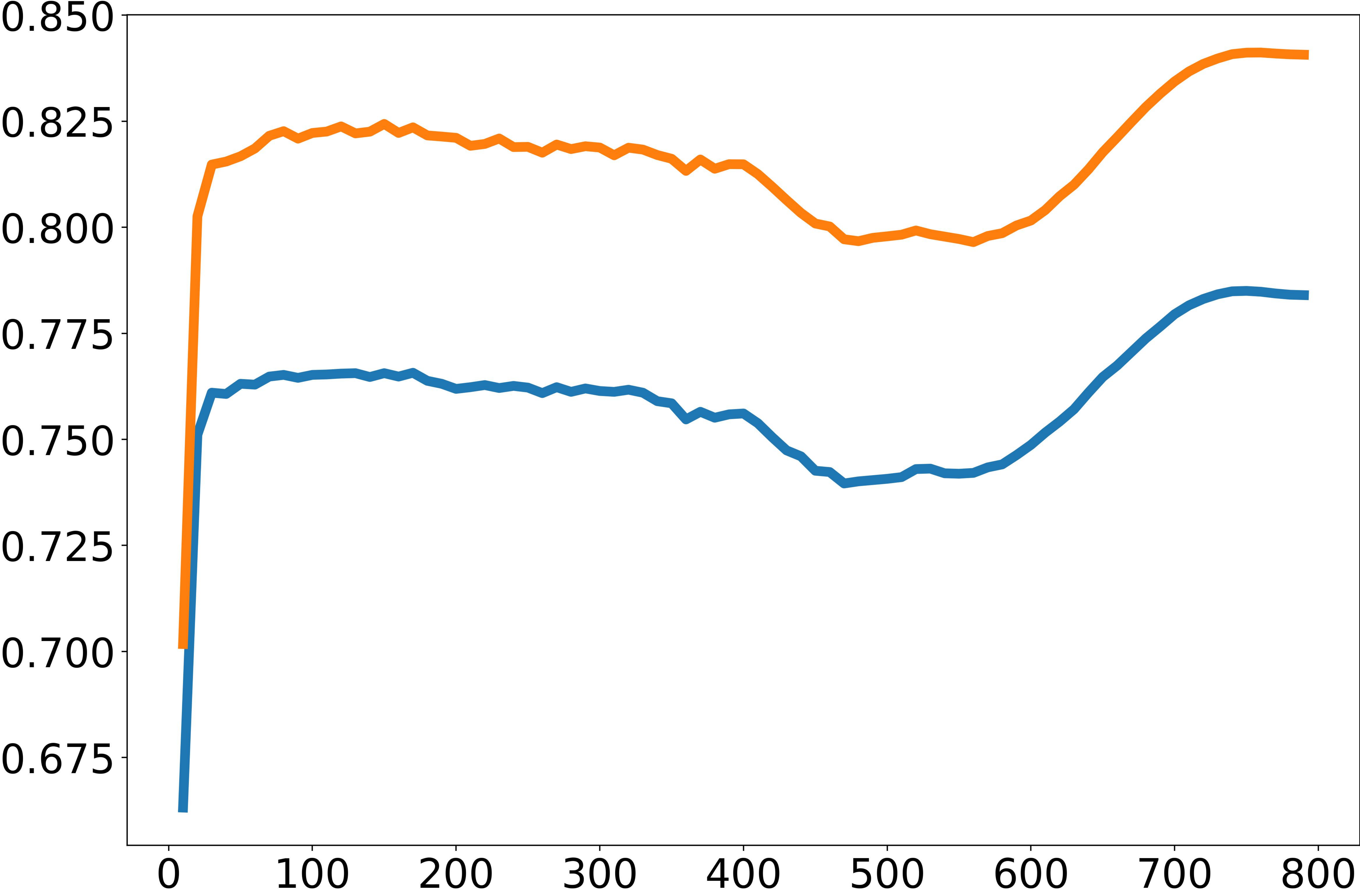}
        \caption{\ibot}
    \end{subfigure}
    \hfill
    \begin{subfigure}{0.24\textwidth}
        \centering
        \includegraphics[width=\linewidth]{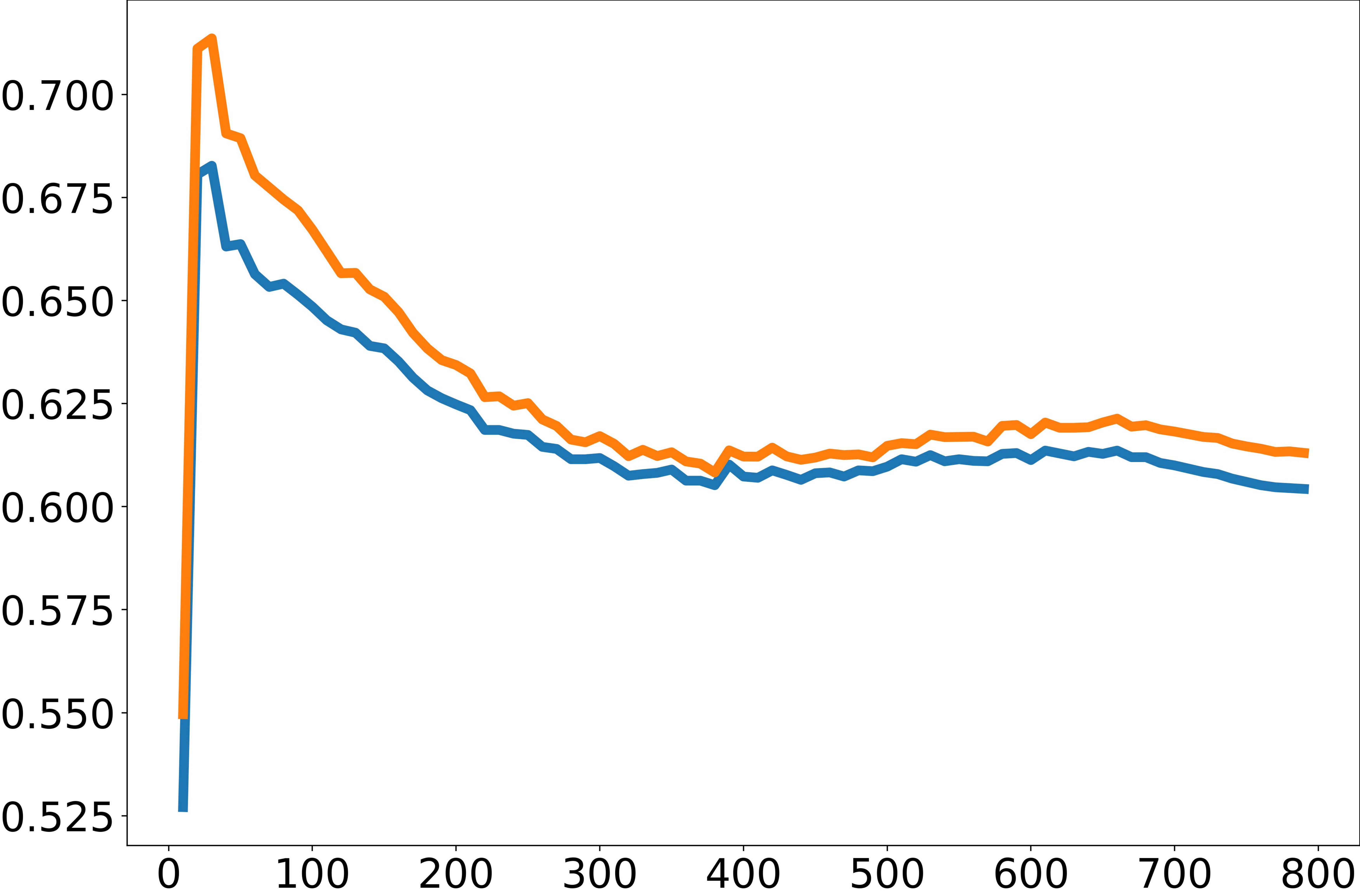}
        \caption{\mae}
    \end{subfigure}
    \hfill
    \begin{subfigure}{0.24\textwidth}
        \centering
        \includegraphics[width=\linewidth]{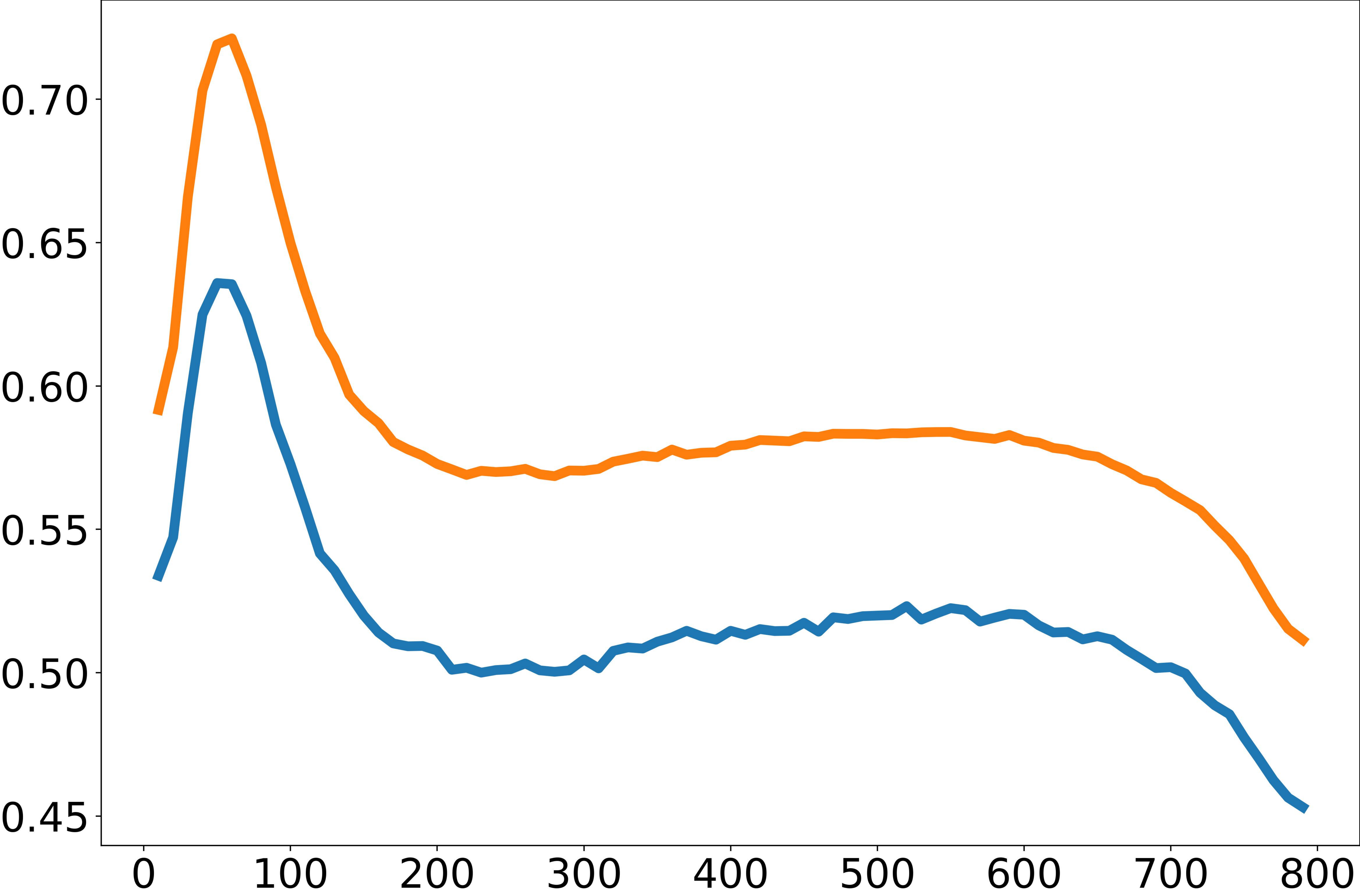}
        \caption{\ijepa}
    \end{subfigure}
    \caption{Comparison between the estimated $M_{dim}$ and the real effective rank.}

    \label{fig:bias_dim_new}
\end{figure}

\clearpage
\subsection{Understanding the Reason behind SDD Phenomenon}
\label{app:understand}
DSE not only measures downstream performance but also helps explain the causes of SDD. The behavior of different components of DSE offers insights into SDD. For instance, Fig.~\ref{fig:understand} shows that DINO's performance drop at 300 epochs coincides with a slower reduction in intra-class distance compared to inter-class distance, which decreases class separability. In contrast, MoCo v3's degradation is associated with a collapse in the dimensionality of dense features, making downstream separation more difficult. These findings offer a deeper understanding of the training dynamics in SSL methods and provide useful guidance for improving algorithm design.
\begin{figure}[H]
    \centering
    \begin{tikzpicture}
        \begin{axis}[
            scale only axis,
            legend style={
                at={(0.5,1.05)}, 
                anchor=south,
                legend columns=5, 
                /tikz/every even column/.append style={column sep=1cm},
                font=\smaller, 
                draw=lightgray, 
                fill=white, 
                /pgf/number format/1000 sep={} 
            },
            legend cell align={left},
            xlabel={}, ylabel={}, 
            xmin=0, xmax=1, ymin=0, ymax=1, 
            axis lines=none, 
        ]
            \addlegendimage{color=matplotliborange, mark=none, line width=1pt}
            \addlegendentry{$M_{intra}$}
            \addlegendimage{color=matplotlibblue, mark=none, line width=1pt}
            \addlegendentry{$M_{inter}$}
            \addlegendimage{color=matplotlibgreen, mark=none, line width=1pt}
            \addlegendentry{$M_{dim}$}
            \addlegendimage{color=matplotlibred, mark=none, line width=1pt}
            \addlegendentry{$\text{DSE}$}
            \addlegendimage{color=matplotlibpurple, mark=none, line width=1pt}
            \addlegendentry{PASCAL VOC mIoU}
            
        \end{axis}
    \end{tikzpicture}
    
    \begin{subfigure}{0.24\textwidth}
        \centering
        \includegraphics[width=\linewidth]{images/metric/metric_full_voc/moco-ori-800.pdf}
        \caption{\moco} 
    \end{subfigure}
    \hfill
    \begin{subfigure}{0.24\textwidth}
        \centering
        \includegraphics[width=\linewidth]{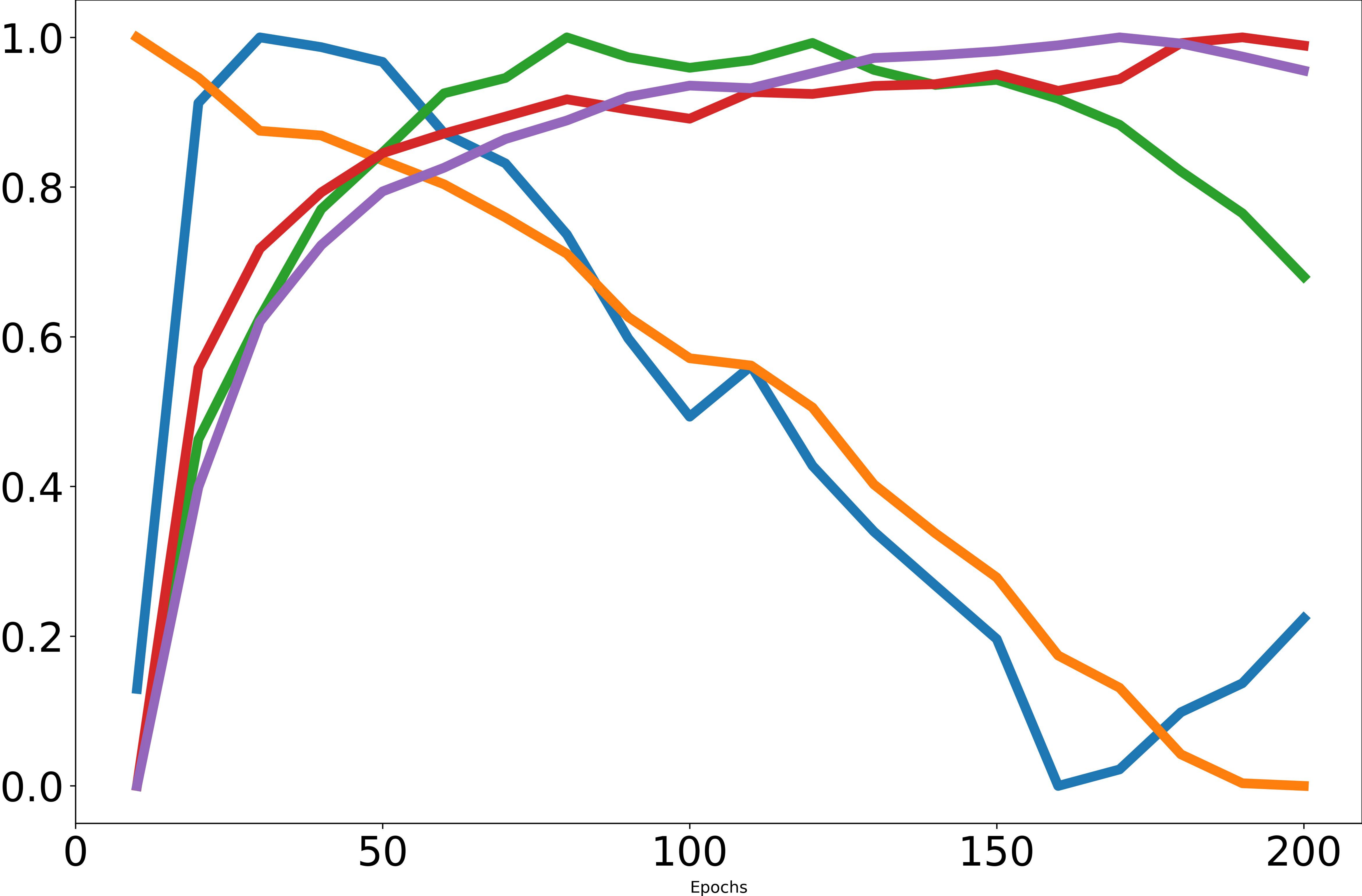} 
        \caption{\densecl}
    \end{subfigure}
    \hfill
    \begin{subfigure}{0.24\textwidth}
        \centering
        \includegraphics[width=\linewidth]{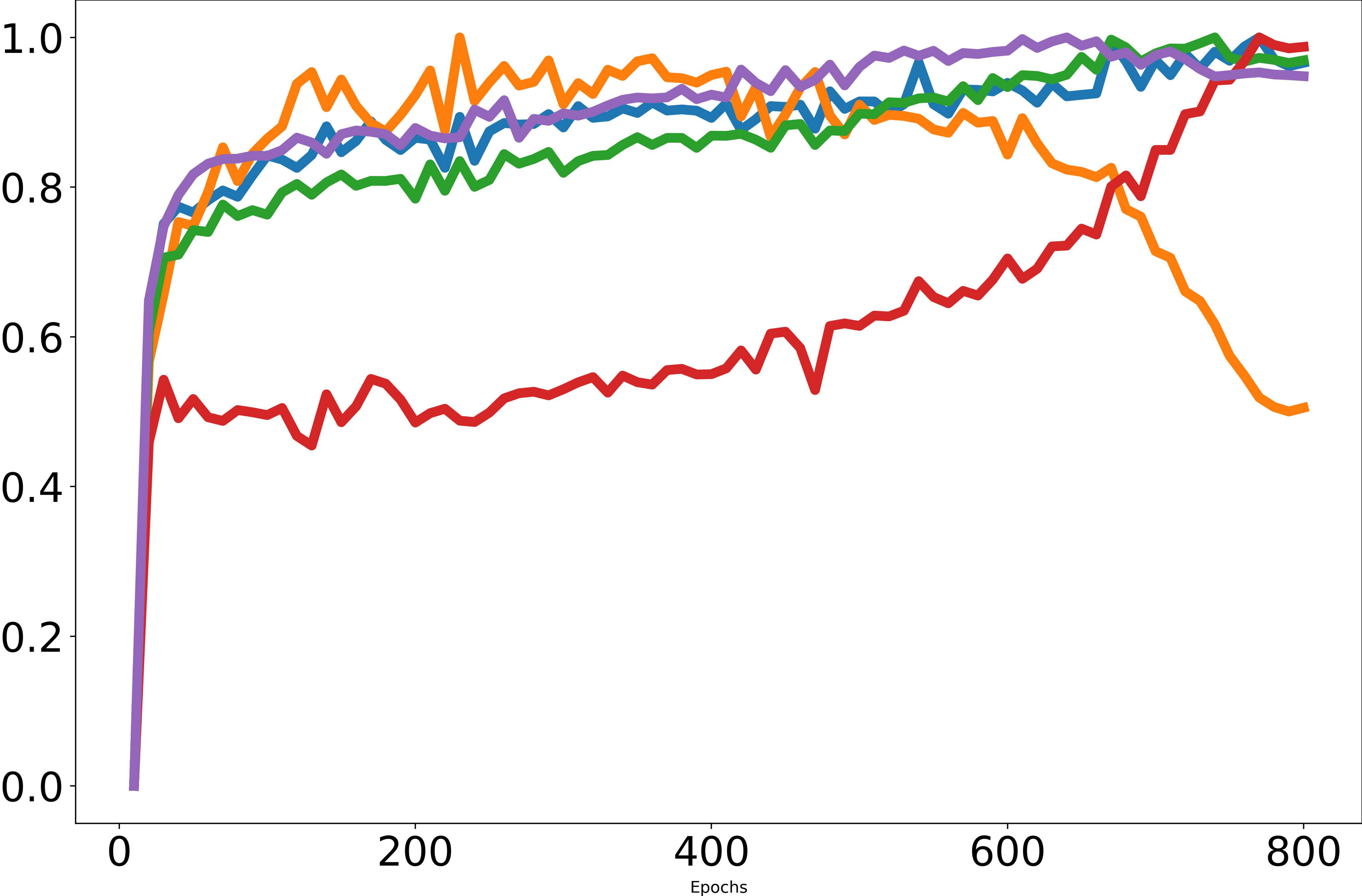}
        \caption{\mec}
    \end{subfigure}
    \hfill
    \begin{subfigure}{0.24\textwidth}
        \centering
        \includegraphics[width=\linewidth]{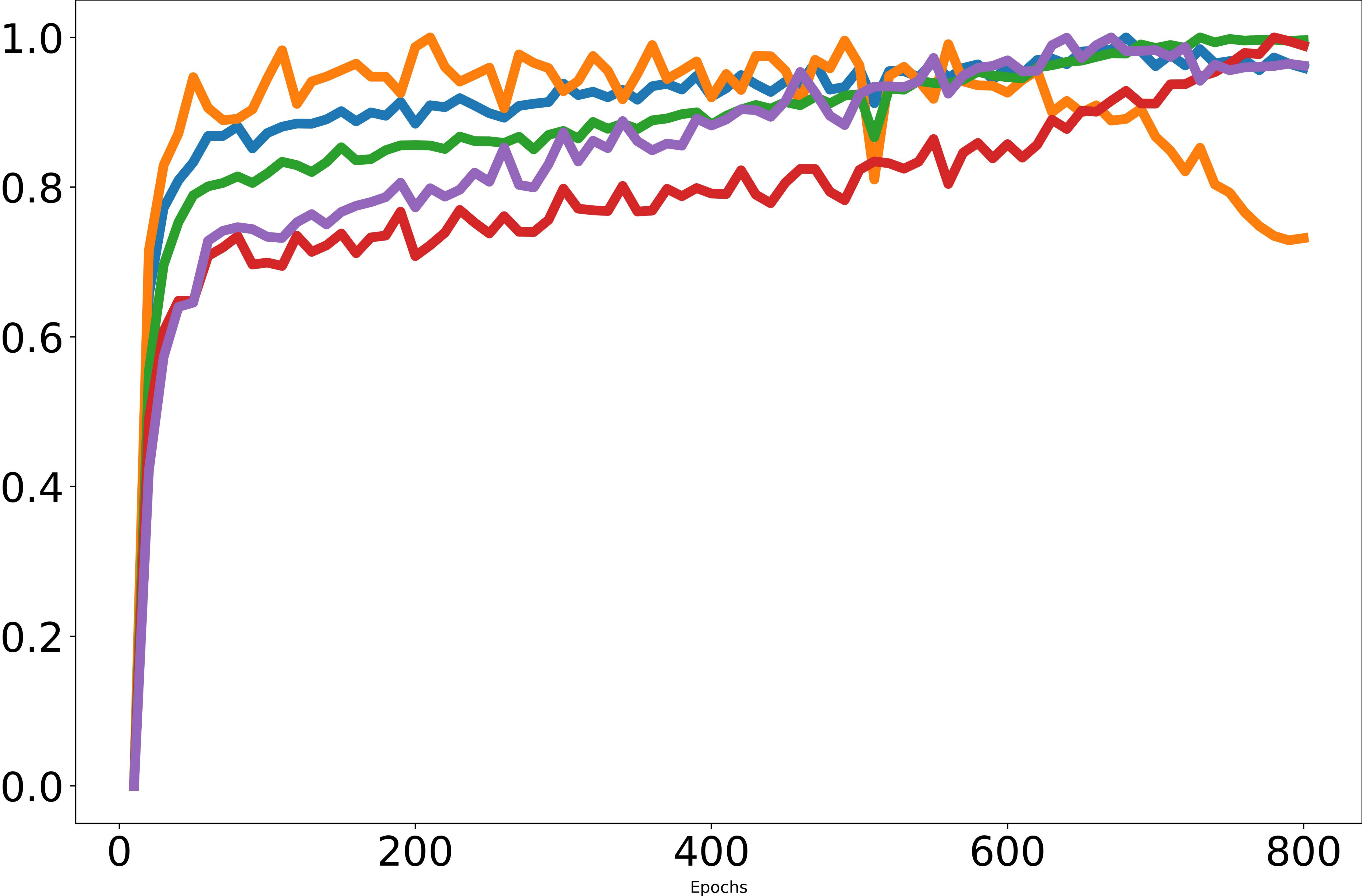}
        \caption{\simsiam}
    \end{subfigure}
    \vspace{0.15cm} 
    \begin{subfigure}{0.24\textwidth}
        \centering
        \includegraphics[width=\linewidth]{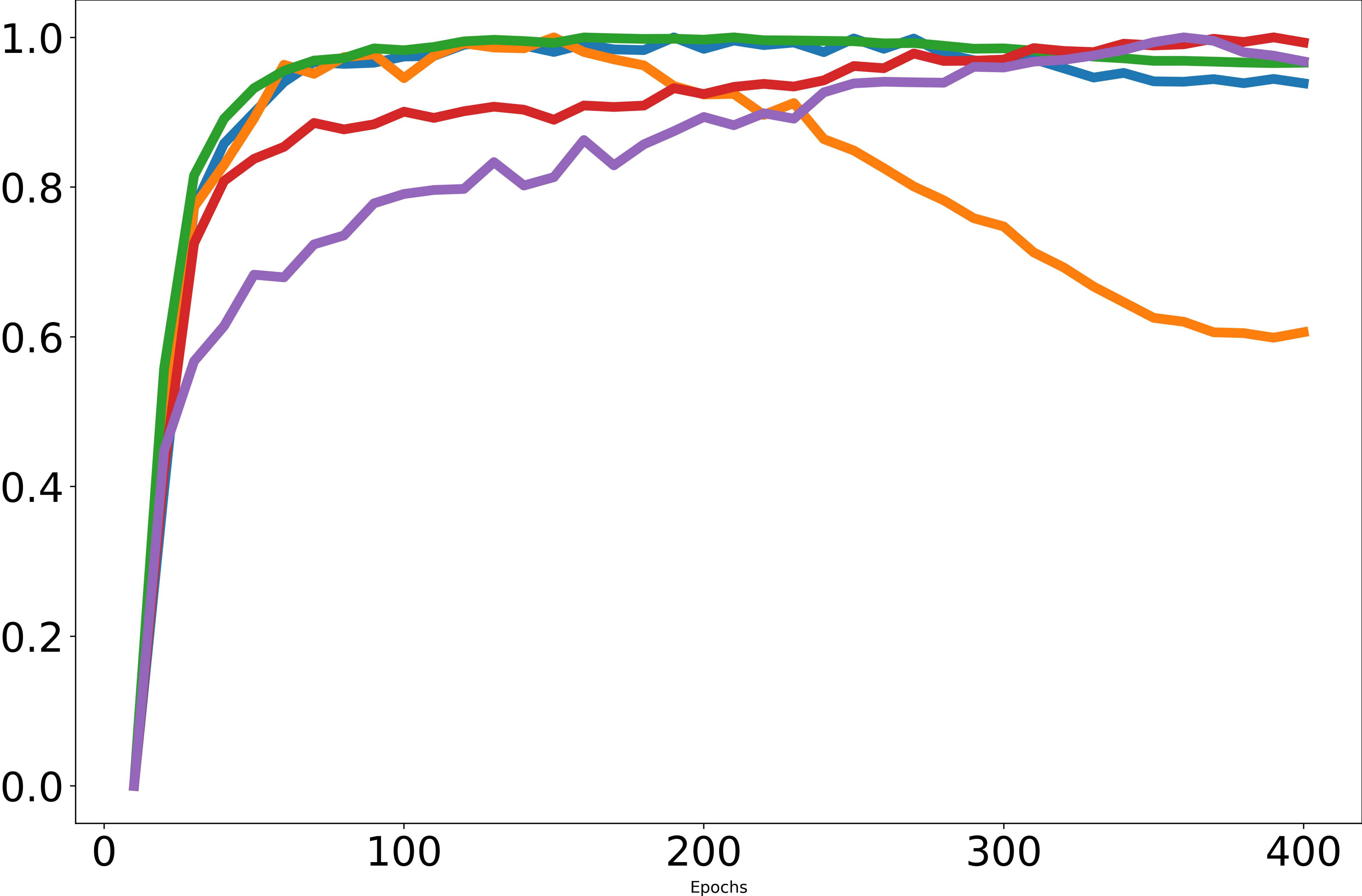} 
        \caption{\swav}
    \end{subfigure}
    \hfill
    \begin{subfigure}{0.24\textwidth}
        \centering
        \includegraphics[width=\linewidth]{images/metric/metric_full_voc/dino-ori-800.pdf}
        \caption{\dino}
    \end{subfigure}
    \hfill
    \begin{subfigure}{0.24\textwidth}
        \centering
        \includegraphics[width=\linewidth]{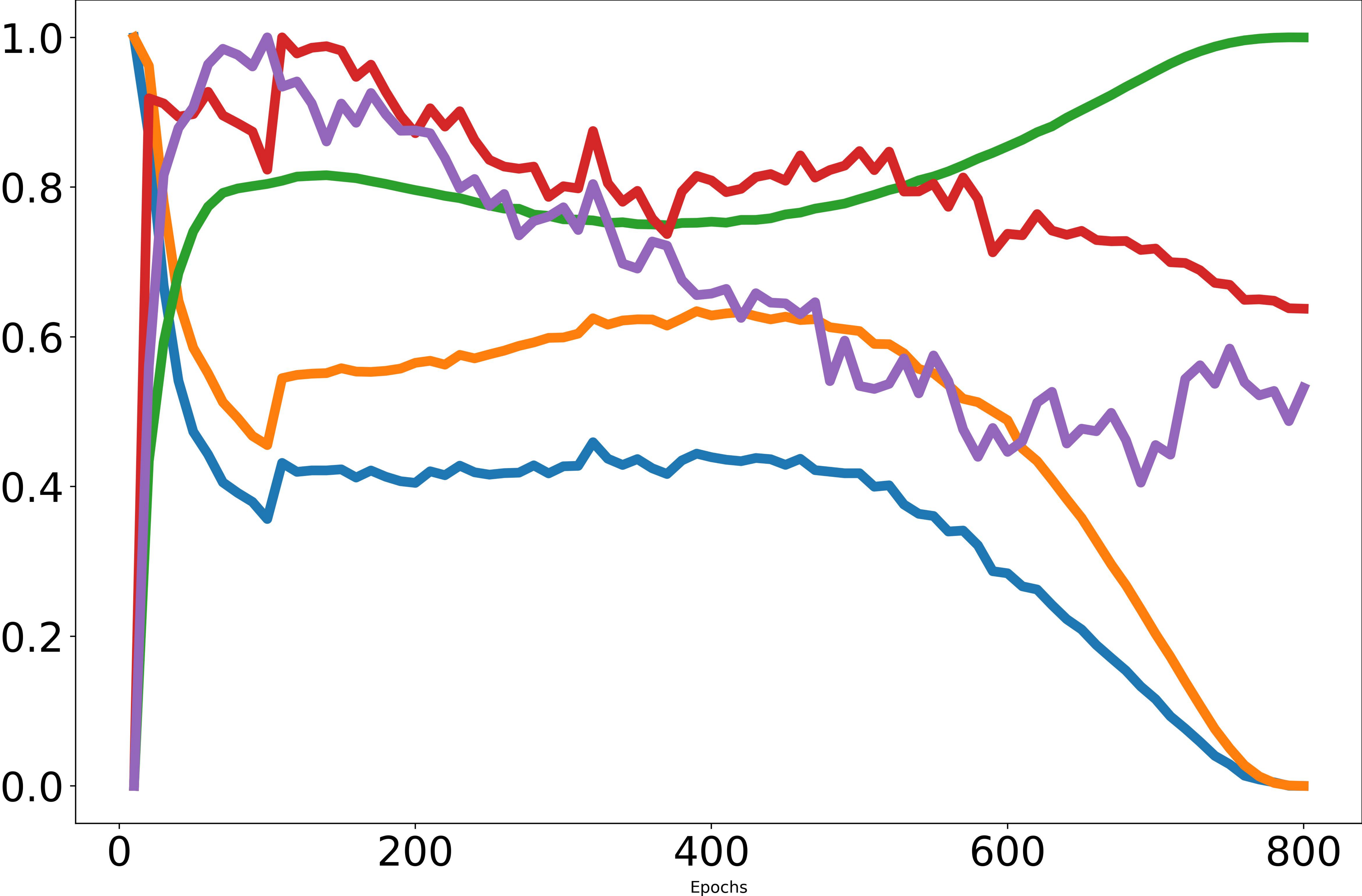}
        \caption{\esvit}
    \end{subfigure}
    \hfill
    \begin{subfigure}{0.24\textwidth}
        \centering
        \includegraphics[width=\linewidth]{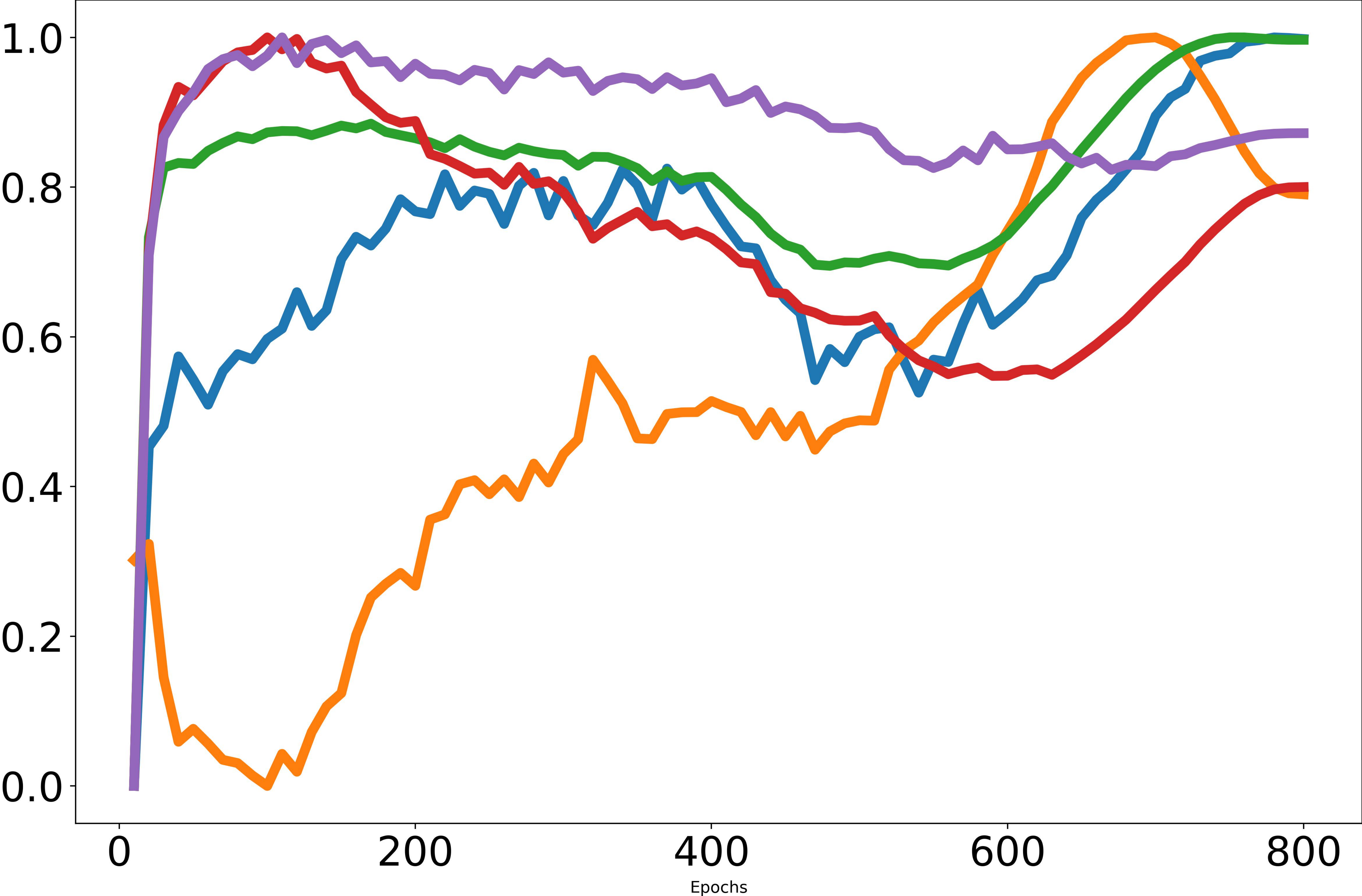}
        \caption{\ibot}
    \end{subfigure}
    \vspace{0.15cm} 
    \hfill 
    \begin{subfigure}{0.24\textwidth}
        \centering
        \includegraphics[width=\linewidth]{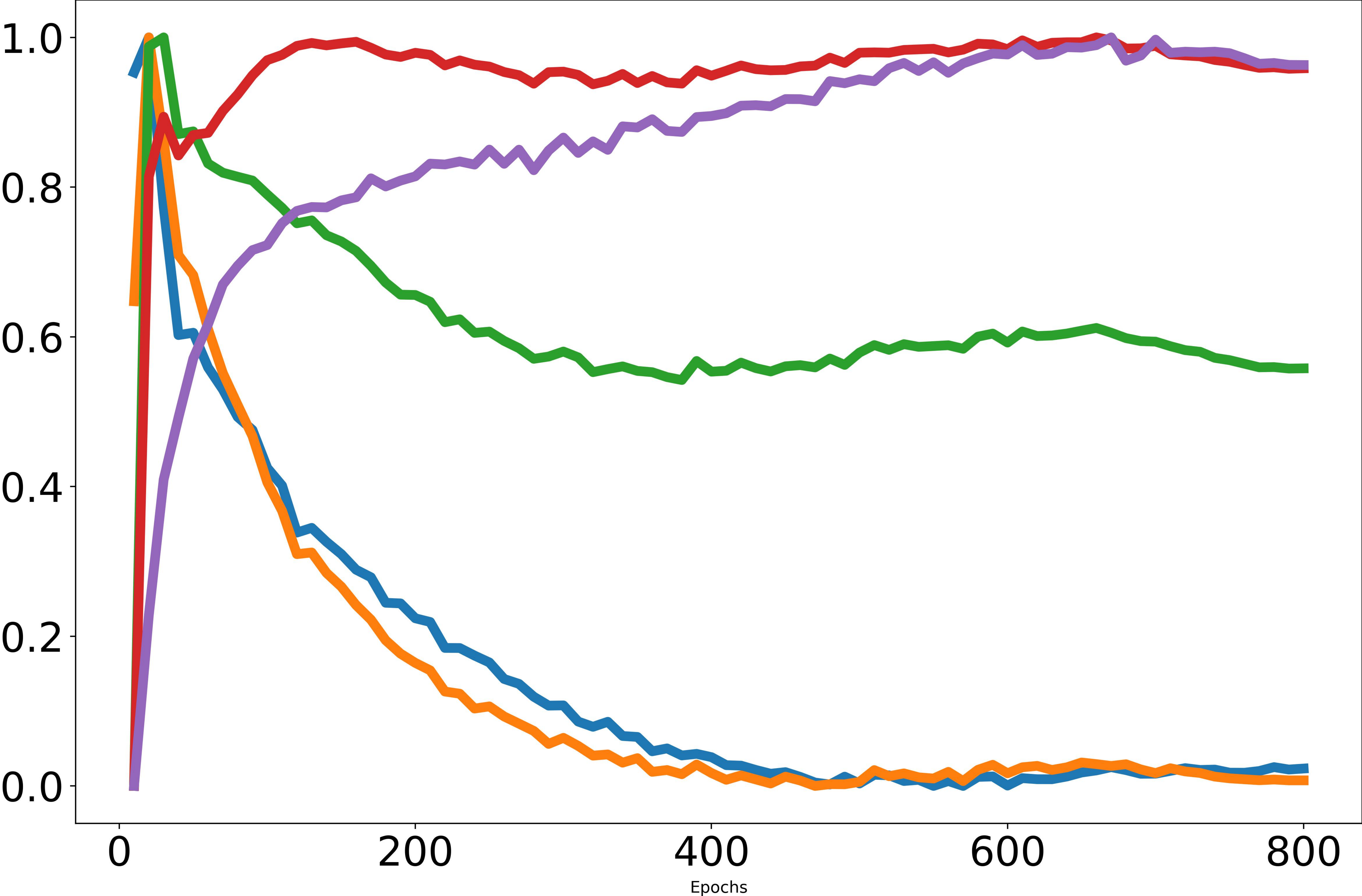}
        \caption{\mae}
    \end{subfigure}
    \hspace{0.0\textwidth} 
    \begin{subfigure}{0.24\textwidth}
        \centering
        \includegraphics[width=\linewidth]{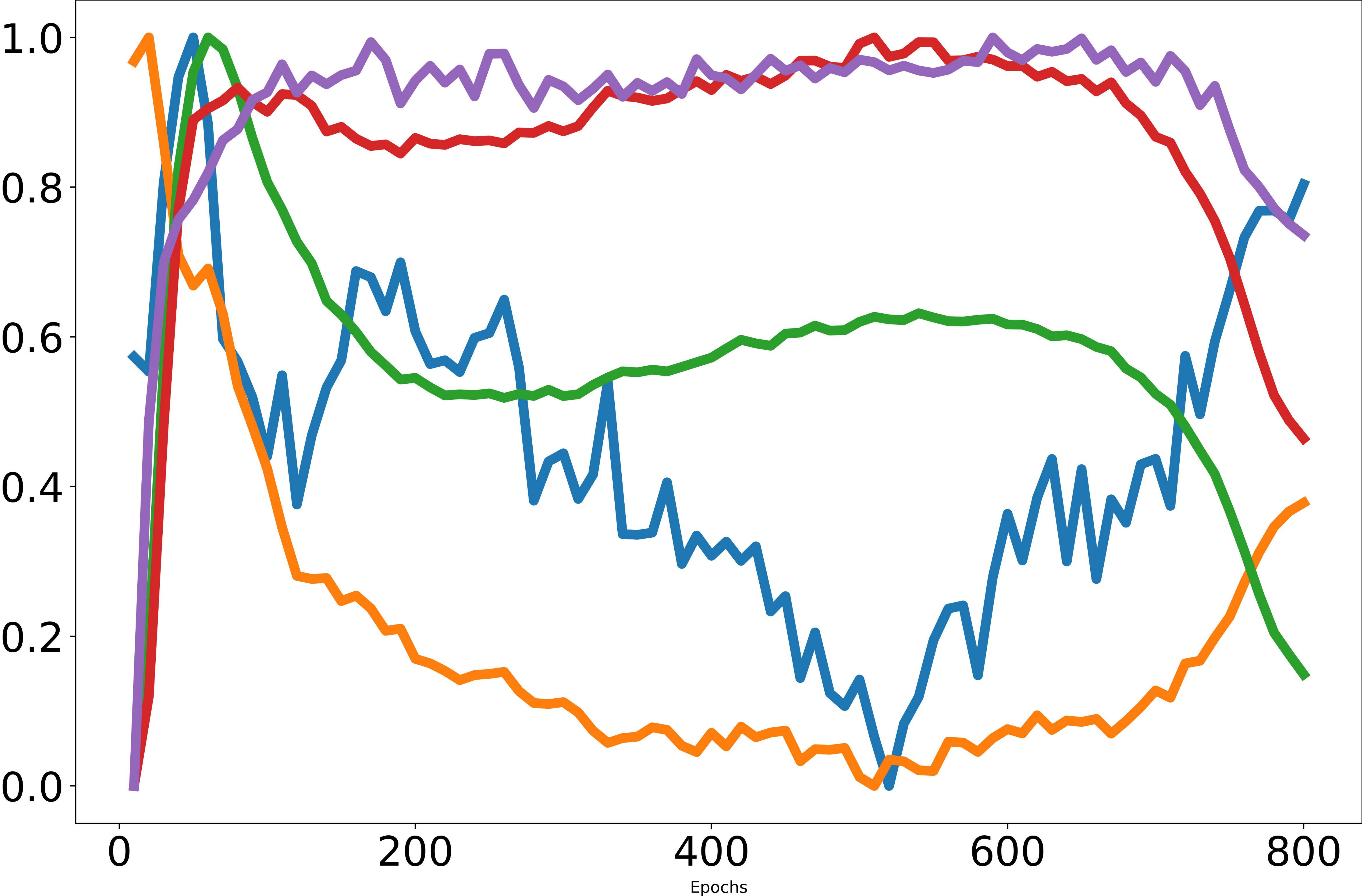}
        \caption{\ijepa}
    \end{subfigure}
    \hfill 
    \vspace{-2pt}
    \caption{Visualization of different components of DSE metric, which reveals the reason behind SDD phenomenon.}
    \label{fig:understand}
\end{figure}

\subsection{Extending the DSE Metric toward Image-level Performance Estimation}
\label{app:dse_image_level}
Technically, our analysis should hold for both dense and image-level settings, as image-level learning can be viewed as a special case where an image is only split into one patch. To investigate how DSE performs at the image level, we conducted the following experiments. DSE originally accepts input tensors shaped as (Num\_images, Num\_patch\_tokens, Dimensionality). For image-level tasks, we first extract the class token (or averaged dense tokens), obtaining a tensor of shape (Num\_images, Dimensionality). Then, we resize this tensor to (Num\_images / 200, 200, Dimensionality) and calculate the DSE metric. For evaluation, we computed Kendall's $\tau$ coefficient between the adapted DSE and ImageNet $k$-NN performance, comparing it with the state-of-the-art unsupervised transferability estimation method RankMe. The results are shown in the following table:
\begin{table*}[h]
    \centering
    \caption{Kendall's $\tau$ coefficient between metrics and ImageNet $k$-NN performance.} 
    \label{tab:dse_image_level} 
    \resizebox{0.9\linewidth}{!}{
        \begin{tabular}{lcccccc}
            \toprule
            Method & DINO & MEC & EsViT & iBOT & I-JEPA & Average \\
            \midrule
            RankMe & 0.57 & 0.81 & 0.71 & \textbf{0.93} & \textbf{0.90} & {0.79} \\
            DSE(Ours) & \textbf{0.87} & \textbf{0.90} & \textbf{0.92} & {0.86} & {0.73} & \textbf{0.86} \\
            \bottomrule
            \end{tabular}
  }
  \end{table*}

\clearpage
\subsection{Visualizations}
\begin{figure*}[!ht]
    \centering
    
    \begin{subfigure}{0.48\textwidth}
        \centering
        \includegraphics[width=\linewidth]{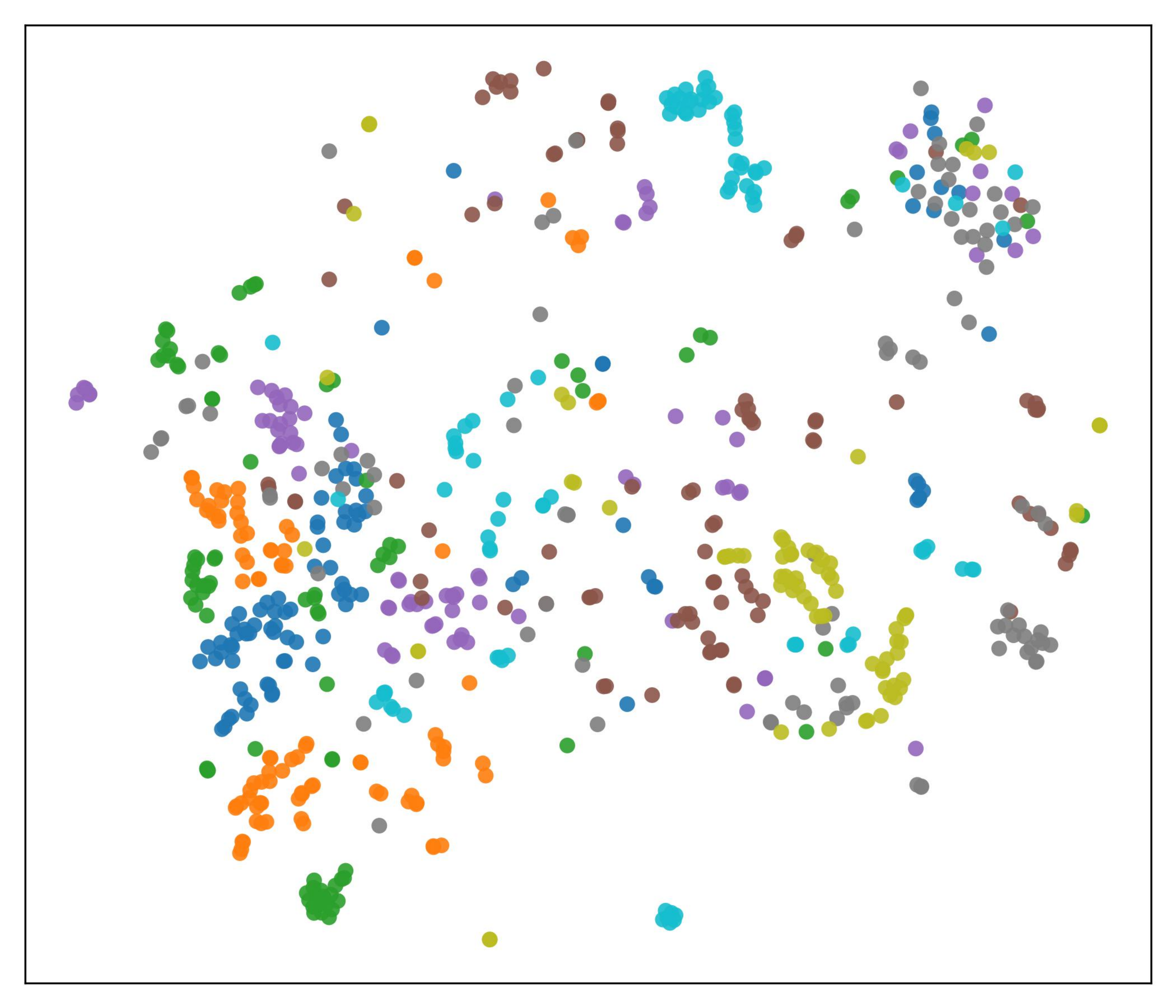}
        \caption{\moco ~ \textbf{last} model}
    \end{subfigure}
    \hfill
    \begin{subfigure}{0.48\textwidth}
        \centering
        \includegraphics[width=\linewidth]{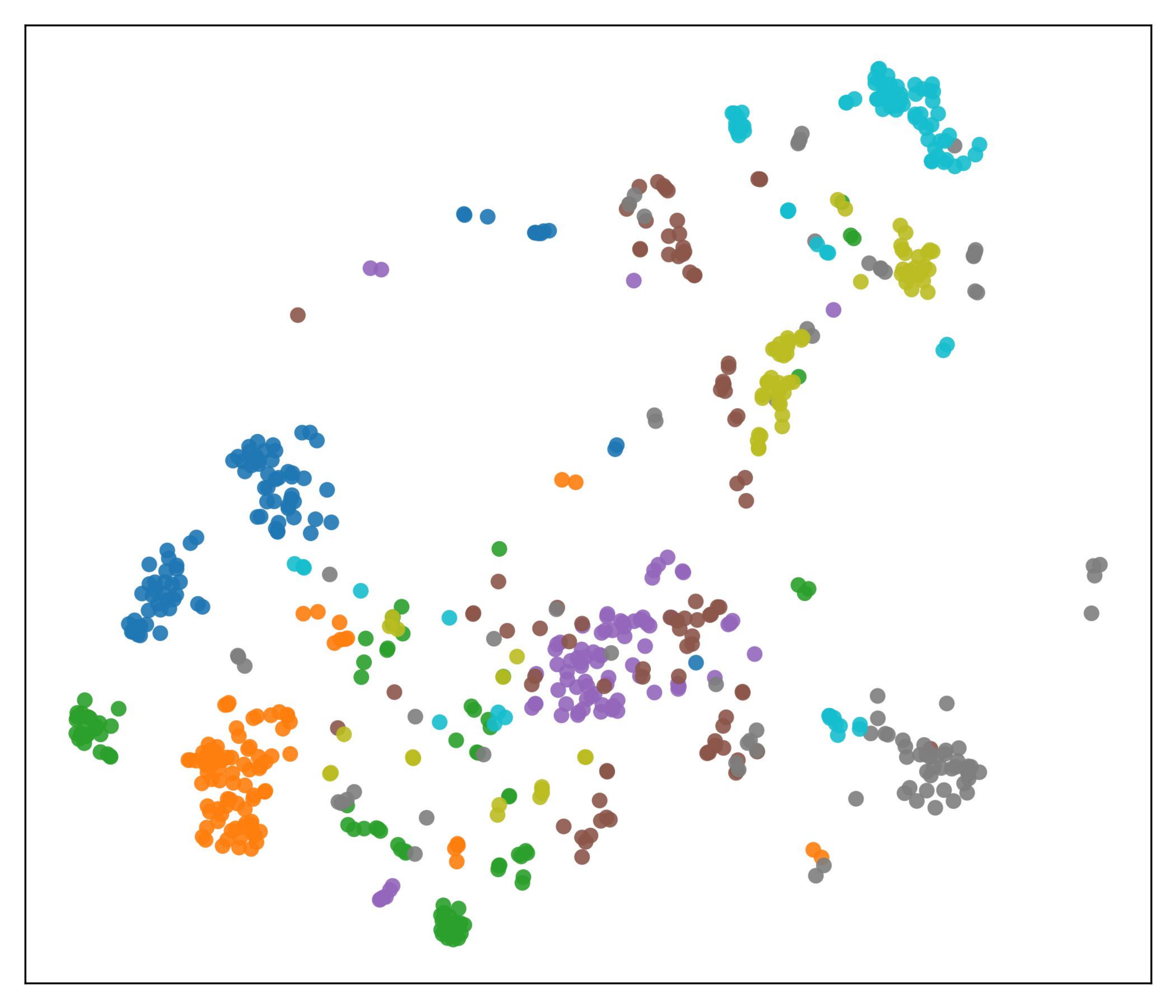}
        \caption{\moco ~ \textbf{selected} model}
    \end{subfigure}
    \\
    \begin{subfigure}{0.48\textwidth}
        \centering
        \includegraphics[width=\linewidth]{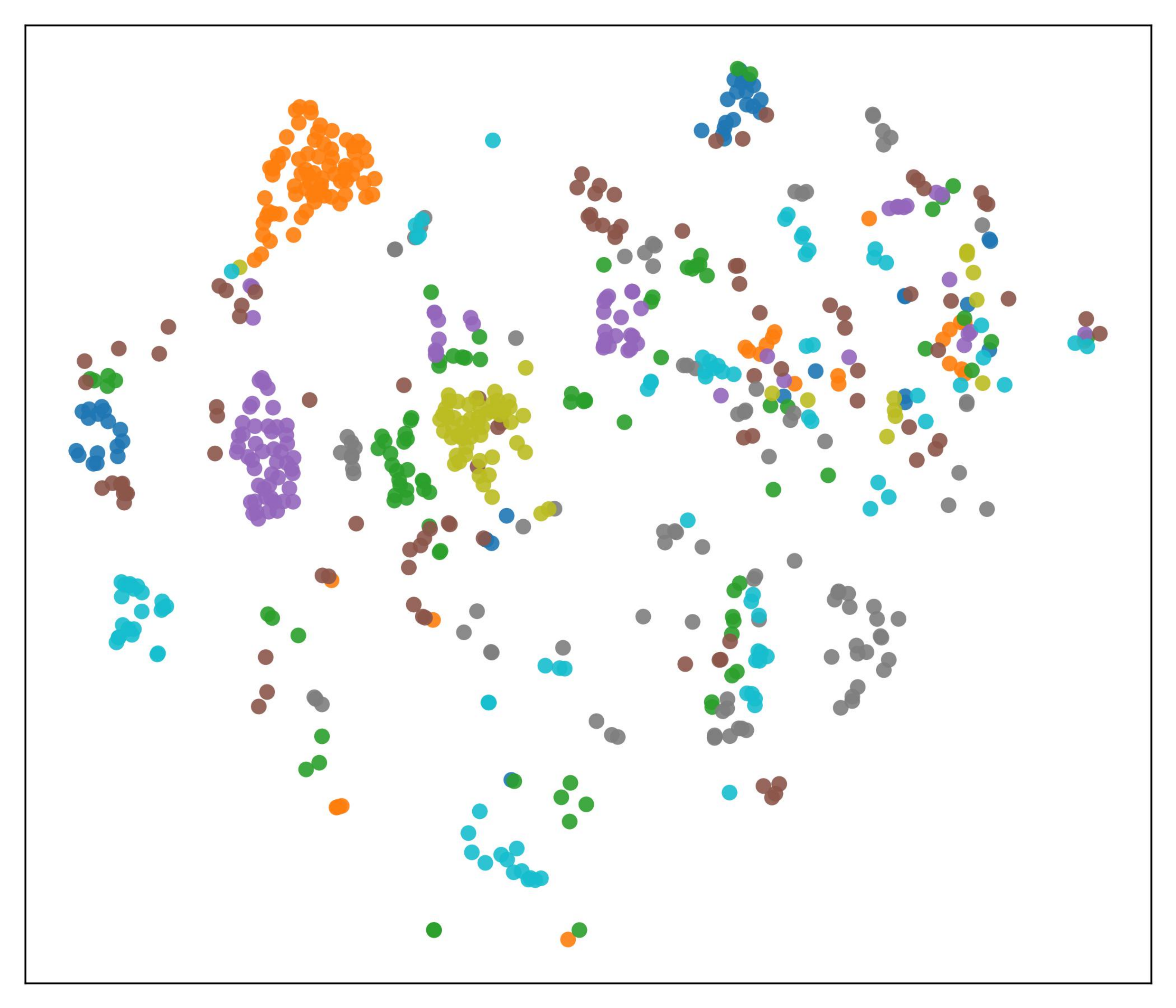}
        \caption{\ijepa ~ \textbf{last} model}
    \end{subfigure}
    \hfill
    \begin{subfigure}{0.48\textwidth}
        \centering
        \includegraphics[width=\linewidth]{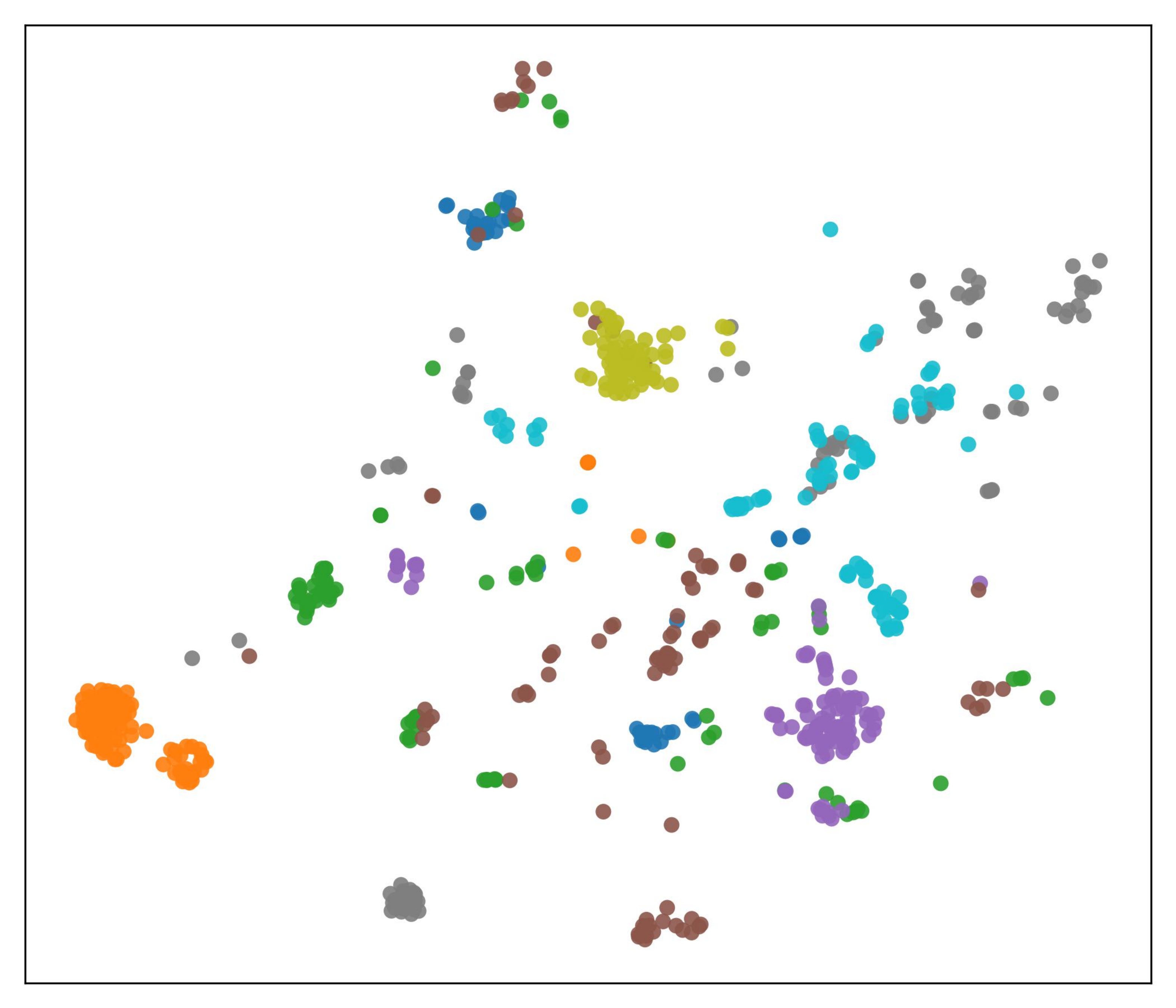}
        \caption{\ijepa ~ \textbf{selected} model}
    \end{subfigure}
   
    \caption{The selected model achieves better intra-class alignment and intra-class separatebility compared with the last model.}
    \label{fig:tsne}
\end{figure*}
\begin{figure*}[!ht]
    \centering
    
    \begin{subfigure}{0.48\textwidth}
        \centering
        \includegraphics[width=\linewidth]{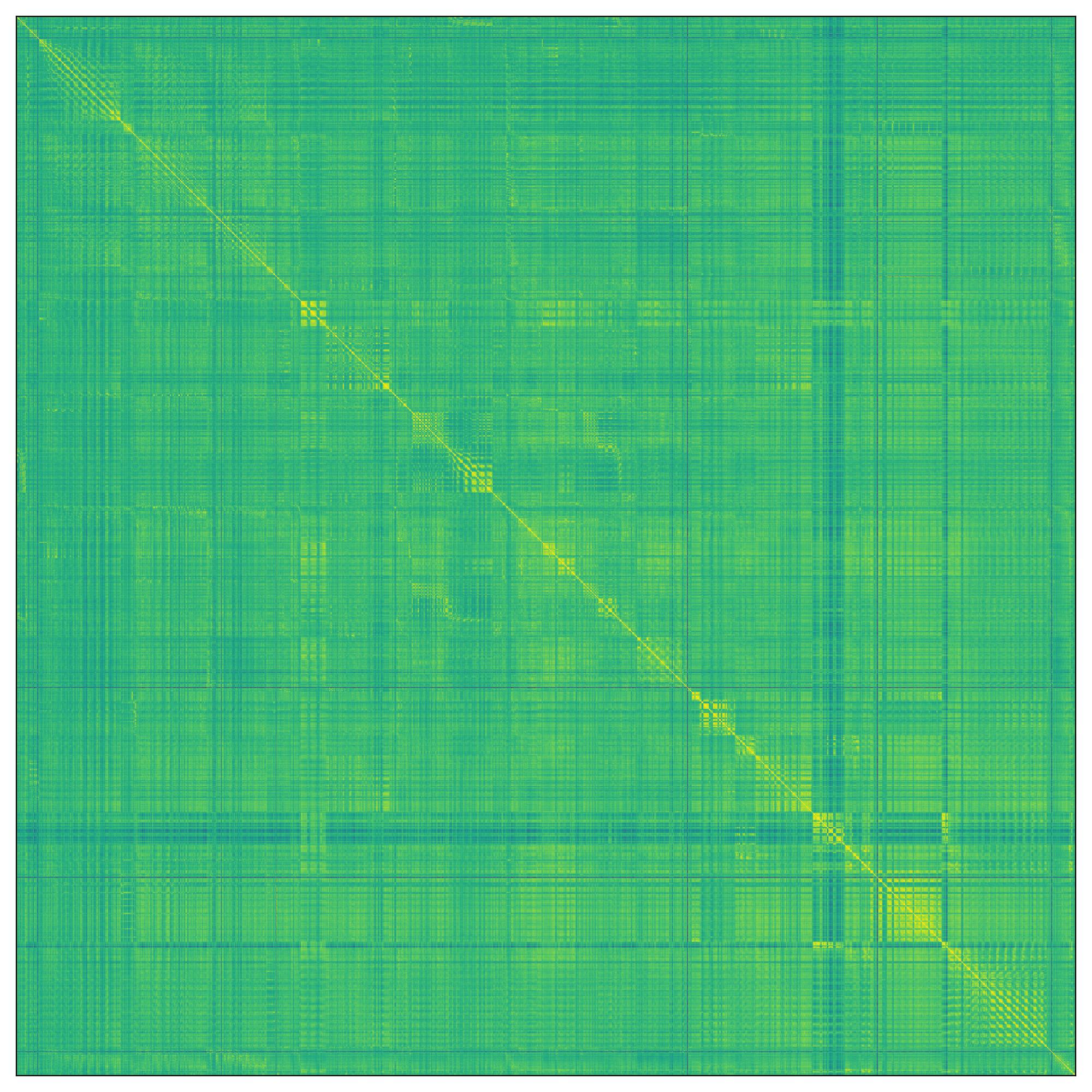}
        \caption{\moco ~ \textbf{last} model}
    \end{subfigure}
    \hfill
    \begin{subfigure}{0.48\textwidth}
        \centering
        \includegraphics[width=\linewidth]{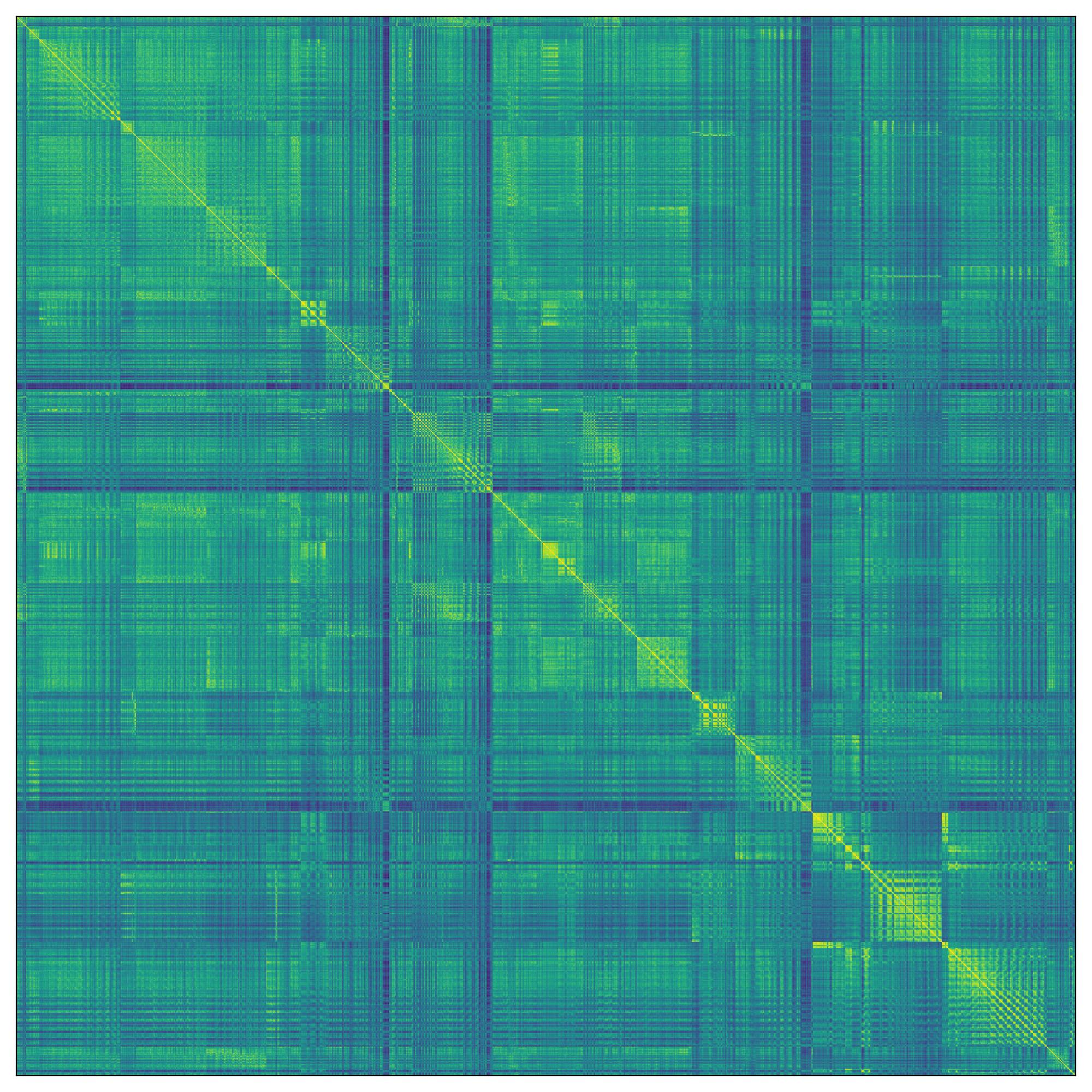}
        \caption{\moco ~ \textbf{selected} model}
    \end{subfigure}
    \\
    \begin{subfigure}{0.48\textwidth}
        \centering
        \includegraphics[width=\linewidth]{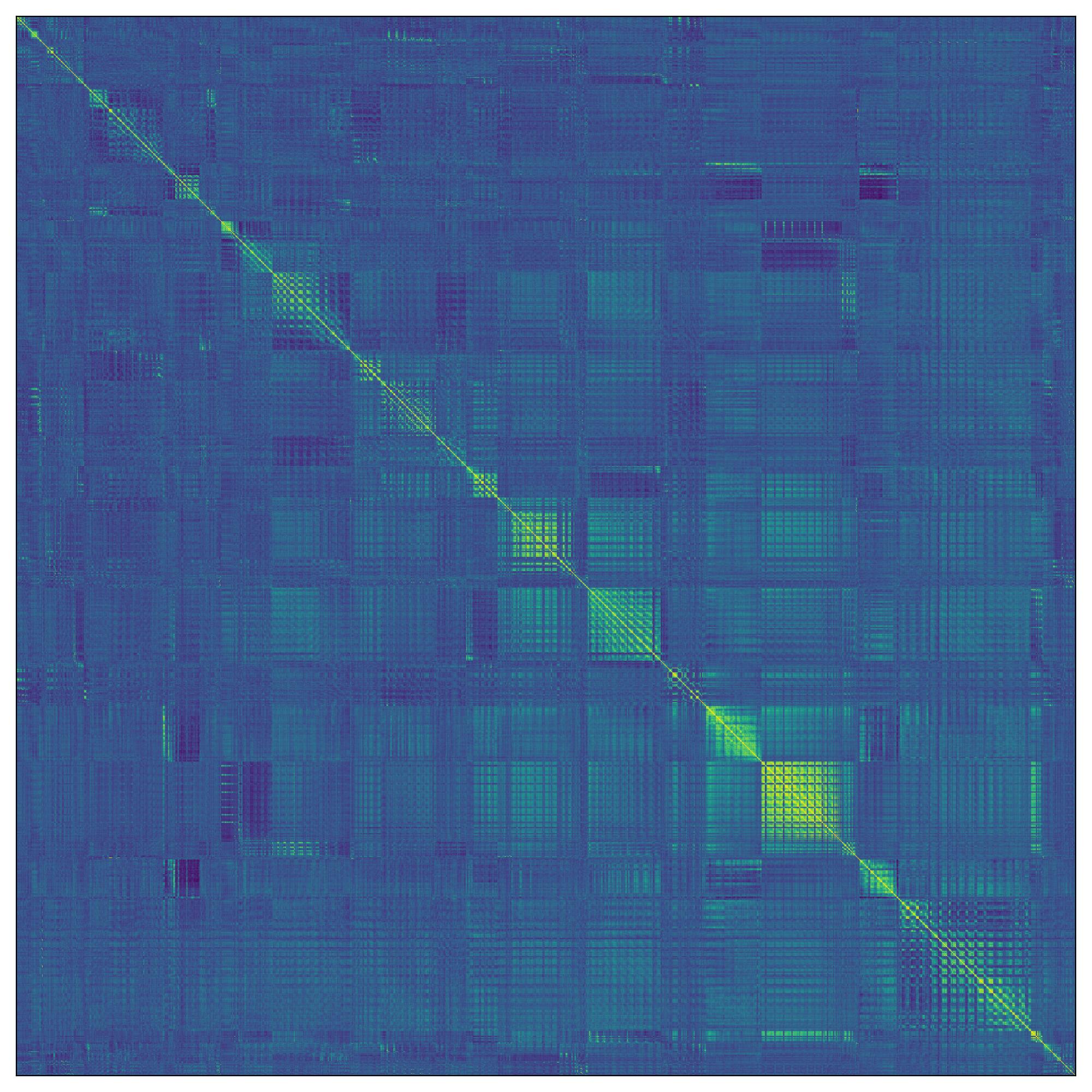}
        \caption{\dino ~ \textbf{last} model}
    \end{subfigure}
    \hfill
    \begin{subfigure}{0.48\textwidth}
        \centering
        \includegraphics[width=\linewidth]{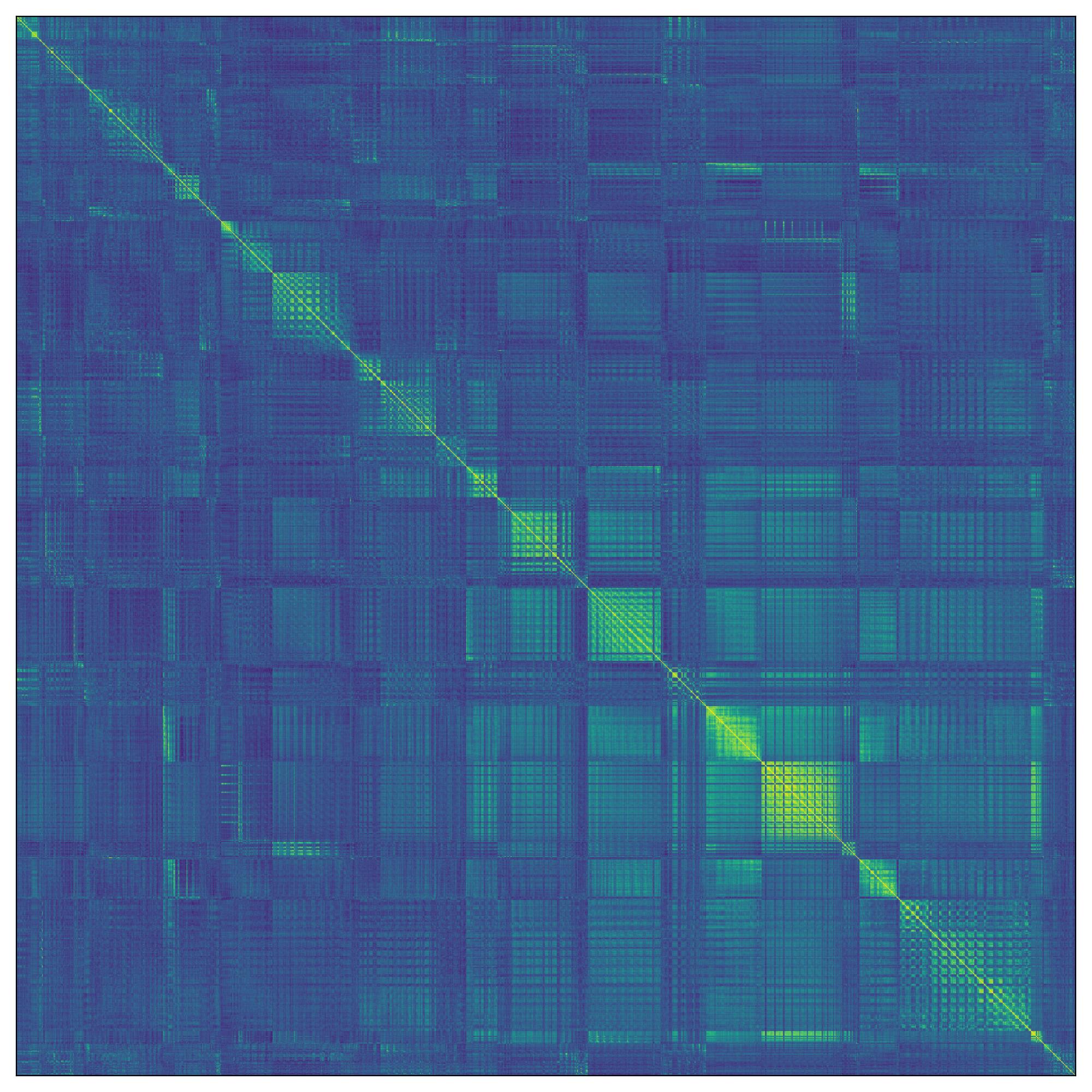}
        \caption{\dino ~ \textbf{selected} model}
    \end{subfigure}
   
    \caption{The selected model achieves better intra-class alignment and intra-class separatebility compared with the last model.}
    \label{fig:correlation}
\end{figure*}
Furthermore, we visualize the t-SNE plot and representation similarity matrix of the last and the selected model. The inputs are sorted by class labels thus an ideal similarity matrix should be block diagonal. As spotted in Fig. \ref{fig:tsne} and Fig. \ref{fig:correlation}, the selected models have better intra-class alignment and larger inter-class distance, which means that it would be easier for the downstream classifier to separate the representations. The results well support our theoretical and empirical conclusions.

\clearpage
\subsection{Ablation Study of DSE Regularization}
\label{app:regularization}
To evaluate the effectiveness of different components of DSE, we conduct an additional ablation study in Tab. \ref{tab:ablation_regularization}. We draw three main conclusions from these results:
\begin{itemize}
    \item Different components of DSE separately contribute to improving class separability and effective dimensionality. Consequently, this improves downstream performance, which aligns with our theoretical analysis of the DSE metric.
    \item The performance improvement mainly comes from addressing model-specific degradation. For example, DINO's performance improves primarily due to enhanced class separability, whereas resolving dimensional collapse notably boosts dense-task performance in MoCo v3. These observations align with our model-specific analysis presented in Sec. 5.2.
    \item When both factors are used together, class separability often dominates optimization. Thus, introducing a hyperparameter to balance these factors may further enhance performance.
\end{itemize}
\begin{table}[h]
    \centering
    \caption{Ablation studies of different components of DSE regularization. We report $M_{inter} - M_{intra}$, $M_{dim}$, and mIoU on VOC dataset.}
    \label{tab:ablation_regularization}
    \resizebox{\linewidth}{!}{
        \begin{tabular}{l|ccc}
    \toprule
            Method & $M_{inter} - M_{intra}$ & $M_{dim}$ & VOC mIoU \\
            \midrule
            \dino & -1.221 & 0.863 & 56.6 \\
            \dino + $M_{dim}$ & -1.210 (+0.011) & 0.884 (+0.021) & 56.9 (+0.3) \\
            \dino + $M_{cls}$ & -1.103 (+0.118) & 0.851 (-0.012) & 57.4 (+0.8) \\
            \dino + $M_{dim}$ + $M_{cls}$ (DSE) & -1.115 (+0.106) & 0.865 (+0.002) & \textbf{57.8 (+1.2)} \\
            \midrule
            \moco & -0.958 & 0.744 & 49.1 \\
            \moco + $M_{dim}$ & -1.021 (-0.063) & 0.892 (+0.148) & 52.3 (+3.2) \\
            \moco + $M_{cls}$ & \multicolumn{3}{c}{collapsed} \\
            \moco + $M_{dim}$ + $M_{cls}$ (DSE) & -0.867 (+0.091) & 0.752 (+0.008) & \textbf{52.1 (+3.0)} \\
            \bottomrule
        \end{tabular}
    }
\end{table}

\clearpage
\section{Additional Discussions}
\label{app:discussions}
\textbf{Limitations.} This paper's theoretical analysis mainly focuses on the linear probing setting and does not account for potential distributional shifts during transfer learning. When the backbone is fine-tuned for more iterations on data that differs significantly from the pre-training distribution, the estimated performance may become inaccurate.

\textbf{Future Works}
Several prior studies on supervised transferability estimation have addressed related issues \cite{huang2022frustratingly, jiang2019fantastic, you2021logme, dwivedi2019representation, deshpande2021linearized, bolya2021scalable, agostinelli2022transferability, shao2022not, li2023exploring}. Extending the proposed DSE metric to such scenarios would be an interesting direction for future research.

It would also be valuable to investigate model-specific causes of dense degradation. For example, understanding why dimensional collapse occurs in MoCo v3 or what leads to separability degradation in DINO could provide deeper insights.

\textbf{Extensions.}
Although the proposed DSE metric is primarily designed to address the SDD phenomenon, and we introduce two methods to reduce its negative impact, its applications are not limited to this context. For example, image-level tasks can be viewed as a special case of dense tasks, where the number of patches is one. In such cases, our theoretical analysis and the DSE metric can be applied directly.

\clearpage

\end{document}